%
%
%
%


\documentclass[
12pt, 
oneside, 
layout=a5paper,paperheight=216mm,paperwidth=154mm,layoutvoffset=3mm,layouthoffset=3mm,
headinclude,footinclude, 
] {book}%

\usepackage{tocloft,lipsum,pgffor, sectsty}

\usepackage{CJKutf8}

\usepackage{eurosym}
\usepackage{booktabs} 
\usepackage{csvsimple,longtable}
\setcounter{secnumdepth}{5}

%
%
%
%


\usepackage[
beramono=false, 
eulermath=false,
eulerchapternumbers=false,
palatino=false,
pdfspacing, 
dottedtoc 
]{classicthesis} 

\usepackage[T1]{fontenc} 

\usepackage[utf8]{inputenc} 

\usepackage{graphicx} 

\usepackage{enumitem} 

\usepackage{lipsum} 

\usepackage{amsmath,amssymb,amsthm} 

\usepackage{varioref} 
\usepackage{color}
\usepackage{listings}
\usepackage{setspace}
\usepackage{lscape}
\usepackage[authoryear]{natbib}



\theoremstyle{definition} 
\newtheorem{definition}{Definition}

\theoremstyle{plain} 
\newtheorem{theorem}{Theorem}

\theoremstyle{remark} 


\hypersetup{
colorlinks=true, breaklinks=true, 
bookmarksnumbered,
urlcolor=webbrown, linkcolor=RoyalBlue, citecolor=webgreen, 
pdftitle={}, 
pdfauthor={\textcopyright}, 
pdfsubject={}, 
pdfkeywords={}, 
pdfcreator={pdfLaTeX}, 
plainpages=false,
pdfproducer={LaTeX with hyperref and ClassicThesis} 
}

\definecolor{Code}{rgb}{0,0,0}
\definecolor{Decorators}{rgb}{0.5,0.5,0.5}
\definecolor{Numbers}{rgb}{0.5,0,0}
\definecolor{MatchingBrackets}{rgb}{0.25,0.5,0.5}
\definecolor{Keywords}{rgb}{0,0,1}
\definecolor{self}{rgb}{0,0,0}
\definecolor{Strings}{rgb}{0,0.63,0}
\definecolor{Comments}{rgb}{0,0.63,1}
\definecolor{Backquotes}{rgb}{0,0,0}
\definecolor{Classname}{rgb}{0,0,0}
\definecolor{FunctionName}{rgb}{0,0,0}
\definecolor{Operators}{rgb}{0,0,0}
\definecolor{Background}{rgb}{0.98,0.98,0.98}

\lstdefinestyle{mystyle}{
numbers=left,
numberstyle=\footnotesize,
numbersep=1em,
xleftmargin=1em,
framextopmargin=10em,
framexbottommargin=2em,
showspaces=false,
showtabs=false,
showstringspaces=false,
frame=l,
tabsize=4,
basicstyle=\ttfamily\small\setstretch{1},
backgroundcolor=\color{Background},
language=Python,
commentstyle=\color{Comments}\slshape,
stringstyle=\color{Strings},
morecomment=[s][\color{Strings}]{"""}{"""},
morecomment=[s][\color{Strings}]{'''}{'''},
morekeywords={import,from,class,def,for,while,if,is,in,elif,else,not,and,or,print,break,continue,return,True,False,None,access,as,,del,except,exec,finally,global,import,lambda,pass,print,raise,try,assert},
keywordstyle={\color{Keywords}\bfseries},
morekeywords={[2]@invariant},
keywordstyle={[2]\color{Decorators}\slshape},
emph={self},
emphstyle={\color{self}\slshape},
captionpos=b, 
}
 
\lstset{style=mystyle}


\usepackage[section=section]{glossaries}
\makeglossaries


\newacronym{AI}{AI}{
    Artificial intelligence, the field that develops and studies methods that enables machines to learn and take actions in a way that imitates human intelligence
}

\newacronym{API}{API}{Application programming interface, a way for computer programs to communicate with each other through a specified interfance}

\newacronym{AUPR}{AUPR}{Area under the precision-recall curve, an evaluation metric that measures the trade-off between the precision and recall under varying decision thresholds for a binary classification problem}

\newacronym{AUROC}{AUROC}{Area under the receiver-operator characteristic, an evaluation metric that measures the trade-off between the true positive rate and false positive rate under varying decision thresholds for a binary classification problem}

\newacronym{Bert}{Bert}{Bidirectional encoder representations from transformers by \citet{devlin2019bert}, a transformer-based language trained to predict a masked-out token given some context, with predicting the order of two subsequent sentences as an auxiliary task}

\newacronym{BLEU}{BLEU}{Bilingual evaluation understudy, an evaluation metric initially proposed by \citet{papineni2002bleu} to evaluate machine translation methods based on the $n$-gram overlap between the translation and a reference}

\newacronym{COMET}{COMET}{Crosslingual optimized metric for evaluation of translation, a metric proposed by \citet{rei2020comet} that tries to predict the quality of a machine-generated translation using a neural model}

\newacronym{CP}{CP}{
    Conformal prediction, a technique orginally developed by \citet{vovk2005algorithmic} that creates prediction sets (classification) or intervals (regression) that include the correct prediction with a pre-defined probability, given that a test point is from the same distribution as the calibration data used to construct the prediction sets / intervals
}

\newacronym{DL}{DL}{
    Deep learning, the field concerned with the study of artificial neural network of increased depth.
    Can be considered a subfield of machine learning
}

\newacronym{CoT}{CoT}{
    Chain-of-through prompting, a prompting technique originally proposed by \citet{wei2022chain}, where an LLM is instructed to solve a task by performing step-by-step reasoning
}

\newacronym{APRICOT}{APRICOT}{
    Auxiliary prediction of confidence targets.
    Method to create calibrated confidence scores for black-box LLMs through an external secondary model, discussed in Chapter 6
}

\newacronym{DDU}{DDU}{Deep deterministic uncertainty transformer \citealp{mukhoti2021deterministic}, a type of transformer for which a Gaussian discriminant analysis model is fit on its latent features in order to quantify uncertainty}

\newacronym{FAISS}{FAISS}{ (Meta's) Fundamental AI similarity search, a software library proposed by \citet{johnson2019billion} to quickly find the nearest neighbors for a vector in a datastore}

\newacronym{HDBSCAN}{HDBSCAN}{
    Hierarchical density-based spatial clustering of applications with noise \citep{campello2013density}, an improvement on the earlier DBSCAN clustering algorithm \citep{ester1996density}.
    The algorithm is unsupervised, i.e.\@ does not require a specification of the number of clusters a priori, and works by merging points into cluster by distance in a bottom-up fashion
}

\newacronym{LLM}{LLM}{
    Large language model or foundation model; typically a large neural model based on the transformer architecture \citep{vaswani2017attention}, that has been trained on huge swaths of data to model the statistical distribution of words
}

\newacronym{LM}{LM}{
    Language model or language modeling (i.e.\@ the process or task of modeling the statistical distribution of words underlying language).
    More general than LLMs, since language models can also be based on recurrent or $n$-gram models
}

\newacronym{$k$-NN}{$k$-NN}{$k$-nearest neighbors, the idea to use $k$ points most similar to a point of interest for purposes such as classification or clustering}

\newacronym{LSTM}{LSTM}{Long-short term memory network \citep{hochreiter1997long}, a type of recurrent neural architecture used for sequential data}

\newacronym{MAUVE}{MAUVE}{Neural metric to assess the quality of machine-generated, general text by \citet{pillutla-etal:mauve:neurips2021}}

\newacronym{ML}{ML}{
    Machine learning, the subfield of artificial intelligence concerned with the development and study of statistical algorithms that can learn from data and generalize to unseen data
}

\newacronym{MT}{MT}{
    Machine translation, the study or task of automatically translating text or speech with the help of computers
}

\newacronym{NLG}{NLG}{
    Natural language generation, a set of tasks involving the generation of language, including language modeling, machine translation, question-answering, and image captioning
}

\newacronym{NLP}{NLP}{
    Natural language processing, an interdisciplinary subfield of artificial intelligence and linguistics, primarily concerned with providing computers the ability to process data encoded in natural language 
}

\newacronym{OOD}{OOD}{
    Out-of-distribution or out-of-domain, used to refer to test inputs to a machine learning model that are different come from a different distribution than the training data the model was originally fit on
}

\newacronym{PAC}{PAC}{Probably approximately correct learning, a framework for the mathematical analysis of machine learning}

\newacronym{ReLU}{ReLU}{
    Rectifier linear unit, a non-linear activation function defined as $\phi(x) = \max(0, x)$, often used on the activations between neural network layers
}

\newacronym{RLHF}{RLHF}{Reinforcement learning from human feedback \citep{christiano2017deep, stiennon2020learning}, a technique to optimize neural models based on human preference data}

\newacronym{ROUGE}{ROUGE}{An evaluation metric initially proposed by \citep{lin2004rouge} to text summarization methods based on the $n$-gram overlap between the summarization and a reference}

\newacronym{SDE}{SDE}{Stochastic differential equation, a type of equation regarding the derivative of some function in which one or more of the terms is a stochastic process}

\newacronym{SNGP}{SNGP}{Spectrally-normalized Gaussian process transformer \citep{liu2022simple}, a type of transformer architecture whose weights are regularized through spectral normalization and which features a Gaussian process output layer.}

\newacronym{UQ}{UQ}{
    Uncertainty Quantification; methods to assess the reliability or trustworthiness of the predictions of (in this thesis) neural models
}



\glsaddallunused

\usepackage{pdfpages}

\usepackage{imakeidx}
\indexsetup{level=\chapter*}
\makeindex[columns=2, title=Index, intoc]
\index{Gaussian distribution|see{Normal distribution}}
\index{Significance testing|see{Hypothesis testing}}
\index{Bayes' theorem|see{Bayes' rule}}
\index{Quantile function|see{Cumulative distribution function, Inverse}}
\index{Uncertainty!Model|see{Uncertainty!Epistemic}}
\index{Uncertainty!Data|see{Uncertainty!Aleatoric}}
\index{Distributional drift|see{Shift!Distributional}}
\index{Shift!Dataset|see{Shift!Distributional}}
\index{In-context example|see{In-context learning}}
\index{Language generation|see{Natural language generation}}
\index{Bert!SNGP|see{Transformer!SNGP}}
\index{Bert!DDU|see{Transformer!DDU}}
\index{Bert!Variational|see{Transformer!Variational}}
\index{Ensemble|see{Ensembling}}

%
%
%
%
%


\PassOptionsToPackage{utf8}{inputenc} 
\usepackage{inputenc}


\usepackage{tcolorbox}

\providecommand{\mLyX}{L\kern-.1667em\lower.25em\hbox{Y}\kern-.125emX\@}


\usepackage{lipsum} 


\PassOptionsToPackage{american}{babel}  
\usepackage{babel}


\usepackage{csquotes}
 

 \usepackage{amsmath, amssymb}
 \numberwithin{equation}{chapter}

 \usepackage{savesym}
 \usepackage{bbding}
\savesymbol{Square}
\savesymbol{Cross}
\savesymbol{TriangleUp}
\savesymbol{TriangleDown}
\usepackage[geometry]{ifsym}
\usepackage{wasysym}
\usepackage{diagbox}


\PassOptionsToPackage{T1}{fontenc} 
\usepackage[T1]{fontenc}
\DeclareTextSymbolDefault{\ae}{T1}

\setcounter{secnumdepth}{2}


\usepackage{textcomp} 




\usepackage{xspace} 


\usepackage{mparhack} 


\usepackage{fixltx2e} 


\PassOptionsToPackage{smaller}{acronym} 
\usepackage{acronym} 




\PassOptionsToPackage{pdftex}{graphicx}
\usepackage{graphicx} 


\usepackage{tabularx} 
\setlength{\extrarowheight}{3pt} 
\usepackage{multirow}
\usepackage{arydshln}
\usepackage{float}
\usepackage{chngcntr} 
\counterwithout*{table}{chapter} 
\counterwithout*{footnote}{chapter}


\usepackage{subcaption}


\usepackage{listings} 
\lstset{language=[LaTeX]Tex,
morekeywords={PassOptionsToPackage,selectlanguage},
keywordstyle=\color{RoyalBlue}, 
commentstyle=\color{Green}\ttfamily, 
stringstyle=\rmfamily, 
numbers=left, 
numberstyle=\scriptsize, 
stepnumber=5, 
numbersep=8pt, 
showstringspaces=false, 
breaklines=true, 
belowcaptionskip=.75\baselineskip 
}


\usepackage[noabbrev,capitalize,nameinlink]{cleveref}
\newcounter{researchquestion}
\newenvironment{researchquestion}{\refstepcounter{researchquestion}\begin{center}\rule{\textwidth}{0.8pt}\\[0.2cm]}{\rule{\textwidth}{0.8pt}\end{center}}
\crefname{researchquestion}{\emojimangnifyingglass RQ\hspace{-0.12cm}}{\emojimangnifyingglass RQs\hspace{-0.12cm}}

\usepackage{wrapfig}

\interfootnotelinepenalty=10000

\usepackage{epigraph}
\epigraphnoindent
\setlength\epigraphwidth{.85\textwidth}
\setlength\epigraphrule{0pt}

\usepackage{titlesec}
\titlespacing*{\chapter}{0pt}{55pt}{15pt}

\usepackage{tikz}
\usetikzlibrary{shapes,positioning,trees}
\usepackage{tikzpagenodes}
\usepackage{forest}

\definecolor{bg}{rgb}{0.96,0.96,0.96}
\definecolor{gray75}{gray}{0.75}
\newcommand{\hsp}{\hspace{20pt}}
\titleformat{\chapter}[hang]{\Huge\bfseries\raggedright}{\thechapter\hsp\textcolor{gray75}{|}\hsp}{0pt}{\Huge\bfseries}
\titleformat{\section}[hang]{\large\bfseries\raggedright}{\thesection\hsp}{0pt}{\large\bfseries}
\titleformat{\subsection}[hang]{\bfseries\raggedright}{\thesubsection\hsp}{0pt}{\large\bfseries}
\titleformat{\paragraph}[runin]{\bfseries}{\theparagraph}{0pt}{\normalfont\bfseries}

\usepackage{xcolor,pifont}
\newcommand{\gcheck}{{\color{green!55!black}\ding{52}}}
\newcommand{\rcross}{{\color{red!70!black}\ding{55}}}
\definecolor{textbgcolor}{HTML}{F5F5F5}
\definecolor{promptbgcolor}{HTML}{ecf9ec}
\definecolor{slotcolor}{HTML}{b30000}

\usepackage{tcolorbox}
\tcbuselibrary{breakable,xparse,skins}

\renewcommand{\cftchapfont}{\bfseries}
\renewcommand{\cftsecfont}{\bfseries}
\renewcommand{\cftsubsecfont}{\bfseries}

\usepackage{phaistos}
\newcommand*\coin[1]{\tikz[baseline=(char.base)]{
            \node[shape=circle,draw,minimum size=8mm, fill={rgb:orange,1;yellow,4;pink,4}, draw={rgb:orange,1;yellow,2;pink,5}, inner sep=0.4mm, line width=1mm] (char) {#1};}}

\newcommand{\heads}{\coin{\raisebox{-.2\height}{\includegraphics[decodearray={0 0 0 0 0 0}, scale=0.032]{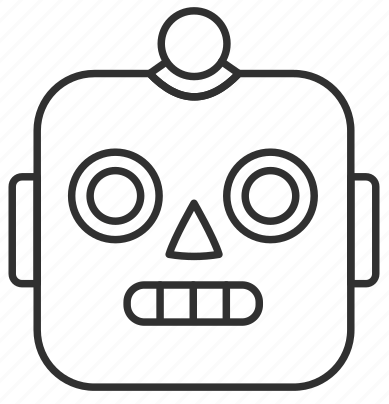}}}}
\newcommand{\tails}{\coin{\large \textbf{1}}}

\usepackage[T1]{fontenc}
\usepackage{bm}
\usepackage{bbm}
\usepackage{amsfonts} 
\usepackage{amsthm}
\DeclareMathOperator{\Expect}{\mathbb{E}}
\DeclareMathOperator{\bx}{\mathbf{x}}

\DeclareMathOperator{\bW}{\mathbf{W}}
\DeclareMathOperator{\bw}{\mathbf{w}}
\DeclareMathOperator{\bV}{\mathbf{V}}
\DeclareMathOperator{\bb}{\mathbf{b}}
\DeclareMathOperator{\ba}{\mathbf{a}}
\DeclareMathOperator{\by}{\mathbf{y}}

\DeclareMathOperator{\bz}{\mathbf{z}}
\DeclareMathOperator{\pprime}{{\prime\prime}}
\DeclareMathOperator{\btheta}{\bm{\theta}}
\DeclareMathOperator{\bTheta}{{\bm{\Theta}}}
\DeclareMathOperator{\balpha}{\bm{\alpha}}
\DeclareMathOperator{\bDelta}{\bm{\Delta}}

\DeclareMathOperator{\bphi}{\bm{\phi}}
\DeclareMathOperator{\bPhi}{\bm{\Phi}}
\DeclareMathOperator{\blambda}{\bm{\lambda}}
\DeclareMathOperator{\bgamma}{\bm{\gamma}}
\DeclareMathOperator{\brho}{\bm{\rho}}
\DeclareMathOperator{\bpi}{\bm{\pi}}
\DeclareMathOperator{\bmu}{\bm{\mu}}
\DeclareMathOperator{\beps}{\bm{\varepsilon}}
\DeclareMathOperator{\epsmin}{\varepsilon_{\text{min}}}

\DeclareMathOperator{\T}{\hspace{-0.08cm}^\text{T}}
\newcommand\dd{\mathrm{d}\hspace{-0.01cm}}
\newcommand\ddd{\mathrm{d}\hspace{0.02cm}}
\newcommand{\bind}[1]{\bm{1}(#1)}

\usepackage{colortbl}  
\usepackage{makecell}
\newcommand{\rs}[4]{\makecell[cc]{\underset{\scriptstyle \pm #2}{\large #1}\ \big/\underset{\pm \scriptstyle #4}{\large #3}}}
\newcommand{\srs}[2]{\makecell[c]{\underset{\scriptstyle \pm #2\phantom{-}}{{\large #1}\phantom{-}}}}
\newcommand{\wsrs}[2]{\makecell[c]{\underset{\scriptstyle \pm #2}{ \phantom{-}{\large #1}\phantom{-}}}}
\newcommand{\cmbold}[1]{\underline{\mathbf{#1}}}

\newcommand*\circled[1]{\tikz[baseline=(char.base)]{
            \node[shape=circle,draw,inner sep=.6pt] (char) {#1};}}


\usepackage[finalizecache,cachedir=.,draft]{minted}

\usepackage{cancel}

\usepackage{algorithm}
\usepackage{algpseudocode}
\algnewcommand{\LineComment}[1]{\State \(\triangleright\) #1}

\usepackage{dsfont}

\DeclareMathOperator*{\argmax}{\arg\!\max}
\newcommand{\indicator}[1]{\mathds{1}\big({#1}\big)}

\newtheoremstyle{mystyle}
  {}
  {}
  {\itshape}
  {}
  {\bfseries}
  {.}
  { }
  {}

\theoremstyle{mystyle}
\newtheorem{example}{Example}

\newtheorem{lemma}{Lemma}
\newtheorem{proposition}{Proposition}

\newcommand*\mystrut[1]{\vrule width0pt height0pt depth#1\relax}

\usepackage{scalerel,xparse}
\NewDocumentCommand\emojiperson{}{
    \raisebox{-.1\height}{
        \includegraphics[scale=0.06]{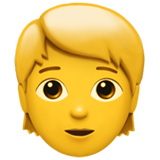}
    }
}

\NewDocumentCommand\emojirobot{}{
    \raisebox{-.1\height}{
        \includegraphics[scale=0.06]{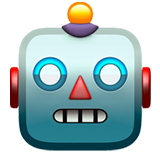}
    }
}

\NewDocumentCommand\emojimangnifyingglass{}{
    \raisebox{-.1\height}{
        \includegraphics[scale=0.06]{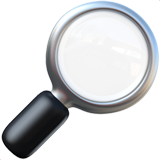}\hspace{-1.6mm}
    }
}

\newcommand{\peach}{\raisebox{-0.03cm}{\includegraphics[width=0.35cm]{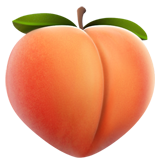}}}

\usepackage[dvipsnames]{xcolor}

\newcommand{\trustor}{\textcolor{Bittersweet}{trustor}\hspace{-1.5mm}\emojiperson\hspace{-1.6mm}}
\newcommand{\trustee}{\textcolor{MidnightBlue}{trustee}\hspace{-1.5mm}\emojirobot\hspace{-1.6mm}}

\PassOptionsToPackage{pdftex,hyperfootnotes=false,pdfpagelabels,pagebackref=true,colorlinks=true}{hyperref}
\usepackage{hyperref} 
\usepackage{backref} 
 
\renewcommand*{\backref}[1]{}
\renewcommand*{\backrefalt}[4]{%
    \ifcase #1 (Not cited.)%
    \or        (Cited on page~#2)%
    \else      (Cited on pages~#2)%
    \fi}

\pdfcompresslevel=9
\pdfadjustspacing=1
\usepackage{url}

\usepackage{breakurl}

\usepackage{lipsum}

\newcommand*\justify{%
  \fontdimen2\font=0.4em
  \fontdimen3\font=0.2em
  \fontdimen4\font=0.1em
  \fontdimen7\font=0.1em
  \hyphenchar\font=`\-
}

\hypersetup{
colorlinks=true, linktocpage=true, pdfstartpage=3, pdfstartview=FitV,
breaklinks=true, pdfpagemode=UseNone, pageanchor=true, pdfpagemode=UseOutlines,%
plainpages=false, bookmarksnumbered, bookmarksopen=true, bookmarksopenlevel=1,%
hypertexnames=true, pdfhighlight=/O,
urlcolor=webbrown, linkcolor=RoyalBlue, citecolor=webgreen, 
pdftitle={Uncertainty in Natural Language Processing},
pdfauthor={Dennis Ulmer, IT University of Copenhagen},
pdfsubject={},
pdfkeywords={},
pdfcreator={pdfLaTeX},
pdfproducer={LaTeX with hyperref and classicthesis}
}







\hyphenation{Fortran hy-phen-ation} 


\title{\normalfont\spacedallcaps{title}} 

\author{\spacedlowsmallcaps{author}} 

\date{} 


\begin{document}

\includepdf[pages=-, fitpaper=true]{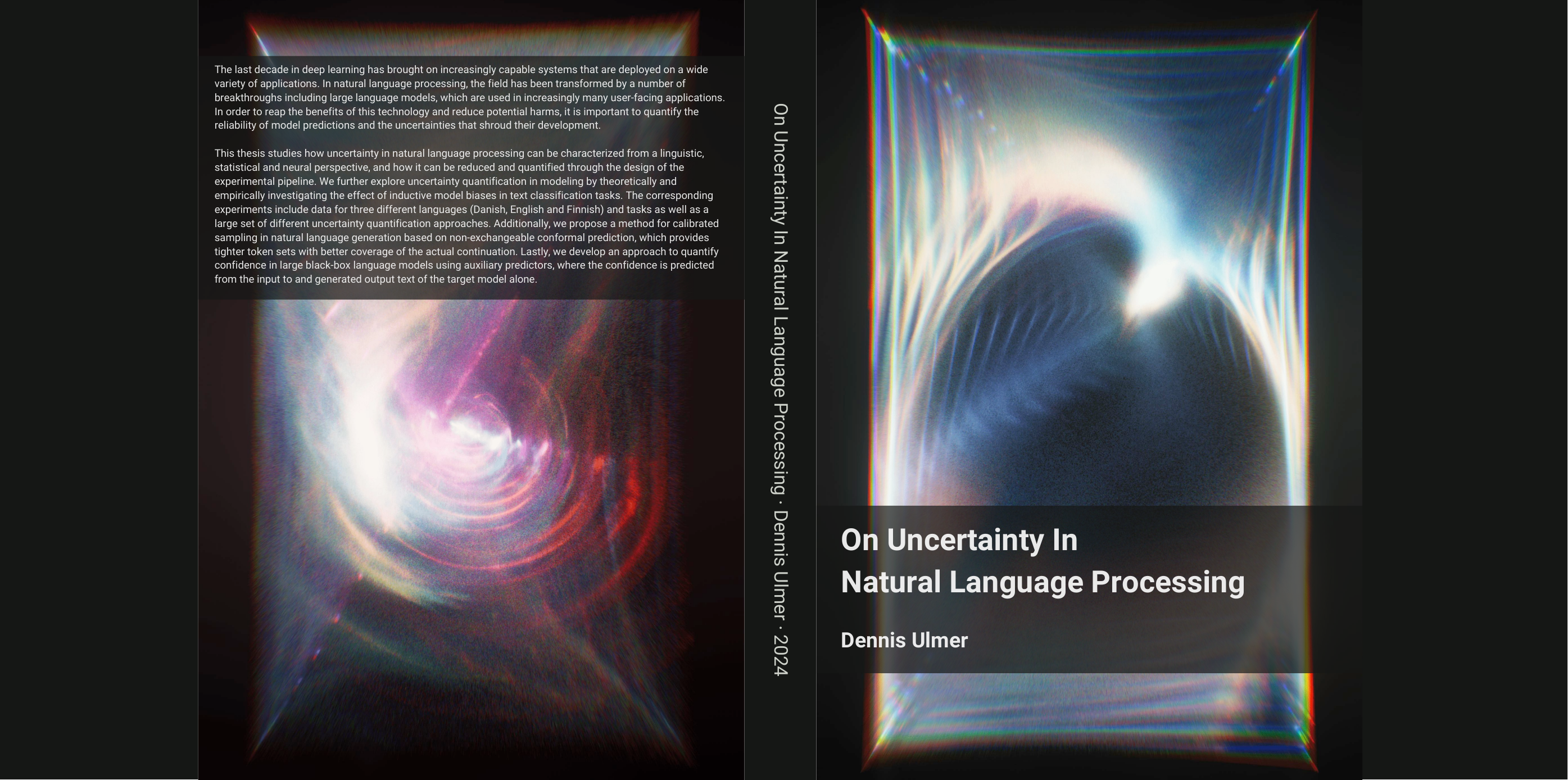}
\newpage

\raggedbottom  
\sloppy  



\pagestyle{scrheadings} 



\hypersetup{pageanchor=false}
\begin{titlepage}
\thispagestyle{empty}
\begin{figure}[h!] 
\centering
\includegraphics[width=4in]{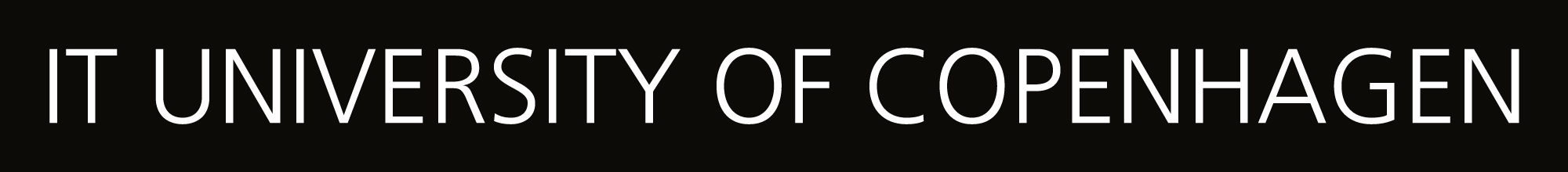} 
\end{figure}

\begin{center}
\vspace{1.25cm}
\begingroup \linespread{1,75} \selectfont 
\textsc{\LARGE On Uncertainty In Natural Language Processing}\\
[3cm]
\endgroup

\textsc{DENNIS ULMER}\\[0.4cm]
Department of Computer Science\\[0.1cm]
IT University of Copenhagen\\[3cm]

\includegraphics[width=1.25in]{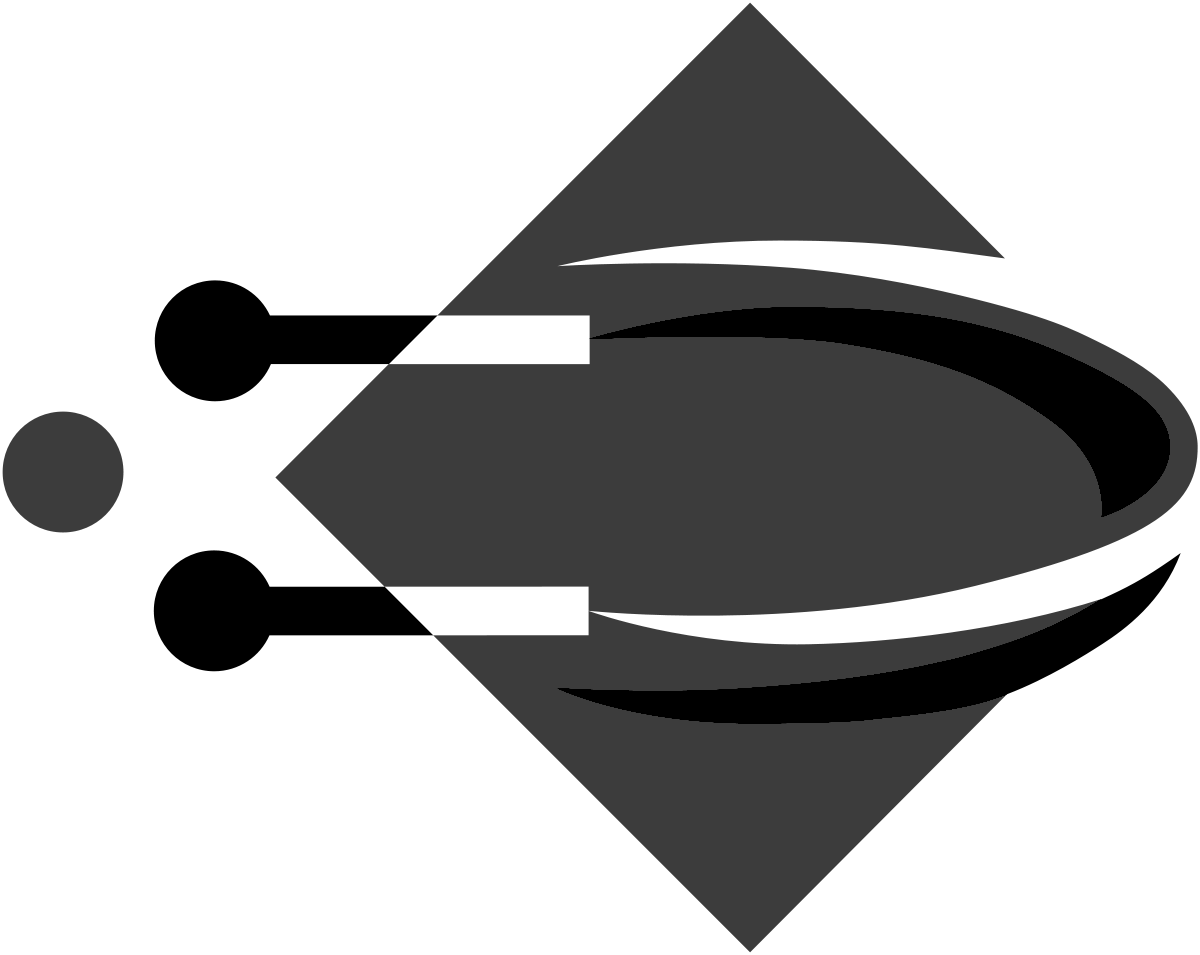}
\end{center}

\vfill
\begin{center}
Dissertation Submitted in Partial Fulfillment of the\\
Requirements for the Degree of\\
Doctor of Philosophy\\[0.5cm]  

June 14, 2024
\end{center}



\end{titlepage}
\hypersetup{pageanchor=true}


\thispagestyle{empty}

\chapter*{Committee}

\begin{figure}[ht!]
    \centering

    \vspace{0.5cm}

    \resizebox{\textwidth}{!}{
    \begin{tabular}{lll}
        \textbf{Advisor} & Dr.\@ Christian Hardmeier & IT Universitetet i København \\[0.4cm]
        \textbf{Co-Advisor} & Dr.\@ Jes Frellsen & Danmarks Tekniske Universitet \\[0.4cm]
        \textbf{Members}  & Dr.\@ Leon Derczynski & IT Universitet i København \\[0.1cm]
        & Prof.\@ Dr.\@ Ole Winther & Danmarks Tekniske Universitet \\[0.1cm]
        & Prof.\@ Dr.\@ Mário A.\@ T.\@ Figueiredo & Instituto Superior Técnico
    \end{tabular}%
    }

    \vfill
\end{figure}

\vfill
\newpage



\pagenumbering{roman}
\chapter*{Abstract}
\markboth{Abstract}{Abstract}
\addcontentsline{toc}{chapter}{Abstract}

The last decade in deep learning has brought on increasingly capable systems that are deployed on a wide variety of applications.
In natural language processing, the field has been transformed by a number of breakthroughs including large language models, which are used in increasingly many user-facing applications.
In order to reap the benefits of this technology and reduce potential harms, it is important to quantify the reliability of model predictions and the uncertainties that shroud their development.\\

This thesis studies how uncertainty in natural language processing can be characterized from a linguistic, statistical and neural perspective, and how it can be reduced and quantified through the design of the experimental pipeline.
We further explore uncertainty quantification in modeling by theoretically and empirically investigating the effect of inductive model biases in text classification tasks.
The corresponding experiments include data for three different languages (Danish, English and Finnish) and tasks as well as a large set of different uncertainty quantification approaches.
Additionally, we propose a method for calibrated sampling in natural language generation based on non-exchangeable conformal prediction, which provides tighter token sets with better coverage of the actual continuation.
Lastly, we develop an approach to quantify confidence in large black-box language models using auxiliary predictors, where the confidence is predicted from the input to and generated output text of the target model alone.


\chapter*{Resum\'{e}}
\markboth{Resum\'{e}}{Resum\'{e}}
\addcontentsline{toc}{chapter}{Resum\'{e}}

Det sidste årti i deep learning har medført stadig mere dygtige systemer, der anvendes på mange forskellige områder. 
Feltet natural language processing (naturlig sprogbehandling) er blevet transformeret af en række gennembrud, herunder store sprogmodeller, som bruges i stadigt flere anvendelser med menneskelige brugere.
For at udnytte fordelene ved denne teknologi og reducere potentielle skader, er det vigtigt at kvantificere pålideligheden af modelforudsigelser og de usikkerheder, der omkranser deres udvikling.\\

Dette afhandling undersøger, hvordan usikkerhed i natural language processing kan karakteriseres ud fra et sprogligt, statistisk og neuralt perspektiv, og hvordan den kan reduceres og kvantificeres gennem design af den eksperimentelle pipeline.
Vi udforsker yderligere kvantificering af usikkerhed i modellering ved teoretisk og empirisk at undersøge effekten af modellers induktive bias i tekstklassificeringsopgaver.
De tilsvarende eksperimenter omfatter data for tre forskellige sprog (dansk, engelsk og finsk) og opgaver samt et stort sæt forskellige tilgange til kvatificering af usikkerheder.
Derudover foreslår vi en metode til kalibreret sampling i naturlig sproggenerering baseret på non-exchangeable conformal prediction, der giver smallere tokensæt med bedre dækning af den faktiske fortsættelse.
Til sidst udvikler vi en tilgang til at kvantificere tillid i store black-box sprogmodeller ved hjælp af såkaldte hjælpeprædiktorer, hvor tilliden forudsiges ud fra input til og genereret outputtekst fra sprogmodellen alene.


\chapter*{Acknowledgements}

\markboth{Acknowledgements}{Acknowledgements}
\addcontentsline{toc}{chapter}{Acknowledgements}


\epigraph{

    ``\emph{There's a ruinous misconception that a PhD must be smart.}\\[0.5cm]

    \emph{This can't be true.}\\[0.5cm]
    
    \emph{A smart person would know better than to get a PhD.}''
}
{---Matt Might}

Surely almost every PhD, across disciplinary boundaries, can attest the uniqueness of this degree.
You seize being a student despite continuing to study. 
However, any pre-defined curricula cease to exist and there is an almost overwhelming number of paths forward.
The idea of working on an open, unsolved problem is as exciting as it is terrifying---it might not known whether a solution even exists, and a million possible solutions are waiting to be explored.
Success depends on whether an idea actually proves to work in practice, and even if one is lucky enough to experience this feeling, publication is not guaranteed. 
Even a great paper might be reject in the reviewing lottery or drowned out in the current deluge of research.
Almost by design, these and other factors create an emotional rollercoaster that is easy to glorify by the time one writes the acknowledgements of one's thesis, but can be a formidable test of resilience in the moment.
I am deeply grateful to all the people that have allowed me to come this far and supported me along this way, and this section is dedicated to them.\\

Firstly, I would like to thank Natalie Schluter for making it possible for me to start my PhD journey.
I highly appreciate you taking a chance on me and giving me full flexibility.
Secondly, I owe a lot of gratitude to my supervisors Christian Hardmeier and Jes Frellsen.
You were incredibly supportive and gave me the freedom to explore any direction I desired, and immensely enjoyed collaborating with you.
I also would like to thank the member of my PhD committee, namely Leon Derczynski from ITU and NVIDIA, Ole Winther from DTU, and M\'{a}rio A.\@ T.\@ Figueiredo from IST in Lisbon.
During the last three and a half years, I also had the pleasure to collaborate with and learn from a number of outstanding people, namely
Giovanni Cinà at Pacmed, André F.\@ T.\@ Martins at the Instituto Superior Técnico in Lisbon, Elman Mansimov at Amazon Web Services, and Seong Joon Oh at Parameter Lab.
A special and repeated thanks goes to Dieuwke Hupkes, who together with my other former master thesis co-supervisor Elia Bruni were instrumental in putting me on this path in the first place.
In addition, I want to thank all the other collaborators across several projects during my PhD that I was lucky enough to work with and learn from.\\

I could not have done this either without the unwavering support from my partner Bea;
you have been with me through the lows and you were there to celebrated the highs, and I will never be able to accurately thank you for your presence and patience.
I also owe a lot to my family for their support over the course of many years of university education without which I would have never made it this far; 
thank you to my parents Michaela and Siegfried, my sister Anna and my grandma Ingrid for your encouragement (Oma, I am sorry that I still haven't gotten a regular, stable job).
A large amount of gratitude is owed to all the friends in and outside of academia for their their support and role in making this an overall unforgettable time.
Thank you António, Atilla, Ben, Christina, Chryssa, Constanza, Daniel, David, Doro, Elisa, Ellie, Hannes, Isra, Jon, Joris, Joscha, Karolina, Kristoffer, Laura, Mareike, Marija, Mark, Martin, Max, Mike, Nuno, Putri, Rita, Rui, Sean, Yova.
This thank you is generally extended to all the members of the NLPnorth at ITU, CoAStal at KU, SARDINE at IST and Jes' group at DTU. 
In addition, a thank you Daniel, Max, and Mike as well as Yiyi and Laura for helping with the organization of Beers with NLPeers.
The list above to is impossible to make comprehensive, so should your name not be in it, please be assured that I value and appreciate your part in this chapter of my life.\\

Lastly, I want to thank Lotti from ITU's high performance computing cluster for her swift help with any technical problems, and the members of ITU's IT department, facility management, HR, finance, front desk and PhD school for all other problems small and large that arose over the course of my PhD.
I am also grateful to Ricardo B.\@ Ponce (\href{https://www.instagram.com/pixel.flux/}{@pixel.flux} on Instagram) for letting me license his artwork for this thesis.
Last but not least, thanks to the members of Café Analog and producers of Club Mate for providing much needed morale and caffeine boosts throughout my PhD.

\newpage


\chapter*{Declaration of Work}
\markboth{Declaration of Work}{Declaration of Work}
\addcontentsline{toc}{chapter}{Declaration of Work}

\vspace{0.5cm}

I, Dennis Ulmer, declare that this thesis---submitted in partial fulfillment of the requirements conferral of a PhD from the IT University of Copenhagen---is solely my own work unless otherwise referenced or attributed.
Neither the thesis nor its content have been submitted (or published) for qualifications at another academic institution.\\

\begin{flushleft}
    \includegraphics[width=0.3\textwidth]{./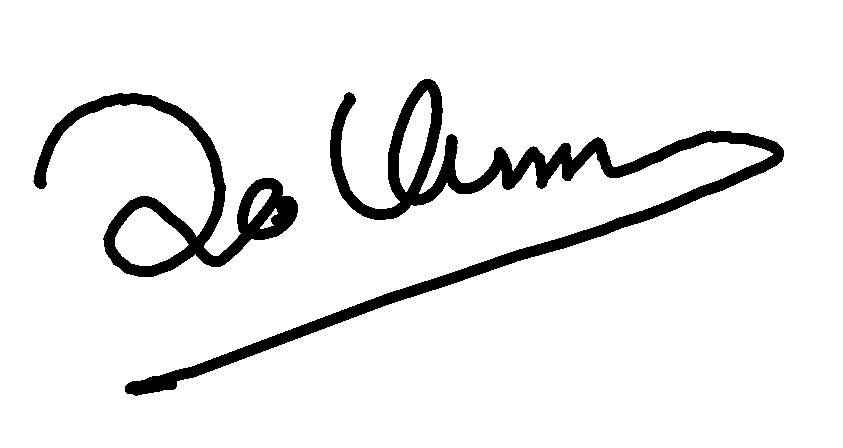}
\end{flushleft}
---Dennis Ulmer

\newpage

\clearpage
\setcounter{tocdepth}{3} 
\counterwithin{figure}{chapter}
\counterwithin{table}{chapter}


\renewcommand{\partfont}{\normalfont}
\renewcommand{\cftchapfont}{\fontfamily{lmodern}\selectfont\normalfont}   
\renewcommand{\cftsecfont}{\fontfamily{lmodern}\selectfont\normalfont}  
\renewcommand{\cftsubsecfont}{\normalfont}

\tableofcontents 

\listoffigures 

\listoftables 



\chapter*{Notation}\label{ch:notation}
\markboth{Notation}{Notation}
\addcontentsline{toc}{chapter}{Notation}

\begin{tikzpicture}[remember picture,overlay]
    \node[anchor=north,inner sep=0pt] at (current page text area.north) {\includegraphics[width=\linewidth, clip=true, trim = 8cm 75cm 8cm 50cm]{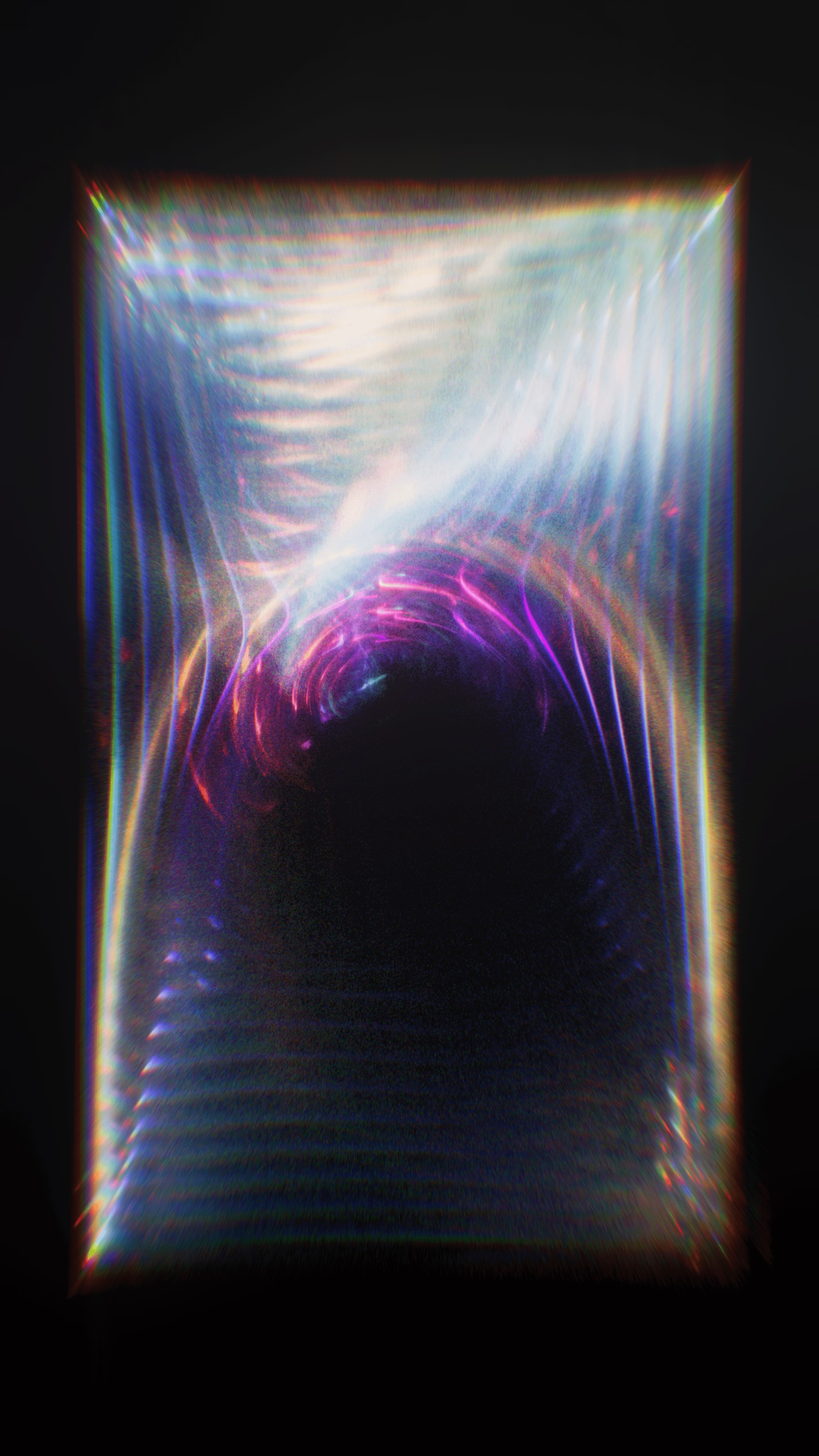}};
\end{tikzpicture}

\epigraph{``\emph{It is time for mathematicians, physicists, and computer scientists to forget their differences and admit that nobody really has a clue about what's going on in high dimensions.}''}{---Cl\'ement Canonne}

In the following, we generally follow the notational guidelines used in the book by \citet{goodfellow2016deep} and by other organizations such as the Transactions on Machine Learning Research (TMLR) journal, with some modifications.
These include the use of the following:

\begin{itemize}
    \item Lowercase latin and greek letters for scalars, e.g.\@ $a, b, c$ and $\alpha, \beta, \gamma$.
    \item Bold lowercase latin and greek letters for vectors, e.g.\@ $\mathbf{a}, \mathbf{b}, \mathbf{c}$ and $\bm{\alpha}, \bm{\beta}, \bm{\gamma}$.
    \item Bold uppercase latin and greek letters for matrices, e.g.\@ $\mathbf{A}, \mathbf{W}$ and $\bm{\Theta}, \bm{\Psi}$.
    \item Uppercase letters such as $\mathbb{A}$ and $\mathbb{D}$ to denote sets. Sometimes, calligraphical letters like $\mathcal{C}$ might be used to denote sets when the notation might conflict with common conventions (e.g.\@ $\mathbb{C}$ usually denoting the set of complex numbers.).
    \item $\{x_i\}_{i=1}^N$ to denote a set of elements $\{x_1, \ldots, x_N\}$. We also use the condensed shorthand $\{x_{ij}\}_{i,j=1}^{M,N}$ to denote a set of elements $\{x_{1,1}, \ldots, x_{M, 1}, \ldots x_{M, N}\}$ indexed along two dimensions.
    \item $[K]$ to denote an set $\{1, 2, \ldots, K\}$, or more formally, for any $K \in \mathbb{N}^{+}$, $[K] = \{n \mid n \in \mathbb{N}^{+} \text{ and } n \le K\}$.

\end{itemize}

We denote an element-wise multiplication for vectors and matrices by $\circ$, and the same symbol may be used in some contexts to denote function compositions, i.e.\@ $(f \circ g)(x) = g(f(x))$.

\section*{Definitions}\label{sec:definitions}

\paragraph{Neural Network.}\index{Neural network} Some concepts occur often enough to warrant their separate definitions.
Since this thesis revolves around neural networks, we denote $\btheta$ as the (flattened) vector of network parameters and $\mathbf{\Theta}$ as the space of all possible weight parameters.
Neural networks usually comprise a number of linear layers, consisting of a weight matrix $\mathbf{W}$ and a bias term $\mathbf{b}$, transforming inputs $\bx$ into hidden encodings $\bz$.
A superscript or index might be added to indicate one of these objects belonging to a specific layer $l \in [L]$ or to a specific time step $t \in [T]$.
Furthermore, we indicate with a index $\btheta$ when a function is parameterized by $\btheta$ (or some other set of parameters).
This is generally done to reduce clutter and make equations more readable, but might be made explicit with conditioning when it is important in a statistical context.
For instance, the probability distribution over classes $k \in [K]$ of a neural classifier will be denoted as $p_{\btheta}(y \mid \bx) \equiv p(y \mid \bx, \btheta)$.
In the same fashion, we denote $f_{\btheta}(\bx)$ as the logits, i.e.\@ the unnormalized output of a neural classifier, and use $f_{\btheta}(\bx)_k$ to refer to the $k$-th logit.

\paragraph{Neural Network Functions.} 
There also exist several functions that play a specific role in the neural network context.
On of these is the sigmoid function\index{Sigmoid function} defined as 

\begin{equation}
    \sigma(x) = \frac{1}{1 + \exp(-x)},
\end{equation}

\noindent as well as its multivariate generalization, the softmax function\index{Softmax function}:

\begin{equation}\label{eq:softmax}
    \text{softmax}(\bx)_k \equiv \bar{\sigma}(\bx)_k = \frac{\exp(x_k)}{\sum_{k=1}^K \exp{x_k}},
\end{equation}

\noindent where we sometimes will use the notation $\bar{\sigma}(\cdot)$ to avoid visual clutter.

\paragraph{Indicator function.} 
The indicator function\index{Indicator function} takes as input some condition, and evaluates as 

\begin{equation}
    \indicator{\text{condition}} = \begin{cases} 1\quad\text{if condition is true} \\ 0\quad\text{else}\end{cases}
\end{equation}

In some cases it is useful to apply the indicator function element-wise to the contents of a vector. 
In that case, we use a bolded version, namely $\bind{\cdot}$, which will be a vector of the same dimensionality.
Take the example of a vector $\bx$ whose elements are compared against a threshold $\tau$. 
Then

\begin{equation}
    \bind{\bx > \tau}_i = \begin{cases} 1\quad\text{if }x_i > \tau \\ 0\quad\text{else}\end{cases}
\end{equation}

\paragraph{Statistics.}
In the context of statistics, we use the Dirac delta function\index{Dirac delta function}, which is defined as $0$ everywhere except for the origin, where it is $+\infty$:

\begin{equation}
    \delta(x) = \begin{cases} +\infty\quad\text{if }x = 0 \\ 0\quad\text{else}\end{cases}
\end{equation}

In addition, its integral over the entire real number line is $1$.
Another set of definitions denotes common statistical concepts as the expectation of a random variable $x$

\begin{equation}
    \mathbb{E}[x] = \sum_x P(x)x \quad \text{ or } \quad \int_x p(x)\ddd x.
\end{equation}

In this case, we also use $P$ to denote probability mass functions and $p$ to denote probability density functions.
From this, we can also define the variance as 

\begin{equation}
    \text{Var}[x] = \mathbb{E}\big[(x - \mathbb{E}[x] )^2 \big]
\end{equation}

\noindent as well as the Shannon entropy\index{Shannon entropy}

\begin{equation}
    \text{H}[x] = -\sum_x P(x) \log P (x)  \quad \text{ or } \quad -\int p(x) \log p(x) \ddd x.
\end{equation}

\paragraph{Special Functions.} Another set of definitions is dedicated to some mathematical functions, including 
the Gamma function $\Gamma(\cdot)$\index{Gamma function}, which is a continuous version of the factorial and defined as 

\begin{equation}
    \Gamma(z) = \int_0^\infty t^{z - 1}\exp(-t) \ddd t. 
\end{equation}

Another important function is the Beta function\index{Beta function}:

\begin{equation}
    \textbf{B}(\alpha_1, \alpha_2) = \frac{\Gamma(\alpha_1)\Gamma(\alpha_2)}{\Gamma(\alpha_1 + \alpha_2)}.
\end{equation}

In some cases, we will consider the Beta function with an arbitrary number of $\alpha$ values.
In that case, we collect them in a vector $\balpha = [\alpha_1, \ldots, \alpha_K]^T$ and write the Beta function as  

\begin{equation}
    \text{B}(\balpha) = \frac{\prod_{k=1}^K\Gamma(\alpha_k)}{\Gamma\Big(\sum_{k=1}^K \alpha_k\Big)}.
\end{equation}


\clearpage
\pagenumbering{arabic}

\chapter{Introduction} 
\label{ch:introduction} 

\begin{tikzpicture}[remember picture,overlay]
    \node[anchor=north,inner sep=0pt] at (current page text area.north) {\includegraphics[width=\linewidth, clip=true, trim = 8cm 75cm 8cm 50cm]{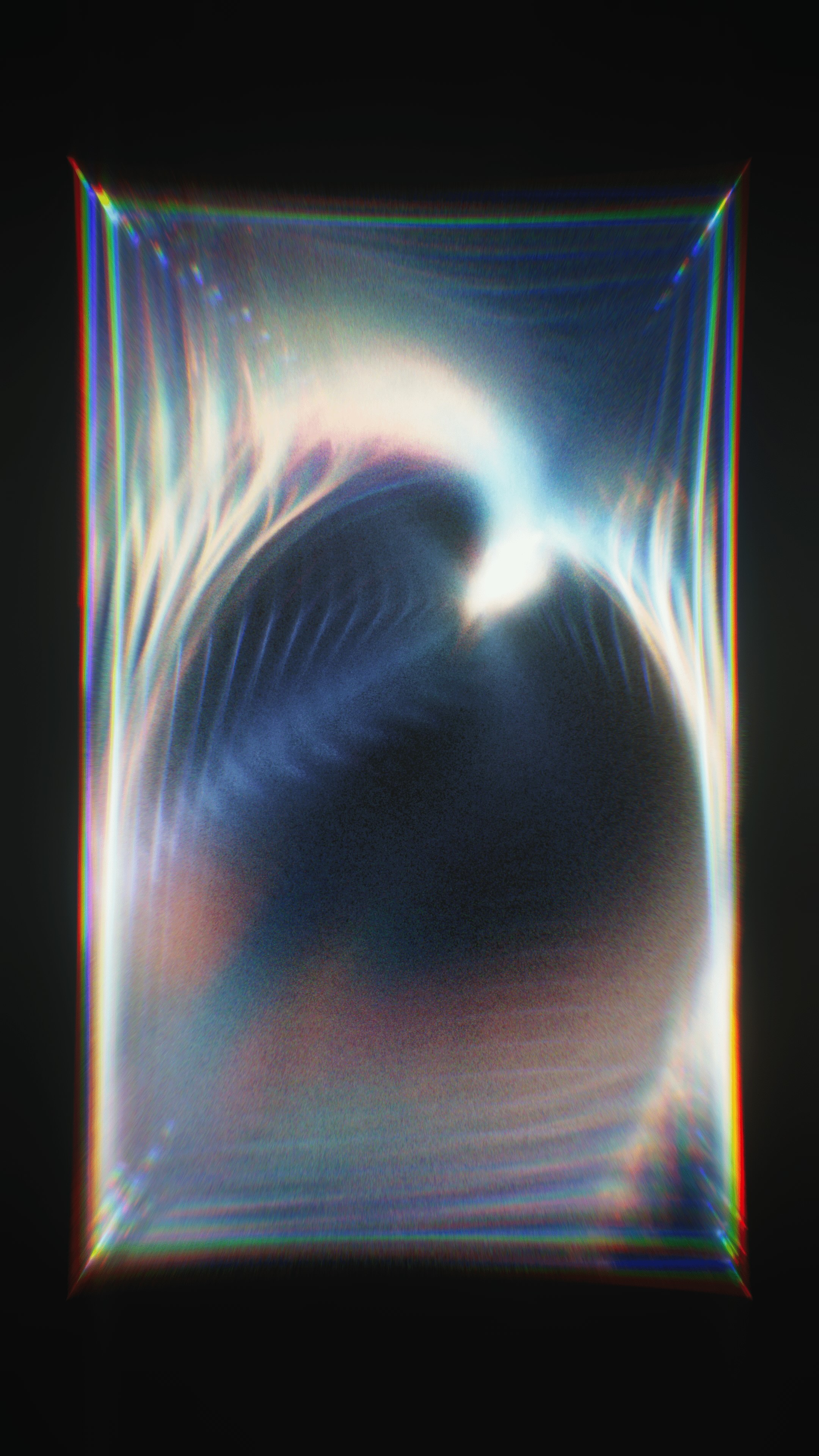}};
\end{tikzpicture}

\epigraph{

    ``\emph{Forudsigelse er meget vanskelig, is{\ae}r om fremtiden.}''\\[0.2cm]
    ``\emph{Prediction is very difficult, especially about the future.}''
}
{---Niels Bohr}

\section{Motivation}

Every person's life is full of decisions. 
Is this restaurant really as good as the reviews suggest? Should I take a job here or take a more interesting job in a city far away? 
These decisions can be hard to evaluate, since not all necessary information is known beforehand:
Restaurant reviews might be fraudulent or biased, and a promising job opportunity might turn out to be different than advertised.
Compare that with the example of making a move in a game of chess: 
Chess is called a game with \emph{perfect} information, so all the positions and possible moves of the pieces on the board are known, and one could in theory make the optimal move at every step (assuming good chess-playing abilities).
However, in real life we often do not have all the information necessary to make a perfect decision.
As such, humans take into account the uncertainty that permeates their decision-making in order to manage risk.\\

In this way, machines are (or should be) no different. 
The decades-old research in machine learning (ML)\index{Machine learning}---and especially the most recent advances in the last decade or so---have produced systems that make decisions from the mundane (``is this a picture of a cat or an airplane?'') to the potentially risky (``what treatment should be recommended to this patient?'').
This trend has been accelerated by the paradigm of \emph{deep learning}\index{Deep learning}, which allows us to build evermore complex systems that could solve increasingly complex tasks.
The complexity of these systems through comes at the cost of losing a detailed understanding of all the ``cogs and gears'' involved due to  the sheer size of models (including millions, billions and sometimes even trillions of such ``gears'').
This fact has spurred numerous lines of research to develop methods to make deep learning systems more robust, fair and safe.\\

One such line of research is concerned with \emph{uncertainty quantification}, i.e.\@ reflecting the degree of trustworthiness of a prediction. 
In systems with automatic decision-making, such scores can for instance be used to withhold a prediction or request human oversight. 
One popular example is autonomous driving: Consider an important traffic sign that cannot be accurately evaluated by the onboard computer, or a traffic situation that is hard to analyze.
In these cases, a human driver might appreciate the opportunity to intervene with the car, e.g.\@ by reducing its speed in the face of uncertainty, instead of the car sticking to a wrong assessment and endangering the driver's or other traffic users' lives.\\

At this point, the reader might be rightfully wonder whether such high-risk scenarios also exist for language applications. 
And indeed, such problems can arise in sometimes more, sometimes less obvious places.
An intuitive application with these considerations is healthcare: 
More and more work has recently gone into building artificial intelligence (AI)\index{Artificial intelligence} systems that provide decision-support for medical staff.
For instance, models could analyze text written by a user to detect signs of mental illness or triage (i.e.,\@ prioritize) patients when resources are limited \citep{cohan2016triaging, rozova2022detection, stewart2022applications}.
In this case, uncertainty can serve as a signal to request an additional human review of a case.
Confident but wrong predictions here can lead to a waste of resources, a loss of trust of the medical professionals in the system, and, in the worst case, leaving urgent cases untreated.
As another example, natural language systems are also used to assist in legal deliberations \citep{chalkidis2019neural, martinez2023survey, chalkidis2023chatgpt}.
While the scenario of a ``robo judge'' is usually ruled out, there still remain risks where models used for legal discovery or research might overlook relevant or produce misleading or incorrect outputs.\\

Uncertainty quantification\index{Uncertainty quantification} is an active research area for systems that operate for instance on images or tabular data, but it has only recently started to receive attention in the natural language processing (NLP)\index{Natural language processing} community.
This thesis gives an introduction to uncertainty quantification in machine learning and natural language processing for novices, summarizes the current state of progress in the field, and presents some novel and relevant methods for some of the most pressing problems for automated languages processing:
These include for instance determining the most viable methods in text classification\index{Classification!Text} and proposing new approaches to calibrated sampling for natural language generation\index{Natural language generation}, as well as confidence estimation for black-box models.

\section{Applications}\label{sec:applications}

A lot of research on uncertainty quantification makes only superficial statements or tacit assumptions about its usefulness.
The following, non-exhaustive list of aspects therefore underline potential practical use-cases.

\paragraph{Safety.} 
In general, uncertainty estimates can improve safety whenever a system with automated decision capabilities could potentially have real-world effects. 
Some of these situations are studied in the AI safety literature (see e.g.\@ \citealp{amodei2016concrete}): 
They can include preventing an intelligent agent from exploring unsafe options, or acting in a risky manner as its environment changes from the version it was trained with, which is often
referred to as \emph{distributional shift} \citep{shimodaira2000improving, moreno2012unifying}\index{Shift!Distributional}. 
In these cases, uncertain options can either be outright rejected or decisions can be delegated to a human user.

\paragraph{Trust.} In order to reap the benefits of automation and the ability to extract intricate patterns from large amounts of data, users have to trust the system's output, or otherwise
run the danger of being mislead.
In the worst case, they might grow to ignore or even antagonize an automatic system. Since our systems are inanimate---and often inscrutable---building trust between humans and machines can be a tricky endeavor.
Nevertheless, there exists a notion of trust\index{Trust} that can be built by consistency (i.e.,\@ knowing what to expect from a system) and by using uncertainty to understand the behavior of a model \citep{jacovi2021formalizing}. 
We dedicate \cref{sec:uncertainty-trust} to discuss this connection in more detail. 

\paragraph{Fairness.}\index{Fairness} A long line of works has demonstrated how modern deep learning systems have a tendency to discriminate against subpopulations in the dataset and how to mitigate
these effects (see \citealp{caton2020fairness, mehrabi2021survey} for an overview). 
Additional studies have argued that this is the result of human biases in the machine learning pipeline \citep{waseem2021disembodied} as well as biases 
and underrepresentation of groups in the training dataset \citep{meng2022interpretability}. 
In the latter case, specific uncertainty quantification methods can indicate whenever the correct prediction is uncertain due to a lack of similar training data (see \cref{sec:bayesian-neural-networks,sec:evidential-neural-networks}). 
In other instances, \emph{un}fairness might occur when models favor a prediction corresponding to a majority group in the dataset in the face of an inherently ambiguous input.
Consider the example of machine translation\index{Machine translation} system that is supposed to translate ``\emph{the doctor is here}'' into Spanish. 
In English, we do not have to specify the gender of \emph{doctor}, while this is necessary in Spanish. 
And thus, without any additional context, two translations are equally plausible (``\emph{el doctor est\'{a} aqui}'' versus ``\emph{la doctora est\'{a} aqui}'').
Deep learning systems have an inclination to prefer the version that has appeared more often in the training data, which due to real-world human biases might be \emph{el doctor} \citep{vanmassenhove2019getting}.
By exposing the inherent uncertainty however, we can delegate a series of decisions to the user or other specialized systems and avoid such pitfalls.

\paragraph{Efficiency.}\index{Computational efficiency} Not all inputs a deep learning system faces are equally difficult. 
Imagine a system that has been trained to distinguish images of lions and tigers. 
Upon receiving an picture of a lion similar to its training instances, we would expect a well-trained model to come to a confident (and correct) prediction.
Many of our contemporary deep learning systems have grown to include from millions up to billions and sometimes trillions of learnable parameters, 
and thus incur considerable computational cost for every prediction. 
Therefore, some works have explored whether we can use notions of uncertainty to detect when a model has arrived at a secure prediction in order to skip unnecessary computations (i.e.\@ \citealp{schuster2021consistent, schuster2022confident}).
Conversely, consider that our fictional lion vs.\@ tiger detector is faced with a liger, or an albino tiger displaying differently-colored fur.\footnote{A liger is a tiger / lion hybrid, see \url{https://en.wikipedia.org/wiki/Liger}.}
In light of these difficult examples, we could use uncertainty to trigger additional computations to come to a conclusion (see an example for such a mechanism for machine translation by \citealp{vanderpoel2022mutual}).
There is evidence that the human brain operators in a similar fashion, for instance when the reading time in human subjects increases when confronted with a surprising sentence structure \citep{ferreira1991recovery}.

\paragraph{Interpretability.}\index{Interpretability} Due to the scale of modern architectures, the mechanisms in which they arrive at a prediction can be opaque and hard to deduce for humans.
Here, research also has produced a variety of approaches to tackle this problem (see for instance \citealp{madsen2023posthoc} for a non-exhaustive selection). 
Uncertainty can be used as an additional angle to understand when the model might behave unreasonably confident or uncertain, with some studies 
already conducted for natural language generation \index{Natural language generation}\citep{ott2018analyzing, xu2020understanding, xiao2021hallucination, chen2022explaining}.\\

Despite the variety of useful applications, there are a number of challenges to UQ that are very common or even unique in NLP, and distinguish this line of research from similar works on images or tabular data.

\section{Challenges in Natural Language Processing}\label{sec:context}

\begin{figure*}[htb]
    \centering
    \begin{subfigure}[t]{0.49\textwidth}
        \centering
        \includegraphics[width=\textwidth]{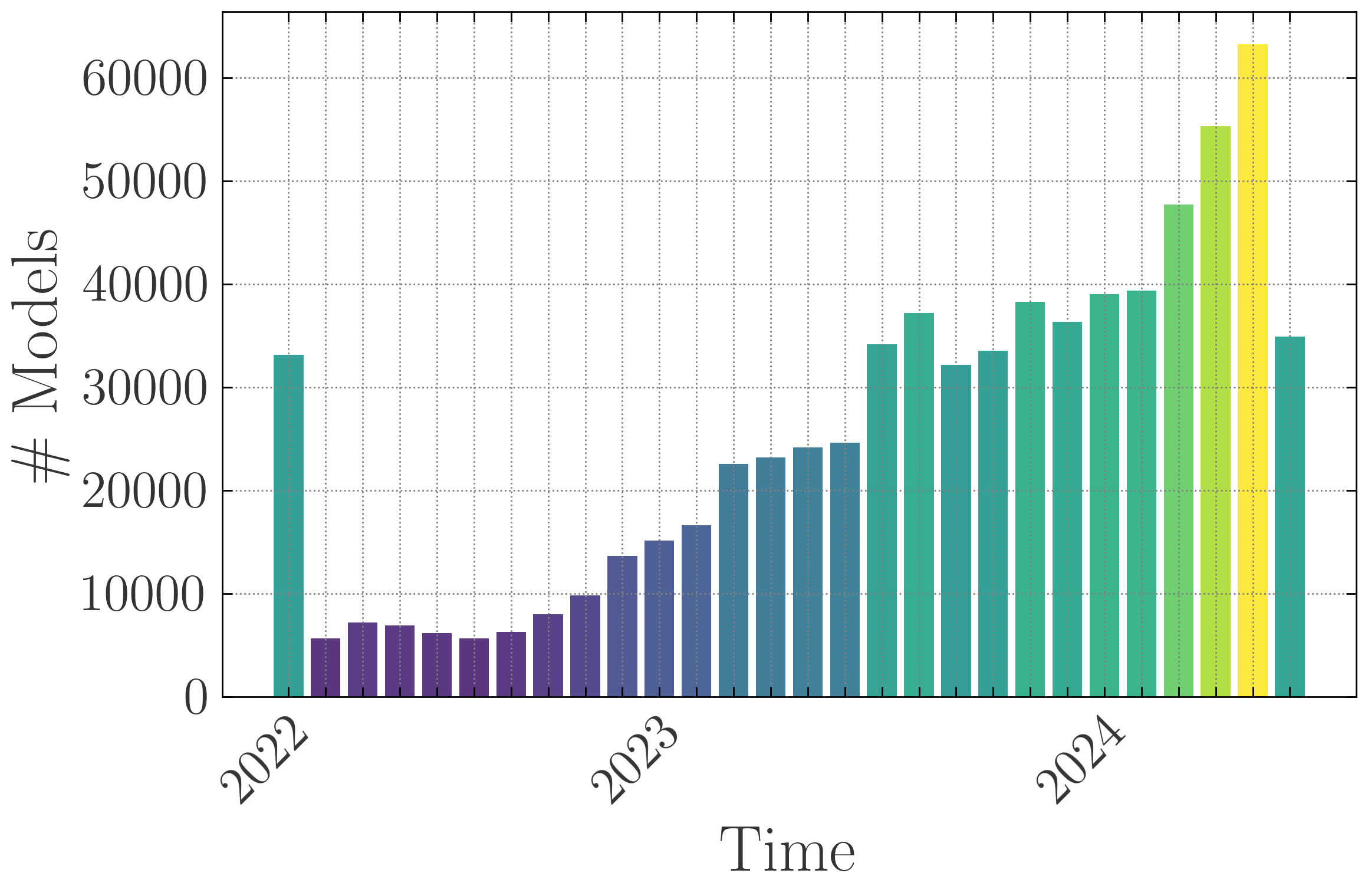}
        \caption{Number of models published on the Huggingface Hub.}
        \label{subfig:huggingface-hub}
    \end{subfigure}
    \hfill
    \begin{subfigure}[t]{0.49\textwidth}
        \centering
        \includegraphics[width=\textwidth]{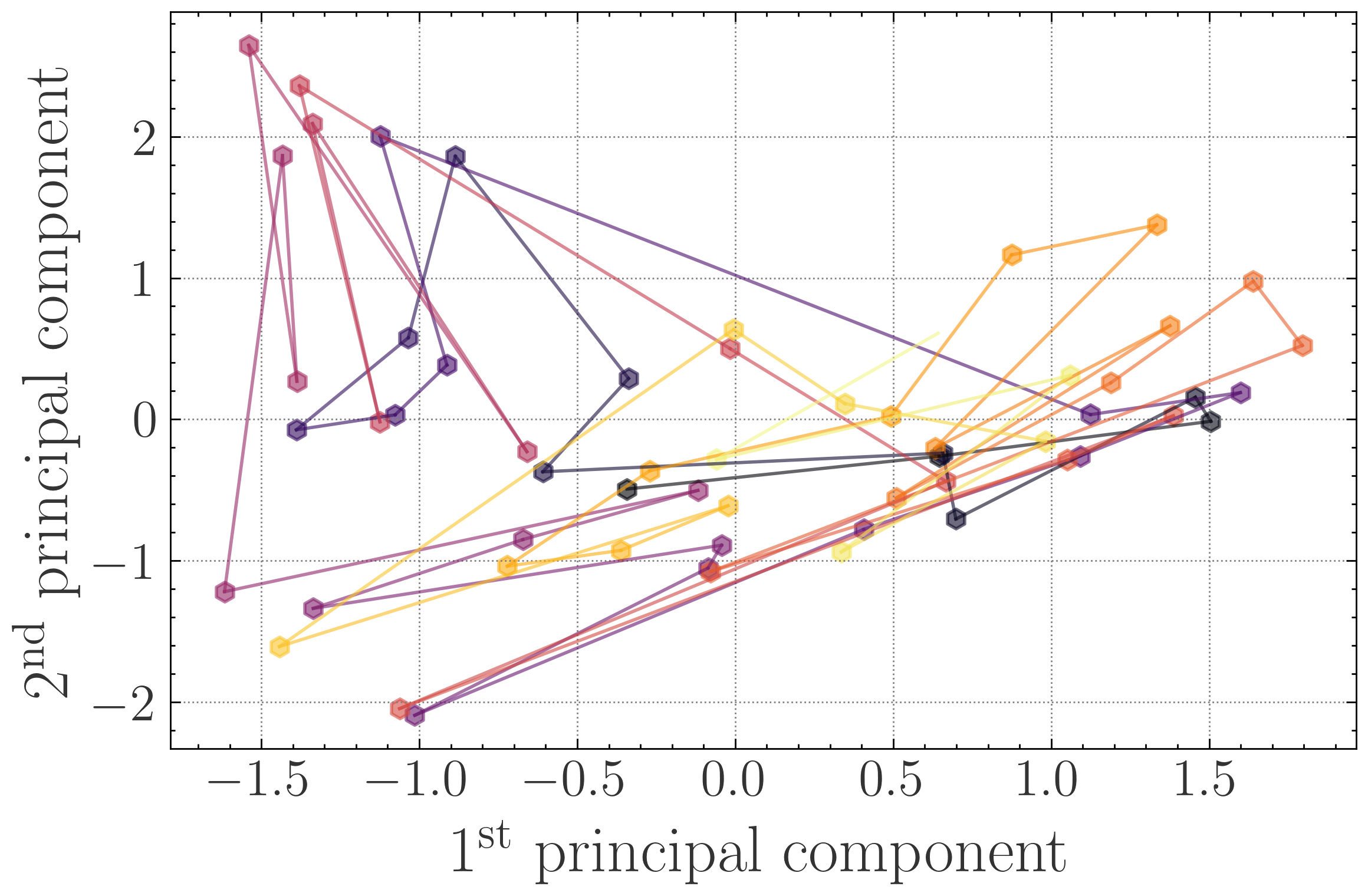}
        \caption{Bert latent representations for a sentence.}
        \label{subfig:bert-latents}
    \end{subfigure}
    \\[0.5cm]
    \begin{subfigure}[t]{0.995\textwidth}
        \centering
        \includegraphics[width=\textwidth]{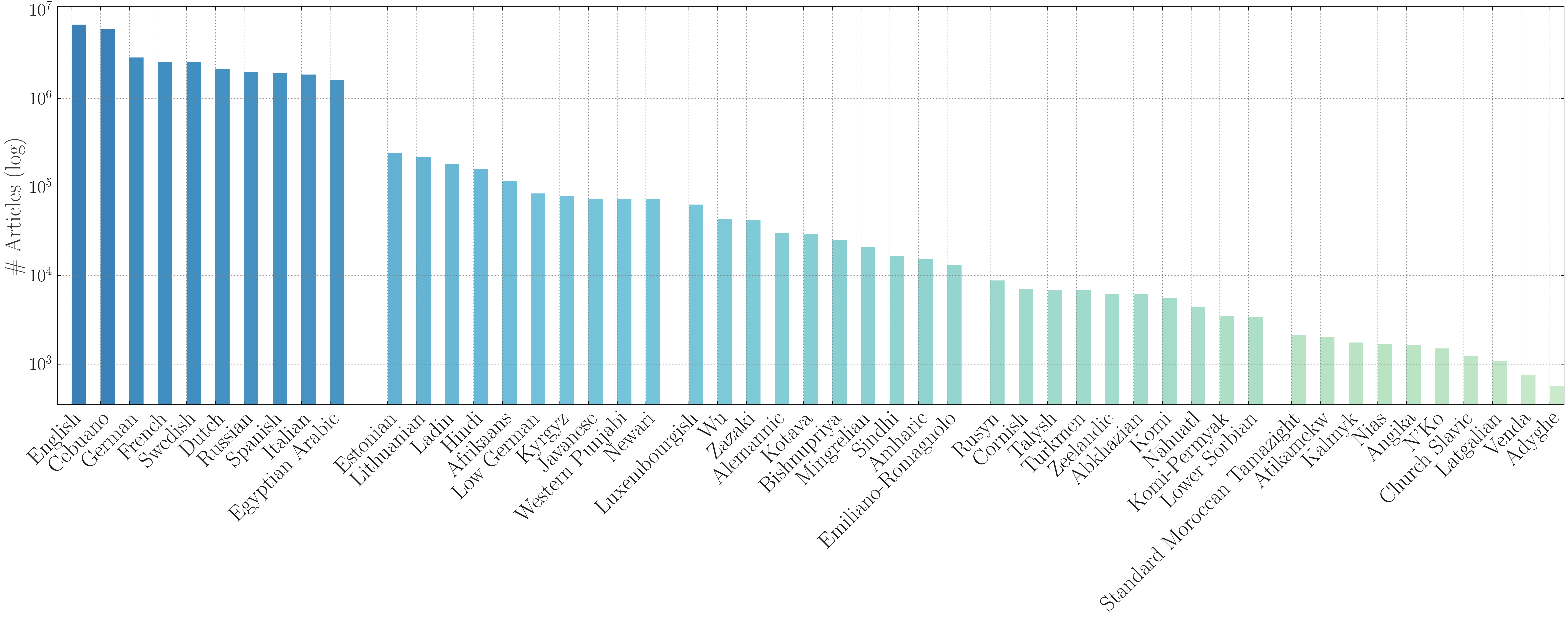}
        \caption{Number of Wikipedia articles by language (log-scale).}
        \label{subfig:wikipedia-articles}
    \end{subfigure}
    \caption[Number of published open-source models, Bert latent trajectory, size of different Wikipedias.]{
        (a) Number of models published per month on the \href{https://huggingface.co/}{HuggingFace Hub} (gathered on 17.06.2024).
        (b) Trajectory of the first two sentences of \citet{turing1950computing} using in the latent space of the uppermost layer of Bert \citep{devlin2019bert}, after projecting them into two-dimensional space using PCA and whitening them.
        Time is indicated by color, reaching from dark (first token) to light (last token).
        (c) Number of articles by Wikipedia, log-scale (gathered on 11.04.2024). Shown are the top ten languages, and then ten randomly chosen languages of the remaining four quantiles of the distribution, each.
        All figures are best viewed in color and digitally.
    }
\end{figure*}

The research in this thesis aims to fill a literature gap:
While uncertainty quantification is a vibrant research field in machine learning, the availability of methods for natural language data is limited,
and very few works develop solutions for this purpose specifically. 
This is disconcerting for the following reasons:

\paragraph{Challenges of Natural Language.}
In contrast to other machine learning problems, processing language is a rather messy affair.
First of all, language is incredibly diverse, displaying vast differences between languages, dialects, demographics, domains or even individual speakers \citep{bender2011achieving, plank2016what, zampieri2020natural, van2022writing}.
It is secondly embedded in a social and cultural context that is often necessary to understand its meaning \citep{hershcovich2022challenges}, and due to its paraphrastic nature, the idea same idea can often be expressed in a multitude of ways \citep{baan2023uncertainty}.  
Thirdly, the sequential nature often breaks the i.i.d.\@ assumption that is a fundamental underlying assumption for many algorithms.
One might assume that language data could just be treated as a time series\index{Time series} and apply corresponding methods for uncertainty quantification (see e.g.\@ \citealp{zhu2017deep, wang2020deeppipe, blasco2024survey}).
Unfortunately though, encodings of language usually behave very erratically, as \cref{subfig:bert-latents} demonstrates.
Modeling techniques for time series however subtly assume a certain behavior of the underlying data, e.g.\@ a limit in the allowed rate of change encountered between two time steps.\footnote{This trait can for instance be formalized through the Lipschitz\index{Lipschitz} constant of the true data-generating function \citep{qu2022data}.}
The sometimes abrupt token-level changes encountered during language processing therefore prevent the application of time series modeling techniques. 

\paragraph{Data Scarcity.}\index{Data scarcity} Large amounts of both unstructured and annotated data exist for English, 
but most of the world's 7000+ languages are not blessed with such resources \citep{ruder2020beyondenglish, joshi2020state}. 
\cref{subfig:wikipedia-articles} shows the number of articles of a variety of Wikipedias on a log-scale.
Due to its openness, Wikipedia remains a popular source of training data in NLP, however high-resource languages like English and German provide exponentially more potential training data compared to languages such as Afrikaans, Amharic or N'Ko.\footnote{
    Cebuano, the second-most spoken language in the Philippines, features the second-largest Wikipedia due to a bot called \href{https://ceb.wikipedia.org/wiki/Gumagamit:Lsjbot}{Lsjbot}, which tries to create Cebuano Wikipedia articles for all living creatures. This makes around $99.6 \%$ of its articles bot-generated \citep{wiki:statistics}.}
This runs contrary to the strength of modern deep learning architectures: 
Weak architectural inductive biases\index{Inductive bias} such as in transformers enable us to learn complex meaning representations, but only when enough data is supplied \citep{tay2022scaling}. 
In the case of low-resource languages\index{Low-resource language} for instances, such data is often not available, and thus we can end up with a model that is \emph{underspecified} \citep{d2020underspecification}:
The fewer training data points are available, the more possible models are able to fit them.
While this might not lead to any problems on inputs similar to the training data, models might behave unpredictably on out-of-distribution data\index{Out-of-distribution data} in ways that might not be immediately detectable by a user.\\

\paragraph{Trust \& Safety.}
In machine learning, much of the research on uncertainty quantification is motivated by concerns regarding the trust in and the safety of automation. 
Despite this, similar research has until recently mostly remained nascent in NLP, despite being equally as relevant.
Furthermore, the rapid developments in NLP with respect to large language models \citep{kalyan2021ammus, sevilla2022compute}\index{Large language model} have accelerated their adoption for a variety of applications by end users, albeit 
without appropriate techniques to ensure their safety.
This trend is illustrated by \cref{subfig:huggingface-hub}, showing the number of models by month uploaded to the \href{https://huggingface.co/}{HuggingFace Hub}, a platform to share open-source models.
After a drop in submissions after its initial release in 2022, numbers have steadily increased to around $60k$ models in mid 2024.
This, alongside a number of available proprietary models that can accessed through web interfaces and APIs, lowers the barrier of access to language models.\\


\section{Objectives}\label{sec:objective}

This thesis analyzes the current state of uncertainty quantification research in deep learning and connects it to the methodological challenges that arise when they are applied to language data.
As such, it aims to familiarize the reader with the most popular strategies for uncertainty quantification, as well as giving an intuition about their limitations.
In this thesis, we seek to answer the following research questions:

\begin{enumerate}
    \item[\emojimangnifyingglass] \textbf{RQ1}: How can uncertainty in NLP\index{Natural language processing} be characterized?\\[0.2cm]
        \noindent
        Uncertainty can be a somewhat vague concept, and its definition is often passed over in different research works.
        Therefore, this thesis tries to gain a multi-disciplinary perspective on the matter, investigating different perspectives on the concept and how they are related.

    \item[\emojimangnifyingglass] \textbf{RQ2}: How can choices in experimental design help to reduce and quantify uncertainty?\\[0.2cm]
        \noindent
        Another overlooked factor in empirical research in NLP is the role of experimental design\index{Experimental design}. 
        Specifically, this work investigates how more conscious design decisions can not only help to reduce and quantify uncertainty, but also open new ways to model it.

    \item[\emojimangnifyingglass] \textbf{RQ3}: How do inductive model biases influence uncertainty quantification?\\[0.2cm]
        \noindent
        The \emph{inductive biases} of a model usually refers to a set of implicit or explicit assumptions made by a learning algorithm \citep{Hüllermeier2013}.
        For instance, linear regression assumes that the target variable can be recovered as a linear combination of its input variables, and neural networks have an inductive bias\index{Inductive bias} to non-linear higher-order combinations of input features.
        The behavior of uncertainty in neural models is a priori often idealized (e.g.\@ the model always displaying uncertainty on OOD), with this expectation sometimes being unfulfilled in practice.
        The reasons for this however are not so well understood, and thus this thesis sheds some light on the interaction of model biases and uncertainty.

    \item[\emojimangnifyingglass] \textbf{RQ4}: How can we address some of the challenges of uncertainty quantification in NLP?\\[0.2cm]
        \noindent
        \cref{sec:context} has listed some of the unique hurdles that UQ on NLP produces.
        Therefore, this thesis puts forth some new insights along with methodological advances to tackle these challenges. 

\end{enumerate}

\section{Publications}\label{sec:publications}

The following works were produced during the PhD and are discussed in detail in this thesis (ordered chronologically).
In all cases, the author's contributions amount to the main or complete share of the conception, implementation and description of ideas and experiments and the writing of the resulting publications, unless shared authorship is indicated.

\renewcommand{\thefootnote}{\fnsymbol{footnote}}
\begin{enumerate}
    \item \textbf{Ulmer, Dennis}$^*$, and Giovanni Cinà\footnote[1]{Equal Contribution.}. ``Know Your Limits: Uncertainty Estimation with ReLU Classifiers Fails at Reliable OOD Detection.'' In: Uncertainty in Artificial Intelligence. PMLR (2021) (discussed in \cref{sec:uq-classification-pitfalls}).
    \item \textbf{Ulmer, Dennis}, Christian Hardmeier, and Jes Frellsen. ``deep-significance-Easy and Meaningful Statistical Significance Testing in the Age of Neural Networks.'' In: The Workshop on Machine Learning Evaluation Standards at ICLR (2022) (discussed in \cref{sec:hypothesis-testing}).
    \item \textbf{Ulmer, Dennis}, Elisa Bassignana, Max Müller-Eberstein, Daniel Varab, Mike Zhang, Rob van der Goot, Christian Hardmeier, and Barbara Plank. ``Experimental Standards for Deep Learning in Natural Language Processing Research.'' In: Findings of the Association for Computational Linguistics: EMNLP (2022), pp.\@ 2673--2692 (discussed in \cref{sec:reproducibility-replicability}).
    \item \textbf{Ulmer, Dennis}, Jes Frellsen, and Christian Hardmeier. ``Exploring Predictive Uncertainty and Calibration in NLP: A Study on the Impact of Method \& Data Scarcity.'' In: Findings of the Association for Computational Linguistics: EMNLP (2022), pp.\@ 2707–-2735 (discussed in \cref{sec:benchmarking-nlp-uncertainty}).
    \item \textbf{Ulmer, Dennis}, Christian Hardmeier, and Jes Frellsen. ``Prior and Posterior
    Networks: A Survey on Evidential Deep Learning Methods for Uncertainty Estimation.''
    In: Transactions on Machine Learning Research. JMLR (2023) (discussed in \cref{sec:evidential-neural-networks}).
    \item \textbf{Ulmer, Dennis}, Chrysoula Zerva, André F.\@ T.\@ Martins: ``Non-Exchangeable Conformal Language Generation with Nearest Neighbors''. In: Findings of the Association for Computational Linguistics: EACL (2024), pp.\@ 1909--1929 (discussed in \cref{ch:uncertainty-generation}).
    \item \textbf{Ulmer, Dennis}, Martin Gubri, Hwaran Lee, Sangdoo Yun, Seong Joon Oh. ``Calibrating Large Language Models Using Their Generations Only''. In: Proceedings of the 62nd Annual Meeting of the Association for Computational Linguistics (Volume 1: Long Papers) (discussed in \cref{ch:uncertainty-llms}).
\end{enumerate}

The following works were produced during the PhD, but will not be discussed in detail, either since the author was not the main author, or because they were not a good fit for the topic of this thesis:

\begin{enumerate}
    \setcounter{enumi}{7}
    \item Baan, Joris$^*$, Nico Daheim$^*$, Evgenia Ilia$^*$, \textbf{Dennis Ulmer}$^*$, Haau-Sing Li, Raquel Fernández, Barbara Plank, Rico Sennrich, Chrysoula Zerva, Wilker Aziz. ``Uncertainty in Natural Language Generation: From Theory to Applications.'' Under review, 2024.
    \item Hupkes, Dieuwke, Mario Giulianelli, Verna Dankers, Mikel Artetxe, Yanai Elazar, Tiago Pimentel, Christos Christodoulopoulos, Karim Lasri, Naomi Saphra, Arabella Sinclair, \textbf{Dennis Ulmer}, Florian Schottmann, Khuyagbaatar Batsuren, Kaiser Sun, Koustuv Sinha, Leila Khalatbari, Rita Frieske, Ryan Cotterell, Zhijing Jin: ``A Taxonomy and Review of Generalization Research in NLP''. In: Nature Machine Intelligence 5 (10), p.\@ 1161--1174.
    \item Farinhas, Ant\'{o}nio, Chrysoula Zerva, \textbf{Dennis Ulmer}, Andr\'{e} F.\@T.\@ Martins. ``Non-exchangeable Conformal Risk Control''. In: Proceedings of the International Conference on Learning Representations, 2024.
    \item \textbf{Ulmer, Dennis}, Elman Mansimov, Kaixiang Lin, Justin Sun, Xibin Gao, Yi Zhang. ``Bootstrapping LLM-based Task-Oriented Dialogue Agents via Self-Talk''. In: Findings of the Association for Computational Linguistics: ACL 2024.
    \item Gubri, Martin, \textbf{Dennis Ulmer}, Hwaran Lee, Sangdoo Yun, Seong Joon Oh. ``TRAP: Targeted Random Adversarial Prompt Honeypot for Black-Box Identification''. In: Findings of the Association for Computational Linguistics: ACL 2024.
\end{enumerate}

Another published document concerns the proceedings of the UncertaiNLP workshop, in which the author was involved as an editor and workshop co-organizer.
The workshop was co-located with the European meeting of the Association for Computational Linguistics (EACL) in St.\@ Julians, Malta, in 2024:

\begin{itemize}
    \item Vázquez, Raúl, Hande Celikkanat, \textbf{Dennis Ulmer}, Jörg Tiedemann, Swabha Swayamdipta, Wilker Aziz, Barbara Plank, Joris Baan, and Marie-Catherine de Marneffe. Proceedings of the 1st Workshop on Uncertainty-Aware NLP (UncertaiNLP 2024).
\end{itemize}

All of the above publications are accompanied by open-source code, that is listed in detail in \cref{app:open-source}.
However, some of the more important open-source contributions are highlighted here:

\begin{enumerate}
    \item \href{https://github.com/Kaleidophon/nlp-uncertainty-zoo}{\texttt{nlp-uncertainty-zoo}}: A Python package implementing different methods for uncertainty quantification in sequence classificationand sequence labeling in NLP.
    \item \href{https://github.com/Kaleidophon/deep-significance}{\texttt{deep-significance}}: A Python package including many functions to simplify statistical significance testing in deep learning.
    \item \href{https://github.com/Kaleidophon/awesome-experimental-standards-deep-learning}{\texttt{awesome-experimental-standards-deep-learning}}: A pointer to useful resources in order to improve experimental standards as well as reproducibility and replicability in Deep Learning experiments.
\end{enumerate}
\renewcommand{\thefootnote}{\arabic{footnote}} 
\setcounter{footnote}{3}

\section{Structure}\label{sec:structure}

This thesis is structured as to provide a comprehensive overview over the topic of uncertainty from both a statistical and linguistic point of view.
Both perspectives are then woven together in a overview over uncertainty quantification in deep learning and natural language processing.
This part serves as a foundation for later chapters about the uncertainty in the experimental design in NLP, before concretely tackling specific problem scenarios:
Uncertainty in text classification problems, uncertainty in language generation problems and uncertainty in latter problems specifically involving the use of large language models.\\

To be more detailed, \cref{ch:background} introduces the reader to different concepts in uncertainty quantification and related literature.
It begins with a definition of uncertainty from a variety of perspectives, for instance frequentist and Bayesian statistics\index{Statistics!Frequentist}\index{Statistics!Bayesian}, linguistics\index{Linguistics}, and several popular approaches in deep learning.
In this context, we also discuss \citet{ulmer2023prior}, which surveys works related to a novel class of uncertainty quantification methods called \emph{evidential deep learning}.
In the end, this includes a discussion of the relationship of uncertainty quantification with the end-user with both a motivation in trust and communication.\\

While most of the research that makes up this thesis is focused on uncertainty in \emph{modeling} language, uncertainty also occurs in the experimental design and execution of day-to-day research.
Therefore, \cref{ch:methodology} presents an interlude on challenges with the notions of reproducibility \& replicability in deep learning, their connection to uncertainty, and the use of statistical hypothesis testing, all of which inform the methodology of later chapters.
This encompasses the published works of \citet{ulmer-etal-2022-experimental}, giving an account of ongoing discussions about experimental methods in deep learning, as well as \citet{ulmer2022deep}, introducing a package for better statistical hypothesis testing and its application to a case study with large language models.\\

In the subsequent \cref{ch:uncertainty-classification}, we tackle the problem of uncertainty in classification problems.\index{Classification}  
First we demonstrate the pitfalls of uncertainty quanitification for classification using simple ReLU networks, drawing from \citet{ulmer2020know}.
Afterwards, we discuss uncertainty quantification in the context of different classification problems specific to NLP, based on \citet{ulmer-etal-2022-exploring}.
Here we show how well exisiting methods for NLP fare on different languages and tasks, and how much that performance---including the reliability of uncertainty estimates---depends on the amount of available training data.\\

In \cref{ch:uncertainty-generation}, we move to the problem of natural language generation\index{Natural language generation}, and develop a new calibrated sampling method based on conformal prediction\index{Conformal prediction} \citep{papadopoulos2002inductive,vovk2005algorithmic}. 
In language generation, we often restrict the set of possible candidate tokens to generate to a subset of (hopefully) plausible continutations.
Using the theoretical underpinning of conformal prediction, we introduce a novel method that does so with statistical guarantees.
This chapter is based on \citet{ulmer2024non}, and we demonstrate how this novel way to construct prediction sets is theoretically sound and produces flexibel prediction sets that come with guarantees about containing plausible tokens to generate.\\

In \cref{ch:uncertainty-llms}, we discuss a new method for quantifying the confidence of LLMs\index{Large language model} originally published in \citet{ulmer2024calibrating}, that tries to circumvent many of the pratical constraints that come with 
large model sizes.
Compared to other alternatives, this method is furthermore applicable to black-box models that do not allow any internal access to model states or weights, and is comparatively lightweight to train.\\

The thesis continues with a general discussion of the overall results in \cref{ch:discussion}.
There, we answer the overarching research questions stated in \cref{sec:objective} and reflect on current research directions in the field.
Lastly, \cref{ch:conclusion} takes on a bird's-eye view by contextualizing uncertainty quantification in the current zeitgeist and discussing its relationship with contemporary policies.
In addition, the thesis comprises an appendix with theoretical results (\cref{app:theoretical-appendix}) and one with experimental details (\cref{app:empirical-appendix}) that were omitted from the main text.
Details that are necessary for an accurate reproduction of experimental results are bundled in \cref{app:reproducibility-appendix}.


\chapter{Background} 

\label{ch:background} 

\begin{tikzpicture}[remember picture,overlay]
    \node[anchor=north,inner sep=0pt] at (current page text area.north) {\includegraphics[width=\linewidth, clip=true, trim = 8cm 75cm 8cm 50cm]{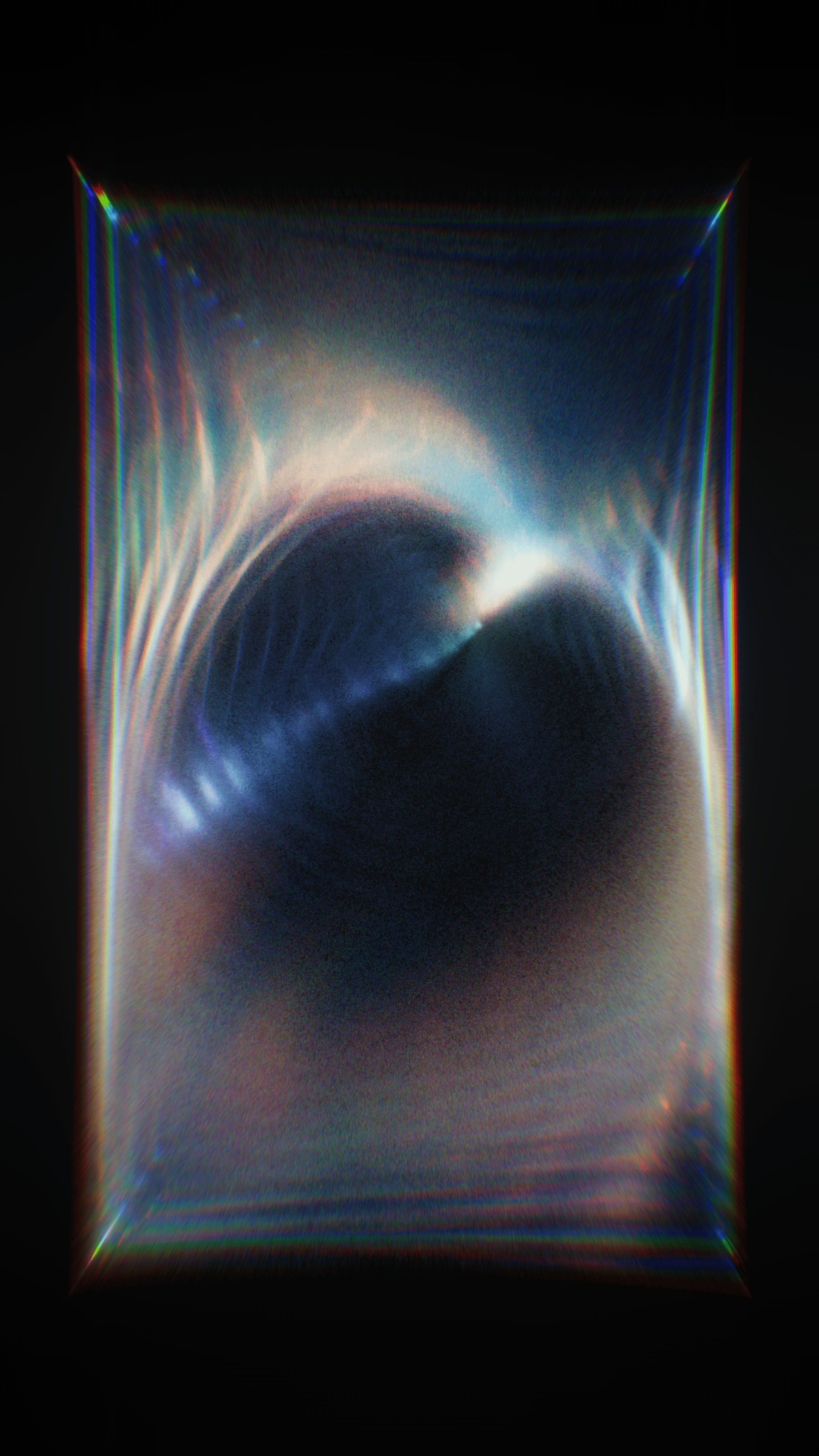}};
\end{tikzpicture}

\epigraph{
    ``\emph{Le doute n'est pas une état bien agréable, mais l'assurance est un état ridicule.}''\\[0.2cm]
    ``\emph{Doubt is not a pleasant condition, but certainty is absurd.}''
}{---Voltaire}

Uncertainty is a common occurrence in everyone's life, and thus most people have an intuitive understanding of the concept.
To define it concretely, however, can be challenging.
Colloquially, we might define uncertainty as a phenomenon or state that is filled with doubts, lack knowledge or that is simply hard to predict.
In research papers, the term uncertainty often only remains vaguely defined, either building on an intuitive definition or presupposing a certain school of thinking.\\

The aim of this chapter is to bring some clarity to the different ways uncertainty is defined, and to give an fairly comprehensive account of its applications.
This entails a journey from its origins in statistics (\cref{sec:frequentist-perspective,sec:bayesian-perspective}) and linguistics (\cref{sec:uncertainty-linguistics,sec:expressing-uncertainty}) to its implementation with neural networks, specifically in deep learning (\cref{sec:uncertainty-deep-learning}) and natural language processing (\cref{sec:uncertainty-nlp}).
In the latter contexts, this comes with a focus on \emph{modeling} uncertainty, and this is indeed also where many of the research papers in the field end. 
Therefore, an additional goal of this chapter is to not take uncertainty modeling as the ultimate goal per se, but to see beyond it and grasp the bigger picture.
As uncertainty quantification is often motivated by increasing trustworthiness and safety, we take a closer look at the relationship between uncertainty and trust in \cref{sec:uncertainty-trust}, as well as how to communicate uncertainty in \cref{sec:communicating-uncertainty}.
Furthermore, the chapter outlines diverse applications of uncertainty in \cref{sec:applications-uncertainty}.

\section{What Is Uncertainty, anyway?}\label{sec:what-is-uncertainty}


We start by defining the most central concept in this thesis: Uncertainty\index{Uncertainty}.
Since this thesis is focused on NLP, we aim to define the concept from all the perspectives modern NLP touches on.
This includes building up some basic concepts from frequentist and Bayesian statistics as well from different parts of linguistics.

\subsection{The Frequentist Perspective}\label{sec:frequentist-perspective}

\epigraph{``\emph{Statistical inference is serious business.}''}{---Bradley Efron, Robert J.\@ Tibshirani in \emph{An Introduction to the Bootstrap} \citep{tibshirani1993introduction}}

\emph{Frequentist statistics}\index{Statistics!Frequentist} in an approach to statistics that aims to make inferences and draw conclusions from sampled data, alone. 
The term is based on the fact that probabilities are seen as equivalent to the observed frequencies of events in the data, assuming (potentially infinitely) many repetitions of an experiment \citep{willink2011disentangling}. 
Let us reason about the popular example of a coin flip here to illustrate this notion. 
We are given a coin and would like to estimate the probability of heads, which we define as the parameter of interest to estimate and will denote by $\theta$. 
We do not know whether the coin is fair, so we flip it a number of times and count the heads and tails to estimate this probability. 
We obtain the following five coin flips:\\

\begin{center}
    \tails \hspace{0.2cm} \tails \hspace{0.2cm} \heads \hspace{0.2cm} \heads \hspace{0.2cm} \tails \hspace{0.2cm}
\end{center}

Based on this experiment, we then estimate the probability of heads as $\hat{\theta} = \frac{\# \text{heads}}{\# \text{coin flips}} = \frac{2}{5} = 0.4$.
However, how can we be sure that this reflects the actual probability of heads?
We thus repeat the experiment three more times, and obtain:

\begin{center}
    \tails \hspace{0.2cm} \tails \hspace{0.2cm} \heads \hspace{0.2cm} \heads \hspace{0.2cm} \tails \hspace{0.2cm} $\rightarrow \hat{\theta}_2 = \frac{2}{5} = 0.4$
\end{center}
\begin{center}
    \tails \hspace{0.2cm} \tails \hspace{0.2cm} \tails \hspace{0.2cm} \heads \hspace{0.2cm} \heads \hspace{0.2cm} $\rightarrow \hat{\theta}_3 = \frac{2}{5} = 0.4$
\end{center}
\begin{center}
    \heads \hspace{0.2cm} \tails \hspace{0.2cm} \tails \hspace{0.2cm} \heads \hspace{0.2cm} \heads \hspace{0.2cm} $\rightarrow \hat{\theta}_4 = \frac{3}{5} = 0.6$
\end{center}

As we gather more and more samples and take their average, we will provably converge to the true value of $\theta$ in the limit due to the law of large numbers \citep{dekking2005modern}.
But in light of a limited number of samples like above, how can we quantify the uncertainty of our estimate?

\paragraph{Confidence Intervals.}\index{Confidence interval} One common approach to compute some frequentist uncertainty estimate is the use of \emph{confidence intervals} \citep{neyman1937outline}. 
Confidence intervals try to capture a range of values for the parameter $\theta$ such that, if we were to repeat our experiment, the computed confidence intervals would cover the true value with some probability (e.g.\@ $95 \%$).
We can do this by assuming that any estimates $\hat{\theta}$ for $\theta$ are independently and normally distributed.
The normality assumption holds in this case since the central limit theorem\index{Central limit theorem} applies, stating that if we were to repeat this experiment over and over, the sample mean (which is our estimate $\hat{\theta}$) will be normally distributed.
Possessing the knowledge about the estimate being normally distributed lets us define the confidence intervals. 
The procedure is as follows: 
We know that our samples are normally distributed according to some specific (constant but unknown) mean $\theta$ and standard deviation.
In our example, $\theta$ would correspond to the true probability of heads that we are interested in.
For convenience, we now standardize this distribution.
We can achieve this by simply subtracting the mean $\theta$ from the mean of our estimates $\bar{\theta}$ and dividing by an estimate of the standard deviation denoted as $s$, which we obtain as:

\begin{align}\label{eq:confidence-intervals}
    \bar{\theta} &  = \frac{1}{N}\sum_{i=1}^N \hat{\theta}_i \\ 
    s^2 & = \frac{1}{N - 1}\sum_{i=1}^N (\hat{\theta}_i - \bar{\theta})^2.
\end{align}

According to the central limit theorem\index{Central limit theorem}, the estimate of the standard deviation improves in accuracy by a factor of $\frac{1}{\sqrt{N}}$, leading to the standard error $s / \sqrt{N}$\index{Standard error}, and we thus arrive at:

\begin{equation}\label{eq:t-statistic}
    t = \frac{\bar{\theta} - \theta}{s / \sqrt{N}}.
\end{equation}

Now, we would like to know how the statistic in \cref{eq:t-statistic} is distributed in order to identify confidence intervals for $\theta$.
We already know that $\bar{\theta}$ is distributed according to a Normal distribution, and assuming that the sample variance $s^2$ is distributed according to a $\chi^2$ distribution\index{$\chi^2$ distribution} with $N$ degrees of freedom, we obtain a Student's-$t$ distribution\index{Student's-$t$ distribution} with $N-1$ degrees of freedom.
We can now determine the bounds such that $p(-c \le t \le c) = 0.95$.
Using some intermediate steps, we find that 

\begin{align}\label{eq:confidence-intervals-continued}
    & p(-c \le t \le c) \\
    = & p\Big(-c \le \frac{\bar{\theta} - \theta}{s / \sqrt{N}} \le c\Big)\\
    = & p\Big(-\frac{cs}{\sqrt{N}} \le \bar{\theta} - \theta \le \frac{cs}{\sqrt{N}}\Big)\\
    = & p\Big(-\frac{cs}{\sqrt{N}} - \bar{\theta} \le - \theta \le \frac{cs}{\sqrt{N}} - \bar{\theta}\Big)\\
    = & p\Big( \bar{\theta}-\frac{cs}{\sqrt{N}} \le \theta \le \bar{\theta} + \frac{cs}{\sqrt{N}}\Big).
\end{align}

Therefore, we know that our unknown mean $\theta$ of the distribution will be contained within these bounds. 
Lastly, we need to choose $c$ such that the proposed interval corresponds to $95 \%$ (or some other desired amount) of the total probability density. 
Since the shape of the Student's-$t$ distribution\index{Student's-$t$ distribution} is known, we can choose $c$ to correspond to the $97.5$-th percentile (leaving $2.5 \%$ of the total density to either side).
This number can be easily computed through the distributions's inverse cumulative distribution function\index{Cumulative distribution function!Inverse}.\footnote{
    Using a software library such as \texttt{scipy}, we can for instance compute \texttt{scipy.stats.t.ppf(0.975, df=3).}
}
In our example above, we have $N = 4$, $\bar{\theta} = \frac{1}{4}(\frac{2}{5} + \frac{2}{5} + \frac{2}{5} + \frac{3}{5}) = 0.45$ 
and $s^2 = \frac{1}{3}(0.05^2 + 0.05^2 + 0.05^2 + 0.15^2) \approx 0.0111$. 
With $c \approx 3.182$, this gives us a confidence interval of $[0.31, 0.59]$.
This interval can be interpreted in the following way:
If we were to repeat this experiment, say, $100$ times, the resulting confidence intervals would contain the true value $95$ times.
For the samples shown above a fair coin was used, and thus the confidence interval contains the true value of $0.5$.
It should be noted here that confidence intervals are sometimes available in closed form, for instance for the normal distribution\index{Normal distribution}.\footnote{
    The reasoning goes as follows: By Cochran's theorem\index{Cochran's theorem}, if the distribution is normal, the sample mean and variance are independent \citep{cochran1934distribution}.
    But the reverse is also true, so with an independent sample mean and variance we can assume that the underlying distribution is normal. 
    We can again construct the $t$-statistic in \cref{eq:t-statistic} to obtain confidence intervals e which are 
    $\theta \in [\bar{\theta} - t_{n-1, 1 - \alpha/2}\frac{s}{\sqrt{N}}, \bar{\theta} + t_{n-1, 1 - \alpha/2}\frac{s}{\sqrt{N}}]$, 
    where $t_{N-1, 1 - \alpha/2}$ stands for the $1 - \alpha / 2$-th quantile of a $t$-distribution with $N-1$ degrees of freedom and a $1 - \alpha$ confidence level (\citealp{krishnamoorthy2006handbook}, p.\@ 130).
}
This simplistic example also assumed confidence intervals to be both symmetrical and two-sided (i.e.\@ having a lower and upper bound).
For a more thorough and comprehensive treatment of confidence intervals we refer to other works such as \citet{zech2002frequentist, smithson2003confidence}.

\paragraph{Bootstrapping and Jackknife.} In the previous example, we were able to successfully obtain a confidence interval for the probability of heads. 
However, this required us to repeat the coin flipping experiment multiple times. 
In the case of flipping a coin, this is rather straightforward---nevertheless, we might also interested in quantifying the uncertainty about the estimate in cases where obtaining a new sample is difficult or expensive (e.g.\@ when an experiment require expensive computational hardware).
A tool for these cases is given in the form of \emph{bootstrapping}\index{Bootstrap} \citep{efron1992bootstrap, tibshirani1993introduction}:
Instead of collecting new data, we can instead perform inferences from our existing sample through re-sampling:
We sample randomly from our initial set of coin flips with replacement,\footnote{Sampling with replacement implies that our new samples might contain duplicates.} and obtain a number of new (pseudo-)samples. 
We then use these to estimate the confidence intervals of our estimate in a similar fashion to the confidence intervals in the previous paragraph:
Drawing $N=10$ re-samples, using the same procedure as in \cref{eq:confidence-intervals,eq:confidence-intervals-continued}, we obtain a confidence interval of $[0.26, 0.62]$. 
Nevertheless, there are known problems with the bootstrap:
When our sample size small (just five coin flips), the sample might not be representative, and bootstrap samples can amplify any bias present in the sample.
Here, having two heads and three tails differs slightly from the actual probability of $0.5$, which is then carried over into the bootstrap samples. 
Another, similar estimator is the \emph{jackknife}\index{Jackknife} \citep{quenouille1949approximate, tukey1958bias}, where we do not resample, but instead create new samples by leaving out one observation at a time.
Therefore, the original set of coin flips would yield the following new pseudo-samples:\\

\begin{center}
    \begin{tabular}{ccc}
        & \tails \hspace{0.2cm} \heads \hspace{0.2cm} \heads \hspace{0.2cm} \tails \hspace{0.2cm} & $\rightarrow \hat{\theta}_{-1} = \frac{2}{4} = 0.5$\\[0.4cm]
        & \tails \hspace{0.2cm} \heads \hspace{0.2cm} \heads \hspace{0.2cm} \tails \hspace{0.2cm} & $\rightarrow \hat{\theta}_{-2} = \frac{2}{4} = 0.5$\\[0.4cm]
        & \tails \hspace{0.2cm} \tails \hspace{0.2cm} \heads \hspace{0.2cm} \tails \hspace{0.2cm} & $\rightarrow \hat{\theta}_{-3} = \frac{1}{4} = 0.25$\\[0.4cm]
        & \tails \hspace{0.2cm} \tails \hspace{0.2cm} \heads \hspace{0.2cm} \tails \hspace{0.2cm} & $\rightarrow \hat{\theta}_{-4} = \frac{1}{4} = 0.25$\\[0.4cm]
        & \tails \hspace{0.2cm} \tails \hspace{0.2cm} \heads \hspace{0.2cm} \heads \hspace{0.2cm} & $\rightarrow \hat{\theta}_{-5} = \frac{2}{4} = 0.5$\\[0.4cm]
    \end{tabular}
\end{center}

We again repeat our procedure in \cref{eq:confidence-intervals,eq:confidence-intervals-continued} to obtain the confidence interval of $[0.25, 0.55]$.
In order to make a prediction about a new coin flip, in all of three cases we would simply declare head with a probability of the estimated $\hat{\theta}$ or hedge our bets using the range of values contained in the confidence interval.

\paragraph{Likelihood Functions.} 
A useful tool to evaluate the fit of a parameter estimate for the data are \emph{likelihood functions}\index{Likelihood}.
The likelihood $p(\mathbb{D}\mid\theta)$ quantifies how well the choice of a value for $\theta$ ``explains'' the observations $\mathbb{D}$, meaning how likely the value is to have generated the data or how consistent the data are with the chosen value.
Accordingly, a high likelihood expresses that a value of $\theta$ is consistent with the observations, while a low likelihood suggest that $\theta$ is unlikely to have generated the data.
For the coin flipping example, we can choose a $\emph{Bernoulli}$ likelihood\index{Bernoulli distribution}:

 \begin{equation}\label{eq:bernoulli-likelihood}
     \text{Bernoulli}(x\mid\theta) = \theta^x(1 - \theta)^{(1 - x)}.
 \end{equation}

Given the probability of heads $\theta$, it assigns a probability of an outcome $x$, i.e.\@ heads or tails.
In line with the intuition that more suitable values of $\theta$ assign higher likelihoods to the data, a quick derivation reveals that the mean $\hat{\theta}$ we used is actually 
the parameter that maximizes the likelihood of our sample:

\begin{align}
        p(\mathbb{D} \mid \theta) & = \prod_{i=1}^N p(x_i \mid \theta) = \prod_{i=1}^N \theta^x(1 - \theta)^{(1 - x)} \label{eq:likelihood-bernoulli} \\
        \log p(\mathbb{D} \mid \theta) & = \sum_{i=1}^N x_i\log(\theta) + (1 - x_i)\log(1 - \theta) \\
        \frac{\partial}{\partial \theta} \log p(\mathbb{D} \mid \theta) & = \sum_{i=1}^N \frac{x_i}{\theta} - \frac{1 - x_i}{1 - \theta} \overset{!}{=} 0 \\
        0 & = \sum_{i=1}^N \frac{x_i(1-\theta)}{\theta(1-\theta)}- \frac{(1 - x_i)\theta}{\theta(1 - \theta)} \\
         & =\sum_{i=1} ^Nx_i - \cancel{x_i\theta} - \theta + \cancel{x_i\theta} \\
        \theta_\text{MLE} & = \frac{1}{N} \sum_{i=1}x_i. \label{eq:bernoulli-mle}
\end{align}

Here, we used the i.i.d.\@ assumption (identically, independently distributed) to argue that since observations $x_i$ are independent, we can  
factor the joint distribution $p(\mathbb{D}\mid \theta) \equiv p(x_1, \ldots, x_N\mid \theta)$ into a product of its individual likelihoods.
Then, we transfer the computation into log-space for convenience, and identify the estimate by taking the derivative w.r.t.\@ $\theta$, setting to zero, and solving for it.
This is referred to as the \emph{maximum likelihood estimate} (MLE)\index{Maximum likelihood estimate}.

\paragraph{Recap.} The above methods have illustrated the viewpoint of frequentist statistics:
Parameter estimates are derived from the actually observed data, and our uncertainty about the estimates can expressed through confidence intervals, in which we expect our true value to fall.
These can be obtained by relating estimates of the parameter for instance to the Student's-$t$ distribution\index{Student's-$t$ distribution} by exploiting the central limit theorem\index{Central limit theorem}. 
Furthermore, other estimates can be obtained by collecting more data or through procedures such as the bootstrap\index{Bootstrap} or the jackknife\index{Jackknife}.
In our case, we knew that the used coin was fair, and that the initial sample simply ended up not representative due to using an uneven number of observations.
In a similar but more realistic scenario, we might not know anything about the properties of the coin (or the phenomenon of interest), but might still suspect, at least without any other information available, that it is fair.
The frequentist framework does not give us any means to incorporate this belief into our reasoning, but the Bayesian view presented in the next section does.

\subsection{The Bayesian Perspective}\label{sec:bayesian-perspective}

\epigraph{``\emph{There is a valid defence of using non-Bayesian methods, namely incompetence.}''}{---John Skilling in \emph{Fundamentals of MaxEnt in Data Analysis} \citep{skilling1990fundamentals}.}

Bayesian statistics\index{Statistics!Bayesian} delineates itself from frequentist statistics by seeing probability itself as more than just the mere relative frequency of an event, and instead as the degree of belief in the occurrence of an event.\footnote{
    Even though there are also subtle nuances to this definition, see for instance \citet{good197146656}.
}
This difference has caused (and is still causing) ideological chasms among statisticians, as illustrated by the quote above.
The name of Bayesian statistics is derived from Thomas Bayes, an English presbytarian minister in the 18th century who first formulated the eponymous \emph{Bayes' theorem}\index{Bayes' theorem}.
It should be noted however that Bayes only formulated his theory in a very specific setting,\footnote{Namely, using a uniform prior. See \cref{eq:bayes-theorem} and onward.} and that a general version of Bayesian statistics was instead pioneered by Pierre-Simon Laplace \citep{mcgrayne2011theory, leonard2014personal}.
The theorem can be formulated as follows: 
Given a set of observations $\mathbb{D}$ and a parameter of interest $\theta$, we can express the probability of the parameter given the observational data as 

\begin{equation}\label{eq:bayes-theorem}
    p(\theta\mid\mathbb{D}) = \frac{p(\mathbb{D}\mid\theta)p(\theta)}{p(\mathbb{D})},
\end{equation}
where the different parts of the equation are commonly referred to as the \emph{posterior} $p(\theta\mid\mathbb{D})$\index{Posterior distribution}, the likelihood $p(\mathbb{D}\mid\theta)$\index{Likelihood}, the prior $p(\theta)$\index{Prior distribution}, and the evidence $p(\mathbb{D})$\index{Evidence}.
We already discussed likelihoods in the previous section.
The prior $p(\theta)$ is a probability distribution over possible values of $\theta$, and thus allows us to express our prior belief by attributing higher probability to values of $\theta$ we deem more likely.
This also implies a philosophical difference with frequentist statistics\index{Statistics!Frequentist}:
While $\theta$ was treated as an unknown constant before, it is now seen as another random variable.
The evidence $p(\mathbb{D})$\index{Evidence} encodes the general probability of the observed data under any value of $\theta$. 
This somewhat hidden interpretation becomes more clear when rewriting the term:

\begin{equation}\label{eq:evidence}
    p(\mathbb{D}) = \int p(\mathbb{D},\theta)\dd\theta = \int p(\mathbb{D}\mid\theta)p(\theta) \dd\theta.
\end{equation}

We can therefore interpret the evidence\index{Evidence} as the likelihood\index{Likelihood} of the data averaged over all possible parameter values of $\theta$, weighed by their prior probabilities.
Lastly, the posterior $p(\theta \mid \mathbb{D})$\index{Posterior distribution} describes a probability distribution over values of $\theta$ given our observations.
We can think of the posterior as starting with our prior belief, using the data to update it and arriving at a final distribution that takes both of these into account.
This has several advantages: 
We can now choose to encode our suspicions about the value of the target parameter into the prior\index{Prior distribution}. 
But as we will see, obtaining more and more data points results in outweighing the prior belief, completely relying on the observations in the limit.

\begin{figure}
    \begin{subfigure}[t]{0.48\textwidth}
        \centering
        \includegraphics[width=0.985\textwidth]{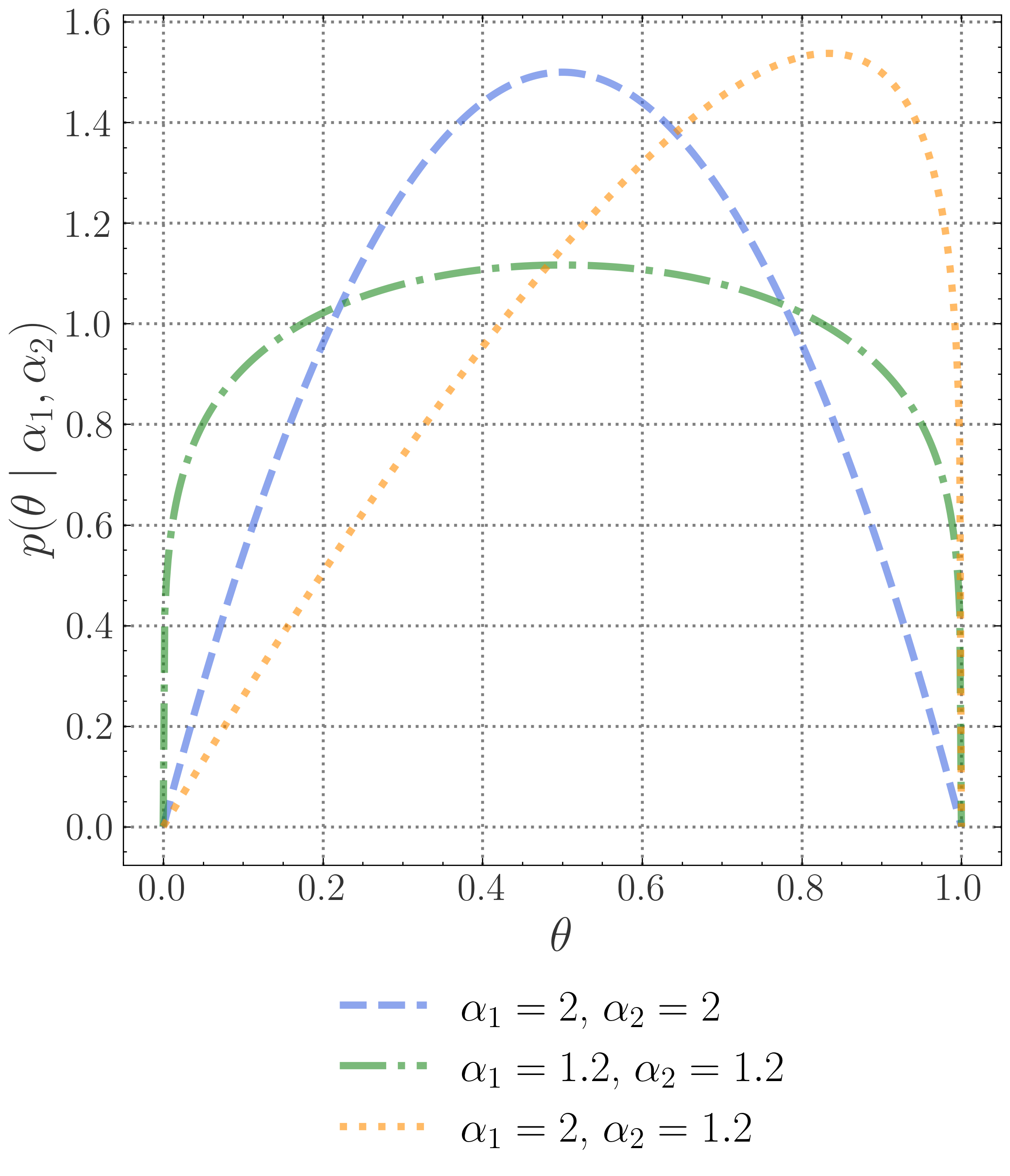}
        \caption{Beta priors.}\label{subfig:beta-priors}
    \end{subfigure}
    \hfill
    \begin{subfigure}[t]{0.48\textwidth}
        \centering
        \includegraphics[width=0.985\textwidth]{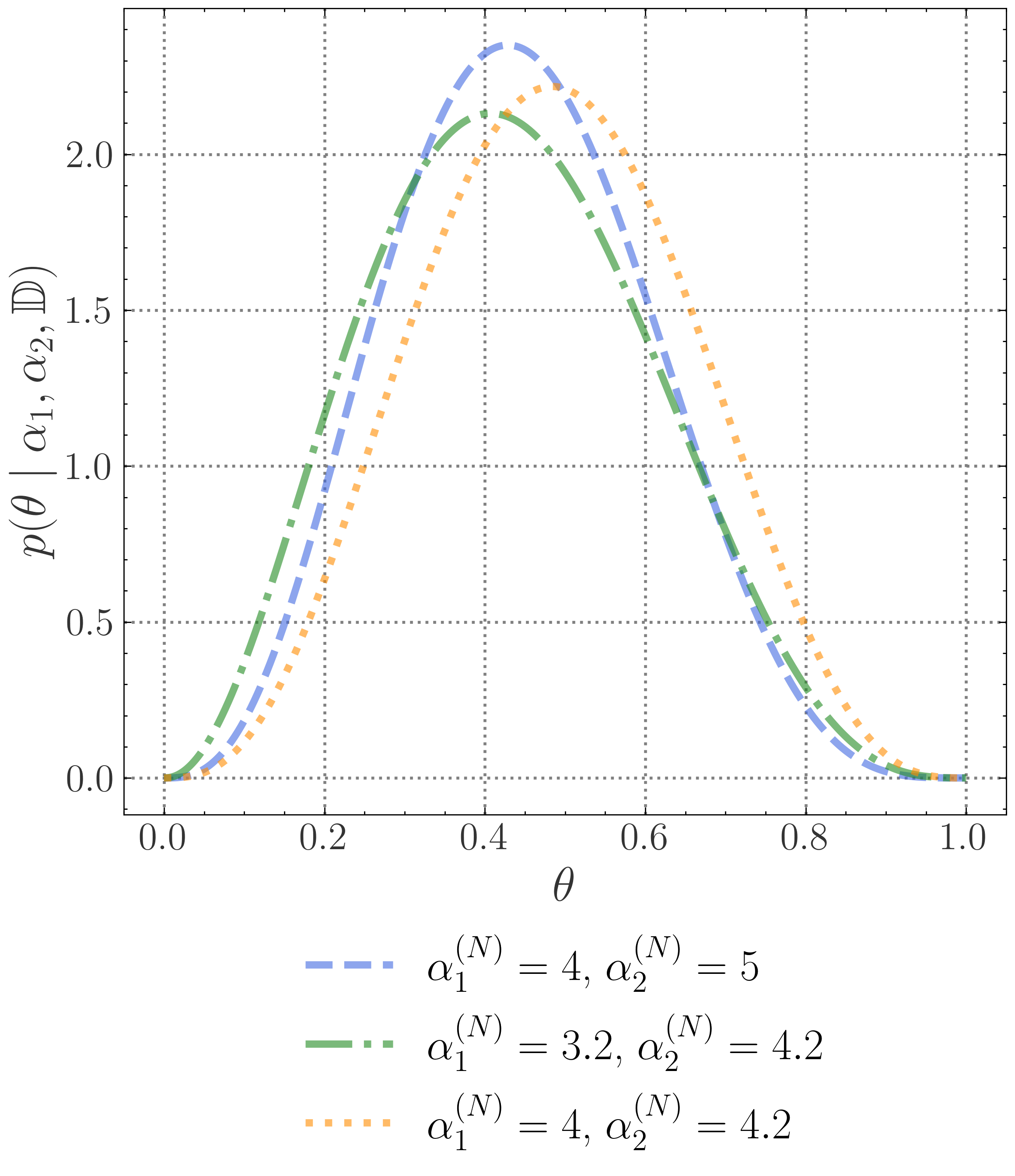}
        \caption{Beta posteriors.}\label{subfig:beta-posteriors}
    \end{subfigure}
    \caption[Different Beta prior and posterior shapes.]{Plots of different choices of (a) Beta prior distributions and their resulting (b) posterior distributions after observing our initial set of coin flips.
    Juxtaposing the two plots illustrates how the choice of prior belief can influence the shape of the resulting posterior distribution.}\label{fig:beta-priors-posteriors}
\end{figure}

\paragraph{Coin Flipping Redux.} We now illustrate these concepts using the coin flipping example from \cref{sec:frequentist-perspective}, showing how uncertainty is modeled from the Bayesian perspective.
In order to do so, we first have to make some design choices, i.e.\@ the choice of likelihood and prior function as well as prior parameters.
We again use the Bernoulli\index{Bernoulli distribution} likelihood from the previous section, and now would like to define a prior over $\theta$.
A good choice for a prior for the Bernoulli distribution is the \emph{Beta} distribution\index{Beta distribution}:

 \begin{align}\label{eq:beta-prior}
    \text{Beta}(\theta; \alpha_1, \alpha_2) & = \frac{1}{\text{B}(\alpha_1, \alpha_2)} \theta^{\alpha_1 - 1}(1 - \theta)^{\alpha_2 - 1} \\
     \text{B}(\alpha_1, \alpha_2) & = \frac{\Gamma(\alpha_1)\Gamma(\alpha_2)}{\Gamma(\alpha_1 + \alpha_2)},
\end{align}

\noindent with $\Gamma(\cdot)$ denoting the Gamma function\index{Gamma function}, a generalization of the factorial to the real numbers. 
The distribution has its support on $[0, 1]$ and possesses two shape parameters $\alpha_1, \alpha_2 \in \mathbb{R}^+$, which we can use to encode our prior belief about $\theta$\index{Prior distribution}.
A few examples for the resulting distribution are shown in \cref{subfig:beta-priors}. 
Choosing the Beta distribution\index{Beta distribution} as a prior comes in handy when analytically deriving the posterior distribution\index{Posterior distribution}, since it is \emph{conjugate} to the likelihood\index{Likelihood}.
Conjugacy\index{Conjugacy} here means that using a Beta prior together with a Bernoulli likelihood as in \cref{eq:likelihood-bernoulli}, the posterior has the form of a Beta distribution\index{Beta distribution}.\footnote{
    Conjugate priors are available for distributions that can be generalized to a particular form which is referred to as \emph{exponential families}\index{Exponential families}, including popular distributions such as the Normal\index{Normal distribution}, Poisson\index{Poisson distribution}, Bernoulli\index{Bernoulli distribution}, and categorical distribution\index{categorical distribution} and more \citep{bishop2006pattern, gelman2021bayesian, efron2022exponential}.
}
Bayes' rule\index{Bayes' theorem} contains the unwieldy evidence term\index{Evidence}, which we established in \cref{eq:evidence} can in some cases be evaluated analytically using an integral over parameters.
However, we can notice that the evidence $p(\mathbb{D})$ does not depend on the parameters directly, and only scales the posterior $p(\theta\mid\mathbb{D})$\index{Posterior distribution} by a constant.
As such, we can declare the posterior to be proportional to the product of the likelihood\ and prior:

\begin{equation}
    p(\theta\mid\mathbb{D}) = \frac{p(\mathbb{D}\mid\theta)p(\theta)}{p(\mathbb{D})} \propto p(\mathbb{D}\mid\theta)p(\theta) = \prod_{i=1}^N p(x_i\mid\theta)p(\theta), 
\end{equation}

\noindent which simplifies solving for the posterior parameters.
We now substitute the expressions in \cref{eq:beta-prior,eq:bernoulli-likelihood} into Bayes' rule in \cref{eq:bayes-theorem} and continue in log-space for convenience:
\begin{align}\label{eq:coin-flip-posterior}
    \log p(\theta\mid\mathbb{D}) \propto & \sum_{i=1}^N \log p(x_i\mid\theta) + \log p(\theta) \\
    = & \sum_{i=1}^N x_i \log \theta + (1 - x_i)\log (1 - \theta) \nonumber \\
    &  + (\alpha_1 - 1)\log \theta + (\alpha_2 - 1)\log (1 - \theta) \\
   = & \Big(\alpha_1 + \sum_{i=1}^N x_i - 1\Big)\log \theta \nonumber \\
   & + \Big(\alpha_2 + N - \sum_{i=1}^N x_i - 1\Big) \log(1 - \theta),
\end{align}

\noindent where we can see that in the end---after dropping the log-Beta function\index{Beta function} as it is just a constant---we obtain the form of a Beta distribution\index{Beta distribution}, but this time with 
the new shape parameters $\alpha_1^{(N)} = \alpha_1 + \sum_{i=1}^N x_i$ and $\alpha_2^{(N)} = \alpha_2 + N - \sum_{i=1}^N x_i$.
Compared to the prior parameter values, they now contain information about the observations that we have made.
We can use these new parameters to visualize our posterior for our initial set of coin flips and our initial choices of priors in \cref{subfig:beta-posteriors}.
Similar to the frequentist confidence intervals\index{Confidence interval} of the previous sections, the uncertainty about the true value of $\theta$ is encoded in the spread of the posterior distribution\index{Posterior distribution}.
As we gather more observations, we expect the posterior to become more and more narrow around one (or few) values of $\theta$.
Similarly to the maximum likelihood estimate\index{Maximum likelihood estimate} in \cref{eq:bernoulli-mle} that helps us determine the parameter value which is most likely to have generated our observations, we can derive a similar quantity in the Bayesian setting.
This is referred to as the posteriori estimate (MAP)\index{Maximum a posterior estimate}, and can be interpreted as the most likely value of $\theta$ given the data and a choice of prior.
We can derive the MAP using the posterior in \cref{eq:coin-flip-posterior} and solving for $\theta$:

\begin{align}\label{eq:coin-flip-map}
    \frac{\partial}{\partial \theta} \log p(\theta\mid\mathbb{D}) & \overset{!}{=} 0 \\
    (1 - \theta)\Big(\alpha_1 + \sum_{i=1}^N x_i - 1\Big) & = \theta \Big(\alpha_2 + N - \sum_{i=1}^N x_i - 1\Big) \\
    \hat{\theta}_\text{MAP} & = \frac{\alpha_1 + \sum_{i=1}^N x_i - 1}{\alpha_1 + \alpha_2 + N - 2}.
\end{align}

\begin{figure}[htb]
    \centering
    \includegraphics[width=0.985\textwidth]{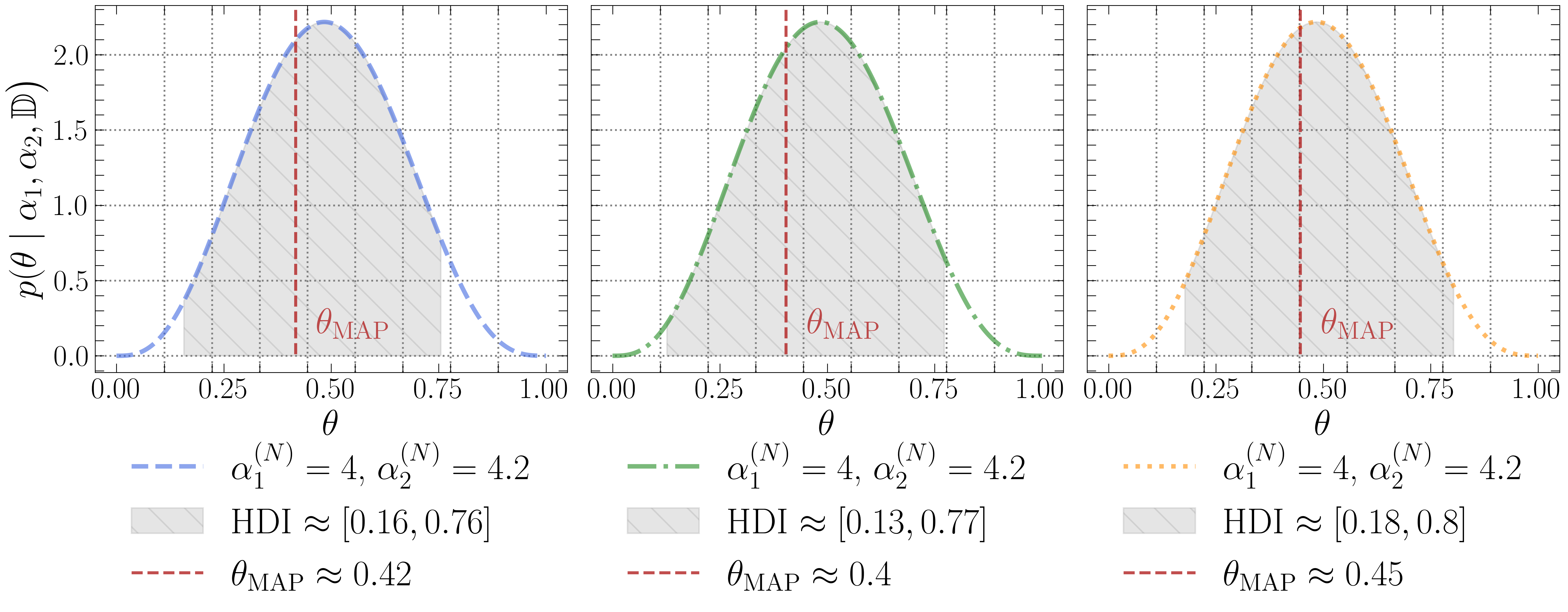}
    \caption[Highest density intervals and maximum a posteriori estimates for different Beta posteriors.]{Highest density intervals (gray regions) and maximum a posteriori estimates (red vertical lines) for different Beta posteriors.}\label{fig:hdis-and-maps}
\end{figure}

\paragraph{Highest Density Intervals.} One way to now quantify the uncertainty about our estimate for $\theta$ is to create the Bayesian counterpart of confidence intervals\index{Confidence interval}: 
The \emph{highest density interval} (HDI;\@ also referred to as the \emph{credible interval})\index{Highest density interval}. 
The HDI describes the ranges of values of the posterior distribution that covers $95 \%$ (or some other number) of the total density.
Thus, our estimate has a posterior probability of $95 \%$ to fall within this interval. 
For the prior and posterior distributions in \cref{fig:beta-priors-posteriors}, we obtain $\hat{\theta}_1 \approx 0.44$, $\text{HDI}_1 \approx [0.16, 0.76]$, $\hat{\theta}_2 \approx 0.43$, $\text{HDI}_2 \approx [0.13, 0.77]$, and $\hat{\theta}_3 \approx 0.49$, $\text{HDI}_3 \approx [0.18, 0.80]$, with the HDIs and MAP estimates shown in \cref{fig:hdis-and-maps}.
Since the second prior places less belief on a value of $\theta = 0.5$, the slightly skewed initial sample of coins $\hat{\theta} = 0.4$ shifts the posterior estimate and HDI slightly towards the left.
In the third case, our prior belief is highly biased towards higher values of $\theta$, which is also reflected in the obtained posterior estimate, the MAP\index{Maximum a posterior estimate} estimate $\theta_\text{MAP}$ and its HDI.
However, confidence intervals from \cref{sec:frequentist-perspective} and the HDIs have very different interpretations, which echo the differences in frequentist and Bayesian thinking:
The confidence intervals\index{Confidence interval} imply that, if we were to repeat our experiment $100$ times, the true value for $\theta$ would be covered by the CIs $95$ out of $100$ times.
In contrast, the HDI\index{Highest density interval} draws a conclusion about the range of values be believe the true parameter value to lie in, based on our prior belief updated using our actual observations.

\paragraph{Predictive Uncertainty.} \index{Uncertainty!Predictive}
So far, we have discussed the uncertainty in our parameter estimate, but Bayesian statistics\index{Statistics!Bayesian} also provides a useful tool to reason about new observations: Predictive distributions\index{Predictive distribution}.
Let us assume we would like to make a prediction about the observation $x^\prime$ stemming from a new coin flip.
We can write this probability as follows:

\begin{equation}\label{eq:predictive-prior}
    p(x^\prime) = \int_\Theta p(x^\prime\mid\theta)p(\theta)\dd\theta.
\end{equation}

This is referred to as the \emph{prior predictive distribution}\index{Predictive distribution!Prior}, which gives us an instrument to reason about the outcome using the specified prior alone, disregarding any observations.
One way to interpret this distribution is as a weighted aggregate of predictions for $x^\prime$ using different values of $\theta$, which are weighed according to our prior belief.
In the case of the Bernoulli and Beta distribution in the coin flip sample, this distribution has an analytical form:

\begin{align}\label{eq:predictive-prior-coin-flip}
    P(x^\prime) & = \int_\Theta P(x^\prime\mid\theta)p(\theta)\dd\theta \\
    & = \int_0^1 \theta^{x^\prime}(1 - \theta)^{(1 - x^\prime)}\frac{1}{\text{B}(\alpha_1, \alpha_2)} \theta^{\alpha_1 - 1}(1 - \theta)^{\alpha_2 - 1}\dd\theta \\
    & = \frac{1}{\text{B}(\alpha_1, \alpha_2)} \int_0^1 \theta^{\alpha_1 + x^\prime - 1}(1 - \theta)^{(\alpha_2 - x^\prime)}\dd\theta\\
    & = \frac{\text{B}(\alpha_1 + x^\prime, \alpha_2 - x^\prime + 1)}{\text{B}(\alpha_1, \alpha_2)},
\end{align}

\noindent where the last step used the fact the Beta function\index{Beta function} can be expressed as $\text{B}(\alpha_1, \alpha_2) = \int_0^1 x^{\alpha - 1}(1 - x)^{\alpha_2 - 1} \dd x$ (see \cref{app:relationship-beta-gamma} for more details).
While it is useful to check whether a chosen prior is suitable for a given task, the prior predictive does not take any observations into account yet, so we would usually consider a predictive distribution given some available data.
This is the purpose of the \emph{posterior predictive distribution}\index{Predictive distribution!Posterior}, defined as 

\begin{equation}\label{eq:predictive-posterior}
    p(x^\prime \mid \mathbb{D}) = \int_\Theta p(x^\prime \mid \theta)p(\theta \mid\mathbb{D})\ddd\theta.
\end{equation}

Again, we arrive at a prediction by ``averaging'' predictions made using different values of $\theta$.
Since not all values of $\theta$ are equally plausible given our data, they are furthermore weighted by their probability under the posterior $p(\theta \mid \mathbb{D})$\index{Posterior distribution}.
In a frequentist analysis, we only consider a single point estimation of $\hat{\theta}$ instead of a distribution.
In terms of \cref{eq:predictive-posterior}, this can be expressed with the help of a Dirac delta function\index{Dirac delta function}:

\begin{equation}\label{eq:predictive-posterior-frequentist}
    p(x^\prime \mid \mathbb{D}) \approx \int_\Theta p(x^\prime \mid \theta)\delta(\theta - \hat{\theta})\dd\theta = p(x^\prime \mid \hat{\theta}),
\end{equation}

\noindent where we recover using only a single estimate $\hat{\theta}$ for our prediction.
Back to the coin flip example, we can apply a similar argument as in \cref{eq:predictive-prior-coin-flip} with the posterior instead of the prior to arrive at 

\begin{equation}\label{eq:predictive-posterior-coin-flip}
    P(x^\prime \mid \mathbb{D}) = \frac{\text{B}(x^\prime + \alpha_1^{(N)}, 1 - x^{\prime} + \alpha_2^{(N)})}{\text{B}(\alpha_1^{(N)}, \alpha_2^{(N)})}.
\end{equation}

Interestingly, we can interpret the two terms in the right-hand side of \cref{eq:predictive-posterior} as two different sources of uncertainty:
The aforementioned uncertainty about the true value of $\theta$ given observed data is encoded in $p(\theta \mid\mathbb{D})$, and the uncertainty about $x^\prime$ given a fixed parameter value in $p(x^\prime \mid \theta)$.
This interpretation gives rise to the distinction of \emph{data} (or \emph{aleatoric})\index{Uncertainty!Aleatoric} uncertainty and \emph{model} (or \emph{epistemic}) uncertainty\index{Uncertainty!Epistemic}.
The former usually refers to \emph{irreducible} uncertainty that is inherent to the phenomenon we would like to model, like inherent ambiguity or unavoidable noise, and refers to $p(x^\prime \mid \theta)$.
The latter describes our uncertainty about the correct model parameters and resides in $p(\theta \mid\mathbb{D})$.\footnote{Here, this categorization is approached from a general standpoint.
We will discuss how these notions of uncertainty materialize in a language context in \cref{sec:uncertainty-nlp} and point out some problems and nuances.}
The more data we gather, the more we assume the posterior to be concentrated on only the most plausible parameter values, and thus the uncertainty is reduced.
In the frequentist approach, tools like confidence intervals\index{Confidence interval} can only tell us about the total uncertainty of our estimate. 
In the Bayesian approach however, these different notions of uncertainty are represented by different distributions.
These considerations are the basis for Bayesian deep learning methods, which we will discuss more in \cref{sec:bayesian-neural-networks}.

\paragraph{Recap.} We have seen in this section how Bayesian statistics\index{Statistics!Bayesian} takes a very different approach to uncertainty than frequentist statistics\index{Statistics!Frequentist}: 
In frequentist statistics, probabilities are seen as relative frequencies of an event as we repeat an experiment. 
In Bayesian statistics, this interpretation is abandoned in favor of an viewpoint that sees probabilities as the degree of belief in an event, and parameters of interest becoming random variables instead of unobserved constants.
It allows us to specify a prior belief which is updated using observations, and which diminishes in importance as we encounter more and more data.
Furthermore, we can use predictive distributions to reason about unseen outcomes.
In the posterior predictive distribution, we can also distinguish two kinds of uncertainty:
Irreducible data uncertainty and model uncertainty\index{Uncertainty!Aleatoric}\index{Uncertainty!Epistemic}, reducible by obtaining more data.\\

So far we have only discussed view of uncertainty from the perspective of statistics, defining models that explain observations and make new predictions.
Despite their usefulness, it is no obvious how to apply these statistical models to phenomena as complex as human language, which we turn to next.

\subsection{The Linguistic Perspective: Underspecification, Ambiguity \& Vagueness}\label{sec:uncertainty-linguistics}

Linguistics\index{Linguistics} can be categorized into multiple sub-disciplines that are concerned with different aspects of human language \citep{akmajian2017linguistics}.
This thesis focuses on written language, which is why we will not discuss any uncertainty in e.g.\@ \emph{phonetics}\index{Phonetics} and \emph{phonology}\index{Phonology} (the studies of the production of sounds and how they are organized in a language).
Instead, we focus on the following three levels: \emph{semantics}\index{Semantics}, \emph{syntax}\index{Syntax} and \emph{pragmatics}\index{Pragmatics}.
In linguistics, uncertainty appears through different phenomena, for instance ambiguity\index{Ambiguity} or polysemy\index{Polysemy} \citep{tuggy1993ambiguity, kennedy2011ambiguity}, underspecification\index{Underspecification} \citep{pustejovsky1998generative, pustejovsky2017semantics} and vagueness\index{Vagueness} \citep{tuggy1993ambiguity, brown2005encyclopedia, kennedy2011ambiguity},
which manifests in different ways in different linguistic levels.
This creates uncertainty by creating multiple different interpretations of a sentence, which are often---but not always---resolved through additional context, either linguistic, situational or from world knowledge.
Describing this interplay between uncertainty and resolve on different linguistic levels is goal of this chapter.

\paragraph{Uncertainty in Semantics.}\index{Uncertainty} The field of semantics\index{Semantics} is concerned with the literal meaning of words and the ways in which these are combined \citep{kearns2017semantics}.
One way in which uncertainty arises in semantics is \emph{polysemy}\index{Polysemy}, a phenomenon where two or more distinct senses are associated with the same word \citep{gries2015polysemy}.
\citeauthor{gries2015polysemy} for instance mentions the examples of ``I emptied the glass'' compared to ``I drank a glass'', where \emph{glass} corresponds in the first case to a container, and to its content in the second.
A more subtle case of polysemy is exemplified by the examples 

\begin{enumerate}[label=(\alph*)]
    \item Jocelyn walked to the school.
    \item The concerned mother talked to the school.
\end{enumerate}

\noindent where ``school'' in the former refers to the physical building, and the latter to the an administrative unit inside the organization that operates within the school building \citep{frisson2009semantic}.
Resolving these cases can be highly non-trivial, leading in NLP\index{Natural language processing} to the field of \emph{word sense disambiguation}\index{Word sense disambiguation} (see e.g.\@ \citealp{schutze1997ambiguity, agirre2007word, navigli2009word}).
Another case is \emph{homonymy}\index{Homonymy}, where two unrelated meanings map onto the same form \citep{devos2003semantic}, as in the case of \emph{bank} as a financial institute, a place for sitting, or the terrain alongside a river bed.
\emph{Vagueness}\index{Vagueness} can be defined in contrast to these notions as whether ``a piece of semantic information is part of the underlying semantic structure of the item, or the result of a contextual specification'' or simply ``the notion that certain features are not expressed in a representation'' \citep{frisson2009semantic, geeraerts1993vagueness}.
In their example, they show how for ``\emph{my neighbor is a civil servant}'', \emph{neighbor} is not ambiguous since it does not require disambiguation in the given context, despite the word being underspecified (i.e.\@, the neighbor's gender is for instance underspecified).
Vagueness\index{Vagueness} and underspecification\index{Underspecification} are ubiquitous in language, since terms like \emph{tall} or \emph{red} are gradual and highly subjective terms \citep{brown2005encyclopedia} or simply because a speaker (or listener) is lacking information \citep{williamson2002vagueness}. 
In addition, the meaning of some words might be underspecified unless or because it is combined with other words \citep{pustejovsky1998generative}.
The principle of \emph{compositionality}\index{Compositionality} states that the meaning of a more complex expression depends---completely or at least in part---on the meaning of its constituents \citep{fodor2001language, szabo2004compositionality, brown2005encyclopedia}.
While composition of simpler to more complex expressions can help to resolve underspecification\index{Underspecification} (``\emph{my \textbf{female} neighbor is a civil servant}''), it can also create new underspecification, for instance through multiple quantifiers or prepositional phrases with multiple attachments (\citealp{pustejovsky2017semantics}, see next paragraph for an example).


  \begin{figure}[htb]
    \centering
    \subfloat[Parsing \emph{duck} as a noun.]{
        \label{subfig:parse-trees-ambiguity1}
        \begin{forest}
            [S    [NP [I]]
              [VP      [V [saw]]
                [NP         [Pron [her]]
                  [N [duck]]
                ]
              ]
            ]
          \end{forest}
    }
    \subfloat[Parsing \emph{duck} as a verb.]{
        \label{subfig:parse-trees-ambiguity2}
        \begin{forest}
            [S    [NP [I]]
              [VP      
                [VP [V [saw]]]
                [NP [Pron [her]]]
                [VP [V [duck]]]
              ]
            ]
          \end{forest}
    } 
    \caption[Parse trees for a sentence with structural ambiguity.]{Two equally valid parse trees for the sentence ``I saw her duck'' using a constituency grammar. }\label{fig:parse-trees-ambiguity}
\end{figure}
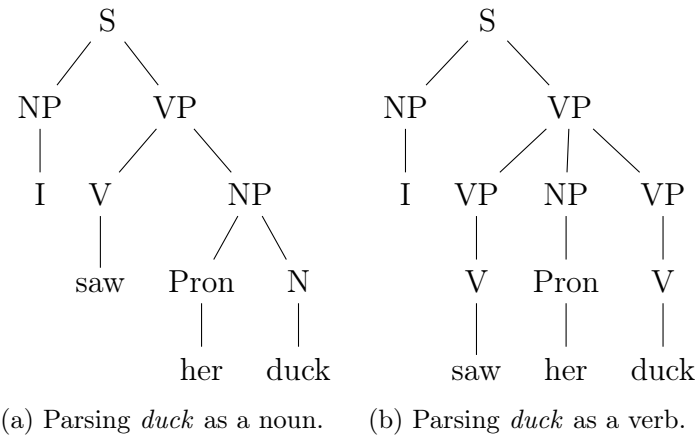

\paragraph{Uncertainty in Syntax.} Syntax\index{Syntax}\index{Uncertainty} describes the machinery that combines the meaning of words and subwords into bigger units, such as phrases and sentences \citep{koeneman2017introducing}. 
In order to model this system, different grammatical formalisms have been proposed (\citealp{varile1997survey};\@ section 3.3), which describe sets of rules that analyze a sentence in terms of a hierarchical structure that describes the relationship between words.
These include \emph{constituency grammars}\index{Constituency grammar}, which will be used for illustrative purposes here.
The core idea of this concept lies in observation that words can behave as either single units, or clump together to comprise units of meanings, called constituents \citep{jurafsky2022speech}.
Constituency grammars\index{Constituency grammar} describe the rules according to which these constituents combine into more and more complex units of meanings.
For instance, the phrase ``the duck'' consists of a \emph{determiner} (Det), or article, ``the'', as well as a noun (N), ``duck''. 
Together, they are denoted as a \emph{noun phrase}, or simply NP.
In the same fashion, we can assign categories like pronoun (Pron), verb (V) and verb phrase (VP), that culminate in a sentence (S). 
An example of an analysis using a constituency grammar is given in \cref{fig:parse-trees-ambiguity}:
Here, the words in the sentence ``I saw her duck'' are combined along these rules.\footnote{
    These rules can also defined more formally in the form of a \emph{context-free grammar}, where rules are applied regardless of a context.
    The exact rules are omitted here for the sake of clarity, but it should be noted that is a simplifying assumption, as natural language is not context-free \citep{savitch2012formal}.
}
However, the word \emph{duck} can be read both as the action of suddenly crouching and a word describing aquatic fowl.
In this former interpretation, ``her'' is read as an object instead of a possessive pronoun.
The corresponding parse tree is given in \cref{subfig:parse-trees-ambiguity1}.
The alternative reading as a possessive pronoun is shown in \cref{subfig:parse-trees-ambiguity2}.
By themselves, the two parse trees might be equally valid grammatical analyses given a constituency grammar.
This implies that this \emph{structural ambiguity}\index{Ambiguity!Structural} is unresolvable without any further context or world knowledge.
Structural ambiguity can arise in a variety of situations depending on the language in question (see for instance \citealp{taha1983types} for examples in English).
\cref{fig:parse-trees-ambiguity} depicts an attachment ambiguity\index{Ambiguity!Attachment}: It is unclear whether \emph{her} and \emph{duck} attach as a combined NP to the VP of \emph{saw}, or whether all three parts are equal constituents of a combined VP.
Other popular examples include the attachment of (specifically) propositional phrases (``I saw the man with the telescope''; \citealp{schutze1995pp, hindle1993structural}) or coordination (``old men and women''; \citealp{frazier2000processing, engelhardt2010processing}).
Uncertainty\index{Uncertainty} can also appear in the processing of language when awaiting additional context.
A famous example of this are \emph{garden path sentences}\index{Garden path sentences}, i.e.\@ sentences that contain surprising syntactical elements that require a re-analysis of the sentence structure thus far \citep{sturt1999structural}.
The most famous example is the sentence ``the horse raced past the barn fell'', for which the corresponding syntactical parse trees are shown in \cref{fig:parse-trees-garden-path}.
Before observing the last word, the sentence in \cref{subfig:parse-trees-garden-path1} exhibits a simple structure of subject (``the horse''), verb (``raced'') and a prepositional phrase (``past the barn'').
After encountering ``fell'', we realize that ``raced'' was indeed not the main verb of the sentence, and instead is used to describe that the horse that fell did so after having raced past the barn.
Structurally, this requires the VP of ``raced'' in \cref{subfig:parse-trees-garden-path2} to be grouped under the subject NP, and a new VP to be created for ``fell''.
Experimental evidence has shown that such ambiguous or challenging constructions can lead to an increase in human reading and processing times (see e.g.\@ \citealp{milne1982predicting,ferreira1991recovery,swets2008underspecification}), suggesting that some form of re-analysis might occur.\footnote{Interestingly, similar effects have been observed in neural models \citep{van2018modeling, irwin2023bert}, although the relationship is weaker in recent transformer models \citep{oh2023does, oh2024frequency}.}

\begin{figure}[htb]
    \centering
    \subfloat[Parse before the last word.]{
        \label{subfig:parse-trees-garden-path1}
        \resizebox{0.45\textwidth}{!}{%
        \begin{forest}
            [S    [NP      [Det       [the]
                ]
                [NN        [horse]
                ]
            ]
            [VP      [V        [raced]
                ]
                [PP        [IN          [past]
                ]
                [NP          [Det            [the]
                    ]
                    [NN            [barn]
                    ]
                ]
                ]
            ]
            ]
        \end{forest}%
        }
    }
    \subfloat[Parse after the last word.]{
        \label{subfig:parse-trees-garden-path2}
        \resizebox{0.45\textwidth}{!}{%
        \begin{forest}
            [S  [NP [NP      [Det        [the]
                    ]
                    [NN        [horse]
                    ]
                    ]
                    [VP      [V        [raced]
                        ]
                        [PP        [IN          [past]
                        ]
                        [NP          [Det            [the]
                            ]
                            [NN            [barn]
                            ]
                        ]
                        ]
                    ]
                ]
                [VP      [V        [fell]
                    ]
                ]
            ]
        \end{forest}
        }
    }
    \caption[Parse trees for a garden path sentence, before and after the last word.]{Parse trees for the garden path sentence ``The horse raced past the barn fell'', before and after adding the last word, prompting a re-analysis of the sentence, where ``raced past the barn'' attached to the NP and ``fell'' becomes the new main verb.}\label{fig:parse-trees-garden-path}
\end{figure}
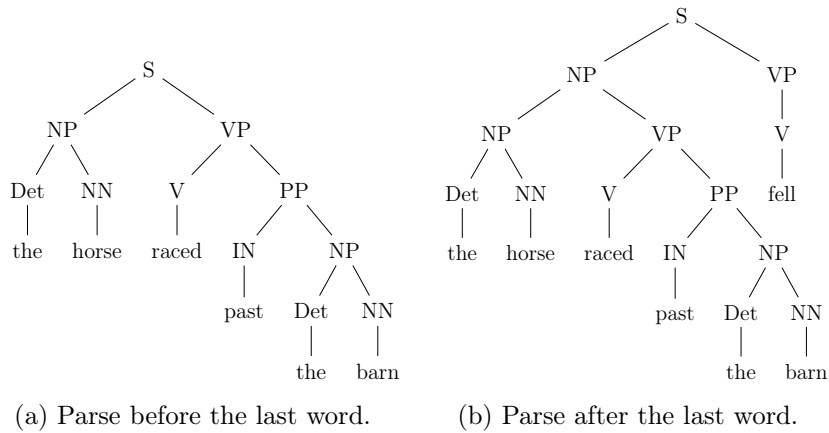

\begin{figure}[htb]
    \centering
    \includegraphics[width=0.6\textwidth]{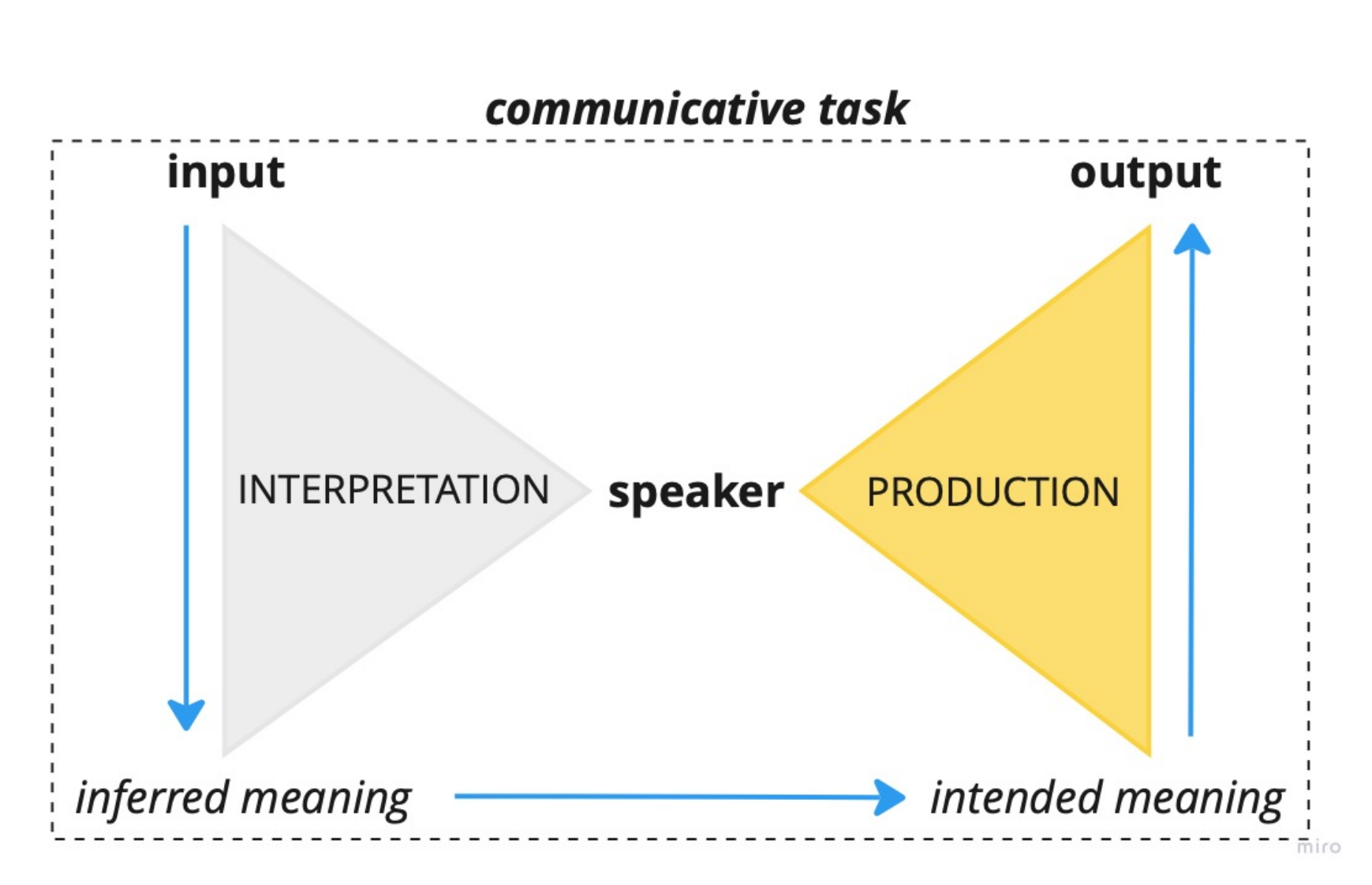}
    \caption[Double triangle of language production.]{Double triangle of language production by \citet{baan2023uncertainty} as an extension of the triangle of reference by \citep{ogden1923meaning}.}
    \label{fig:double-triangle}
\end{figure}

\paragraph{Uncertainty in Pragmatics.}\index{Uncertainty}\index{Pragmatics} Pragmatics can be defined as the study of language in use, especially in social interactions and speech \citep{mey2006pragmatics, huang2014pragmatics}.
Compared to semantics\index{Semantics}, it also studies how word meanings are affected in the context of a specific utterance \citep{kearns2017semantics}.
\citet{baan2023uncertainty} demonstrate its connection to uncertainty, specifically in natural language generation\index{Natural language generation}, through an extension of the ``triangle of reference''\index{Triangle of reference} by \citet{ogden1923meaning}, which is shown in \cref{fig:double-triangle}: 
Given an input to the speaker, there is a potentially wide set of possible inferred meanings;
this can be caused by errors, underspecification\index{Underspecification} (for instance where in some language the gender of a subject is not specified explicitly) or ambiguities\index{Ambiguity} of syntactical or semantic nature, as discussed in the previous paragraphs.
This mapping from utterance to meaning is therefore not one-to-one, but rather one-to-many \citep{grice1957meaning, kennedy2011ambiguity}. 
As the speaker prepares their utterance, they then choose one of a variety of similar or even equivalent meaning to express the intended utterance.
This production process is influenced by the speaker's social and cognitive idiosyncrasies \citep{levelt1993speaking}. 
We will refer to these two sources of uncertainty as \emph{input} and \emph{output variability} or \emph{paraphrasticity}\index{Variability}\index{Paraphrasticity} in the rest of thesis.
This describes an important difference between language and other modalities: Since language is \emph{paraphrastic}, there are (almost) equally valid ways to express the same intended meaning, which however might differ completely in their realizations, i.e.\@ wordings.\footnote{
    Some works argue against the concept that two expressions can be fully equivalent;
    for instance \citet{widoff2022equivalence} points out how two expressions can be equal in some form, but unequal in others (e.g. \@ ``the water in the glass'' and ``the glass is half-full'' convey a similar meaning, but the second one does not specify the content) or how for instance instruments like passive voice can be be used to convey intent (``Hans beats Peter'' vs.\@ ``Peter is beaten by Hans'').
}

\subsection{The Linguistic Perspective: Expressing Uncertainty}\label{sec:expressing-uncertainty}

Besides paraphrastic language\index{Paraphrasticity}, a different type of uncertainty\index{Uncertainty} lies in explicit uncertainty expressions by the speaker. 
This spans the overall tone of a series of utterances to the usage of diverse linguistic expression (see e.g.\@ \citealp{rubin2006identifying} pp.\@ 21--40; \citealp{lorson2023epistemicity}, \citealp{zhou2023navigating}), for instance \emph{hedges}\index{Hedging} \citep{lakoff1973hedges, fraser1975hedged, prince1982hedging, holmes1982expressing}, i.e.\@ words or phrases to express ambiguity or uncertainty.
Additionally, uncertainty expressions might also be chosen circumstantially, for instance based on whether the other speaker is cooperative or uncooperative \citep{lorson2021strategic}, politeness \citep{sirota2015direct, holtgraves2016politeness} or power differences between speakers \citep{bonnefon2006tactful}.\\

\begin{figure}[htb]
    \centering
    \definecolor{nodecolor}{RGB}{240,240,240}
    \definecolor{edgecolor}{RGB}{150,150,150}
    \begin{tikzpicture}[
        level distance=3em,
        every node/.style = {
            shape=rectangle,
            rounded corners,
            draw,
            align=center,
            inner sep=2mm,
            font=\footnotesize,
            opacity=0.8,
            fill=nodecolor,
            draw=edgecolor,
            anchor=north,
        },
        edge from parent/.style={draw,-latex}
    ]
    \node (Root) [sibling distance=6cm] {\textsc{Semantic Uncertainty}}
        child [xshift=-2cm] { node (Category 1) {\textsc{Epistemic} \\ {\scriptsize Lack of knowledge;}\\{\scriptsize Neither true or false}}}
        child [sibling distance=4cm] { node {\textsc{Hypothetical} \\ {\scriptsize Possibilities that}\\ {\scriptsize can be true or false}}
            child [sibling distance=1cm, xshift=-4cm] { node (Paradoxical) {\textsc{Paradoxical}}
                child { node (Investigative) {\textsc{Investigative}\\{\scriptsize Uncertain until}\\{\scriptsize investigated}} }
                child { node [right=of Investigative, xshift=-0.5cm, yshift=-0.2cm] (Conditional) {\textsc{Conditional} \\ {\scriptsize Uncertainty about}\\{\scriptsize conditions; if / else}\\{\scriptsize expressions}}}
                edge from parent [bend right]
            }
            child [sibling distance=2.5cm, xshift=2cm] { node {\textsc{Non-epistemic}}
                child [xshift=-0.5cm] { node (Doxastic) {\textsc{Doxastic} \\ {\scriptsize Expression}\\ {\scriptsize of beliefs}}}
                child { node {\textsc{Dynamic} \\ {\scriptsize Duties, plans,}\\{\scriptsize desires}} }
            }
        };
    \end{tikzpicture}
    \caption[Taxonomy of semantic uncertainties.]{Taxonomy of different semantic uncertainties adapted from \citet{kolagar2024aligning}, originally based on the work by \citet{szarvas2012cross}.}\label{fig:uncertainty-taxonomy}
\end{figure}

\cref{fig:uncertainty-taxonomy} shows a taxonomy of semantic uncertainties\index{Uncertainty!Semantic} by \citet{kolagar2024aligning}, based on the works of \citet{szarvas2012cross, vincze2014uncertainty}.
It proposes a categorization of expressed uncertainty based on the truth value of an utterance.
The taxonomy divides semantic uncertainty first into \emph{epistemic}\index{Uncertainty!Epistemic},\footnote{Not be confused with the epistemic or model uncertainty in a statistical sense in \cref{sec:bayesian-perspective}.} where the speaker expresses worlds which are neither true or untrue and do not coincide with their actual world.
To make this notion less abstract, consider the following example:
In the sentence ``This is the best dessert I have ever had'', we take the sentence to be a fact, and therefore assign a positive truth value.
Now, we can instead use a modal verb to say ``This may be the best dessert I have ever had''.
While one can imagine a possible world in which this statement is true, we cannot assign it a definitive truth value per se.
The alternative branch in the taxonomy are \emph{hypotheticals}\index{Hypotheticals}, which can also be uncertain, but in contrast to epistemic uncertainty\index{Uncertainty!Epistemic}, also have the possibility of being evaluated as true or false.
One bifurcation, \emph{paradoxical}\index{Uncertainty!Paradoxical}, refers to cases in which the truth value depends on another propositions, for instance if / else expressions (\emph{conditional})\index{Uncertainty!Conditional} or cases in which the truth value can only be evaluated after further examination (\emph{investigative}\index{Uncertainty!Investigative}).
The other fork, \emph{non-epistemic}\index{Uncertainty!Non-epistemic}, describes circumstances in which a speaker expresses beliefs (\emph{doxastic})\index{Uncertainty!Doxastic} and duties, plans or desires (\emph{dynamic})\index{Uncertainty!Dynamic}.\\

\paragraph{Recap.} We have now discussed a variety of sources of uncertainty in linguistics\index{Linguistics}\index{Uncertainty}, located on different levels of language use, including semantics\index{Semantics}, syntax\index{Syntax} and pragmatics\index{Pragmatics}.
These discussions were mostly informed by phenomena in the English language and are thus limited, as other types of ambiguity\index{Ambiguity} exists that were not discussed here (see for instance \citealp{li2024taxonomy}).
We can nevertheless distill certain insights:
On the one hand, uncertainty arises as an inherent property of language, through polysemy\index{Polysemy}, structural ambiguities\index{Ambiguity!Structural} or possible paraphrases\index{Paraphrasticity}.
On the other hand, uncertainty is a tool that be employed by a speaker to express their own uncertainty and to express the state of potential worlds.
In both cases, this creates challenges for any processing system that operates on language and tries to infer its meaning.

\subsection{A Pragmatic Answer}\label{sec:uncertainty-recap}


The astute reader might have noticed that while the title of \cref{sec:what-is-uncertainty} was ``what is uncertainty, anyway?''\index{Uncertainty}, it might appear that we have thus far been tiptoeing around this question, enumerating and explaining different perspectives to it without giving a satisfying answer.\\

In the end, uncertainty is a multifaceted and perhaps vague concept, whose definition varies based on the phenomenon of interest.
At its very core, it describes a lack of knowledge about the true state of the world among competing alternative states.
The definitions of these world states can differ tremendously on the context, and can include all the possible interpretations of the sentence ``I saw her duck'' to plausible values of a data-generating parameter $\theta$.
For the purpose of this thesis, we reduce its definition to the following aspects:
Firstly, there is the uncertainty that is inherent to language described in \cref{sec:uncertainty-linguistics}, describing how interpretation and production are not a one-to-one processes of meaning;
Secondly, the statistical models we apply to language are themselves faced with multiple possible specifications and can produce different potential predictions.
As these models are at best informative but incomplete abstractions of reality that are fit on finite data, we accept their uncertainty as the price for practicality.
While the last two points refer to uncertainty as phenomena, however and thirdly, uncertainty is also a tool:
It enables us to reason about and express our own knowledge about possible states, and convey our lack thereof.
This notion captures both the statistical sense, like considering different parameter values in the posterior predictive distribution\index{Predictive distribution!Posterior}, as well as making conditional statements or using linguistic modifiers to convey our beliefs in natural language.
As NLP\index{Natural language processing} involves different kinds of uncertainty both in its modeling tools and data modality, this creates an intricate interplay between these uncertainties.\\


So far, we have looked at uncertainty in a fashion that is completely independent from neural networks\index{Neural network}, the core modeling tool of this thesis and main workhorse of contemporary artificial intelligence\index{Artificial intelligence}.
Natural language processing specifically has adopted the use of large neural models operating on language inputs (and sometimes also outputs). 
It thus lies in the intersection of linguistics\index{Linguistics} and statistics\index{Statistics}, and we will review the implications on uncertainty modeling next.


\section{Uncertainty in Deep Learning}\label{sec:uncertainty-deep-learning}

\epigraph{
    ``\emph{In the 1950s and 60s, scientists built a few working perceptrons, as these artifical brains were called. [\ldots] 
    This perceptron is being trained to recognize the difference between males and females. [\ldots] 
    After training on lots of examples, it [\ldots] is able to successfully distinguish male from female. 
    It has learned. While promising, this approach to machine intelligence has virtually died out.}''
}{---\href{https://www.youtube.com/watch?v=cNxadbrN_aI}{Clip about AI research in the 1950s and 60s}, date unknown.}

\begin{figure}
    \centering
    \vspace{2cm}
    \definecolor{nodecolor}{RGB}{240,240,240}
    \definecolor{edgecolor}{RGB}{150,150,150}
    \resizebox{\textwidth}{!}{
    \noindent\rotatebox{90}{%
    \begin{forest}
        for tree={
          grow'=0,
          rounded corners,
          font=\footnotesize,
          parent anchor=east,
          child anchor=west,
          text width=3.5cm,
          inner xsep=4pt,
          outer xsep=4pt,
          align=left,
          opacity=0.8,
            fill=nodecolor,
            draw=edgecolor,
          edge path={
            \noexpand\path[\forestoption{edge}]
            (!u.parent anchor) -- +(10pt,0pt) |- (.child anchor)\forestoption{edge label};
          },
        }
        [
          Uncertainty Quant.\\in Deep Learning
            [Frequentist
                [Confidence
                    [Calibration]
                ]
                [Prediction Sets
                    [Conformal Prediction]
                ]
            ]
            [Bayesian
                [Ensembling]
                [Sampling
                    [Markov Chain\\Monte Carlo]
                    [Stochastic Langevin\\Dynamics]
                ]
                [Laplace\\Approximations]
                [Variational
                    [Stochastic\\Regularizers]
                    [Bayes-by-backprop]
                ]
                [Deep Kernel Learning]
                [Evidential
                    [Prior Networks]
                    [Posterior Networks]
                ]
            ]
            [Other
                [Heuristic]
                [Direct Prediction
                    [Density-based]
                    [Uncertainty Heads]
                    [Auxiliary Models]
                ]
                [Neural SDEs]
                [Credal Sets]
                [NLP-specific
                    [Black-box methods]
                    [Verbalized Uncertainty]
                ]
            ]
        ]
        \end{forest}%
    }}
\caption[Taxonomy of uncertainty quantification approaches in deep learning.]{
    Hierarchical Taxonomy, showing the different methods discussed in \cref{sec:uncertainty-deep-learning}.
    Note that these shown categories are not necessarily disjoint, as different methods can sometimes be placed into multiple categories at once.
}
\label{fig:taxonomy}
\end{figure}
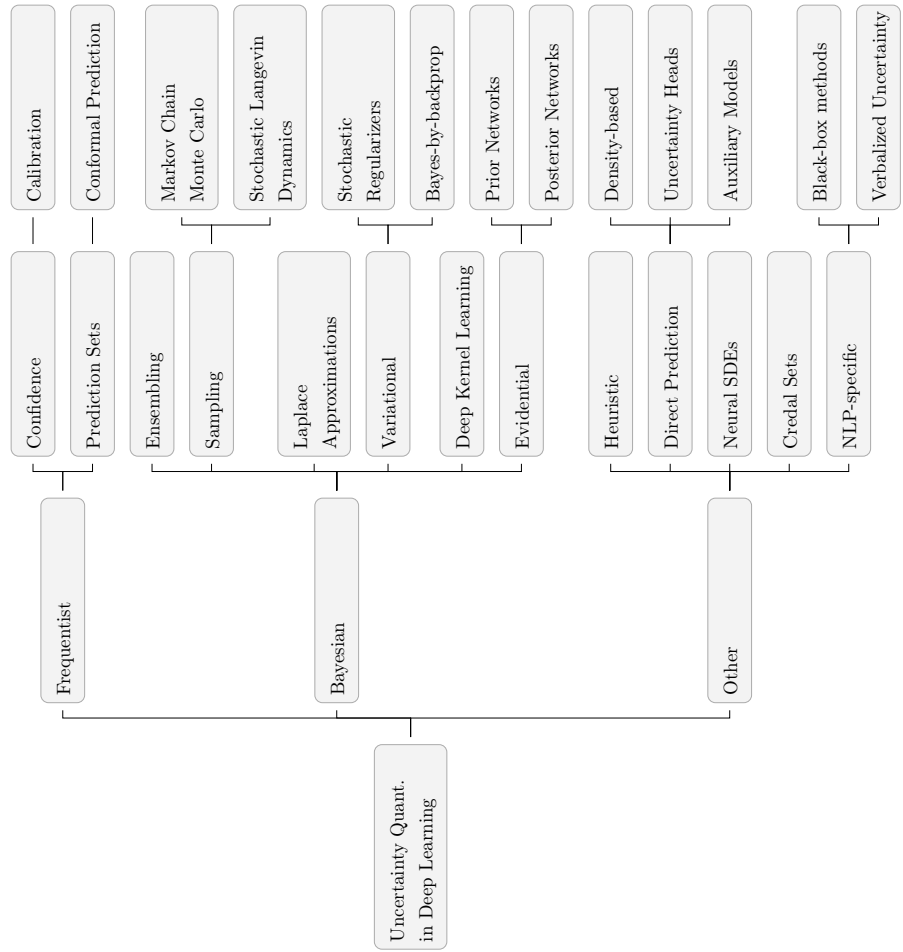

In contrast to the interviewer's quote in the epigraph, the very promising approach of using computational models of neurons did not completely die out, but rather remained dormant for decades.
First known as \emph{cybernetics}\index{Cybernetics} at the time of the first models of artificial neurons \citep{mcculloch1943logical, rosenblatt1958perceptron, rosenblatt1962principles}, it became known as \emph{connectionism}\index{Connectionism} in the 1980--1990s, before assuming its current name \emph{deep learning}\index{Deep learning} in 2006 \citep{goodfellow2016deep}.
Nowadays, deep learning is commonly and vaguely defined as a family of machine learning networks that employ artificial neural networks\index{Neural network} of increasing depth \citep{goodfellow2016deep}.\footnote{
    One might argue that modern NLP might be at least in a large part subsumed by this definition; for this purpose of this thesis we will treat them as overlapping but different disciplines due to their history (see for instance Chapter 1 of \citealp{jurafsky2022speech}) and the peculiarity of language as a data modality, especially compared to images or tabular data.
}
Uncertainty\index{Uncertainty} in deep learning materializes in a wide variety of approaches, as depicted as a hierarchical taxonomy in \cref{fig:taxonomy}:
As shown in \cref{sec:frequentist-neural-networks}, the frequentist school uses neural networks as powerful estimators of predictive parameters, which can be interpreted similarly to the models in frequentist statistics\index{Statistics!Frequentist} we discussed earlier.
In the same way, Bayesian methods can be applied to neural networks to quantify uncertainty through parameter posterior\index{Posterior distribution} and posterior predictive distributions\index{Predictive distribution!Posterior} (\cref{sec:bayesian-neural-networks}).
Other approaches take inspiration from the Dempster-Shafer theory of evidence\index{Dempster-Shafer theory of evidence} (\cref{sec:evidential-neural-networks}) or draw from entirely different ideas such as framing uncertainty quantification\index{Uncertainty quantification} as a supervised learning task, stochastic differential equations, and more (\cref{sec:other-approaches}).

\subsection{Frequentist Neural Networks}\label{sec:frequentist-neural-networks}

Before we turn to how frequentist methods allow the quantification of uncertainty in neural networks, we first review the similarities in parameter estimation when applied to neural predictors.
In the following, we term the application of frequentist methods to neural network as \emph{frequentist neural networks}\index{Neural network!Frequentist}.\\

As introduced in \cref{sec:frequentist-perspective}, frequentist statistics\index{Statistics!Frequentist} refers to an interpretation of probability as the relative frequency of an event.
In a neural network setting, the estimation of the parameter(s) of interest, in this case the network's parameters $\btheta$, is analogous to the maximum likelihood estimation\index{Maximum likelihood estimate} in \cref{sec:frequentist-perspective}.
The main differences are that firstly, instead of parameterizing a distribution with $\theta$ directly, we parameterize it with the prediction obtained from a neural net with parameters $\btheta$.
Whereas in the coin flipping example, $\theta$ referred to the probability of heads, a neural network in a binary classification setting is equipped with some parameters $\btheta$ now predicts the probability of the positive class $\hat{p}$. 
And secondly, due to the model's non-linear and hierarchical dependencies, the solution to $\btheta$ is not available in closed form anymore.
Instead, we iteratively optimize $\btheta$ through procedures such as gradient descent, where we compute the gradient of some loss function w.r.t.\@ the parameters and take a step in the direction of the (anti-)gradient.
The loss functions vary depending on the intended purpose, but in some cases can be directly related to maximum likelihood estimation\index{Maximum likelihood estimate}.
In analogy to the coin flipping in the previous section, a network trained on a binary classification\index{Classification!Binary} task for instance predicts $\hat{p} = \sigma(f_{\btheta}(\bx))$ (with $\sigma(\cdot)$ denoting the sigmoid function\index{Sigmoid function}) and is then optimized using the binary cross-entropy loss\index{Entropy!Cross-} (here for a single input using a gold label $y \in \{0, 1\}$):

\begin{equation}\label{eq:binary-cross-entropy}
    \mathcal{L}_\text{BCE}(y, \hat{p}) = - y\log \hat{p} + (1 - y)\log (1 - \hat{p}).
\end{equation}

The resemblance to the Bernoulli\index{Likelihood}\index{Bernoulli distribution} log-likelihood in \cref{eq:bernoulli-likelihood} is no coincidence, and we can see that the loss is minimized when the network prediction $\hat{p}$ correspond to the actual probability $p$, e.g.\@ the relative occurrence of the positive class.
Thus we view model predictions, at least for classification problems, through a similar, frequentist lens.\footnote{
    Here, we mainly focus on classification problems, which tend to be more frequent in NLP\index{Natural language processing}.
    Nevertheless, we can also consider a prediction from a trained regressor as frequentist by considering it as the mean of normal distribution with a variance equal to some (inverse and homoskedastic) noise, see for instance \citet{bishop2006pattern}, chapter 3.1.
 }

\paragraph{Confidence \& Calibration.} 
To illustrate frequentist uncertainty estimation further, we now move from a binary classification problem\index{Classification!Binary} to a multi-class classification\index{Classification!Multi-class} problem.
Formally, consider a neural predictor $f_{\btheta}$, a function mapping from an input space $\mathbb{R}^D$ to an output space $\mathbb{R}^K$ and with parameter vector $\btheta$.
Here, $K$ typically refers to the number of classes in a classification problem and the output of the network is referred to as \emph{logits}.
These logits are then normalized, typically by using the softmax function\index{Softmax function} $\bar{\sigma}(\cdot)$, to produce a categorical probability distribution over classes\index{categorical distribution}.
Since each class is now associated with a probability score, we can refer to each of these probabilities as the \emph{confidence} of $f_{\btheta}$\index{Confidence} regarding a certain class, or more formally

\begin{equation}\label{eq:confidence}
    \hat{p}_k = P_{\btheta}(y = k \mid \bx) \equiv \bar{\sigma}(f_{\btheta}(\bx))_k.
\end{equation}

\noindent Also, let

\begin{equation}
    \hat{y} = \argmax_{k \in [K]}\hat{p}_k;\quad \hat{p} = \max_{k \in [K]}\hat{p}_k
\end{equation}

\noindent be the class predicted by the model and its corresponding probability, respectively.
Ideally, a predicted probability of e.g.\@ $45 \%$ for some class would thus indicate that this is the correct prediction, in $45$ out of $100$ times, if we were to repeat the experiment.
We can formulate this requirement as

\begin{equation}\label{eq:calibration}
    p\big(y = \hat{y} \mid \hat{p} \big) = \hat{p}.
\end{equation}
The degree to which this requirement is violated is measured through the expected calibration error\index{Expected calibration error} (ECE;\@ \citealp{naeini2015obtaining}), which is defined as
\begin{equation}\label{eq:calibration-error}
    \text{ECE} = \mathbb{E}\Big[\big|p\big(y = \hat{y} \mid \hat{p} \big) - \hat{p}\big|\Big].
\end{equation}
This expectation can for instance be computed by grouping $N$ test predictions into $M$ equally wide bins according to their confidence $\hat{p}$.
Defining $\mathcal{B}_m$ as the set indices that belong to bin $m$, we can write the ECE as 
\begin{equation}\label{eq:ece}
    \text{ECE} \approx \sum_{m=1}^M \frac{|\mathcal{B}_m|}{N}\Big|\underbrace{\frac{1}{|\mathcal{B}_m|}\sum_{i \in \mathcal{B}_m}\indicator{\hat{y}_i = y_i}}_{\text{Bin accuracy (target)}} - \underbrace{\frac{1}{|\mathcal{B}_m|}\sum_{i \in \mathcal{B}_m}\hat{p}_i}_{\text{Avg. bin confidence}}\Big|, 
\end{equation}
\noindent where $\indicator{\hat{y}_i = y_i}$ is the indicator function showing whether the prediction was correct.
As \citet{guo2017calibration} note, the terms in the difference approximate the left-hand and right-hand side in \cref{eq:calibration} per bin, respectively.
However, the ECE\index{Expected calibration error} has also drawn several points of criticisms:
The number of bins can also distort the results when test points are unequally distributed or an unsuitable number of bins is chosen.
Also, it is not a \emph{proper scoring rule}\index{Proper scoring rule} \citep{savage1971elicitation, gneiting2007strictly}, meaning that it is not necessarily minimized by the true distribution.
For proper scoring rules, the true distribution should constitute the minimum, but for the ECE we can often minimize through a uniform distribution instead.
Therefore, many other alternatives to the ECE\index{Expected calibration error} have been proposed (e.g.\@ \citealp{nixon2019measuring, kumar2019verified, zhang2020mix, gruber2022better, kirchenbauer2022your, roelofs2022mitigating, blasiok2023smooth, chidambaram2024flawed}).\\

Unfortunately, several works have shown that neural network\index{Neural network} models tend to be miscalibrated in general, with a tendency to be overconfident (e.g.\@ \citealp{guo2017calibration, minderer2021revisiting, zhu2023calibration}).
Therefore, a vast library of methods has been proposed to improve the calibration of neural networks\index{Calibration}.
This includes post-processing of predictions, for instance by retraining or adjusting the logits through additional scale and shift parameters \citep{platt1999probabilistic, guo2017calibration, mozafari2019unsupervised, kull2019beyond, wenger2020non, gupta2021calibration, ma2021metacal}.
Others have introduced custom loss functions \citep{mukhoti2020calibrating, karandikar2021soft, bohdal2021meta, ghosh2022adafocal, hebbalaguppe2022stitch, tao2023dual} that are meant to disincentivize overconfidence on a specific class.
This is since performing maximum likelihood estimation\index{Maximum likelihood estimate} of network parameters $\btheta$ with objectives such as \cref{eq:binary-cross-entropy} only seeks to maximize the probability of the true class, but does not incentivize calibration per se.
Other strategies involve tempering with the training data.
Through label smoothing\index{Label smoothing} \citep{szegedy2016rethinking, muller2019does, lukasik2020does, lienen2021label, zhang2021delving, liu2022devil, park2023acls}, where probability mass is dispersed from the ground truth class onto other classes, the network is taught to not assign maximal confidence to the ground truth.
For further regularization, mixup can be used \citep{zhang2018mixup, thulasidasan2019mixup, maronas2021calibration, zhang2022and, noh2023rankmixup, wang2023pitfall}, where the network is trained on interpolations of two inputs.
In this case, both the input representations as well as their gold label distributions are mixed.
Since miscalibration might also stem from a lack of training data, an intuitive way to improve models is data augmentation \citep{hendrycks2019augmix, patel2021manifold}.
It has also been observed that ensembling\index{Ensembling} \citep{lakshminarayanan2017simple, wen2020improving, ashukha2020pitfalls, zhang2020mix, wu2021should, rahaman2021uncertainty, wen2021combining, seligmann2024beyond} and Bayesian modeling approaches \citep{mitros2019validity, maronas2020calibration, izmailov2021what, fortuin2022bayesian} can improve calibration (see \cref{sec:bayesian-neural-networks}).

\paragraph{Prediction Sets \& Conformal Prediction.} 
Instead of simply presenting a single prediction $\hat{y}$, we can also present the most likely outcomes in a \emph{prediction set} instead, similar to a confidence interval.\index{Prediction set}
Let $\alpha \in [0, 1]$ be a hyperparameter controlling the desired width of a prediction set by defining a cutoff for probabilities.
We can then define the prediction set $\mathcal{C}$ for a new point $\bx^\prime$ as 

\begin{align}\label{eq:naive-prediction-set}
    \mathcal{C}(\bx^\prime) & = \Big\{y_{\pi^{-1}(1)}, \ldots, y_{\pi^{-1}(k^\prime)}\Big\}\\
     k^{\prime} & = \sup\Big\{k\ \Big|\ \sum_{j=1}^{k^\prime} \hat{p}_{\pi^{-1}(j)} < 1 - \alpha \Big\} + 1.
\end{align}

The above formulation includes a sorting function $\pi(\cdot)$ that sorts indices $k$ by their corresponding class probabilities $\hat{p}_k$, in a descending order, encompasses the most classes while staying under the probability threshold $1 - \alpha$, and adds one to avoid empty sets.
Therefore, a more intuitive construction is the following: 
We sort all predicted probabilities from highest to lowest, and the select the classes for the prediction set until their sum exceeds a threshold of $1 - \alpha$.
Ideally, we would like prediction sets to fulfil two criteria\index{Prediction set}: 
They should contain the correct answer (\emph{coverage}) and they should be as tight as possible\index{Coverage}.\footnote{
    Since one can always contain the correct answer by having the widest possible prediction sets, evaluating coverage alone is usually not meaningful.
}
$1 - \alpha$ corresponds to the desired probability with which the correct answer should be contained in the prediction set in expectation, similarly how frequentist confidence scores correspond to a probability of correctness under many repetitions of an experiment.
In this way, we can also interpret the width of the set as a proxy for confidence;
The wider the set, the more uncertain the underlying model and the more classes it has to include in order to fulfill a coverage probability of $1 - \alpha$.
Unfortunately, and very similar to confidence scores\index{Confidence}, prediction sets\index{Prediction set} are usually not calibrated by default \citep{kompa2021empirical}.
The analogous solution to the calibration of prediction sets\index{Calibration} is \emph{conformal prediction}\index{Conformal prediction} \citep{vovk2005algorithmic, papadopoulos2002inductive, angelopoulos2021gentle}:
By using a calibration set of data points and following the algorithm shown in \cref{alg:conformal-prediction},\footnote{
    This algorithm displays \emph{split} conformal prediction\index{Conformal prediction!Split}, which can be applied to already trained predictors.
    \emph{Full} conformal prediction however requires the re-training of the predictor on all the leave-one-out subsets of the training set, and is therefore infeasible for many modern settings.
    See for instance \citet{angelopoulos2021gentle}, section 6.

} 
we can determine a probability threshold $\hat{q}$ in the following way:
First, we collect a number of \emph{non-conformity scores}\index{Non-conformity score} $s_i$ on a held-out calibration set that reflect the correctness of the model.
The design of these scores is arbitrary, but should reflect the correctness of a model's prediction for a point, e.g.\@ $s(\bx_i) = 1 - p_{\btheta}(y_i\mid\bx_i)$.
Afterwards we choose $\hat{q}$ as the $\lceil(N+1)(1-\alpha)/N\rceil$-th quantile of the empirical score distribution.
Using $\hat{q}$, our prediction sets now provably contain the correct prediction in expectation with a probability of $1 - \alpha$.
One simple way is to include all classes with a probability higher than $\hat{q}$:

\begin{equation}\label{eq:conformal-prediction-guarantees}
    \mathcal{C}(\bx^\prime) = \big\{y_k\ \big|\ \hat{p}_k \ge \hat{q}\big\},
\end{equation}

\noindent otherwise we can also repeat the construction in \cref{eq:naive-prediction-set}, but replace the $1 - \alpha$ threshold by $\hat{q}$.
Prediction sets in this way then fulfil the following guarantee:

\begin{equation}\label{eq:conformal-guarantee}
    p\big( y^\prime \in \mathcal{C}(\bx^\prime) \big) \ge 1 - \alpha.
\end{equation}

\begin{algorithm}
    \caption{Conformal Prediction}\label{alg:conformal-prediction}
    \begin{algorithmic}
    \Require Calibration data set $\{(\bx_i, y_i)\}_{i=1}^N$, predictor $p_{\btheta}$, non-conformity function $s: \mathbb{R}^D \rightarrow \mathbb{R}$ .
    \State
    \LineComment{1. Retrieve non-conformity scores for calibration points, e.g.\@}
    \State $s_i = s(\bx_i) = 1 - p_{\btheta}(y_i\mid\bx_i)$\\
    
    \LineComment{2. Find quantile $\hat{q}$ using empirical inverse CDF $F_\mathbb{S}^{-1}$}
    \State $\hat{q} \gets\ F_\mathbb{S}^{-1}\big(\lceil(N+1)(1-\alpha)/N\rceil$\big)\\

    \LineComment{3. Create prediction set, e.g.\@}
    \State $\mathcal{C}(\bx^\prime) \gets\ \{y_k\ |\ \hat{p}_k \ge \hat{q}\}$\\
    
    \end{algorithmic}
\end{algorithm}

Conformal prediction\index{Conformal prediction} has enjoyed great interest in recent years, since it is agnostic to the form of the underlying predictor and can therefore easily be applied to neural networks.
Recent work has for instance be dedicated to apply conformal prediction for time series \citep{xu2021conformal, stankeviciute2021conformal, lin2022conformal, zaffran2022adaptive} and other non-i.i.d.\@ settings \citep{gibbs2021adaptive, oliveira2022split, bhatnagar2023improved, barber2023conformal,farinhas2024nonexchangeable}.
It should also be noted that the conformal guarantee in \cref{eq:conformal-prediction-guarantees} can be rewritten in terms of the indicator function:

\begin{equation}\label{eq:conformal-guarantee}
    p\Big(\indicator{y^\prime \in \mathcal{C}(\bx^\prime)}\Big) \ge 1 - \alpha.
\end{equation}

This fact is exploited by \citet{angelopoulos2022conformal} and subsequent works \citep{fisch2022conformal, farinhas2024nonexchangeable, evans2023conformal} to generalize this guarantee to families of functions that go beyond coverage, for instance controlling for false-negative rate \citep{angelopoulos2022conformal, fisch2022conformal, farinhas2024nonexchangeable, evans2023conformal}.

\paragraph{Uncertainty Quantification in Frequentist Networks.} \index{Uncertainty metric}
In the case of prediction sets, their width can be interpreted as a confidence score\index{Confidence}:
When the probability distribution is more uniform, more classes have to be added to the set to reach a specific probability threshold, and thus the set size grows.
Without prediction sets, we turn to the (calibrated) confidence score, which is usually taken to be the maximum probability among all classes \citep{hendrycks2017baseline}.
Alternatively, a popular measure of uncertainty is to compute the Shannon entropy\index{Entropy!Shannon} of the distribution, which is given by

\begin{equation}\label{eq:predictive-entropy}
    \text{H}\big[P_{\btheta}(y \mid \bx) \big] = -\sum_{k=1}^K P_{\btheta}(y=k \mid \bx) \log P_{\btheta}(y=k \mid \bx).
\end{equation}

The entropy is maximal when the distribution is uniform, and conversely its value is minimal when all the probability mass rests on a single outcome.

\subsection{Bayesian Neural Networks}\label{sec:bayesian-neural-networks}

\index{Neural network!Bayesian}After reviewing the frequentist approach to neural networks in the previous section, the question naturally arises whether we can also apply Bayesian thinking in a neural network setting.
This question can be answered affirmatively and has been studied since the 1990s (see e.g.\@ \citealp{tishby1989consistent, mackay1992practical, neal1995bayesian}).\footnote{
    Due to the volume of the corresponding literature, we will restrict ourselves to some core ideas and important works, a brief history of the field can for instance be found in \citet{gal2016uncertainty}, pp.\@ 20--23. 
}
We start by Bayesian parameter estimation for a neural network parameterized by weights $\btheta$. 
By placing a prior $p(\btheta)$ over the weights, we obtain a posterior using Bayes' rule in \cref{eq:bayes-theorem}\index{Bayes' theorem}:

\begin{equation}\label{eq:neural-posterior}
    p(\btheta \mid \mathbb{D}) \propto p(\mathbb{D}\mid \btheta)p(\btheta).
\end{equation}

We can again find the maximum a posteriori estimate\index{Maximum a posterior estimate} like in \cref{sec:bayesian-perspective}, but due to the nature of neural networks, have to resort to an iterative optimization procedure to find the parameters like for the neural maximum likelihood estimate\index{Maximum likelihood estimate} in the previous \cref{sec:frequentist-neural-networks}.
Luckily, we can optimize for $p(\btheta \mid \mathbb{D})$ by simply using a loss function such as in the previous section, and either explicitly or implicitly define a prior $p(\btheta)$\index{Prior distribution}.
Explicitly, this can be performed by for instance sampling the initial values of $\btheta$ from some prior distribution, or implicitly through regularization.\footnote{
    Regularizing the $l_2$-norm of the network parameters for instance corresponds to the use of a isotropic normal prior\index{Normal distribution} (see e.g.\@ \citealp{bishop2006pattern}; Section 3.3.1).
}
While that makes it comparatively easy to find the parameters $\btheta$ that maximize \cref{eq:neural-posterior}, it is much harder to find the analytical form of the posterior $p(\btheta \mid \mathbb{D})$ or to sample from it.
This is because the full form of \cref{eq:neural-posterior} derived via Bayes' rule\index{Bayes' theorem} includes the evidence term $p(\mathbb{D})$\index{Evidence} as normalizing constant, which as shown in \cref{eq:evidence}, involves the marginalization over $\btheta$.
This same infeasible marginalization also appears in the corresponding predictive distribution\index{Predictive distribution!Posterior}:

\begin{equation}\label{eq:neural-posterior-predictive}
    p(\bx^\prime \mid \mathbb{D}) = \int_{\bTheta} p(\bx^\prime \mid \btheta)p(\btheta \mid \mathbb{D})\ddd\!\btheta.
\end{equation}

Why is this marginalization prohibitive?
Compared to the conjugacy that allowed for the elegant solutions in \cref{eq:bernoulli-mle,eq:predictive-prior-coin-flip,eq:predictive-posterior-coin-flip}, neural networks\index{Neural network} typically involve non-linear components in the form of activation functions, which enable their flexibility and modeling power as their depth increases \citep{hornik1989multilayer, barron1994approximation, lu2017expressive}.\footnote{
    As an illustrative counter-example, consider a simple two-layer network without non-linear activation functions in the form of
    \[f(\bx) = \begin{bmatrix} w_5 & w_6\end{bmatrix}\bigg(\begin{bmatrix} w_1 & w_2 \\ w_3 & w_4 \\ \end{bmatrix} \begin{bmatrix} x_1 \\ x_2\end{bmatrix}\bigg),\]
    which we can rewrite as $f(\bx) = \mathbf{a}\T\bx$ with $a_1 = w_1w_5 + w_3w_6$ and $a_2 = w_2w_5 + w_4w_6$.
    Therefore, despite using two linear layers, we effectively obtain a single linear layer in practice, thus providing motivation for non-linear activation functions.
}
Formulating the likelihood $p(\mathbb{D}\mid\btheta)$\index{Likelihood}, this non-linear dependence of parameters makes it impossible to marginalize the parameters out.
Numerical integration is also usually not feasible, since network parameters are real-valued and typically high-dimensional.
However, that does not mean that Bayesian inference with neural networks is impossible, it rather means that we have to employ a number of different strategies.
A common red thread between them is that evaluating \cref{eq:neural-posterior-predictive} does not require us to have access to the distribution itself, only (high-quality) samples.
As such, we can approximate the integral using Monte Carlo sampling:

\begin{equation}\label{eq:neural-posterior-predictive-mc-estimate}
    p(\bx^\prime \mid \mathbb{D}) = \int_{\bTheta} p(\bx^\prime \mid \btheta)p(\btheta \mid \mathbb{D})\ddd\!\btheta \approx \frac{1}{M}\sum_{m=1}^M p(\bx^\prime \mid \btheta^{(m)}), 
\end{equation}

\noindent where we assume access to a set $\{\btheta^{(m)}\}^M_{m=1}$ of $M$ sampled parameter vectors.
This Monte Carlo integration\index{Monte Carlo estimation} approximates \cref{eq:neural-posterior-predictive} with an error $1/\sqrt{M}$, that decreases as a function of the number of samples.
It should be noted however that this approximation will be only asymptotically correct for samples from the true (and not an approximate) posterior, which we can obtain using the now following methods.

\paragraph{Markov Chain Monte Carlo \& Stochastic Gradient Langevin Dynamics.}
In order to obtain representative samples from the posterior, we do not necessarily need the analytical form of the posterior.
This idea is used by techniques such as \emph{Markov chain Monte Carlo} (MCMC)\index{Markov chain Monte Carlo} and \emph{stochastic gradient Langevin dynamics} (SGLD)\index{Langevin dynamics}.
In the case of MCMC, the core insight is that as long as we can evaluate $p(\btheta \mid \mathbb{D})$ up to the pesky evidence term $p(\mathbb{D})$, we can, in relative terms, determine whether 
one sample is more likely under the posterior than another. 
That means that upon formulating a suitable update rule, we can construct a chain of samples that leads from unlikely samples from $p(\btheta \mid \mathbb{D})$ to more likely ones.
A thorough introduction to and overview over this family of methods is out of scope for this section, which is why we instead refer to \citep{robert1999monte} and the corresponding chapters in \citet{bishop2006pattern,gelman2021bayesian}.
The technique has found numerous applications for neural networks, e.g.\@ \citet{andrieu2003introduction, neal1995bayesian, cobb2021scaling, li2023entropy}.
Stochastic Gradient Langevin dynamics (SGLD)\index{Langevin dynamics} follows a similar intuition \citep{welling2011bayesian}, however instead of formulating probabilistic transition rules, the constructed chain of samples follows the gradient of the prior and log-likelihood to seek posterior modes, similar to gradient descent.
Trying to combine the advantages of both methods has even birthed SGLD / MCMC \index{Markov chain Monte Carlo} hybrids \citep{ma2015complete, chen2016stochastic, liu2016stochastic, zhang2019cyclical}. 
In all cases, sampling methods remain challenging due to the high dimensional parameter space of neural networks and the often multi-modal nature of the weight posterior $p(\btheta \mid \mathbb{D})$.

\paragraph{Variational Inference.} 
\index{Variational inference}
Instead of trying to sample from the posterior $p(\btheta \mid \mathbb{D})$, we can instead sample from an easier proposal distribution $q(\btheta\mid\bphi)$ with parameters $\bphi$. 
For this proposal distribution to reasonably represent the original weight posterior, we try to minimize the difference between the two \citep{hinton1993keeping, graves2011practical}.
At first glance, this seems paradoxical---how can we minimize the distance from the posterior if we do not know its form?
However, using the Kullback-Leibler (KL)\index{Kullback-Leibler divergence} divergence, we can rewrite this difference as follows:

\begin{align}\label{eq:elbo}
     & \min_{\bphi} \text{KL}\big[q(\btheta\mid\bphi)\ \big|\big|\ p(\btheta\mid \mathbb{D})\big] \\
    = & \min_{\bphi} - \int_{\bTheta} q(\btheta\mid\bphi) \log \frac{q(\btheta\mid\bphi)}{p(\btheta\mid \mathbb{D})} \dd\!\btheta\\
    = & \min_{\bphi} \int_{\bTheta} q(\btheta\mid\bphi) \log \frac{q(\btheta\mid\bphi)}{p(\mathbb{D}\mid \btheta)p(\btheta)} \dd\!\btheta \label{eq:elbo-fac} \\
    = & \min_{\bphi} \text{KL}\big[q(\btheta\mid\bphi)\ \big|\big|\ p(\btheta)\big] - \mathbb{E}_{q(\btheta\mid\bphi)}\big[p(\mathbb{D}\mid \btheta)\big]. \label{eq:actual-elbo}
\end{align}

To derive this expression, we exploited the fact that in \cref{eq:elbo-fac}, the expectation of the evidence $p(\mathbb{D})$ under $q(\btheta\mid\bphi)$ is a constant that does not influence the result of the optimization problem.
Since this term is missing from the expression in \cref{eq:actual-elbo}, we refer to the result as the \emph{evidence lower bound} or ELBO.
We can now evaluate the KL divergence in closed form when the proposal distribution and prior are chosen in a convenient form (e.g.\@ Gaussian distributions\index{Normal distribution}), and the respective integrals can again be approximated via Monte Carlo approximation\index{Monte Carlo estimation}, and the parameters $\bphi$ be optimized via gradient descent.
The only missing component is that sampling $\btheta$ from $q(\btheta\mid \bphi)$ must be differentiable, which is achieved via the \emph{reparameterization trick}\index{Reparameterization trick} \citep{opper2009variational, kingma2013auto, rezende2014stochastic}.
To show this, let $\bphi = \{\bmu, \brho\}$ be the parameters of a Gaussian proposal distribution. Then we can obtain differentiable samples by 

\begin{align}\label{eq:reparameterization-trick}
    \beps \sim \mathcal{N}(\mathbf{0}, \mathbf{I});\quad \btheta = \bmu + \brho \circ \beps.
\end{align}

After training, networks parameters can be sampled from $q(\btheta\mid \bphi)$ directly to facilitate Bayesian neural networks\index{Neural network!Bayesian} and evaluate the predictive distribution in \cref{eq:neural-posterior-predictive-mc-estimate}\index{Predictive distribution!Posterior}.
Examples for variational methods for Bayesian neural networks are given by \citet{blundell2015weight, hernandez2015probabilistic, louizos2016structured, krueger2017bayesian, pawlowski2017implicit, zhang2018noisy}.

\paragraph{Stochastic Regularizers.} 
Another line of research has been concerned with the interpretation of neural network regularizers as sources for stochastic network parameter samples.
For instance, dropout \citep{srivastava2014dropout}\index{Dropout} regularizes neural network weights by setting a random subset of them to zero.
This is implemented by sampling a mask from a Bernoulli distribution\index{Bernoulli distribution} with dropout probability $p_\text{dropout}$ and multiplying it with the corresponding weight matrix $\mathbf{W} \in \mathbb{R}^{M \times N}$:\footnote{
    While the intuition of dropout lies in severing neural connections randomly, in practice it is often realized as an additional layer that is applied by zeroing out parts of activations.
    For instance, the parallel work of \citet{kingma2015variational} explores a variational objective using dropout that is applied directly to the activations.
}

\begin{equation}\label{eq:dropout}
    \mathbf{W}_\text{dropout} = \mathbf{W} \circ \mathbf{M}; \quad \{\mathbf{M}_{ij}\}_{i,j=1}^{M,N} \sim \text{Bernoulli}(p_\text{dropout}).
\end{equation}

As \citet{gal2016dropout} argues, we can actually interpret a set of parameters $\btheta_\text{dropout}$ with dropout masks applied to it as a sample from a variational posterior;
therefore, by using dropout at inference time (as opposed to just training time in its original form), we obtain a set of samples $\{\btheta^{(m)}_\text{dropout}\}^M_{m=1}$ that can be inserted back into the MC estimate\index{Monte Carlo estimation} of the predictive distribution in \cref{eq:neural-posterior-predictive-mc-estimate}.
This technique is referred to as \emph{Monte Carlo dropout} (or MC dropout)\index{Dropout!Monte Carlo} and has found a number of extensions over the years \citep{gal2016theoretically, li2017dropout, gal2017concrete, nalisnick2019dropout, boluki2020learnable, durasov2021masksembles}.
A similar reasoning can be applied to batch normalization \citep{ioffe2015batch}:
Batch normalization\index{Batch normalization} works by normalizing the input $\bz^{(l)}$ to a layer via an estimate of its mean and variance 

\begin{equation}\label{eq:batch-normalization}
    \bz^{(l)}_\text{BN} = \frac{\bz^{(l)} - \mathbb{E}[\bz^{(l)}]}{\sqrt{\text{Var}[\bz^{(l)}] + \varepsilon}},
\end{equation}

\noindent where $\varepsilon$ is a small value added to avoid numerical issues, and the mean and variance statistics are estimated empirically during training.
Similar to MC dropout\index{Dropout!Monte Carlo}, \citet{teye2018bayesian, mukhoti2020batch} re-interpret this as a source of stochasticity:
By sampling a single batch from the training set at inference time, we can use it to set our batch statistics for expectation and variance.
By using this batch mean and variance for our current inference, \citeauthor{teye2018bayesian}, we sample different hidden representations, than we interpreted as a result of the randomness in the underlying weights.
In both cases, the advantages are obvious: 
These regularization components are already part of many deep learning architectures, and the only overhead added is by running multiple forward passes per test input, which---for smaller models---might only add negligible overhead.
The more subtle downside lies in the fact variational inference\index{Variational inference} techniques, as these techniques are counted as, tend to only explore limited regions of the posterior distribution \citep{wilson2020bayesian}.
As such, obtained samples might simply not be very representative of $p(\btheta \mid \mathbb{D})$ and lead to subpar predictions and uncertainty estimates.

\paragraph{Laplace Approximations.}
The idea of Laplace approximations\index{Laplace approximation} can indeed by traced back to the eponymous Pierre-Simon Laplace \citep{laplace1774} and has been applied to deep learning first by \citet{mackay1992bayesian}.
In order to approximate a complex distribution $p(\btheta \mid \mathbb{D})$, we first obtain a MAP estimate of the network parameters  $\btheta_\text{MAP}$ as described in the beginning of this section.
We then consider a second-order Taylor expansion for the loss function $\mathcal{L}(\mathbb{D}, \btheta)$ at $\btheta_\text{MAP}$:

\begin{align}\label{eq:laplace-approx}
    & \mathcal{L}(\mathbb{D}, \btheta) \approx \nonumber \\
    & \mathcal{L}(\mathbb{D}, \btheta_\text{MAP}) + \frac{1}{2}(\btheta - \btheta_\text{MAP})\T \big(\nabla^2_{\btheta}\mathcal{L}(\mathbb{D}, \btheta)\big|_{\btheta_\text{MAP}} \big)(\btheta - \btheta_\text{MAP}).
\end{align}

By assuming that $\mathcal{L}(\mathcal{D}, \btheta_\text{MAP})$ is negligible for a fully trained network, we can identify\index{Normal distribution}

\begin{equation}\label{eq:laplace-posterior}
    p(\btheta \mid \mathbb{D}) \approx \mathcal{N}\Big(\btheta\ \Big|\ \btheta_\text{MAP}, -\nabla^2_{\btheta}\mathcal{L}\big(\mathbb{D}, \btheta)\big|_{\btheta_\text{MAP}} \big)^{-1}\Big).
\end{equation}

Unfortunately, the computation of the covariance matrix quickly becomes infeasible for larger models due to the quadratic nature of the Hessian\index{Hessian}.
Therefore, different compromises have been proposed \citep{daxberger2021laplace}, including last-layer approximations \citep{kristiadi2020being, snoek2015scalable}, approximation on subsets of weights \citep{daxberger2021bayesian}, factorizing the Hessian \citep{ritter2018scalable, ritter2018online, kristiadi2020being, yu2023riemannian, bergamin2024riemannian}, or variational approximations \citep{ortega2023variational}.
At inference time, network parameters can be drawn from the approximate posterior as with previous methods.

\paragraph{Ensembling.}\index{Ensembling}
A long-existing method to boost predictive performance has been to train multiple predictors on a problem and to ensemble their outputs \citep{bauer1999empirical, dietterich2000ensemble}.
Combining predictions has already been studied since the late 1960s, e.g.\@ in \citet{bates1969combination, clemen1989combining}, with some works on neural network ensembles already in the 1990s \citep{hansen1990neural, levin1990statistical, liu1999ensemble, zhou2002ensembling}.
After the deep learning\index{Deep learning} revival, \citet{lakshminarayanan2017simple} discovered that deep ensembles do not only improve generalization, but also tend to be well-calibrated and produce high-quality estimates of predictive uncertainty.
While \citet{lakshminarayanan2017simple} frame deep ensembles explicitly as non-Bayesian, \citet{fort2019deep, wilson2020bayesian} later argued that ensembling actually \emph{is} a form of Bayesian model averaging.
Since the members of an ensemble are usually trained independently, they are better at converging to different solutions in the parameter space.
Therefore, ensembles are argued to better represent the often multi-modal weight posterior than some of the methods discussed earlier like MC dropout or Laplace approximations\index{Dropout!Monte Carlo}, which rely on local approximations \citep{fort2019deep}.\\

Naturally, the disadvantage of ensembling\index{Ensembling} lies in having to train multiple predictors, which can be costly for modern, large neural neural networks.
A flurry of research works has investigated alternatives to this costly procedure, such as having ensemble members share weights \citep{antoran2020depth, liu2021deep, durasov2021masksembles, laurent2022packed} or ensembling checkpoints of a model collected over the training \citep{izmailov2018averaging, maddox2019simple, izmailov2020subspace, wilson2020bayesian, yashima2022feature}.
As our understanding of neural loss landscapes improves, works such as \citet{garipov2018loss, cha2021swad} have suggested to create ensembles along low-loss basins.
Other ways to curb computational inference costs involve efficient weight factorization techniques \citep{wenzel2020hyperparameter, wen2020batchensemble, dusenberry2020efficient} or distilling properties of an entire ensemble into a single predictor \citep{malinin2020ensemble, kim2024fast}.
Even when ensemble members are trained independently, they can converge to similar solutions, offsetting their advantage.
Several methods to improve the diversity in ensembles have been proposed \citep{jain2020maximizing, d2021repulsive, el2023deep}, including the ensembling of different architectures \citep{zaidi2021neural}.
Notable are also other explicitly Bayesian ways of ensembling \citep{pearce2020uncertainty, deng2022deep} or connections to mixture-of-experts models \citep{allingham2021sparse}. 

\paragraph{Deep Kernel Learning.}
Gaussian processes\index{Gaussian process} (GP;\@ \citealp{kolmogoroff1941interpolation, wiener1949extrapolation, williams2006gaussian}) are (typically) non-parametric models that predict targets and corresponding uncertainties based on the similarity between training and test points.
These similarities are computed through covariance or kernel functions.
In theory, this creates appealing properties for uncertainty quantification, as unusual inputs should be labeled as uncertain because of their dissimilarity with the observed data.
Nevertheless, scaling Gaussian processes to large amounts of data in known to be challenging (see e.g.\@ discussions in \citealp{williams2006gaussian} or in \citealp{bishop2006pattern}, Chapter 6).
Therefore, \emph{deep kernel learning} \citep{wilson2016deep} fits a Gaussian process layer on top of a deep neural feature extractor.
This has created a number of follow-up works using deep kernel learning\index{Deep kernel learning} for UQ \citep{bradshaw2017adversarial, daskalakis2020scalable, liu2021deep2, van2021feature}, however several authors have noted shortcomings with the approach due to the challenging joint optimization of the GP and neural feature extractor \citep{ober2021promises, van2021feature, schwobel2022last}:
This includes overfitting and feature collapse, where OOD data points are mapped to similar regions of the latent space as training points.
On top of deep kernel learning, there are several connections between neural networks and GPs are given through deep Gaussian processes \citep{damianou2013deep, dunlop2018deep, jakkala2021deep} and the theoretical links between neural networks and GPs \citep{neal1995bayesian, williams1998computation, hensman2014nested, dutordoir2021deep}.

\paragraph{Uncertainty Quantification in Bayesian Networks.}\index{Uncertainty quantification}\index{Uncertainty metric}
So far, we have discussed multiple different methods how to obtain samples from the (approximate) weight posterior, but without mentioning how this aids in obtaining new, useful and disentangled uncertainties.
One way to assess epistemic uncertainty\index{Uncertainty!Epistemic} in this framework is to measure disagreement between predictions for the same input.
Since models tend to be underspecified on OOD\index{Out-of-distribution data} inputs, this is where the predictions from different models will disagree the most in case of high model uncertainty.
In classification, this can be done for instance using the \emph{variation ratio} \citep{freeman1965elementary, gal2016uncertainty}\index{Variation ratio}:
Assuming a set of $B$ samples from the weight posterior, let $\hat{y}^{(b)}$ the predicted label using each set of weights and let $y^{*}$ denote the most commonly predicted label among these.
Then, the variation-ratio \index{Variation ratio} is defined as 

\begin{equation}\label{eq:variation-ratio}
    \text{VR} = 1 - \frac{1}{B}\indicator{\hat{y}^{(b)} = y^{*}}.
\end{equation}

Another way to measure the disagreement between predictions is to simply quantify the average variance of predictions per class\index{Class variance}:

\begin{equation}\label{eq:class-variance}
    \bar{\sigma}^2 = \frac{1}{K}\sum_{K=1}^K \mathbb{E}_{p(\btheta \mid \mathbb{D})}\big[P_{\btheta}(y=k \mid \bx)^2\big] - \mathbb{E}_{p(\btheta \mid \mathbb{D})}\big[P_{\btheta}(y=k \mid \bx)\big]^2.
\end{equation}

A more theoretically motivated approach to isolate epistemic uncertainty\index{Uncertainty!Epistemic} is to consider the mutual information\index{Mutual information} between model parameters and a data sample \citep{depeweg2018decomposition, gal2018understanding}:

\begin{equation}\label{eq:mutual-information}
    \underbrace{\text{I}\big[y, {\btheta}\ \big|\ \mathbb{D}, \bx\big]}_{\text{Model uncertainty}} = \underbrace{\text{H}\Big[\mathbb{E}_{p(\btheta \mid \mathbb{D})}\big[P_{\btheta}(y \mid \bx)\big]\Big]}_{\text{Total uncertainty}} - \underbrace{\mathbb{E}_{p(\btheta \mid \mathbb{D})}\Big[\text{H}\big[P_{\btheta}(y \mid \bx)\big]\Big]}_{\text{Data uncertainty}}.
\end{equation}

The term itself can be interpreted as the gain in information about the ideal model parameters and correct label upon receiving an input. 
If we can only gain a little, that implies that parameters are already well-specified and that the epistemic uncertainty is low.
In both cases of \cref{eq:class-variance,eq:mutual-information} the expectation can be approximated through Monte Carlo approximation, i.e.

\begin{equation}
    \mathbb{E}_{p(\btheta \mid \mathbb{D})}\big[P_{\btheta}(y=k \mid \bx)\big] \approx \frac{1}{B}\sum_{b=1}^B P(y=k \mid \bx, \btheta^{(b)}).
\end{equation}

\subsection{Evidential Neural Networks}\label{sec:evidential-neural-networks}

\begin{footnotesize}
    \vspace{-2.25ex}
    \emph{The following work is based on \citet{ulmer2023prior}}.\\
    \vspace{2.25ex}
\end{footnotesize}

In the last section, we explored many different approaches to quantify different kinds of uncertainty by obtaining samples from the weight posterior $p(\bm{\theta} \mid \mathbb{D})$.
However, we saw that this can be a challenging endeavor, since samples might be expensive to obtain or not very representative of the actual posterior distribution.
Alternatively, we can factorize \cref{eq:neural-posterior-predictive} further and use a point estimate for the weights to obtain a tractable form\index{Predictive distribution!Posterior}:

\begin{align}\label{eq:prior-networks-fac}
    P(y \mid \bx, \mathbb{D}) & = \int p(\bx^\prime \mid \btheta)p(\btheta \mid \mathbb{D}){\ddd}\!\btheta \\
    & = \int\hspace{-0.25cm}\int \underbrace{P(y \mid \bm{\pi})}_{\vphantom{\big[}\text{Aleatoric}}\underbrace{p(\bm{\pi} \mid \bx, \bm{\theta})}_{\vphantom{\big[}\text{\ Distributional\ }}\underbrace{p(\bm{\theta} \mid \mathbb{D})}_{\vphantom{\big[}\text{Epistemic}}\!{\ddd}\!\hspace{0.05cm}{\bpi}{\ddd}\!\btheta \\
    & \approx \int P(y \mid \bm{\pi})\underbrace{\vphantom{\big[}p(\bm{\pi} \mid \bx, \hat{\bm{\theta}})}_{ p(\bm{\theta} \mid \mathbb{D}) \approx \delta(\bm{\theta}-\hat{\bm{\theta}})}\!{\ddd}\!\bpi.
\end{align}

In the last step, \citeauthor{malinin2018predictive} replace $p(\bm{\theta} \mid \mathbb{D})$ by a point estimate $\hat{\bm{\theta}}$ using the Dirac delta function\index{Dirac delta function}, i.e.\@ a single trained neural network, to get rid of the intractable integral.\footnote{
    In the context of \cref{eq:prior-networks-fac}, it should be noted that restricting oneself to a point estimate of the network parameters prevents the epistemic uncertainty estimation through the weight posterior $p(\bm{\theta} \mid \mathbb{D})$, as discussed in the previous section. 
    However, there are works like \citet{haussmann2019bayesian, zhao2020uncertainty} that combine both approaches.
}
This factorization contains another type of uncertainty, which \citet{malinin2018predictive} call the \emph{distributional} uncertainty\index{Uncertainty!Distributional}; uncertainty caused by the mismatch of training and test data distributions. 
Although another integral remains, retrieving the uncertainty from this predictive distribution actually has a closed-form analytical solution, as we will see later. 
The advantage of this approach is further that it allows us to distinguish uncertainty about a data point because it is ambiguous\index{Ambiguity}, from uncertainty caused by a point coming from an entirely different data distribution\index{Shift!Distributional}. 
This approach to UQ\index{Uncertainty quantification} it called \emph{evidential deep learning} (EDL)\index{Deep learning!Evidential}, and originates from the work of \citet{sensoy2019evidential}.
They originally base their motivation on the \emph{theory of evidence}\index{Dempster-Shafer theory of evidence} \citep{dempster1968generalization, audun2018subjective}: 
Within the theory, belief mass is assigned to set of possible states, e.g.\@ class labels, and can also express a lack of evidence, i.e.\@ an ``I don't know''. 
We can apply this idea to the predicted output of a neural classifier using the Dirichlet distribution\index{Dirichlet distribution}, allowing us to express a lack of evidence through a uniform Dirichlet.
In this way, the neural network does not parameterize a single (categorical) distribution\index{categorical distribution}, but a \emph{distribution over distributions}, also referred to as a second-order distribution.
This is different from a uniform (first-order) categorical distribution, which does not distinguish an equal probability for all classes from a lack of evidence, or differently phrased:
One cannot distinguish whether the distribution is uniform due to uncertainty, or confidently uniform due to ambiguity.
In the following, we define EDL\index{Deep learning!Evidential} as a family of approaches in which a neural network can fall back onto a uniform prior for unknown inputs. 
While neural networks usually parameterize likelihood functions\index{Likelihood}, approaches in this survey parameterize prior or posterior distributions\index{Prior distribution}\index{Posterior distribution} instead, as we will show next.

\begin{figure}[t]
    \centering

    \begin{minipage}{0.985\textwidth}
        \begin{subfigure}[t]{\linewidth}  
            \centering
            \includegraphics[width=\textwidth]{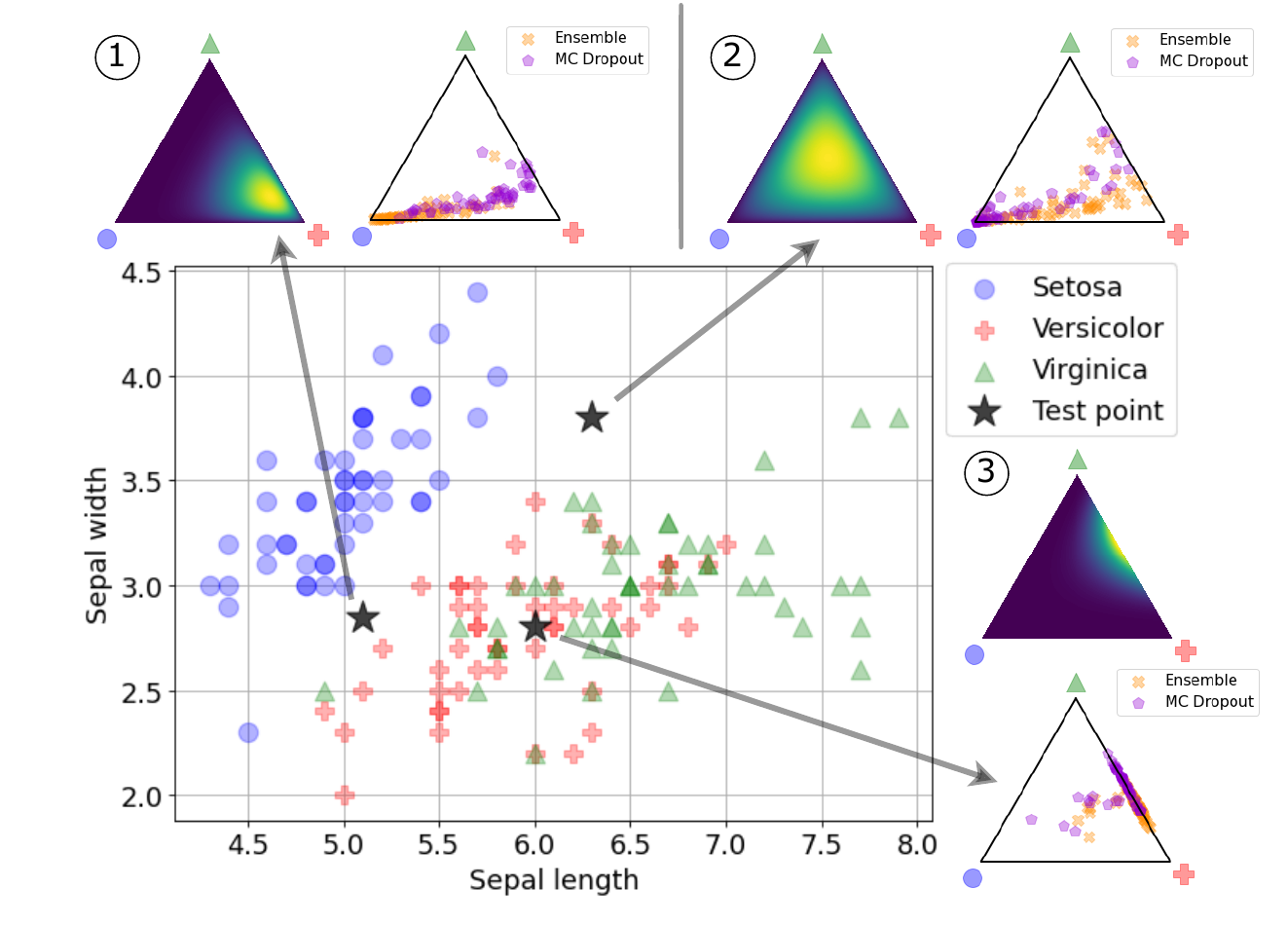}
        \end{subfigure}
    \end{minipage}
    \begin{minipage}{0.985\textwidth}
        \begin{subfigure}[h]{0.32\linewidth}
            \centering
            \includegraphics[width=0.85\textwidth]{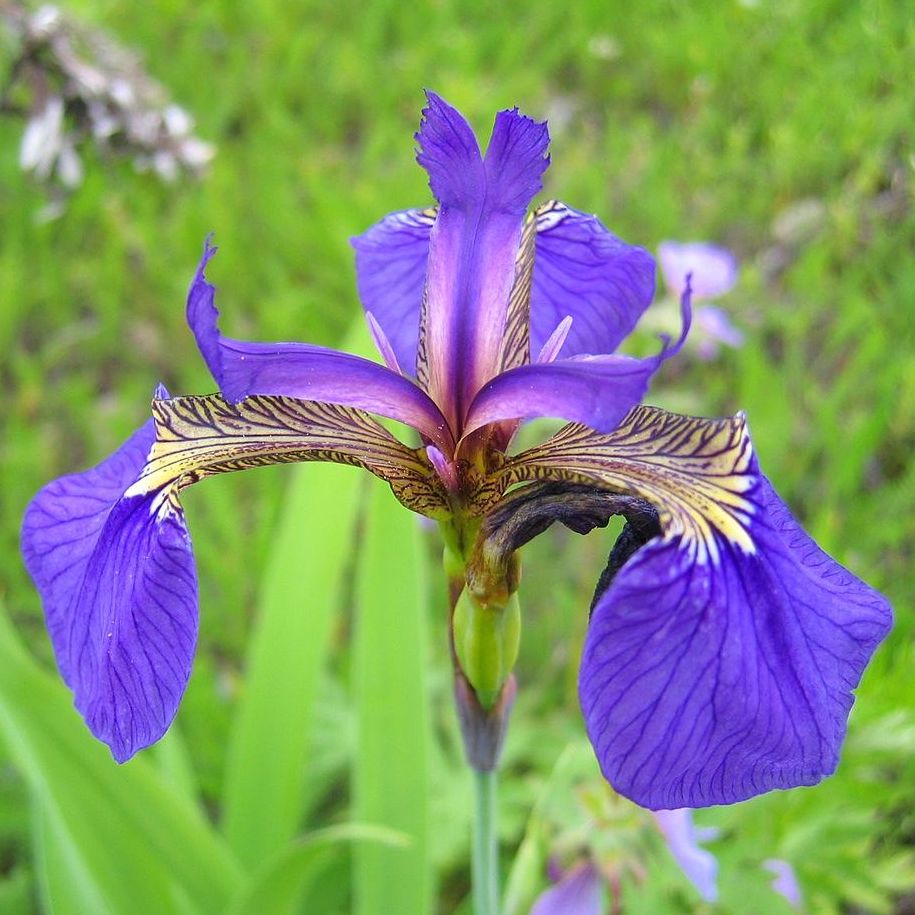}
            \caption{\emph{Iris setosa}}\label{subfig:setosa}
        \end{subfigure}
        \hfill
        \begin{subfigure}[h]{0.32\linewidth} 
            \centering
            \includegraphics[width=0.85\textwidth]{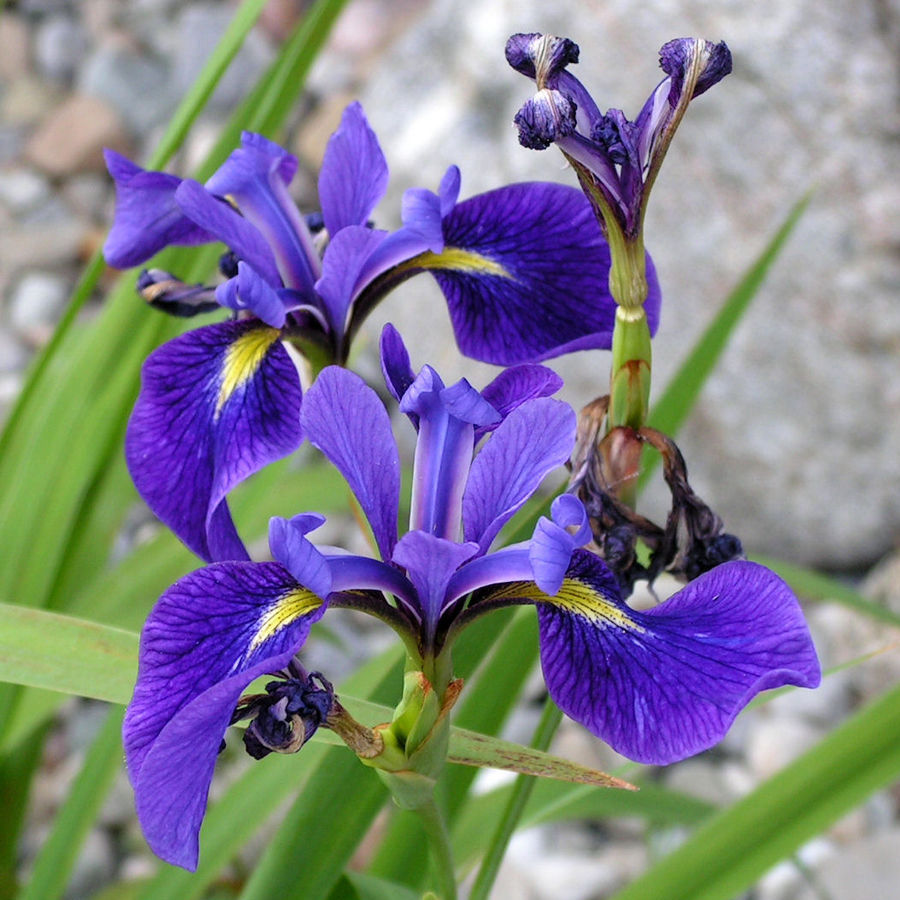}
            \caption{\emph{Iris versicolor}}\label{subfig:versicolor}
        \end{subfigure}
        \hfill
        \begin{subfigure}[h]{0.32\linewidth} 
            \centering
            \includegraphics[width=0.85\textwidth]{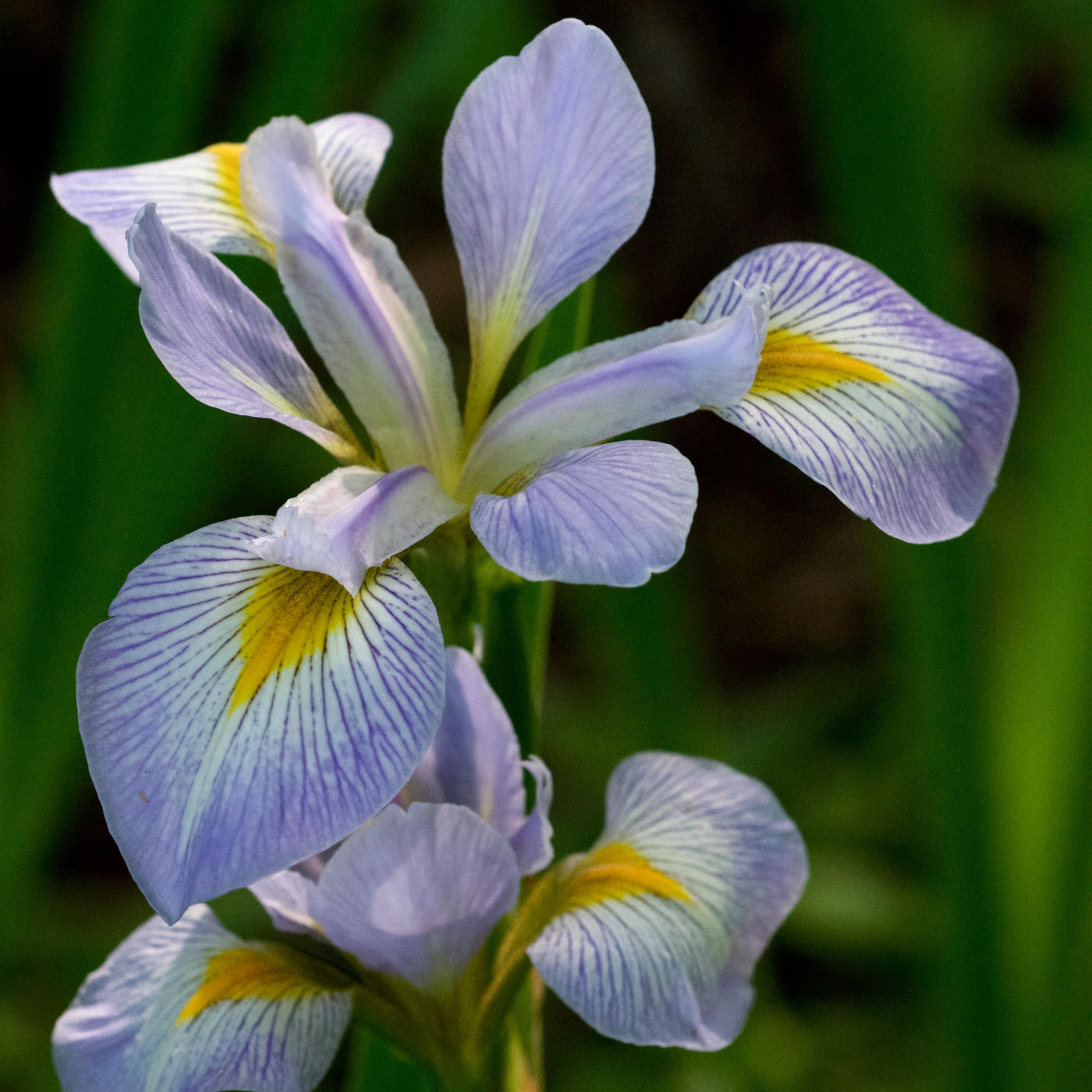}
            \caption{\emph{Iris virginica}}\label{subfig:virginica}
        \end{subfigure}
    \end{minipage}%

    \caption[Comparison of MC dropout, deep ensembles and prior networks on the Iris dataset.]{Illustration of different approaches to uncertainty quantification on the Iris dataset, with examples for the classes given on the left (\cref{subfig:setosa,subfig:versicolor,subfig:virginica}). On the right, the data  is plotted alongside some predictions of a prior network (lighter colors indicate higher density) and an ensemble and MC dropout model on the probability simplex, with $50$ predictions each. Iris images were taken from \citealp{iris_setosa, iris_versicolor, iris_virginica}.}
    \label{fig:iris-example}
\end{figure}

\paragraph{An Illustrating Example: The Iris Dataset.} 
We train a deep neural network ensemble\index{Ensembling} \citep{lakshminarayanan2017simple} with $50$ model instances, a model with MC Dropout\index{Dropout!Monte Carlo} \citep{gal2016dropout} with $50$ predictions and a prior network\index{Prior network} \citep{sensoy2019evidential}, an example of EDL\index{Deep learning!Evidential}, on all available data points, and plot their predictions on three test points on the probability simplex in \cref{fig:iris-example}.\footnote{
    For information about training and model details, see \cref{app:iris-code-details}.
} 
On these simplices, each point signifies a categorical distribution, with the proximity to one of the corners indicating a higher probability for the corresponding class. 
EDL methods for classification do not predict a single output distribution, but an entire \emph{density over output distributions}.
Test point \circled{3} lies in a region of overlap between instances of \emph{Iris versicolor} and \emph{Iris virginica}, thus inducing high aleatoric uncertainty\index{Uncertainty!Aleatoric}. 
In this case, we can see that the prior network\index{Prior network} places all of its density around the vertex between these two classes, similar to most of the predictions of the ensemble\index{Ensembling} and MC dropout\index{Dropout!Monte Carlo} (bottom right). 
However, some of the latter predictions still land in the center of the simplex. The point \circled{1} is located in an area without training examples between instances of \emph{Iris versicolor} and \emph{setosa}, as well as close to a single \emph{virginica} outlier. 
As shown in the top left, ensemble and MC dropout predictions agree that the point belongs to either the \emph{setosa} or \emph{versicolor} class, with a slight preference for the former. 
The prior network concentrates its prediction on \emph{versicolor}, but admits some uncertainty towards the two other choices. 
The last test point \circled{2} is placed in an area of the feature space devoid of any data, roughly equidistant from the three clusters of flowers. 
Similar to the previous example, the ensemble and MC dropout predictions on the top right show a preference for \emph{Iris setosa} and \emph{versicolor}, albeit with higher uncertainty. 
The prior network however shows an almost uniform density, admitting distributional uncertainty about this particular input.
This simple example provides some insights into the potential advantages of EDL: 
First of all, the prior network\index{Prior network} was able to provide reasonable uncertainty estimates in comparison with Bayesian model averaging methods\index{Bayesian model averaging}. 
Secondly, the prior network is able to admit its lack of knowledge for the OOD\index{Out-of-distribution data} data point by predicting an almost uniform prior, something that the other models are not able to. 
Lastly, training the prior network only required a single model, which is a noticeable speed-up compared to MC dropout and especially the training of ensembles. 

\paragraph{Parameterization.}
We start from a categorical distribution\index{categorical distribution} over classes, defined as:

\begin{equation}
    \text{Categorical}(y \mid \bm{\pi}) = \prod_{k=1}^K \pi_k^{\indicator{y=k}},
\end{equation}

\noindent in which $K$ denotes the number of categories or classes, and the class probabilities are expressed using a vector $\bm{\pi} \in [0, 1]^K$ with $\sum_k \pi_k = 1$, and $\indicator{\cdot}$ is the indicator function. 
In this setting, the Dirichlet distribution\index{Dirichlet distribution} arises as a suitable prior and multivariate generalization of the Beta distribution\index{Beta distribution} (and is thus also called the \emph{multivariate Beta distribution}):

\begin{align}\label{eq:conjugate-dirichlet}
    \text{Dir}(\bm{\pi}; \bm{\alpha}) = \frac{1}{\text{B}(\bm{\alpha})}\prod_{k=1}^K \pi_k^{\alpha_k-1};\quad \text{B}(\bm{\alpha}) = \frac{\prod_{k=1}^K\Gamma(\alpha_k)}{\Gamma(\alpha_0)};\quad \alpha_0 = \sum_{k=1}^K \alpha_k,
\end{align}

\noindent where $\alpha_k \in \mathbb{R}^+$ and the Beta function $\text{B}(\cdot)$ is defined for $K$ shape parameters compared to \cref{eq:beta-prior}.
The distribution is characterized by its \emph{concentration parameters} $\bm{\alpha}$, the sum of which, often denoted as $\alpha_0$, is called the \emph{precision}.\footnote{
    The precision is analogous to the precision of a Gaussian\index{Normal distribution}, where a larger $\alpha_0$ signifies a sharper distribution.
} 
The Dirichlet is a \emph{conjugate prior}\index{Conjugacy} for such a categorical likelihood\index{Likelihood}, meaning that according to Bayes' rule\index{Bayes' theorem}, it produces a Dirichlet posterior with parameters $\bm{\beta}$, given a data set $\mathbb{D} = \{(x_i, y_i)\}_{i=1}^N$ of $N$ observations with corresponding labels:

\begin{align}\label{eq:dirichlet-posterior}
    & p(\bm{\pi} \mid \mathbb{D}, \bm{\alpha}) \propto p\big(\{y_i\}_{i=1}^N \mid \bm{\pi}, \{x_i\}_{i=1}^N\big)p(\bm{\pi} \mid \bm{\alpha}) \nonumber \\
    & = \prod_{i=1}^N\prod_{k=1}^K \pi_k^{\indicator{y_i = k}}\frac{1}{\text{B}(\bm{\alpha})}\prod_{k=1}^K \pi_k^{\alpha_k-1} \\
    & = \prod_{k=1}^K \pi_k^{\big(\sum_{i=1}^N\indicator{y_i = k}\big)}\frac{1}{\text{B}(\bm{\alpha})}\prod_{k=1}^K \pi_k^{\alpha_k-1} \\
    & = \frac{1}{\text{B}(\bm{\alpha})}\prod_{k=1}^K \pi_k^{N_k + \alpha_k-1} \propto \text{Dir}(\bm{\pi}; \bm{\beta}),
\end{align}

\noindent where $\bm{\beta}$ is a vector with $\beta_k = \alpha_k + N_k$, with $N_k$ denoting the number of observations for class $k$. 
Intuitively, this implies that the prior belief encoded by the initial Dirichlet is updated using the actual data, sharpening the distribution for classes for which many instances have been observed. 
The Dirichlet is a \emph{distribution over categorical distributions} on the $K-1$ probability simplex---while a neural classifier is usually realized as a function $f_{\bm{\theta}}: \mathbb{R}^D \rightarrow \mathbb{R}^K$, mapping an input $\bx \in \mathbb{R}^D$ to \emph{logits} for each class. 
Followed by a softmax function\index{Softmax function}, this then defines a categorical distribution over classes with a vector $\bm{\pi}$ with $\pi_k \equiv P_{\btheta}(y=k \mid \bx)$. 
The same underlying architecture can be used without any major modification to instead parameterize a Dirichlet, predicting a distribution over categorical distributions $p(\bm{\pi} \mid \bx, \hat{\bm{\theta}})$ as in \cref{eq:conjugate-dirichlet}.\footnote{
    The only thing to note here is that the every $\alpha_k$ has to be strictly positive, which can for instance be enforced by using an additional softplus, exponential or ReLU function \citep{sensoy2019evidential, malinin2018predictive, sensoy2020uncertainty}.
} 
In order to classify a data point $\bx$, a categorical distribution is created from the predicted concentration parameters of the Dirichlet\index{Dirichlet distribution} as follows (this corresponds to the mean of the Dirichlet, see \cref{app:expectation-dirichlet}):

\begin{equation}\label{eq:dirichlet-parameterization}
    \bm{\alpha} = \exp\big(f_{\bm{\theta}}(\bx)\big);\quad \pi_k = \frac{\alpha_k}{\alpha_0};\quad \hat{y} = \argmax_{k \in [K]}\ \pi_1, \ldots, \pi_K.
\end{equation}

\begin{figure}[tb]
        \centering
        \begin{subfigure}[t]{0.235\linewidth}
            \centering
            \includegraphics[width=0.99\linewidth]{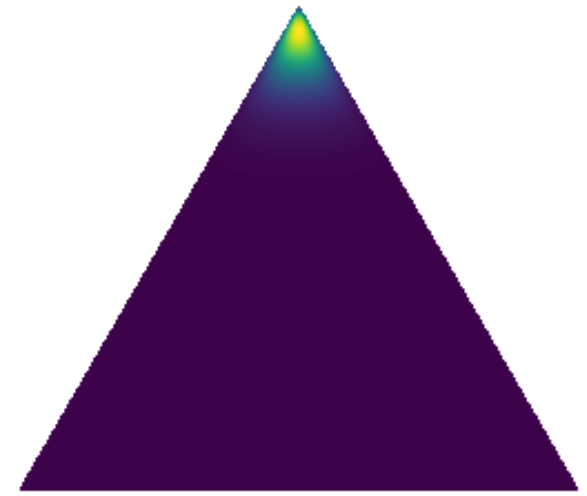}
            \caption{\footnotesize Confident prediction.}
            \label{subfig:simplex-confident}
        \end{subfigure}
        \hfill
        \begin{subfigure}[t]{0.235\linewidth}
            \centering
            \includegraphics[width=0.99\linewidth]{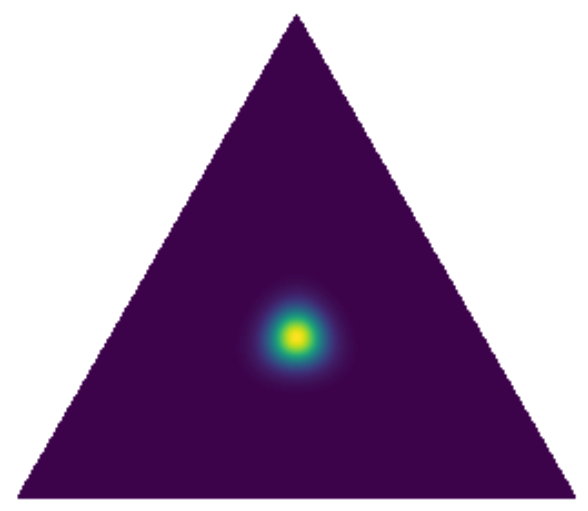}
            \caption{\footnotesize Aleatoric uncertainty.}
            \label{subfig:simplex-aleatoric}
        \end{subfigure}
        \hfill
        \begin{subfigure}[t]{0.235\linewidth}
            \centering
            \includegraphics[width=0.99\linewidth]{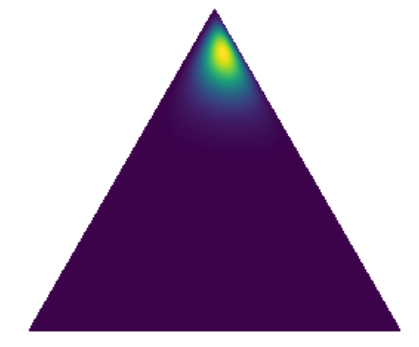}
            \caption{\footnotesize Epistemic uncertainty.}
            \label{subfig:simplex-epistemic}
        \end{subfigure}
        \hfill
        \begin{subfigure}[t]{0.235\linewidth}
            \centering
            \includegraphics[width=0.99\linewidth]{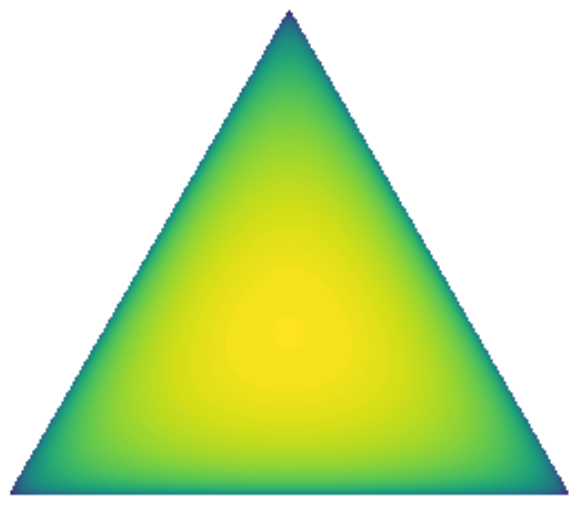}
            \caption{\footnotesize Distributional uncertainty.}
            \label{subfig:simplex-distributional}
        \end{subfigure}
        \caption[Examples of the probability simplex for a $K=3$ classification problem.]{Examples of the probability simplex for a $K=3$ classification problem, where every corner corresponds to a class and every point to a categorical distribution, and brighter colors correspond to higher density. 
        Shown is the (desired) Behavior of Dirichlet in different scenarios by \citet{malinin2018predictive}: 
        (a) For a confident prediction, the density is concentrated in the corner of the simplex corresponding to the assumed class. 
        (b) In the case of aleatoric uncertainty, the density is concentrated in the center, and thus uniform categorical distributions are most likely. 
        (c) In the case of model uncertainty, the density may still be concentrated in a corner, but more spread out, expressing the uncertainty about the right prediction. 
        (d) In the case of an OOD input, a uniform Dirichlet expresses that any categorical distribution is equally likely, since there is no evidence for any known class.  
        }
        \label{fig:simplex}
    \end{figure}

\paragraph{Uncertainty Quantification in EDL.}
\index{Deep learning!Evidential}\index{Uncertainty metric}
Let us now turn our attention to how to estimate the aleatoric\index{Uncertainty!Aleatoric}, epistemic\index{Uncertainty!Epistemic} and distributional uncertainty\index{Uncertainty!Distributional} within the Dirichlet framework. 
In \cref{fig:simplex}, we show different (ideal) shapes of a Dirichlet distribution parameterized by a neural network, corresponding to different cases of uncertainty, where each point on the simplex represents a categorical distribution, with proximity to a corner indicating a high probability for the corresponding class.
However, since we do not want to inspect Dirichlets visually, we instead use closed-form expressions to quantify uncertainty. 
To obtain a measure of data uncertainty, we can evaluate the expected entropy\index{Entropy!Expected} of the data distribution $P(y \mid \bm{\pi})$.
As the entropy captures the ``peakiness'' of the output distribution, a lower entropy indicates that the model is concentrating most probability mass on a single class, while high entropy characterizes a more uniform distribution---the model is undecided about the right prediction. 
For Dirichlet networks, this quantity has a closed-form solution (for the full derivation, refer to \cref{app:expected-entropy}):

\begin{equation}\label{eq:expected-entropy}
    \mathbb{E}_{p(\bm{\pi} \mid \bx, \hat{\bm{\theta}})}\Big[\text{H}\big[P(y \mid \bm{\pi})\big]\Big] = - \sum_{k=1}^K\frac{\alpha_k}{\alpha_0}\bigg(\psi(\alpha_k+1) -  \psi(\alpha_0+1)\bigg),
\end{equation}

\noindent where $\psi$ denotes the digamma function\index{Digamma function}, defined as $\psi(x) = \frac{d}{d x} \log \Gamma(x)$, and $\text{H}$ the Shannon entropy.
As we saw in \cref{eq:prior-networks-fac}, we can avoid the intractable integral over network parameters $\btheta$ by using a point estimate $\hat{\btheta}$.\footnote{
    When the distribution over parameters in \cref{eq:prior-networks-fac} is retained, alternate expressions of the aleatoric and epistemic uncertainty are derived by \citet{woo2022analytic}\index{Uncertainty!Aleatoric}\index{Uncertainty!Epistemic}.
}
This means that computing the model uncertainty via the weight posterior $p(\bm{\theta} \mid \mathbb{D})$ like in \cref{sec:bayesian-neural-networks} is not possible.  
Nevertheless, a key property of Dirichlet networks is that epistemic uncertainty\index{Uncertainty!Epistemic} is expressed through the spread of the Dirichlet distribution\index{Dirichlet distribution} (for instance in \cref{fig:simplex} (c) and (d)). 
Therefore, the epistemic uncertainty can be quantified considering the concentration parameters $\bm{\alpha}$ that shape this distribution: 
\citet{charpentier2020posterior} simply consider the maximum $\alpha_k$ as a score akin to the maximum probability score by \citet{hendrycks2017baseline}, while \citet{sensoy2019evidential} compute it by $K / \sum_{k=1}^K (\alpha_k + 1)$ or simply $\alpha_0$ \citep{charpentier2020posterior}. 
In both cases, the underlying intuition is that larger $\alpha_k$ produce a sharper density, and thus indicate increased confidence in a prediction.
Lastly, the distributional uncertainty\index{Uncertainty!Distributional} can be quantified by computing the difference between the total amount of uncertainty and the data uncertainty\index{Uncertainty!Aleatoric} (similar to the reasoning behind \cref{eq:mutual-information}), which can be expressed through the mutual information\index{Mutual information} between the label $y$ and its categorical distribution $\bm{\pi}$:

\begin{equation}\label{eq:dirichlet-mi}
    \text{I}\big[y, \bm{\pi}\ \big|\ \bx, \mathbb{D}\big] = \underbrace{\text{H}\Big[\mathbb{E}_{p(\bm{\pi} \mid \bx, \mathbb{D})} \big[P(y \mid \bm{\pi})\big]\Big]}_{\text{Total Uncertainty}} - \underbrace{\mathbb{E}_{p(\bm{\pi} \mid \bx, \mathbb{D})}\Big[\text{H}\big[P(y \mid \bm{\pi})\big]\Big]}_{\text{Data Uncertainty}}.
\end{equation}

This quantity expresses how much information we would receive about $\bpi$ if we were given the label $y$, conditioned on the new input $\bx$ and the training data $\mathbb{D}$. 
In regions in which the model is well-defined, receiving $y$ should not provide much new information about $\bpi$---and thus the mutual information would be low. 
Yet, such knowledge should be very informative in regions in which few data have been observed, and there this mutual information would indicate higher distributional uncertainty\index{Uncertainty!Distributional}.
Given that $\mathbb{E}[\pi_k] = \frac{\alpha_k}{\alpha_0}$ (\cref{app:expectation-dirichlet}) and assuming the point estimate $p(\bm{\pi} \mid \bx, \mathbb{D}) \approx p(\bm{\pi} \mid \bx, \hat{\bm{\theta}})$ to be sufficient \citep{malinin2018predictive}, we obtain an expression very similar to \cref{eq:expected-entropy}:

\begin{equation}
    \text{I}\big[y, \bm{\pi}\ \big|\ \bx, \mathbb{D}\big] = - \sum_{k=1}^K \frac{\alpha_k}{\alpha_0}\Big(\log \frac{\alpha_k}{\alpha_0} -\psi(\alpha_k+1) + \psi(\alpha_0+1)\Big).
\end{equation}

We mentioned before how \cref{fig:simplex} illustrates idealized behaviors of the Dirichlet distributions\index{Dirichlet distribution}. 
Therefore, any closed-form expressions of different uncertainties can only be effective when the desired shape of the distribution is attained.
Similarly, the naive parameterization in \cref{eq:dirichlet-parameterization} is not to guaranteed to succeed in this goal, and the literature has proposed different methods to attain this goal.
They can broadly be classified into two families:
\emph{Prior networks}\index{Prior network}, which parameterize the Dirichlet prior distribution and employ custom training procedures and regularizers,
and \emph{posterior networks}\index{Posterior network}, which instead parameterize a Dirichlet posterior like in \cref{eq:dirichlet-posterior} instead.\footnote{
    We now give a brief overview over these approaches with a focus on classification problems.
    For a more comprehensive account that also includes regression problems, refer to \citet{ulmer2023prior}.
}

\paragraph{Prior Networks.}
Prior networks\index{Prior network} can be further subcategorized into two sets, namely OOD-free approaches or OOD-dependent approaches\index{Out-of-distribution data}.
In the first case, we regulate the behavior of the Dirichlet distribution on OOD inputs by adding a regularizer that penalizes any density allocated to regions that do not correspond to the gold label.
One such option is to decrease the Kullback-Leibler divergence\index{Kullback-Leibler divergence} from a uniform Dirichlet (see \cref{app:kl-dirichlets}):

\begin{equation}
    \text{KL}\big[p(\bm{\pi} \mid \bm{\alpha})\ \big|\big|\ p(\bm{\pi} \mid \bm{1})\big] = \log \frac{\Gamma(K)}{\text{B}(\bm{\alpha})} + \sum_{k=1}^K (\alpha_k - 1)\big(\psi(\alpha_k) - \psi(\alpha_0)\big).
\end{equation}

Other options are the use of Rényi divergences\index{Rényi divergence} \citep{tsiligkaridis2019information}, regularizers derived from PAC-bounds \citep{haussmann2019bayesian}, or $l_p$-norms \citep{sensoy2019evidential, tsiligkaridis2019information}.
Alternatively, some works also try to transfer the uncertainty from a set of Bayesian predictors into a single prior network\index{Prior network} \citep{malinin2020ensemble, fathullah2022self} using knowledge distillation \citep{hinton2015distilling}.
When OOD data is available, we also explicitly train the prior network to maximize its entropy on such examples \citep{malinin2018predictive, malinin2019reverse, nandy2020towards}, which can for instance be implemented using the closed-form solution in \cref{app:entropy-dirichlet}:

\begin{equation}
    \text{H}\big[p(\bm{\pi} \mid \balpha)\big] = \log \text{B}(\bm{\alpha}) + (\alpha_0 - K)\psi(\alpha_0) - \sum_{k=1}^K (\alpha_k - 1)\psi(\alpha_k).
\end{equation}

Unfortunately though, it should be noted that such data is often not available or in the first place, or cannot guarantee robustness against \emph{other} kinds of unseen OOD data\index{Out-of-distribution data}, of which infinite types exist in a real-valued feature space.\footnote{
    The same applies to the synthetic OOD data in \citet{chen2018variational, shen2020modeling, sensoy2020uncertainty}.
}

\paragraph{Posterior Networks.}\index{Posterior network}
When parameterizing \cref{eq:dirichlet-posterior} instead of the Dirichlet prior\index{Prior distribution}, the neural networks\index{Neural network} now predicts the update $N_k$ instead, and the prior parameters $\balpha$ are typically set to be uniform.
Nevertheless, we still need to gently guide the resulting Dirichlet posterior\index{Posterior distribution} to attain its desired uncertainty behavior.
Similar to prior networks, this can be done with an entropy regularizer \citep{sensoy2019evidential} or additional training objective on OOD examples, including works that create synthetic OOD inputs using additional generative models \citep{sensoy2020uncertainty, hu2021multidimensional}.
More interestingly, \citet{charpentier2020posterior, stadler2021graph, charpentier2021natural} use normalizing flows \citep{rezende2015variational} trained on the model's latent representations to compute the update $N_k$.
By modeling the latent density, this allows us to update the uniform prior by a lot when the latent encoding is familiar, and leave the prior ignorance intact when it is not, and is therefore assigned a low probability by the normalizing flow\index{Normalizing flow}.

\subsection{Other Approaches}\label{sec:other-approaches}

A number of other methods for UQ do not neatly fall into the categories we discussed so far.
This includes for instance some works that see the layer-wise transformations happening inside a neural network as a dynamical system that can be modeled through neural stochastic differential equations (SDEs; \citealp{kong2020sde, wang2021curved, wang2021neural, xu2022infinitely}).
By parameterizing the drift and diffusion terms of a SDE\index{Stochastic differential equation} by neural networks\index{Neural network}, the diffusion network can be used to predict model uncertainty.
\citet{ma2023probabilistic} parameterize a layer-wise mean and covariance instead, but do not embed these in a SDE.
In a completely different approach, \citet{hu2022vibration} obtain a sequence of probabilities for a specific input from different model snapshots during training, and then quantify the uncertainty in the frequency domain after applying a discrete Fourier transform.
\citet{papernot2018deep, jiang2018trust} compare the output of a predictor to that of a simple nearest-neighbor classifier to quantify uncertainty, and \citet{anirudh2021delta} compare latent embeddings to a number of anchor points.

\paragraph{Direct Uncertainty Prediction.}\index{Uncertainty}
So far, we have treated uncertainty as something to be extracted from a model that, in general, is performing a different task, such as classification or regression.
But what if we can just treat UQ\index{Uncertainty quantification} as a supervised learning task, learning to predict an uncertainty score from an input?
For instance, \citet{geifman2019selectivenet} propose to add another prediction head to a model which predicts when the model should abstain from a potentially false output.
The same option is instead parameterized as an additional class in a classification problem by \citet{liu2019deep}.
Alternatively, the the confidence\index{Confidence} of a network can also be obtained from an independent network \citep{corbiere2019addressing, corbiere2021confidence, luo2021learning, fathullah2023needs, liu2024uncertainty}, which is also what \cref{ch:uncertainty-llms} discusses in the context of LLMs.
This model can also take the shape of a Gaussian process\index{Gaussian process}, as demonstrated by \citet{qiu2022detecting}.\\

Instead of setting up this additional model as a classifier, we can also employ a density estimator to derive the uncertainty of a target model, similar to posterior networks\index{Posterior network} in \cref{sec:evidential-neural-networks}.
This again follows the idea that a density estimator would be able to indicate when a given test point lies outside of the known training distribution.
As estimation of density can be achieved through Gaussian discriminant analysis on the latent representations \citep{mukhoti2021deterministic, franchi2022latent}, distances between latent features \citep{huang2021decomposing}, kernel density estimators \citep{kotelevskii2022nonparametric, sun2024flagged} or normalizing flows \citep{lahlou2021deup}.
Some of these methods are benchmarked by \citet{postels2021practicality}, showing some sensitivity to distributional shifts\index{Shift!Distributional} nevertheless.

\begin{figure}[htb]
    \centering
    \begin{subfigure}[t]{0.46\linewidth}
        \centering
        \includegraphics[width=0.95\linewidth]{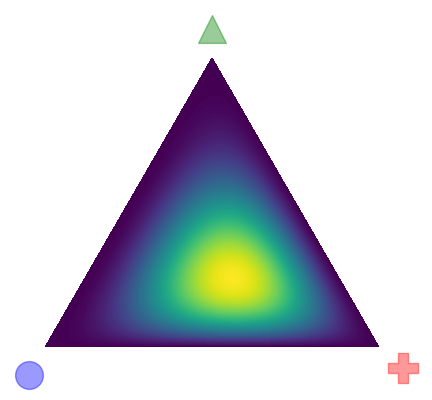}
        \caption{Prior network prediction.}
        \label{subfig:simplex-epistemic}
    \end{subfigure}
    \hfill
    \begin{subfigure}[t]{0.52\linewidth}
        \centering
        \includegraphics[width=0.99\linewidth]{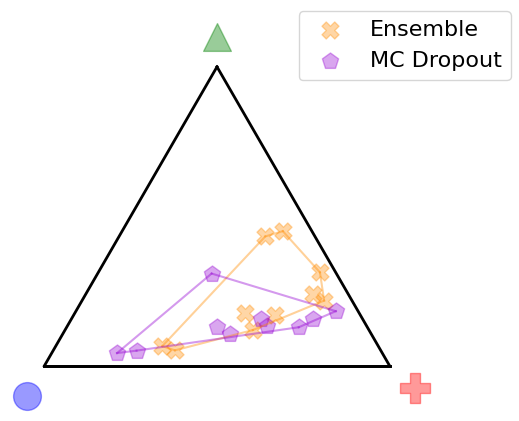}
        \caption{Credal sets from convex hulls.}
        \label{subfig:credal-sets}
    \end{subfigure}
    \caption[Juxtaposition of a prior network and credal sets constructed from the convex hull of ensemble and MC dropout predictors.]{Juxtaposition of a prior network and credal sets constructed from the convex hull of ensemble and MC dropout predictors.}\label{fig:credal-sets}
\end{figure}

\paragraph{Credal Sets.}\index{Credal sets}
Credal sets are based on the theory of imprecise probabilities\index{Imprecise probabilities} \citep{boole1854investigation, keynes1921treatise, walley1991statistical}.
The theory focuses on the idea that while there might a model that precisely describes a probability of interest, it may not be known, for instance due to vague, conflicting or scarce data\index{Data scarcity} \citep{caprio2023credal}.
One option to model this impreciseness is the use \emph{credal sets}, which are sets of credible probability distributions.
Like EDL\index{Deep learning!Evidential} methods in \cref{sec:evidential-neural-networks}, they are defined on the probability simplex, but in contrast are not distributions, but convex sets instead.
More intuitively, we can see a label $y$ as a sample from the conditional distribution $y \sim P(y \mid \mathbf{x})$.
Now, let the probability simplex for a classification problem with $K$ classes be defined as 

\begin{equation}
    \Delta^{K-1} = \{ \blambda = (\lambda_1, \ldots, \lambda_K)\T \mid \lambda_k \ge 0, ||\blambda||_1 = 1 \} \subset \mathbb{R}^K,
\end{equation}

\noindent and thus we can see that every $P(y \mid \mathbf{x}) \in \Delta^{K-1}$.
A credal set $\mathcal{Q}$ is now a convex subset of this simplex, i.e.\@ $\mathcal{Q} \subseteq \Delta^{K-1}$.\index{Credal sets}
As with evidential methods, ignorance about a prediction can be represented through including the whole simplex, so $\mathcal{Q} = \Delta^{K-1}$.
Since the combination with neural models is still a nascent field of research, learning credal sets can be challenging.
Existing ideas include self-supervised learning \citep{lienen2021credal, lienen2023conformal}, or creating a convex hull around predictions produced by Bayesian methods such as ensembles \citep{mortier2022calibration}.
We show an example of this for the second test point from the Iris dataset example from \cref{sec:evidential-neural-networks} in \cref{fig:credal-sets}.
It is also possible to learn credal sets from Dirichlet networks when target \emph{distributions} (instead of labels) are available \citep{javanmardi2024conformalized}, using interval neural networks\index{Neural network!Interval} (which produce intervals over predictions and activations; \citealp{wang2024creinns}).
A more complex approach involves defining credal sets\index{Credal sets} for priors and likelihood functions\index{Likelihood}, from which credal sets of posterior distributions can be learned using variational inference \citep{caprio2023credal}.
In terms of uncertainty quantification\index{Uncertainty quantification}, \citet{mortier2022calibration} develop several metrics to assess the calibration of credal predictors, and \citet{hullermeier2022quantification, sale2023volume} investigate different uncertainty metrics.\index{Uncertainty metric}
It should be mentioned that while a notion of volume of the credal sets appears as an intuitive metric (analogous to prediction set size), this intuition is flawed for multi-class classification\index{Classification!Multi-class} problems \citep{sale2023volume}.
In this regard, \citet{hullermeier2022quantification} offer alternative metrics based class dominance (whether a certain class in more likely than all others for all the distributions in the credal set), which also allows to distinguish aleatoric\index{Uncertainty!Aleatoric} from epistemic uncertainty\index{Uncertainty!Epistemic}.

\section{Uncertainty in Natural Language Processing}\label{sec:uncertainty-nlp}

Many of the approaches of uncertainty\index{Uncertainty} in the previous sections have also been applied to natural language processing\index{Natural language processing}, and we thus only mention some of the relevant works briefly:
Calibration\index{Calibration} for instance has been investigated for classification \citep{desai2020calibration, dan2021effects, xiao2022uncertainty, ulmer-etal-2022-exploring, ahuja2022calibration, park2022calibration, holm2023revisiting, chen2023close, zhu2023calibration, li2024few, ye2024benchmarking, plaut2024softmax}.
It has also been looked into in the context of generation tasks like language modeling\index{Language modeling} \citep{zhu2023calibration}, machine translation\index{Machine translation} \citep{wang2020inference}, and especially question-answering\index{Question-answering} \citep{zhang2021knowing, si2022re, si2022prompting, lin2022teaching, huang2023look, zhang2023study, geng2023survey, detommaso2024multicalibration, ulmer2024calibrating}.
Conformal prediction\index{Conformal prediction} has also been applied to NLP in various ways (see e.g.\@ \citet{campos2024conformal} for a more comprehensive survey):
These applications include natural language generation\index{Natural language generation} \citep{schuster2022confident, ravfogel2023conformal, deutschmann2024conformal, ulmer2024non}, prompt selection \citep{zollo2023prompt}, planning problems with LLMs \citep{ren2023robots}, and behavioral alignment, i.e.\@ the avoidance of toxic or otherwise undesired behaviors \citep{gui2024conformal}.
Furthermore, some works have also sought out applications of evidential deep learning\index{Deep learning!Evidential} in NLP \citep{shen2020modeling, he2023uncertainty}, however with no application to language generation at the time of writing of this thesis.

\paragraph{Token- and Sequence-Level Uncertainty.}\index{Uncertainty} 
Due to the sequentiality of language, uncertainty in NLP can be quantified on different scales.
On the one hand, we might be interested in quantifying uncertainty on a (subword-)token level in order to e.g.\@ identify mistranslations or factual errors.
On the other hand, sequence-level uncertainties are of interest when the whole generation might be unreliable, or when we are trying to assess its usefulness for a downstream task.
Similarly, there are potential applications to quantify uncertainty even on a paragraph-, document-, or dialogue-level.
In order to now quantify the uncertainty on these scales, one might intuitively resort to the approaches for frequentist networks in \cref{sec:frequentist-neural-networks}\index{Neural network!Frequentist},
i.e.\@ take the probability of the most likely token or the likelihood of a generated sequence as confidence\index{Confidence}.
This runs into multiple problems:
Due to the paraphrasticity\index{Paraphrasticity} of language (\cref{sec:uncertainty-linguistics}), a distribution over tokens might simply be uncertain due to the natural variability\index{Variability} of language, not due to the uncertainty of the model.\footnote{Neural models also have been show to be ill-calibrated towards the human word distribution, see \citealp{liu2024infini, ilia2024predict}.}
Since we would like confidence scores to reflect some notion of correctness or reliability, using the likelihood\index{Likelihood} of a generated sequence is also problematic;
for one, token probabilities likely do not reflect confidence by themselves, but there is even a mismatch between the frequency of generated sequences compared to the (true) human distribution \citep{ott2018analyzing, lebrun2022evaluating, ji2023tailoring}, implying that sequence likelihoods are not even representative as the expected relative frequency of a generated sentence.
This rules out their use to for instance reliably identify anomalous outputs.
More importantly, there is no explicit inductive bias\index{Inductive bias} in modern architecture or training procedures that models the variability directly \citep{baan2024interpreting} or would push sequence likelihoods to reflect confidence per se (see for instance the results by \citealp{xue2024comprehensive, becker2024cycles}).
While calibrating these likelihoods \citep{ulmer2024calibrating, xie2024calibrating} or reweighing token probabilities in a sequence \citep{lin2024contextualized} can lead to some success, ECE\index{Expected calibration error} results might also be misleading when comparing models with humans in a language context \citep{ilia2024predict}. 
Therefore, uncertainty on a sequence-level has instead been investigated by resampling generations (see next paragraph; \citealp{ott2018analyzing, aina2021language}).
On a token-level, several approaches have emerged, for instance computing uncertainty given specific claims \citep{fadeeva2024fact}, predicting the confidence based on the quantiles of the token distribution \citep{gupta2024language}, or training an additional prediction head \citep{kadavath2022language}.
In order to compare a wide variety of different such uncertainty metrics,\index{Uncertainty metric} \citet{huang2024uncertainty} proposed the use of \emph{rank calibration}, i.e.\@ testing whether higher certainty indeed implies higher generation quality.
Some works also exists that quantify uncertainty for long texts, for instance based on the entailment probabilities of segments \citep{zhang2024luq}, and \citet{sicilia2024deal} model the uncertainty inherent in long conversations (but therefore not the uncertainty of the model processing the conversation itself).

\paragraph{Self-consistency, Prompt Ensembling and Output Diversity.} 
While there has been some research over the years into Bayesian methods \citep{xiao2020wat, malinin2021uncertainty, gidiotis2021should, xiong2023can}, these have become less applicable in the era of large language models due to their sheer size.\footnote{
    This comes with the exception of methods like \citealp{yang2023bayesian, onal2024gaussian}.
    Besides, \citet{papamarkou2024position} sketch avenues with which Bayesian methods can still provide advantages in the age of large-scale methods.
}
Therefore, a number of works ensemble\index{Ensembling} predictions for the same input (also referred to as \emph{self-consistency}; \citealp{wang2022self, manakul2023selfcheckgpt, chen2023quantifying, li2024think}), from the same prompt with different pieces of additional information \citep{hou2023decomposing}, or from different prompts altogether \citep{li2023making, hou2023promptboosting, pitis2023boosted, gao2024spuq} instead of predictions from different parameter sets.
The intuition remains similar to Bayesian methods in \cref{sec:bayesian-neural-networks}:
If similar prompts for the same input produce vastly different predictions, the network must be uncertain.
We can therefore interpret prompt\index{Prompting} ensembling techniques as evaluating a predictive distribution over distribution of prompts\index{Predictive distribution!Posterior} $p(\brho)$ and in-context samples $p(\mathcal{C})$:

\begin{equation}\label{eq:prompt-predictive-prior}
    \mathbb{E}_{p(\brho, \mathcal{C})}\big[p(\by \mid \bx, \btheta, \brho, \mathcal{C})\big] = \int\hspace{-0.25cm}\int p(\by \mid \btheta, \bx, \brho, \mathcal{C})p(\brho)p(\mathcal{C})\ddd\!\brho\ddd\hspace{0.05cm}\!\mathcal{C}.
\end{equation}

Any disagreement in responses however can also be influenced by the generation hyperparameters, and thus this method does not admit a clean distinction between aleatoric\index{Uncertainty!Aleatoric} and epistemic uncertainty\index{Uncertainty!Epistemic} like in \cref{eq:mutual-information}.\footnote{In contrast to the claims of \citealp{hou2023decomposing}.}
Furthermore, \citet{ling2024uncertainty} investigate how the choice of in-context samples can also induce additional uncertainty into the LLMs generation.
\citet{kuhn2023semantic} base their idea of \emph{semantic entropy}\index{Entropy!Semantic} on a similar intuition:
Trough the use of a bi-directional entailment classifier, generations are clustered by meaning.\footnote{
    The idea is that if the classifier indicates that a generation implies another and vice versa, they must (ought to) be equivalent.
}
Instead of the Shannon entropy\index{Entropy!Shannon} over classes in \cref{eq:predictive-entropy}, we evaluate entropy over all the sequences $\mathbf{s}$ given some $\mathbb{M}$ out of $M$ clustered meaning classes:

\begin{align}
    \text{SE}(\mathbf{x}) & = -\sum_{m=1}^M p(\mathbb{M}_m \mid \bx) \log p(\mathbb{M}_m \mid \bx) \\
    & = -\sum_{m=1}^M \Big(\sum_{\mathbf{s} \in \mathbb{M}_m} p(\mathbf{s} \mid \bx) \Big) \log \Big(\sum_{\mathbf{s} \in \mathbb{M}_m} p(\mathbf{s} \mid \bx) \Big) \\
    & \approx - \frac{1}{M} \sum_{m=1}^M \log \Big(\sum_{\mathbf{s} \in \mathbb{M}_m} p(\mathbf{s} \mid \bx) \Big),
\end{align}

\noindent where the last step is obtained through Monte Carlo integration\index{Monte Carlo estimation}. 
\citet{aichberger2024many} improve on this estimator by producing more variable generations through targeted token substitutions.
Instead of computing the entropy over hard meaning clusters, \citet{nikitin2024kernel} propose to instead compute the entropy using semantic kernels that measure the similarity in meaning between model responses, replacing hard clusters.

\paragraph{Verbalized Uncertainty.}\label{Uncertainty!Verbalized}
Originating from works like T5 \citep{raffel2020exploring}, natural language has become a general interface for modern NLP models.
This refers both to embedding other, traditionally non-generative tasks such as sequence classification into a sequence-to-sequence task, but also to users increasingly interacting with language models through prompting\index{Prompting}.
\citet{mielke2022reducing} already demonstrated that pre-trained models could be finetuned to express different levels of uncertainty in words\index{Uncertainty}.
This however required finetuning on human-annotated data, while modern approaches simply prompt the LLM to express its uncertainty in words \citep{kadavath2022language, xiong2023can, tian2023just, chen2023reconcile}, often through percentage values (``Confidence: 96 \%'') or confidence expressions (``Confidence: Very high''), which are then mapped back onto numerical values for evaluation purposes.
\citet{tian2023just} for instance find that through the combination of suitable prompts and temperature-scaling, the calibration error of such methods can be noticeably reduced.
However, they also find that the distributions of confidence\index{Confidence} expressions are highly skewed---while it does differ between datasets, the tested GPT models (GPT-3.5 and GPT-4) tend to mostly confident expressions, likely due to the unequal usage of these terms in their training data.
This finding is corroborated by \citet{yona2024large, singh2024large, krause2023confidently}, indicating that LLMs\index{Large language model} always generate decisive answer even for uncertain questions, and that this is challenging to change through prompting alone.
When results are strong, this might coincide with cases in which the dataset is too easy and the skewed confidence expression distribution actually conforms to the results (as for instance for TriviaQA in \citealp{ulmer2024calibrating, xue2024comprehensive}).
\citet{lin2022teaching} also finetune an LLM to verbalize its uncertainty, but do so on automatically generated confidence targets that are obtained by checking the model's performance on some sub-category of a task, like different question types for mathematical reasoning.
A similar approach is taken by \citet{zhang2023r}, finetuning them to admit their uncertainty for incorrect answers.
In the case of \citet{kadavath2022language}, the LLM is simply asked directly whether its answer was true or false.
\citet{band2024linguistic} finetune verbalized uncertainty\index{Uncertainty!Verbalized} from a Bayesian decision-making standpoint, increasing factuality.
\citet{zhou2023navigating} investigate the general use of linguistic confidence expressions in LLMs, and show that accuracy can be influenced through the use of such expressions in the prompt\index{Prompting}.

\paragraph{Uncertainty for Black-box Models.}
The commercialization of LLM-based chatbots such as ChatGPT \citep{chatgpt} also created a trend of black-box models, which are shielded by an API.
As such, any UQ\index{Uncertainty quantification} method has to do without any access to model latent representations, logits or output probabilities.
The question of whether and how uncertainty can be estimated from text generations alone therefore also has become an active area of research.
Such approaches include predicting confidence directly from the generated text using an auxiliary model (\cref{ch:uncertainty-llms}; \citealp{ulmer2024calibrating}), verbalized uncertainty\index{Uncertainty!Verbalized} methods from the previous paragraph, or comparing the similarity of generations given the same input \citep{lin2023generating}.
\citet{su2024api} further show that LLM predictions can be conformalized even without access to the probabilities through repeated sampling and word frequencies analysis alone.

\paragraph{Reward Modeling.} 
As part of the contemporary language model pipeline, models are first pre-trained on large amounts of text using a language modeling objective \citep{devlin2019bert, radford2019language}, then finetuned on a number of instructions, and finally undergo a step that aims to align their behavior with general human values \citep{ouyang2022training}.
This last step is often performed using reinforcement learning from human feedback\index{Reinforcement learning from human feedback} (RLHF;\@ \citealp{christiano2017deep, stiennon2020learning}).
This involves the use of a trained reward model, that predicts the quality of a generation based on human preference data.
While it has been found that this step can hurt model calibration\index{Calibration} \citep{zhu2023calibration}, the reward modeling itself has also been characterized as brittle, and thus a number of works have proposed Bayesian approaches to the target model finetuning or the reward model to increase robustness \citep{zhai2023uncertainty, yang2024bayesian, zhang2024improving, zhang2024overcoming}.

\paragraph{Human Label Variation.}\index{Human label variation}
Compared to other input modalities, the variability\index{Variability}, ambiguity\index{Ambiguity} and underspecification\index{Underspecification} of language (\cref{sec:uncertainty-linguistics}) calls the validity of a single ground truth for training into question.
Indeed, there have been several calls to embrace this diversity for classification \citep{basile2021we, plank2022problem, baan2022stop, gruber2024more} and language generation tasks \citep{baan2023uncertainty}.
Importantly, this opens up new avenues for better modeling and representing of the uncertainty in the underlying data \citep{nie2020can, zhou2021distributed, uma2021learning, davani2022dealing, wu2023don}, modeling annotators \citep{deng2023you}, and to learn from fewer instances \citep{gruber2024more}.
Training on single labels or references has for instance been hypothesized to cause the miscalibration of neural models to human language variability \citep{giulianelli2023comes, ilia2024predict}, and to potentially be responsible for the inadequacy of greedy decoding in natural language generation\index{Natural language generation} \citep{eikema2020map, eikema2024effect}.
The variation of labels should therefore be reframed as an opportunity, as it for instance also allows to more easily learn second-order predictors like evidential neural networks\index{Deep learning!Evidential} or credal sets\index{Credal sets} \citep{javanmardi2024conformalized}.

\section{Uncertainty \& Trust}\label{sec:uncertainty-trust}

Even though we have already discussed several applications of uncertainty quantification in \cref{sec:applications} in the first chapter, it is useful to  
zoom in on the aspect of trust\index{Trust}, why it matters, and how quantifying the uncertainty of a ML\index{Machine learning} system can help.
The reason for this is the following:
The main promise of machine learning algorithm lies in its ability to analyze and process large swaths of data, identifying 
potential patterns that remain elusive for even the most astute humans. 
As such, it promises to either replace or support human decision-makers.
However, even if a part of the deliberation for a decision is taken over by a machine, people are the ones that remain affected by it.
This is true for all the examples of decision support including for medical staff, self-driving cars or automated translation systems.
Trust\index{Trust} is the social mechanism that governs this relationship, and is a necessary requirement for it to have a positive effects.
If trust is not present, we run the risk of alienating the people affected, leading to them ignoring the automation and thus foregoing any benefits, or even creating negative consequences.
Indeed, \citet{inie2024motivates} finds in a diverse survey that participants perceive AI\index{Artificial intelligence} systems as less trustworthy when problems they are trying to solve or the models themselves are complex, and when no human expert is in the loop.\\

\citet{jacovi2021formalizing} formalize this dynamic using notions of interpersonal trust from sociology. 
They thereby define two roles: The trustor (i.e.\@ the person trusting someone) and the trustee (i.e.\@ the person being trusted).
In order to make this distinction clearer in our context, we will notate these roles by 
\trustor and \trustee. They employ the following definition of interpersonal trust\index{Trust!Interpersonal}:

\begin{definition}[Interpersonal Trust; \citealp{mayer1995integrative}]\label{def:interpersonal-trust}
    If a \trustor\@ believes that a \trustee\@ will act in their best interest and accepts vulnerability to the \trustee's actions, 
    then the \trustor\@ trusts the \trustee.
\end{definition}

The authors admit that this definition is somewhat simplistic: AI systems are not people, and as such, terms such 
as \emph{reliance} (i.e.,\@ the trust put into an object) might be more applicable \citep{baier1986trust}. 
However, users often show tendencies to anthropomorphize AI systems \citep{miller2019explanation, jacovi2021aligning}.
And thus, we can use a variation of \cref{def:interpersonal-trust} to define human-AI trust.
\citeauthor{jacovi2021formalizing} here use the notion of contract between the \trustor\@ and the \trustee, which in the human-AI case 
has to be explicit instead of implicit. 
Such contracts define certain properties or behaviors that model is expected to uphold. 
This can include things as for instance robustness, fairness w.r.t.\@ certain group in the datasets, or interpretability and finally leads us to the definition of human-AI trust\index{Trust!Human-AI}:

\begin{definition}[Human-AI Trust; \citealp{jacovi2021formalizing}]\label{def:human-ai-trust}
    A \trustee in the form of an AI model is trustworthy if it is capable of maintaining a specific contract with the \trustor. 
\end{definition}

\citeauthor{jacovi2021formalizing} further distinguish two kinds of trust: 
\emph{Intrinsic trust}\index{Trust!Intrinsic}, when the decision process of the \trustee\@ is observable and matches the \trustor's own priors. 
This is possible in a decision tree, but very hard for neural networks, as their size can obscure the decision process.
Therefore, we focus here on \emph{extrinsic trust}\index{Trust!Extrinsic}, which is built by observing symptoms of a trustworthy model.
A symptom of a trustworthiness can for example be its (consistent) performance of the \trustee\@ model, as for instance explored by \citet{yin2019understanding,rechkemmer2022confidence}. 
While the above assumed the \trustor\@ to be human and the \trustee\@ to be an AI system, recent work has also started exploring whether AI systems can exhibit human trust behaviors \citep{xie2024can}.\\

It can be argued that one such tool for building extrinsic trust\index{Trust!Extrinsic} can be uncertainty quantification\index{Uncertainty quantification} methods:
Using them, the \trustee\@ can communicated how much weight should be assigned to its predictions, and when they are better to be ignored. 
Further, explicit contracts like in \cref{def:human-ai-trust} can be formed by providing model cards that for instance report the calibration of a model on specific datasets.
Overall, \citet{liao2022designing} describe that such trust in automation is not inherent, and that additional care has to be put into how to design the trust cues for an end user. 
For this reason, we will discuss ways of communicating uncertainty next.

\section{Communicating Uncertainty}\label{sec:communicating-uncertainty}\

\index{Uncertainty!Communication of}
Understanding the usefulness of a model can be challenging for laypeople and experts alike.
Even when possessing technical domain knowledge, NLP practitioners for instance struggle to select the best encoder model for a task \citep{bassignana2022evidence}.
Even accuracy scores or other performance metrics can be hard to interpret, especially when they may unknowingly degrade under distributional shift\index{Shift!Distributional} in an application.
The previous sections have demonstrated the diversity of ways in which uncertainty is measured, often requiring knowledge about the model, methods or entire schools of thought (as in the frequentist vs.\@ the Bayesian example).
In practice, requiring such knowledge from laypeople is unrealistic; furthermore, the interpretation of such measure is also influenced by human numeracy \citep{zikmund2007validation, galesic2010statistical} and cognitive biases \citep{reyna2008numeracy, daniel2017thinking, spiegelhalter2017risk}. 
Therefore, \citet{bhatt2021uncertainty} advocate that in practice, uncertainty measure should be tailored to and tested with the different stakeholders they are targeted towards.
This includes an arsenal of ways such as communicating numerical values, graphical means or the verbalized uncertainty\index{Uncertainty!Verbalized} from \cref{sec:uncertainty-nlp}.
However, the best way of communicating uncertainty in an NLP\index{Natural language processing} context remains application-dependent and underexplored.
One promising avenue is the verbalized uncertainty in \cref{sec:uncertainty-nlp}, although this approach at its current stage remains quite simplistic:
Usually, uncertainties are communicated as percentage values or values on a discrete scale, instead of making use of the rich variety in human uncertainty expressions (\cref{sec:expressing-uncertainty}).

\paragraph{Effects of Communicating Uncertainty.}
Some works have investigated how communicated uncertainty influences the trust of human users.\index{Uncertainty!Communication of}
For instance, \citet{zhang2020effect} show how displaying confidence\index{Confidence} scores can help to calibrate people's trust\index{Trust} in a model, but that it may not necessarily improve the outcomes of AI-assisted decision making, whereas \citet{kim2024m} find a positive effect on accuracy in a human study with LLMs\index{Large language model}.
Paradoxically, \citet{vodrahalli2022uncalibrated} show how these outcomes can be improved even when the underlying confidence scores are not calibrated.
In another experiment with human participants, \citet{dhuliawala2023diachronic} showcase how misleading uncertainty can produce lose-lose situations.
In their study, they quantify human trust in uncertainty estimates through monetary bets on a model's answers in a question-answering\index{Question-answering} task. 
They find two things in the face of unreliable uncertainty estimates:
Firstly, a smaller overall pay-off for the participants and a loss of trust in the system, both caused due to or signified by more conservative bets.
In general, it should also be noted that notions like trust are notoriously hard to isolate in human experiments, and that any stated results also presuppose a specific model between model predictions and their influence on human decision-making.

\section{Applications of Uncertainty}\label{sec:applications-uncertainty}

Previous sections have focused on characterizing and quantifying uncertainty that one encounters in machine learning\index{Machine learning} and natural language processing\index{Natural language processing}.
This is not a purely intellectual quest, and we have already touched on some potential use-cases in \cref{sec:applications}.
There is exists a trove of research works on several downstream applications that uncertainty quantification\index{Uncertainty quantification} can be used for, a (non-exhaustive) list of which we present here.

\paragraph{Fairness.}\index{Fairness}
Algorithmic fairness has recently increased in popularity as a field that studies systematic biases and mitigation strategies in AI algorithms \citep{pessach2023algorithmic}. 
In this regard, some works have researched Bayesian treatments of fairness metrics \citep{ji2020can, kuzucu2023uncertainty, barrainkua2024uncertainty}.
Others have argued that uncertainty can be a source of unfairness \citep{singh2021fairness, ali2021accounting, tahir2023fairness, wang2024aleatoric, madiha2024my} and propose its quantification as a way to reduce bias during training \citep{stone2022epistemic}.
The relationship between debiasing techniques and UQ has further been investigated by \citet{kuzmin2023uncertainty}.

\paragraph{Error Detection.}\index{Error detection}
Since uncertainty estimates are usually hard to evaluate due to the lack of ground truth, and thus error detection is both a downstream application as well as an evaluation strategy.
The intuition lies in the fact that predictions with higher uncertainty should assumed to be more likely to be wrong.
Examples for this are for instance the works of \citet{kong2020calibrated, ashukha2020pitfalls, vazhentsev2022uncertainty, thuy2023explainability, vazhentsev2023hybrid}, among many others.
In the context of LLMs, uncertainty quantification has also been applied specifically to hallucination\index{Hallucination} detection \citep{xiao2021hallucination, manakul2023selfcheckgpt, zhang2023enhancing, band2024linguistic, detommaso2024multicalibration}.

\paragraph{Out-of-distribution Detection.} 
Out-of-distribution detection follows a similar logic as error detection.\index{Out-of-distribution data}
As inputs different from the training data of a model should could lead to unexpected predictions since the model is underspecified on them (i.e.\@ different models that fit the training data will create disagreeing predictions on unseen data; \citealp{d2020underspecification}), we want the model to be generally more uncertain about its prediction.
In contrast to error detection however, this applications focuses on model uncertainty, since errors can be caused by high model uncertainty\index{Uncertainty!Epistemic} or inherent difficulty alike.
This assumptions has been shown to be formally incorrect for some simple ReLU networks (\cref{sec:uq-classification-pitfalls}; \citealp{hein2019relu, ulmer2020know}), and the ability of uncertainty to detect OOD inputs has been investigated in a larger number of works (see, among many others, \citealp{devries2018learning, ovadia2019can, liu2022simple, ulmer2020trust, kong2020calibrated, stadler2021graph, arora2021types, ulmer-etal-2022-exploring, uppal2024implicit}).

\paragraph{Conditional Computation.} 
\index{Conditional computation}
Uncertainty can also be used as a signal to switch the intended way of processing for an input, which can be motivated by cognitive reasons (e.g.\@ based on system 1 and system 2 in humans; \citealp{daniel2017thinking}), boosting performance \citep{gerych2024knows} or to improve efficiency\index{Computational efficiency} \citep{schuster2022confident, varshney2022model}.
\citet{gerych2024knows} for instance use confidence scores\index{Confidence} to route inputs to a pool of models to find the best-performing one, and \citet{zheng2019self} use uncertainty to determine the right module from a mixture of experts.\index{Mixture of experts}
In NLG\index{Natural language generation}, \citet{vanderpoel2022mutual} use mutual information\index{Mutual information} to switch the decoding algorithm, and \citet{xiao2021hallucination} adapt beam search based on uncertainty in order to alleviate hallucinations\index{Hallucination}. 
Another usage of uncertainty enables the early exciting from a model, i.e.\@ where not all layers of a deep learning model are used \citep{schuster2022confident, fei2022deecap, bajpai2024ceebert}.
Lastly, uncertainty has also been utilized in \emph{model cascades}\index{Model cascade}, where we try to select one of a pool of increasingly-sized model based on the difficulty of an input \citep{teerapittayanon2016branchynet, varshney2022model, jitkrittum2024does, gupta2024language}.

\paragraph{Active Learning.}
Active learning\index{Active learning} describes a field of machine learning\index{Machine learning} in which an algorithm selects unlabeled instances that are given to a human for labeling, and are subsequently added to the algorithm's training data \citep{settles.tr09}.
The use of uncertainty measures for this purpose has long predated deep neural networks\index{Neural network} (e.g.\@ \citealp{lewis1994sequential, lewis1994heterogeneous, scheffer2001active}),
and has found many applications since their revival \citep{ren2021survey, zhang2022survey}.
When using uncertainty to identify samples of interest, there also exists a colorful bouquet of approaches:
Frequentist methods usually rely on some measure of model confidence\index{Confidence} \citep{wang2014new, matiz2019inductive, ebrahimi2020minimax, zhang2021cartography, wang2023actor}, Bayesian methods quantify metrics such as mutual information\index{Mutual information} \citep{gal2017deep, kirsch2019batchbald, kim2021task, kirsch2022unifying, smith2023prediction} and evidential methods\index{Deep learning!Evidential} utilize distributional uncertainty\index{Uncertainty!Distributional} \citep{zhu2021evidential, park2022active, hemmer2022deal}.

\paragraph{Requesting Human Oversight.}
Active learning is a specific case of human-in-the-loop problems that is focused on resource-efficient data labeling, but can be seen as just one instance of a class of applications in which human oversight or intervention is requested upon uncertainty.
Other examples include for for instance planning problems in reinforcement learning \citep{singi2023decision}, industrial applications  \citep{treiss2021uncertainty}, clarifying uncertain parts in image segmentation for remote sensing \citep{garcia2020uncertainty}, text moderation \citep{andersen2022efficient, andersen2022more}, and co-annotation of data \citep{li2023coannotating}.
In general, these applications promise to alleviate the workload that would be otherwise assigned to human experts, and only request their assistance in the case of difficult inputs.

\section{Summary}\label{sec:background-summary}

This chapter has given a fairly comprehensive account of uncertainty and its relevant concepts, definitions, methods and applications for deep learning\index{Deep learning} and natural language processing\index{Natural language processing}.
It has provided an overview over the different definitions of uncertainty in statistics, i.e.\@ the frequentist and Bayesian viewpoints, and how uncertainty in linguistics plays a layered role as an inherent feature of language on the one side, and a tool for communication of one's world state on the other.
These different notions crystallize in their applications to neural networks:
Statistical uncertainties permeate model training and inference, and linguistic uncertainties influence the processing of natural language inputs.
Not only is the quantification of these uncertainties challenging and methods to do so are multifarious, but the adequate communication of uncertainty is equally difficult.
This last step is pivotal to enable human-AI collaboration, in which trust\index{Trust} relationships are formed between users and their silicate collaborators.
As with human relationships, this trust can be built but also lost, which suggests more research is needed to understand this dynamic better.


\chapter{Addressing Uncertainty in Experimental Design}\label{ch:uncertainty-experimental-design} 

\epigraph{``\emph{When you run an experiment, you take notes, think for a while, then publish your results. If you don't publish, nobody will learn from your experience. The whole idea is to save other from repeating what you've done.}''}{---Clifford Stoll in \emph{The Cuckoo's Egg: Tracking a Spy Through the Maze of Computer Espionage}.}

\begin{tikzpicture}[remember picture,overlay]
    \node[anchor=north,inner sep=0pt] at (current page text area.north) {\includegraphics[width=\linewidth, clip=true, trim = 8cm 50cm 8cm 75cm]{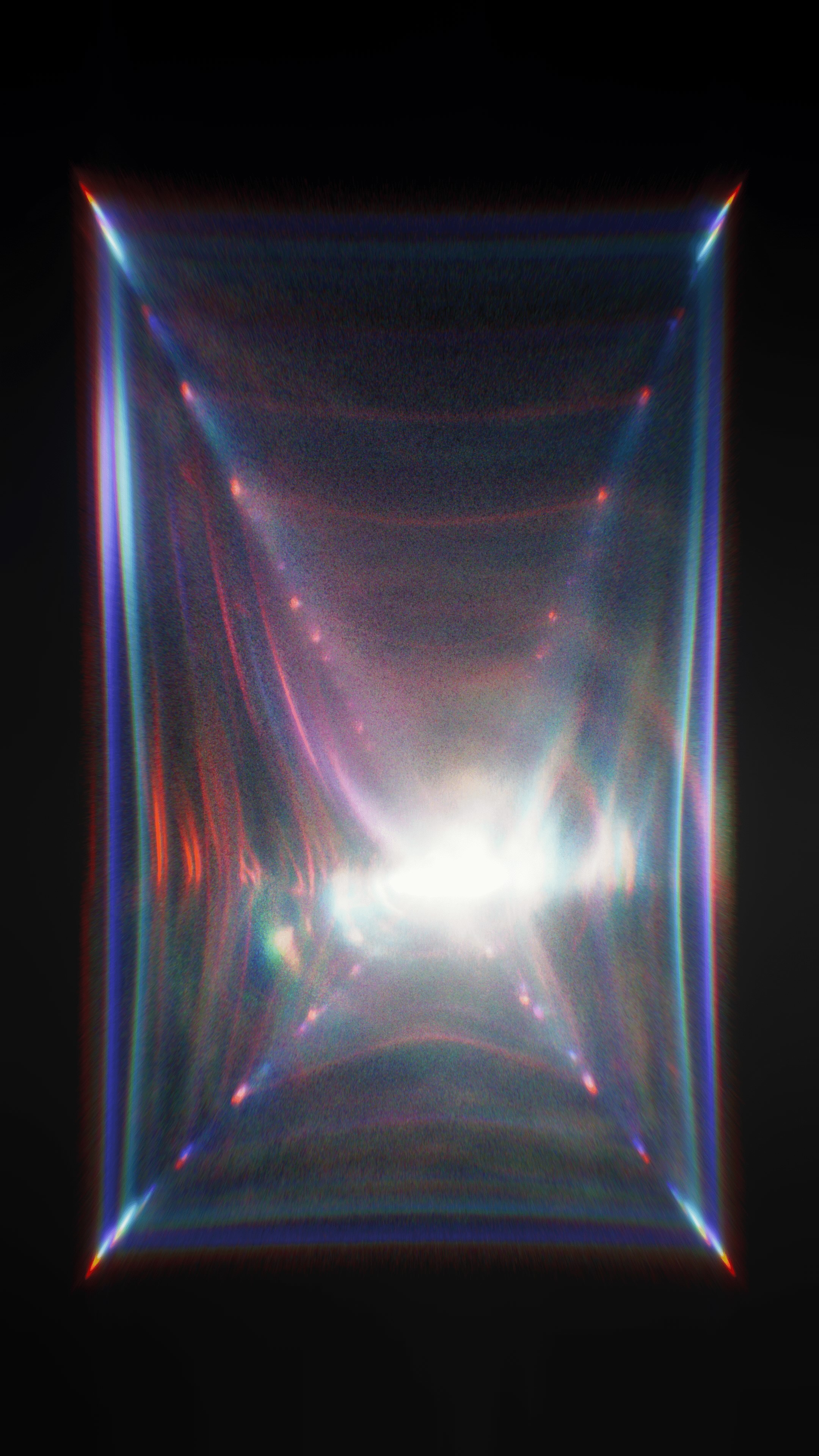}};
\end{tikzpicture}

\label{ch:methodology} 

\begin{figure}[ht]
    \centering
    \includegraphics[width=0.99\textwidth]{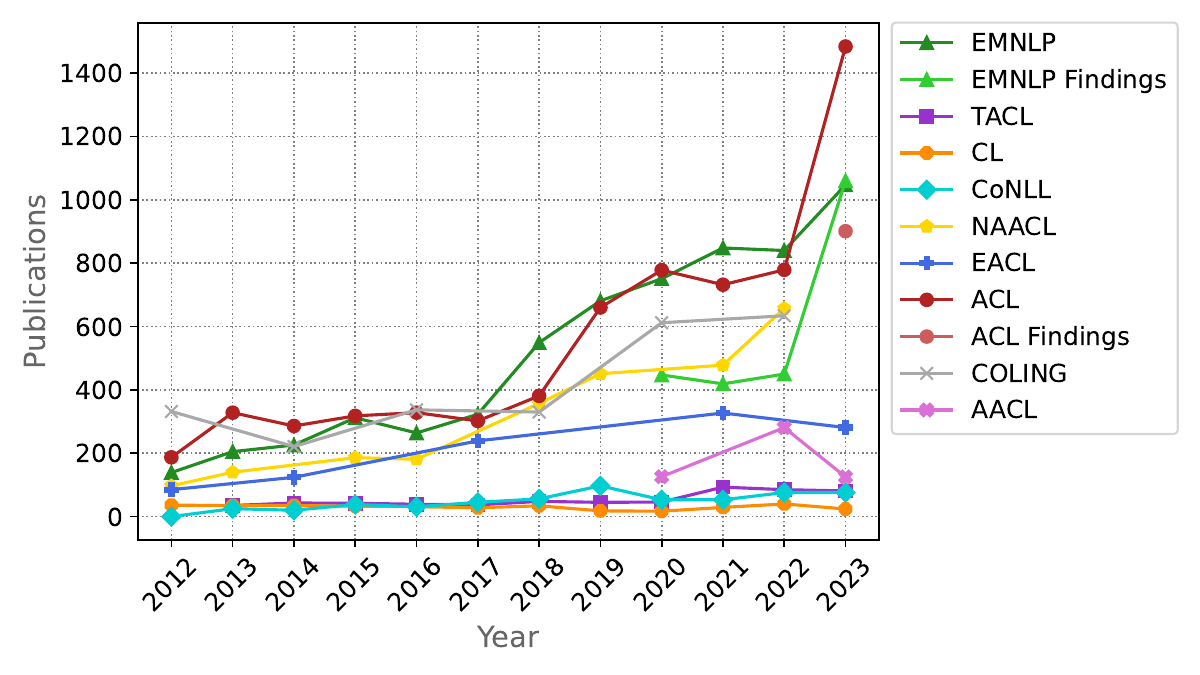}
    \caption[Published papers at NLP venues from 2012 to today.]{
        Published papers at NLP venues. Gaps are due to some venues not producing proceedings in any given year.
        Notably, this plot does not include NLP papers published at venues such as NeurIPS, ICML, or ICLR.
    }\label{fig:nlp-venues}
\end{figure}

Before returning to the uncertainty\index{Uncertainty} in NLP\index{Natural language processing} \emph{models}, we will first engage in another, wider perspective on where uncertainty hides in the NLP pipeline.
DL\index{Deep learning} in general, and NLP in its current form, are largely empirical sciences:
We obtain new knowledge by forming hypotheses, running experiments, and then analyzing results to come to a conclusion about our initial suppositions.
In the the last decade or so, this field has ballooned in size:
In \cref{fig:nlp-venues}, we show the number of published conference papers in NLP venues since 2012, which has more than quadrupled.\\

While such growth is remarkable, it comes at a cost: 
Akin to concerns in other disciplines \citep{john2012measuring, jensen2021there}, several authors have noted major obstacles to reproducibility\index{Reproducibility} \citep{gundersen2018state, belz2021systematic} and a lack of hypothesis testing\index{Hypothesis testing} \citep{marie2021scientific} or published results not carrying over to different experimental setups, for instance in text generation \citep{gehrmann2022repairing} and with respect to new model architectures \citep{narang2021transformer}.
Others have questioned commonly-accepted experimental protocols \citep{gorman2019we, sogaard2021we, bouthillier2021accounting, groot2021we} as well as the (negative) impacts of research on society \citep{hovy2016social, mohamed2020decolonial, bender2021dangers, birhane2021values} and environment \citep{strubell2019energy, schwartz2020green, henderson2020towards}. 
Lastly, the adoption of large language models\index{Large language model} that are also possibly closed-source have exacerbated problems about experimental protocols further \citep{mizrahi2023state, balloccu2024leak}.
These problems have not gone unnoticed---many of the mentioned works have proposed a cornucopia of solutions. 
In a quickly-moving environment however, keeping track and implementing these proposals becomes challenging.\\

This chapter addresses these issue in two ways:
On the one hand, open issues in reproducibility\index{Reproducibility} and replicability\index{Replicability} are woven together into a cohesive set of guidelines for gathering stronger experimental evidence, that can be implemented with reasonable effort and which are discussed in \cref{sec:reproducibility-replicability}.
On the other hand, we zoom into the question of hypothesis testing\index{Hypothesis testing} (\cref{sec:hypothesis-testing}), with a specific focus on the almost stochastic order\index{Stochastic order!Almost} test (ASO;\@ \citealp{del2018optimal, dror2019deep}) in \cref{sec:aso} and its application to question-answering\index{Question-answering} with LLMs in \cref{sec:experimental-comparison-aso}.
The core thesis of this chapter is that increased efforts in reproducibility\index{Reproducibility} and replicability\index{Replicability} are intricately linked to the question of uncertainty\index{Uncertainty} in empirical research:
For example, transparent and diligent data curation enables better modeling of uncertainty (referring to the discussion on language paraphrasticity\index{Paraphrasticity} in \cref{sec:uncertainty-linguistics} and human label variation\index{Human label variation} in \cref{sec:uncertainty-nlp}),
and a more rigorous experimental\index{Experimental design} protocol and statistical hypothesis testing can help to unveil the uncertainty lingering in results, aiding the development of better methods and bringing more clarity to the research landscape.
Therefore, we build these ideas up from the scientific method\index{Scientific method} and show their implementation in the experimental pipeline.

\section{Experimental Standards for NLP}\label{sec:reproducibility-replicability}

\begin{footnotesize}
    \vspace{-2.5ex}
    \emph{The following work is based on \citet{ulmer-etal-2022-experimental}}.\\
    \vspace{2.5ex}
\end{footnotesize}

\begin{figure}[ht]
    \centering
    \includegraphics[width=0.685\textwidth]{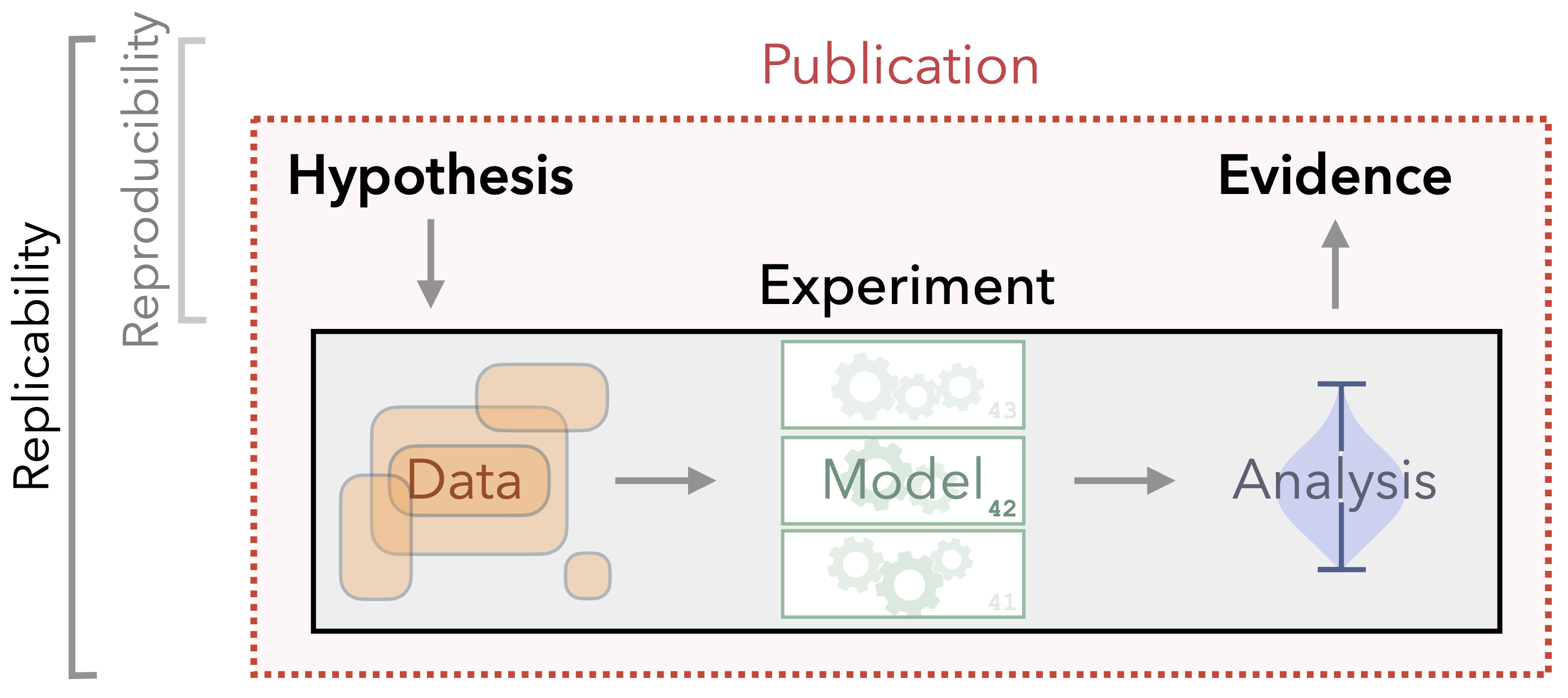}
    \caption[Schematic representation of the scientific method in deep learning.]{
        Schematic representation of the scientific method in Deep Learning.
        After forming hypotheses, we conduct our experiments by modeling some data of interest and analyzing the results to obtain some evidence to support or reject our initial assumptions.
        While reproducibility entails the \emph{reproduction} of evidence based on the hypotheses and a description of the experiments, replicability refers to a step-by-step copy of the pipeline using the original data, model, and analyses.
    }\label{fig:scientific-method-deep-learning}
\end{figure}


\paragraph{The Scientific Method.}\index{Scientific method}
Knowledge can be obtained through several ways including theory building, qualitative methods, and empirical research \citep{kuhn1970structure, simon1995artificial}. 
Here, we focus on the latter aspect, in which (exploratory) analyses lead to falsifiable hypotheses that can be tested and iterated upon \citep{popper1934}.\footnote{While such hypothesis-driven science is not always applicable or possible \citep{carroll2019beyond}, it is a strong common denominator that encompasses most empirical ML\index{Machine learning} research.} 
This process requires that \emph{anyone} must be able to back or dispute these  hypotheses in the light of new evidence.\\

In the following, we focus on the evidence-based evaluation of hypotheses and how to ensure the scientific soundness of the experiments which gave rise to the original empirical evidence, with a focus on \emph{replicability}\index{Replicability} and \emph{reproducibility}\index{Reproducibility}. 
In computational literature, one term requires access to the original code and data in order to re-run experiments exactly, while the other requires sufficient information in order to reproduce the original findings even in the absence of code and original data (see also \cref{fig:scientific-method-deep-learning}).\footnote{
    Strikingly, these central terms already lack agreed-upon definitions \citep{peng2011reproducible,fokkens2013offspring,liberman2015replicability,cohen2018three}, however we follow the prevailing definitions in the NLP\index{Natural language processing} community~\citep{drummond2009repl, emnlp_rep} as the underlying ideas are equivalent.
}

\paragraph{Replicability.}\index{Replicability}
Within DL\index{Deep learning}, we take replicability to mean the (near-)exact replication of prior reported evidence. 
In a computational environment, access to the same data, code and tooling should be sufficient to generate prior results. 
However, many factors, such as hardware differences, make exact replication difficult to achieve.
Nonetheless, we regard experiments to be replicable if a practitioner is able to re-run them to produce the same evidence within a small margin of error dependent on the environment, without the need to approximate or guess experimental details.

\paragraph{Reproducibility.}\index{Reproducibility}
In comparison, we take reproducibility to mean the availability of all necessary and sufficient information such that an experiment's findings can be independently reaffirmed when the same research question is asked. 
As discussed later, the availability of all components for replicability is rare---even in a computational setting. 
An experiment then is reproducible if anyone with access to the publication is able to re-identify the original evidence, i.e.\@ exact results differing, but patterns across experiments being equivalent.
This is illustrated by \cref{fig:scientific-method-deep-learning}, where replicability involves access to all data, modeling and analysis steps, whereas reproducibility only involves knowledge of the hypotheses, a description of the experiments, as well as their results.\\

We assume that the practitioner aims to follow these principles in order to find answers to a well-motivated research question by gathering the strongest possible evidence for or against their hypotheses.
The guidelines in the following sections therefore aim to model or reduce uncertainty in each step of the experimental pipeline through enhancing its reproducibility\index{Reproducibility} and / or replicability\index{Replicability}.

\subsection{Data}\label{sec:data}

Frequently, it is claimed that a model solves a particular cognitive task, however in reality it merely scores higher than others on some specific dataset according to some predefined metric \citep{schlangen2020targeting}. 
Of course, the broader goal is to improve systems more generally by using individual datasets as proxies. 
Admitting that our experiments cover only a small slice of the real-world sample space will help more transparently measure progress towards this goal. 
In light of these limitations and as there will always be private or otherwise unavailable datasets which violate replicability, a practitioner must ask themselves: \emph{Which key information about the data must be known in order to reproduce an experiment's findings?} 
In this section we define requirements for putting this question into practice during dataset creation and usage such that anyone can draw the appropriate conclusions from a published experiment.

\paragraph{Choice of Dataset.} 
The choice of dataset arises from the need to answer a specific research question within the limits of the available resources. 
Such answers typically come in the form of comparisons between different experimental setups while using the equivalent data and evaluation metrics.
Using a publicly available, well-documented dataset will likely yield more comparable work, and thus stronger evidence. 
In absence of public data, creating a new dataset according to guidelines which closely follow prior work can also allow for useful comparisons. 
Should the research question be entirely unexplored, creating a new dataset will be necessary.
In any case, the data itself must contain the information necessary to generate evidence for the researcher's hypothesis. 
For example, a model for a classification task will not be learnable unless there are distinguishing characteristics between data points and consistent labels for evaluation. 
Therefore, an exploratory data analysis is recommended for assessing data quality and anticipating problems with the research setup. 
Simple baseline methods such as regression analyses or simply manually verifying random samples of the data may provide indications regarding the suitability and difficulty of the task and associated dataset \citep{caswell2021quality}.
On the flip side, a lower-quality dataset runs the danger of introducing noise and therefore aleatoric uncertainty into the dataset \citep{baan2023uncertainty}.

\paragraph{Metadata.} 
At a higher level, data sheets and statements \citep{gebru2020,bender2018data} aim to standardize metadata for dataset authorship in order to inform future users about assumptions and potential biases during all levels of data collection and annotation---including the research design \citep{hovyshrimai21}. 
Simultaneously, they encourage reflection on whether the authors are adhering to their own guidelines \citep{waseem2021disembodied}. 
Generally, higher-level documentation should aim to capture the dataset's \emph{representativeness} with respect to the global population. 
This is especially crucial for ``high-stakes'' environments in which subpopulations may be disadvantaged due to biases during data collection and annotation \citep{he2019practical, sap2021annotators}. 
Even in lower-stake scenarios, a model trained on only a subset of the global data distribution can have inconsistent behavior when applied to a different target data distribution and display high model uncertainty \citep{d2020underspecification,wilds2020}. 
For instance, domain differences have a noticeable impact on model performance \citep{white2021examining,rameshkashyap2021}. 
Increased data diversity can improve the ability of models to generalize to new domains and languages \citep{benjamin2018}, however diversity is difficult to quantify \citep{gong2019} and full coverage is unachievable. 
This highlights the importance of documenting representativeness in order to ensure reproducibility\label{Reproducibility}---even in absence of the original data.
For replicability using the original data, further considerations include long-term storage and versioning, as to ensure equal comparisons in future work.

\paragraph{Instance Annotation.} 
Achieving high data quality requires that the data must be accurate and relevant for the task to enable effective learning \citep{pustejovsky2012natural,bestpractices} and reliable evaluation \citep{bowman2021will,basile2021we}. 
Since most datasets involve human annotation, a careful annotation design is crucial \citep{pustejovsky2012natural,paun2022statistical}. 
Ambiguity\index{Ambiguity} in natural language poses inherent challenges and disagreement is genuine (see \cref{sec:uncertainty-linguistics,sec:uncertainty-nlp} or \citealp{basile2021we,lucia-keynote,uma2021learning, plank2022problem}). 
As insights into the annotation process are valuable, yet often inaccessible, we recommend to release datasets with individual-coder annotations, as also put forward by \citet{basile2021we,prabhakaran2021releasing,plank2022problem} and to complement data with insights like statistics on inter-annotator coding \citep{paun2022statistical}, e.g., over time \citep{braggaar2021challenges}, or coder uncertainty \citep{elisaspaper}. 
When creating new datasets such information strengthens the reproducibility of future findings, as they transparently communicate the inherent variability instead of obscuring it.
Furthermore, this opens up new avenues to model distributions instead of single gold labels to more accurately reflect uncertainty \citep{javanmardi2024conformalized, gruber2024more} or modeling single annotators \citep{deng2023you}.

\paragraph{Pre-processing.} 
Given a well-constructed or well-chosen dataset, the first step of an experimental setup will be the process by which a model takes in the data. 
This must be well documented or replicated---most easily by publishing the associated code---as perceivably tiny pre-processing choices can lead to huge accuracy discrepancies \citep{fokkens2013offspring} and influences model uncertainty\index{Uncertainty!Epistemic} during inference.\footnote{
    One could for instance imagine a case where data uncertainty is created by not removing certain characters like rare symbols or fragments of code, or increasing model uncertainty through suboptimal tokenization of a language, for instance through another language's or multilingual tokenizer \citep{rust2020good}.
}
 Typically, this involves decisions such as sentence segmentation, tokenization and normalization. 
 In general, the data setup pipeline should ensure that a model ``observes'' the same kind of data across comparisons.
Next, the dataset must be split into representative subsamples which should only be used for their intended purpose, i.e.\@ model training, tuning and evaluation (see \cref{sec:experiments-analysis}). 
In order to support claims about the generality of the results, it is necessary to use a test split without overlap with other splits.
Alternatively, a tuning / test set could consist of data that is completely foreign to the original dataset \citep{ye2021ood}, ideally even multiple sets \citep{bouthillier2021accounting}, which is also essential when trying to quantify any model uncertainty in the face of distributional drifts\index{Shift!Distributional} (see \cref{sec:benchmarking-nlp-uncertainty}). 
It should be noted that even separate, static test splits are prone to unconscious ``overfitting'', if they have been in use for a longer period of time, as people aim to beat a particular benchmark \citep{gorman2019we}.
If a large variety of resources are not available, it is also possible to construct challenging test sets from existing data \citep{ribeiro2020beyond,kiela2021dynabench,sogaard2021we}.
Finally, the metrics by which models are evaluated should be consistent across experiments and thus benefit from standardized evaluation code \citep{dehghani2021benchmark}. 
For some tasks, metrics may be driven by community standards and are well-defined (e.g.\@ classification accuracy). 
In other cases, approximations must stand in for human judgment (e.g.\@ in machine translation\index{Machine translation}). 
In either case---but especially in the latter---dataset authors should inform users about desirable performance characteristics and recommended metrics.

\paragraph{Appropriate Conclusions.} 
The results a model achieves on a given data setup should first and foremost be taken as just that. 
Appropriate, broader conclusions can be drawn using this evidence provided that biases or incompleteness of the data are addressed (e.g., results only being applicable to a subpopulation).
Even with statistical tests for the significance of comparisons\index{Hypothesis testing}, properties such as the size of the dataset and the distributional characteristics of the evaluation metric may influence the statistical power\index{Statistical power} of any evidence gained from experiments \citep{card2020with}. 
In experiments with large models, practitioners might decide to only run the model on a subset of the data.
But again, such a sample might not be powerful enough and not enable fair comparisons with other models \citep{balloccu2024leak}.
It is therefore important to keep in mind that in order to claim the reliability of the obtained evidence, for example, larger performance differences are necessary on less data than what might suffice for a large dataset, or across multiple comparisons (see \cref{sec:experiments-analysis}).
Finally, a practitioner should be aware that a model's ability to achieve high scores on a certain dataset may not be directly attributable to its capability of simulating a cognitive ability, but rather due to spurious correlations in the input \citep{ilyas2019adversarial, schlangen2020targeting, nagarajan2021understanding}. 
By for instance only exposing models to a subset of features that should be inadequate to solve the task, we can sometimes detect when they take unexpected shortcuts \citep{fokkens2013offspring,zhou2015simple}. 
Communicating the limits of the data helps future work in reproducing prior findings more accurately.

\definecolor{betterorange}{HTML}{EBAA65}
\begin{tcolorbox}[
    title=\centering{\textcolor{black}{Best Practices: \textbf{Data}}}, 
    colback=white,
    colframe=betterorange, 
    left=2pt, 
    right=8pt, 
    enlarge top by=5pt,
    boxsep=2pt,
    coltext=black
]
    \footnotesize

  \begin{itemize}[leftmargin=12pt]
      \small
      \itemsep0em
      \item[$\diamond$] Consider dataset \& experimental limitations \citep{schlangen2020targeting};
      \item[$\diamond$] Document task adequacy, representativeness and pre-processing \citep{bender2018data};
      \item[$\diamond$] Split the data such as to avoid spurious correlations;
      \item[$\diamond$] Publish the dataset accessibly \& indicate changes;
\item[$\star$] Perform exploratory data analyses to ensure task adequacy \citep{caswell2021quality};
       \item[$\star$] Publish the dataset with individual-coder annotations;
      \item[$\star$] Consider the dataset's statistical power \citep{card2020with}.
  \end{itemize}
\end{tcolorbox}

\subsection{Codebase \& Models}\label{sec:model-codebase}

The NLP community has historically taken pride in promoting open access to papers, data, code, and documentation, but some have also noted room for improvement \citep{wieling2018reproducibility,belz2021systematic}. 
The benefit of such a repository is in its ability to enable direct \emph{replication}\index{Replicability}, helping to reduce uncertainty in modeling when building upon others work.
In DL\index{Deep learning} however, full datasets can be large and impractical to share. 
Due to their importance however, it is essential to carefully consider how one can share the data with researchers in the future. 
Therefore, repositories for long-term data storage backed by public institutions should be preferred (e.g.\@ LINDAT / CLARIN by \citealp{varadi2008clarin}).
Nevertheless, practitioners often can not distribute data due to privacy, legal, or storage reasons. 
In such cases, practitioners must instead carefully consider how to distribute data and tools to allow future research to produce accurate replications of the original data \citep{zong2020}. 

\paragraph{Hyperparameter Search.} 
Hyperparameter tuning strategies remain an open area of research (e.g.\@ \citealp{bischl2021hyperparameter}), but are central to the replication of contemporary models.
Well-chosen hyperparameters promote stability in model predictions, while ill-chosen parameters induce additional additional uncertainty.\footnote{
    Whether such uncertainty would be aleatoric\index{Uncertainty!Aleatoric} or epistemic\index{Uncertainty!Epistemic} is difficult to decide;
    while more data could compensate for suboptimal hyperparameter values, it is intuitive that a model will be unlikely to converge and reduce its uncertainty for e.g.\@ adversarially chosen values.
    This reinforces the argument by \citet{baan2023uncertainty} that data and model uncertainty should not be seen as a dichotomy, but rather as a spectrum.
} 
The following rules of thumb exist: 
Grid search or Bayesian optimization can be applied if few parameters can be searched exhaustively under the computation budget. 
Otherwise, random search is preferred, as it explores the search space more efficiently \citep{bergstra2012random}. 
Advanced methods like Bayesian optimization \citep{snoek2012practical} and bandit search-based approaches \citep{li2017hyperband} can be used as well if applicable \citep{bischl2021hyperparameter}. 
To avoid unnecessary guesswork, the following information is expected: 
Hyperparameters that were searched per model (including options and ranges), the final hyperparameter settings used, number of trials, and settings of the search procedure if applicable.
As tuning of hyperparameters is typically performed using specific parts of the dataset, it is essential to note that any modeling decisions based on them automatically invalidate their use as \emph{test} data.

\paragraph{Models.}
Contemporary models (e.g.\@  \citealp{vaswani2017attention, devlin2019bert, dosovitskiy2021image, chen2021decision, touvron2023llama, touvron2023llama2, llama3modelcard, jiang2024mistral, groeneveld2024olmo}) have very large computational and memory footprints. 
To avoid retraining models, and more importantly, to allow for replicability, it is recommended to save and share model weights. 
This may face similar challenges as those of datasets (namely, large file sizes), but it remains an impactful consideration. 
In most cases, simply sharing the best or most interesting model could suffice, although sharing multiple models enables more robust significance testing and allows for modeling of uncertainty through ensembling\index{Ensembling} (\cref{sec:bayesian-neural-networks}).
It should be emphasized that distributing model weights should always complement a well-documented repository as libraries and hosting sites might not be supported in the future.

\paragraph{Model Evaluation.}
The exact model and task evaluation procedure can differ significantly (e.g.\@ \citealp{post2018call}).
It is important to either reference the exact evaluation script used (including parameters, citation, and version, if applicable) or include the evaluation script in the codebase. 
Moreover, to ease error or post-hoc analyses, we highly recommend saving model predictions whenever possible and making them available at publication \citep{card2020with, gehrmann2022repairing}
and using standardized and tested implementations (e.g.\@ \citealp{von2022evaluate}).
Using single metrics can also distort results or paint a restrictive picture, which is why using multiple different evaluation metrics is commendable \citep{marie2021scientific}.

\paragraph{Model Cards.} 
Apart from quantitative evaluation and optimal hyperparameters, \citet{mitchell2019model} propose model cards: 
A type of standardized documentation, as a step towards responsible ML\index{Machine learning} and AI\index{Artificial intelligence} technology, accompanying trained ML models that provide benchmarked evaluation in a variety of conditions, across different cultural, demographic, or phenotypic and intersectional groups that are relevant to the intended application domains. 
They can be reported in the paper or project, and can help to collect important information for reproducibility\index{Reproducibility}, such as preprocessing and evaluation results. 
We refer to \citet{mitchell2019model,menon2020pulse} for examples of model cards.

\begin{tcolorbox}[
    title={\centering\textcolor{black}{Best Practices: \textbf{Codebase \& Models}}}, 
    colback=white,
    colframe=red!25!green!70!blue!55, enlarge top by=5pt,
    left=2pt, right=8pt,
    boxsep=2pt,
    coltext=black
]
  \footnotesize
  \begin{itemize}[leftmargin=12pt]
      \small
      \itemsep0em
      \item[$\diamond$] Publish a code repository with documentation and license;
      \item[$\diamond$] Report all details about hyperparameter search and model training;
      \item[$\diamond$] Specify the hyperparameters for replicability;
      \item[$\diamond$] Publish model predictions and evaluation scripts.;
      \item[$\diamond$] Use multiple, complementary evaluation metrics;
      \item[$\star$] Use model cards;
      \item[$\star$] Publish models; 
  \end{itemize}
\end{tcolorbox}

\subsection{Experiments \& Analysis}\label{sec:experiments-analysis}

Experiments and their analyses constitute the core of most scientific works, and empirical evidence is valued especially highly in ML research \citep{birhane2021values}. 
However, there are common issues that practitioners are faced with model training and experimental analyses, for which we discuss counter-strategies here.

\paragraph{Model Training.} 
For model training, it is advisable to set a random seed for replicability\index{Replicability}, and train multiple initializations per model in order to obtain a sufficient sample size for later statistical tests. 
The number of runs should be adapted based on the observed variance: Using for instance bootstrap power analysis, existing model scores are raised by a constant compared to the original sample using a significance test in a bootstrapping\index{Bootstrap} procedure \citep{yuan2003bootstrap,tuffery2011data, henderson2018deep}. 
If the percentage of significant results is low, we should collect more scores.\footnote{The resulting tensions with modern DL hardware requirements are discussed in \cref{sec:discussion}.}
\citet{bouthillier2021accounting} further recommend to vary as many sources of randomness in the training procedure as possible (i.e., data shuffling, data splits etc.) to obtain a closer approximation of the true model performance. 
When training more runs is not feasible such as in the case of LLMs\index{Large language model}, we can for instance obtain additional observations by varying the generation process (see for instance the case study in \cref{sec:case-study}).
Nevertheless, any drawn conclusion are still surrounded by a degree of statistical uncertainty, which can be combated by the use of statistical hypothesis testing\index{Hypothesis testing}.

\paragraph{Significance Testing.} \index{Hypothesis testing}
Using deep neural networks\index{Neural network}, a number of (stochastic) factors such as the random seed \citep{dror2019deep} or even the choice of hardware \citep{yang2018design} or framework \citep{leventi2022deep} can influence performance and need to be taken into account. 
First of all, the size of the dataset should support sufficiently powered statistical analyses (see \cref{sec:data}).
Secondly, an appropriate significance test should be chosen. 
We give a few rules of thumb based on \citet{dror2018hitchhiker}:
When the distribution of scores is known, for instance a normal distribution\index{Normal distribution} for the Student's-$t$ test\index{Student's-$t$ test}, a \emph{parametric} test should be chosen. 
Parametric tests are designed with a specific distribution for the test statistic in mind, and have strong statistical power (i.e.\@ a lower Type II error\index{Type II error}). 
The underlying assumptions can sometimes be hard to verify (see \citealp{dror2018hitchhiker},  Section 3.1), thus when in doubt \emph{non-parametric} tests can be used. 
This category features tests like the bootstrap\index{Bootstrap}, employed in case of a small sample size, or the Wilcoxon signed-rank test\index{Wilcoxon signed-rank test} \citep{wilcoxon1992individual}, when plenty observations are available. 
Depending on the application, the usage of specialized tests might furthermore be desirable \citep{dror2019deep, agarwal2021deep}. 
We also want to draw attention to the fact that comparisons between multiple models and / or datasets, \emph{require} an adjustment of the confidence level, for instance using the Bonferroni correction\index{Bonferroni correction} \citep{bonferroni1936teoria}, which is a safe and conservative choice and easily implemented for most tests \citep{dror2017replicability, ulmer2022deep}. 
\citet{azer2020not} provide a guide on how to adequately word insights when a statistical test was used, and \citet{greenland2016statistical} list common pitfalls and misinterpretations of results. 
Due to spatial constraints, we refer to \cref{sec:hypothesis-testing} for a slightly more technical introduction to the topic. 
Current trends surrounding LLMs further make significance testing challenging, as training and evaluating multiple different model runs can be prohibitively expensive. 
We explore different strategies in this restrictive setting in the case study in \cref{sec:case-study}.

\paragraph{Critiques \& Alternatives.} 
Although statistical hypothesis testing\index{Hypothesis testing} is an established tool in many disciplines, its (mis-)use has received criticism for decades \citep{berger1987testing, demvsar2008appropriateness, ziliak2008cult}. 
For instance, \citet{wasserstein2019moving} criticize the $p$-value as reinforcing publication bias through the dichotomy of ``significant'' and ``not significant'', i.e.\@ by favoring positive results \citep{locascio2017results}. 
Instead, \citet{wasserstein2019moving} propose to report it as a continuous value and with the appropriate scepticism.\footnote{
    Or, as \citet{wasserstein2019moving} note: ``\emph{statistically significant}---don't say it and don't use it''.
} 
In addition to statistical significance, another approach advocates for reporting \emph{effect size} \citep{berger1987testing, lin2013research}, so for instance the mean difference, or the absolute or relative gain in performance for a model compared to a baseline. 
The effect size can be modeled using Bayesian analysis \citep{kruschke2013bayesian, benavoli2017time}, which better fit the uncertainty surrounding experimental results, but requires the specification of a plausible statistical model producing the observations\footnote{
    Here, we are \emph{not} referring to a neural network\index{Neural network}, but instead to a process generating experimental observations, specifying a prior\index{Prior distribution} and likelihood\index{Likelihood} for model scores. 
    Conclusions are drawn from the posterior distribution over parameters of interest (e.g.\@ the mean performance), as demonstrated by \citet{benavoli2017time}.
} 
and potentially the usage of markov chain Monte Carlo\index{Markov chain Monte Carlo} sampling \citep{brooks2011handbook, gelman2021bayesian}. 
\citet{benavoli2017time} give a tutorial for applications to ML and supply an implementation of their proposed methods in a software package and guidelines for reporting details are given by \citet{kruschke2021bayesian}, including for instance the choice of model and priors. 

\begin{tcolorbox}[
    title={\centering\textcolor{black}{Best Practices: \textbf{Experiments \& Analysis}}}, 
    colback=white,
    colframe=red!25!green!20!blue!25, 
    left=2pt, right=8pt, enlarge top by=5pt,
    boxsep=2pt,
    coltext=black
]

\footnotesize
   \begin{itemize}[leftmargin=12pt]
      \small
      \itemsep0em
       \item[$\diamond$] Report mean \& standard dev.\ over multiple runs;
       \item[$\diamond$] Perform significance testing or Bayesian analysis and motivate your choice of method;
       \item[$\diamond$] Carefully reflect on the amount of evidence regarding your initial hypotheses.
   \end{itemize}
\end{tcolorbox}

\subsection{Discussion}\label{sec:discussion}

Previous sections have emphasized the need to overhaul some experimental standards and have describes their interactions with reducing and modeling uncertainty.\index{Experimental design}
But specifically with regard to statistical significance in \cref{sec:experiments-analysis}\index{Hypothesis testing}, there is a stark conflict between the hardware requirements of modern methods \citep{sevilla2022compute} and the computational budget of the average researcher. 
Only the best-funded research labs can afford the increasing computational costs to account for the statistical uncertainty of results and to reproduce prior works \citep{hooker2021hardware}. 
Under these circumstances, it becomes difficult to judge whether the results obtained via larger models and datasets \emph{actually} constitute substantial progress or just statistical flukes. 
While we present some alternatives in \cref{sec:case-study}, this environment also make the use of traditional Bayesian DL techniques like in \cref{sec:bayesian-neural-networks} more challenging.
For this reason, researchers should embrace data variability as a new avenues for modeling and reducing uncertainty in large contemporary models (as discussed in \cref{sec:data}).\\

Echoing our fundamental deliberations about the scientific process in \cref{sec:reproducibility-replicability}, being able to (re-)produce empirical findings is critical for scientific progress, particularly in fast-growing fields like NLP \citep{manning2017last}. 
To reduce the risks of a reproducibility crisis and unreliable research findings \citep{Ioannidis2005}, experimental rigor is imperative. 
Being aware of possible harmful implications and to avoid them is therefore important, since every step can carry possible biases \citep{hovyshrimai21, waseem2021disembodied}.
This chapter aims at providing a toolbox of actionable recommendations, and a reflection and summary of the ongoing broader discussion. 
To improve the experimental standard in the field overall, we can distill the following suggestions:
\textbf{As researchers}, we can start implementing the recommendations in this work in order to drive bottom-up change and reach a critical mass \citep{centola2018experimental}. 
\textbf{As reviewers}, we can shift focus from results to more rigorous methodologies \citep{rogers2021how}, and allow more critiques and reproductions of past works and meta-reviews to be published \citep{birhane2021values,lampinen2021publishing}. 
\textbf{As a community}, we can change the incentives around research and experiment with new initiatives.
With concrete best practices to raise awareness and a call for uptake, we hope to aid researchers in their empirical endeavors.
The rest of this chapter is dedicated to the practice of statistical hypothesis testing and its challenges in the era of LLMs.

\section[Statistical Hypothesis Testing]{Statistical Hypothesis Testing}\label{sec:hypothesis-testing}

\begin{footnotesize}
    \vspace{-2.5ex}
    \emph{The following work is based on \citet{ulmer2022deep}}.\\
    \vspace{2.5ex}
\end{footnotesize}

\index{Hypothesis testing}
In this part of the chapter, we are discussing statistical hypothesis testing with an application to comparing two models or \emph{algorithms}.
While terms like model or algorithms will be used almost synonymously in the rest of this thesis, it will aid the rest of this chapter to define these notions better.

\begin{definition}[Model]
    We define a model $f_{\btheta}$ to be the element of some hypothesis class $f_{\btheta} \in \mathbb{H}$.
    Here, the hypothesis class is loosely defined as all neural predictors trained using the same architecture and training data.  
\end{definition}

Importantly, the above definition does not imply that all predictors $\mathbb{H}$ comprise the same parameter values---they can be influenced by factors such as random seeds or the order of training samples, and in the case of LLMs, the use of different generation hyperparameters or prompt templates.

\begin{definition}[Metric \& Observation]
    Let us define $\phi: \mathbb{H} \times \mathcal{P}(\mathbb{D}) \rightarrow \mathbb{R}$ to be a function measuring the performance of a predictor $f_\theta$ on some dataset $\mathbb{D} \in \mathcal{P}(\mathbb{D})$ in form of a real number $s \in \mathbb{R}$, called \emph{observation} or \emph{score}, with $\phi$ called the \emph{metric}.
\end{definition}

We will assume in the following that a higher number for $s$ indicates a more desirable behavior. 
Now, we let $\mathbb{S}_\mathbb{A}$ denote a set of observations obtained from different instances of a specific hypothesis class $\mathbb{A}$. 
Ideally for deep neural networks, obtaining a set of observations $\mathbb{S}_\mathbb{A}$ would involve training multiple \emph{instances} of a network with the same architecture using different sets of hyperparameters and random initializations. 
Since the former part often becomes computationally infeasible in practice, we follow the advice of \citet{bouthillier2021accounting} and assume that it is obtained by fixing one set of hyperparameters after a prior search and varying as many other random elements as possible. 
Here, we only give a very brief introduction into statistical hypothesis testing\index{Hypothesis testing} using $p$-values, and refer the reader to resources such as \citet{japkowicz2011evaluating, dror2018hitchhiker, raschka2018model, azer2020not, dror2020statistical, riezler2021validity} for a more comprehensive overview. 
Using the introduced notation, we can define a one-sided test statistic $\delta(\mathbb{S}_\mathbb{A}, \mathbb{S}_\mathbb{B})$ based on the gathered observations. 
An example of such test statistics is for instance the difference in observation means $\delta(\mathbb{S}_\mathbb{A}, \mathbb{S}_\mathbb{B}) = \hat{\mu}_\mathbb{A} - \hat{\mu}_\mathbb{B}$ with $\mu_{(\cdot)} = \frac{1}{|\mathbb{S}_{(\cdot)}|}\sum_{s_i \in \mathbb{S}_{(\cdot)}}s_i$.
We then formulate the following null hypothesis\index{Null hypothesis}:

\begin{equation}
    \text{H}_0:\ \delta(\mathbb{S}_\mathbb{A}, \mathbb{S}_\mathbb{B}) \le 0.
\end{equation}

The null hypothesis $\text{H}_0$ \index{Null hypothesis} assumes the opposite of our desired case, namely that $\mathbb{A}$ is not better than $\mathbb{B}$, but equally as good or worse, as indicated by the value of the test statistic. 
Usually, the goal becomes to reject this null hypothesis. 
$p$-value testing is a frequentist method in the realm of statistical hypothesis tests\index{Hypothesis testing}. 
It introduces the notion of data that \emph{could have been observed} if we were to repeat our experiment again using the same conditions, which we will write with superscript $^\text{rep}$ in order to distinguish them from our actually observed scores \citep{gelman2021bayesian}. 
We then define the $p$-value as the probability that, under the null hypothesis\index{Null hypothesis} $\text{H}_0$, the test statistic using replicated observations is larger than or equal to the \emph{observed} test statistic:

\begin{equation}
    p(\delta(\mathbb{S}_\mathbb{A}^\text{rep}, \mathbb{S}_\mathbb{B}^\text{rep}) \ge \delta(\mathbb{S}_\mathbb{A}, \mathbb{S}_\mathbb{B}) \mid \text{H}_0).
\end{equation}

We can interpret this expression as follows: 
Assuming that $\mathbb{A}$ is not better than $\mathbb{B}$, the test assumes a corresponding distribution of statistics that $\delta$ is drawn from. 
So how does the observed test statistic $\delta(\mathbb{S}_\mathbb{A}, \mathbb{S}_\mathbb{B})$ fit in here? 
This is what the $p$-value expresses: 
When the probability is high, $\delta(\mathbb{S}_\mathbb{A}, \mathbb{S}_\mathbb{B})$ is in line with what we expected under the null hypothesis\index{Null hypothesis}, so we \emph{cannot} reject the null hypothesis, or in other words, we \emph{cannot} conclude $\mathbb{A}$ to be better than $\mathbb{B}$.
If the probability is low, that means that the observed $\delta(\mathbb{S}_\mathbb{A}, \mathbb{S}_\mathbb{B})$ is quite unlikely under the null hypothesis and that the reverse case is more likely---i.e.\@ that it is likely larger---and we conclude that $\mathbb{A}$ is indeed better than $\mathbb{B}$. 
In summary, the question that a $p$-value asks can be stated as follows:
Assuming the null hypothesis to be true, how likely is a test statistic to be at least as extreme as observed?
Note that \textbf{the $p$-value does not express whether the null hypothesis is true}. 
To make our decision about whether or not to reject the null hypothesis, we typically determine a threshold---the significance level $\alpha$, often set to $0.05$---that the $p$-value has to fall below. However, it has been argued that a better practice involves reporting the $p$-value alongside the results without a pigeonholing of results into significant and non-significant \citep{wasserstein2019moving}. 

\subsection{Almost Stochastic Order}\label{sec:aso}

Deep neural networks\index{Neural network} are known to be highly non-linear models \citep{li2018visualizing}, having their performance depend to a large extent on the choice of hyperparameters, random seeds and other (stochastic) factors \citep{bouthillier2021accounting}. 
This makes comparisons between algorithms more difficult, as illustrated by the motivating example below by \citet{dror2019deep}:

\definecolor{bettergray}{HTML}{F5F5F5}
\begin{tcolorbox}[
    title={}, 
    colback=bettergray,
    colframe=gray, 
    left=4pt, right=8pt, enlarge top by=5pt,
    boxsep=5pt,
    coltext=black
]
   \footnotesize
   \begin{minipage}{0.55\textwidth}
    \begin{example}[Part-of-Speech tagging]
        Consider the results for Part-of-Seech-tagging given in the table on the right, taken over 3898 and 1822 observations using different hyperparameter configurations and random seeds, respectively. 
        Optimizing with Adam \citep{kingma2015adam} gives a higher average word-level accuracy than using RMSprop \citep{tieleman2012lecture}, however the median favors the latter. 
        Furthermore, the minimum across a few runs favor Adam, but the maximum is higher for RMSprop. 
        So, which algorithm do we consider to be \emph{better}?   
    \end{example}
\end{minipage}
\begin{wraptable}[10]{r}{2cm}
    \vspace{-6.5cm}
    \hspace{-2.8cm}
    \setlength{\tabcolsep}{5pt}
    \renewcommand{\arraystretch}{1.2}
    \footnotesize
    \begin{tabular}{@{}lrr@{}}
        \toprule
        & \footnotesize Adam & RMSprop  \\
        \midrule
        \footnotesize Mean & .9224 & .9190 \\
        \footnotesize Std.\@ dev. & .0604 & .0920 \\
        \footnotesize Median & .9319 & .9349 \\
        \footnotesize  Min. & .1746 & .1420 \\
        \footnotesize  Max. & .9556 & .9573 \\
        \bottomrule
    \end{tabular}
\end{wraptable}
\end{tcolorbox}

Therefore, \citet{dror2019deep} propose \emph{almost stochastic order}\index{Stochastic order!Almost} (ASO) for Deep Learning models based on the work by \citet{del2018optimal}.\footnote{
    Implementation details and pseudo-code are given in \cref{app:implementation-details}.
}
It is based on a relaxation of the concept of \emph{stochastic order} by \citet{lehmann1955ordered}: 
A random variable $x_{\mathbb{A}}$ is defined to be \emph{stochastically larger} than $x_{\mathbb{B}}$ (denoted $x_{\mathbb{A}} \succeq x_{\mathbb{B}}$) if $\forall x: F(x) \le G(x)$, where $F$ and $G$ denote the cumulative distribution functions\index{Cumulative distribution function} (CDF) of the two random variables. 
The CDF is defined as $F(t) = p(x \le t)$, while the \emph{empirical} CDF\index{Cumulative distribution function!Empirical} given a sample $\{x_1, \ldots, x_n\}$ is defined as 

\begin{equation*}
    F_n(t) = \frac{1}{n}\sum_{i=1}^n \indicator{x_i \le t},
\end{equation*}

\begin{figure}[bt]
    \centering
    \begin{subfigure}[t]{0.49\textwidth}
        \includegraphics[width=0.99\textwidth]{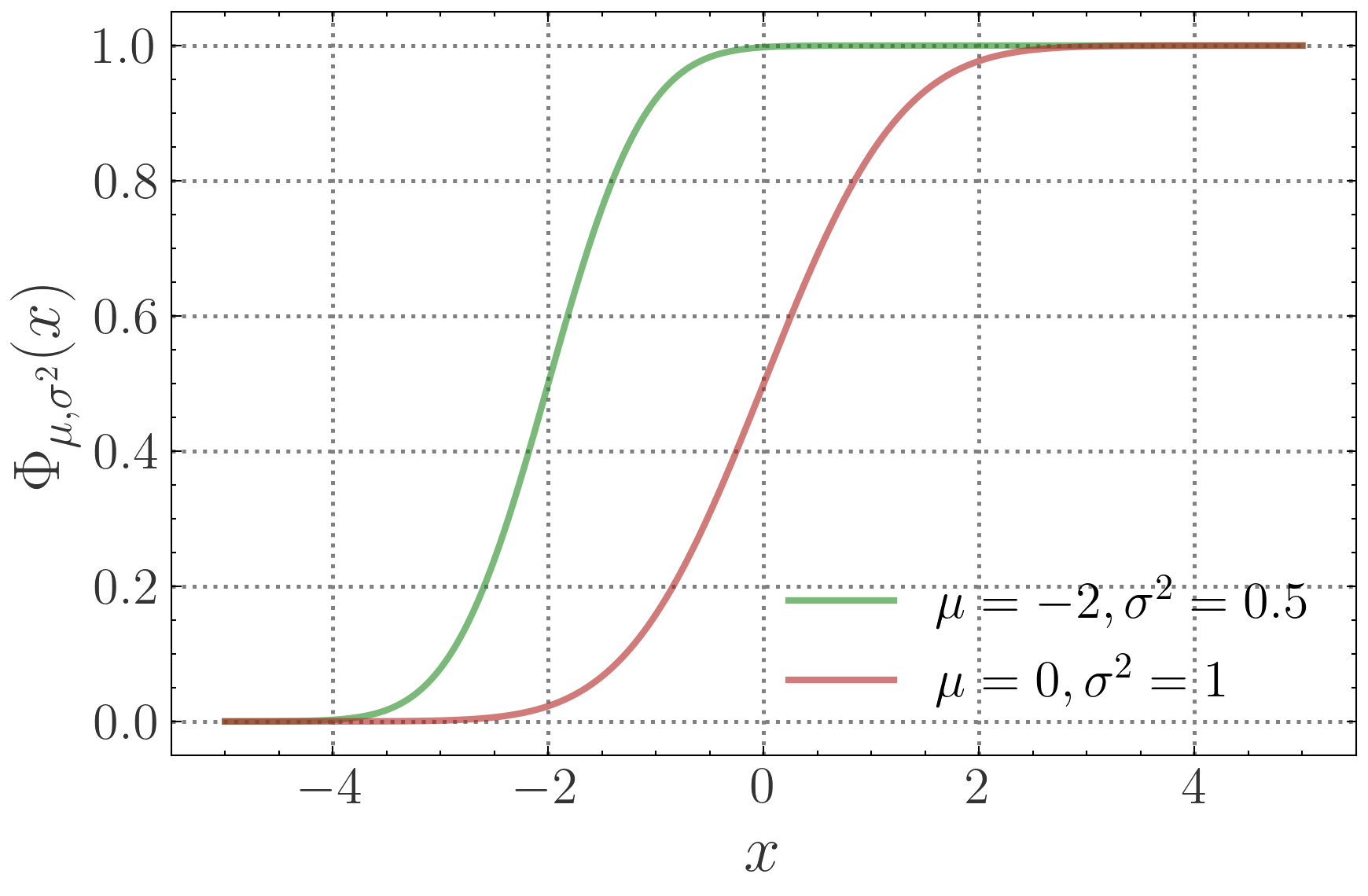}
        \caption{Stochastic order with red $\succeq$ green.}
        \label{subfig:so}
    \end{subfigure}
    \begin{subfigure}[t]{0.49\textwidth}
        \includegraphics[width=0.99\textwidth]{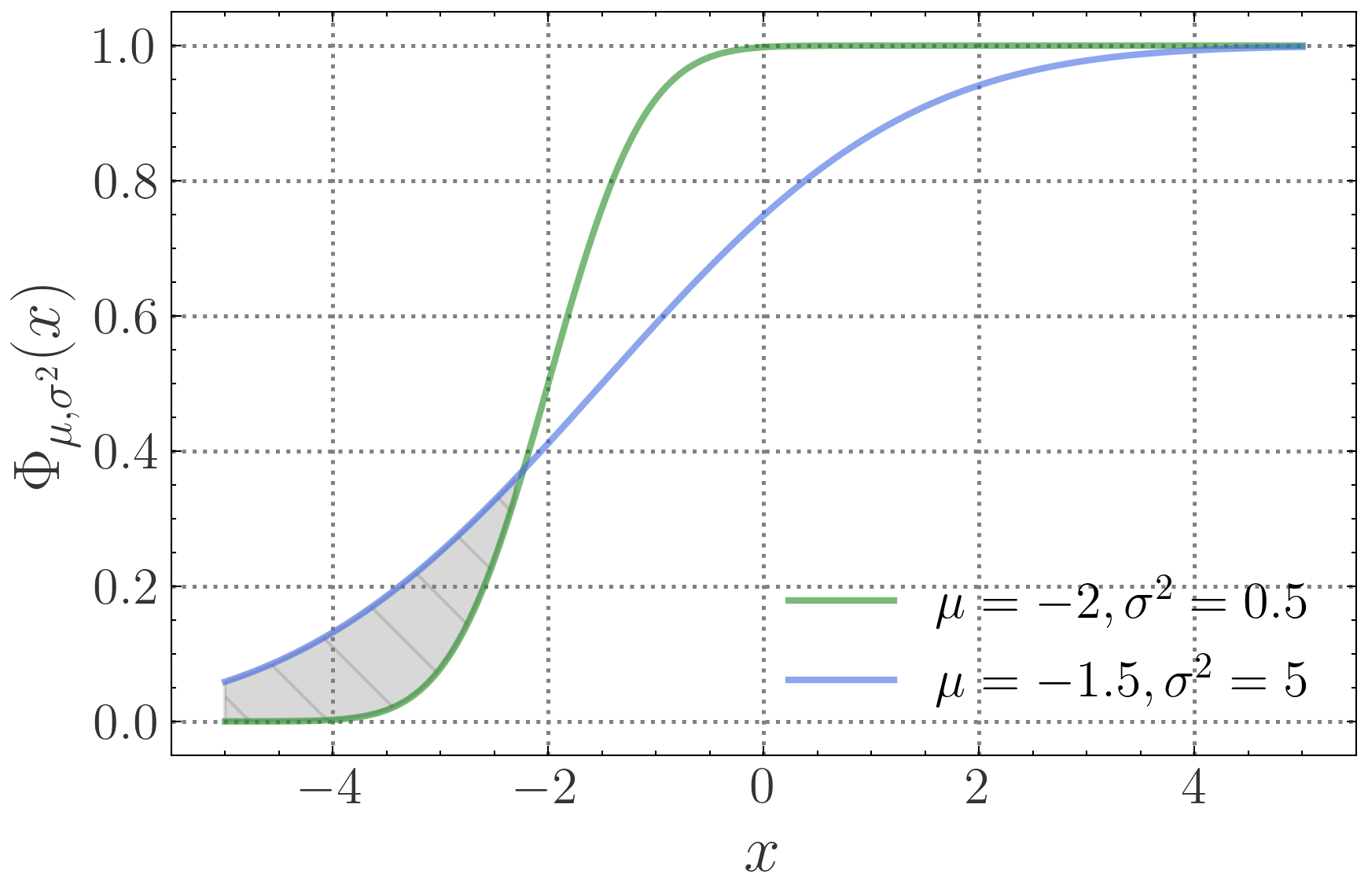}
        \caption{Almost stochastic order, with blue $\succsim$ green.}
        \label{subfig:aso}
    \end{subfigure}
    \caption[Examples of (almost) stochastic order between two CDFs.]{Examples for stochastic order (a) and almost stochastic order (b), illustrated using the CDFs of two normal random variables. Because stochastic order is too strict to be practical, almost stochastic order allows for some degree of violation of the order (gray area in (b)).}\label{fig:aso-so}
\end{figure}

\noindent with $\indicator{\cdot}$ being the indicator function. 
In practice, since we do not know the real score distributions $p(x_{\mathbb{A}})$ and $p(x_{\mathbb{B}})$, we cannot use the precise CDFs in subsequent calculations, and we rely on the empirical CDFs $F_N$ and $G_M$. 
A case of stochastic order\index{Stochastic order} is illustrated in \cref{subfig:so}, using the CDFs of two normal distributions\index{Normal distribution}. 
However, in cases such \cref{subfig:aso} we would still like to declare one of the algorithms superior, even though the stochastic order of the underlying CDFs is partially violated. 
Several ways to quantify the violation of stochastic dominance exist \citep{alvarez2017models,del2018some}, but here we elaborate on the optimal transport\index{Optimal transport} approach by \citet{del2018optimal}. 
They propose a the following expression quantifying the distance of each random variables from being stochastically larger than the other:

\begin{equation}\label{eq:epsilon-dist}
    \varepsilon_{W_2}(F, G) = \frac{\int_{\mathbb{V}_{x}} (F^{-1}(t) - G^{-1}(t))^2 \dd t}{(W_2(F, G))^2},
\end{equation}

\noindent with the \emph{violation ratio} $\varepsilon_{W_2}(F, G) \in [0, 1]$ and a \emph{violation set} $\mathbb{V}_{x} = \big\{t \in (0, 1): F^{-1}(t) < G^{-1}(t) \big\}$, i.e.\@ where the stochastic order is being violated. 
\cref{eq:epsilon-dist} contains the following components: Firstly, the quantile functions\index{Cumulative distribution function!Inverse} $F^{-1}(t)$ and $G^{-1}(t)$ associated with the corresponding CDFs:

\begin{equation*}
    F^{-1}(t) = \inf \big\{x: t \le F(x)\big\}, \quad t \in (0, 1).
\end{equation*}

The quantile functions allow us to define stochastic order via $X \succeq Y \iff \forall t \in (0, 1): F^{-1}(t) \ge G^{-1}(t)$. 
Secondly, it comprises the univariate $l_2$-Wasserstein distance:

\begin{equation}\label{eq:wasserstein}
    W_2(F, G) = \sqrt{\int_0^1 \big(F^{-1}(t) -  G^{-1}(t)\big)^2 \dd t},
\end{equation}

\noindent which for univariate functions can be expressed through their inverse CDFs\index{Cumulative distribution function!Inverse} \citep{de20211}.
Finally, \citet{del2018optimal, dror2019deep} define a hypothesis test based on this quantity by formulating the following hypotheses:

\begin{equation*}\begin{aligned}
    \text{H}_0:\ & \varepsilon_{W_2}(F, G) \ge \tau\\
    \text{H}_1:\ & \varepsilon_{W_2}(F, G) < \tau, \\
\end{aligned}\end{equation*}

\noindent for a pre-defined threshold $\tau > 0$, for instance $0.5$ or lower (see discussion in \cref{app:aso-error-rate} about the choice of threshold). 
Further, \citet{alvarez2017models, dror2019deep} produce a frequentist upper bound to this quantity, defining the minimal $\varepsilon_{W_2}$ for which we can reject the null hypothesis with a confidence of $1 - \alpha$ as 

\begin{equation}\label{eq:epsmin}
    \epsmin(F_N, G_M, \alpha) = \varepsilon_{W_2}(F_N, G_M) - \sqrt{\frac{N + M}{NM}}\hat{\sigma}_{N, M}\Phi^{-1}(\alpha).
\end{equation}

The variance term $\hat{\sigma}_{N, M}$ is estimated using a bootstrapping\index{Bootstrap} estimator (as introduced in \cref{sec:frequentist-perspective}) for the variance, with $F_N^*$ and $G_M^*$ denoting empirical CDFs\index{Cumulative distribution function!Empirical} based on sets of scores resampled from original sets of model scores, similar to re-sampling procedure in other tests like the bootstrap \citep{efron1994introduction} or permutation-randomization test\index{Permutation-randomization test} \citep{noreen1989computer}:

\begin{equation}\label{eq:aso-var}
    \hat{\sigma}_{N, M}^2 = \text{Var}\bigg[\sqrt{\frac{NM}{N + M}}\big(\varepsilon_{W_2}(F_N^*, G_M^*) - \varepsilon_{W_2}(F_N, G_M)\big) \bigg].
\end{equation}

Thus, if $\epsmin(F_N, G_M, \alpha) < \tau$, we can reject the null hypothesis\index{Null hypothesis} and claim that algorithm $\mathbb{A}$ is better than $\mathbb{B}$, with a growing discrepancy in performance the smaller the value becomes. 

\subsection{Experimental Comparison}\label{sec:experimental-comparison-aso}

\begin{figure}
    \centering 
    \includegraphics[width=0.6\textwidth]{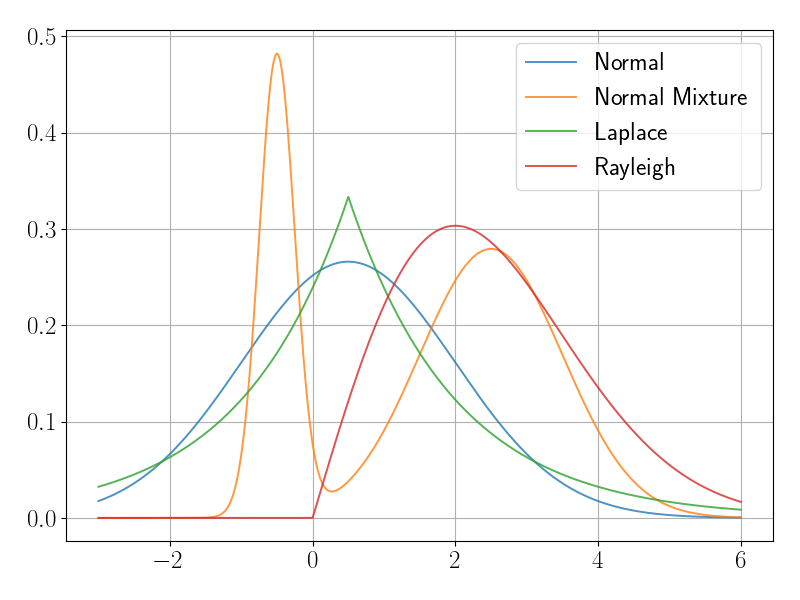}
    \caption[Plot of distributions used to benchmark significance tests.]{Plot of distributions used to empirically test the Type I and Type II error of significance tests in \cref{sec:experimental-comparison-aso}.}\label{fig:distributions}
\end{figure}

\begin{figure}[h]
    \centering
    \begin{subfigure}[t]{0.485\textwidth}
        \includegraphics[width=\textwidth]{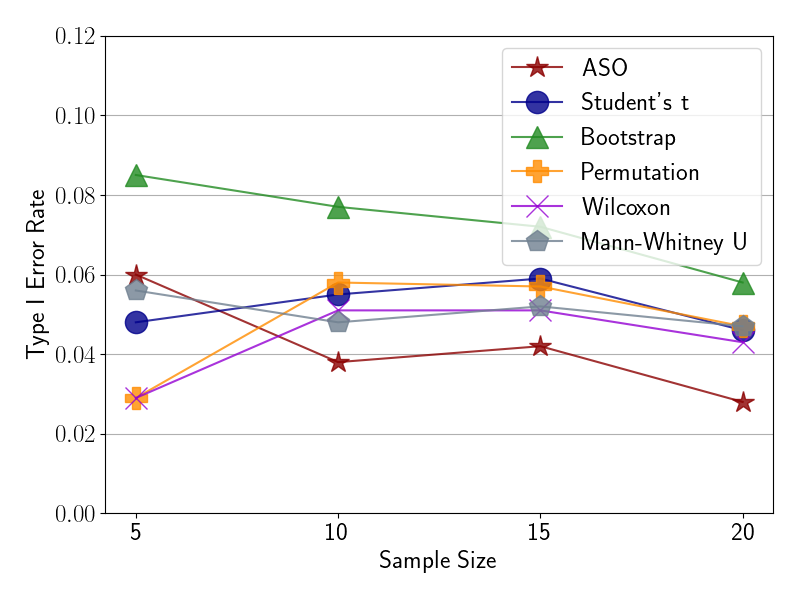}
        \caption{Rates for normal samples.}
        \label{subfig:type1_size_normal}
    \end{subfigure}%
    \hfill
    \begin{subfigure}[t]{0.485\textwidth}
        \includegraphics[width=\textwidth]{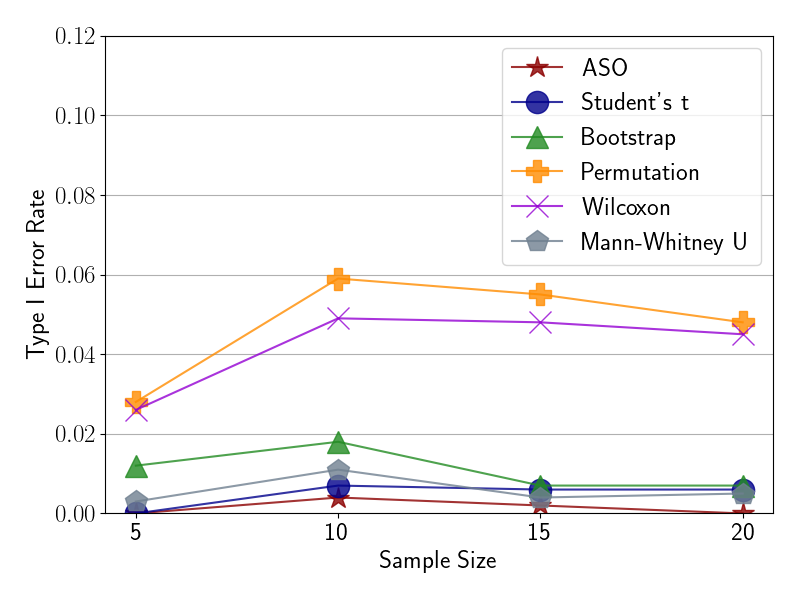}
        \caption{Rates for normal mixture samples.}
        \label{subfig:type1_size_normal_mixture}
    \end{subfigure}%

    \begin{subfigure}[t]{0.485\textwidth}
        \includegraphics[width=\textwidth]{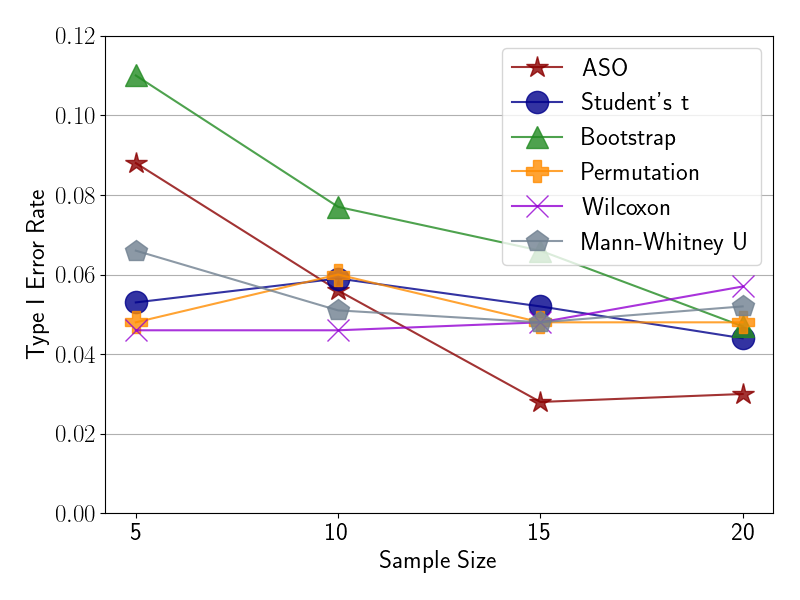}
        \caption{Rates for Laplace samples.}
        \label{subfig:type1_size_laplace}
    \end{subfigure}%
    \hfill
    \begin{subfigure}[t]{0.487\textwidth}
        \includegraphics[width=\textwidth]{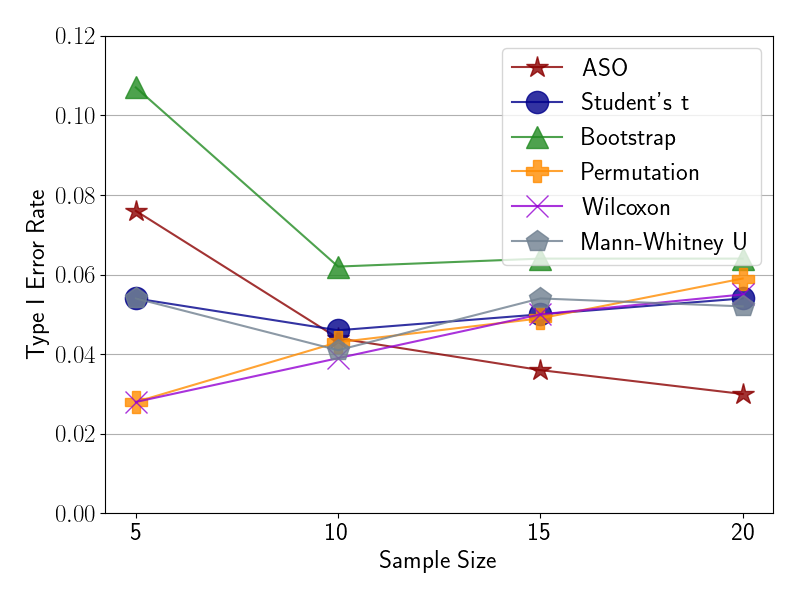}
        \caption{Rates for Rayleigh samples.}
        \label{subfig:type1_size_rayleigh}
    \end{subfigure}
    \caption[Comparisons of type I error rates.]{Comparing type I error rates for different tests and distributions as a function of sample size. Decisions are made using a confidence threshold of $\alpha = 0.05$ and $\tau = 0.2$ for $\varepsilon_\text{min}$.}\label{fig:aso-tests}
\end{figure}

We compare ASO\index{Stochastic order!Almost} to established significance tests such as the Student's-$t$\index{Student's-$t$ test}, the bootstrap \index{Bootstrap}\citep{efron1994introduction}, and the permutation-randomization test \index{Permutation-randomization test}\citep{noreen1989computer}, along with the Wilcoxon signed-rank\index{Wilcoxon signed-rank test} \citep{wilcoxon1992individual} and Mann-Whitney U test\index{Mann-Whitney U test} \citep{mann1947test} on different types of distributions, which are plotted in \cref{fig:distributions}. 
We plot the Type I error\index{Type I error} rate per $500$ simulations for ASO and $1000$ simulations for the other tests as a function of sample size in \cref{fig:aso-tests}, where we sample both sets of observation from the same distribution. 
For \cref{subfig:type1_size_normal}, we sample from $\mathcal{N}(0, 1.5^2)$ and try a bimodal normal mixture in \cref{subfig:type1_size_normal_mixture} (using the same parameter for the second component, and $\mathcal{N}(-0.5, 0.25^2)$ with mixture weights $\pi_1=0.75$ and $\pi_2=0.25$).
To test the behavior of tests on non-normal distributions, we also sample from a $\text{Laplace}(0, 1.5^2)$ distribution\index{Laplace distribution} in \cref{subfig:type1_size_laplace}, which possesses a different behavior around the main, as well as the Rayleigh distribution\index{Rayleigh distribution} with $\text{Rayleigh}(1)$ in \cref{subfig:type1_size_rayleigh}, which has a heavy tail.
We can see that ASO performs either en par or better than other tests in all scenarios, achieving \emph{lower} error rates the more samples are available, while other tests score around the expected type I error of $5 \%$. 
In \cref{app:aso-error-rate}, Type II error\index{Type II error} experiments reveal that the test produces comparatively higher error rates for ASO, though. 
This can be explained by the fact that we use the upper bound $\epsmin$ instead of $\varepsilon_{W_2}$ to evaluate the null hypothesis, which makes the test act more conservatively.
 We also find in \cref{app:aso-error-rate} that a decision threshold of $\tau = 0.2$ strikes an acceptable balance between Type I and II error\index{Type I error}\index{Type II error} rates across different scenarios. 
 Overall, we argue that a lower Type I error is more advantageous in the context of empirical research, and that a \emph{decreasing} error rate w.r.t.\@ higher sample sizes constitutes an appealing property when used on arbitrary distributions.
In these experiments, the score distributions were determined \emph{a priori} in order to create rigid experimental conditions. 
Naturally, a practitioner would not know these distribution in a typical setting, which is why we illustrate the usage of the test in the next section.

\subsection{Case study: Question-Answering with Large Language Models}\label{sec:case-study}

\begin{figure}[htb]
    \begin{subfigure}[t]{0.99\textwidth}
        \centering
        \includegraphics[width=0.99\linewidth]{./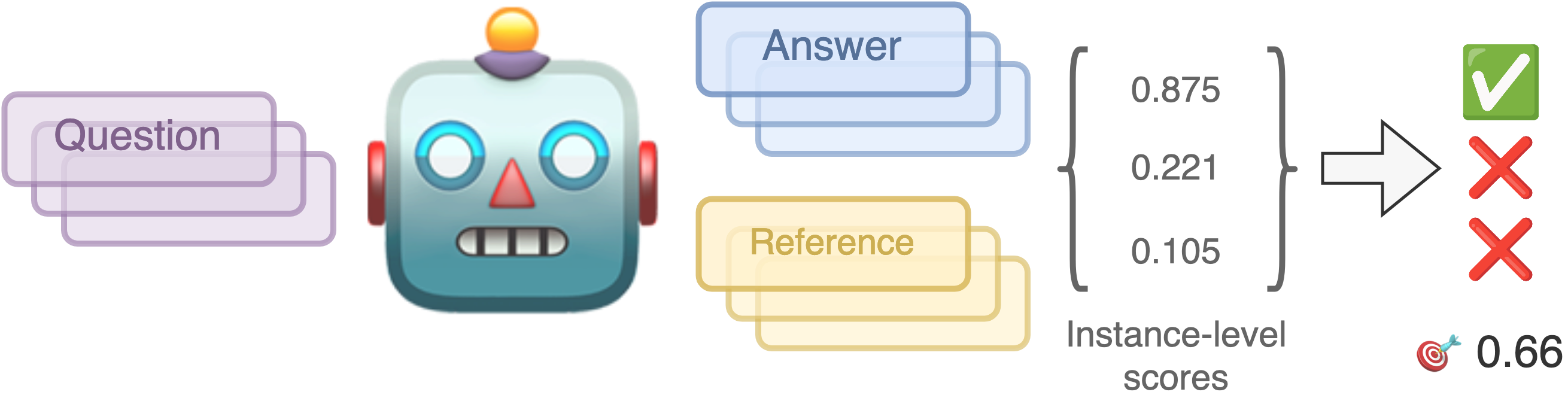}
        \caption{Setup for question-answering task.}\label{subfig:case-study-setup}
    \end{subfigure}%
    \vspace{0.5cm}
    \begin{subfigure}[t]{0.99\textwidth}
        \centering
        \includegraphics[width=0.80\linewidth]{./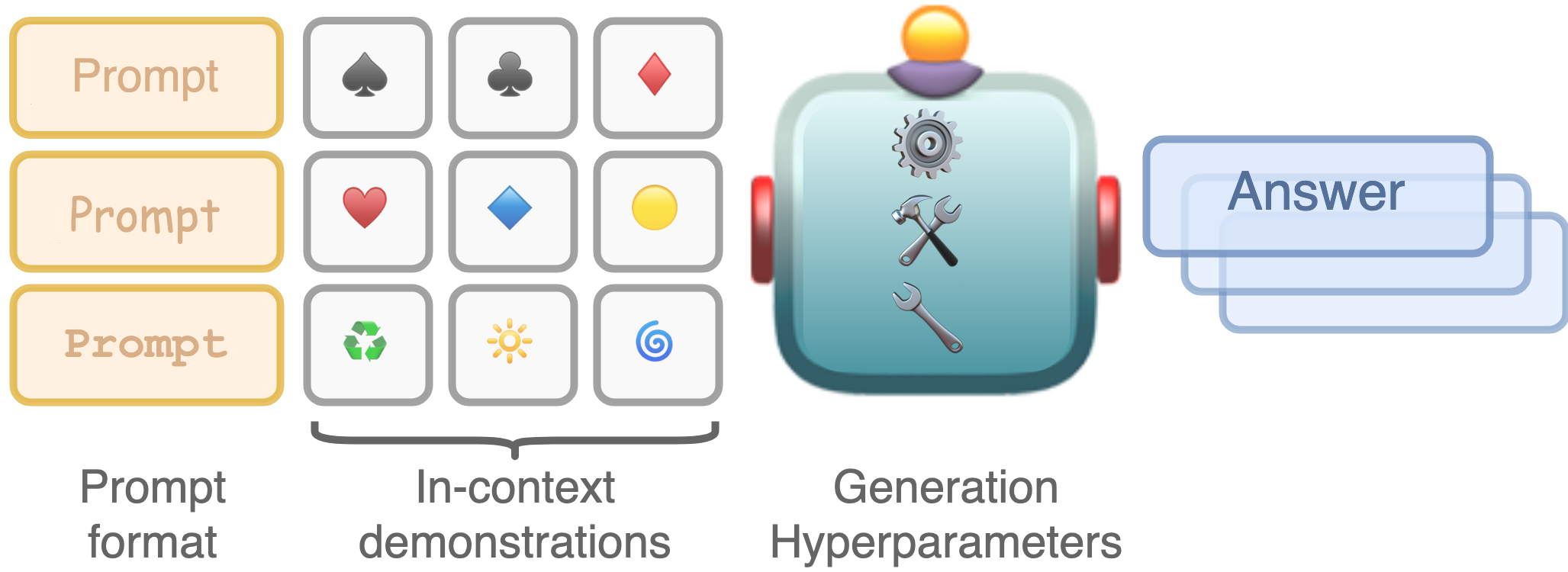}
        \caption{Strategies to produce varying answers.}\label{subfig:answer-strategies}
    \end{subfigure}
    \caption[Setup for the QA case study.]{
        Setup for the question-answering case study.
        In (a), we depict the general task setup: Questions are given to an LLM, which produces answers that are scored against reference answers using ROUGE-L.
        The scores for every question-answer pair are compared against a pre-defined threshold, which determines whether an answer is considered correct, and an accuracy score can be computed.
        (b) In order to produce different answers for the same question, we can vary different factors, including the prompt format, the in-context demonstrations, generation hyperparameters, or all of these factors together.
    }\label{fig:case-study-setup}
\end{figure}

We apply the ASO test\index{Stochastic order!Almost} to a very relevant problem in NLP\index{Natural language processing}:
Comparing the results from different LLMs\index{Large language model}, where models are already trained and multiple different seeds are not available.
Here, we explore ways in which we can still enable statistical hypothesis\index{Hypothesis testing} testing despite the more restrictive setup.
While we do not quantify any uncertainty in model \emph{predictions} here, introducing variability and employing hypothesis testing enables us to quantify uncertainty in model \emph{results}, therefore aiding model selection.

\paragraph{Setup.} 
We use three popular open-source models, namely MosaicAI MPT 7B \citep{MosaicML2023Introducing}, Mistral 7B \citep{jiang2024mistral}, and OLMo 7B \citep{groeneveld2024olmo},\footnote{
    More precisely, we use \texttt{\justify mosaicml/mpt-7b}, \texttt{\justify mistral-community/Mistral-7B-v0.2}, and \texttt{\justify allenai/OLMo-1.7-7B-hf}.
} and compare them on a closed-book question-answering\index{Question-answering} task on TriviaQA \citep{joshi2017triviaqa}.
The general task setup is shown in \cref{subfig:case-study-setup}:
Given a number of questions from the TriviaQA test set, we obtain the LLM's answers, which are scored against reference answers using ROUGE-L \citep{lin2004rouge}, which is a measure based on $n$-gram overlap.
If the obtained score surpasses a pre-defined threshold, we score an answer as correct.
From this, we obtain a single accuracy score for the whole test set.
In each case, we use their default generation methods set for the model on the HuggingfaceHub and $10$ other instances as in-context examples.\\

The goal is to show that even when we operate with monolithic models, we can still facilitate meaningful comparisons using statistical hypothesis testing\index{Hypothesis testing}.
The default option usually consist of just comparing the two accuracies (scalar comparison), however this does not take any uncertainty in the results into account.
Instead, we might compare the population of instance-level scores in \cref{subfig:case-study-setup} before thresholding (instance-level comparison), or use a bootstrap estimator on the instance-level scores to obtain multiple accuracy scores, similar to our estimation of the probability of heads using a sample of bootstrapped coin flips in \cref{sec:frequentist-perspective} (bootstrapping comparison).
Another approach is to vary the factors that produce an LLM's answer, which are depicted in \cref{subfig:answer-strategies}:
We can for instance change the prompt formatting (multi-prompt comparison), change the in-context demonstrations by re-sampling them from the training set for each inference (varying in-context samples), or modify the hyperparameters that influence the models generation (generation hyperparameters).
Lastly, we can also combine prompt formatting, varying in-context examples and generation hyperparameters by changing them jointly for every test instance (mix).
We briefly discuss each of these options in more detail.

\paragraph{Scalar Comparison.}
We first consider the potentially most common form of comparison, namely single scalars.
For this purpose, we compute the accuracy per model on the given test set of questions. 
To judge whether a question has been answered correctly, we use the same heuristic as employed by \citet{kuhn2023semantic}, where we compute the ROUGE-L\index{ROUGE} score \citep{lin2004rouge} as implemented by the \texttt{evaluate} package\footnote{See \url{https://huggingface.co/docs/evaluate/index}.} between a given model answer and gold answer.
When the resulting score surpasses a value of $0.3$, an answer is considered correct.

\paragraph{Instance-level Comparison.}
Instead of aggregating the measurements on all test instances into a single score, we can instead look at them as a set of observations.
This enables us to compare larger populations of observations, as opposed to having only one single observation per model.
For question-answering, we use the ROUGE-L scores\index{ROUGE}, but without applying a threshold.
A key difference to the other tested approaches is that this comparison answers a subtly different question about the models:
Instead of considering which hypothesis class of model is better by evaluating different model instances after training them with distinct random seeds, we instead ask which trained model \emph{instance} tends to give better-scored answers in general (as judged by the ROUGE-L heuristic).

\paragraph{Bootstrapping Comparison.}
In \cref{sec:frequentist-perspective}, we discussed bootstrapping as a way to quantify the uncertainty about a quantity of interest.
We can apply the same technique to the accuracy by bootstrapping\index{Bootstrap} samples of observations from the existing set of answered questions, and computing the accuracy on these pseudo-samples.
These scores can then be used to compute the standard error\index{Standard error} and to run them through the ASO test\index{Stochastic order!Almost}.

\paragraph{Multi-prompt Comparison.}
LLMs can be very sensitive to the chosen prompt format \citep{mizrahi2023state, sclar2023quantifying}. 
Therefore, instead of evaluating predictions from models trained with different random seeds, we can instead consider predictions from the same model but \emph{using different prompts}, and treat the resulting accuracies as observations.
Specifically, we test the following prompt templates:\index{Prompting}
\begin{enumerate}
    \item \texttt{Q: \{question\} A:}
    \item \texttt{Question: \{question\} Answer:}
    \item \texttt{Take the following question: '\{question\}'. Give the correct answer:}
\end{enumerate}

\paragraph{Varying In-context Examples.}
Various studies have pointed out the importance of in-context samples\index{In-context learning} for task-specific model capabilities \citep{xie2021explanation, min2022rethinking, hendel2023context}.
For this reason, we run four additional evaluations where we randomly sample a new set of in-context samples.

\paragraph{Generation Hyperparameters.}
We also consider generating answers using different generation parameters.
Specifically, we try the default approach for the three models, greedy decoding, as well as Nucleus sampling \citep{holtzman2020curious} with $p=0.9$, top-$k$ sampling \citep{fan2018hierarchical, holtzmann2018learning, radford2019language} with $k=60$ or beam search with three beams.
Lastly, we also combine this with multiple different prompts\index{Prompting} and different in-context samples\index{In-context learning}, where we answer every question $5$ times, each time sampling a different prompt, generation configuration, and in-context demonstrations randomly.

\begin{figure}
    \centering
    \begin{subfigure}[t]{0.99\textwidth}
        \centering
        \resizebox{0.99\textwidth}{!}{%
        \renewcommand{\arraystretch}{1.4}
        \addtolength{\tabcolsep}{3pt} 
        \begin{tabular}{@{}rllllll@{}}
            \toprule
            & \multicolumn{6}{c}{Accuracy} \\
            \cline{2-7}
            Model & Scalar & Bootstrapping & Multi-prompt & Generation & In-context & Mix\\
            \midrule
            MosaicAI MPT 7B & $.49$ & $.49\pm.00$ & $.51\pm.01$ & $.02\pm .02$ & $.51\pm.01$ & $.30\pm.02$ \\
            Mistral 7B v0.2  & $.37$ & $.37\pm.00$ & $.40\pm.04$ & $.15\pm .09$ & $.43\pm.04$ & $.33\pm.03$ \\
            OLMo 7B v1.7 & $\mathbf{.51}$ & $\mathbf{\underline{.51\pm.01}}$ & $\mathbf{\underline{.57\pm.04}}$ & $\mathbf{.17\pm .02}$ & $\mathbf{\underline{.59\pm.03}}$ & $\mathbf{\underline{.40\pm.03}}$ \\
            \bottomrule
        \end{tabular}%
        }
        \caption{
            Evaluation results, given in accuracy. Best results are bolded, significant differences according to the ASO test are underlined. 
            Shown are results from a scalar comparison (Scalar), bootstrapping instance-level observations (Bootstrapping), trying different prompt templates (Multi-prompt), generation hyperparameters (Generation), in-context demonstrations (In-context), or randomly sampling a prompt, generation settings and in-context examples for each input (Mix).
            Instance-level results were only used to perform hypothesis testing for the scalar results, and are therefore not included as a column.
        }\label{tab:case-study-aso}
    \end{subfigure}%
    \vspace{0.5cm}
    \begin{subfigure}[t]{0.99\textwidth}
        \centering
        \includegraphics[width=0.99\linewidth]{./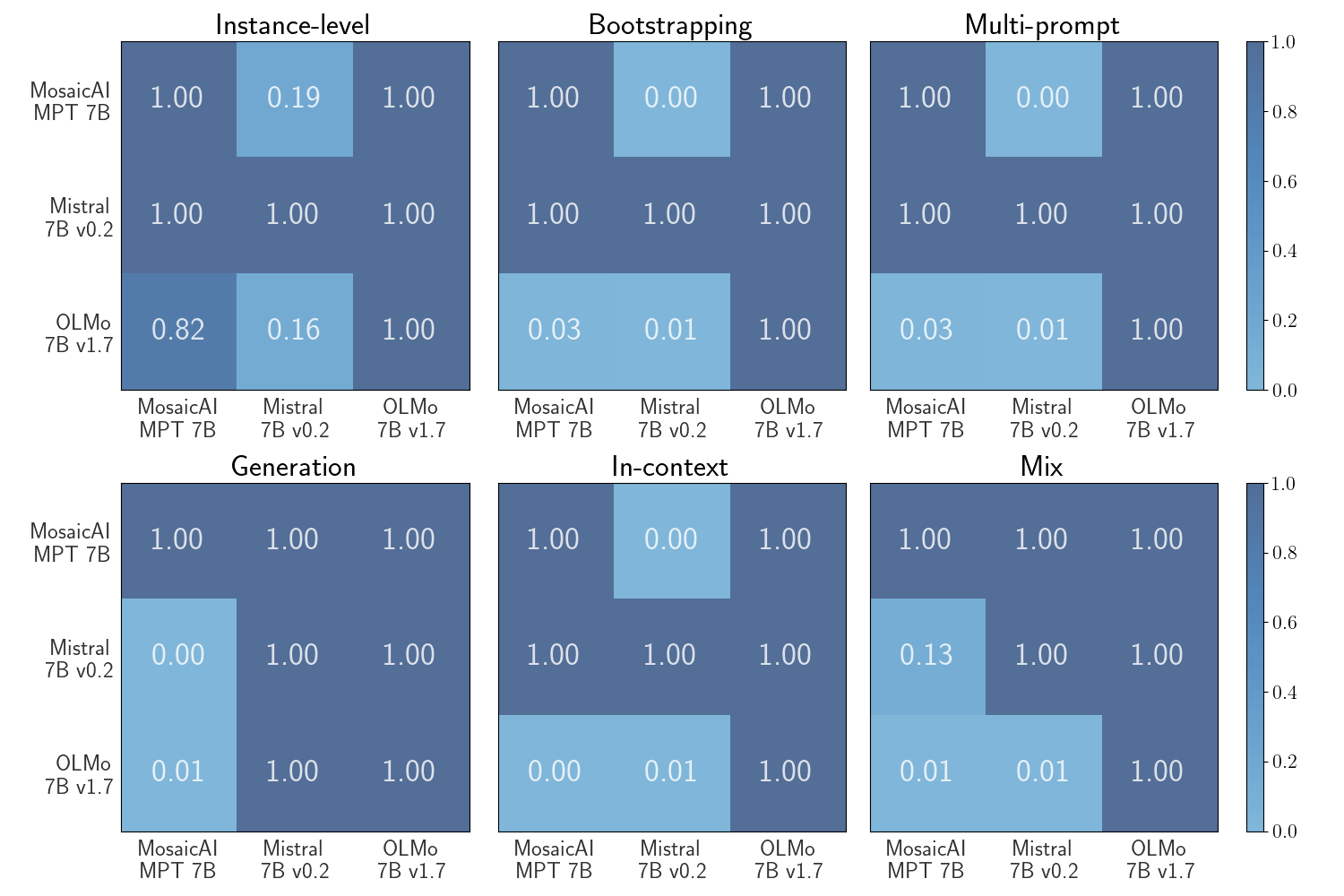}
        \caption{$\epsmin$ values comparing the LLMs using different sets of observations.}\label{subfig:case-study-eps-min}
    \end{subfigure}
    \caption[Results of LLM case study using the ASO test.]{
        Results for the case study. Given are (a) accuracy scores, either as single scalar or accuracy scores with confidence intervals as a result of bootstrapping or using multiple-prompts.
        Further shown are (b) the $\epsmin$ scores based on the instance-level observations, bootstrapping observations as well using multiple prompts, different generation parameters, different in-context demonstrations or a combination of the last tree (Mix).
    }\label{fig:case-study-results}
\end{figure}

\paragraph{Results.}
All accuracies for the different methods including standard deviations are shown in \cref{tab:case-study-aso}, with an overview of all the $\epsmin$ values calculated by the ASO test\index{Stochastic order!Almost} in \cref{subfig:case-study-eps-min}.
Recall that according to \cref{sec:aso}, we would declare one model superior to another when $\epsmin < \tau$, which empirically $\tau = 0.2$ to provide a good trade-off between Type I and Type II error.\index{Type I error}\index{Type II error}
We can see that the ordering of models largely agrees across settings, but can provide subtle differences.
All models usually generate through greedy sampling.
When using different generation hyperparameters, we can observe a noticeable degradation in results, although the OLMo model seems to be most robust to changes in generation parameters.
Interestingly, the $\epsmin$ values in \cref{subfig:case-study-eps-min} show that all evaluations mostly agree in their result;
however the comparison of instance-level scores seems to underestimate the degree of almost stochastic dominance (as shown through larger $\epsmin$ values for the best models).
A noticeable exception for this agreement is the experiment using different generation hyperparameters, where the severe loss in performance renders none of the results significant.
In the end, the mixture of a random prompt template, generation hyperparameters and in-context examples seems to portray the clearest picture of the model rankings through the $\epsmin$ values.

\paragraph{Formalization.}
In this case study, we discussed a number of ways we can use to perform statistical hypothesis testing\index{Hypothesis testing} using LLMs\index{Large language model}, assuming access to an already trained model.
All of these are subtly different in what the kinds of uncertainties they take into account to compare models.
To investigate the differences, we formalize the problem:
Let $\bx$ be shorthand for an input sequence, $\by$ for a generated sequence and $\phi$ a function mapping a generated sequence to an evaluation score (e.g.\@ an indicator function deciding whether an answer is correct).
Further, let $\brho$ be a prompt template, $\bgamma$ a set of generation parameters, $\mathcal{C}$ a set of in-context demonstrations and $\blambda$ a set of training hyperparameters (including architecture, optimizer, regularization, finetuning strategy etc.).
We are then interested in two quantities: 
Aggregate metrics such as accuracy, which we can formulate as the expected value of $\phi$ under the model on a given dataset, and the expected accuracy arising when varying all the other factors mentioned above, forming another expectation:

\begin{footnotesize}
\begin{align}\label{eq:bayesian-view-eval}
    \mathbb{E}_{p(\bgamma, \brho, \mathcal{C})}\Big[\mathbb{E}_{p(\by \mid \bx, \btheta, \brho, \bgamma)}\big[\phi(\by)\big]\Big] = \int\hspace{-0.15cm}\ldots\hspace{-0.15cm}\int & \underbrace{\phi(\by)}_{\text{Score}} \underbrace{p(\by \mid \btheta, \bx, \brho, \bgamma, \mathcal{C})}_{\text{LLM Predictive Dist.}}\underbrace{p(\brho)p(\bgamma)p(\mathcal{C})}_{\text{Generation Priors}}\nonumber \\[0.2cm]
    & p(\btheta \mid \mathbb{D}, \blambda)p(\blambda) \dd\!\by\dd\!\btheta\dd\!\bgamma\dd\!\brho\dd\!\hspace{0.1cm}\mathcal{C}\dd\!\blambda.
\end{align}
\end{footnotesize}

We can use this to analyze all the test setups above by applying Dirac delta functions (as previously used in \cref{eq:predictive-posterior-frequentist}) and Monte Carlo integration\index{Monte Carlo estimation} (see \cref{eq:neural-posterior-predictive-mc-estimate}) to evaluate \cref{eq:bayesian-view-eval}.
For instance, the scalar comparison assume an single prompt $\bm{\hat{\rho}}$, set of generation parameters $\bm{\hat{\gamma}}$, in-context samples $\hat{\mathcal{C}}$ \index{In-context learning} and weights $\bm{\hat{\theta}}$ and thus \cref{eq:bayesian-view-eval} becomes

\begin{align}
     &  \mathbb{E}_{p(\bgamma, \brho, \mathcal{C})}\Big[\mathbb{E}_{p(\by \mid \bx, \btheta, \brho, \bgamma)}\big[\phi(\by)\big]\Big] \nonumber \\
    \approx & \int \phi(\by)p(\by \mid \btheta, \bx, \brho, \bgamma, \mathcal{C})\delta(\brho - \bm{\hat{\rho}})\delta(\bm{\bgamma- \hat{\gamma}})\delta(\btheta - \bm{\hat{\theta}})\delta(\mathcal{C} - \hat{\mathcal{C}})\dd\!\by \nonumber \\
         = & \int \phi(\by)p(\by \mid \bm{\hat{\theta}}, \bx, \bm{\hat{\rho}}, \bm{\hat{\gamma}}, \hat{\mathcal{C}}) \dd\!\by\hspace{-0.1cm}.
\end{align}

The same assumptions are also applied for the instance-level comparison, with the difference that we only evaluate the outer expectation in \cref{eq:bayesian-view-eval}.
Further, we can interpret the bootstrapping procedure as a different outer expectation in \cref{eq:bayesian-view-eval}, where we instead evaluate the expectation over all possible samples (with replacement) of our original set of generated sequences.
The conclusion we can draw from this is the following:
To evaluate the overall performance of a model, we would like to approximate \cref{eq:bayesian-view-eval} as closely as possible, ideally by performing a full ancestral sampling scheme.
For LLMs\index{Large language model}, this is not feasible, since we often have to work with a single, already trained model.
\citet{bouthillier2021accounting} have unveiled the perhaps counter-intuitive intuition that \emph{increasing} amount of randomness in our experiments actually helps to \emph{decrease} the variance of our estimate of \cref{eq:bayesian-view-eval}.
We follow this idea and vary as many aspects as possible, which in this case study produces a clear ranking of the robustness of a model.
For language models, this implies running the model over the dataset multiple times, but sampling different generation parameters and prompt templates like in our mix variant (as advocated for by \citealp{mizrahi2023state}).
In cases where running the model multiple times for each input might still be prohibitively expensive, we can always fall back onto a bootstrap estimator.

\subsection{Discussion}\label{sec:discussion}

The previous sections have demonstrated the advantages of the ASO test in an neural network setting. 
Nevertheless, using these techniques in practice comes with limitations as well, which the end user should be aware of.
The first line of limits comes with ASO itself. 
Multiple steps of the procedure require different kinds of approximations or properties that are only guaranteed to hold in the infinite-sample limit, e.g.\@ the bootstrap estimator of the variance in \cref{eq:aso-var}. 
Furthermore, significance tests in general are known to sometimes provide unreliable results with small \citep{reimers2018comparing} or very large sample sizes \citep{lin2013research}, are prone to misinterpretation \citep{gibson2021role, greenland2016statistical}, and encourages binary significant / non-significant thinking \citep{wasserstein2019moving, azer2020not}. 
Bayesian analysis \citep{kruschke2013bayesian, benavoli2017time, gelman2021bayesian} is therefore an attractive alternative to statistical hypothesis testing\index{Hypothesis testing}, where the user draws conclusions from posterior distributions over quantities of interest. 
A potential drawback of this methodology is that it often comes at the cost of having to use Markov chain Monte Carlo\index{Markov chain Monte Carlo} methods, which require experience from the user to validate convergence and defining appropriate models and model priors.\\

For the application to LLMs, \cref{sec:case-study} has demonstrated that even with a fully trained model, we can still perform meaningful statistical hypothesis testing by either using bootstrapping or by varying prompt templates, generation hyperparameters and in-context demonstrations.
Some of these methods for model comparison will now be used in the remaining chapters of this thesis.

\chapter{Uncertainty in Text Classification}\label{ch:uncertainty-classification} 

\label{ch:uncertainty-classification} 

\begin{tikzpicture}[remember picture,overlay]
    \node[anchor=north,inner sep=0pt] at (current page text area.north) {\includegraphics[width=\linewidth, clip=true, trim = 8cm 75cm 8cm 50cm]{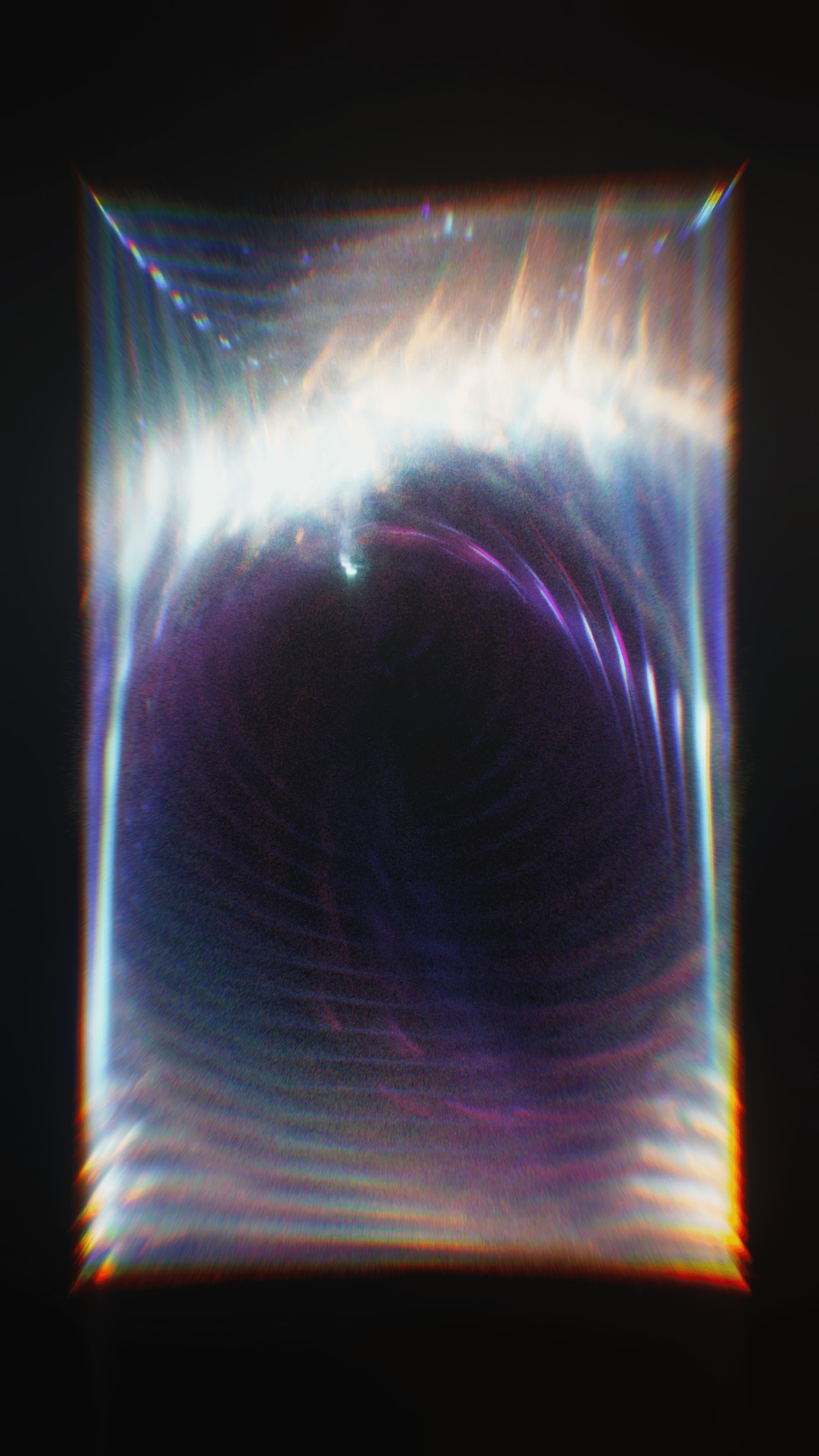}};
\end{tikzpicture}



\epigraph{
    \begin{minipage}[t]{0.482\textwidth}
        \scriptsize
        ``\emph{Two roads diverged in a yellow wood,}\\
        \emph{And sorry I could not travel both}\\
        \emph{And be one traveler, long I stood}\\
        \emph{And looked down one as far as I could}\\
        \emph{To where it bent in the undergrowth;}\\[0.4cm]

        \emph{And both that morning equally lay}\\
        \emph{In leaves no step had trodden black.}\\
        \emph{Oh, I kept the first for another day!}\\
        \emph{Yet knowing how way leads on to way,}\\
        \emph{I doubted if I should ever come back.}\\

    \end{minipage} %
    \hfill
    \begin{minipage}[t]{0.482\textwidth}
        \scriptsize
        \emph{Then took the other, as just as fair,}\\
        \emph{And having perhaps the better claim,}\\
        \emph{Because it was grassy and wanted wear;}\\
        \emph{Though as for that the passing there}\\
        \emph{Had worn them really about the same,}\\[0.4cm]

        \emph{I shall be telling this with a sigh}\\
        \emph{Somewhere ages and ages hence:}\\
        \emph{Two roads diverged in a wood, and I—}\\
        \emph{I took the one less traveled by,}\\
        \emph{And that has made all the difference.}''
    \end{minipage}
}{---\emph{The Road Not Taken} by Robert Frost (1915).}

Assume we would like to automate the moderation of postings on a social media platform.
While it would be preferable to always use human moderators, this is often not feasible due to the deluge of posts, and also not desirable due to the psychological impact that the moderation of harmful content can have.
After having trained a classifier on some labeled training instances, we are ready to deploy.
And while we expect a large number of the flagged cases to be clear positives, there will inadvertently be instances for which the classifier struggles, for example sentences in which an toxic remark is quoted or lacks context.
The sentence below was taken from the Wikitalk dataset \citep{wulczyn2017ex, borkan2019nuanced}, which includes discussions among Wikipedia editors:

\begin{displayquote}
    \footnotesize
   ``\emph{I was responding to a post by AndyTheGrump at Talk: Communist terrorism, section `Marxism is not the only `communism'', where he called me `idiot' and then refused to retract his remark when I requested him to do so.}''   
\end{displayquote}

The mention of ``idiot'' here might already set off the toxicity classifier, even though the sentence just quotes another user's remark.
In this case, we might want to defer to the decision to a human moderator when the classifier shows uncertainty.
To make the task of moderation easier, we could also employ another system to label the spans of text that contain harmful speech.
Here, we might only show the parts that the system is most uncertain about to limit the exposure to toxicity, and potentially ask the moderator to label them in order to improve the training data for future model updates.
To illustrate this, let us look at another (truncated) example from the dataset, labeling it in two different ways (assuming simplified tokenization):\\

\begin{centering}
    \footnotesize
    \resizebox{0.995\textwidth}{!}{
        \begin{tabular}{ccccccccccccc}
            I'd & like & to & offer & you & a & great &  big &  glass &  of &  shut-the-f$*@\#$-up &  juice  \\[0.2cm]
            \textcolor{BrickRed}{$\mathbf{-}$} & \textcolor{BrickRed}{$\mathbf{-}$} & \textcolor{BrickRed}{$\mathbf{-}$} & \textcolor{BrickRed}{$\mathbf{-}$} & \textcolor{BrickRed}{$\mathbf{-}$} & \textcolor{BrickRed}{$\mathbf{-}$} &\textcolor{BrickRed}{$\mathbf{-}$} & \textcolor{BrickRed}{$\mathbf{-}$} & \textcolor{BrickRed}{$\mathbf{-}$} & \textcolor{BrickRed}{$\mathbf{-}$} & \textcolor{ForestGreen}{$\mathbf{+}$} & \textcolor{BrickRed}{$\mathbf{-}$} \\[0.2cm]
            \textcolor{Gray}{O} & \textcolor{Gray}{O} & \textcolor{Gray}{O} & \textcolor{Gray}{O} & \textcolor{Gray}{O} & \textcolor{MidnightBlue}{B-TOX} & \textcolor{CadetBlue}{I-TOX} & \textcolor{CadetBlue}{I-TOX} & \textcolor{CadetBlue}{I-TOX} & \textcolor{CadetBlue}{I-TOX} & \textcolor{CadetBlue}{I-TOX} & \textcolor{CadetBlue}{I-TOX} \\
        \end{tabular}%
    }
\end{centering}
\vspace{0.3cm}

In the first annotation, we focus on whether single words could be considered toxic or not, while in the second annotation, we capture an entire toxic span using BIO-tags\index{BIO-tags} (which indicate the \textbf{b}eginning, \textbf{i}nside or  \textbf{o}utside of such a phrase).
What we outlined above are instances of classic NLP task formats, namely \emph{sequence classification}\index{Sequence classification} and \emph{sequence labeling}\index{Sequence labeling}, respectively.
In the former we simply assign a label to an entire sequence, whereas in the latter we label or classify parts of a sequence.
In this thesis, we will refer to both jointly as \emph{text classification}\index{Classification!Text}.
Sequence labeling subsumes tasks such as part-of-speech tagging\index{Part-of-speech tagging},\footnote{PoS tagging also is common preprocessing step for parser that produce parse trees as the ones shown in \cref{sec:uncertainty-linguistics}.} where the labels can e.g.\@ be noun, verb or adverb, or \emph{named entity recognition}\index{Named entity recognition}, in which we identify named entities such as people, organization or locations.
In this work we will use the terms \emph{label} and \emph{class} interchangeably to refer to a category from a set of categories that is assigned to (part of) an input.\footnote{
    Even though this chapter focuses on \emph{multi-class classification}\index{Classification!Multi-class}, there is a subtle difference to \emph{multi-label classification\index{Classification!Multi-label}}:
    Multi-class means that we have multiple choices, but only one of them will be considered correct at a time, which makes sense when trying to choose from mutually exclusive options. 
    In the multi-label classes, we choose for each possible label whether it is applicable or not, allowing multiple labels to be correct at the same time. 
    For instance, when classifying legal judgments according to which human right articles are being violated, we can find that each judgement can violate multiple articles at once \citep{chalkidis-etal-2019-neural}.
}\\

However, quantifying the uncertainty in these decisions is challenging.
Uncertainty is not always present in predictions when we might expect them, and might be present if it is unwarranted.
This chapter aims to understand this behavior, both from a theoretical and empirical perspective.
Therefore, we demonstrate some shortcomings of UQ with ReLU networks\index{ReLU} in the next section, before returning to text classification in \cref{sec:benchmarking-nlp-uncertainty}\index{Classification!Text}.


\section[Theoretical Pitfalls in Classification]{Theoretical Pitfalls in Classification}\label{sec:uq-classification-pitfalls}

\begin{footnotesize}
    \vspace{-2.5ex}
    \emph{The following work is based on \citet{ulmer2020know}}.\\
    \vspace{2.5ex}
\end{footnotesize}

\begin{figure}[htb]
    \centering
    \begin{subfigure}[t]{0.32\textwidth}
        \centering
        \includegraphics[width=0.985\textwidth]{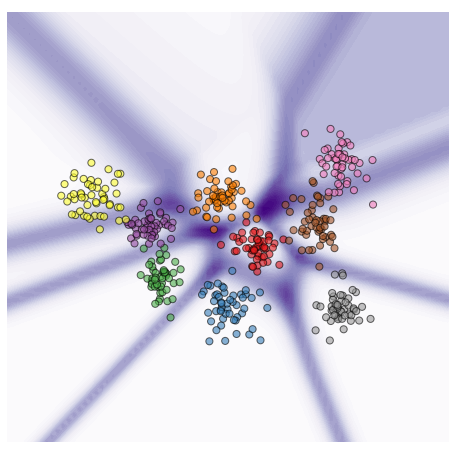}
        \caption{Predictive entropy.}\label{subfig:uncertainty}
    \end{subfigure}
    \hfill
    \begin{subfigure}[t]{0.32\textwidth}
        \centering
        \includegraphics[width=0.985\textwidth]{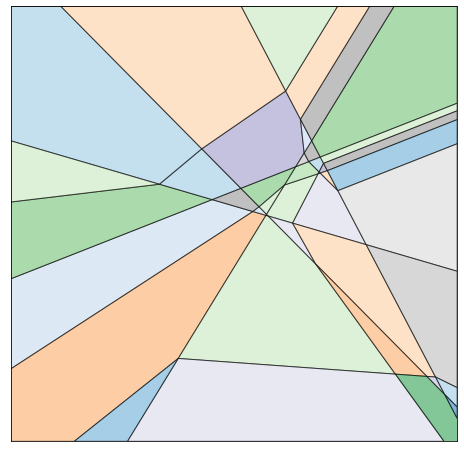}
        \caption{Polytopal regions.}\label{subfig:polytopes}
    \end{subfigure}
    \hfill
    \begin{subfigure}[t]{0.32\textwidth}
        \centering
        \includegraphics[width=0.985\textwidth]{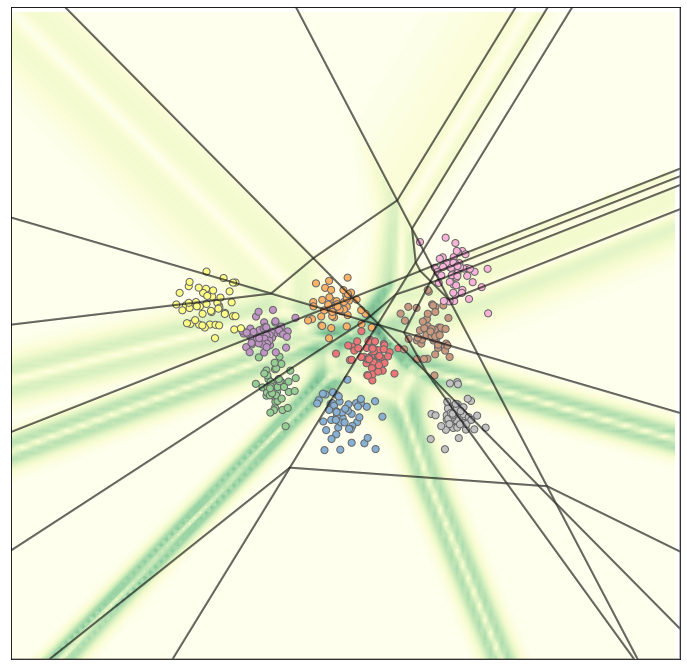}
        \caption{Magnitude of predictive entropy gradient.}\label{subfig:grad-polytopes}
    \end{subfigure}
    \caption[Uncertainty and linear regions of a ReLU classifier trained on example data.]{
        Uncertainty and linear regions of a ReLU classifier trained on example data. 
        (a) Uncertainty measured by predictive entropy on synthetic data, illustrated by increasing shades of purple, with white denoting absolute certainty. 
        (b) Polytopal, linear regions in the feature space induced by the same classifier (as introduced by \citealp{arora2018understanding}, plotted using the code by \citealp{jordan2019provable}). 
        (c) Gradient norm of the predictive entropy plotted in shades of green---small perturbations in the input have a decreasing influence on the uncertainty of the network as we stray away from the training data, creating large areas in which uncertainty levels are overgeneralized. 
    }\label{fig:know-your-limits-figure}
\end{figure}

It is well-known that neural network classifiers\index{Neural network} tend to be overconfident in their predictions (\citealp{guo2017calibration}; see more related work in \cref{sec:frequentist-neural-networks}).
In addition, they can exhibit high levels of certainty when this is unwarranted, and often fail to correctly identify OOD samples \citep{ovadia2019can, nalisnick2019do}.\index{Confidence}\index{Out-of-distribution data} 
\citet{ulmer2020trust} showed that even techniques specifically developed to quantify the model's uncertainty struggle at detecting OOD samples for a relatively simple classification task. 
Crucially, it was shown that neural discriminators tend to project vast areas of high certainty far away from the training distribution---a behavior that seems completely at odds with reliable OOD detection\index{OOD detection}. 
These observations are replicated in \cref{fig:know-your-limits-figure}:
In \cref{subfig:uncertainty}, we can see that the predictive entropy\index{Entropy!Predictive} of a ReLU classifier displays low uncertainty in large regions behind the observed data clusters.
As \citet{arora2018understanding} showed, ReLU classifiers induce polytopal linear regions in the feature space shown in \cref{subfig:polytopes},
which was used by the previous work of \citep{hein2019relu} to show that the network's confidence is an unsuitable measure of uncertainty to detect OOD inputs.
However, the reasons for this behavior in a classification setting are less studied, and thus we study this behavior on additional uncertainty metrics such as predictive entropy in \cref{subfig:grad-polytopes}.\\

In this chapter, we present a theoretical argument to explain such phenomena, showing that certainty levels are generalized on sub-spaces defined by the network (see \cref{subfig:polytopes,subfig:grad-polytopes}). 
We do this by simulating covariate shift\index{Shift!Covariate} for single feature values of real variables and studying the asymptotic behavior of the model.
Our first result shows that, under mild assumptions about the network's behavior on certain subspaces, ReLU-based neural network classifiers coupled with widely used uncertainty metrics always converge to a fixed uncertainty level on OOD samples.
We extend this result by proving that variational inference-based\index{Variational inference} and ensembling\index{Ensembling} methods in combination with several uncertainty estimation techniques suffer from the same problem (\cref{theorem:know-your-limits-main-theorem}). 
This phenomenon is illustrated and discussed on synthetic data. 
These results entail that, when the conditions of the theorem are met, these models cannot be used to reliably detect OOD: 
since the level of certainty is generalized from seen to unseen data, the models are unable to differentiate between the two. 
The findings of this chapter have bearings on OOD detection\index{OOD detection} for several critical applications using neural classifiers with ReLU\index{ReLU} activation functions, and I will also discuss the impact on the following experiments for NLP.\index{Natural language processing}

\subsection{Preliminaries}\label{sec:know-your-limits-preliminaries}

We first introduce some relevant definitions for the rest of this chapter.

\paragraph{Out-of-distribution Data.}\index{Out-of-distribution data} A
Although there exist many different notions of dataset shift\index{Shift!Distributional} \citep{shimodaira2000improving, moreno2012unifying, hupkes2022state}, we particularly focus on \emph{covariate shift}\index{Shift!Covariate}, in which the distribution of feature values---the covariates---differs from the original training distribution $p(\bx)$. 
We focus on this kind of shift as it is especially common in non-stationary environments like healthcare \citep{curth2019transferring}, where distributional drifts over time are very common.
To simulate covariate shift, we obtain OOD samples by shifting points away from the training distribution by means of a scaling factor. This approach is in line with recent experiments on covariate shift and OOD detection \citep{ovadia2019can, ulmer2020trust}.
We would expect a reliable OOD detection model to display increasing uncertainty as points stray further and further away from the mass of $p(\bx)$, thus we study the behavior of OOD detection models\index{OOD detection} in the limit, when the scaling factor is allowed to grow indefinitely in at least one dimension.

\paragraph{Uncertainty Metrics.}\index{Uncertainty metric}
We begin by first defining a neural discriminator in the form of a ReLU\index{ReLU} classifier, which we assume to follow common architectural conventions.
Thus, it consist of a series of affine transformations with ReLU \citep{glorot2011deep} activation functions, defined by $\text{ReLU}(x) = \max(0, x)$.
Together with a final softmax\index{Softmax function} function \citep{bridle1990probabilistic} as defined in \cref{eq:softmax}, it  parameterizes a categorical distribution\index{Categorical distribution} over classes.\footnote{
    The following proofs also hold for binary classifiers which are parameterized through a sigmoid function.\index{Sigmoid function} 
    For the connection between the softmax and sigmoid function, refer to \cref{app-softmax-sigmoid-connection}.
}

\begin{definition}[ReLU Classifier]\label{def:logit}
    Let $\bx \in \mathbb{R}^D$ be an input vector and $K$ the number of classes in a classification problem.
    The unnormalized output of the network after $L$ layers is a function $f_{\btheta}: \mathbb{R}^D \rightarrow \mathbb{R}^K$ with the final output following after an additional softmax function $\bar{\sigma}(\cdot)$ s.t.\@ $P_{\btheta} = \bar{\sigma}\circ f_{\btheta}$, so $P_{\btheta}(y=k \mid \bx) \equiv \bar{\sigma}(f_{\btheta}(\bx))_k$. 
    Thus, the discriminator is represented by a function $P_{\btheta}: \mathbb{R}^D \rightarrow [0, 1]^K$, which is parametrized by a vector $\btheta$.
\end{definition}

We will consider a set of popular uncertainty metrics\index{Uncertainty metric}, which we introduced in \cref{sec:frequentist-neural-networks,sec:bayesian-neural-networks} and restate them here.
Firstly, \citet{hendrycks2017baseline} introduce a simple baseline, which is the highest probability observed for any class, also referred to as confidence:

\begin{equation}\label{eq:max-softmax}
    \hat{p} = \max_{k \in [K]}P_{\btheta}(y=k \mid \bx).
\end{equation}

Ideally, the model's predictive distribution would become more uniform for challenging inputs (e.g.\@ in areas of class overlap) and thus produce a lower confidence score $\hat{p}$, which is why we measure \emph{un}certainty by $1-\hat{p}$. 
Another approach lies in measuring the Shannon entropy\index{Entropy!Shannon}\index{Entropy!Predictive} $\text{H}$ of the predictive distribution:

\begin{equation}\label{eq:predictive-entropy}
    \text{H}\big[P_{\btheta}(y \mid \bx) \big] = -\sum_{k=1}^K P_{\btheta}(y=k \mid \bx) \log P_{\btheta}(y=k \mid \bx).
\end{equation}

The entropy here is minimal when all probability mass is centered on a single class, and maximal when the predictive distribution is uniform.
The other uncertainty estimation techniques are based on the idea of Bayesian deep learning, where, the more predictions between different parameter sets disagree, the larger the uncertainty.
One straightforward way to measure this disagreement is the average variance of the predicted probability per class, as done in \citet{gal2018understanding}: \index{Class variance}

\begin{equation}\label{eq:class-variance}
    \bar{\sigma}^2 = \frac{1}{K}\sum_{K=1}^K \mathbb{E}_{p(\btheta \mid \mathbb{D})}\big[P_{\btheta}(y=k \mid \bx)^2\big] - \mathbb{E}_{p(\btheta \mid \mathbb{D})}\big[P_{\btheta}(y=k \mid \bx)\big]^2.
\end{equation}

Maximum softmax and predictive entropy\index{Entropy!Predictive} only capture the total uncertainty\index{Uncertainty!Total}, and while the class variance aims to quantify model uncertainty\index{Uncertainty!Epistemic}, it does so rather heuristically.
Thus, we also consider the mutual information\index{Mutual information} between model parameters and a data sample \citep{depeweg2018decomposition,gal2018understanding} as a more theoretically-motivated measure of epistemic uncertainty:

\begin{equation}\label{eq:mutual-information}
    \underbrace{\text{I}\big[y, {\btheta}\ \big|\ \mathbb{D}, \bx\big]}_{\text{Model uncertainty}} = \underbrace{\text{H}\Big[\mathbb{E}_{p(\btheta \mid \mathbb{D})}\big[P_{\btheta}(y \mid \bx)\big]\Big]}_{\text{Total uncertainty}} - \underbrace{\mathbb{E}_{p(\btheta \mid \mathbb{D})}\Big[\text{H}\big[P_{\btheta}(y \mid \bx)\big]\Big]}_{\text{Data uncertainty}}.
\end{equation}

The term itself can be interpreted as the gain in information about the ideal model parameters and correct label upon receiving an input. 
If we can only gain a little, that implies that parameters are already well-specified and that the epistemic uncertainty is low.
Especially when an input is OOD, we therefore expect this metric to display high uncertainty.

\subsection{Monotonicity \& Polytopes}\label{sec:general-definitions}

Before developing the main results, we introduce some concepts that will become central to the proofs in the next sections.
This includes the definition of unbounded polytopes\index{Polytope!Partially-unbounded} on which the model behaves linearly, and the monotonicity\index{Monotonicity} of multivariate functions, which lets us make statements about the output of the network when scaling its input.\\

In the univariate case, we call a function strictly increasing on an interval $\mathbb{I} = [a, b]$ with $a < b$ and $a, b \in \mathbb{R}$ if its derivative is strictly positive on the whole interval:

\begin{equation}\label{eq:monotonically-increasing}
    \forall x^\prime \in \mathbb{I}:\quad \frac{\partial}{\partial x}f(x)\big|_{x=x^\prime} > 0,
\end{equation}

\noindent where $\cdot|_{x=x^\prime}$ refers to evaluating the value of the derivative of $f$ at $x^\prime$. 
This definition can also be extended to multivariate functions by requiring strict monotonicity\index{Monotonicity!Strict} (strictly increasing or decreasing) in all dimensions:

\begin{definition}[Monotonicity in Multivariate Functions]\label{def:monotonicity-multivariate}
    We call a multivariate function $f: \mathbb{R}^D \rightarrow \mathbb{R}$ strictly monotonic on a subspace $\mathbb{P} \subseteq \mathbb{R}^D$ if it holds that the function is either strictly increasing or decreasing in every direction:

    \begin{align}\label{eq:monotonicity-multivariate}
        \forall d \in [D]:\quad & \forall \bx^\prime \in \mathbb{P}:\ \big(\nabla_{\bx} f(\bx)\big|_{\bx=\bx^\prime}\big)_d < 0 \nonumber \\ 
        \text{or}\quad & \forall \bx^\prime \in \mathbb{P}:\ \big(\nabla_{\bx} f(\bx)\big|_{\bx=\bx^\prime}\big)_d > 0,
    \end{align}

\noindent where $(\cdot)_d$ refers to $\frac{\partial f(x_d)}{\partial x_d}|_{x_d=x^\prime_d}$, i.e.\@ the $d$-th component of the gradient $\nabla_{\bx} f(\bx)$ evaluated at $\bx^\prime$. 
We call a multivariate function $f: \mathbb{R}^D \rightarrow \mathbb{R}^K$ \emph{component-wise strictly monotonic} if the above definition holds for the gradient of every output component $\nabla_{\bx} f(\bx)_k$. 
\end{definition}

We note here that the softmax function\index{Softmax function}, whose probabilistic output is used for the discussed uncertainty metrics, is an example for a component-wise strictly monotonic function\index{Monotonicity!Component-wise strict}. 
As later lemmas investigate the behavior of functions in the limit, it is furthermore useful to define regions of the feature space that are unbounded in at least one direction. 
We call a \emph{partially-unbounded polytope}\index{Polytope!Partially-unbounded} (henceforth abbreviated by PUP) a convex subspace of $\mathbb{R}^D$ that is unbounded in at least one dimension $d$, i.e.\@ if the polytope's projection onto $d$ is either left-bounded by $-\infty$ or right-bounded by $\infty$, or both.

\subsection{Convergence of Predictions on OOD Data}\label{sec:convergence-on-ood}

\begin{figure}[htb]
    \centering
    \includegraphics[width=0.985\textwidth]{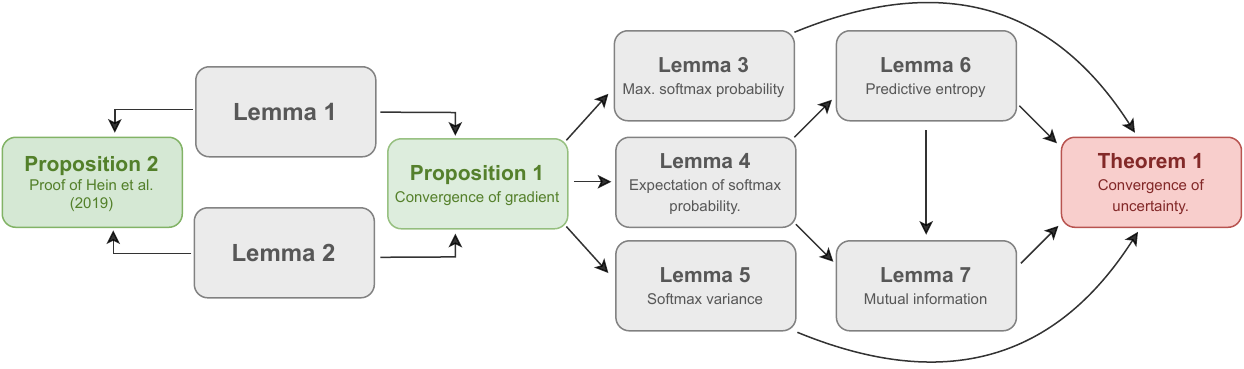}
    \caption[Dependencies between theoretical results in \cref{sec:uq-classification-pitfalls}.]{Dependencies between theoretical results. Information in parentheses denotes the section in the document.}
    \label{fig:flow-chart}
\end{figure}

In this section we will show that, moving the input to the extremes of the feature space, a ReLU classifier will converge to a fixed prediction. To demonstrate this, we must establish how the distance from the training data affects the network's logits. To this end, we utilize a known result stating that
neural networks employing piece-wise linear activation functions partition the input space into polytopes\index{Polytope} (such as in \cref{subfig:polytopes}; \citeauthor{arora2018understanding}, \citeyear{arora2018understanding}). 
Given the saturating nature of the softmax\index{Softmax function}, we conclude in \cref{proposition:overconfidence-softmax} that even for extreme feature values in the limit, the output distribution of the model will not change anymore. 
In order to help the reader untangle the interdependence of upcoming results, we provide a flow chart in \cref{fig:flow-chart}.
We first describe how to re-write a ReLU network---or any other network with piece-wise linear activation functions---as a piece-wise affine transformation, borrowing from \citet{croce2018randomized} and \citet{hein2019relu}. 
We start with the common form of $f_{\btheta}$ as a series of affine transformations, interleaved with ReLU activation functions, which we will denote by $\phi$:

\begin{equation}\label{eq:relu-net}
    f_{\btheta}(\bx) = \bW_L\phi\big(\bW_{L-1}\phi\big(\ldots \phi\big(\bW_1\bx + \bb_1\big) \ldots\big) + \bb_{L-1} \big)+ \bb_L.
\end{equation}

In the following, let $f_{\btheta}^l(\bx)$ denote the output of  layer $l$ before applying an activation function. We now define a layer-specific diagonal matrix $\Phi_l \in \mathbb{R}^{n_l \times n_l}$ in the following way, where $n_l$ denotes the hidden units in layer $l$:

\begin{equation}
    \bPhi_l(\bx) = \begin{bmatrix}
    \indicator{f_{\btheta}^l(\bx)_1 > 0} & \cdots & 0  \\
    \vdots &  \ddots & \vdots \\
    0 & \cdots & \indicator{f_{\btheta}^l(\bx)_{n_l} > 0} \\
    \end{bmatrix}.
\end{equation}

This allows us to rewrite \cref{eq:relu-net} by replacing the usage of $\phi$ with a matrix multiplication using $\bPhi_l$:

\begin{align}\label{eq:relu-net-replaced}
    f_{\btheta}(\bx)  = & \bW_L\bPhi_{L-1}(\bx)\big(\bW_{L-1}\bPhi_{L-2}(\bx) \nonumber \\ 
    & \big(\ldots \bPhi_1(\bx)\big(\bW_1\bx + \bb_1\big) \ldots\big) + \bb_{L-1} \big)+ \bb_L.
\end{align}

We can now distribute the matrix products inside-out, we which demonstrate below using a three-layer network:

\begin{align}
    f_{\btheta}(\bx) = & \bW_3 \bPhi_{2}(\bx)\big(\bW_2\bPhi_1(\bx)\big(\bW_1\bx + \bb_1) + \bb_2) + \bb_3 \\
    = & \bW_3 \bPhi_{2}(\bx)\big(\bW_2\bPhi_1(\bx)\bW_1\bx + \bW_2\bPhi_1(\bx)\bb_1) + \bb_2) + \bb_3 \\
    = & \underbrace{\bW_3 \bPhi_{2}(\bx)\bW_2\bPhi_1(\bx)\bW_1}_{=\ \bV(x)}\bx \nonumber \\
    & + \underbrace{\bW_3 \bPhi_{2}(\bx)\bW_2\bPhi_1(\bx)\bb_1 + \bW_3\bPhi_{2}(\bx)\bb_2 + \bb_3}_{=\ \ba(\bx)}.
\end{align}

This result lets rewrite the network as a single affine transformation $f_{\btheta}(\bx) = \bV(\bx)\bx + \ba(\bx)$ with

\begin{align}\label{eq:relu-linearization}
    \bV(\bx) & = \bW_L\bigg(\prod_{l=1}^{L-1}\bPhi_l(\bx)\bW_{L-l} \bigg) \\
    \ba(\bx) & = \bb_L + \sum_{l=1}^{L-1}\bigg(\prod_{l^\prime=1}^{L-l}\bW_{L+1-l^\prime}\bPhi_{L-l^\prime}(\bx)\bigg)\bb_l .
\end{align}

Note that the definition of $\bV(\bx)$ corresponds to the Jacobian\index{Jacobian} of $f_{\btheta}(\bx)$, meaning that $v_{kd} = \frac{\partial f_{\btheta}(\bx)_k}{\partial x_{d}}$. 
This is very useful, as it allows us to quickly check whether a network $f_{\btheta}$ is component-wise strictly monotonic by checking $\bV(\bx)$ for entries containing zeros. 
As \citet{hein2019relu} show, this formulation can also be used to characterize a set of polytopes $\mathcal{Q} = \{Q_1, \ldots, Q_M\}$ induced by $f_{\btheta}$ and that within each polytope, the function has a unique representation as an affine transformation. 
For this reason, we drop the dependence of $\bV$ and $\ba$ on $\bx$ when we refer to a specific polytope\index{Polytope}. 
Such polytopes are constructed by first retrieving the half-spaces induced by each of the network's neurons and then intersecting all said half-spaces to generate convex regions or polytopes.\footnote{
    We refer the reader to \cref{app:polytopes} or \citet{hein2019relu} for details on the construction, since it is not central to our reasoning.
} 
We are especially interested in polytopes that are unbounded in at least one direction. 
The results of \citet{croce2018randomized} and \citet{hein2019relu} show that there is a finite number of polytopes corresponding to the given network, and their Lemma 3.1 proves the existence of at least one unbounded polytope. 
Furthermore, under a mild condition on $\bV$, we can ascertain that $f_{\btheta}$ will be component-wise strictly monotonic on any polytope.

\begin{lemma}\label{lemma:strictly-monotonic}
    Suppose $f_{\btheta}$ is a ReLU network according to Definition \ref{def:logit}. 
    Then $f_{\btheta}$ is a component-wise strictly monotonic function on every of its polytopes $Q\in\mathcal{Q}$, as long as its corresponding matrix $\bV$ has no zero entries. 
\end{lemma}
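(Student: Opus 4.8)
The plan is to show that the claim is essentially an unpacking of definitions once one combines the affine-linearization of ReLU networks (\cref{eq:relu-linearization}) with the observation that the relevant matrix is \emph{constant} on each polytope.

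First I would recall the structure of the polytopal decomposition $\mathcal{Q}$ (following \citet{hein2019relu}): each polytope $Q \in \mathcal{Q}$ is an intersection of half-spaces, one per neuron, so that on $Q$ every pre-activation $f_{\btheta}^l(\bx)_i$ keeps a fixed sign. Consequently the indicator matrices $\bPhi_l(\bx)$ appearing in \cref{eq:relu-linearization} are constant on $Q$, and hence so are $\bV(\bx) \equiv \bV$ and $\ba(\bx) \equiv \ba$. Dropping the dependence on $\bx$, we therefore have $f_{\btheta}(\bx) = \bV\bx + \ba$ for all $\bx$ in (the interior of) $Q$ — precisely the ``unique affine representation'' property stated after \cref{eq:relu-linearization}.

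Next I would differentiate. Since $f_{\btheta}$ is affine on the interior of $Q$, each output component is differentiable there with $\nabla_{\bx} f_{\btheta}(\bx)_k = (v_{k1}, \ldots, v_{kD})^\text{T}$, the $k$-th row of $\bV$; in particular $\big(\nabla_{\bx} f_{\btheta}(\bx)_k\big)_d = v_{kd}$, which does not depend on the choice of $\bx \in Q$. Now I would invoke the hypothesis that $\bV$ has no zero entries: for every $k \in [K]$ and $d \in [D]$ the scalar $v_{kd}$ is a nonzero real number, so either $v_{kd} > 0$ or $v_{kd} < 0$, and — since it is the same value at every point of $Q$ — this strict inequality holds uniformly over $Q$. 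That is exactly one of the two alternatives in \cref{def:monotonicity-multivariate}, verified for all $k$ and all $d$, which is the definition of $f_{\btheta}$ being component-wise strictly monotonic on $Q$. As $Q$ was arbitrary, the lemma follows.

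I do not expect a genuine obstacle here; the only point that needs a little care is the behaviour on polytope boundaries, where $f_{\btheta}$ may fail to be differentiable. I would handle this by stating \cref{lemma:strictly-monotonic} for the (relative) interiors of the polytopes, or equivalently by noting that the affine map $\bx \mapsto \bV\bx + \ba$ extends continuously to $\overline{Q}$ with the same (constant) Jacobian $\bV$; either convention suffices for the downstream results, which only need the behaviour on the unbounded polytopes far from the training data.
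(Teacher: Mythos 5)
Your proof is correct and follows essentially the same route as the paper: use the affine linearization $f_{\btheta}(\bx) = \bV\bx + \ba$ on each polytope, observe that the partial derivative with respect to $x_d$ of the $k$-th component equals the constant $v_{kd}$, and invoke the no-zero-entries hypothesis to get a uniformly strict sign. The one small addition you make — explicitly handling polytope boundaries by restricting to interiors — is a point the paper's proof leaves tacit, but the argument is otherwise the same.
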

\begin{proof}
Let $Q$ be one such polytope. As discussed, when restricted to $Q$, the network corresponds to an affine transformation $f_{\btheta}(\bx) = \bV\bx + \ba$ with $\bV \in \mathbb{R}^{K \times D}$ and $\ba\in \mathbb{R}^{K}$. $f_{\btheta}(\bx)_k$ thus corresponds to the dot product of the $k$-th row of $\bV$ and $\bx$ plus the $k$-th element of $\ba$. 
It follows that the partial derivative of $f_{\btheta}(\bx)_k$ with respect to a dimension $d$ equals the element $v_{kd}$ in $\bV$.
This entails that, if $v_{kd} \neq 0$, at any point $\bx \in Q$ the gradient will be always positive or always negative. 
\end{proof}

We note here that the component-wise strict monotonicity\index{Monotonicity!Component-wise} of $f_{\btheta}$ and softmax do not entail the same property for $P_{\btheta}$.\footnote{
    To see a counterexample, the reader can check that even assuming component-wise strict monotonicity for $f_{\btheta}$, if the matrix $\bV$ associated to $f_{\btheta}$ on a specific polytope has a column $d$ filled with the same value $a$, then the resulting $p_{\btheta}$ will have a gradient of 0 at dimension $d$, regardless of what class we are considering. 
    This is because the partial derivatives of the softmax, when all multiplied by the same constant $a$, add up to zero.
} 
Nonetheless, the monotonic behavior of $f_{\btheta}$ is sufficient to drive the logits to plus or minus infinity in the limit, a phenomenon that constrains the output of $p_{\btheta}$ as we scale a data sample away from training data.
We begin our investigation of behavior in the limit by showing that if we scale a vector only in a single dimension, we eventually always remain within a unique PUP.\index{Polytope!Partially-unbounded}

\begin{lemma}\label{lemma:unique-pup}
    Let $\bx^\prime \in \mathbb{R}^D$ and $\mathcal{Q} = \{Q_1, \ldots, Q_M\}$ be the finite set of polytopes generated by a network $f_{\btheta}$.
    Let $\balpha \in \mathbb{R}^D$ be a vector s.t. $\forall d^\prime \neq d, \alpha_{d^\prime} = 1$. 
    There exist a value $\beta >0$ and $m\in {1, \dots, M}$ such that for all $\alpha_d >\beta$, the product $\bx^\prime\circ\balpha$ lies within $Q_m$.
\end{lemma}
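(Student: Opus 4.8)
The plan is to reduce the statement to an elementary fact about covering a half-line by finitely many intervals. The only input beyond basic topology is the polytopal structure of a ReLU network established by \citet{arora2018understanding} and \citet{hein2019relu}: a network of the form in \cref{def:logit} decomposes $\mathbb{R}^D$ into the finite family $\mathcal{Q}=\{Q_1,\dots,Q_M\}$ of closed convex polytopes (the regions of constant activation pattern, each an intersection of the half-spaces cut out by the neuron pre-activations), and, taking the polytopes to be closed, their union is all of $\mathbb{R}^D$. The three properties I will actually use are: $\mathcal{Q}$ is finite, each $Q_m$ is convex, and the $Q_m$ cover $\mathbb{R}^D$.

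First I would parametrize the scaled inputs. With $\balpha$ having $\alpha_d$ in coordinate $d$ and $1$ elsewhere, the map $\alpha_d\mapsto \bx^\prime\circ\balpha$ equals $\gamma(\alpha_d):=\bx^\prime+(\alpha_d-1)\,x^\prime_d\,\mathbf{e}_d$, which is affine in the single free scalar $\alpha_d$. If $x^\prime_d=0$ this map is the constant point $\bx^\prime$, and the claim is immediate: $\bx^\prime$ lies in some closed $Q_m$ and any $\beta>0$ works. So from now on assume $x^\prime_d\neq 0$, in which case $\gamma$ traces a genuine ray.

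The heart of the argument: for each $m$ set $J_m:=\gamma^{-1}(Q_m)\cap[1,\infty)$. Since $\gamma$ is affine and $Q_m$ is convex, $\gamma^{-1}(Q_m)$ is a convex subset of $\mathbb{R}$, i.e.\ an interval, and intersecting with $[1,\infty)$ leaves an interval (possibly empty or a single point). Because the $Q_m$ cover $\mathbb{R}^D$, the finitely many intervals $J_1,\dots,J_M$ cover $[1,\infty)$. A finite union of intervals that are all bounded above is itself bounded above, hence cannot equal $[1,\infty)$; therefore some $J_{m^\star}$ is unbounded above. An interval contained in $[1,\infty)$ and unbounded above contains $(\beta,\infty)$ with $\beta:=\inf J_{m^\star}\ge 1>0$. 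Thus for every $\alpha_d>\beta$ we have $\bx^\prime\circ\balpha=\gamma(\alpha_d)\in Q_{m^\star}$, and taking $m:=m^\star$ proves the lemma.

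I would close with a remark connecting this to the surrounding text: $Q_{m^\star}$ is necessarily a partially-unbounded polytope in the sense of \cref{sec:general-definitions}, since its projection onto coordinate $d$ contains $\{\alpha_d x^\prime_d:\alpha_d>\beta\}$, a ray; and when $\gamma((\beta,\infty))$ meets the interior of $Q_{m^\star}$ (the generic situation), $m^\star$ is the \emph{unique} such polytope, which justifies the "unique PUP" phrasing. Honestly, there is no real obstacle here: the argument is elementary once the polytopal decomposition is cited as a black box. The only points that deserve explicit care rather than glossing are (i) using the \emph{closed} polytopes so that $\bigcup_m J_m=[1,\infty)$ literally holds even when the ray grazes a shared face, and (ii) disposing of the degenerate case $x^\prime_d=0$ at the outset.
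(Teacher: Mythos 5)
Your proof is correct, and it uses the same two structural ingredients the paper relies on (finiteness of the polytopal decomposition and convexity of each cell), but it packages them in a direct covering argument rather than the paper's contradiction argument. The paper's proof is a deliberate sketch mirroring Lemma~3.1 of \citet{hein2019relu}: it supposes the ray never settles, invokes pigeonhole to conclude some polytope would be visited twice, and then uses convexity to derive a contradiction. You instead make the convexity step explicit up front---each preimage $J_m=\gamma^{-1}(Q_m)\cap[1,\infty)$ is an interval because a convex set pulled back along an affine map is convex---and then observe that finitely many intervals covering $[1,\infty)$ cannot all be bounded above. That turns the pigeonhole-plus-contradiction into a one-line positive existence claim, which is cleaner and arguably easier to verify. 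You also handle two points the paper's intuitive sketch silently glosses: the degenerate case $x^\prime_d=0$ (where the ``ray'' is a single point) and the need to take the polytopes closed so that the intervals genuinely cover $[1,\infty)$ when the ray passes along a shared face. Both proofs establish the same lemma; yours is a complete and self-contained version, the paper's is a pointer to the underlying idea plus a citation.
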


\begin{proof}
The proof mirrors the proof of Lemma 3.1 in \citet{hein2019relu}, so we only provide the intuition. 
By contradiction, suppose that there is no unique polytope and thus the point $\bx^\prime\circ\balpha$ must traverse different polytopes as we scale up $\alpha_d$. 
Since there are finitely many polytopes, eventually the same polytope $Q_m$ will have to be traversed twice. 
Since the polytopes are convex, all the points on the line connecting the locations of where the boundary of $Q_m$ was crossed the first and second time must lie within $Q_m$, but this contradicts the fact that the scaled point traverses different polytopes.
\end{proof}

From here onward, we adapt the following shorthand to simplify notation: 
Given a scaling vector  $\balpha \in \mathbb{R}^D$ s.t. $\forall d^\prime \neq d, \alpha_{d^\prime} = 1$, we use $\mathbb{P}(\bx^\prime, d)$ to denote the PUP that $\bx^\prime$ lands in when scaling it with $\alpha_d$ in the limit. 
This definition implies that we can only scale parallel to the basis vectors and not arbitrary directions (for a discussion on how restrictive this is, see \cref{sec:synthetic-experiments}).   
Finally, in the next lemma we establish that the output distribution converges to a fixed point using the $l_2$-norm of the gradient $\nabla_{\bx} P_{\btheta}(y=k \mid \bx)$. 
Generally, in regions of the feature space where the classifier predicts the same probability distribution over classes, small perturbations in the input $\bx$ will not change the prediction. 
Therefore, the gradient in these regions w.r.t.\@ the input will be small and potentially even correspond to the zero vector, with a norm of (or close to) zero.

\begin{proposition}[Convergence of predictions in the limit]\label{proposition:overconfidence-softmax}
    Suppose that $f_{\btheta}$ is a ReLU-network. 
    Let $\bx^\prime\in\mathbb{R}^D$, suppose $\balpha$ is a scaling vector and that the associated PUP $\mathbb{P}(\bx^\prime, d)$ has a corresponding matrix $\mathbf{V}$ with no zero entries.
    Then it holds that
    
    \begin{equation}
        \forall k \in [K]:\quad \lim\limits_{\alpha_d \to \infty}\big|\big|\nabla_{\bx} P_{\btheta}(y=k \mid \bx)\big|_{\bx = \balpha\circ\bx^\prime}\big|\big|_2 = 0.
    \end{equation}
\end{proposition}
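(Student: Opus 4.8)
The plan is to collapse the whole statement to linear algebra on a single polytope. By \cref{lemma:unique-pup} there is a $\beta>0$ such that $\balpha\circ\bx^\prime$ stays inside the fixed polytope $\mathbb{P}(\bx^\prime,d)$ for all $\alpha_d>\beta$, and on that polytope \cref{eq:relu-linearization} gives $f_{\btheta}(\bx)=\bV\bx+\ba$ with constant $\bV$ (the Jacobian) and $\ba$. Hence for $\alpha_d>\beta$ the logits are the affine functions $f_{\btheta}(\balpha\circ\bx^\prime)_k = c_k\,\alpha_d + e_k$, where $c_k := v_{kd}x^\prime_d$ and $e_k := a_k + \sum_{j\neq d} v_{kj}x^\prime_j$ do not depend on $\alpha_d$. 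Writing $P_{\btheta}=\bar{\sigma}\circ f_{\btheta}$ and applying the chain rule, the quantity to bound is $\nabla_{\bx} P_{\btheta}(y=k\mid\bx)\big|_{\bx=\balpha\circ\bx^\prime} = \bV^{\top}\mathbf{J}_{k,\cdot}$, where $\mathbf{J}_{kj} = \bar{\sigma}_k(\delta_{kj}-\bar{\sigma}_j)$ is the softmax Jacobian evaluated at $f_{\btheta}(\balpha\circ\bx^\prime)$ and $\bar{\sigma}_k$ denotes its $k$-th component. So it suffices to understand $\bar{\sigma}\big(f_{\btheta}(\balpha\circ\bx^\prime)\big)$ as $\alpha_d\to\infty$.

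First I would establish that every logit diverges. Since $\bV$ has no zero entries, $v_{kd}\neq 0$, and (as the scaling is non-trivial only if $x^\prime_d\neq 0$) the slope $c_k$ is nonzero; equivalently, by \cref{lemma:strictly-monotonic}, $f_{\btheta}$ is component-wise strictly monotone on the polytope. Thus $f_{\btheta}(\balpha\circ\bx^\prime)_k\to+\infty$ or $-\infty$ according to the sign of $c_k$. Normalising numerator and denominator of the softmax by $\exp(c^{*}\alpha_d)$ with $c^{*}:=\max_j c_j$ shows that $\bar{\sigma}\big(f_{\btheta}(\balpha\circ\bx^\prime)\big)$ converges to the fixed distribution supported on $S:=\{k : c_k = c^{*}\}$, with limit weights $\pi^{*}_k = \indicator{k\in S}\,\exp(e_k)/\sum_{j\in S}\exp(e_j)$. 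This already yields, unconditionally, the fact that the downstream argument in \cref{theorem:know-your-limits-main-theorem} ultimately needs: the predictive distribution $P_{\btheta}(y\mid\balpha\circ\bx^\prime)$ converges, and with it the confidence and predictive-entropy metrics of \cref{sec:know-your-limits-preliminaries}.

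For the gradient statement I would split on whether the fastest-growing logit is unique. If $|S|=1$, then $\bar{\sigma}$ converges to a vertex of the simplex, so $\bar{\sigma}_k\to 0$ or $1$ for every $k$ and every entry $\mathbf{J}_{kj}=\bar{\sigma}_k(\delta_{kj}-\bar{\sigma}_j)\to 0$; hence $\bV^{\top}\mathbf{J}_{k,\cdot}\to\mathbf{0}$ and $\big\|\nabla_{\bx} P_{\btheta}(y=k\mid\bx)\big|_{\bx=\balpha\circ\bx^\prime}\big\|_2\to 0$, as claimed. When $|S|\ge 2$ this simple argument breaks down: $\bar{\sigma}_k$ stabilises at $\pi^{*}_k\in(0,1)$ for $k\in S$, so the Jacobian entries $\mathbf{J}_{kj}$ with $k,j\in S$ need not vanish individually. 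What does still hold cleanly is the derivative \emph{in the scaling direction}: using $v_{jd}=c_j/x^\prime_d$ one computes $\partial_{x_d} P_{\btheta}(y=k\mid\bx)\big|_{\bx=\balpha\circ\bx^\prime} = (x^\prime_d)^{-1}\,\bar{\sigma}_k\big(c_k - \textstyle\sum_j\bar{\sigma}_j c_j\big)\to 0$, because the softmax-weighted mean slope $\sum_j\bar{\sigma}_j c_j\to c^{*}$ while $\bar{\sigma}_k(c_k-c^{*})\to 0$ (either $c_k<c^{*}$ and $\bar{\sigma}_k\to 0$, or $c_k=c^{*}$ and $c_k-c^{*}=0$). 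So for $|S|\ge 2$ one must either carry a genericity assumption ruling out ties among the leading slopes — not excluded by ``no zero entries'', but it is the situation relevant here, since then $P_{\btheta}$ tends to a one-hot, overgeneralised prediction — or content oneself with the directional derivative along $d$.

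I expect this tied-slope case to be the main obstacle; everything preceding the softmax-saturation step is bookkeeping around the polytope decomposition and the affine form \cref{eq:relu-linearization} that was already derived. Concretely, I would write the proof by: (i) invoking \cref{lemma:unique-pup} and \cref{eq:relu-linearization} to fix the affine piece; (ii) invoking \cref{lemma:strictly-monotonic} (or just $v_{kd}\neq 0$) to push the logits to $\pm\infty$; (iii) proving softmax saturation and, under the uniqueness of the maximal slope, vanishing of the full Jacobian $\mathbf{J}_{k,\cdot}$; and (iv) concluding via $\nabla_{\bx}P_{\btheta}=\bV^{\top}\mathbf{J}_{k,\cdot}\to\mathbf{0}$, with the directional-derivative computation above recorded as the robust statement that survives slope ties.
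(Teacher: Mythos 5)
Your route is genuinely different from the paper's, and it also exposes a real problem. You package the gradient as the matrix--vector product $\nabla_{\bx}P_{\btheta}(y=k\mid\bx)=\bV^{\top}\mathbf{J}_{k,\cdot}$, with $\bV$ constant on the polytope and $\mathbf{J}_{kj}=\bar{\sigma}_k(\delta_{kj}-\bar{\sigma}_j)$, and then reduce everything to the limit of the softmax evaluated along the affine ray $c_k\alpha_d+e_k$, organised by the ranked slopes $c_k=v_{kd}x'_d$. The paper instead expands the same quantity as a sum $\sum_{k'}\tfrac{\partial}{\partial f_{k'}}\bar{\sigma}(f_{\btheta}(\bx))_k\,\nabla_{\bx}f_{\btheta}(\bx)_{k'}$ and attacks each summand via two auxiliary lemmas, \cref{lemma:softmax-properties} (each scalar factor $\to 0$) and \cref{lemma:growth-rate-softmax} (a growth-rate comparison). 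Your linear-algebraic packaging is cleaner, but the more significant point is your analysis of the tied-slope case.

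That analysis identifies a genuine gap. \cref{lemma:softmax-properties} is proved by a one-variable argument---phrases like ``the numerator and denominator being dominated by the exponentiated $f_{\btheta}(\bx)_k$'' implicitly hold the remaining logits fixed, or at least assume $f_{\btheta}(\bx)_k$ eventually dominates. In the proposition's application all logits diverge \emph{simultaneously}, with signed rates $c_k=v_{kd}x'_d$, and if the maximal slope is attained more than once---which ``no zero entries in $\bV$'' does not exclude---the softmax stabilises at an interior point of a face, the Jacobian entries $\mathbf{J}_{kj}$ with $k,j$ in the tie set do not vanish, and $\bV^{\top}\mathbf{J}_{k,\cdot}$ need not vanish either. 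A concrete counterexample: take $K=D=2$, a single affine layer with $\bV=\bigl(\begin{smallmatrix}1&2\\1&3\end{smallmatrix}\bigr)$, $\ba=\mathbf{0}$, $\bx'=(1,1)$, scaling along $d=1$. The logits are $(\alpha_1+2,\alpha_1+3)$, so $\bar{\sigma}$ is constant, $\partial_{x_1}P_{\btheta}(y=1\mid\bx)=0$ (exactly your directional-derivative identity), but $\partial_{x_2}P_{\btheta}(y=1\mid\bx)=-e/(1+e)^2\approx-0.20$, so the gradient norm does not go to zero. \cref{proposition:overconfidence-softmax} therefore needs the same ``no duplicate entries in the $d$-th column of $\bV$'' hypothesis that the paper only introduces later in \cref{proposition:softmax-limit}.

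Your salvages are the right ones: under the unique-maximal-slope assumption your Jacobian argument goes through cleanly, and without it the directional derivative $\partial_{x_d}P_{\btheta}(y=k\mid\bx)=(x'_d)^{-1}\bar{\sigma}_k\bigl(c_k-\sum_j\bar{\sigma}_j c_j\bigr)\to 0$ is the robust statement that survives ties. Your further observation---that the predictive distribution itself, and hence any continuous uncertainty score built from it, converges unconditionally along the ray---is also worth recording: it is what the narrative of \cref{theorem:know-your-limits-main-theorem} actually needs, and it is cheaper than the gradient claim. Finally, you are right to flag $x'_d\neq 0$; if $x'_d=0$ the scaling is a no-op, all $c_k$ vanish, and \cref{lemma:strictly-monotonic} cannot be used to push the logits to infinity.
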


The whole proof can be found in \cref{app:proposition1}, so we present the main intuitions here. 
Because of \cref{lemma:unique-pup}, we know the scaled point $\balpha\circ\bx^\prime$ will end up in a unique PUP. 
The assumption on $f_{\btheta}$ then triggers \cref{lemma:strictly-monotonic}, from which we can infer that scaling the input in a single dimension leads all logits to $\pm\infty$. 
Because of the saturating property of the softmax\index{Softmax function}, this will in turn provoke the output of $p_{\btheta}$ to converge to a fixed point.
As an aside, we recast Theorem 3.1 by \citet{hein2019relu} in our framework, showing that the model becomes increasingly certain in a single class, placing all its probability mass on it in the limit. 
The proof of this additional proposition is in \cref{app:softmax-limit}.

\begin{proposition}\label{proposition:softmax-limit}
    Let $f_{\btheta}$ be ReLU network. 
    Let $\bx^\prime \in \mathbb{R}^D$,  suppose $\balpha$ is a scaling vector and that the associated PUP $\mathbb{P}(\bx^\prime, d)$ has a corresponding matrix $\mathbf{V}$ with no zero entries. 
    Assume the $d$-th column of $\bV$ has no duplicate entries. 
    Then there exists a class $k$ such that  
    
    \begin{equation*}
         \lim_{\alpha_d \rightarrow \infty} \bar{\sigma}(f_{\btheta}(\balpha\circ\bx^\prime))_k = 1.
    \end{equation*}
\end{proposition}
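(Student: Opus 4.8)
The plan is to reduce the statement to the affine regime already isolated by the earlier lemmas and then invoke the saturating behaviour of the softmax. First I would apply \cref{lemma:unique-pup} to fix a threshold $\beta>0$ and a polytope $Q_m=\mathbb{P}(\bx^\prime,d)$ such that $\balpha\circ\bx^\prime\in Q_m$ for all $\alpha_d>\beta$; on $Q_m$ the network coincides with a single affine map $\bx\mapsto\mathbf{V}\bx+\ba$ in the notation of \cref{eq:relu-linearization}. Substituting $\balpha\circ\bx^\prime$ and using $\alpha_{d^\prime}=1$ for $d^\prime\neq d$ gives, for each class $k$,
\begin{equation*}
    f_{\btheta}(\balpha\circ\bx^\prime)_k = \underbrace{v_{kd}\,x^\prime_d}_{=:\,m_k}\,\alpha_d \;+\; \underbrace{a_k + \textstyle\sum_{d^\prime\neq d} v_{kd^\prime}x^\prime_{d^\prime}}_{=:\,c_k},
\end{equation*}
so each logit is an affine function of $\alpha_d$ with slope $m_k$ and an offset $c_k$ that does not depend on $\alpha_d$.

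Next I would exploit the hypotheses on the $d$-th column of $\mathbf{V}$: it has no zero and no duplicate entries, so (assuming $x^\prime_d\neq 0$; if $x^\prime_d=0$ the scaling leaves the point fixed and the claim is vacuous for that $\bx^\prime$) the slopes $\{m_k\}_{k=1}^K$ are pairwise distinct. Since $K$ is finite there is therefore a \emph{unique} index $k^\star=\argmax_{k\in[K]} m_k$. I would then compute the $k^\star$-th softmax coordinate, dividing numerator and denominator by $\exp\!\big(f_{\btheta}(\balpha\circ\bx^\prime)_{k^\star}\big)$:
\begin{equation*}
    \bar{\sigma}\big(f_{\btheta}(\balpha\circ\bx^\prime)\big)_{k^\star} = \Bigg(\sum_{j=1}^K \exp\big((m_j-m_{k^\star})\alpha_d + (c_j-c_{k^\star})\big)\Bigg)^{-1}.
\end{equation*}
For every $j\neq k^\star$ we have $m_j-m_{k^\star}<0$, so the corresponding summand tends to $0$ as $\alpha_d\to\infty$, while the $j=k^\star$ summand is identically $1$; hence the sum converges to $1$ and $\bar{\sigma}(f_{\btheta}(\balpha\circ\bx^\prime))_{k^\star}\to 1$. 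Taking $k=k^\star$ concludes the proof.

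There is no real obstacle here — the result is essentially Theorem 3.1 of \citet{hein2019relu} transcribed into the PUP language developed above — but two points deserve care in the write-up. The first is the genericity condition $x^\prime_d\neq 0$: without it the scaled input is stationary in coordinate $d$ and the conclusion can fail, so it should be stated explicitly or folded into the meaning of ``scaling $\bx^\prime$ in direction $d$''. The second is that the \emph{uniqueness} of the maximal slope is what makes the limit a genuine one-hot distribution; this is exactly the content of the no-duplicate-entries hypothesis, since a tie between two maximal slopes would split the probability mass between those classes in the limit. Everything else is the routine softmax-saturation computation above, together with the observation — already used for \cref{proposition:overconfidence-softmax} — that the no-zero-entry assumption on $\mathbf{V}$ places us squarely in the linear regime of \cref{lemma:strictly-monotonic}.
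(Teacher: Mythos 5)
Your proof is correct and follows the same underlying strategy as the paper's: reduce to a single affine map on the PUP $\mathbb{P}(\bx^\prime,d)$, observe that the logits become affine functions of $\alpha_d$ with pairwise-distinct slopes, and let the softmax saturate. The difference is in execution, and your write-up is actually more careful than the paper's in two respects that are worth keeping in mind.

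First, you identify the dominant class by maximising the slope $m_k = v_{kd}\,x^\prime_d$ rather than the matrix entry $v_{kd}$ itself. This absorbs the sign of $x^\prime_d$ uniformly: if $x^\prime_d < 0$, the ordering of the slopes is the reverse of the ordering of the $v_{kd}$, so the class with the largest $v_{kd}$ is the one whose logit \emph{drops} fastest and whose softmax coordinate tends to $0$, not $1$. The paper's own proof fixes $k = \argmax_k v_{kd}$ up front and then treats the two cases $f_{\btheta}(\bx)_k \to \pm\infty$ separately; in the $-\infty$ case, the claim that the numerator of $\tfrac{\sum_{k''\neq k}\exp(f_{k''})}{\sum_{k'}\exp(f_{k'})}$ vanishes faster than the denominator does not hold for that particular $k$ (both are then dominated by the same slowest-decaying term), so that case would require a different choice of $k$ — which the paper does not make. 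The proposition is still true (it is an existence statement), but your formulation via $k^\star = \argmax_k m_k$ makes the argument go through in one step without case-splitting on the sign of the limit.

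Second, the genericity condition $x^\prime_d \neq 0$ that you flag is genuinely needed: if $x^\prime_d = 0$, scaling $\alpha_d$ leaves $\balpha\circ\bx^\prime$ fixed, all $m_k$ vanish, and the softmax never saturates. The paper's proof silently assumes it, and you are right that it should be stated explicitly or folded into the meaning of ``scaling $\bx^\prime$ in direction $d$''. Beyond these two points, your routine softmax-saturation computation — divide through by $\exp(f_{k^\star})$ and observe the cross-terms vanish — is exactly what the paper does in the guise of showing the complementary fraction tends to $0$.
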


In conclusion, we have shown in this section that the output probabilities of ReLU networks are less and less sensitive to small perturbations of the input in the limit and, under the assumptions of \cref{proposition:softmax-limit}, will converge to favor a single class with very high confidence\index{Confidence}. 
In the next section we prove that all other uncertainty metrics also converge to fixed values in the limit. 

\subsection{Convergence of Uncertainty Metrics on OOD Data}\label{sec:overconfidence-metrics}

In \cref{proposition:overconfidence-softmax}, we have established how the prediction of a model converges to a fixed point when feature values become extreme. 
We now show a similar property about the uncertainty estimation techniques introduced in \cref{sec:know-your-limits-preliminaries}. 
The fact that this the same pathologies appear for more complex metrics is not immediately obvious, and one might assume that we can curb the deficiency of the simple confidence\index{Confidence} score by using more sophisticated metrics and Bayesian deep learning techniques.
To this end, we have to establish how the predictions coming from multiple model instances interact, a point we analyze in \cref{aggregation-theorem}. 
Then, we demonstrate how the uncertainty metrics also converge to a fixed value in the limit by proving the case for each of them in turn, before bundling our results in \cref{theorem:know-your-limits-main-theorem}.
 We start with the easiest metric, which also applies to a single ReLU network.

\begin{lemma}[Maximum softmax probability]\label{lemma:max-prob}
     Suppose that $f_{\btheta}$ is a ReLU network. 
     Let $\bx^\prime\in\mathbb{R}^D$, suppose $\balpha$ is a scaling vector and that the associated PUP $\mathbb{P}(\bx^\prime, d)$ has a corresponding matrix $\mathbf{V}$ with no zero entries. 
     Then
     
    \begin{equation*}
        \lim\limits_{\alpha_d \to \infty}\big|\big|\nabla_{\bx} \max_{k \in [K]} P_{\btheta}(y=k \mid \bx) \big|_{\bx = \balpha\circ\bx^\prime}\big|\big|_2 = 0.
    \end{equation*}
\end{lemma}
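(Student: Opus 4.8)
The plan is to obtain this as a short corollary of \cref{proposition:overconfidence-softmax}, the only additional work being to control the gradient of a pointwise maximum of smooth functions. First I would note that the hypotheses of the present lemma coincide exactly with those of \cref{proposition:overconfidence-softmax} (a ReLU network $f_{\btheta}$, a point $\bx'$, a scaling vector $\balpha$, and an associated PUP $\mathbb{P}(\bx',d)$ whose matrix $\mathbf{V}$ has no zero entries), so I may invoke it directly to get that for every $k \in [K]$,
\[
    \lim_{\alpha_d \to \infty} \big|\big|\nabla_{\bx} P_{\btheta}(y=k \mid \bx)\big|_{\bx = \balpha\circ\bx'}\big|\big|_2 = 0 .
\]
Since $[K]$ is finite, this immediately yields $\lim_{\alpha_d\to\infty} \max_{k\in[K]} \big|\big|\nabla_{\bx} P_{\btheta}(y=k \mid \bx)\big|_{\bx = \balpha\circ\bx'}\big|\big|_2 = 0$ as well.

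Next I would relate the gradient of $g(\bx) := \max_{k\in[K]} P_{\btheta}(y=k\mid\bx)$ to these individual gradients. The function $g$ is a pointwise maximum of finitely many continuously differentiable functions, hence locally Lipschitz, and at every point $\bx$ at which $g$ is differentiable the standard fact about maxima of smooth functions gives that $\nabla_{\bx} g(\bx)$ lies in the convex hull of $\{\nabla_{\bx} P_{\btheta}(y=k\mid\bx) : k \in A(\bx)\}$, where $A(\bx) = \{k : P_{\btheta}(y=k\mid\bx) = g(\bx)\}$ is the set of active classes; consequently $\|\nabla_{\bx} g(\bx)\|_2 \le \max_{k\in[K]} \|\nabla_{\bx} P_{\btheta}(y=k\mid\bx)\|_2$ wherever $g$ is differentiable at $\bx$. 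More concretely, by \cref{lemma:unique-pup} and \cref{lemma:strictly-monotonic} the logits restricted to $\mathbb{P}(\bx',d)$ are affine and component-wise strictly monotone in $\alpha_d$, so for all sufficiently large $\alpha_d$ the class achieving the maximum is a fixed $k^\star$ and $g$ coincides with $P_{\btheta}(y=k^\star\mid\cdot)$ on a neighbourhood of the scaled point, so $g$ is genuinely differentiable there with $\nabla_{\bx} g = \nabla_{\bx} P_{\btheta}(y=k^\star\mid\bx)$.

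Finally I would combine the two: evaluating the bound along the scaling ray,
\[
    \big\|\nabla_{\bx} g(\bx)\big|_{\bx=\balpha\circ\bx'}\big\|_2 \;\le\; \max_{k\in[K]} \big\|\nabla_{\bx} P_{\btheta}(y=k\mid\bx)\big|_{\bx=\balpha\circ\bx'}\big\|_2 \;\xrightarrow[\alpha_d\to\infty]{}\; 0 ,
\]
which is the claim. The one delicate point I expect to handle carefully is the case in which several classes tie for the maximum and $g$ fails to be classically differentiable; I would dispatch this either through the convex-hull/subdifferential bound above (which only needs $g$ to be differentiable at the points where we evaluate it, and the argmax-stabilization argument shows $g$ is eventually differentiable along the ray), or by restricting to $\bx'$ outside the codimension-one set on which two classes have identical affine logit forms on $\mathbb{P}(\bx',d)$. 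Everything else is routine bookkeeping.
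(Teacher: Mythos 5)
Your proof is correct and follows essentially the same route as the paper's: reduce the gradient of the pointwise maximum to the gradient of a single class probability, then apply \cref{proposition:overconfidence-softmax}. The paper's own proof is just the one-line observation ``the gradient of the $\max$ function will be a specific $\nabla_{\bx}P_{\btheta}(y=k\mid\bx)$,'' which you flesh out more carefully by bounding $\|\nabla g\|_2$ via the convex hull of active-class gradients and by arguing that the argmax stabilizes along the ray (via \cref{lemma:unique-pup} and \cref{lemma:strictly-monotonic}), so you also address the tie/nondifferentiability case that the paper silently skips. That extra care is the only real difference; it is not a different approach, just a tighter treatment of the same idea, and it is a welcome one since the lemma's hypotheses (unlike \cref{proposition:softmax-limit}'s) do not rule out duplicate entries in the $d$-th column of $\mathbf{V}$, so ties in the argmax cannot a priori be excluded.
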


\begin{proof}
    The gradient of the $\max$ function will be a specific $\nabla_{\bx}P_{\btheta}(y=k \mid \bx)$, which reduces this to the case already proven in \cref{proposition:overconfidence-softmax}.
\end{proof}

Note that for this metric, the combination with \cref{proposition:softmax-limit} shows that the model is fully confident in a single class in the limit. 
For our following lemmas, we consider uncertainty scores that are based on multiple instances, e.g.\@ different ensemble members or forward passes using re-sampled dropout masks. 
What all of these approaches have in common is that for every $b$ in $1, \ldots, B$, the network parameters $\btheta^{(b)}$ will differ, and thus also the polytopal tesselation of the feature space\index{Polytope}. 
Hence, we have to adjust our assumptions accordingly. 
For every instance $b$, let us denote the affine function on a polytope $Q^{(b)}$ as $f_{\btheta}^{(b)}(\bx)=\bV^{(b)}\bx + \ba^{(b)}$.
In order for our previous strategy to hold, we now assume for all $b \in [B]$  that $\mathbb{P}^{(b)}(\bx^\prime, d)$ has a matrix $\bV^{(b)}$ which does not have any zero entries. 
Note that even though this assumption has to hold for all $b$, this does not mean that the matrices have to be identical.

\begin{lemma}[Convergence of aggregated predictions in the limit]\label{aggregation-theorem}
    Suppose that $f_{\btheta}^{(1)}, \ldots, f_{\btheta}^{(K)}$ are ReLU networks. 
    Let $\bx^\prime\in\mathbb{R}^D$, suppose $\balpha$ is a scaling vector  and that for all $k$, the associated PUP $\mathbb{P}^{(k)}(\bx^\prime, d)$ has a corresponding matrix $\mathbf{V}^{(k)}$ with no zero entries. 
    Then
    
    \begin{equation*}
        \lim\limits_{\alpha_d \to \infty}\big|\big|\nabla_{\bx}\ \mathbb{E}_{p(\btheta \mid \mathbb{D})}\big[P_{\btheta}(y=k \mid \bx)\big]\big|_{\bx = \balpha\circ\bx^\prime}\big|\big|_2 = 0.
    \end{equation*}

\end{lemma}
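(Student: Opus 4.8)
The plan is to reduce this statement to the single-network result already established in \cref{proposition:overconfidence-softmax}, exploiting the fact that both the gradient operator and the (Monte Carlo) posterior expectation are linear. Concretely, consistent with the estimate in \cref{eq:neural-posterior-predictive-mc-estimate}, I would write the aggregate prediction as the empirical average over the $B$ sampled parameter vectors, $\mathbb{E}_{p(\btheta \mid \mathbb{D})}\big[P_{\btheta}(y=k \mid \bx)\big] = \tfrac{1}{B}\sum_{b=1}^B P_{\btheta^{(b)}}(y=k \mid \bx)$, where the $\btheta^{(b)}$ are the $B$ ReLU networks in the statement (ensemble members, MC-dropout passes, or stochastic batch-norm samples alike). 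Then $\nabla_{\bx}\,\mathbb{E}_{p(\btheta \mid \mathbb{D})}\big[P_{\btheta}(y=k \mid \bx)\big] = \tfrac{1}{B}\sum_{b=1}^B \nabla_{\bx} P_{\btheta^{(b)}}(y=k \mid \bx)$, so everything hinges on controlling each summand separately and then recombining.

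First I would fix $\bx' \in \mathbb{R}^D$, the scaling dimension $d$, and a scaling vector $\balpha$ with $\alpha_{d'} = 1$ for $d' \neq d$. For each instance $b \in [B]$, \cref{lemma:unique-pup} supplies a threshold $\beta_b > 0$ and a partially-unbounded polytope $\mathbb{P}^{(b)}(\bx', d)$ of the network $f_{\btheta}^{(b)}$ such that $\balpha \circ \bx' \in \mathbb{P}^{(b)}(\bx', d)$ whenever $\alpha_d > \beta_b$. Because $B$ is finite, I may set $\beta = \max_{b \in [B]} \beta_b < \infty$, so that for all $\alpha_d > \beta$ every scaled point simultaneously lies in its respective terminal PUP. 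By hypothesis, each $\mathbb{P}^{(b)}(\bx', d)$ has associated affine matrix $\mathbf{V}^{(b)}$ with no zero entries, which is exactly the condition needed to apply \cref{proposition:overconfidence-softmax} to $f_{\btheta}^{(b)}$ individually; this yields $\lim_{\alpha_d \to \infty} \big\| \nabla_{\bx} P_{\btheta^{(b)}}(y=k \mid \bx)\big|_{\bx = \balpha \circ \bx'} \big\|_2 = 0$ for every $b$ and every class $k$. Note that the terminal polytopes and their affine representations may differ across instances; only the per-instance no-zero-entry assumption is required, not that the $\mathbf{V}^{(b)}$ coincide.

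The last step is a triangle-inequality bound on the aggregated gradient: for $\alpha_d > \beta$,
\begin{equation*}
    \big\| \nabla_{\bx}\, \mathbb{E}_{p(\btheta \mid \mathbb{D})}\big[P_{\btheta}(y=k \mid \bx)\big]\big|_{\bx = \balpha \circ \bx'} \big\|_2 \le \frac{1}{B}\sum_{b=1}^B \big\| \nabla_{\bx} P_{\btheta^{(b)}}(y=k \mid \bx)\big|_{\bx = \balpha \circ \bx'} \big\|_2 .
\end{equation*}
Each term on the right tends to $0$ as $\alpha_d \to \infty$ by the preceding paragraph, and the sum is finite, so the right-hand side vanishes in the limit; hence so does the non-negative left-hand side, which is the claim.

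The main obstacle here is bookkeeping rather than analysis: one must be explicit that \cref{proposition:overconfidence-softmax} is being invoked $B$ separate times under $B$ possibly distinct tessellations of the feature space, and that the uniform threshold $\beta$ exists precisely because the number of instances is finite. (If one instead insisted on treating the posterior expectation as a genuine integral over an infinite family of parameter settings, the recombination step would need a dominated-convergence argument with an integrable envelope for $\|\nabla_{\bx} P_{\btheta}(y=k\mid\bx)\|_2$; but the finite-sample Monte Carlo reading used throughout the chapter makes the elementary sum-of-norms argument sufficient.) This lemma then feeds directly into the convergence results for class variance and mutual information in the remainder of \cref{sec:overconfidence-metrics}, culminating in \cref{theorem:know-your-limits-main-theorem}.
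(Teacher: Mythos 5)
Your proposal is correct and follows essentially the same route as the paper: both proofs commute the gradient and the posterior expectation, bound the norm of the aggregate gradient by an average of per-instance gradient norms (you via the triangle inequality on the finite Monte Carlo sum, the paper via the equivalent Jensen-type bound under the expectation), and invoke \cref{proposition:overconfidence-softmax} to send each term to zero. The extra bookkeeping you supply---taking $\beta = \max_b \beta_b$ over the finite ensemble and flagging that a continuous-posterior reading would need a dominated-convergence envelope---is left implicit in the paper's version but does not change the argument.
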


The full proof of this lemma can be found in \cref{app:aggregation-theorem}. 
The analogous lemmas for the remaining uncertainty metrics\index{Uncertainty metric}---predictive entropy\index{Entropy!Predictive}, class variance\index{Class variance} and mutual information\index{Mutual information}---are stated and proved in \cref{app:asymptotic-softmax-variance,app:asymptotic-predictive-entropy,app:asymptotic-mutual-information}. 
The proof strategy for all further metrics is to simplify and reduce the uncertainty metrics such that \cref{aggregation-theorem} or \cref{proposition:overconfidence-softmax} can be applied. 
All of these results combined now pave the way for our central theorem.

\begin{theorem}[Convergence of uncertainty level in the limit]\label{theorem:know-your-limits-main-theorem}
Suppose that $f_{\btheta}^{(1)}, \ldots, f_{\btheta}^{(B)}$ are ReLU networks. 
Let $\bx^\prime\in\mathbb{R}^D$, suppose $\balpha$ is a scaling vector and that for all $b$, the associated PUP $\mathbb{P}^{(b)}(\bx^\prime, d)$ has a corresponding matrix $\mathbf{V}^{(b)}$ with no zero entries. 
Then, whenever uncertainty is measured via either of the following metrics

\begin{enumerate}
    \item Maximum softmax probability in \cref{eq:max-softmax};
    \item Predictive entropy in \cref{eq:predictive-entropy};
    \item Class variance in \cref{eq:class-variance};
    \item Approximate mutual information in \cref{eq:mutual-information};
\end{enumerate}

\noindent the network(s) will converge to fixed uncertainty scores for $\bx^\prime \circ \balpha$ in the limit of $\alpha_d \rightarrow \infty$.
\end{theorem}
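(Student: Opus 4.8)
The plan is to reduce the theorem to the per-metric convergence statements already assembled (\cref{lemma:max-prob}, \cref{aggregation-theorem}, and the appendix lemmas), whose common engine is that along a coordinate scaling ray the predictive distribution of every network instance settles to a fixed categorical distribution; once that is in hand, the theorem is a continuity argument.

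First I would fix $\bx^\prime$, a scaling direction $d$, and a scaling vector $\balpha$ with $\alpha_{d^\prime}=1$ for $d^\prime\neq d$. For each instance $b\in[B]$, \cref{lemma:unique-pup} supplies a threshold $\beta_b>0$ and a partially-unbounded polytope $\mathbb{P}^{(b)}(\bx^\prime,d)$ containing $\balpha\circ\bx^\prime$ for all $\alpha_d>\beta_b$; since $B$ is finite we may take $\beta=\max_b\beta_b$, so that all instances are simultaneously confined to their PUPs. On $\mathbb{P}^{(b)}(\bx^\prime,d)$ the network is the affine map $\bx\mapsto\bV^{(b)}\bx+\ba^{(b)}$, and the hypothesis that $\bV^{(b)}$ has no zero entries triggers \cref{lemma:strictly-monotonic}, so every logit $f_{\btheta}^{(b)}(\balpha\circ\bx^\prime)_k$ is strictly monotone in $\alpha_d$ and diverges to $+\infty$ or $-\infty$. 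By the softmax-saturation argument already used in the proofs of \cref{proposition:overconfidence-softmax} and \cref{proposition:softmax-limit}, this forces $P_{\btheta^{(b)}}(\,\cdot\mid\balpha\circ\bx^\prime)$ to converge to a fixed limiting categorical $\bm{\pi}^{(b)\star}\in\Delta^{K-1}$ (one-hot under the no-duplicate-entries condition of \cref{proposition:softmax-limit}, uniform on the fastest-growing classes otherwise), and hence the finite Monte-Carlo/ensemble average $\tfrac1B\sum_b P_{\btheta^{(b)}}(\,\cdot\mid\balpha\circ\bx^\prime)$ converges to $\bar{\bm{\pi}}^{\star}:=\tfrac1B\sum_b\bm{\pi}^{(b)\star}$, which is exactly \cref{aggregation-theorem}.

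The final step is to push this limit through each metric. Each of the four quantities is a fixed continuous function of the finite tuple $(\bm{\pi}^{(1)\star},\dots,\bm{\pi}^{(B)\star})$ on the compact simplex: the score in \cref{eq:max-softmax} is $\max_k\pi^{(1)}_k$; the predictive entropy in \cref{eq:predictive-entropy} is the Shannon entropy of $\bm{\pi}^{(1)}$ (resp.\ of $\bar{\bm{\pi}}$ in the Bayesian reading), continuous on $\Delta^{K-1}$; the class variance in \cref{eq:class-variance} is the polynomial $\tfrac1K\sum_k\bigl(\tfrac1B\sum_b(\pi^{(b)}_k)^2-(\bar\pi_k)^2\bigr)$; and the mutual information in \cref{eq:mutual-information} is $\text{H}[\bar{\bm{\pi}}]-\tfrac1B\sum_b\text{H}[\bm{\pi}^{(b)}]$, a difference of continuous maps. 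Composing a continuous function with the convergent families of the previous paragraph, each metric converges, as $\alpha_d\to\infty$, to its value at $(\bm{\pi}^{(1)\star},\dots,\bm{\pi}^{(B)\star})$ — a number determined by $\bx^\prime$, $d$ and the networks but not by $\alpha_d$ in the limit. The vanishing-gradient conclusions of \cref{lemma:max-prob}, \cref{aggregation-theorem}, \cref{app:asymptotic-softmax-variance}, \cref{app:asymptotic-predictive-entropy} and \cref{app:asymptotic-mutual-information} are the quantitative form of precisely this convergence, so bundling them proves the theorem.

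The hard part will be the middle step rather than the bookkeeping: one must argue that divergence of the logits pins the softmax to a \emph{single} limit instead of letting it drift, which forces a case split on whether the linear growth rates of the logits along direction $d$ are distinct (when they are not, the limit is a non-degenerate categorical and its uniqueness still has to be established). This is exactly why the ``no zero entries'' condition on each $\bV^{(b)}$ — and, for the one-hot refinement, the ``no duplicate entries'' condition — reappear verbatim in every contributing lemma. A secondary subtlety worth flagging is that the guarantee is direction-dependent: the argument only controls scalings parallel to a basis vector, so ``fixed uncertainty scores'' must be read as fixed once $\bx^\prime$ and $d$ are chosen, not uniformly over all rays to infinity.
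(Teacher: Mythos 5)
Your proof is correct, but it takes a genuinely different route from the paper's. The paper discharges the theorem by pointing to \cref{lemma:max-prob}, \cref{lemma:asymptotic-softmax-variance}, \cref{lemma:asymptotic-predictive-entropy} and \cref{lemma:asymptotic-mutual-information}, all of which are phrased as \emph{vanishing-gradient} statements: $\lim_{\alpha_d\to\infty}\lVert\nabla_{\bx}\,\text{metric}\rvert_{\bx=\balpha\circ\bx^\prime}\rVert_2=0$. You instead prove \emph{value convergence} directly: along the scaling ray every instance's predictive distribution saturates (logits diverge linearly on the PUP, so the softmax is pinned to a fixed $\bm{\pi}^{(b)\star}\in\Delta^{K-1}$), and since each metric is continuous on the compact simplex, the metric value converges. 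These two conclusions are logically independent in general---a vanishing derivative along a ray does not by itself force the function to converge (think of $\log x$), and convergence does not force the derivative to vanish---so your approach actually matches the theorem's literal wording (``converge to fixed uncertainty scores'') more closely than the paper's own lemmas do; the paper's lemmas establish local insensitivity, which is what \cref{fig:know-your-limits-figure} and \cref{sec:synthetic-experiments} visualize. Two small imprecisions to flag: your citation of the intermediate step as being ``exactly \cref{aggregation-theorem}'' is a misattribution, since that lemma is also a gradient-norm statement, not the value-convergence claim you actually prove; and the parenthetical ``uniform on the fastest-growing classes otherwise'' is wrong in general---with tied leading growth rates $v_{kd}$ the limiting categorical puts mass proportional to $\exp$ of the constant offsets $a^{(b)}_k$ and the contributions from the non-scaled coordinates, not uniformly. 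Neither affects the validity of the argument. Your closing remark that the guarantee is direction-dependent is correct and worth keeping.
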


\begin{proof}

The four parts of the theorem are proven separately by \cref{lemma:max-prob,lemma:asymptotic-softmax-variance,app:asymptotic-softmax-variance,lemma:asymptotic-predictive-entropy,lemma:asymptotic-mutual-information} in \cref{app:theoretical-appendix}.
\end{proof}

What follows from this result is that methods based on multiple instances of ReLU classifiers will suffer from the aforementioned problem as long as uncertainty\index{Uncertainty} is estimated with one of the techniques listed above. 
Next we demonstrate how these assumptions and results apply on synthetic data.

\subsection{Synthetic Data Experiments}\label{sec:synthetic-experiments}

To illustrate our findings, we plot the uncertainty surfaces and the gradient magnitudes of different models and uncertainty metric pairings on the half moons dataset, which we generate using the corresponding function in the \texttt{scikit-learn} package \citep{pedregosa2011scikit}. 
Detailed information about the procedure can be found in \cref{app:synthetic-data-experiments} along with additional plots.\\ 

\begin{figure}[htb]
    \captionsetup{margin=8pt}
    \begin{subfigure}[t]{0.48\textwidth}
        \centering
        \begin{tabular}{cc}
            \includegraphics[width=0.42\textwidth]{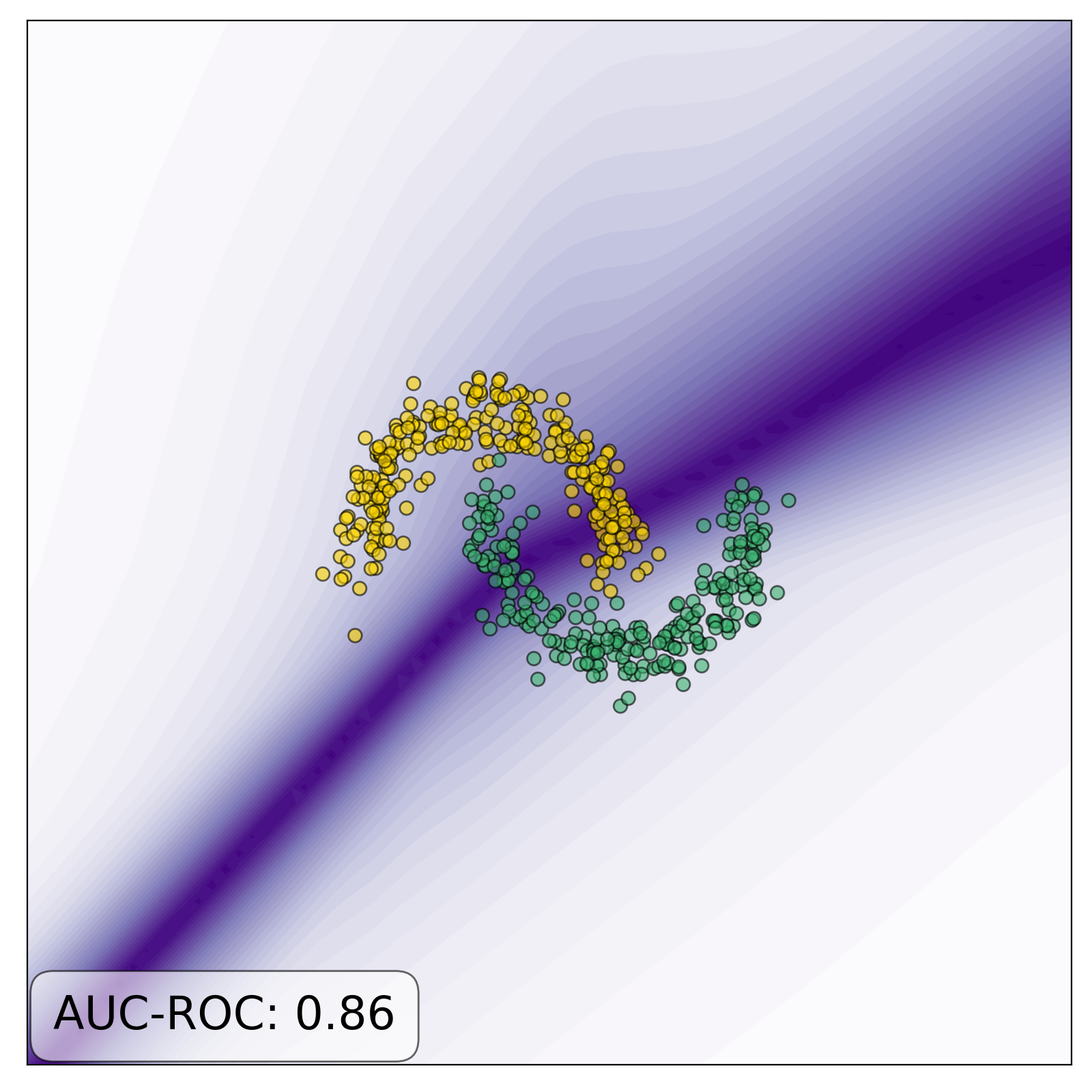} & \includegraphics[width=0.47\textwidth]{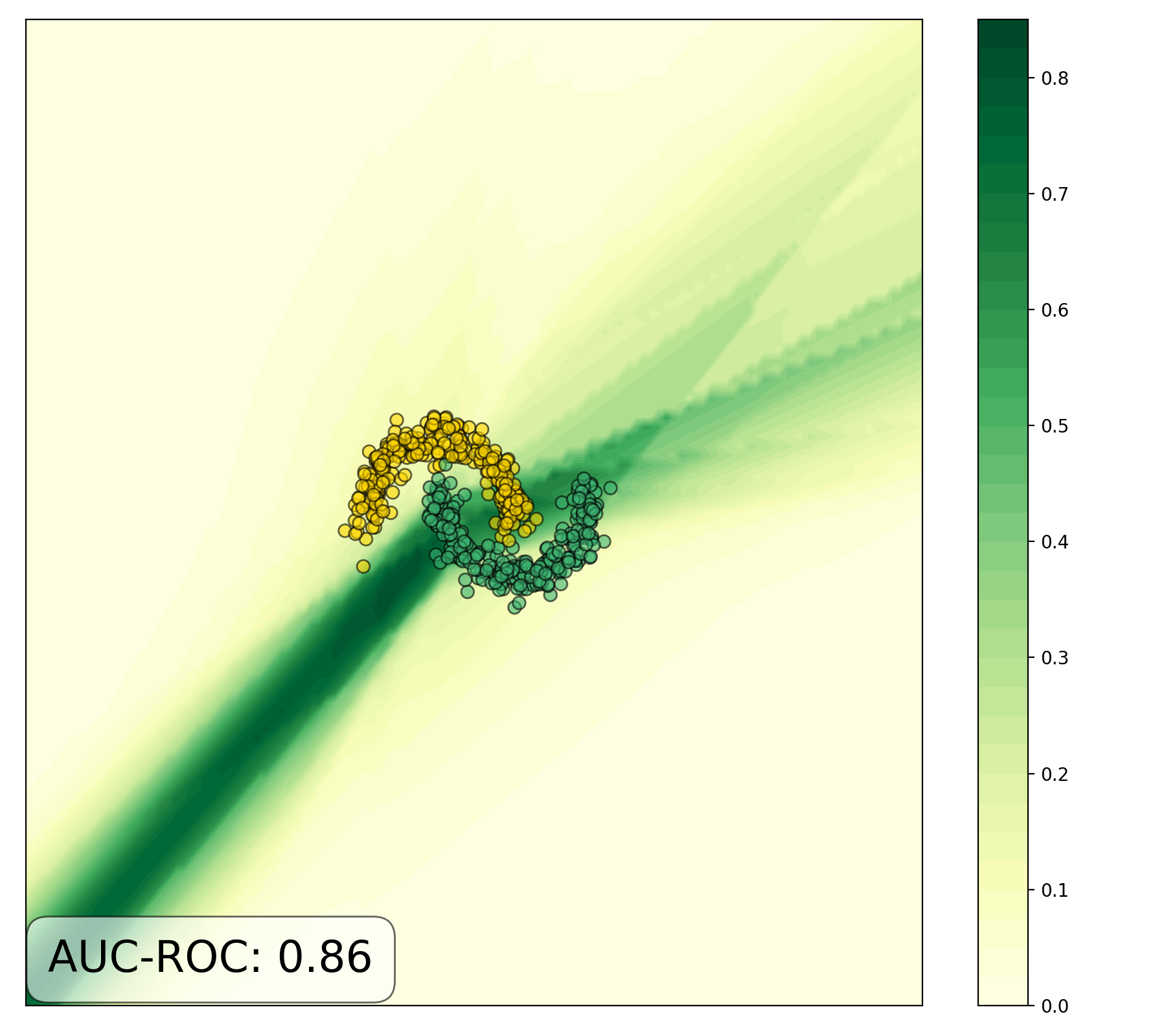} \\
        \end{tabular}
        \caption{Neural discriminator with maximum probability \citep{hendrycks2017baseline}.}\label{subfig:nn-maxprob}
    \end{subfigure}
    \hfill
    \begin{subfigure}[t]{0.48\textwidth}
        \centering
        \begin{tabular}{cc}
            \includegraphics[width=0.42\textwidth]{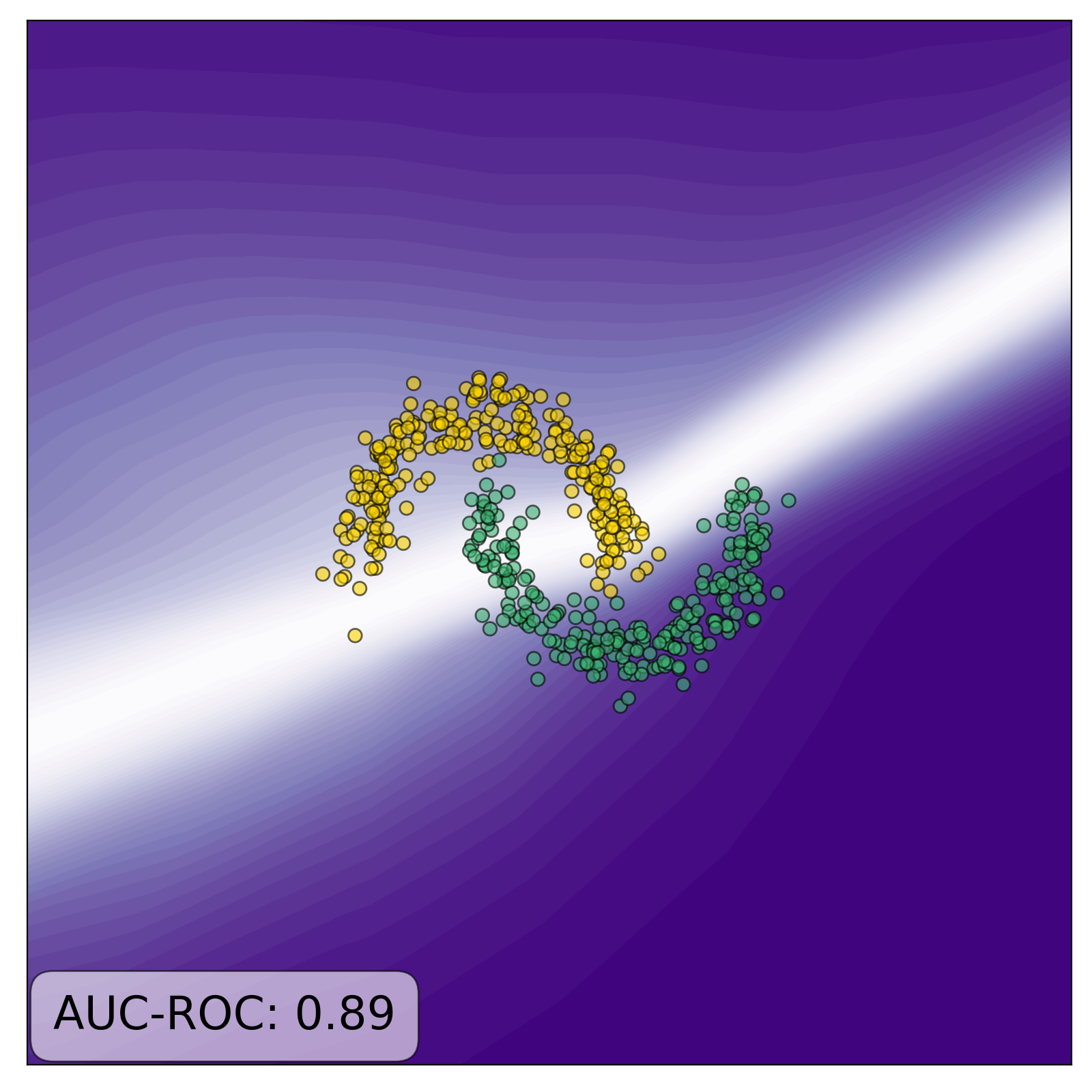} & \includegraphics[width=0.47\textwidth]{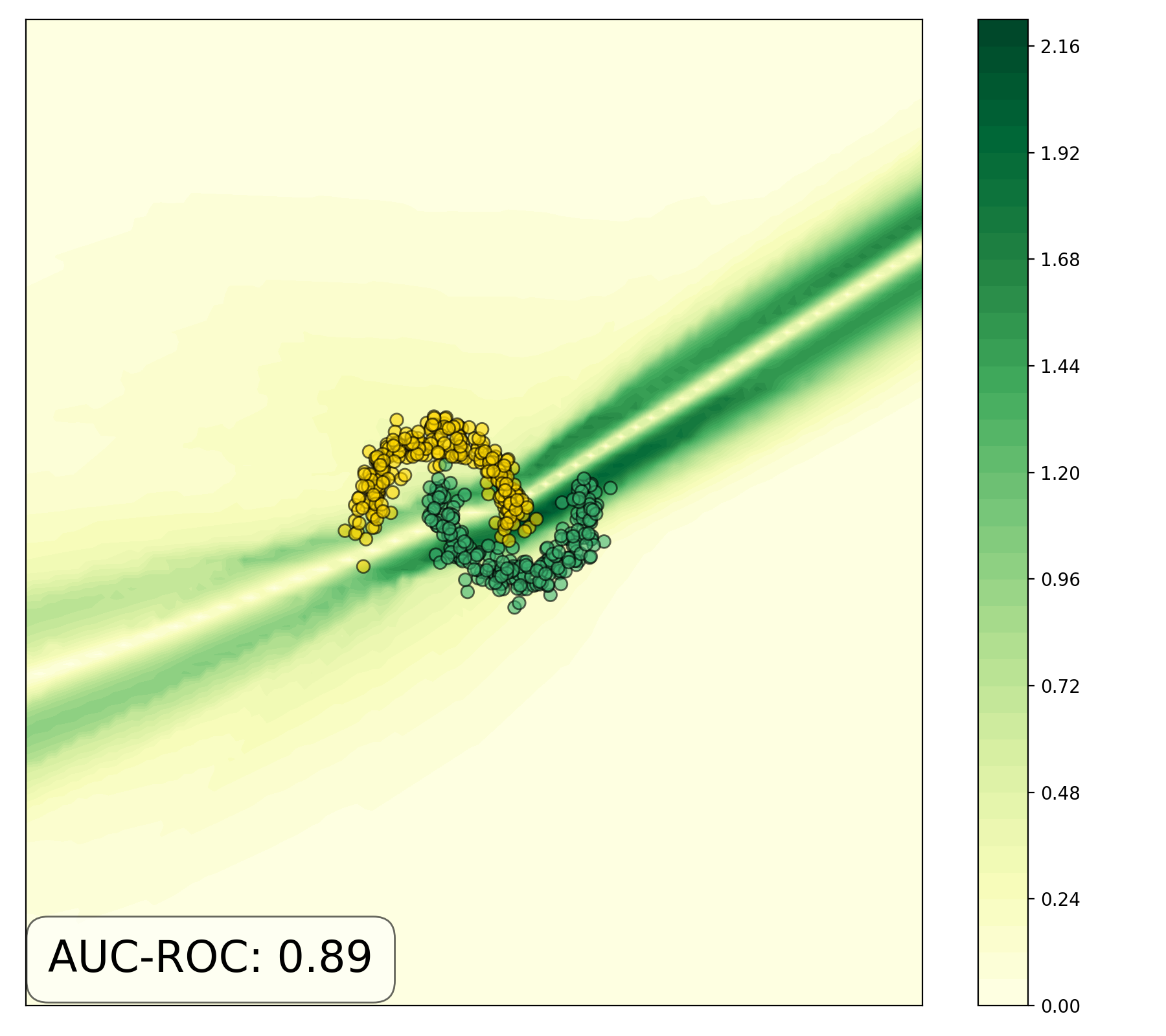} \\
        \end{tabular}
        \caption{MC Dropout \citep{gal2016dropout} with mutual information \citep{gal2018understanding}.}\label{subfig:nn-mcdropout}
    \end{subfigure}

    \begin{subfigure}[t]{0.48\textwidth}
        \centering
        \begin{tabular}{cc}
            \includegraphics[width=0.42\textwidth]{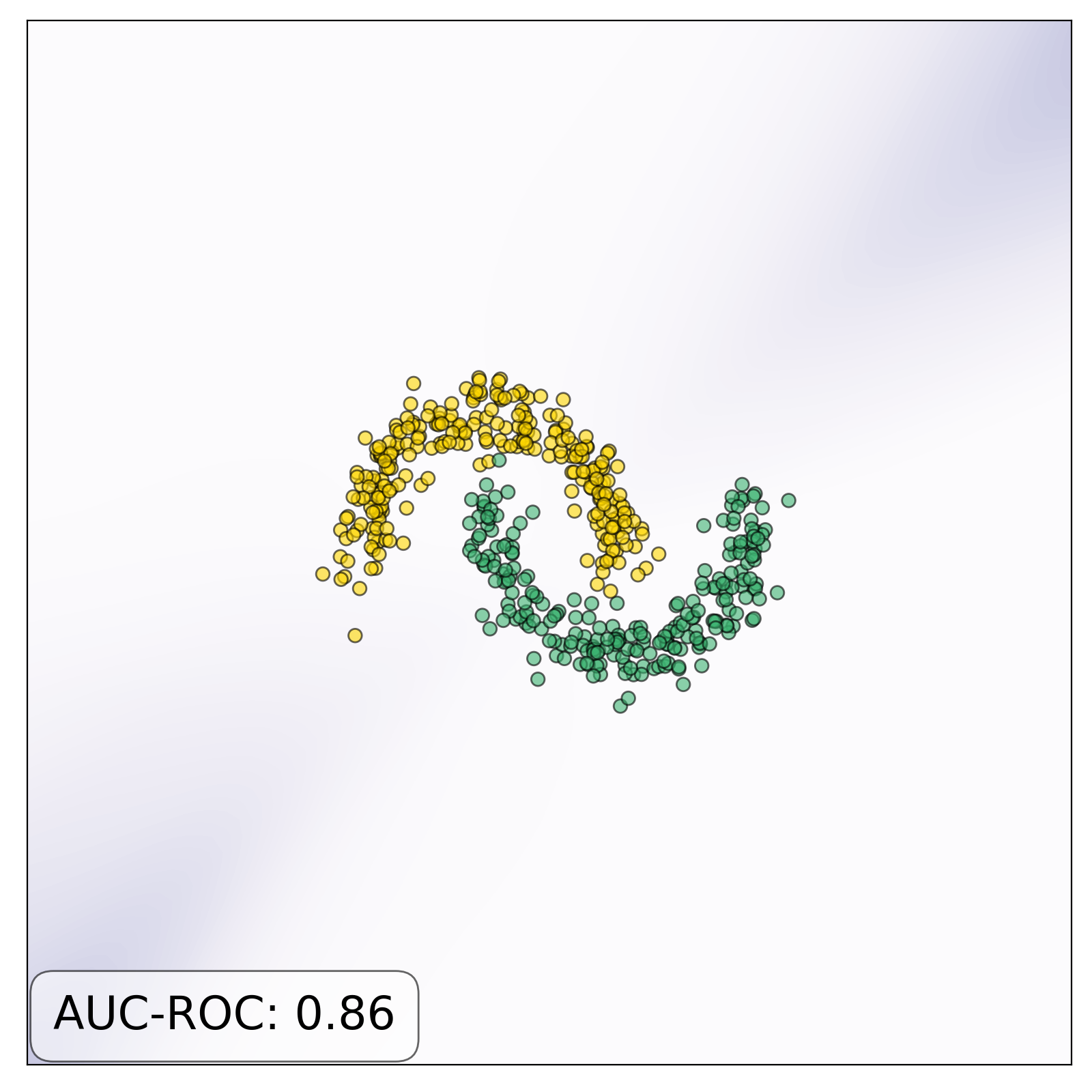} & \includegraphics[width=0.47\textwidth]{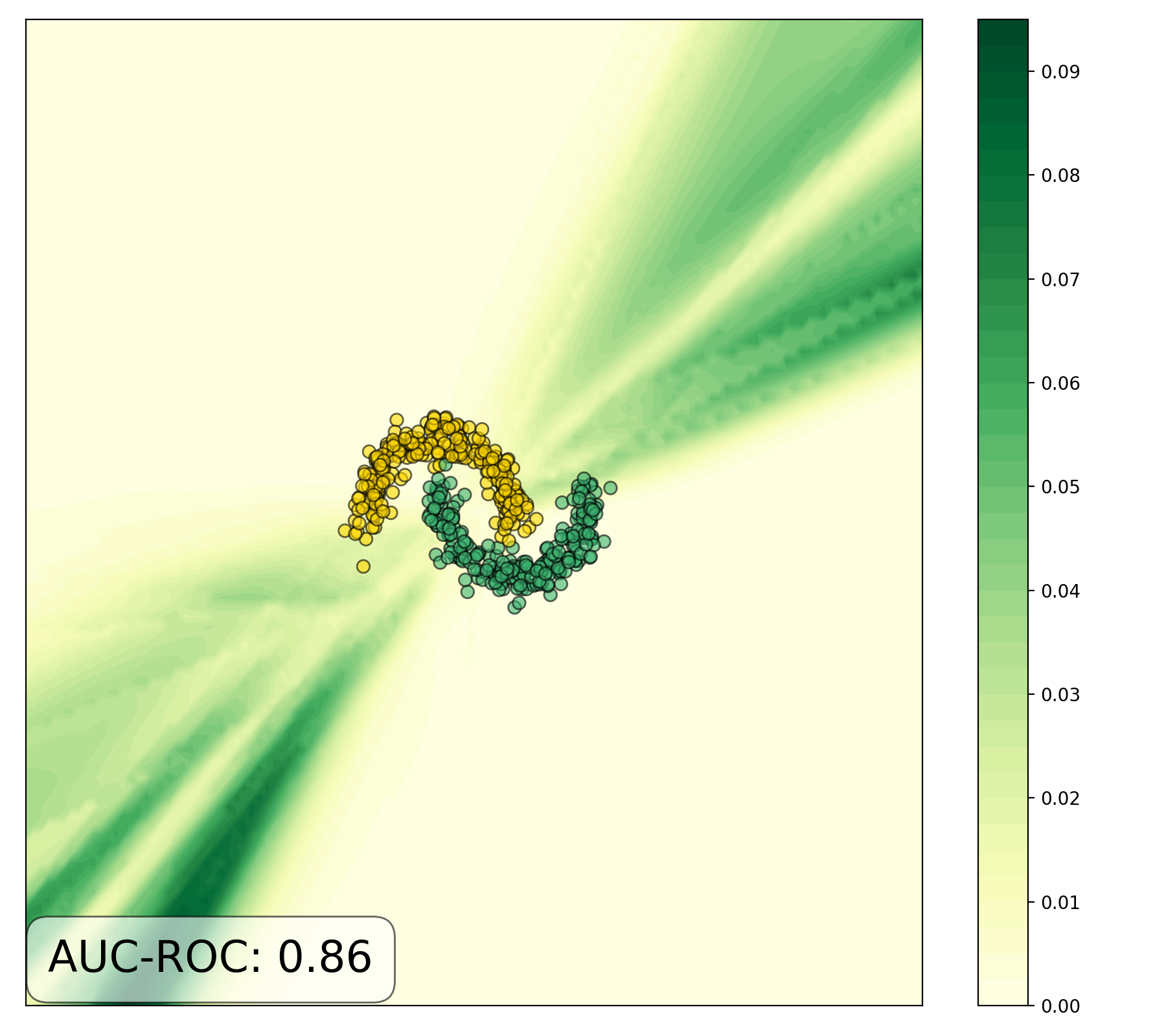} \\
        \end{tabular}
        \caption{Neural ensemble \citep{lakshminarayanan2017simple} with class variance.}\label{subfig:nn-ensemble}
    \end{subfigure}
    \hfill
    \begin{subfigure}[t]{0.48\textwidth}
        \centering
        \begin{tabular}{cc}
            \includegraphics[width=0.42\textwidth]{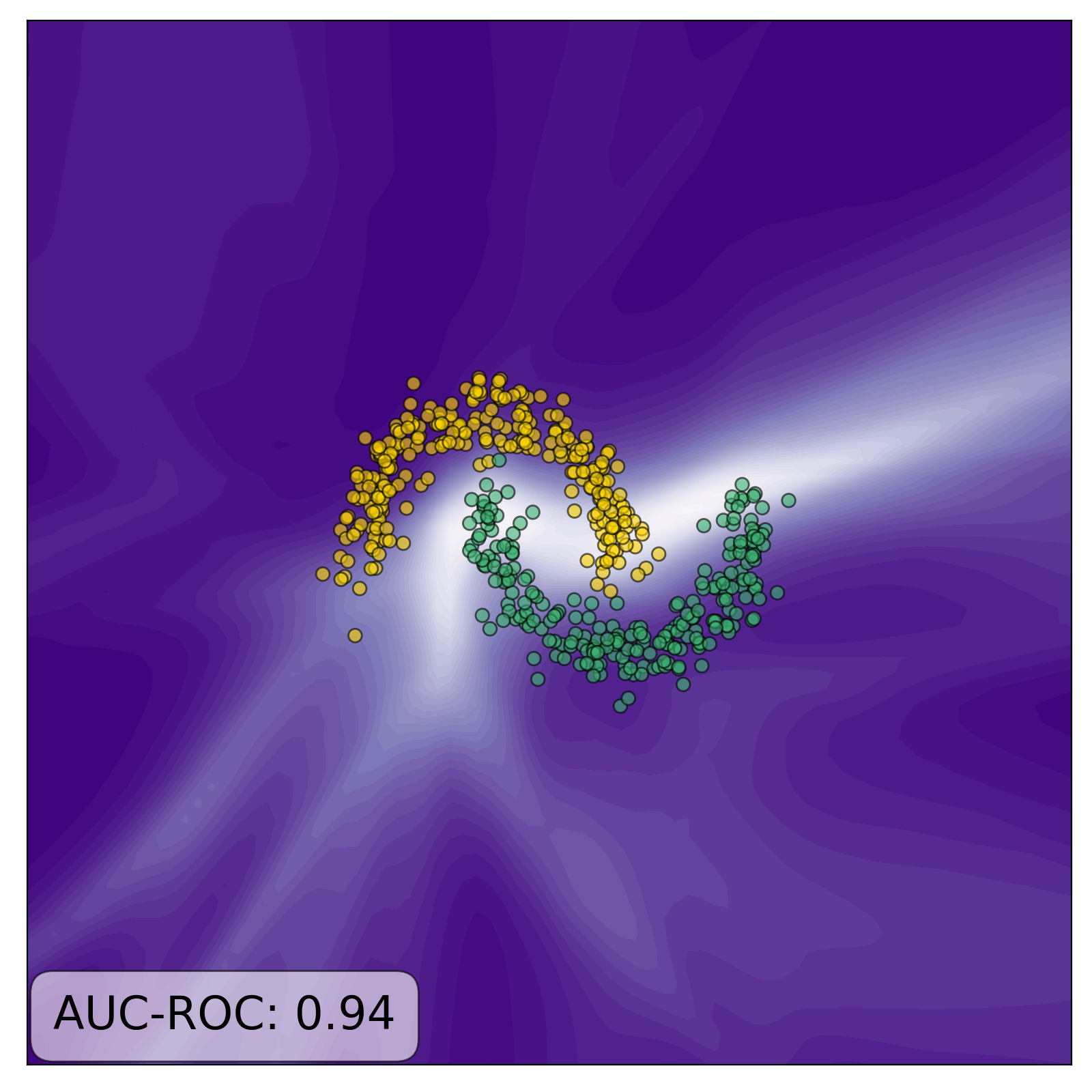} & \includegraphics[width=0.47\textwidth]{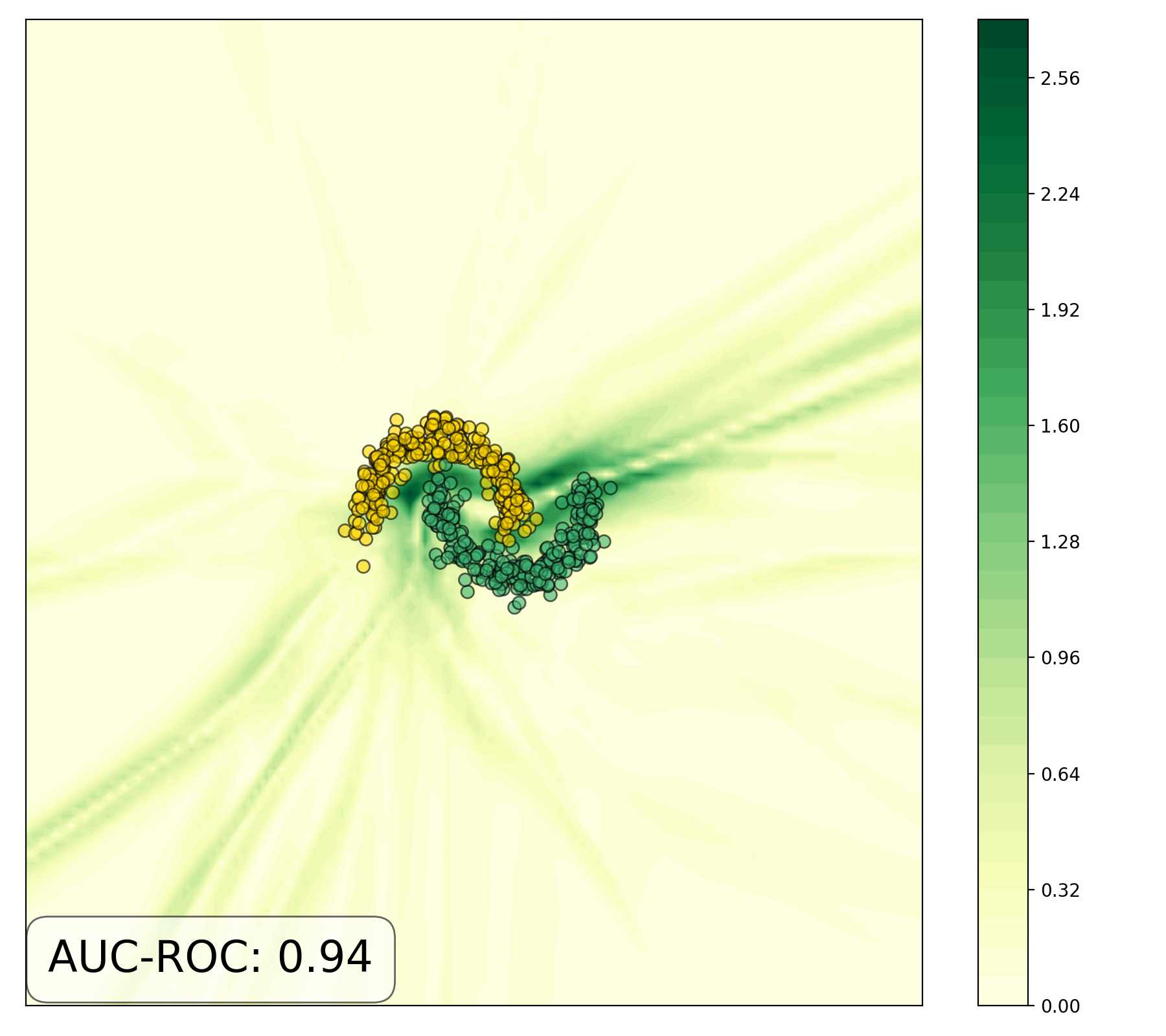} \\
        \end{tabular}
        \caption{Anchored ensemble \citep{pearce2020uncertainty} with mutual information \citep{gal2018understanding}.}\label{subfig:nn-anchored-ensemble}
    \end{subfigure}
    \caption[Uncertainty landscapes of ReLU classifiers trained on the half-moon dataset.]{Uncertainty on the half-moon dataset, including the binary classification AUROC. 
    (Left plots) The uncertainty surface is represented with increasingly darker shades of purple, with white being the lowest uncertainty. 
    Open-ended regions of static certainty appear across different models and metrics, and are extrapolated to unseen data (see \cref{subfig:nn-maxprob,subfig:nn-mcdropout,subfig:nn-ensemble,subfig:nn-anchored-ensemble}); 
    this phenomenon is less apparent in some instances (\cref{subfig:nn-anchored-ensemble}). 
    (Right plots) Increasing shades of green indicate the magnitude of the gradient of the uncertainty score w.r.t.\@ the input. 
    All metrics show open ended regions where the magnitude approaches zero.}
    \label{fig:entropy-plots}
\end{figure}

For a single network, we can observe in \cref{subfig:nn-maxprob} that there exist vast open-ended regions of stable confidence\index{Confidence}, confirming the findings of \cref{theorem:know-your-limits-main-theorem}. 
However, in the right part of \cref{subfig:nn-maxprob} we can observe green regions with high gradient magnitude which do not seem to comply with our findings. 
In this case, we can see that these regions follow the decision boundaries. 
Due to the exponential function in the softmax\index{Softmax function}, it is intuitive that small perturbation in these areas would have a large impact on the uncertainty score, resulting in a high gradient magnitude. 
But why does the magnitude not decrease in the limit as predicted by \cref{theorem:know-your-limits-main-theorem}? 
We formulated our scaling vector $\balpha$ in way that only allows scaling along one of the coordinate axes. 
Therefore, if the decision boundaries are not parallel to the axes, by scaling we eventually escape the green areas and arrive at an area with gradient of magnitude zero. 
If the green regions were parallel to the axes then this would result in a violation of our main assumption. 
Traversing the input space parallel to a decision boundary in direction $d$ will not influence the prediction within the polytope, meaning that there will be entries  $v_{cd}=0$.\footnote{
    A decision boundary in a polytope is not the only way in which this assumption can be broken, but it still appears to hold reasonably often. 
    For instance, just around $6.3 \%$ of plotted points in \cref{fig:know-your-limits-figure} possess a matrix $\bV$ with at least one zero entry---all located in the PUP in the top right corner.
}\\

Turning to predictions aggregated from multiple network instances in \cref{subfig:nn-mcdropout,subfig:nn-ensemble,subfig:nn-anchored-ensemble}, we again observe large regions of constant uncertainty. 
The high-confidence region in the plots using mutual information\index{Mutual information} (\cref{subfig:nn-anchored-ensemble}) displays a different behavior from the others. 
As this metric aims to isolate epistemic uncertainty\index{Uncertainty!Epistemic}, it makes sense that uncertainty would be lowest around the training data, i.e.\@ where the model is best specified. 
The character of the green regions in the bottom part of \cref{subfig:nn-ensemble} and \cref{subfig:nn-anchored-ensemble} can again be explained by decision boundaries: 
In these cases, we have multiple instances with parameters $\btheta^{(k)}$, all with their own polytopal structure. 
When they overlap, the regions of the feature space where the assumption of our theorem is violated can either grow (\cref{subfig:nn-ensemble}) or shrink (\cref{subfig:nn-anchored-ensemble}), depending on the diversity among instances. 
The fact that the anchored ensemble\index{Ensembling} in \cref{subfig:nn-anchored-ensemble} does not exhibit such uniform regions of uncertainty like the vanilla ensemble could be explained by the fact that its training procedure encourages diversification between members. 
The difference between MC Dropout\index{Dropout!Monte Carlo} and ensemble models can be elucidated using recent insights that variational methods tend to only explore a single mode of the weight posterior $p(\btheta \mid \mathbb{D})$, while ensemble members often spread across multiple modes \citep{wilson2020bayesian}.\\

Overall, we have seen that our theorem can explain why an overgeneralization of uncertainty scores beyond the training data results in failure in OOD detection. 
We also explored the cases in which our assumptions are violated, i.e.\@ by multiple, diverse model instances. 
In such scenarios, identification of OOD samples could in theory succeed, but often fails to do so reliably, see e.g.\@ \citet{ovadia2019can, ulmer2020trust}. 
These insights can also help explain many other empirical findings in this regard on a variety of real-world datasets, e.g.\@ \citet{gal2018understanding,kompa2021empirical}.

\section{Uncertainty \& Calibration in Low-Resource NLP}\label{sec:benchmarking-nlp-uncertainty}

\begin{footnotesize}
    \vspace{-2.5ex}
    \emph{The following work is based on \citet{ulmer-etal-2022-exploring}}.\\
    \vspace{2.5ex}
\end{footnotesize}

The previous section looked at a somewhat simplified setting using ReLU networks.
In practice, most contemporary NLP architectures are based on much more complex architectures like the transformer\index{Transformer} \citep{vaswani2017attention}.
Theoretical arguments like \cref{theorem:know-your-limits-main-theorem} then become harder, since making monotonicity\index{Monotonicity} arguments with model components such as multi-head attention is not trivial.
Additionally, the proof strategy of scaling a single feature value into the limit is not applicable, because the input changes from single feature vectors to a series of subword token embeddings.
For this reason, we turn to an empirical approach instead.\\

While there exist many works on images \citep{lakshminarayanan2017simple, ovadia2019can} and tabular data \citep{ruhe2019bayesian, ulmer2020trust, malinin2021shifts}, the quality of uncertainty estimates provided by neural networks remains underexplored in NLP.\index{Natural language processing} 
In addition, as model underspecification\index{Underspecification!Model} due to insufficient data presents a risk \citep{d2020underspecification}, the increasing interest in less-researched languages with limited resources raises the question of how reliably uncertain predictions can be identified. 
This motivates the following research questions:
 
\begin{enumerate}
    \item What are the best approaches in terms of uncertainty quality and calibration?
    \item How are models impacted by the amount of available training data?
    \item What are differences in how the different approaches estimate uncertainty?
\end{enumerate}%

 \paragraph{Contributions.} 
 We address these questions by conducting a comprehensive empirical study of eight different models for uncertainty estimation for classification and evaluate their effectiveness on three languages spanning distinct NLP tasks, involving sequence labeling and classification. 
 We show that while approaches based on pre-trained models and ensembles achieve the best results overall, the quality of uncertainty estimates on OOD data can become worse using \emph{more} data.
 In an analysis on an instance-level, we also discover that a model's total uncertainty seems to mostly consist of its data uncertainty. 

\subsection{Methodology}\label{sec:exploring-methodology}

\paragraph{Models.} 
We choose a variety of models that cover a range of different approaches based on the two most prominently used architectures in NLP:\index{Natural language processing} 
Long-short term memory networks\index{Long-short term memory network} (LSTMs; \citealp{hochreiter1997long}) and transformers \citep{vaswani2017attention}. 
Inside the first family, we use the variational LSTM \citep{gal2016theoretically} based on MC dropout\index{Dropout!Monte Carlo} \citep{gal2016dropout}, the Bayesian LSTM\index{Long-short term memory network!Bayesian} \citep{fortunato2017bayesian} implementing Bayes-by-backprop\index{Bayes-by-backprop} \citep{blundell2015weight} and the ST-$\tau$ LSTM\index{Long-short term memory network!ST-$\tau$}  \citep{wang2021uncertainty}, modeling transitions in a finite-state automaton, as well as an ensemble\index{Ensembling} \citep{lakshminarayanan2017simple}. 
In the second family, we count the variational transformer\index{Transformer!Variational} \citep{xiao2020wat}, also using MC dropout, the SNGP transformer\index{Gaussian process}\index{Transformer!SNGP} \citep{liu2022simple}, using a Gaussian Process output layer, and the deep deterministic uncertainty transformer (DDU; \citealp{mukhoti2021deterministic}), fitting a Gaussian mixture model on extracted features. 
We elaborate on implementation details in \cref{app:exploring-predictive-uncertainty-implementation-details}.

\paragraph{Uncertainty Metrics.} \index{Uncertainty metric}
We test the same metrics as introduced in \cref{sec:know-your-limits-preliminaries}, but add a few additional ones.
One of them is the softmax\index{Softmax function} gap \citep{tagasovska2019single}, i.e.\@ the difference between the two largest probabilities of the classifier's output distribution. 
As another metric, we consider the Dempster-Shafer metric\index{Dempster-Shafer metric} \citep{sensoy2019evidential}, defined as $ K / (K + \sum_{k=1}^K \exp(z_k))$, where $z_k$ denotes the logit corresponding to class $k$. 
Since this metric considers logits, it might be able to avoid the saturation on OOD shown by \citet{hein2019relu} or in \cref{sec:overconfidence-metrics}.
While all metrics so far can be mixed and matched with all the tested models, there are also a few model-specific metrics.
For instance, the DDU transformer\index{Transformer!DDU} by \citet{mukhoti2021deterministic} uses the log-probability of the last layer network activation under a Gaussian mixture model fitted on the training set as an additional metric. 
Since all others models are trained or fine-tuned as classifiers, they cannot assign log-probabilities to sequences. 
Lastly, since some tasks require predictions for every time step of a sequence, we determine the uncertainty of a whole sequence in these cases by taking the mean over all step-wise uncertainties.\footnote{
    We also just considered the \emph{maximum} uncertainty over a sequence, with similar results.
} 
A more principled approach for sequences is for instance provided by \citet{malinin2021uncertainty} in the context of NLG, and we leave the extension and exploration of such methods for different uncertainty metrics, models and tasks to future work.

\subsection{Dataset Selection \& Creation}\label{sec:exploring-data-creation}

\begin{table}[ht!]
    \centering 
    \resizebox{0.995\textwidth}{!}{
        \renewcommand{\arraystretch}{1.5}
        \begin{tabular}{@{}llllrr@{}}
            \toprule
            \makecell[bl]{Lang.} & \makecell[bl]{Task} & \makecell[bl]{Dataset} & \makecell[bl]{OOD Test Set} & \makecell[br]{\# ID / OOD} & \makecell[br]{Training Sizes} \\
            \midrule
            EN & \makecell[tl]{Intent\\ Classification} & \makecell[tl]{Clinc Plus\\\citep{larson2019evaluation}} & \makecell[tl]{Out-of-scope\\ voice commands} & \makecell[tr]{15k/1k} & \makecell[tr]{15k/12.5k/10k}\\
            DA & \makecell[tl]{Named Entity\\ Recognition} & \makecell[tl]{Dan+ News\\\citep{plank2020dan+}} & Tweets &  \makecell[tr]{4382/109} &  \makecell[tr]{4k/2k/1k}\\
            FI & PoS Tagging & \makecell[tl]{Finnish UD Treebank \\(\citealp{haverinen2013tdt};\\\citealp{pyysalo2015udfinnish};\\\citealp{kanerva-2022-ood})} & \makecell[tl]{Hospital records,\\online forums,\\ tweets, poetry} & \makecell[tr]{12217/2122} &  \makecell[tr]{10k/7.5k/5k}\\
            \bottomrule
        \end{tabular}%
    }
    \caption[Datasets used for text classification experiments.]{Datasets used for our experiments. The original and sub-sampled number of sequences for experiments are given on the right.}\label{table:datasets}
\end{table}


\paragraph{In-Distribution Training Sets.} 
In our experiments, we test three different languages combined with one NLP task, each.
For the languages, we choose English (Clinc Plus; \citeauthor{larson2019evaluation}, \citeyear{larson2019evaluation}), Danish in the form of the Dan+ dataset \citep{plank2020dan+} based on news texts from PAROLE-DK \citep{bilgram1998construction}, Finnish (UD Treebank; \citeauthor{haverinen2013tdt}, \citeyear{haverinen2013tdt}; \citeauthor{pyysalo2015udfinnish}, \citeyear{pyysalo2015udfinnish}; \citeauthor{kanerva-2022-ood}, \citeyear{kanerva-2022-ood}).
These datasets correspond to the NLP tasks of sequence classification\index{Classification!Sequence}, named entity recognition\index{Named entity recognition} and part-of-speech tagging\index{Part-of-speech tagging}, respectively.
An overview over the datasets is given in \cref{table:datasets}, with the preprocessing detailed in \cref{app:predictive-uncertainty-pre-processing}.
We use low-resource languages\index{Low-resource language} in the case of Finnish and Danish, and simulate a low-resource setting using English data.\footnote{
    The definition of low-resource actually differs greatly between works. 
    One definition by \citet{bird2022local} advocates the usage for (would-be) standardized languages with a large amount of speakers and a written tradition, but a lack of resources for language technologies. 
    Another way is a task-dependent definition: 
    For dependency parsing, \citet{eberstein2021genre} define low-resource as providing less than $5000$ annotated sentences in the Universal Dependencies Treebank. 
    \citet{hedderich2021survey,lignos2022toward} lay out a task-dependent spectrum, from a several hundred to thousands of instances.
} 
Starting with a sufficiently-sized training set and then sub-sampling allows us to create training sets of arbitrary sizes. 
We employ a specific sampling scheme that tries to maintain the sequence length and class distribution of the original corpus, which we explain and verify in \cref{app:exploring-predictive-uncertainty-training-set}.\\

\paragraph{Out-Of-Distribution Test Sets.} 
We create OOD\index{Out-of-distribution data} test sets from data sources that are qualitatively different from the in-distribution training data: 
Out-of-scope voice commands by users in \citet{larson2019evaluation},\footnote{
    Since all instances in this test set correspond to out-of-scope inputs and not to classes the model was trained on, we cannot evaluate certain metrics in \cref{table:predictive-uncertainty-results}.
} the Twitter split of the Dan+ dataset \citep{plank2020dan+}, and the Finnish OOD treebank \citep{kanerva-2022-ood}. 
In similar works for the image domain, OOD test sets are often chosen to be convincingly different from the training distribution, for instance MNIST versus Fashion-MNIST \citep{nalisnick2019do,van2021feature}. 
While there exist a variety of taxonomies for distributional shifts\index{Shift!Distributional}\citep{moreno2012unifying,wald2021calibration,arora2021types,federici2021information,hupkes2022state}, it is often hard to determine if and what kind of shift is taking place.
\citet{winkens2020contrastive} define \emph{near OOD}\index{Out-of-distribution data!Near} as a scenario in which the training and outlier distribution are meaningfully related, and \emph{far OOD}\index{Out-of-distribution data!Far} as a case in which they are unrelated. 
Unfortunately, this distinction is somewhat arbitrary and hard to apply in a language context, where OOD \emph{could} bde defined as anything ranging from a different language or dialect to a different demographic of an author or speaker or a new genre.
Therefore, we use a similar methodology to the validation of the sub-sampled training sets to make an argument that the selected OOD splits are sufficiently different in nature from the training splits. 
The exact procedure along some more detailed results is described in \cref{app:predictive-uncertainty-ood-test-set}.
Mainly, we examine the distribution of sequence lengths and labels, and score the OOD test set using the perplexity of a language model training on the training split.

\subsection{Model Training}\label{sec:model-training}

\begin{figure}[htb]
    \centering 
    \includegraphics[width=0.8\textwidth]{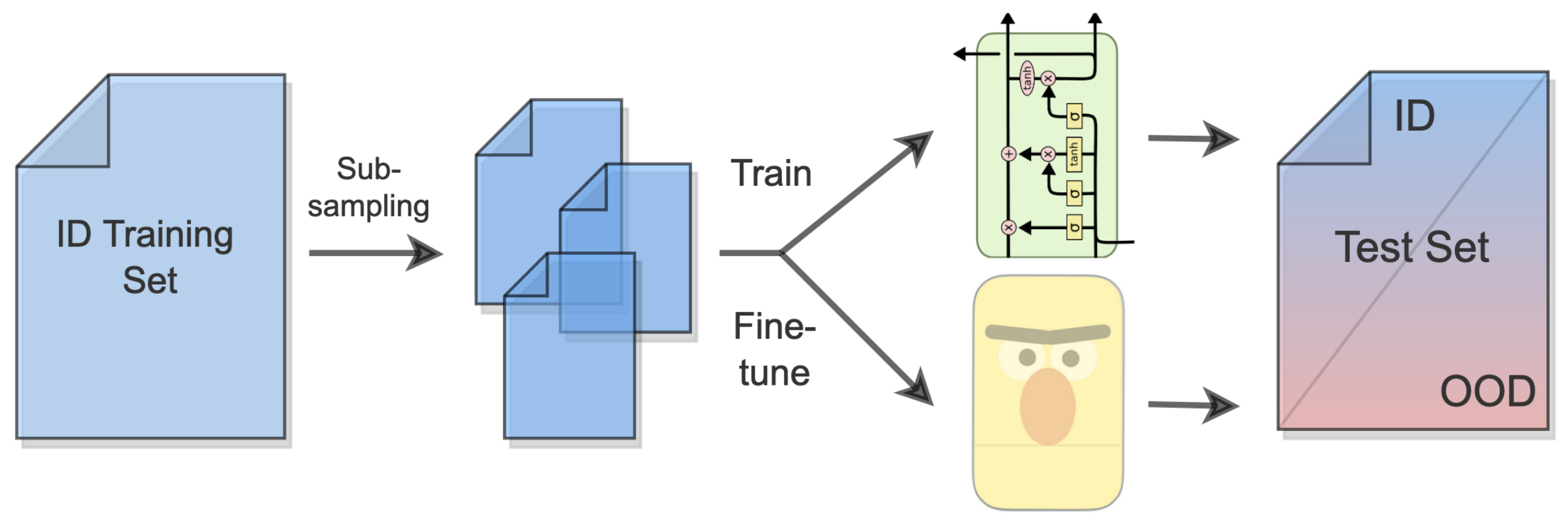}
    \caption[Schematic of text classification experiments.]{
        Schematic of our text classification experiments. 
        Training sets are sub-sampled and used to train LSTM-based models and fine-tune transformer-based ones, which are evaluated on in- and out-of-distribution test data.
        }\label{fig:training}
\end{figure}

Unfortunately, our datasets do not contain enough data to train transformer-based models from scratch\index{Transformer}. 
Therefore, we only fully train LSTM-based\index{Long-short term memory network} models, and use pre-trained transformers, namely Bert\index{Bert} (English; \citeauthor{devlin2019bert}, \citeyear{devlin2019bert}), Danish Bert \citep{hvingelby2020dane}, and FinBert (Finnish; \citealp{virtanen2019multilingual}), for the other approaches. 
The whole procedure is depicted in \cref{fig:training}. 
The way we optimize models as well as training hardware and hyperparameter information are listed in \cref{app:predictive-uncertainty-training-details}, with the environmental impact described in \cref{app:environmental-impact}.

\subsection{Evaluation}\label{sec:evaluation}

In addition to evaluating models on the task performance, we also evaluate the following calibration\index{Calibration} and uncertainty\index{Uncertainty}, painting a multi-faceted picture of the reliability of models. 
In all cases, we use the almost stochastic order test\index{Stochastic order!Almost} (ASO; \citealp{del2018optimal,dror2019deep}) as described in \cref{sec:aso} for significance testing.\index{Hypothesis testing}

\paragraph{Evaluation of Calibration.} 
First, we measure the calibration of models using the expected calibration error (ECE;\index{Expected calibration error} \citealp{naeini2015obtaining,guo2017calibration}), which we already discussed in \cref{sec:frequentist-neural-networks}.
In the same chapter, we introduced the frequentist measure of coverage\index{Coverage} \citep{larry2004all, kompa2021empirical}. 
Coverage here based on the (non-conformalized) prediction set of a classifier given an input, which includes the most likely classes adding up to or surpassing $1 - \alpha$ probability mass. 
A well-tuned classifier should contain the correct class in this prediction set\index{Prediction set}, while minimizing its width. 
The extent to which this property holds can be determined by the \emph{coverage percentage}, i.e.\@ the number of times the correct class in indeed contained in the prediction set, and its cardinality, denoted simply as \emph{width}. 

\paragraph{Evaluation of Uncertainty.} 
We compare uncertainty scores on the ID and OOD test set and measure the area under the receiver-operator curve (AUROC;\index{AUROC} evaluating the trade-off between sensitivity and specificity) and under the precision-recall curve (AUPR)\index{AUPR}, assuming that uncertainty will generally be higher on samples from the OOD test set.\footnote{
    We thus formulate a pseudo-binary classification\index{Classification!Binary} task as common in the literature, using the model's uncertainty score to try to distinguish the two test sets. 
    Note that we do not advocate for actually using uncertainty for OOD detection\index{OOD detection}, but only use it for evaluation purposes, since uncertainty on OOD examples should be high due to model uncertainty.
} 
An ideal model should create very different distributions of confidence scores on ID and OOD data, thus maximizing AUROC\index{AUROC} and AUPR\index{AUPR} (as opposed to the saturating confidence scores that we observed in \cref{sec:synthetic-experiments}).
However, we also want to find out to what extent uncertainty can give an indication of the correctness of the model, which is why we propose a new way to evaluate the \emph{discrimination} property proposed by \citet{alaa2020discriminative} based on \citet{leonard1992neural}: 
A good model should be less certain for inputs that incur a higher loss. 
To measure this both on a token and sequence level, we utilize Kendall's $\tau$\index{Kendall's $\tau$} \citep{kendall1938new}, which, given two lists of measurements, determines the degree to which they are \emph{concordant}---that is, to what extent the rankings of elements according to their measured values agree. 
This is expressed by a value between $-1$ and $1$, with the latter expressing complete concordance. 
In our case, these measurements correspond to the uncertainty estimate and the actual model loss, either for tokens (Token $\tau$) or sequences (Sequence $\tau$).

\subsection{Experiments}\label{sec:predictive-uncertainty-experiments}


\begin{table}[htb!!]
    \centering 
    \resizebox{0.99\textwidth}{!}{
        \renewcommand{\arraystretch}{2}
        \begin{tabular}{@{}ll@{\hspace{0.5cm}}cc@{\hspace{0.5cm}}ccc@{\hspace{0.5cm}}cccc@{}}
            \toprule
             & & \multicolumn{2}{c}{Task $\big( \text{ID} \big/ \text{OOD}\big)$} &  \multicolumn{3}{c}{Calibration $\big( \text{ID} \big/ \text{OOD}\big)$} & \multicolumn{4}{c}{Uncertainty $\big( \text{ID} \big/ \text{OOD}\big)$} \\
             \cmidrule(lr){3-4} \cmidrule(lr){5-7} \cmidrule(lr){8-11}
            & Model & Acc.$\uparrow$ & $F_1\uparrow$ & ECE$\downarrow$ & \%\@ Cov.$\uparrow$ &  $\varnothing$Width$\downarrow$ & \footnotesize AUROC$\uparrow$ &  \footnotesize AUPR$\uparrow$ & Token $\tau\uparrow$ & Seq. $\tau\uparrow$ \\
            \midrule
             & LSTM & $\wsrs{.79}{.00}$ & $\wsrs{.62}{.01}$ & $\wsrs{.78}{.00}$ & $\wsrs{\cmbold{1.00}}{.00}$ & $\wsrs{144.00}{.00}$ & $\srs{.88^{\text{\tiny \Plus}}}{.01}$ & $\srs{.60^{\text{\tiny \Plus}}}{.01}$ & \backslashbox[15mm]{}{} & $\wsrs{.75^{\bigcirc}}{.01}$  \\
             & \makecell[cl]{Bayesian\\LSTM} & $\wsrs{.59}{.06}$ & $\wsrs{.46}{.05}$ & $\wsrs{.78}{.00}$ &  $\wsrs{.88}{.00}$ & $\srs{41.99}{1.94}$ & $\srs{.86^{\bigtriangleup}}{.01}$ & $\srs{.59^{\text{\tiny \XSolidBold}}}{.01}$ & \backslashbox[15mm]{}{} & $\wsrs{.66^{\bigcirc}}{.02}$  \\
             & \makecell[cl]{LSTM\\Ensemble} & $\wsrs{\cmbold{.81}}{.00}$ & $\wsrs{\cmbold{.64}}{.00}$ & $\wsrs{0.77}{.00}$ &  $\wsrs{.87}{.00}$ & $\wsrs{4.27}{.05}$ & $\srs{\cmbold{.92^{\text{\tiny \Plus}}}}{.00}$ & $\srs{\cmbold{.71^{\text{\tiny \Plus}}}}{.01}$ & \backslashbox[15mm]{}{} & $\wsrs{.73^{\Box}}{.01}$  \\
             & \makecell[cl]{Var.\\Bert} & $\wsrs{.45}{.16}$ & $\wsrs{.34}{.13}$ & $\wsrs{.78}{.00}$ &  $\wsrs{1.00}{.00}$ & $\wsrs{115.11}{11.38}$ & $\srs{.80^{\text{\tiny \XSolidBold}}}{.01}$ & $\srs{.53^{\text{\tiny \XSolidBold}}}{.01}$ & \backslashbox[15mm]{}{} & $\wsrs{.57^{\bigcirc}}{.09}$ \\ 
            \multirow{-5}{*}{\rotatebox{90}{\textbf{English}}}  & \makecell[cl]{DDU\\Bert} & $\wsrs{.79}{.00}$ & $\wsrs{.64}{.01}$ & $\wsrs{\cmbold{.77}}{.00}$ & $\wsrs{.82}{.00}$ & $\wsrs{\cmbold{1.46}}{.04}$ & $\srs{.88^{\bigcirc}}{.00}$ & $\srs{.62^{\bigcirc}}{.01}$ & \backslashbox[15mm]{}{} & $\wsrs{\cmbold{.87^{\bigcirc}}}{.00}$  \\[0.2cm]
            \cdashline{1-11} 
             & LSTM & $\rs{.93}{.00}{.92}{.00}$ & $\rs{.26}{.01}{.19}{.01}$ & $\rs{.17}{.00}{.17}{.00}$ & $\rs{\cmbold{1.00}}{.00}{\cmbold{1.00}}{.00}$ & $\rs{19.00}{.00}{19.00}{.00}$ & $\srs{.50^{\bigcirc}}{.02}$ & $\srs{.14^{\bigcirc}}{.01}$ & $\rs{.50^{\bigcirc}}{.01}{.47^{\bigcirc}}{.00}$ & $\rs{-.26^{\text{\tiny \Plus}}}{.02}{-.28^{\bigcirc}}{.05}$  \\
             & \makecell[cl]{Var.\\LSTM} & $\rs{.90}{.02}{.90}{.02}$ & $\rs{.08}{.02}{.09}{.02}$ & $\rs{.17}{.00}{.17}{.00}$ & $\rs{.99}{.01}{.98}{.01}$ & $\rs{6.62}{.37}{6.68}{.33}$ & $\srs{.60^{\text{\tiny \Plus}}}{.04}$ & $\srs{.21^{\text{\tiny \Plus}}}{.02}$ & $\rs{.23^{\bigcirc}}{.06}{.23^{\bigcirc}}{.05}$ & $\rs{-.04^{\text{\tiny \XSolidBold}}}{.02}{-.02^{\Box}}{.05}$  \\
             & \makecell[cl]{ST-$\tau$\\LSTM} & $\rs{.92}{.00}{.92}{.00}$ & $\rs{.12}{.00}{.09}{.00}$ & $\rs{.17}{.00}{.17}{.00}$ & $\rs{1.00}{.00}{.99}{.00}$ & $\rs{7.10}{.07}{7.03}{.08}$ & $\srs{.54^{\text{\tiny \Plus}}}{.01}$ & $\srs{.15^{\text{\tiny \Plus}}}{.01}$ & $\rs{.50^{\bigcirc}}{.00}{.48^{\bigcirc}}{.00}$ & $\rs{-.05^{\Box}}{.03}{-.01^{\Box}}{.05}$  \\
             & \makecell[cl]{Bayesian\\LSTM} & $\rs{.93}{.00}{.93}{.00}$ & $\rs{.07}{.00}{.07}{.00}$ & $\rs{.17}{.00}{.17}{.00}$ & $\rs{1.00}{.00}{1.00}{.00}$ & $\rs{1.68}{.04}{1.70}{.05}$ & $\srs{.65^{\pentagon}}{.17}$ & $\srs{.31^{\pentagon}}{.30}$ & $\rs{.53^{\bigcirc}}{.01}{\cmbold{.55^{\bigcirc}}}{.01}$ & $\rs{-.01^{\Box}}{.07}{-.02^{\text{\tiny \Plus}}}{.04}$  \\
             & \makecell[cl]{LSTM\\Ensemble} & $\rs{\cmbold{.95}}{.00}{\cmbold{.94}}{.00}$ & $\rs{\cmbold{.33}}{.01}{\cmbold{.25}}{.01}$ & $\rs{0.16}{.00}{\cmbold{0.16}}{.00}$ &  $\rs{.98}{.00}{.97}{.00}$ & $\rs{\cmbold{1.62}}{.00}{\cmbold{1.58}}{.01}$ & $\srs{.60^{\Box}}{.02}$ & $\srs{.18^{\Box}}{.01}$ & $\rs{.44^{\Box}}{.00}{.45^{\Box}}{.00}$ & $\rs{-.19^{\text{\tiny \Plus}}}{.01}{-.28^{\Box}}{.01}$  \\
             & \makecell[cl]{SNGP\\Bert} & $\rs{.22}{.35}{.19}{.34}$ & $\rs{.03}{.03}{.02}{.02}$ & $\rs{.17}{.00}{0.17}{.00}$ &  $\rs{1.00}{.00}{1.00}{.00}$ & $\rs{18.84}{.32}{18.83}{.34}$ & $\srs{.86^{\bigtriangleup}}{.06}$ & $\srs{.49^{\bigtriangleup}}{.12}$ & $\rs{.17^{\Box}}{.09}{.26^{\Box}}{.14}$ & $\rs{\cmbold{.29^{\text{\tiny \XSolidBold}}}}{.03}{\cmbold{.44^{\Box}}}{.11}$  \\
             & \makecell[cl]{Var.\\Bert}  & $\rs{.94}{.00}{.89}{.00}$ & $\rs{.29}{.01}{.17}{.00}$  & $\rs{\cmbold{0.16}}{.00}{0.16}{.00}$ &  $\rs{.99}{.00}{.98}{.00}$ & $\rs{2.25}{.01}{3.86}{.08}$ & $\srs{.86^{\text{\tiny \Plus}}}{.01}$ & $\srs{.46^{\text{\tiny \Plus}}}{.02}$ & $\rs{.42^{\bigcirc}}{.00}{.17^{\pentagon}}{.00}$ & $\rs{-.35^{\Box}}{.01}{-.41^{\Box}}{.01}$  \\
            \multirow{-8}{*}{\rotatebox{90}{\textbf{Danish}}}  & \makecell[cl]{DDU\\Bert} & $\rs{.92}{.00}{.89}{.00}$ & $\rs{.25}{.00}{.17}{.00}$ & $\rs{0.16}{.00}{0.16}{.00}$ & $\rs{.99}{.00}{.99}{.00}$ & $\rs{3.48}{.01}{4.04}{.03}$ & $\srs{.86^{\bigcirc}}{.01}$ & $\srs{.39^{\bigcirc}}{.02}$ & $\rs{\cmbold{.56^{\bigcirc}}}{.00}{.25^{\bigcirc}}{.01}$ & $\rs{-.24^{\bigcirc}}{.01}{-.38^{\bigcirc}}{.03}$  \\[0.2cm]
            \cdashline{1-11}
             & LSTM & $\rs{.75}{.00}{.69}{.00}$ & $\rs{.57}{.00}{.53}{.00}$ & $\rs{.07}{.00}{.07}{.00}$ & $\rs{1.00}{.00}{1.00}{.00}$ & $\rs{16.00}{.00}{16.00}{.00}$ & $\srs{.63^{\bigtriangleup}}{.01}$ & $\srs{.69^{\text{\tiny \Plus}}}{.01}$ & $\rs{.29^{\bigcirc}}{.00}{.19^{\bigcirc}}{.01}$ & $\rs{-.28^{\text{\tiny \Plus}}}{.02}{-.27^{\text{\tiny \Plus}}}{.02}$  \\
             & \makecell[cl]{Var.\\LSTM} & $\rs{.27}{.00}{.26}{.00}$ & $\rs{.03}{.00}{.03}{.00}$ & $\rs{.07}{.00}{.07}{.00}$ & $\rs{.97}{.00}{.96}{.00}$ & $\rs{1.35}{.23}{1.37}{.21}$ & $\srs{.51^{\text{\tiny \Plus}}}{.01}$ & $\srs{.59^{\text{\tiny \Plus}}}{.01}$ & $\rs{.00^{\bigtriangleup}}{.01}{.00^{\pentagon}}{.00}$ & $\rs{.01^{\bigtriangleup}}{.03}{.01^{\Box}}{.01}$  \\
             & \makecell[cl]{ST-$\tau$\\LSTM} & $\rs{.76}{.00}{.71}{.00}$ & $\rs{.58}{.00}{.55}{.00}$ & $\rs{.06}{.00}{.06}{.00}$ &  $\rs{.97}{.00}{.96}{.00}$ & $\rs{3.32}{.01}{3.57}{.01}$ & $\srs{.62^{\bigtriangleup}}{.01}$ & $\srs{.69^{\text{\tiny \Plus}}}{.01}$ & $\rs{.31^{\bigcirc}}{.00}{.21^{\bigcirc}}{.01}$ & $\rs{-.14^{\text{\tiny \Plus}}}{.02}{-.12^{\Box}}{.04}$  \\
             & \makecell[cl]{Bayesian\\LSTM} & $\rs{.27}{.00}{.26}{.00}$ & $\rs{.03}{.00}{.03}{.00}$ & $\rs{.07}{.00}{.07}{.00}$ &  $\rs{1.00}{.00}{1.00}{.00}$ & $\rs{16.00}{.00}{16.00}{.00}$ & $\srs{.51^{\pentagon}}{.01}$ & $\srs{.60^{\text{\tiny \XSolidBold}}}{.00}$ & $\rs{.00^{\pentagon}}{.00}{.00^{\pentagon}}{.00}$ & $\rs{.01^{\bigcirc}}{.01}{.04^{\text{\tiny \Plus}}}{.00}$  \\
             & \makecell[cl]{LSTM\\Ensemble} & $\rs{.81}{.00}{.75}{.00}$ & $\rs{.62}{.00}{.57}{.00}$ & $\rs{.06}{.00}{.06}{.00}$ &  $\rs{.99}{.00}{.98}{.00}$ & $\rs{3.46}{.01}{3.80}{.01}$ & $\srs{\cmbold{.67^{\text{\tiny \Plus}}}}{.01}$ & $\srs{\cmbold{.74^{\text{\tiny \Plus}}}}{.01}$ & $\rs{.29^{\bigcirc}}{.00}{.19^{\bigcirc}}{.01}$ & $\rs{-.28^{\text{\tiny \Plus}}}{.01}{-.31^{\text{\tiny \Plus}}}{.01}$  \\
             & \makecell[cl]{Var.\\Bert} & $\rs{.87}{.00}{.81}{.00}$ & $\rs{.74}{.00}{.70}{.00}$ & $\rs{.06}{.00}{.06}{.00}$ & $\rs{.99}{.00}{.99}{.00}$ & $\rs{4.68}{.03}{5.19}{.02}$ & $\srs{.64^{\bigtriangleup}}{.01}$ & $\srs{.70^{\bigcirc}}{.01}$ & $\rs{.14^{\bigcirc}}{.00}{.08^{\text{\tiny \Plus}}}{.00}$ & $\rs{-.19^{\text{\tiny \XSolidBold}}}{.00}{-.16^{\text{\tiny \XSolidBold}}}{.01}$  \\
             & \makecell[cl]{SNGP\\Bert}  & $\rs{.18}{.10}{.17}{.10}$ & $\rs{.07}{.02}{.08}{.02}$ & $\rs{.07}{.00}{.07}{.00}$ &  $\rs{1.00}{.00}{.99}{.01}$ & $\rs{15.00}{.00}{15.00}{.00}$ & $\srs{.54^{\bigtriangleup}}{.05}$ & $\srs{.63^{\bigtriangleup}}{.04}$ & $\rs{.15^{\Box}}{.04}{.15^{\Box}}{.03}$ & $\rs{\cmbold{.12^{\Box}}}{.05}{\cmbold{.14^{\Box}}}{.02}$ \\
            \multirow{-8}{*}{\rotatebox{90}{\textbf{Finnish}}} & \makecell[cl]{DDU\\Bert} & $\rs{.87}{.00}{.81}{.00}$ & $\rs{.72}{.03}{.68}{.03}$ & $\rs{\cmbold{.06}}{.00}{\cmbold{.06}}{.00}$ & $\rs{.94}{.00}{.91}{.00}$ & $\rs{\cmbold{2.16}}{.06}{\cmbold{2.31}}{.06}$ & $\srs{.61^{\bigcirc}}{.02}$ & $\srs{.69^{\bigcirc}}{.02}$ & $\rs{\cmbold{.39^{\bigcirc}}}{.04}{\cmbold{.26^{\bigcirc}}}{.03}$ & $\rs{-.07^{\bigcirc}}{.05}{-.16^{\bigcirc}}{.04}$  \\
            \bottomrule
        \end{tabular}%
        }
     \caption[Results for text classification experiments.]{
    Results on the tested datasets.
     Task performance is measured by macro $F_1$ and accuracy, calibration by different calibration errors, the coverage percentage the average prediction set width. 
     For every result, and value on the ID and OOD test set is shown. 
     For English, OOD scores are not available since the OOD set does not contain gold labels, and Token $\tau$ is missing due to CLINC being a sequence classification task. 
     Uncertainty quality is evaluated using its ability to discriminate between ID and OOD data, quantified by AUROC and AUPR. 
     Furthermore, Kendall's $\tau$ is measured between the uncertainty and losses on a sequence- and token-level. 
     Displayed are mean and standard deviation over five random seeds, with bolding and underlining indicating almost stochastic dominance with $\varepsilon_\text{min} \le 0.3$ over all other models. 
     For last section, the best value over uncertainty metrics is given, with symbols indicating the type of metric achieving it: 
     ${\bigcirc}$ Max.\@ probability, ${\bigtriangleup}$ Predictive entropy. ${\Box}$ Class variance. ${\pentagon}$ Softmax gap. ${\text{\tiny \Plus}}$ Dempster-Shafer. ${\text{\tiny \XSolidBold}}$ Mutual information.}\label{table:predictive-uncertainty-results}
\end{table}

We present the results from our experiments using the largest training set sizes per dataset in \cref{table:predictive-uncertainty-results}.\footnote{
    For English, some models were omitted due to convergence issues, which are discussed in \cref{app:convergence-clinc-plus}.
} 

\paragraph{Task Performance.} 
Across datasets and models, we can identify several trends: 
some of the Bert-based models unsurprisingly perform better than LSTM-based\index{Long-short term memory network} models, which can be explained by the fact that they are pretrained on large datasets.
We observe worse performance for some LSTM and Bert-variants\index{Bert}, in particular the variational\index{Transformer!Variational}, Bayesian and ST-$\tau$ LSTM\index{Long-short term memory network!ST-$\tau$}, as well the SNGP Bert\index{Transformer!SNGP}. 
In accordance with the ML\index{Machine learning} literature (see e.g.\@ \citealp{lakshminarayanan2017simple, ovadia2019can}) and the discussions in \cref{sec:bayesian-neural-networks}, LSTM ensembles\index{Ensembling} actually perform very strongly and on par or sometimes better than fine-tuned Berts.

\paragraph{Calibration.} 
We also see that Bert\index{Bert} models generally achieve lower calibration\index{Calibration} errors across all metrics measured, which is in line with previous works \citep{desai2020calibration,dan2021effects}. 
It is interesting that the correct prediction is almost always contained in the $0.95$ confidence set\index{Prediction set} across all models, however these number have to be interpreted in the context of the set's width: 
It becomes apparent that for instance LSTMs achieve this coverage by spreading probability mass over many classes, while only Bert-based models, LSTM ensembles as well as the Bayesian LSTM\index{Long-short term memory network!Bayesian} (on Danish) and the Variational LSTM\index{Long-short term memory network!Variational} (on Finnish) are \emph{confidently} correct. 

\paragraph{Uncertainty Quality.} 
LSTM-based model seem to struggle to distinguish in- from out-of-distribution data based on predictive uncertainty. 
For Danish, only Berts\index{Bert} perform visibly above chance-level. 
For Finnish, the AUPR\index{AUPR} results suggest that although some OOD instances are quickly identified as uncertain, many other OOD inputs remain undetected among in-distribution samples. 
For English, OOD samples\index{OOD detection} are detected more effectively, which can be explained by them consisting of unknown voice commands, representing a potential instance of \emph{semantic} shift, which has been shown to be easier to detect by classifiers \citep{arora2021types}. 
Furthermore, it is striking that uncertainty and loss on a token-level (Token $\tau$) is only positive correlated for some models, using metrics such as the maximum probability score, softmax gap or the Dempster-Shafer metric, which are all entirely based on the categorical output distributions. 
On a sequence-level (Sequence $\tau$)\index{Kendall's $\tau$}, the correlation is often \emph{negative}, meaning that higher uncertainty goes hand in hand with a \emph{higher} loss.
This is the antithesis of the desired outcome and the opposite of the trend on the token-level, and suggests that few tokens-level scores distort the sequence-level aggregation of uncertainties.
Lastly, it should be noted that different uncertainty metrics yield diverse outcomes: 
There does not seem to be one superior metric across\index{Uncertainty metric} all experimental settings, as seen by the variety of markers shown in \cref{table:predictive-uncertainty-results}, which signify the best-performing uncertainty metrics per model and result.

\subsection{Dependence on Training Data}\label{sec:dependence-training-data}

\begin{figure}[htb]
    \centering
    \includegraphics[width=\textwidth]{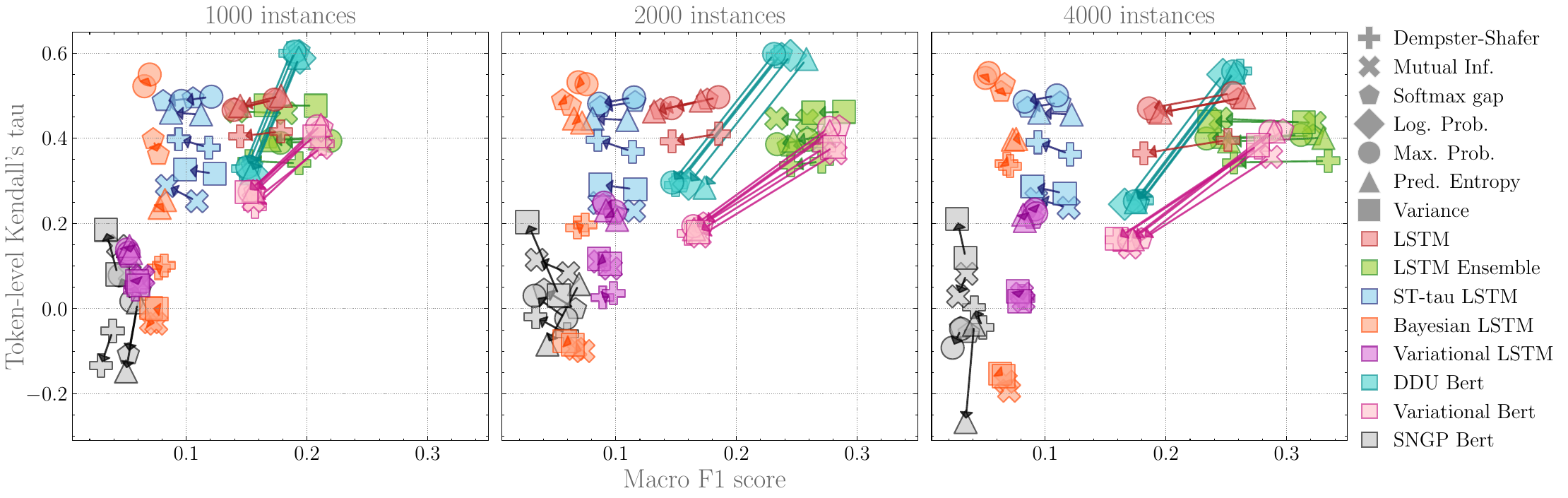}
    \caption[Scatter plots showing the connection between model performance and uncertainty quality on Dan+.]{
    Scatter plot showing the difference between model performance (measured by macro $F_1$  and the quality of uncertainty estimates on a token-level (measured by Kendall's $\tau$). 
    Shown are different models and uncertainty metrics and several training set sizes of the Dan+ dataset. 
    Arrows indicate changes between the in-distribution and out-of-distribution test set. 
    Best viewed electronically and in color.}\label{fig:scatter-plot-danplus-kendalls-tau-token}
\end{figure}

After presenting the best results for the biggest training set sizes in \cref{table:predictive-uncertainty-results}, we now continue to analyze the difference between models and metrics in a more fine-grained way. 
In \cref{fig:scatter-plot-danplus-kendalls-tau-token}, we show differences for the token-level correlation between a model's loss and its uncertainty measured by Kendall's $\tau$\index{Kendall's $\tau$}, with arrows indicating the shift from measurements on the in- to the out-of-distribution test set. 
Here, we see the same trend of more training data having a larger influence on Bert models\index{Bert}. 
Peculiarly, we also observe that the uncertainty of pre-trained models correlates less with their losses on the OOD data, while this property stays relative constant for LSTMs\index{Long-short term memory network}. 
We can recognize this trend also for the other datasets in \cref{fig:scatter-plot-danplus-kendalls-tau-token} and to a lesser degree on a sequence level \cref{subfig:clinc-plus-scatter-kendalls-tau-seq} in \cref{app:additional-scatters}, albeit with a \emph{negative} correlation in general in the latter case.
In \cref{fig:scatter-plot-auroc,fig:scatter-plot-aupr} in \cref{app:additional-scatters}, we show the AUROC and AUPR of different model-uncertainty metric combinations for all datasets and training set sizes. 
In both cases, we can notice that pre-trained models profit more from an increase in available training data than LSTM-based models that are trained from scratch. 
This improvement is observed both in task performance, as well as in the model's ability to discern ID from OOD data using its uncertainty, but more so for the Danish than English or Finnish. Like in the previous section, we often see that uncertainty metrics of the same model perform quite similarly. 
These results outline a seeming paradox: Pre-trained and then fine-tuned models (often) perform better on the task at hand, and provide better uncertainty estimates, but only on in-distribution data. 
Models trained from scratch that have seen less data overall, however provide more reliable uncertainty estimates on OOD data, but are also worse calibrated (\cref{sec:predictive-uncertainty-experiments}), with the exception of ensembles. 
This effect appears to largest on Danish, containing the least data. 

\subsection{Instance Analysis}\label{sec:qualitative-analysis}

\begin{figure}[htb]
    \centering
    \begin{subfigure}[t]{0.85\textwidth}
        \centering
        \includegraphics[width=\textwidth]{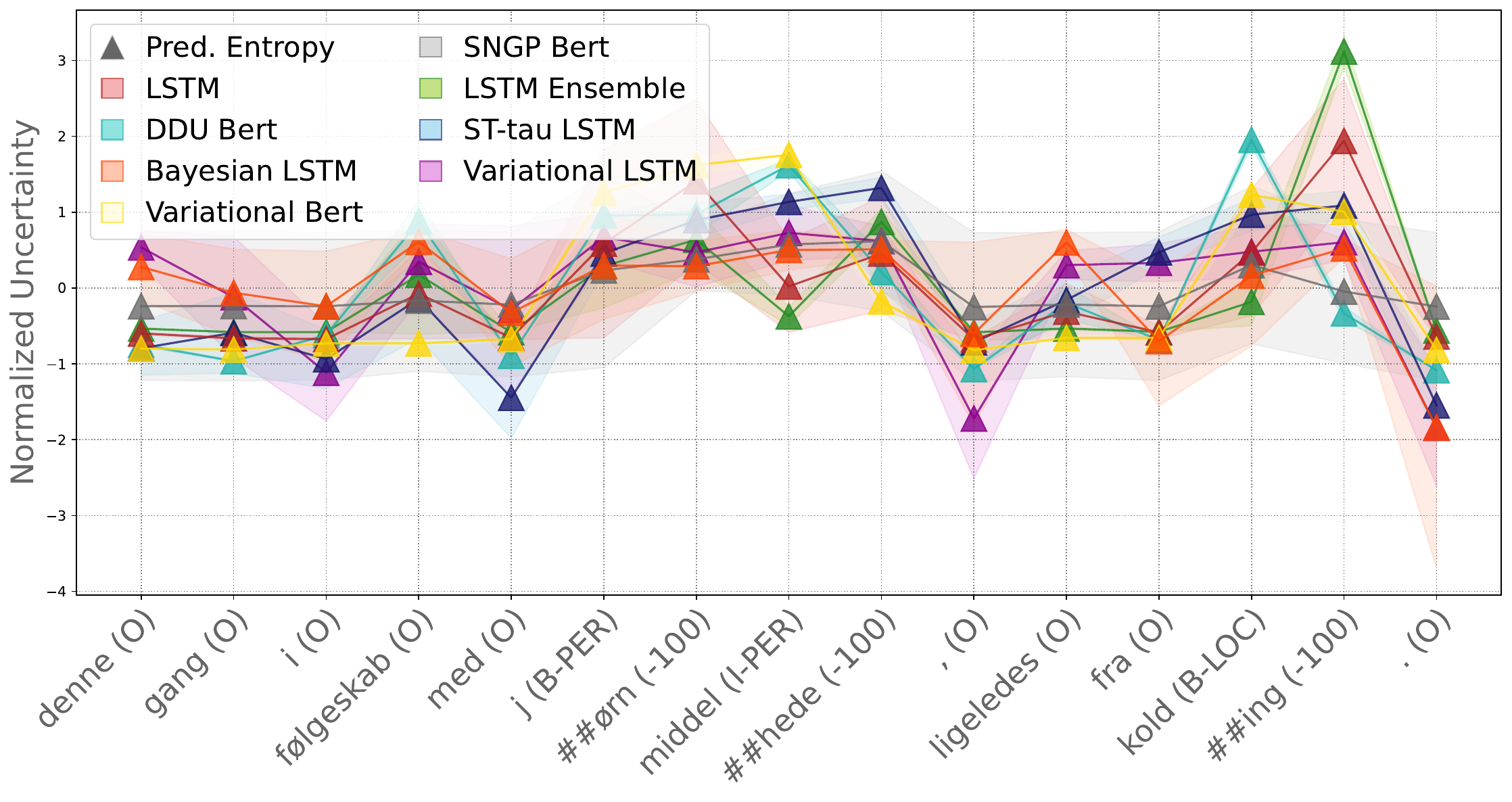}
        \subcaption{Predictive entropy over the sentence \emph{``This time in company with Jørn Middelhede, also from Kolding''}.}
    \end{subfigure}\label{subfig:qualitative-analysis-predictive-entropy-danplus}
    \vspace{0.5cm}
    \begin{subfigure}[t]{0.85\columnwidth}
        \centering
        \includegraphics[width=\textwidth]{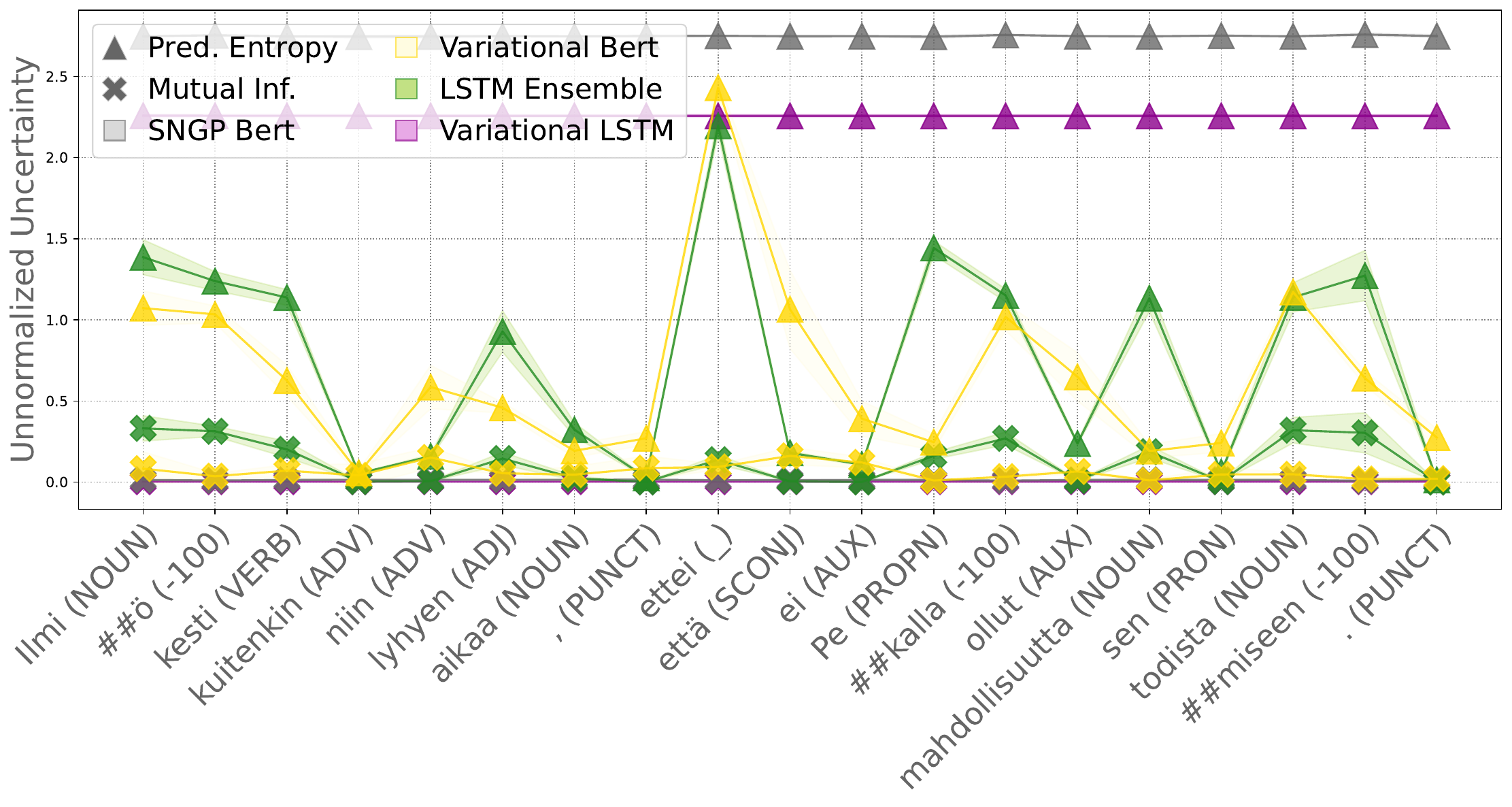}
        \subcaption{Predictive entropy and mutual information over the sentence \emph{``However, the phenomenon lasted for such a short time that Pekka did not have a chance to prove it''}.}\label{subfig:qualitative-analysis-mutual-information-finnish-ud}
    \end{subfigure}
    \caption[Predictive entropy estimates for a single sentence from Dan+ and mutual information for Finnish UD.]{Uncertainty estimates on single sequences, for (a) predictive entropy of different models on Danish and (b) predictive entropy and mutual information for multi-prediction models on Finnish (\cref{subfig:qualitative-analysis-mutual-information-finnish-ud}).}\label{fig:qualitative-analysis}
\end{figure}

We investigate the development of uncertainty estimates over the course of a single sequence for different datasets, models, and uncertainty metrics\index{Uncertainty metric}. 
Two examples are showcased in \cref{fig:qualitative-analysis}, with more examples in \cref{app:qualitative-analysis}. 
By looking at the predictive entropy of models in \cref{subfig:qualitative-analysis-predictive-entropy-danplus}, we can observe multiple things: 
First of all, we can observe some degree of agreement between models and their uncertainty: 
Uncertainty\index{Uncertainty} is higher for subword tokens, and the total uncertainty\index{Uncertainty!Total} always appears to reduce considerably on punctuation. 
Interestingly, the highest uncertainty seems to be produced by the DDU\index{Transformer!DDU} and variational Bert\index{Transformer!Variational} models as well as the ensembles\index{Ensembling}. 
In \cref{subfig:qualitative-analysis-mutual-information-finnish-ud}, we compare the estimates for predictive entropy\index{Entropy!Predictive} and mutual information\index{Mutual information}, the latter of which is supposed to only express model uncertainty\index{Uncertainty!Epistemic}. 
Here, uncertainty is generally low, indicating a large part of the total uncertainty might actually be of an aleatoric nature\index{Uncertainty!Aleatoric} (which is the gap between triangle and cross markers of the same color, due to \cref{eq:mutual-information}). 
These insights indicate that while aleatoric uncertainty might be a constant factor for all models, epistemic uncertainty expectedly differs noticeably between them. 
We use all of these insights to discuss the choice of model next.

\subsection{Discussion}\label{sec:discussion}

Our experiments in the previous sections have uncovered interesting nuances about uncertainty quantification\index{Uncertainty quantification} in text classification\index{Classification!Text}.
With respect to the first research question, we observed that fine-tuning Berts\index{Bert} and training LSTM\index{Long-short term memory network} ensembles\index{Ensembling} on different languages produces high task scores with low calibration errors and high-quality uncertainty estimates, but only on in-distribution data. 
On OOD data\index{Out-of-distribution data}, uncertainty estimates from fine-tuned models actually become less indicative of potential model loss compared to LSTM-based models. 
We also find that among the variety of uncertainty metrics proposed, there does not appear to be a superior metric, i.e.\@ most able to hint at mispredictions and OOD data.
Differences in Kendall's $\tau$\index{Kendall's $\tau$} on a token and sequence level suggest that loss and uncertainties fluctuate over the course of sequence.\\

Answering the second research question, more training data paradoxically decreases the quality of uncertainty estimates on OOD data for pre-trained models. 
We speculate that fine-tuning models increasingly lets them forget relevant features that would produce higher uncertainty. 
This might explain why for this effect is smaller for LSTM-type models, which are trained from scratch.\\

Lastly, we conclude about the third research question that all the total uncertainty of models behaves somewhat similarly, potentially due to the strong influence of aleatoric uncertainty\index{Uncertainty!Aleatoric}. 
From these insights, we summarize that the approaches using pre-trained models overall give the best trade-off between task performance, uncertainty quality and calibrations, however their failure on OOD samples opens up further directions of research.
Ensembles can provide an alternative here in data-scarce settings\index{Data scarcity}, when the task is sufficiently learnable without the need for pre-training.

\paragraph{Limitations.} 
Even though the experiments test a large array of models and metrics, the collection here shown is by no means exhaustive, and only a selection of popular models or approaches from very different families were considered. 
Another glaring shortcoming is the focus on only three European languages: 
By comparing members of the Uralic, North Germanic and West Germanic families, we only scratch the surface when it comes to the morphological diversity of human language, as for instance illustrated in \cref{subfig:wikipedia-articles}.
Further, we only focused on languages with a Latin writing systems, as well as specific text domains and tasks. 
This is due to resource constraints and the availability of suitable OOD test sets. 
We hope that follow-up works will refine our insights on a more representative sample of natural languages.

\section{Summary}

This chapter explored some perspectives on uncertainty quantification in classification\index{Classification}\index{Uncertainty quantification}.
\cref{sec:uq-classification-pitfalls} demonstrated how the inductive bias of ReLU\index{ReLU} networks produces uncertainty estimates that are not indicative of the familiarity of data to the model;
instead, they converge to fix points in the limit. 
We were able to prove this formally and get an intuition of potential pitfalls in practice in \cref{sec:synthetic-experiments}, however text classification\index{Classification!Text} models in NLP possess different and more complex architecture, for which similar arguments are not easily applicable.
Therefore, we followed up with an empirical investigation into many popular models and uncertainty metrics\index{Uncertainty metric} on three different languages and tasks in \cref{sec:benchmarking-nlp-uncertainty}.
This came with some surprising insights, for instance that uncertainty can be unreliable on OOD data, and that more training or finetuning data can lead to decreased uncertainty quality.\\

As we argued in the introduction in \cref{sec:context}, data scarcity and the complexity of language are two core features that differentiate uncertainty quantification in NLP from other modalities.
In this chapter we discussed the arguably easier setting of text classification\index{Classification!Text}:
In text classification, we can treat predictions on a sequence-level as i.i.d., and the set of classes is usually much smaller than the number of tokens in a vocabulary.
This is why we now turn our attention to the more challenging problem of language generation\index{Natural language generation} in the next chapter.

\chapter{Uncertainty in Natural Language Generation}\ 

\label{ch:uncertainty-generation} 

\begin{tikzpicture}[remember picture,overlay]
    \node[anchor=north,inner sep=0pt] at (current page text area.north) {\includegraphics[width=\linewidth, clip=true, trim = 8cm 85cm 8cm 40cm]{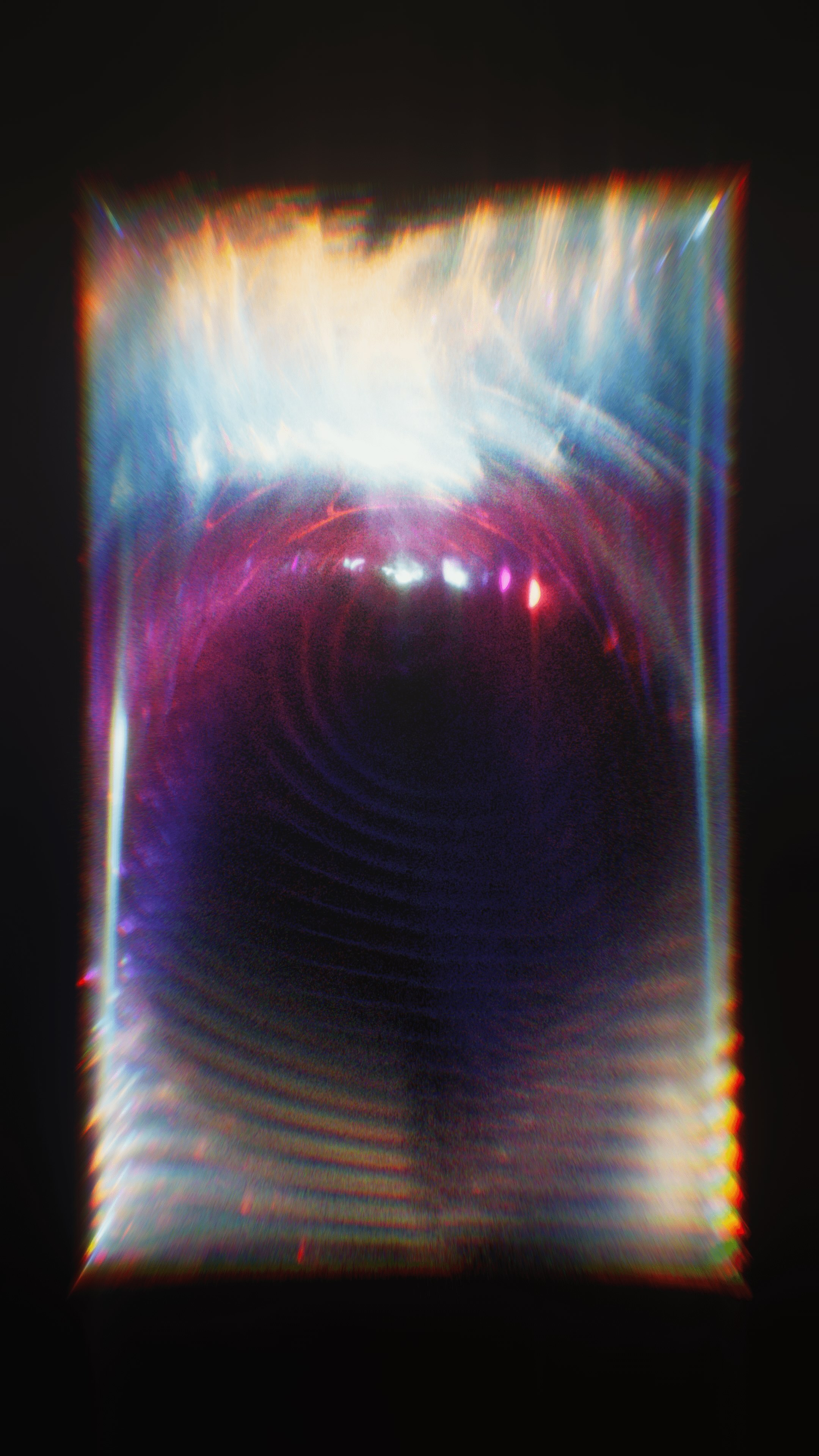}};
\end{tikzpicture}

\epigraph{``\emph{Obviously, a computer program that succeeded in generating sentences of a language would be, in itself, of no scientific interest unless it also shed some light on the kinds of structural features that distinguish languages from arbitrary, enumerable sets.}''}{---Noam Chomsky in \emph{Formal properties of grammars} (1963).}

Natural language generation (NLG)\index{Natural language generation} is a multi-faceted field spanning applications such as machine translation (MT)\index{Machine translation}, language modeling (LM)\index{Language modeling}, summarization\index{Text summarization}, question-answering\index{Question-answering} and dialogue generation. 
Owing to the recent success of large language models (LLMs)\index{Large language model} such as GPT-4 \citep{openai2023gpt4}, Bloom \citep{scao2022bloom} or Llama \citep{touvron2023llama}, natural language is increasingly used as an interface for end users to interact with models. 
In order to generate the tokens in a sentence, models typically predict a distribution over subword tokens at every step of the generation process.
Due to the paraphrastic nature of language discussed in \cref{sec:uncertainty-linguistics}, there is a large uncertainty about which token to select, since there might not be a single ``correct'' token.
Futhermore, just using the most likely token often results in text of low-quality \citep{holtzman2020curious, see2019massively, eikema2020map, zhang2020trading,  eikema2024effect}.
For this reason, this uncertain decision is often realized through specialized sampling procedures.
However, it has been shown that sampling from the tail of the token distribution also negatively impacts text quality, which is why token distributions are often truncated in practice \citep{holtzman2020curious, fan2018hierarchical, meister2023locally}.
While this kind of sampling allows for more fluent and varied text, there are no guarantees about the plausibility of the generated text. 
This is particularly relevant for generation scenarios where pre-trained models are applied to new data whose distribution is different from the training data, increasing the risk of generating erroneous, misleading, and potentially harmful text \citep{ji2023survey, guerreiro2023looking, pan2023risk, alkaissi2023artificial, azamfirei2023large}.
Therefore, this chapter introduces a way of creating calibrated prediction sets\index{Prediction set} to sample from for natural language generation\index{Natural language generation}, imbued with the guarantees of conformal prediction\index{Conformal prediction}.

\section{Conformalizing Natural Language Generation}

\begin{footnotesize}
    \vspace{-2.5ex}
    \emph{The following work is based on \citet{ulmer2024non}}.\\
    \vspace{2.5ex}
\end{footnotesize}

\begin{figure}[tb!]
    \centering 
    \includegraphics[width=0.85\textwidth]{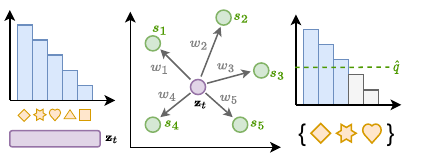}
    \caption[Schematic representation of non-exchangeable conformal language generation through nearest neighbors.]{
    Schematic representation of our approach. 
    A decoder hidden representation $\bz_t$ is used during inference to retrieve the nearest neighbors and their non-conformity scores $s_k$. 
    Their relevance is determined by using their distance to compute weights $w_k$, resulting in the quantile $\hat{q}$ that forms conformal prediction sets.
    }\label{fig:schematic}
\end{figure}

Conformal prediction\index{Conformal prediction} \citep{vovk2005algorithmic, papadopoulos2002inductive, angelopoulos2021gentle}, has recently gained popularity by providing calibrated prediction sets that are equipped with statistical guarantees about containing the correct solution (see for instance the introduction in \cref{sec:frequentist-neural-networks}). 
Nevertheless, applying conformal prediction to NLG\index{Natural language generation} is not trivial: 
The autoregressive generation process breaks the independence and identical distribution (i.i.d.)\@ assumption underlying conformal prediction techniques, since new predictions are conditioned on the sequence generated so far.
We tackle this problem by drawing inspiration from recent advances in nearest-neighbor language modeling\index{Language modeling!Nearest-neighbor} \citep{khandelwal2020generalization, he2023efficient, xu2023nearest} and machine translation \citep{khandelwal2020nearest, zheng2021adaptive, meng2022fast, henrique2022chunk}. 
This way, we can dynamically generate calibration sets during inference that maintain statistical guarantees. 
We schematically illustrate non-exchangeable conformal nucleus sampling\index{Nucleus sampling!Non-exchangeable conformal} in \cref{fig:schematic}:
In the first step, we obtain a (sorted) probability distribution over tokens and a latent representation $\bz_t$ for the current generation step from the model.
In a second step, we use the latent representation to query a datastore for similar, previously stored representations and their corresponding non-conformity scores\index{Non-conformity score}, $s_i$.
In the same way as in the standard conformal prediction algorithm, these non-conformity scores indicate how much a prediction conforms to the rest of the calibration set and its difficulty for the model.
These scores are then used to compute a threshold $\hat{q}$ based on the theory of non-exchangeable conformal prediction\index{Conformal prediction!Non-exchangeable} \citep{barber2023conformal}, which defines a smaller set of tokens that is sampled from.\footnote{
    For simplicity, the figure depicts the simplest form of prediction sets\index{Prediction set} used in conformal prediction. In practice, we use the adaptive prediction\index{Prediction set!Adaptive} sets explained in \cref{sec:nonex-nlg-method}.
}
The extension by \citeauthor{barber2023conformal} allows us to compensate a lack of i.i.d.\@ data by instead defining relevance weights between the test point and the calibration set.

\paragraph{Contributions.} 
We present a general-purpose extension of the conformal framework to NLG\index{Natural language generation} by tackling the problems above. 
Our contributions are as follows: 
First, to the best of our knowledge, we are the first to present a novel technique based on \textit{non-exchangeable} conformal prediction\index{Conformal prediction!Non-exchangeable} and to apply it to language generation to produce calibrated prediction sets using a theoretically sound motivation. 
Secondly, we validate the effectiveness of the method in a language modeling\index{Language modeling} and machine translation\index{Machine translation} context, evaluating the coverage\index{Coverage} of the calibrated prediction sets and showing that our method is on par with or even outperforms other sampling-based techniques in terms of generation quality, all while maintaining tighter prediction sets and better coverage. 
Lastly, we demonstrate that these properties are also maintained under distributional shift\index{Shift!Distributional} induced by corrupting the model's latent representations. 

\section{Background}\label{sec:background}

We already discussed the basic formulation of conformal prediction\index{Conformal prediction} in \cref{sec:frequentist-neural-networks}:
We first define a non-conformity score\index{Non-conformity score} that provides an estimate of the distance of the test point to the rest of the data.
Then, we determine $\hat{q}$ as the $\big\lceil(N+1)(1 - \alpha)/N\big\rceil$-th quantile of the non-conformity scores on a held-out set.
Finally, we can create calibrated prediction sets of the form 

\begin{equation}\label{eq:conformal-prediction-sets}
    \mathcal{C}(\bx^\prime) = \big\{y\ \big|\ P_{\btheta}(y \mid \bx^\prime) \ge 1 - \hat{q}\big\}.
\end{equation}

Here, $\bx^\prime$ is a new test point for which we could like to construct a prediction set.
If a test point $\bx^\prime$ and the calibration set are i.i.d.\@, then this set fulfils the conformal guarantee 

\begin{equation}\label{eq:conformal-prediction-guarantees}
    p\big(y^\prime \in \mathcal{C}(\bx^\prime)\big) \ge 1 - \alpha.
\end{equation}

Nevertheless, this formulation is not directly applicable to NLG, as autoregressive generation violates the i.i.d.\@ assumption:
If we compare the token distributions at different time steps and different sequences, they will hardly be comparable.

\paragraph{Non-exchangeable Conformal Prediction.} 
\citet{barber2023conformal} address this shortcoming: 
When a test point and the calibration data are not i.i.d.,%
\footnote{
    In fact, the coverage guarantee in \cref{eq:conformal-prediction-guarantees} applies to the case where the data is \textit{exchangeable}, a weaker requirement than i.i.d. 
    Specifically, a series of random variables is exchangeable if their joint distribution is unaffected by a change of their order.
    The work by \citet{barber2023conformal} allows us to also forgo this requirement.
} %
the distributional drift\index{Shift!Distributional} causes any previously found $\hat{q}$ to be miscalibrated, so the intended coverage bound of $1 - \alpha$ can no longer be guaranteed. 
However, we can still perform conformal prediction by assigning a weight $w_i \in [0, 1]$ to every calibration data point, reflecting its relevance---i.e.\@ assigning lower weights to points far away from the test distribution. 
Then, by normalizing the weights with $\tilde{w}_i = w_i / (1 + \sum_{i=1}^N w_i)$, we define the quantile as

\begin{equation}\label{eq:non-exchangeable-quantile}
    \hat{q} = \inf \big\{q\ \big| \sum_{i=1}^N \tilde{w}_i \indicator{s_i\le q} \ge 1 - \alpha \big\}.
\end{equation}

The construction of the prediction sets then follows the same steps as before.
Most notably, the coverage guarantee\index{Coverage} in \cref{eq:conformal-prediction-guarantees} now changes to 

\begin{equation}\label{eq:non-exchangeable-conformal-prediction-guarantees}
    p\big(y^\prime \in \mathcal{C}(\bx^\prime)\big) \ge 1 - \alpha - \sum_{i=1}^N \tilde{w}_i\varepsilon_i,
\end{equation}

\noindent with an extra term including the \textit{total variation distance} ($d_\text{TV}$)\index{Total variation distance} between the distribution of a calibration and a test point, $\varepsilon_i = d_\text{TV}\big((\bx_i, y_i), (\bx^\prime, y^\prime)\big)$.%
\footnote{In this expression, $(\bx_i, y_i)$ and $(\bx^\prime, y^\prime)$ denote random variables and the total variation distance is between the two underlying distributions. See \citet{barber2023conformal} for details.} %
Unfortunately, this term is hard to estimate or bound, nevertheless, the selection of appropriate weights that captures the relevance of calibration points to the test set should moderate both the impact of the distant data points on the estimation of the prediction set and the impact of $d_{\mathrm{TV}}$ on the coverage bound. 
In other words, for large $d_{\mathrm{TV}}$ values we expect to have smaller weights, that allow us to achieve coverage close to the desired values.
We show in our experiments that the loss of coverage when using weights derived from the distance to nearest neighbor is limited, and revisit the practical implications in \cref{sec:discussion}.

\section{Method}\label{sec:nonex-nlg-method}

We now present a novel method to apply conformal prediction\index{Conformal prediction} in NLG\index{Natural language generation} by synthesizing the non-exchangeable approach of \citet{barber2023conformal} with $k$-NN search-augmented neural models \citep{khandelwal2020nearest, khandelwal2020generalization}.
In the latter case, the token distribution at the current generation step is interpolated with the predictive distributions of nearest neighbors in a datastore.\\

A related approach for conformal prediction for NLG by \citet{ravfogel2023conformal} calibrates prediction sets using the standard conformal procedure described in \cref{sec:background}.
In order to improve its effectiveness, the authors also determine multiple $\hat{q}$ values based on the entropy of the token distribution, grouping inputs into one of multiple bins.
However, this implies that we would use semantically unrelated (sub-)sequences to calibrate the model---in fact, we show experimentally that this approach generally obtains trivial coverage by producing extremely wide prediction sets. 
Instead, we propose to perform a \emph{dynamic} calibration step during model inference, only considering the most relevant data points from the calibration set. 
We do this in the following way: 
Given a dataset $\{(\bx^{(i)}, y^{(i)})\}$ of sequences $\bx^{(i)} = (\bx_1^{(i)}, \ldots, \bx_S^{(i)})$ and corresponding references consisting of gold tokens $y^{(i)} = (y_1^{(i)}, \ldots, y_T^{(i)})$, we extract the model's decoder activations $\bz_t^{(i)} \in \mathbb{R}^d$ and conformity scores $s_t^{(i)}$.\footnote{
    In this phase, we do not let the model generate freely, but feed it the gold prefix during the decoding process to make sure that conformity scores can be computed correctly.} 
We save those in an optimized datastore, allowing for fast and efficient nearest-neighbor search using the FAISS method by \citet{johnson2019billion} through techniques such as quantization and GPU acceleration.
In the inference phase, during every decoding step, we then use the decoder hidden state $\bz_t^\prime$ to query the data store for the $K$ nearest neighbors and their non-conformity scores\index{Non-conformity score} and record their distances. 
We use the squared $l_2$ distance to compute the weight $w_k$ as 

\begin{equation}\label{eq:weight-equation}
    w_k = \exp\big(-\big|\big|\bz_t - {\bz_k}\big|\big|^2_2\ /\ \tau\big),
\end{equation}

\noindent where $\tau$ corresponds to a temperature hyperparameter.\footnote{Using this formulation of the weights $w_k$ that depends on the data deviates from the assumptions of original proof, as discussed in \citet{barber2023conformal}, section 4.5.
Nevertheless, our results in \cref{sec:experiments} and those by  \citet{farinhas2024nonexchangeable} show that the obtained bound in \cref{eq:non-exchangeable-conformal-prediction-guarantees} still remains useful.}
This formulation is equivalent to a radial basis function kernel with scale parameter $\tau$.
Finally, we use the weights to compute the quantile $\hat{q}$ as in \cref{eq:non-exchangeable-quantile}.
The entire algorithm is given in \cref{alg:cap}.

\begin{algorithm}[htb]
    \caption{Non-exchangeable Conformal Language Generation with Nearest Neighbors}\label{alg:cap}
    \begin{algorithmic}
    \Require Sequence $\bx$, model $f_{\btheta}$, datastore $\text{DS}(\cdot)$ with model activations collected from held-out set, temperature $\tau$
    \State
    \While{generating}
        \LineComment{1. Extract latent encoding for current input}
        \State $\bz_t \gets f_{\btheta}(\bx_t; y_{<t})$\\
        \LineComment{2. Retrieve $K$ neighbors \& non-conformity scores}
        \State $\{(\bz_1, s_1), \ldots (\bz_K,  s_K)\} \gets \text{DS}
        (\bz_t)$\\
        \LineComment{3. Compute weights $w_k$ and normalize}
        \State $w_k \gets\ \exp(-||\bz_t - {\bz_k}||^2_2\ /\ \tau)$
        \State $\tilde{w}_k\ \leftarrow\ w_k / (1 + \sum_{k=1}^K w_k)$\\
        
        \LineComment{4. Find quantile $\hat{q}$}
        \State $\hat{q} \gets\ \inf \{q \mid \sum_{i=1}^N \tilde{w}_i \indicator{s_i\le q } \ge 1 - \alpha \}$\\
    
        \LineComment{5. Create prediction set}
        \State $\hat{c} \gets\ \sup \{c^\prime \mid \sum_{j=1}^{c^\prime} P_{\btheta}(y=\pi(j) \mid \bx_t, y_{<t}) < \hat{q}\} + 1$
        \State $\mathcal{C}(\bx_t) \gets\ \{\pi(1), \ldots, \pi(\hat{c})\}$\\
    
        \LineComment{6. Generate next token}
        \State $y_t\ \leftarrow\ \text{generate}(\mathcal{C}(\bx_t))$
        
    \EndWhile
    
    \end{algorithmic}
    \end{algorithm}

\paragraph{Adaptive Prediction Sets.} \index{Prediction set!Adaptive}
The efficacy of conformal prediction hinges on the choice of non-conformity score, with the simple non-conformity score\index{Non-conformity score} $s_i = 1 - P_{\btheta}(y_t \mid \bx, y_{<t})$ known to undercover hard and overcover easy subpopulations of the data \citep{angelopoulos2021gentle}.
Due to the diverse nature of language, we therefore opt for \emph{adaptive prediction sets} \citep{angelopoulos2021uncertainty, romano2020classification}. 
Adaptive prediction sets redefine the non-conformity score as the cumulative probability over classes (after sorting in descending order) necessary to reach the correct class.
Intuitively, this means that we include all classes whose cumulative probability does not surpass $\hat{q}$.
Compared to the simple conformity score, this produces wider predictions sets for hard inputs, encompassing more potentially plausible continuations in a language context.
More formally, let $\pi$ be a permutation function mapping all possible output tokens $[C]$ to the indices of a permuted version of the set, for which tokens are sorted in descending oder by their probability under the model. 
We define the non-conformity score as 

\begin{equation}\label{eq:adaptive-non-conformity-score}
        s_i = \sum_{j=1}^{\pi(y_t)} P_{\btheta}(\pi^{-1}(j) \mid \bx, y_{<t}).
\end{equation}

Since we only include the cumulative mass up until the gold label, the summation stops at $\pi(y)$. 
The prediction sets are then defined as 

\begin{align}\label{eq:adaptive-prediction-set}
    \mathcal{C}(\bx, y_{<t}) & = \Big\{\pi^{-1}(1), \ldots, \pi^{-1}(\hat{c})\Big\},
\end{align}
with $\hat{c} = \sup \{c^\prime \mid \sum_{j=1}^{c^\prime} P_{\btheta}(\pi^{-1}(j) \mid \bx, y_{<t}) < \hat{q} \} + 1$, where we add one extra class to avoid empty sets.

\section{Experiments}\label{sec:experiments}

In the following sections, we conduct experiments in both language modeling\index{Language modeling} and machine translation\index{Machine translation}.
For machine translation we opt for the 400 million and 1.2 billion parameter versions of the M2M100 model \citep{fan2021beyond} on the WMT-2022 shared task datasets for German to English and Japanese to English \citep{kocmi2022findings}.
For language modeling, we use the 350 million and 1.3 billion parameter versions of the OPT model \citep{zhang2022opt} and replicate the setup by \citet{ravfogel2023conformal}:
We calibrate our model on $10000$ sentences from a 2022 English Wikipedia dump \citep{wikidump} and test coverage and generation on $1000$ sentences from OpenWebText \citep{Gokaslan2019OpenWeb}.\footnote{Data obtained through the Hugging Face \texttt{datasets} package \citep{lhoest2022datasets}: \url{https://huggingface.co/datasets/wikipedia} and \url{https://huggingface.co/datasets/stas/openwebtext-10k}.}
All models are used in a zero-shot setup \emph{without extra training or finetuning}.
For the datastore, we use the implementation of the FAISS library \citep{johnson2019billion}, computing $2048$ clusters in total and probing $32$ clusters per query. 
We also summarize the environmental impact of our experiments in \cref{app:environmental-impact}.

\subsection{Evaluating Coverage}\label{sec:retrieval-quality}

First of all, we demonstrate that the retrieved information from the data store enables us to successfully obtain calibrated prediction sets. 
\emph{Coverage}\index{Coverage} is an important notion in conformal prediction\index{Conformal prediction}, referring to the correct label being included in a prediction set or interval. 
Since we can always achieve coverage trivially by choosing the largest possible prediction set, an ideal method strikes a balance between high coverage and small prediction sets.
While it is not possible to measure coverage in a free generation setting (see next section), we can assess whether the correct class is contained in the prediction set if we feed the actual reference tokens into the decoder and check whether we include the true continuation.\footnote{We emphasize that access to gold tokens is not required by our method and only done here to measure the actual coverage.}
For our MT task, this is reminiscent of an interactive translation prediction\index{Machine translation!Interactive} setup \citep{knowles2016neural, peris2017interactive, knowles2019user}, where we propose possible continuations to a translator, suggesting the next word from a set of words that (a) contains plausible options and (b) is limited in size, in order to restrict the complexity for the end user.
Before we run our experiments, we need to determine $\tau$, which we tune on the calibration set using a stochastic hill-climbing procedure described in \cref{app:temperature-search}.  
We compare our \emph{non-exchangeable conformal nucleus sampling} (\emph{Non-Ex. CS})\index{Nucleus sampling!Non-exchangeable conformal} with the following sampling methods:
Nucleus sampling (\emph{Nucleus}; \citealp{holtzman2020curious})\index{Nucleus sampling}, which includes all tokens up to a pre-defined cumulative probability mass, and the conformal nucleus sampling\index{Nucleus sampling!Conformal} (\emph{Conf.};~\citealp{ravfogel2023conformal}) discussed earlier. 
The latter bins predictions on a calibration set by the entropy of the output distribution, and compute one $\hat{q}$ per such entropy bin using the standard conformal procedure given in the beginning of \cref{sec:background}. 
\looseness=-1

\paragraph{Evaluation.} We measure the total coverage using different distance metrics, namely, squared $l_2$ distance, normalized inner product, and cosine similarity (see \cref{tab:coverage-results-mt,tab:coverage-results-lm}),\footnote{For inner product and cosine similarity, we follow the same form as \cref{eq:weight-equation}, omitting the minus. We normalize the inner product by the square root of the latent dimension.} as well as binning predictions by set size and then measuring the per-bin coverage in \cref{fig:conditional-coverage} (more results given in \cref{app:coverage-experiments}).
We also summarize the plots in \cref{fig:conditional-coverage} via the \emph{expected coverage gap}\index{Expected coverage gap} (ECG)\footnote{This is inspired by the expected calibration error \citep{guo2017calibration}, comparing coverage to $1 - \alpha$, where overcoverage is not penalized due to \cref{eq:conformal-prediction-guarantees}'s lower bound.} that we define as 

\begin{equation}\label{eq:ecg}
    \text{ECG} = \sum_{b=1}^B \frac{|\mathcal{B}_b|}{N} \max \Big( 1 - \alpha - \text{Coverage}\big(\mathcal{B}_b\big), 0 \Big),
\end{equation}
\noindent where $\mathcal{B}_b$ denotes a single bin and $N$ the total number of considered predictions in the dataset.\footnote{Since conformal prediction produces a \emph{lower} bound on the coverage, we do not include overcoverage in \cref{eq:ecg}.} 
The ECG thus captures the average weighted amount of undercoverage across bins.
In our experiments, we use $75$ bins in total. 
The same bins are used to also evaluate the \emph{size-stratified coverage metric} (SSC)\index{Size-stratified coverage} proposed by \citet{angelopoulos2021uncertainty}, with a well-calibrated method resulting in a SCC close to the desired coverage $1-\alpha$:

\begin{equation}
    \text{SCC} = \min_{b \in \{1,\ldots, B\}} \text{Coverage}\big(\mathcal{B}_b\big).
\end{equation}

\noindent We can therefore understand the SCC as the worst-case coverage across all considered bins.
We present some additional experiments where we assess the impact of key hyperparameters in \cref{app:ablations}.

\begin{table*}[htb!]
    \centering
    \resizebox{0.98\textwidth}{!}{
    \renewcommand{\arraystretch}{1.4}
    \begin{tabular}{@{}lllrccccrcccc@{}}
        \toprule[0.05cm]
         & & & \multicolumn{5}{c}{de $ \rightarrow $ en} & \multicolumn{5}{c}{ja $ \rightarrow $ en} \\
         \cmidrule(lr){4-8}\cmidrule(lr){9-13} 
         & Method & Dist. & $\tau$ & \% \textsc{Cov.} & $\varnothing$ \textsc{Width} $\downarrow$ & \textsc{Scc} $\uparrow$ & \textsc{Ecg} $\downarrow$ & $\tau$ & \% \textsc{Cov.} & $\varnothing$ \textsc{Width} $\downarrow$ & \textsc{Scc} $\uparrow$ & \textsc{Ecg} $\downarrow$ \\
        \midrule 
        \multirow{5}{*}{\rotatebox[origin=c]{90}{M2M100$_\text{(400M)}$}} & Nucleus & -- & -- & $.9207$ & $.48$ & $.25$ & $.00$ & -- & $.9261$ & $.54$ & $.41$ & $.02$ \\
        & Conf. & -- & -- &  $.9951$ & $.94$ & $.33$ & $.03$ & -- & $.9950$ & $.96$ & $.14$ & $.00$\\
        & \multirow[t]{3}{*}{\makecell[tl]{Non-Ex.\\CS}} & IP &  $3.93$ & $.8251$ & $.16$ & $.63$ & $.26$  & $11.90$ & $.8815$ & $.24$ & $.67$ & $.03$ \\
         & & $l_2$ & $512.14$ & $.8334$ & $.17$ & $.60$ & $.06$ & $419.91$ & $.8468$ & $.18$ & $.61$ & $.05$ \\
        & & cos & $2.54$ & $.8371$ & $.17$ & $.63$ & $.06$ & $3.53$ & $.8540$ & $.17$ & $.62$ & $.04$ \\[0.1cm]
        \midrule
        \multirow{5}{*}{\rotatebox[origin=c]{90}{M2M100$_\text{(1.2B)}$}} & Nucleus & -- & -- & $ .8339$ & $.38$ & $.00$ &  $.08$ & -- & $.7962$ & $.42$ & $.03$ & $.10$ \\
        & Conf. & -- & -- & $.9993$ & $.99$ & $.34$ & $.00$ & -- & $.9998$ & $.99$ & $.60$ & $.00$\\
        & \multirow[t]{3}{*}{\makecell[tl]{Non-Ex.\\CS}} & IP & $15.79$ & $.8861$ & $.25$ & $.71$ & $.03$  & $10.45$ & $.9129$ & $.38$ & $.72$ & $.00$\\
        & & $l_2$  & $1123.45$ & $.8874$ & $.25$ & $.72$ & $.03$  & $605.97$ & $.8896$ & $.30$ & $.76$ & $.01$ \\
        & & cos & $3.21$ & $.8858$ & $.25$ & $.72$ &  $.03$ & $1.48$ & $.8897$ & $.30$ & $.75$ & $.01$ \\
        \bottomrule[0.05cm]
    \end{tabular}%
    }
    \caption[Coverage results for the de $\rightarrow$ en and ja $\rightarrow$ en MT tasks.]{
        Coverage results for the de $\rightarrow$ en and ja $\rightarrow$ en MT tasks. 
        We report the best found temperature $\tau$ while keeping the confidence level $\alpha$ and number of neighbors $k=100$ fixed. 
        We also show the coverage percentage along with the avg.\@ prediction set size as a proportion of the entire vocabulary ($\varnothing$ \textsc{Width}) as well as ECG and SSC. 
        Tested distance metrics are inner product (IP), (squared) $l_2$ distance, and cosine similarity (cos).
    }\label{tab:coverage-results-mt}
\end{table*}

\begin{table}[htb!]
    \centering
    \resizebox{0.675\textwidth}{!}{
    \renewcommand{\arraystretch}{1.4}
        \begin{tabular}{@{}lllrcccc@{}}
        \toprule[0.05cm]
         & & & \multicolumn{5}{c}{\textsc{OpenWebText}} \\
         \cmidrule(lr){4-8}
         & Method & Dist. & $\tau$ & \% \textsc{Cov.} & $\varnothing$ \textsc{Width} $\downarrow$ &  \textsc{Scc} $\uparrow$ & \textsc{Ecg} $\downarrow$ \\
        \midrule 
        \multirow{5}{*}{\rotatebox[origin=c]{90}{OPT$_\text{(350M)}$}} & Nucl.\@ Sampl. & - & - & $.8913$ & $.05$ & $.71$ & $.01$ \\
        & Conf.\@ Sampl. & -- & -- & $.9913$ & $.90$ & $.91$ &  $.00$ \\
        & Non-Ex.\@ CS & IP & $4.99$ & $.9352$ & $.19$ & $.80$ & $.00$ \\
        & & $l_2$ & $.31 \times 10^4$ & $.9425$ & $.17$ & $.80$ & $.00$  \\ 
        & & cos & $4.98$ & $.9370$ & $.15$ & $.83$& $.00$ \\
        \midrule
        \multirow{5}{*}{\rotatebox[origin=c]{90}{OPT$_\text{(1.3B)}$}} & Nucl.\@ Sampl. & -- & -- & $.8952$ & $.05$ & $.00$ & $.01$ \\
        & Conf.\@ Sampl. & -- & -- & $.9905$ & $.88$ & $0.95$ &  $.00$ \\
        & Non-Ex.\@ CS  & IP & $.48$ & $.9689$ & $.59$ & $.84$ & $.00$  \\
        & & $l_2$ & $1.55 \times 10^4$ & $.9539$ & $.20$ & $.83$ & $.00$  \\
        & & cos & $.11$ & $.9512$ & $.20$ & $.875$ & $.00$ \\
        \bottomrule[0.05cm]
    \end{tabular}%
    }
    \caption[Coverage results for the LM task.]{
        Coverage results for the LM task. 
        We report the best found temperature $\tau$ while keeping the confidence level $\alpha$ and number of neighbors $k=100$ fixed. 
        We also show the coverage percentage along with the avg.\@ prediction set size as a proportion of the entire vocabulary ($\varnothing$ \textsc{Width}) as well as the ECG and SSC metrics.
        Tested distance metrics are inner product (IP), (squared) $l_2$ distance and cos. similarity (cos).}
    \label{tab:coverage-results-lm}
\end{table}

\begin{figure*}[htb!]
    \centering

    \begin{subfigure}[t]{0.485\textwidth}
        \centering
        \includegraphics[width=0.98\columnwidth]{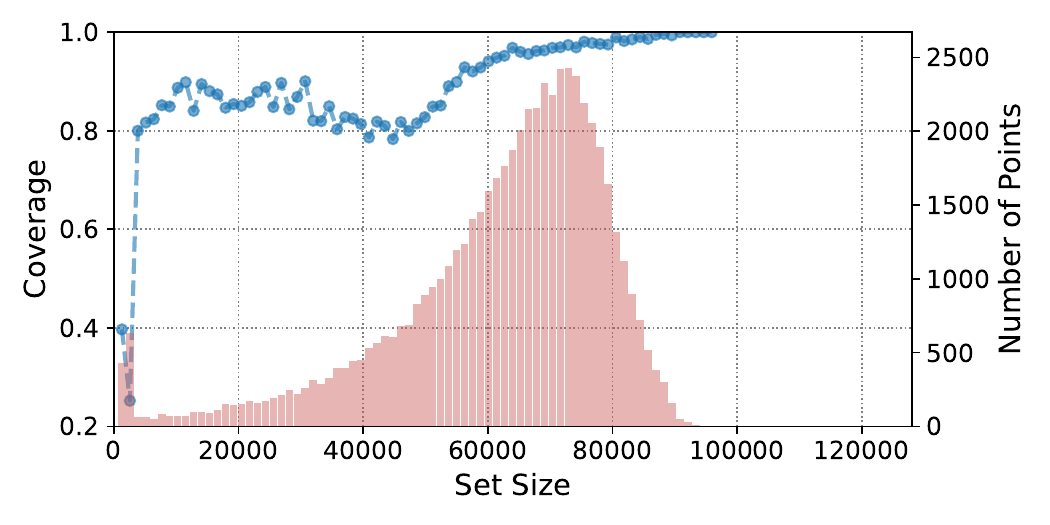}
        \caption{Nucleus Sampling on de $\rightarrow$ en.}
        \label{subfig:stratified-coverage-deen-nucleus}
    \end{subfigure}
    \hfill
    \begin{subfigure}[t]{0.485\textwidth}
        \centering
        \includegraphics[width=0.98\columnwidth]{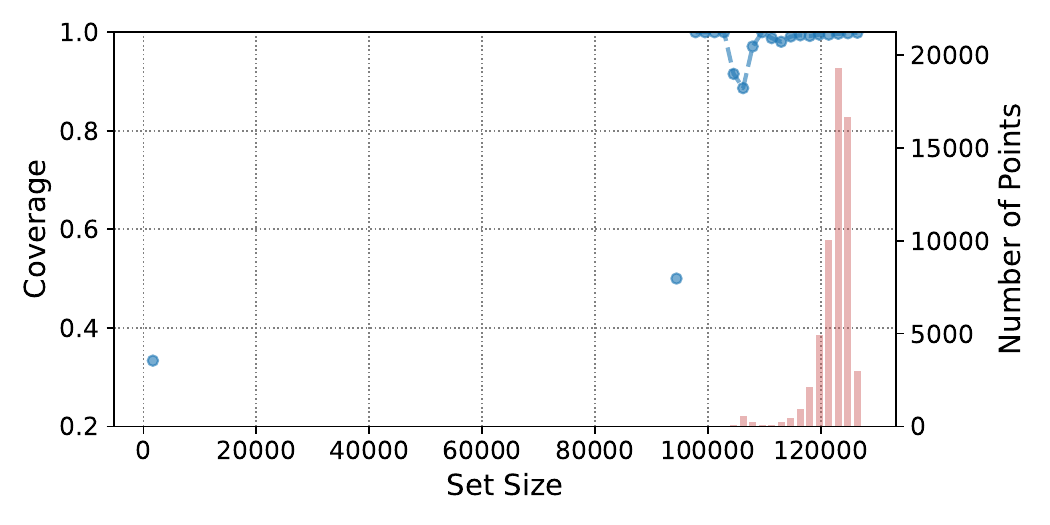}
        \caption{Conformal Nucleus Sampling on de $\rightarrow$ en.}
        \label{subfig:stratified-coverage-deen-conformal-nucleus}
    \end{subfigure}

    \begin{subfigure}[t]{0.485\textwidth}
        \centering
        \includegraphics[width=0.98\columnwidth]{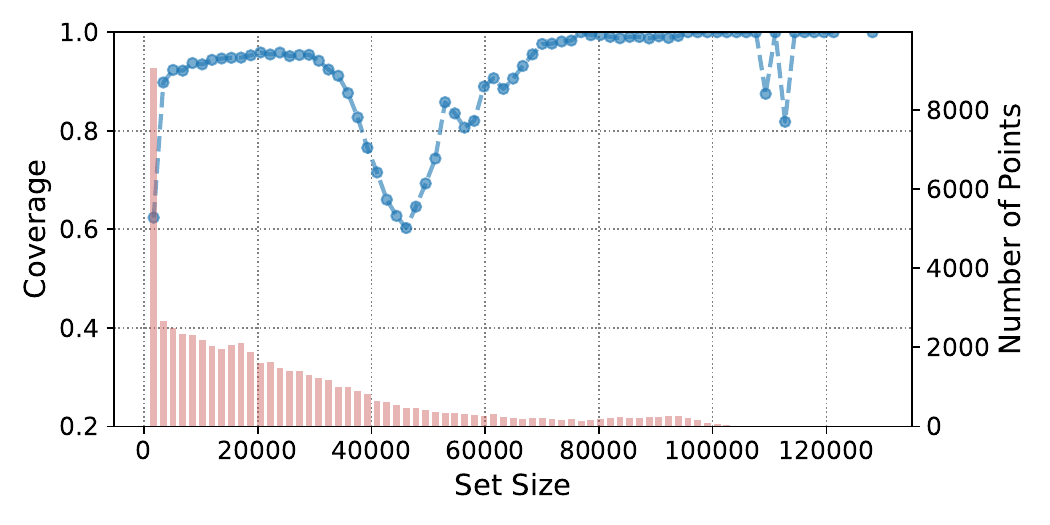}
        \caption{Non-Ex. Conformal Sampling on de $\rightarrow$ en.}
        \label{subfig:stratified-coverage-deen}
    \end{subfigure}
    \hfill
    \begin{subfigure}[t]{0.485\textwidth}
        \centering
        \includegraphics[width=0.98\columnwidth]{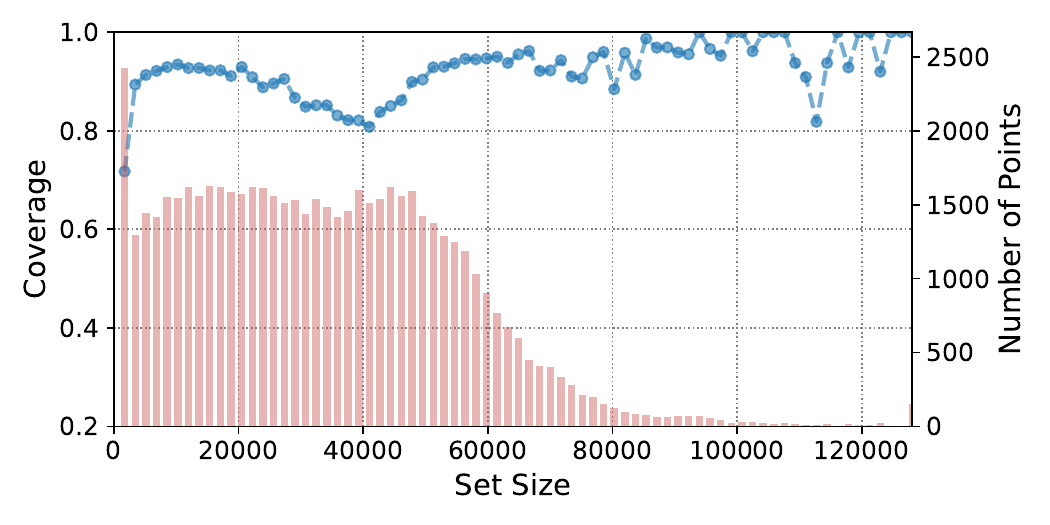}
        \caption{Non-Ex. CS on de $\rightarrow$ en with M2M100$_\text{(1.2B)}$.}
        \label{subfig:stratified-coverage-deen-large}
    \end{subfigure}
    \caption[Conditional converage results on the de $\rightarrow$ en MT task.]{Conditional coverage for the M2M100 on de $\rightarrow$ en with the small 418M model (\cref{subfig:stratified-coverage-deen-nucleus,subfig:stratified-coverage-deen-conformal-nucleus,subfig:stratified-coverage-deen}) and using the bigger 1.2B model (\cref{subfig:stratified-coverage-deen-large}). We aggregate predictions by set size using $75$ equally-spaced bins in total. The blue curve shows the conditional coverage per bin, whereas red bars show the number of binned predictions. }\label{fig:conditional-coverage}
\end{figure*}

\paragraph{Results.} 
The results are shown in \cref{tab:coverage-results-mt,tab:coverage-results-lm}.
We found that our method missed the desired coverage\index{Coverage} of $90 \%$ for MT\index{Machine translation} by only $8 \%$ or less. 
Beyond the best values shown in the tables, we were not able to further increase coverage by varying the temperature parameter without avoiding trivial coverage (i.e.\@, defaulting to very large set sizes).
This likely due to inherent coverage gap in \cref{eq:non-exchangeable-conformal-prediction-guarantees} that is due to distributional drift and is challenging to estimate directly.\\

Most notably, our method was able to achieve better SCC\index{Size-stratified coverage} scores while maintaining considerably smaller prediction sets than the baselines on average. 
The reason for this is illustrated in \cref{fig:conditional-coverage}:
while standard nucleus sampling produces some prediction sets that are small, the total coverage seems to mostly be achieved by creating very large prediction sets between 60k--80k tokens. 
The behavior of conformal nucleus sampling\index{Nucleus sampling!Conformal} by \citet{ravfogel2023conformal} is even more extreme in this regard, while our method produces smaller prediction sets, with the frequency of larger set sizes decreasing gracefully. 
In \cref{subfig:stratified-coverage-deen-large}, we can see that the larger M2M100 models also tend to produce larger prediction sets, but still noticeably smaller than the baselines. 
Importantly, for both M2M100 models, even very small prediction sets (size $\leq 1000$) achieve non-trivial coverage, unlike the baseline methods.\\

For LM\index{Language modeling}, we always found the model to slightly \emph{over}cover.
This does not contradict the desired lower bound on the coverage in \cref{eq:non-exchangeable-conformal-prediction-guarantees} and suggests a more negligible distributional drift\index{Shift!Distributional}.
While nucleus sampling produces the smallest average prediction sets, we can see that based on the SCC\index{Size-stratified coverage} values some strata remain undercovered. 
Instead, our method is able to strike a balance between stratified coverage and prediction set size.\index{Prediction set}
With respect to distance measures, we find that the difference between them is minimal, indicating that the quality largely depends on the retrieved local neighborhood of the decoder encoding and that finding the right temperature can help to tune the models to approximate the desired coverage.
We would now like to find out whether this neighborhood retrieval mechanism can prove to be robust under distributional shift as well.
Since we did not observe notable differences between the distance metrics, we continue with the $l_2$ distance.

\subsection{Coverage Under Shift}\label{sec:shift-robustness}

\begin{figure*}
    \centering
    \begin{subfigure}[t]{0.99\textwidth}
        \centering
        \includegraphics[trim={0 1cm 0 0},clip,width=\textwidth]{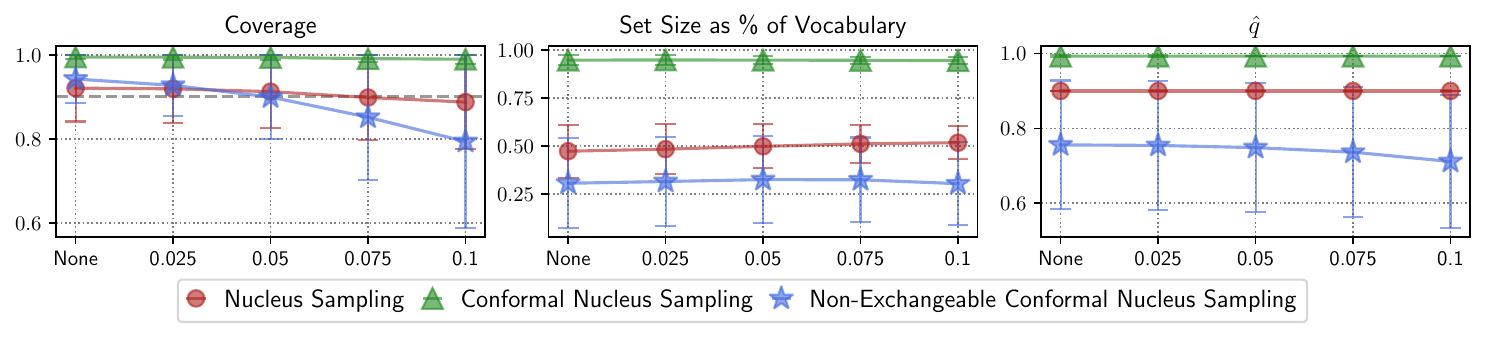}
    \end{subfigure}
    \begin{subfigure}[t]{\textwidth}
        \centering
        \includegraphics[width=0.99\textwidth]{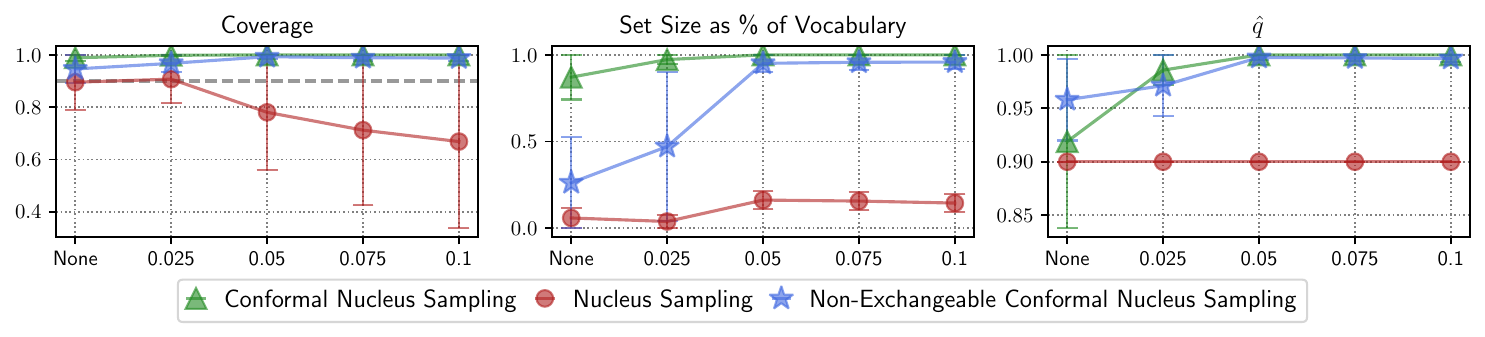}
    \end{subfigure}
    \caption[Coverage, average set size and $\hat{q}$ based on the noise level on the de $\rightarrow$ en MT task and open text generation task.]{
        Coverage, average set size and $\hat{q}$ based on the noise level on the de $\rightarrow$ en MT task (top) and open text generation task (bottom). 
        Error bars show one standard deviation.
    }\label{fig:noise-experiment-deen} 
\end{figure*}

To demonstrate how the retrieval of nearest neighbors can help to maintain coverage under distributional shift\index{Shift!Distributional}, we add Gaussian noise of increasing variance---and therefore intensity---to the last decoder hidden embeddings (for MT\index{Machine translation}) and the input embeddings (LM\index{Language modeling}).\footnote{
    A similar approach can be found for instance in the work of \citet{hahn2019self, zhang2023text} or by \citet{ovadia2019can,hendrycks2019benchmarking} in a computer vision context.}
This way, we are able to simulate distributional drift while still keeping the original sequence of input tokens intact, allowing us to measure the actual coverage.\index{Coverage}
We show the achieved coverage along with the average set size (as a percentage of the total vocabulary) and the average quantile $\hat{q}$ in \cref{fig:noise-experiment-deen}.
We can see that the conformal sampling method deteriorates into returning the full vocabulary as a prediction set. Thus it behaves similarly to simple sampling as indicated by the $\hat{q}$ values being close to $1$. 
Nucleus sampling\index{Nucleus sampling} provides smaller prediction sets compared to conformal sampling\index{Nucleus sampling!Conformal}, but they seem invariant to noise. 
As such, the method is not robust to noise injection in the open text generation task, and the obtained coverage deteriorates with noise variance $\geq 0.025$. 
Instead, the use of nearest neighbors allows for the estimation of prediction sets that are small but amenable to increase, such that the obtained coverage remains close to the desired one. 
We can specifically observe that the prediction set size increases considerably to mitigate the injected noise in the open-text generation case.  

\begin{table}
    \centering
    \resizebox{0.695\columnwidth}{!}{
    \renewcommand{\arraystretch}{1.4}
    \begin{tabular}{@{}lcccccc@{}}
        \toprule[0.05cm]
         & \multicolumn{5}{c}{\textsc{Noise level }} \\
         \cmidrule(lr){2-6}
          & \textsc{None} & $.025$ & $.05$ & $.075$ & $.1$ \\
        \midrule 
        $\varnothing$ Entropy & $8.46$ & $8.71$ & $9.20$ & $9.71$ & $10.08$ \\
        \midrule[0.005cm]
        Nucl.\@ Sampl. $(\rho)$  & $.87$ & $.86$ & $.84$ & $.82$ & $.81$ \\
        Conf.\@ Sampl. $(\rho)$ & $.60$ & $.60$ & $.60$ & $.57$ & $.55$\\
        Non-Ex.\@ CS $(\rho)$ & $-.14$ & $-.18$ & $-.27$ & $-.37$ & $-.45$ \\
        \bottomrule
    \end{tabular}
    }
    \caption[Relationship between prediction entropy and set sizes per noise level on the de $\rightarrow$ en task.]{
    Average entropy of 400M M2M100 model on de $\rightarrow$ en per noise level as well as the Spearman's $\rho$ correlation coefficients between the predictive entropy and the prediction set size of the different methods. 
    All results are significant with $p < 0.0001$.
    }\label{tab:analysis}
\end{table}

\paragraph{Neighbor Retrieval.} We further analyze how the retrieval enables this flexibility by relating it to the entropy of the output distribution of the 400M parameters M2M100 on German to English. 
Intuitively, the baseline methods, faced by high-entropy output distributions, need to produce wide prediction sets\index{Prediction set} in order to maintain coverage\index{Coverage}. 
In fact, we report such results by correlating entropy levels and prediction set sizes using Spearman's $\rho$ in \cref{tab:analysis}, showing strong positive correlations.
Our method in contrast consistently shows an \emph{anti}correlation between these two quantities, enabled by decoupling the creation of prediction sets from statistics of the output distribution to instead considering the non-conformity scores\index{Non-conformity score} of similar subsequences.
The fact that the prediction set size is not just dependent on the entropy of the predictions while maintaining coverage demonstrates the value of the nearest neighbors:
In this way, model uncertainty becomes more flexible and is corroborated by evidence gained from similar inputs.
 
\subsection{Generation Quality}\label{sec:generation-quality}

\begin{table}[htb]
    \centering
    \resizebox{0.995\textwidth}{!}{
    \renewcommand{\arraystretch}{1.4}
    \begin{tabular}{@{}llcccccc@{}}
        \toprule[0.05cm]
            & &  \multicolumn{3}{c}{de $ \rightarrow $ en} & \multicolumn{3}{c}{ja $ \rightarrow $ en} \\
            \cmidrule(lr){3-5}\cmidrule(lr){6-8} 
            & Method & \textsc{Bleu} $\uparrow$ & \textsc{Comet} $\uparrow$ & \textsc{ChrF} $\uparrow$ & \textsc{Bleu} $\uparrow$ & \textsc{Comet} $\uparrow$ & \textsc{ChrF} $\uparrow$ \\
        \midrule 
        \multirow{7}{*}{\rotatebox[origin=c]{90}{M2M100$_\text{(400m)}$}} & Beam search & $28.53$ & $.88$ & $ 55.58$ & $11.37$ & $.63$ & $37.74$ \\
            & Greedy & $27.81$ & $.90$ & $54.9$ & $10.73$ & $.58$ & $36.5$ \\[0.2cm]
            & Nucleus Sampling & $27.63 {\scriptstyle\ \pm .03}$ & $.89 {\scriptstyle\ \pm .01}$ & $54.80 {\scriptstyle\ \pm .07}$ & $10.61 {\scriptstyle\ \pm .15}$ & $.59 {\scriptstyle\ \pm .01}$ & $36.52{\scriptstyle\ \pm .19}$ \\ 
            & Top-$k$ Sampling & $27.63 {\scriptstyle\ \pm .03}$ & $.89 {\scriptstyle\ \pm .01}$ & $54.79 {\scriptstyle\ \pm .07}$ & $10.61 {\scriptstyle\ \pm .15}$ & $.59 {\scriptstyle\ \pm .01}$ & $36.52{\scriptstyle\ \pm .19}$ \\ 
            & Conf.\@ Sampling &  $27.63 {\scriptstyle\ \pm .03}$ & $.89 {\scriptstyle\ \pm .01}$ & $54.80 {\scriptstyle\ \pm .07}$ & $10.61 {\scriptstyle\ \pm .15}$ & $.59 {\scriptstyle\ \pm .01}$ & $36.52 {\scriptstyle\ \pm .19}$\\ 
            \cdashline{1-8}
            & Const.\@ Weight CS & $27.63 {\scriptstyle\ \pm .03}$ & $.89 {\scriptstyle\ \pm .01}$ & $54.80 {\scriptstyle\ \pm .07}$ & $10.61 {\scriptstyle\ \pm .15}$ & $.59 {\scriptstyle\ \pm .01}$ &  $36.52 {\scriptstyle\ \pm .19}$ \\
            & Non-Ex.\@ CS & $27.65 {\scriptstyle\ \pm .10}$ & $.90 {\scriptstyle\ \pm .01}$ & $54.82 {\scriptstyle\ \pm .14}$ & $\underline{10.74} {\scriptstyle\ \pm .11}$ & $.59 {\scriptstyle\ \pm .01}$ & $36.61 {\scriptstyle\ \pm .08}$ \\ 
            \midrule
            \multirow{7}{*}{\rotatebox[origin=c]{90}{M2M100$_\text{(1.2B)}$}} & Beam search & $30.89$ & $.90$ & $56.8$ & $13.76$ & $.63$ & $40.43$ \\
            & Greedy & $29.52$ & $.90$ & $55.67$ & $12.94$ & $.60$ & $39.91$  \\[0.2cm]
            & Nucleus Sampling & $29.37{\scriptstyle\  \pm .12}$ &  $.90{\scriptstyle\  \pm .00}$ & $55.55{\scriptstyle\  \pm .11}$ & $10.61{\scriptstyle\  \pm .15}$ & $.59{\scriptstyle\  \pm .01}$ & $36.52{\scriptstyle\  \pm .19}$ \\
            & Top-$k$ Sampling & $29.53{\scriptstyle\  \pm .00}$ & $.90{\scriptstyle\  \pm .00}$ & $55.67{\scriptstyle\  \pm .00}$ & $12.91{\scriptstyle\  \pm .08}$ & $.60{\scriptstyle\  \pm .01}$ & $39.95{\scriptstyle\  \pm .00}$ \\
            & Conf.\@ Sampling & $29.37{\scriptstyle\  \pm .12}$ & $.90{\scriptstyle\ \pm .00}$ & $55.55{\scriptstyle\  \pm .11}$ & $12.91{\scriptstyle\  \pm .08}$ & $.60{\scriptstyle\  \pm .00}$ & $39.95{\scriptstyle\  \pm .08}$\\
            \cdashline{1-8}
            & Const.\@ Weight CS & $29.37{\scriptstyle\  \pm 0.12}$ & $.90{\scriptstyle\ \pm .00}$ & $55.55{\scriptstyle\ \pm .11}$ & $12.91{\scriptstyle\ \pm .08}$ & $.60 {\scriptstyle\ \pm .01}$ & $39.95 {\scriptstyle\ \pm .08}$ \\
            & Non-Ex.\@ CS & $29.37 {\scriptstyle\ \pm 0.12}$ & $.90 {\scriptstyle\ \pm .00}$ & $55.55 {\scriptstyle\ \pm .11}$ & $12.91 {\scriptstyle\ \pm .08}$ & $.60 {\scriptstyle\ \pm .01}$ & $39.95 {\scriptstyle\ \pm .08}$ \\ 
        \bottomrule[0.05cm]
    \end{tabular}%
    }
    \caption[Generation results for the de $\rightarrow$ en and ja $\rightarrow$ en translation tasks.]{
        Generation results for the de $\rightarrow$ en and ja $\rightarrow$ en translation tasks.
        We report performance using $5$ beams for beam-search, top-$k$ sampling with $k=10$, and nucleus sampling with $p=0.9$. 
        Conformal methods all use $\alpha =  0.1$, with non-exchangeable variants retrieving $100$ neighbors, and sampling uses a softmax temperature of $0.1$. 
        Results using $5$ different seeds that are stat. significant according to the ASO test \citep{del2018optimal, dror2019deep, ulmer2022deep} with a confidence level of $0.95$ and threshold $\varepsilon_\text{min} \le 0.3$ are underlined.
    }\label{tab:generation-results-mt}
\end{table}%

\begin{table}[htb]
    \centering
    \resizebox{0.6\textwidth}{!}{
    \renewcommand{\arraystretch}{1.4}
    \begin{tabular}{@{}llcc@{}}
        \toprule[0.05cm]
            & & \multicolumn{2}{c}{\textsc{OpenWebText}} \\
            \cmidrule(lr){3-4}
            & Method & \textsc{MAUVE} $\uparrow$  & \textsc{BERTscore} $F_1$ $\uparrow$  \\
        \midrule 
        \multirow{6}{*}{\rotatebox[origin=c]{90}{OPT$_\text{(350M)}$}} & Beam search & $.12$ & $.79$ \\
        & Greedy & $.02$ & $.79$\\[0.2cm] 
            & Nucleus Sampling & $.91 {\scriptstyle\ \pm .02}$ & $.80{\scriptstyle\ \pm .00}$ \\ 
            & Top-$k$ Sampling & $.90 {\scriptstyle\ \pm .03}$ & $\underline{.80} {\scriptstyle\ \pm .00}$ \\ 
            & Conf.\@ Sampling & $.91 {\scriptstyle\ \pm .02}$ & $.80 {\scriptstyle\ \pm .00}$ \\ 
            \cdashline{1-4}
            & Const.\@ Weight CS  & $.91 {\scriptstyle\ \pm .02}$ & $.80 {\scriptstyle\ \pm .00}$\\ 
            & Non-Ex.\@ CS & $.92 {\scriptstyle\ \pm .01}$ & $.80 {\scriptstyle\ \pm .00}$  \\ 
        \midrule
        \multirow{6}{*}{\rotatebox[origin=c]{90}{OPT$_\text{(1.3B)}$}} & Beam search & $.17$ & $.80$\\
        & Greedy & $.05$ & $.79$ \\[0.2cm] 
            & Nucleus Sampling & $.91 {\scriptstyle\ \pm .02}$ & $.80 {\scriptstyle\ \pm .00}$\\ 
            & Top-$k$ Sampling & $.93 {\scriptstyle\ \pm .01}$ & $\underline{.81} {\scriptstyle\ \pm .00}$\\ 
            & Conf.\@ Sampling & $.93 {\scriptstyle\ \pm .01}$ & $.80 {\scriptstyle\ \pm .00}$ \\ 
            \cdashline{1-4}
            & Const.\@ Weight CS & $.91 {\scriptstyle\ \pm .02}$ & $.80 {\scriptstyle\ \pm .00}$ \\ 
            & Non-Ex.\@ CS & $.92 {\scriptstyle\ \pm .01}$ & $.81 {\scriptstyle\ \pm .00}$\\ 
            \bottomrule[0.05cm]
    \end{tabular}%
    }
    \caption[Generation results for the open text generation.]{
        Generation results for the open text generation.
        We report performance using $5$ beams for beam-search, top-$k$ sampling with $k=10$, and nucleus sampling with $p=0.9$. 
        Conformal methods all use $\alpha =  0.1$, with non-exchangeable variants retrieving $100$ neighbors. 
        Results using $5$ different seeds that are stat.\@ significant according to the ASO test \citep{del2018optimal, dror2019deep, ulmer2022deep} with a confidence level of $0.95$ and threshold $\varepsilon_\text{min} \le 0.3$ are underlined.
    }
    \label{tab:generation-results-lm}
\end{table}

Crucially, our method should not degrade and potentially even improve generation quality. 
Thus, we evaluate the generation quality for the same tasks without supplying the gold prefix, instead employing standard language generation procedures. 
For language modeling\index{Language modeling}, we follow \citet{ravfogel2023conformal} and use the first $35$ tokens from the original sentence as input.
We compare against a set of generation strategies including top-$k$ sampling\index{Top-$k$ sampling} \citep{fan2018hierarchical, holtzmann2018learning, radford2019language}, nucleus sampling\index{Nucleus sampling} and conformal nucleus sampling\index{Nucleus sampling!Conformal}. 
We also test a variant of our method using constant weights $w_k = 1$ for retrieved neighbors (\emph{Const. Weight CS}) to assess the impact of the weighted neighbor retrieval 
procedure.
We further compare with beam search \citep{medress1977speech, graves2012sequence} with a softmax temperature of $0.1$, and greedy decoding.
Evaluation is performed using BLEU\index{BLEU} \citep{papineni2002bleu}, COMET-22\index{COMET} \citep{rei2020comet, rei2022comet22} and chrF\index{chrF} \citep{popovic-2017-chrf} for MT\index{Machine translation}, where COMET-22 is a trained neural metric.
For text generation, we use MAUVE\index{MAUVE} \citep{pillutla-etal:mauve:neurips2021} and BERTscore \citep{zhang2020bertscore}.\footnote{
    All metrics except for COMET\index{COMET} were used through Hugging Face \texttt{evaluate}. MAUVE uses \texttt{gpt2} as a featurizer.
    }
MAUVE is a neural metric that measures the divergence from human-written text, while BERTscore\index{BERTscore} involves a fine-tuned Bert model that aims to predict human quality judgments.

\paragraph{Results.} We show the results for the different methods in \cref{tab:generation-results-mt,tab:generation-results-lm}.
We see that beam search outperforms all sampling methods for MT.\index{Machine translation} 
This corroborates previous work by \citet{shaham2022you} who argue that (nucleus) sampling methods\index{Nucleus sampling}, by pruning only the bottom percentile of the
token distribution, introduce some degree of randomness that is beneficial for open text generation but may be less optimal for conditional language generation, where the desired output is constrained and exact matching generations are preferred (which is the case for MT).
Among sampling methods, we find nucleus sampling and conformal sampling to perform similarly (being in agreement with the findings of \citealp{ravfogel2023conformal}) but are sometimes on par or even outperformed by our non-exchangeable conformal sampling\index{Nucleus sampling!Non-exchangeable conformal} for MT.
For text generation, our method performs best for the smaller OPT model, but is slightly beaten by conformal nucleus sampling\index{Nucleus sampling!Conformal} in terms of MAUVE.
When using constant weights, performance deteriorates to the conformal sampling setup, emphasizing the importance of not considering all conformity scores equally when computing $\hat{q}$, even though the effect seems to be less pronounced for larger models.
This illustrates the benefit of creating flexible prediction sets that are adapted on token-basis, suggesting that both the latent space neighborhoods as well as the conformity scores are informative.

\section{Discussion}\label{sec:discussion}

Our experiments have shown that despite the absence of i.i.d.\@ data in NLG and the loss in coverage induced by using dynamic calibration sets, the resulting coverage\index{Coverage} is still close to the pre-specified desired level for both LM\index{Language modeling} and MT\index{Machine translation}.
Additionally, even though the coverage gap predicted by the method of \citet{barber2023conformal} is infeasible to compute for us, we did not observe any critical degradation in practice. 
Further, we demonstrated how sampling from these calibrated prediction sets performs similarly or better than other sampling methods.
Even though our method is still outperformed by beam search in the MT setting, previous work such as minimum Bayes risk decoding has shown how multiple samples can be re-ranked to produce better outputs \citep{kumar2002minimum, kumar2004minimum, eikema2020map, fernandes2022quality, freitag2023epsilon}.
Additionally, recent dialogue systems based on LLMs use sampling instead of beam search for generation (e.g.\@ \citealp{openai2023gpt4, llama3modelcard}).
Since our prediction sets are more flexible and generally tighter, our results serve as a starting point for future work. 
For instance, our technique could be used with non-conformity scores\index{Non-conformity score} that do not consider token probabilities alone (e.g.\@ \citealp{meister2023locally}) or using prediction set widths as a proxy for uncertainty \citep{angelopoulos2021uncertainty}.
Furthermore, the extension with conformal risk control\index{Conformal risk control} \citep{angelopoulos2022conformal, farinhas2024nonexchangeable} enables guarantees with respect to a wider family of function than just coverage.\index{Coverage}
This opens up other directions, for instance defining functions that assess the desirability of the current generation, analogous to on-the-fly alignment procedures \citep{yang2021fudge, qin2022cold, mudgal2023controlled, gao2024linear}.

\paragraph{Limitations.} 
We highlight two main limitations of our work here:
Potential issues arising from different kinds of dataset shift as well as efficiency concerns.\index{Computational efficiency}
Even though any loss of coverage due to the term quantifying distributional drift\index{Shift!Distributional} in \cref{eq:non-exchangeable-conformal-prediction-guarantees} was limited in our experiments (see \cref{sec:retrieval-quality,sec:shift-robustness}), this might not hold across all possible setups. 
As long as we cannot feasibly approximate the shift penalty, it is impossible to determine a priori whether the loss of coverage might prove to be detrimental, and would have to be checked in a similar way as in our experiments. 
Furthermore, we only consider shifts between the models' training distributions and test data distributions here, while many other, unconsidered kinds of shifts exist \citep{moreno2012unifying, hupkes2022state}.
Additionally, even using optimized tools such as FAISS \citep{johnson2019billion}, moving the conformal prediction calibration step to inference incurs additional computational cost during generation.
Nevertheless, works such as \citet{he2023efficient, henrique2022chunk} show that there are several ways to improve the efficiency of $k$-NN approaches, and we leave such explorations to future work.

\section{Summary}\label{sec:conformal-nlg-summary}

In this chapter, we successfully demonstrated the application of a non-exchangeable variant of conformal prediction to machine translation and language modeling with the help of $k$-NN retrieval.
By retrieving a calibration set on the fly, one can create prediction sets for language generation based on the non-exchangeable conformal prediction\index{Conformal prediction!Non-exchangeable} algorithm by \citep{barber2023conformal}.
We demonstrated that this method best maintains the desired coverage across different dataset strata while keeping prediction sets smaller than other sampling methods, all while providing theoretical coverage guarantees about coverage that other comparable methods lack.\\

However, this method has multiple shortcomings:
Except through the width of prediction sets, it does not explicitly quantify the uncertainty of the model, adds computational overhead to the inference process and furthermore requires access to the internal states of the model.
This becomes problematic when trying to apply to larger models than for instance M2M100$_{\text{(1.2B)}}$:
Many of the contemporary open-source models (like those in the case study in \cref{sec:case-study}) comprise 7 billion, 40 billion or even more parameters.
In addition, commercial closed-source models that can only be accessed through an API are estimated to be even larger.\footnote{
    For example, GPT-4's parameter count is rumored to be 1.76 trillion \citep{gpt4report}.
}
The nature of the API-only access further exacerbated this problem, as no information internal to the model, sometimes not even the token distribution, can be accessed.
The next chapter therefore proposes a method that operates within this very challenging and restrictive setup.


\chapter{Uncertainty in Large Language Models} 

\label{ch:uncertainty-llms} 

\epigraph{``\emph{In desperation I asked Fermi whether he was not impressed [\ldots]. 
He replied ``How many arbitrary parameters did you use for your calculations?'' 
I [\ldots] said ``Four.''
He said: ``I remember my friend Johnny von Neumann used to say, with four parameters I can fit an elephant, and with five I can make him wiggle his trunk.'' 
With that, the conversation was over.}''
}{---Freeman Dyson in \emph{A Meeting with Enrico Fermi} (2006).}

\begin{tikzpicture}[remember picture,overlay]
    \node[anchor=north,inner sep=0pt] at (current page text area.north) {\includegraphics[width=\linewidth, clip=true, trim = 8cm 100cm 8cm 25cm]{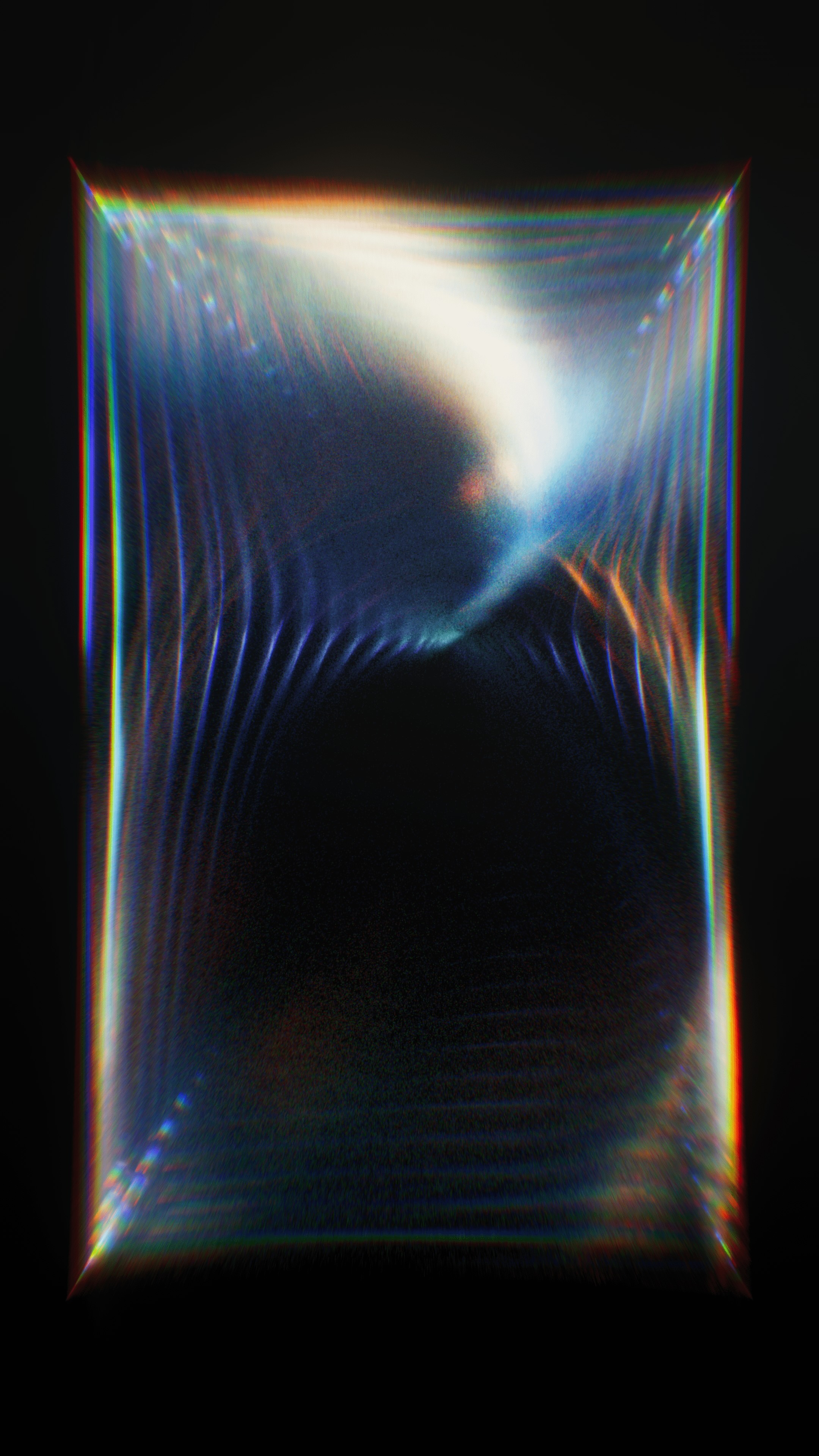}};
\end{tikzpicture}

\begin{footnotesize}
    \vspace{-2.5ex}
    \emph{The following work is based on \citet{ulmer2024calibrating}}.\\
    \vspace{2.5ex}
\end{footnotesize}

When given a case description of ``\emph{A man superglued his face to a piano and he says it's making it hard to get a full night of sleep}'', a medical LLM\index{Large language model} was found to list a plethora of potential causes in its diagnosis, including narcolepsy, sleep apnea and others.\footnote{
    \url{https://x.com/spiantado/status/1620459270180569090} (last accessed Nov. 7, 2023).
}
This, of course, ignores the seemingly obvious reason for the patient's complaints.
While humorous, this example illustrates the pitfalls of practical LLM applications:
Despite often looking convincing on the surface---especially to non-experts---model responses can be wrong or unreliable, leading to potentially harmful outcomes or a loss of trust\index{Trust}  in the system, foregoing its benefits.
Indeed, consistent behavior (imagine e.g.\@ reliably indicating a lack of confidence\index{Confidence} for unsure responses) has been argued as one way to build trust in automated systems \citep{jacovi2021formalizing}, while misleading predictions have been empirically shown to lead to a loss of trust that can be hard to recover from \citep{dhuliawala2023diachronic}.\\

The introductory example shows that as large language models (LLMs) are increasingly deployed in user-facing applications, building trust and maintaining safety by accurately quantifying a model's confidence\index{Confidence} in its prediction becomes even more important.
However, finding effective ways to calibrate LLMs\index{Large language model}---especially when the only interface to the models is their generated text---remains a challenge.
Most previously discussed methods to calibrate model predictions, such as the ones in \cref{sec:frequentist-neural-networks} or even the non-exchangeable conformal language generation\index{Non-exchangeable conformal language generation} from the previous \cref{ch:uncertainty-generation}, require some degree of retraining or at least access to model hidden states and / or logits.\\

In this chapter, we introduce APRICOT\index{APRICOT} \peach\@ (\textbf{a}uxiliary \textbf{pr}ed\textbf{i}ction of \textbf{co}nfidence \textbf{t}argets): 
A method to set targets to calibrate confidence scores\index{Calibration}\index{Confidence} to and train an additional model that predicts an LLM's confidence based on its textual input and output alone.
This approach has several advantages: 
It is conceptually simple, does not require access to the target model beyond its output, does not interfere with the language generation, and has a multitude of potential usages, for instance by verbalizing the predicted confidence or adjusting the given answer based on the confidence.
We show how our approach performs competitively in terms of calibration error for white-box and black-box LLMs on closed-book question-answering\index{Question-answering} to detect incorrect LLM answers\index{Error detection}.
Our contributions are as follows:
We propose to obtain calibration targets for LLM confidence scores without requiring any additional information about LLM internals or question metadata.
We show that using auxiliary models on the target LLM's input and output is sufficient to predict a useful notion of confidence for question-answering on TriviaQA \citep{joshi2017triviaqa} and CoQA \citep{reddy2019coqa}.
We also perform additional studies to identify which parts of the LLM's output are most useful to predict confidence.\index{Confidence}

\section{Calibrating LLMs with Auxiliary Models}\label{sec:apricot}

\begin{figure}[tb!]
    \centering
    \includegraphics[width=0.8\columnwidth]{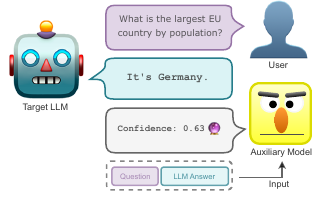}
    \caption[Illustration of APRICOT \peach.]{Illustration of APRICOT \peach: We train an auxiliary model to predict a target LLM's confidence based on its input and the generated answer.}
\end{figure}

\begin{table}[htb]
    \centering
    \renewcommand{\arraystretch}{1.35}
    \resizebox{.8\columnwidth}{!}{
    \begin{tabular}{@{}rccc@{}}
    \toprule
    Method & Black-box LLM? & Consistent? & Calibrated? \\
    \midrule
    Sequence likelihoods & \rcross & \gcheck & \rcross \\
    Verbalized uncertainty & \gcheck & \rcross & \rcross \\
    APRICOT \peach\@ (ours) & \gcheck & \gcheck & \gcheck \\
    \bottomrule
    \end{tabular}%
    }
    \caption[Comparison of appealing attributes that LLM confidence quantification techniques should fulfill.]{
        Comparison of appealing attributes that LLM confidence quantification techniques should fulfill. 
        They should ideally be applicable to black-box LLMs, be consistent (i.e.\@ always elicit a response that indicates confidence in contrast to an unrelated response), and produce calibrated estimates of confidence.
    }\label{tab:constrasting-methods}
\end{table}

\noindent Estimating the confidence of an LLM\index{Confidence}\index{Large language model} can be challenging, since their size rules out many traditional techniques that require finetuning or access to model parameters.
In this light, using the likelihood of the generated sequence\index{Likelihood!Sequence} might seem like an appealing alternative;
however, it may not actually reflect the reliability of the model's answer and often cannot be retrieved when using black-box models, where the only output is the generated text.
Verbalized uncertainty\index{Uncertainty!Verbalized}, i.e.\@ prompting the LLM to express its uncertainty in words, can be a solution when the model is powerful enough.
But as we later show in \cref{sec:experiments}, the generated confidence expressions are not very diverse, and results are not always \emph{consistent}, meaning that the model does not always generate a desired confidence self-assessment.
We will later see how for verbalized uncertainty for instance, models sometimes respond with unrelated answers, even when prompted to express their uncertainty.
As we illustrate in \cref{tab:constrasting-methods}, our method, APRICOT \peach\index{APRICOT}, fulfills all of these criteria:
Through a one-time finetuning procedure of an auxiliary model on the target LLMs outputs, we have full control over a calibrated model that gives consistent and precise confidence estimates.\\

\begin{figure}[htb]
    \centering
    \includegraphics[width=0.85\columnwidth]{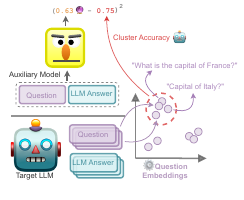}
    \caption[Full overview over APRICOT \peach.]{Full overview over APRICOT \peach. We collect a LLM's answer to a set of questions and embed the latter using an embedding model. After clustering similar questions and identifying the LLM's accuracy on them, we can use this value as reference when training to predict the confidence from a question-answer pair.}\label{fig:method}
\end{figure}

In \cref{fig:method} we give an overview of APRICOT \peach, which consists of three main steps:
Firstly, we prompt the target LLM to generate training data for our auxiliary model (\cref{sec:prompting-target-llm}).
Secondly, we set calibration targets in a way that does not require access to the target LLM beyond its generated outputs (\cref{sec:setting-calibration-targets}).
Lastly, we train the auxiliary calibrator to predict the target LLM's confidence for a given question (\cref{sec:training-auxiliary-model}).\footnote{
    In general, using secondary neural models to predict properties of the generated text also has connections to other tasks such as translation quality estimation \citep{blatz2004confidence, quirk2004training, wang2019improving, glushkova-etal-2021-uncertainty-aware, zerva2022better}, toxicity classification \citep{maslej2020comparison} or fine-grained reward modeling \citep{wu2023fine}.
}
Thereby, we add two parts that are agnostic to the LLM in question: A method that determines calibration targets, and their prediction through the auxiliary model.
Note that we use the terms auxiliary model or calibrator interchangeably in the following sections.

\subsection{Prompting the Target LLM}\label{sec:prompting-target-llm}

\begin{figure}[tb!]
    \centering
    \begin{subfigure}[b]{\columnwidth}
        \centering
        \includegraphics[width=0.65\textwidth]{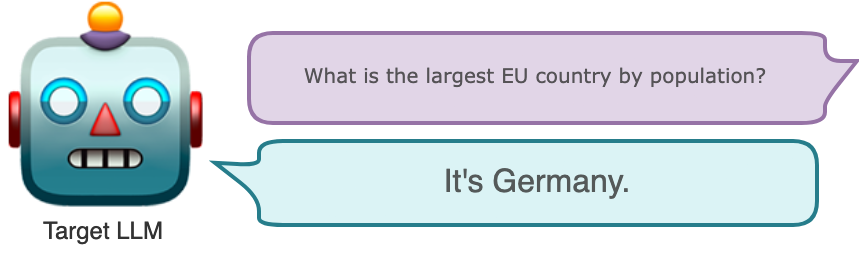}
        \caption{Default prompting.}
        \label{subfig:default-prompting}
    \end{subfigure}
    \par\bigskip
    \begin{subfigure}[b]{\columnwidth}
        \centering
        \includegraphics[width=0.65\textwidth]{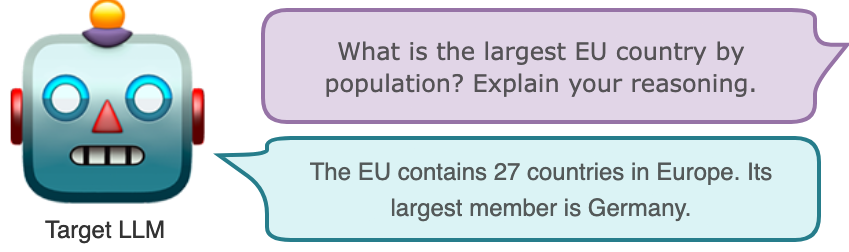}
        \caption{Chain-of-though prompting.}
        \label{subfig:cot-prompting}
    \end{subfigure}
    \par\bigskip
    \begin{subfigure}[b]{\columnwidth}
        \centering
        \includegraphics[width=0.65\textwidth]{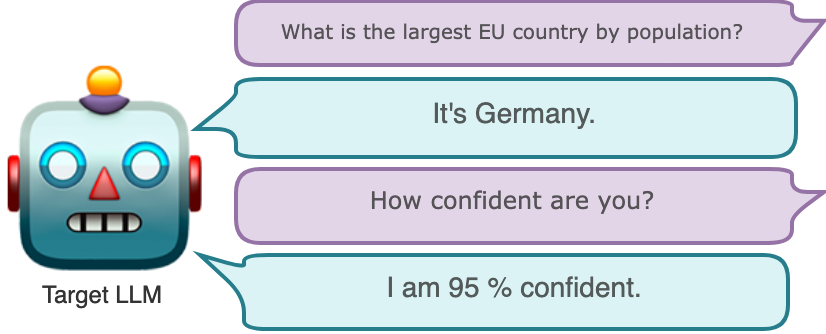}
        \caption{Prompting with verbalized confidence.}
        \label{subfig:verbalized-promptng}
    \end{subfigure}
    \caption[Illustration of the prompting strategies used to generate the input data for the auxiliary calibrator.]{Illustration of the prompting strategies used to generate the input data for the auxiliary calibrator. Note that (c) can also involve confidence expressed in words (``My confidence level is low'') and that (b) and (c) can be combined.}\label{fig:prompting-methods}
\end{figure}

In the first step, we generate finetuning data for the auxiliary model by prompting the target LLM on the given task. 
Here, we explore different variations to see which model response might provide the best training signal for the auxiliary calibrator.
More concretely, while the original prompt and model generation might already suffice to predict the model's confidence\index{Confidence}, we also ask the model to elaborate on its answer using chain-of-thought prompting \citep{wei2022chain}\index{Prompting!Chain-of-thought}.
We hypothesize that including additional reasoning steps exposes signals that are useful for the calibrator.\footnote{
    We do this while acknowledging evidence by \citet{turpin2023language} that shows that any chain-of-thought reasoning might not reflect the actual reasons for a specific model response.
    Nevertheless, even if chain-of-thought reasoning \emph{does not} unveil the actual process of the LLM, it can provide useful textual features to the auxiliary model,
    including unexpected intermediate result or linguistic markers of uncertainty.
}
We furthermore take a model's assessment of its confidence into account, too.
Recent works on \emph{verbalized uncertainty}\index{Uncertainty!Verbalized} \citep{lin2022teaching, tian2023just} investigated how to elicit such an assessment as a percentage value, e.g.\@ ``I am 95 \% confident in my answer'', or using linguistic expressions such as ``My confidence is somewhat low''.
While previous studies like \citet{zhou2024relying} have demonstrated the difficulty in obtaining reliable self-assessments, we can just treat verbalized uncertainties as additional input features, and let their importance be determined through the auxiliary model training.
We illustrate the different prompting strategies in \cref{fig:prompting-methods} and elaborate on the prompts in the following.

\paragraph{Prompt Design.} \index{Prompting}
We use a simple prompt for question-answering, where we fill in a template of the form ``Question: \textcolor{slotcolor}{\{Question\}} Answer:''. 
For in-context samples\index{In-context learning}, we prepend the demonstrations to the input, using the sample template as above.
In the case of chain-of-thought prompting\index{Prompting!Chain-of-thought}, we use the prompting below:

\begin{tcolorbox}[width=\columnwidth,colback={white},title={\small QA Chain-of-thought prompt},enhanced,attach boxed title to top right={yshift=-3.5mm, xshift=-5mm}, colbacktitle=white, coltitle=black, top=12pt]  
        \small
        Briefly answer the following question by thinking step by step.
        Question: \textcolor{slotcolor}{\{Question\}} Answer:\\
\end{tcolorbox}

In the case where the question is supposed to be answered given some context, we slightly change the prompt design:

\begin{tcolorbox}[width=\columnwidth,colback={white},title={\small Chain-of-thought prompt with context},enhanced,attach boxed title to top right={yshift=-3.5mm, xshift=-5mm}, colbacktitle=white, coltitle=black, top=12pt]  
        \small 
        Context: \textcolor{slotcolor}{\{Context\}}\\
        Instruction: Briefly answer the following question by thinking step by step.\\
        Question: \textcolor{slotcolor}{\{Question\}}\\
        Answer:\\
\end{tcolorbox}

Here, the passage that questions are based on is given first, and chain-of-thought prompting is signaled through the ``Instruction'' field. 
When no chain-of-thought prompting\index{Prompting!Chain-of-thought} is used, this field is omitted.
For the verbalized uncertainty\index{Uncertainty!Verbalized}, we use the following prompts, in which case we omit any in-context samples:

\begin{tcolorbox}[width=\columnwidth,colback={white},title={\small Verbalized uncertainty prompt (quantitative)},enhanced,attach boxed title to top right={yshift=-3.5mm, xshift=-5mm}, colbacktitle=white, coltitle=black, top=12pt]  
        \small
        \textcolor{slotcolor}{\{Question\}} \textcolor{slotcolor}{\{Model answer\}}
        Please provide your confidence in the answer only as one of 'Very Low' / 'Low' / 'Somewhat Low' / 'Medium' / 'Somewhat High' / 'High' / 'Very High':\\
\end{tcolorbox}

\begin{tcolorbox}[width=\columnwidth,colback={white},title={\small Verbalized uncertainty prompt (qualitative)},enhanced,attach boxed title to top right={yshift=-3.5mm, xshift=-5mm}, colbacktitle=white, coltitle=black, top=12pt]  
        \small
        \textcolor{slotcolor}{\{Question\}} \textcolor{slotcolor}{\{Model answer\}} Please provide your confidence in the answer only in percent (0--100 \%):\\
\end{tcolorbox}

We follow \citet{kuhn2023semantic} and use $10$ in-context samples for the original answer, which are randomly sampled from the training set (but in contrast to \citeauthor{kuhn2023semantic}, we sample different examples for each instance). 
When prompting for verbalized uncertainty, we remove these in-context samples.
Additionally, verbalized uncertainty expressions such as `very low' or `high' are mapped back onto the following numerical values for evaluation purposes (in the order of appearance in the template above): $0, 0.3, 0.45, 0.5, 0.65, 0.7, 1$.\footnote{
    The choice of these specific numbers is admittedly somewhat arbitrary, and other works such as \citet{lin2022teaching} have also employed similarly heuristic scales.
    \citet{tian2023just} motivate their mapping from expression to probabilities to a social media survey by \citet{fagen2015perception}, where respondents were asked to assign probabilities to different expressions.
    However, these values vary greatly between participants, and thus assigning a single numerical value is still challenging.
}


\subsection{Setting Calibration Targets}\label{sec:setting-calibration-targets}

After explaining the inputs to the auxiliary model, the question naturally arises about what the calibrator should be trained to predict.
The work by \citet{mielke2022reducing} introduces an additional model that simply predicts the correctness of an individual answer (and does so by using the target model's internal hidden states, which is not possible for black-box models).
We test this type of output in \cref{sec:calibration-experiments}, but we also show that we can produce better calibration targets through clustering.\\

Recall the notion of calibration and calibration error\index{Calibration} from \cref{sec:frequentist-neural-networks}, where we saw that the expected calibration error\index{Expected calibration error} can be approximated by binning points into buckets $\mathcal{B}_m$ \citep{naeini2015obtaining} by confidence:

\begin{equation}\label{eq:ece}
    \sum_{m=1}^M \frac{|\mathcal{B}_m|}{N}\Big|\underbrace{\frac{1}{|\mathcal{B}_m|}\sum_{i \in \mathcal{B}_m}\indicator{\hat{y}_i = y_i}}_{\text{Bin accuracy (target)}} - \underbrace{\frac{1}{|\mathcal{B}_m|}\sum_{i \in \mathcal{B}_m}\hat{p}_i}_{\text{Avg. bin confidence}}\Big|,
\end{equation}

\noindent where $\hat{p}_i$ corresponds to some confidence score. 
Our key insight is here that we can optimize the expected calibration error through a similar approximation as in \cref{eq:ece}, without changing the LLM's\index{Large language model} original answers or access to token probabilities.
We can abstract the idea in \cref{eq:ece} as aggregating samples in homogeneous groups (in the above case, groups of similar confidence), and measuring the group-wise accuracy.
But now, instead of creating bins $\mathcal{B}_m$ by confidence, which is not possible in a black-box setting, we create clustered sets $\mathcal{C}_m$ of questions with similar sentence embeddings.
Calibration targets are then obtained by using the average accuracy of the LLMs answers per question set $\mathcal{C}_m$.
This is similar to the method of \citet{lin2022teaching}, who consider the accuracy per question category (e.g.\@ multiplication or addition math questions).
Yet in the absence of such additional categorization data, we expect good embedding and clustering algorithms to roughly group inputs by category.
\citet{holtgen2023richness} also echo a similar idea of more generalized grouping choices, describing how ECE's grouping by confidence can be abstracted to other kinds of similarities.
They also provide a proof that the calibration error of a predictor based on a $k$-nearest neighbor clustering tends to zero in the infinite data limit.\\

\paragraph{Implementation.}
Practically, we embed questions into a latent space using a light-weight model such as SentenceBert\index{Bert!Sentence} \citep{reimer2019sentence}, normalize the embeddings along the feature dimension \citep{timkey2021all}, and then use HDBSCAN \citep{campello2013density}, an unsupervised, bottom-up clustering algorithm, to cluster them into questions of similar topic. 
The use of HDBSCAN\index{HDBSCAN} has multiple advantages: 
Compared to e.g.\@ $k$-means, we do not have to determine the numbers of clusters in advance, and since the clustering is conducted bottom-up, clusters are not constrained to a spherical shape. 
Furthermore, compared to its predecessor DBSCAN \citep{ester1996density}, HDBSCAN does not require one to determine the minimum distance between points for clustering manually.
We evaluate this procedure in \cref{sec:clustering-eval,app:additional-clustering}.



\subsection{Training the Auxiliary Model}\label{sec:training-auxiliary-model}

After determining the input and the training targets for the auxiliary model in the previous sections, we can now describe the actual training procedure that makes it predict the target LLM's confidence.
To start, we feed the questions alongside some in-context samples\index{In-context learning} into our target LLM.
We retain the generated answers and create a dataset that combines the question (without in-context samples) and the target model's answers.
These are used to train the auxiliary calibrator to predict the calibration targets obtained by the clustering procedure above.
In our experiments, we use DeBERTaV3 \citep{he2023debertav3}, an improvement on the original DeBERTa\index{DeBERTa} model \citep{he2021deberta} using variety of improvements with respect to its architecture and pre-trainign objective.
We then finetune it using the AdamW optimizer \citep{loshchilov2018fixing} in combination with a cosine learning rate schedule.
We minimize the following mean squared error, where $\hat{p}_i$ is the predicted confidence, $\mathcal{C}(i)$ the cluster that the input question with index $i$ belongs to, and $\hat{a}_j$ an answer given by the target LLM\index{Large language model}: 
\begin{equation}
\mathcal{L}\big(\hat{p}_i, \mathcal{C}(i)\big) = \Big(\hat{p}_i - \underbrace{\frac{1}{|\mathcal{C}(i)|}\sum_{j \in \mathcal{C}(i)}\indicator{\hat{a}_j\text{ is correct}}}_{\text{Cluster accuracy (target)}}\Big)^2.
\end{equation}
We also explore a variant that simply predicts whether the LLM's answer is expected to be correct or incorrect, so in this case, we simply optimize a binary cross-entropy loss:
\begin{equation}
    \mathcal{L}\big(\hat{p}_i, \hat{a}_i\big) = \indicator{\hat{a}_i\text{ is correct}} \log \hat{p}_i + \big(1 - \indicator{\hat{a}_i\text{ is correct}}\big)\log (1 - \hat{p}_i).
\end{equation}

Although omitted here for clarity, the actual loss also uses loss weights to balance the unequal distribution of correct and incorrect language model answers.\footnote{The loss weights are based on \texttt{scikit-learn}'s implementation using the ``balanced'' mode, see \url{https://scikit-learn.org/stable/modules/generated/sklearn.utils.class_weight.compute_class_weight.html}.}
Finally, we select the final model via the best loss on the validation set.
We determine the learning rate and weight decay term through Bayesian hyperparameter search \citep{snoek2012practical}, picking the best configuration by validation loss.
We detail search ranges and found values in \cref{app:hyperparameters}.
Training hardware and the environmental impact are discussed in \cref{app:environmental-impact}.

\section{Experiments}\label{sec:apricot-experiments}

We now demonstrate how APRICOT \peach\@\index{APRICOT} provides a simple yet effective solution to calibrate LLMs.
Before assessing the quality of the unsupervised clustering to determine calibration targets from \cref{sec:setting-calibration-targets}, we first introduce the dataset and models.

\paragraph{Datasets.} We employ TriviaQA \citep{joshi2017triviaqa}, a common (closed-book) question-answering dataset.
Open-ended question-answering is an ideal testbed for natural language generation tasks, since it is comparatively easy to check whether an answer is correct or not, so calibration has an intuitive interpretation.
To preprocess TriviaQA, we create a training set of 12k examples and choose another 1.5k samples as a validation and test split, respectively.\footnote{Since the original test split does not include answers, we generate the validation and test split from the original validation split.} 
Secondly, we run experiments on CoQA \citep{reddy2019coqa}, a conversational question-answering dataset in which the model is quizzed about the information in a passage of text.
We treat the dataset as an open-book dataset, where the model is shown the passage and then asked one of the corresponding questions at a time.
We extract a subset of the dataset to match the split sizes of TriviaQA.

\paragraph{Models.} 
We test two models settings:
A white-box setting, where the model can be run locally and we have full access to its internals, and a black-box setting, where the model is only available through an API, drastically reducing the options for uncertainty quantification methods.
For our white-box model experiments, we choose a 7 billion parameter variant of the Vicuna v1.5 model \citep{zheng2023judging},\footnote{\url{https://huggingface.co/lmsys/vicuna-7b-v1.5}.}
an instruction-finetuned model originating from Llama 2 \citep{touvron2023llama}.
For the black-box model, we opt for OpenAI's GPT-3.5 \citep{chatgpt}.\footnote{Specifically, using version \texttt{gpt-3.5-turbo-0125}.}
Despite recent API changes granting access to token probabilities,\footnote{\url{https://x.com/OpenAIDevs/status/1735730662362189872} (last accessed on 16.01.24).} creating methods for black-box confidence estimation is still relevant for multiple reasons: Token probabilities are not available for most black-box models, they might be removed again to defend against potential security issues; and they are not always a reliable proxy for confidence.

\subsection{Setting Calibration Targets by Clustering}\label{sec:clustering-eval}

Before beginning our main experiments, we would like to verify that our proposed methodology in \cref{sec:setting-calibration-targets} is sound. 
In particular, clustering the embeddings of questions and computing the calibration confidence targets rests on the assumption that similarly-themed questions are collected in the same cluster.
Ideally, we would like to check this using metadata, which however is usually not available.

\paragraph{Setup.} Instead, we evaluate this through different means:
We first use the \texttt{all-mpnet-base-v2} model from the sentence transformers\index{Transformer!Sentence} package \citep{reimer2019sentence} and HDBSCAN\index{HDBSCAN} with a minimum cluster size of $3$ to cluster questions.
We then analyze the textual and semantic similarity of questions in a cluster by computing the average pair-wise ROUGE-L score\index{ROUGE} (\emph{semantic}; \citealp{lin2004rouge})\footnote{As implemented by the \texttt{evaluate} package, see \url{https://huggingface.co/docs/evaluate/index}.} between questions, and cosine similarities between question \emph{embeddings} of the same cluster (\emph{semantic}).
Since we assume the sentence embedding model to capture the meaning of a sentence, the expect the semantic similarity to be high when questions are similar in topic, but might differ in their choice of words.
Since performing this evaluation on the entire dataset is computationally expensive, we approximate the score by using $5$ pairwise comparisons per cluster, with $200$ comparisons for ROUGE-L and $1000$ for cosine similarity in total, respectively. 
As a control for our method (\emph{clustering}), we also compute values between unrelated questions that are not in the same cluster (\emph{random}).

\begin{table}
    \renewcommand{\arraystretch}{1.5}
    \centering
    \resizebox{0.6\textwidth}{!}{
    \begin{tabular}{@{}lrrrr@{}}
        \toprule
        & \multicolumn{2}{c}{TriviaQA} & \multicolumn{2}{c}{CoQA}  \\
        \cmidrule(lr){2-3} \cmidrule(lr){4-5}
         & Textual & Semantic & Textual & Semantic \\
        \toprule
        Random & $.11{\scriptstyle\ \pm.08 }$ & $.00{\scriptstyle\ \pm.08}$ & $.08{\scriptstyle\ \pm.12}$ & $.00{\scriptstyle\ \pm.12}$ \\
         Clustering & $.39{\scriptstyle\ \pm.28}$ & $.60{\scriptstyle\ \pm.14}$ & $.47{\scriptstyle\ \pm.25}$ & $.70{\scriptstyle\ \pm.17}$  \\ 
        \bottomrule
    \end{tabular}%
    }
    \caption[Results of evaluation of found clusters on TriviaQA and CoQA.]{
        Results of evaluation of found clusters on TriviaQA and CoQA, including one standard deviation. 
        Textual refers to similarity scores computed using ROUGE-L, and semantic scores based on cosine similarities of question embeddings of the same cluster.
        Here, we use random comparisons between questions in the dataset as a baseline.
    }\label{fig:clustering-results}
\end{table}

\paragraph{Results.} We show the results of this analysis in \cref{fig:clustering-results}.
We observe noticeable differences between the random baseline and the similarity for the clustering scores, both on a textual and semantic level.
While there is smaller difference on a textual level due to the relatively similar wording of questions, the semantic similarity based on the encoded questions is very notable. 
We provide deeper analyses of this part in \cref{app:additional-clustering}, showing that this method creates diverse ranges of calibration confidence targets.
This suggests two things: On the one hand, our proposed methodology is able to identify fine-grained categories of questions. On the other hand, the diversity in calibration targets shown in \cref{app:additional-clustering} indicates that we detect sets of questions on which the LLM's accuracy varies---and that this variety should be reflected. 
We test the ability of different methods to do exactly this next.

\subsection{Calibrating White and Black-Box Models}\label{sec:calibration-experiments}

Next, we test whether auxiliary models can reliably predict the target LLM's confidence.\index{Confidence}
We describe our experimental conditions below.

\paragraph{Evaluation metrics.} Aside from reporting the accuracy on the question-answering task, we also report several calibration metrics, including the expected calibration error (ECE;\index{Expected calibration error}\citealp{naeini2015obtaining}) using $10$ bins.
In order to address any distortion of results introduced by the binning procedure, we use smooth ECE \index{Expected calibration error!Smooth} (smECE; \citealp{blasiok2023smooth}), which avoids the binning altogether by smoothing observations using a radial basis function kernel. 
We also consider Brier score\index{Brier score} \citep{brier1950verification}, which can be interpreted as mean-squared error for probabilistic predictions.
We further show how indicative the predicted confidence is for answering a question incorrectly by measuring the AUROC.\index{AUROC}
The AUROC treats the problem as a binary error detection task\index{Error detection} based on the confidence scores, aggregating the results over all possible decision thresholds.
In each case, we report the result alongside a bootstrap estimate of the standard error \citep{efron1994introduction} estimated from $100$ samples and test for significance using the almost stochastic order \index{Stochastic order!Almost} test \citep{del2018optimal, dror2019deep, ulmer2022deep} with $\tau = 0.35$ and a confidence level of $\alpha = 0.1$.

\paragraph{Baselines.} To contextualize the auxiliary calibrator results, we consider the following baselines:
We consider the raw (length-normalized) sequence likelihoods\index{Likelihood!Sequence} (Seq.\@ likelihood) as well as variant using Platt scaling\index{Platt scaling} \citep{platt1999probabilistic}:
Using the raw likelihood $\hat{p} \in [0, 1]$ and the sigmoid function\index{Sigmoid function} $\sigma$, we fit two additional scalars $a, b \in \mathbb{R}$ to minimize the mean squared error on the validation set to produce a calibrated likelihood $\hat{q} = \sigma(a\hat{p} + b)$ while keeping all other calibrator parameters fixed.
We also compare it to the recent method of \emph{verbalized} uncertainty\index{Uncertainty!Verbalized} \citep{lin2022teaching, tian2023just}, where we ask the model to assess its confidence directly. 
We do this by asking for confidence in percent (Verbalized $\%$) and using a seven-point scale from ``very low'' to ``very high'', and which is mapped back to numeric confidence scores (Verbalized Qual.). 
Where applicable, we also distinguish between baselines with and without chain-of-thought prompting\index{Prompting!Chain-of-thought} (CoT; \citealp{wei2022chain}).
For our approach, we distinguish between confidence targets obtained through the procedure in \cref{sec:setting-calibration-targets} (clustering) and simply predicting whether the given answer is correct or incorrect (binary).

\begin{table*}[htb]
    \renewcommand{\arraystretch}{1.35}
    \centering
    \resizebox{0.995\textwidth}{!}{
    \begin{tabular}{@{}llcccccccc@{}}
        \toprule
         & & \multicolumn{4}{c}{TriviaQA} & \multicolumn{4}{c}{CoQA} \\
        \cmidrule(lr){3-6} \cmidrule(lr){7-10} 
        & Method & Brier$\downarrow$ & ECE$\downarrow$  & smECE$\downarrow$ & AUROC$\uparrow$ &  Brier$\downarrow$ & ECE$\downarrow$  & smECE$\downarrow$ & AUROC$\uparrow$  \\
        \toprule
        \multirow{10}{*}{\rotatebox{90}{Vicuna v1.5 (white-box)}} & Seq.\@ like.\@  & $.22{\scriptstyle\ \pm.01}$ & $.05{\scriptstyle\ \pm.00}$ & $.03{\scriptstyle\ \pm.00}$ & $.79{\scriptstyle\ \pm.01}$ & $.32{\scriptstyle\ \pm.01}$ & $.08{\scriptstyle\ \pm.00}$ & $.08{\scriptstyle\ \pm.00}$ & $.69{\scriptstyle\ \pm.01}$ \\
         & Seq.\@ like.\@ {\footnotesize (CoT)} & $.25{\scriptstyle\ \pm.01}$ & $.04{\scriptstyle\ \pm.00}$ & $.04{\scriptstyle\ \pm.00}$ & $.70{\scriptstyle\ \pm.01}$ & $.35{\scriptstyle\ \pm.01}$ & $.04{\scriptstyle\ \pm.00}$ & $.05{\scriptstyle\ \pm.00}$ & $.61{\scriptstyle\ \pm.01}$ \\
         & Platt & $.24{\scriptstyle\ \pm.00}$ & $.08{\scriptstyle\ \pm.00}$ & $.07{\scriptstyle\ \pm.00}$ & $.70{\scriptstyle\ \pm.01}$ & $.30{\scriptstyle\ \pm.00}$ & $.03{\scriptstyle\ \pm.00}$ & $.03{\scriptstyle\ \pm.00}$ & $.69{\scriptstyle\ \pm.01}$ \\
         & Platt {\footnotesize (CoT)} & $.24{\scriptstyle\ \pm.00}$ & $.12{\scriptstyle\ \pm.00}$ & $.11{\scriptstyle\ \pm.00}$ & $.79{\scriptstyle\ \pm.01}$ & $.30{\scriptstyle\ \pm.00}$ & $.02{\scriptstyle\ \pm.00}$ & $.02{\scriptstyle\ \pm.00}$ & $.61{\scriptstyle\ \pm.01}$ \\
         & Verb.\@  Qual.\@ & $.38{\scriptstyle\ \pm.03}$ & $.02{\scriptstyle\ \pm.00}$ & $.02{\scriptstyle\ \pm.00}$ & $.62{\scriptstyle\ \pm.03}$ & $.45{\scriptstyle\ \pm.01}$ & $\underline{\mathbf{.00}}{\scriptstyle\ \pm.00}$ & $\underline{\mathbf{.00}}{\scriptstyle\ \pm.00}$ & $.48{\scriptstyle\ \pm.01}$ \\
         & Verb.\@ Qual.\@ {\footnotesize (CoT)} &  $.39{\scriptstyle\ \pm.02}$& $\underline{\mathbf{.01}}{\scriptstyle\ \pm.00}$& $\underline{\mathbf{.01}}{\scriptstyle\ \pm.00}$& $.60{\scriptstyle\ \pm.02}$ & $.45{\scriptstyle\ \pm.01}$& $\underline{\mathbf{.00}}{\scriptstyle\ \pm.00}$& $\underline{\mathbf{.00}}{\scriptstyle\ \pm.00}$& $.48{\scriptstyle\ \pm.01}$\\
        & Verb.\@ $\%$ & $.39{\scriptstyle\ \pm.01}$ & $.38{\scriptstyle\ \pm.00}$ & $.27{\scriptstyle\ \pm.00}$ & $.52{\scriptstyle\ \pm.01}$ & $.49{\scriptstyle\ \pm.01}$ & $.48{\scriptstyle\ \pm.00}$ & $.32{\scriptstyle\ \pm.00}$ & $.53{\scriptstyle\ \pm.01}$ \\
        & Verb.\@ $\%$ {\footnotesize (CoT)} & $.39{\scriptstyle\ \pm.01}$ & $.38{\scriptstyle\ \pm.00}$ & $.26{\scriptstyle\ \pm.00}$ & $.49{\scriptstyle\ \pm.01}$ & $.48{\scriptstyle\ \pm.01}$ & $.06{\scriptstyle\ \pm.00}$ & $.06{\scriptstyle\ \pm.00}$ & $.55{\scriptstyle\ \pm.01}$ \\
        \cdashline{2-10}
        & Aux.\@ {\footnotesize (binary)} & $.20{\scriptstyle\ \pm.01}$ & $.16{\scriptstyle\ \pm.01}$ & $.15{\scriptstyle\ \pm.01}$ & $.81{\scriptstyle\ \pm.01}$ & $.20{\scriptstyle\ \pm.01}$& $.16{\scriptstyle\ \pm.01}$& $.15{\scriptstyle\ \pm.01}$& $\underline{\mathbf{.82}}{\scriptstyle\ \pm.01}$ \\
        & Aux.\@ {\footnotesize (clustering)} & $\underline{\mathbf{.18}}{\scriptstyle\ \pm.00}$& $.09{\scriptstyle\ \pm.01}$& $.09{\scriptstyle\ \pm.01}$& $\underline{\mathbf{.83}}{\scriptstyle\ \pm.01}$ &  $\underline{\mathbf{.18}}{\scriptstyle\ \pm.00}$& $.04{\scriptstyle\ \pm.01}$& $.04{\scriptstyle\ \pm.01}$& $\mathbf{.82}{\scriptstyle\ \pm.01}$ \\
         \midrule
        \multirow{10}{*}{\rotatebox{90}{GPT-3.5 (black-box)}} & Seq.\@ like.\@  &  $.15{\scriptstyle\ \pm.01}$& $.04{\scriptstyle\ \pm.00}$& $.04{\scriptstyle\ \pm.00}$& $.69{\scriptstyle\ \pm.02}$ & $.29{\scriptstyle\ \pm.01}$& $.11{\scriptstyle\ \pm.00}$& $.11{\scriptstyle\ \pm.00}$& $.70{\scriptstyle\ \pm.01}$ \\
        & Seq.\@ like.\@ {\footnotesize (CoT)} &  $.14{\scriptstyle\ \pm.00}$& $.05{\scriptstyle\ \pm.00}$& $.05{\scriptstyle\ \pm.00}$& $.60{\scriptstyle\ \pm.02}$ &  $.25{\scriptstyle\ \pm.00}$& $\underline{\mathbf{.01}}{\scriptstyle\ \pm.00}$& $\underline{\mathbf{.02}}{\scriptstyle\ \pm.00}$& $.52{\scriptstyle\ \pm.02}$\\
        & Platt & $.15{\scriptstyle\ \pm.00}$& $.04{\scriptstyle\ \pm.00}$& $.04{\scriptstyle\ \pm.00}$& $.69{\scriptstyle\ \pm.02}$ &  $.26{\scriptstyle\ \pm.01}$& $.03{\scriptstyle\ \pm.00}$& $.03{\scriptstyle\ \pm.00}$& $.70{\scriptstyle\ \pm.01}$ \\
        & Platt {\footnotesize (CoT)} & $.15{\scriptstyle\ \pm.00}$& $.12{\scriptstyle\ \pm.00}$& $.12{\scriptstyle\ \pm.00}$& $.60{\scriptstyle\ \pm.02}$ &  $.25{\scriptstyle\ \pm.00}$& $.06{\scriptstyle\ \pm.00}$& $.06{\scriptstyle\ \pm.00}$& $.52{\scriptstyle\ \pm.02}$  \\
        & Verb.\@ Qual.\@ &  $.14{\scriptstyle\ \pm.01}$& $.07{\scriptstyle\ \pm.00}$& $.04{\scriptstyle\ \pm.00}$& $.61{\scriptstyle\ \pm.02}$ &  $.27{\scriptstyle\ \pm.00}$& $.07{\scriptstyle\ \pm.00}$& $.05{\scriptstyle\ \pm.00}$& $.52{\scriptstyle\ \pm.01}$ \\ 
        & Verb.\@ Qual.\@ {\footnotesize (CoT)} &  $.15{\scriptstyle\ \pm.00}$& $.04{\scriptstyle\ \pm.00}$& $.03{\scriptstyle\ \pm.00}$& $.63{\scriptstyle\ \pm.02}$ &  $.30{\scriptstyle\ \pm.01}$& $.08{\scriptstyle\ \pm.01}$& $.04{\scriptstyle\ \pm.00}$& $.50{\scriptstyle\ \pm.01}$ \\ 
        & Verb.\@ $\%$ & $.13{\scriptstyle\ \pm.01}$& $.01{\scriptstyle\ \pm.00}$& $\underline{\mathbf{.01}}{\scriptstyle\ \pm.00}$& $.63{\scriptstyle\ \pm.02}$ & $.34{\scriptstyle\ \pm.01}$& $.25{\scriptstyle\ \pm.00}$& $.22{\scriptstyle\ \pm.00}$& $.54{\scriptstyle\ \pm.01}$ \\ 
        & Verb.\@ $\%$ {\footnotesize (CoT)} &  $.13{\scriptstyle\ \pm.01}$& $\underline{\mathbf{.00}}{\scriptstyle\ \pm.00}$& $\mathbf{.01}{\scriptstyle\ \pm.00}$& $.63{\scriptstyle\ \pm.02}$ & $.37{\scriptstyle\ \pm.01}$& $.09{\scriptstyle\ \pm.01}$& $.06{\scriptstyle\ \pm.00}$& $.49{\scriptstyle\ \pm.02}$ \\
        \cdashline{2-10}
        & Aux.\@ {\footnotesize (binary)} & $.14{\scriptstyle\ \pm.00}$& $.14{\scriptstyle\ \pm.01}$ & $.14{\scriptstyle\ \pm.01}$ & $.65{\scriptstyle\ \pm.02}$ & $.19{\scriptstyle\ \pm.01}$ & $.13{\scriptstyle\ \pm.01}$ & $.13{\scriptstyle\ \pm.01}$ & $\mathbf{.81}{\scriptstyle\ \pm.01}$ \\
        & Aux.\@ {\footnotesize (clustering)} & $\underline{\mathbf{.12}}{\scriptstyle\ \pm.01}$ & $.06{\scriptstyle\ \pm.01}$ & $.06{\scriptstyle\ \pm.01}$ & $\underline{\mathbf{.72}}{\scriptstyle\ \pm.02}$ & $\underline{\mathbf{.18}}{\scriptstyle\ \pm.00}$ & $.02{\scriptstyle\ \pm.01}$ & $\mathbf{.02}{\scriptstyle\ \pm.00}$ & $\mathbf{.81}{\scriptstyle\ \pm.01}$ \\
        \bottomrule
    \end{tabular}%
    }
    \caption[Calibration results for Vicuna v1.5 and GPT-3.5 on TriviaQA and CoQA.]{Calibration results for Vicuna v1.5 and GPT-3.5 on TriviaQA and CoQA. We bold the best results per dataset and model, and underline those that are statistically significant compared to all other results assessed via the ASO test. Results are reported along with a bootstrap estimate of the standard error.
    }\label{tab:calibration-results}
\end{table*}

\begin{figure*}[htb]
    \centering
    \begin{subfigure}[t]{0.32\textwidth}
        \centering
        \includegraphics[width=\textwidth]{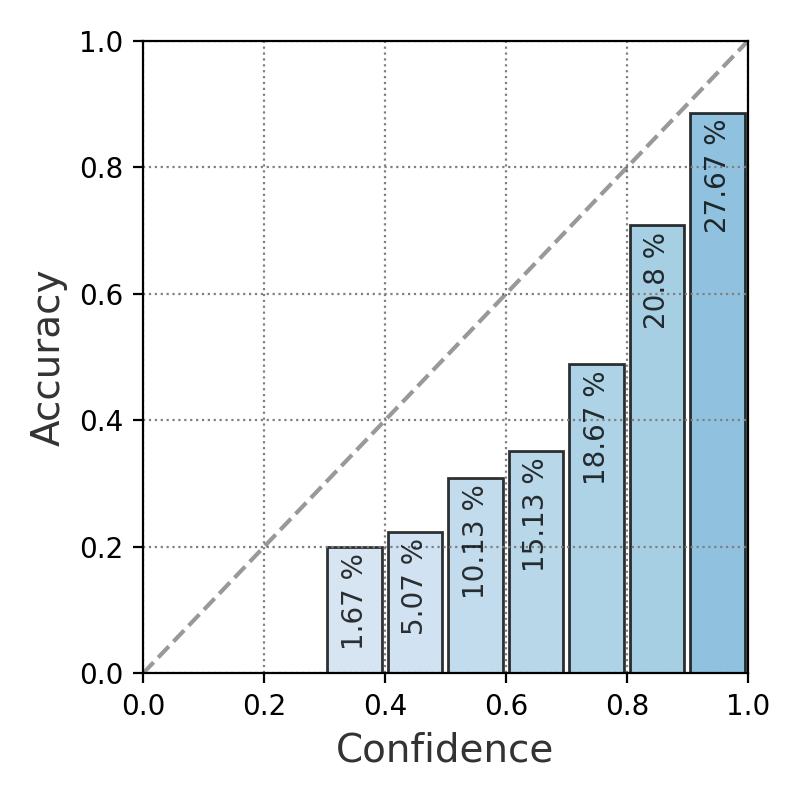}
        \caption{Seq.\@ likelihood.}
        \label{subfig:seq-likelihood}
    \end{subfigure}
    \hfill
    \begin{subfigure}[t]{0.32\textwidth}
        \centering
        \includegraphics[width=\textwidth]{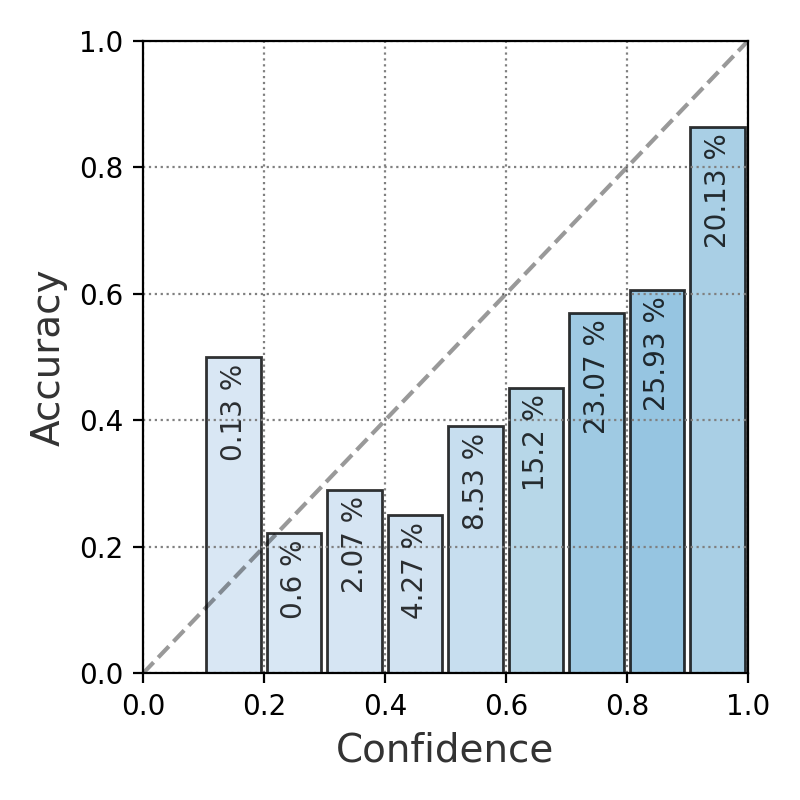}
        \caption{Seq.\@ likelihood (CoT).}
        \label{subfig:temperature-scaling}
    \end{subfigure}
    \hfill
    \begin{subfigure}[t]{0.32\textwidth}
        \centering
        \includegraphics[width=\textwidth]{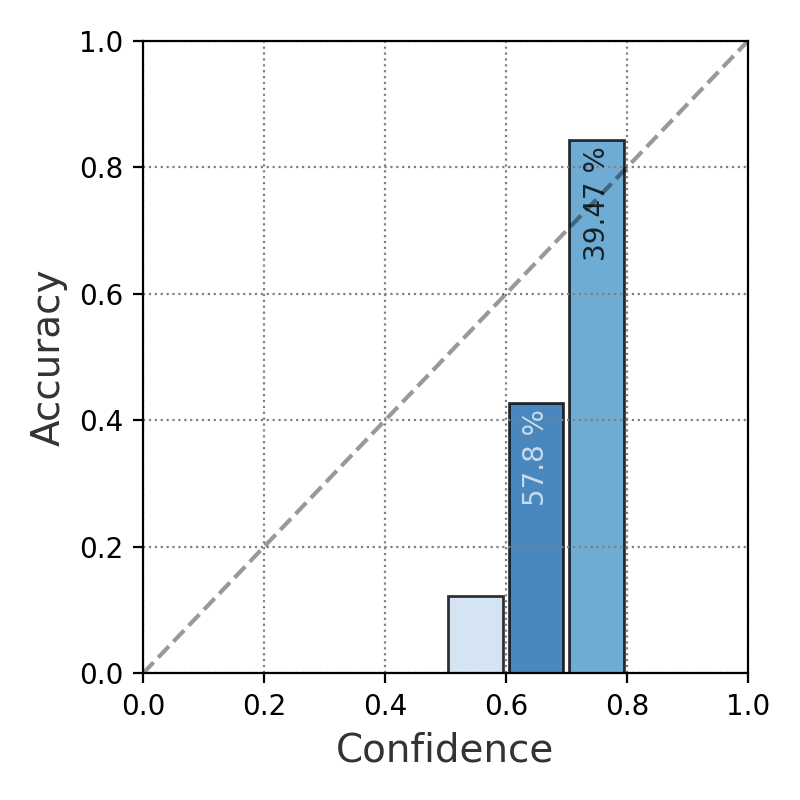}
        \caption{Platt scaling.}
        \label{subfig:verbalized-percentage}
    \end{subfigure}
    \hfill
    \begin{subfigure}[t]{0.32\textwidth}
        \centering
        \includegraphics[width=\textwidth]{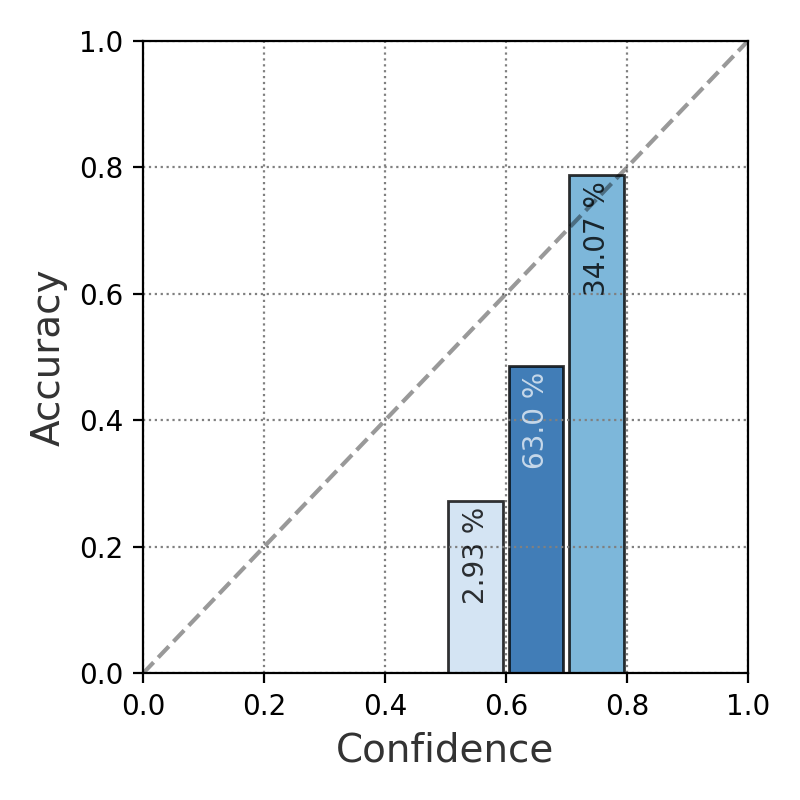}
        \caption{Platt scaling (CoT).}
        \label{subfig:verbalized-qualitative}
    \end{subfigure}
    \begin{subfigure}[t]{0.32\textwidth}
        \centering
        \includegraphics[width=\textwidth]{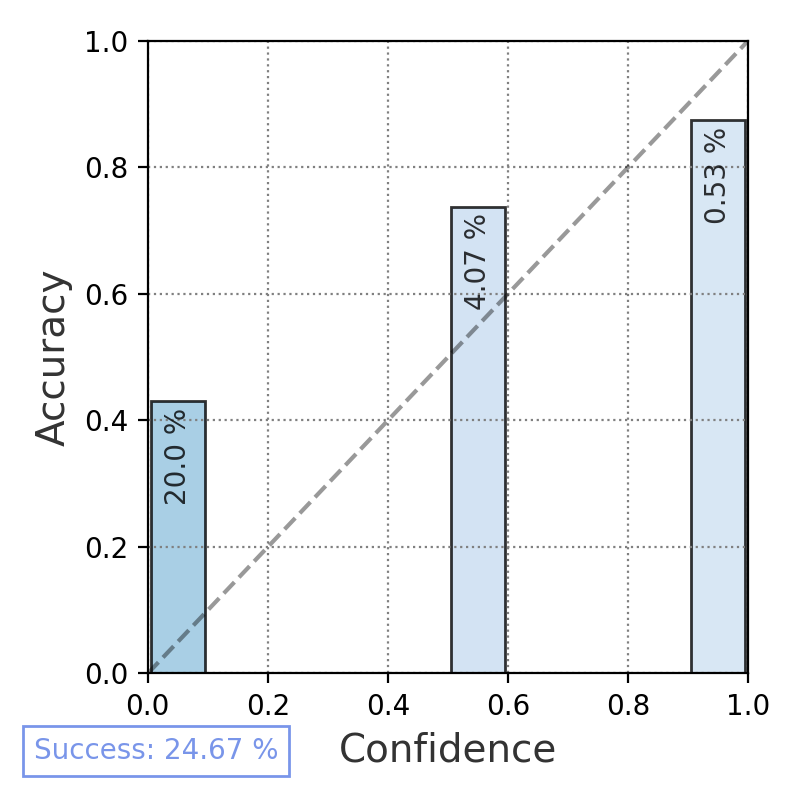}
        \caption{Verbalized Qual.\@}
        \label{subfig:seq-likelihood}
    \end{subfigure}
    \hfill
    \begin{subfigure}[t]{0.32\textwidth}
        \centering
        \includegraphics[width=\textwidth]{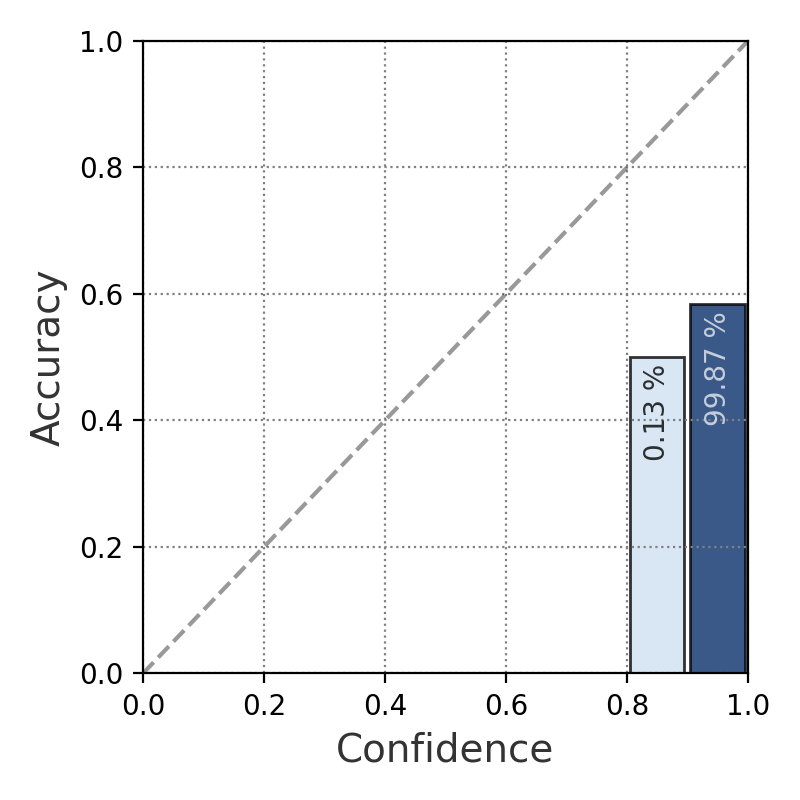}
        \caption{Verbalized $\%$.}
        \label{subfig:temperature-scaling}
    \end{subfigure}
    \hfill
    \begin{subfigure}[t]{0.32\textwidth}
        \centering
        \includegraphics[width=\textwidth]{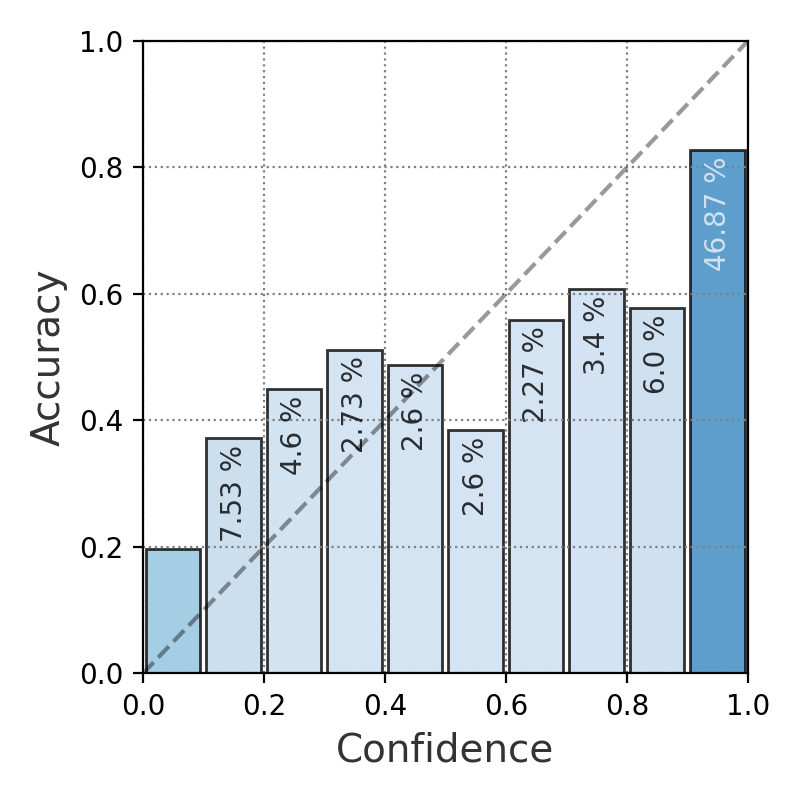}
        \caption{Auxiliary (binary).}
        \label{subfig:verbalized-percentage}
    \end{subfigure}
    \begin{subfigure}[t]{0.32\textwidth}
        \centering
        \includegraphics[width=\textwidth]{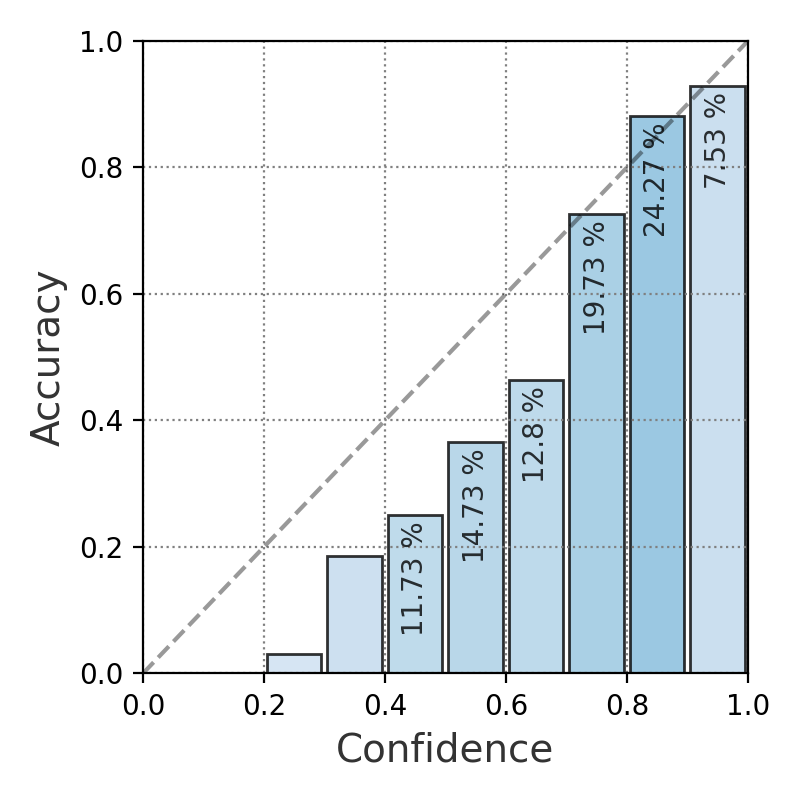}
        \caption{Auxiliary (clustering).}
        \label{subfig:verbalized-qualitative}
    \end{subfigure}
    \hfill
    \caption[Reliability diagrams for our different methods for Vicuna v1.5 on TriviaQA.]{Reliability diagrams for our different methods using $10$ bins each for Vicuna v1.5 on TriviaQA. The color as well as the percentage number within each bar indicate the proportion of total points contained in each bin.}\label{fig:reliabiliy-diagrams-vicuna-trivia-qa}
\end{figure*}

\paragraph{Results.} Vicuna v1.5 7B achieves $58 \%$ accuracy on TriviaQA and $44 \%$ on CoQA, while GPT-3.5 obtains $85 \%$ and $55 \%$ accuracy, respectively.\footnote{
    We use the same heuristic based on thresholded ROUGE-L\index{ROUGE} scores as in \cref{sec:case-study} or \citet{kuhn2023semantic} to determine whether an answer is correct. 
    Since GPT-3.5 is a closed-source model, it is hard to say whether the higher accuracy scores are due to better model quality, test data leakage, or overlap in questions in the case of TriviaQA \citep{merlo2021question}.} 
We present the calibration results in \cref{tab:calibration-results}. 
APRICOT \peach\@\index{APRICOT} achieves the highest AUROC\index{AUROC} in all settings and among the lowest Brier scores and calibration errors.
On the latter metric, verbalized confidence beats our method, but often at the cost of a higher worst-case calibration error and lower AUROC.\index{AUROC} 
The effect of CoT prompting on calibration, however, remains inconsistent across different baselines.
Lastly, APRICOT \peach\@\index{APRICOT} with clustering beats the use of binary targets for Vicuna v1.5 and GPT-3.5 on both TriviaQA and CoQA.
We also juxtapose reliability diagrams for the different methods for Vicuna v1.5 on TriviaQA in \cref{fig:reliabiliy-diagrams-vicuna-trivia-qa} (we show the other reliability diagrams, including for GPT-3.5, in \cref{app:additional-calibration}).
Here it becomes clear that verbalized uncertainties\index{Uncertainty!Verbalized} approaches usually do not emit a wide variety of confidence scores.\index{Confidence} 
This is in line with observations by \citet{zhou2023navigating}, who hypothesize the distribution of expressions generated by verbalized uncertainty heavily depend on the mention of e.g.\@ percentage values in the model's training data. 
While \cref{fig:reliabiliy-diagrams-gpt35-trivia-qa-full} shows that GPT-3.5 provides more variety in this regard, the overall phenomenon persists.\\

\begin{table*}[htb]
    \centering 
    \renewcommand{\arraystretch}{1.35}
    \resizebox{0.65\textwidth}{!}{
    \begin{tabular}{@{}lrrrr@{}}
        \toprule
         & \multicolumn{2}{c}{Vicuna v1.5} & \multicolumn{2}{c}{GPT-3.5} \\
         \cmidrule(lr){2-3} \cmidrule(lr){4-5} 
         Method & TriviaQA & CoQA & TriviaQA & CoQA \\
         \toprule
         Verb.\@ Qual.\@ & $.19$ & $.66$ & $1.00$ & $1.00$ \\
         Verb.\@ Qual.\@ (CoT) & $.25$ & $.73$ & $1.00$ & $1.00$ \\
         Verb.\@ \% & $1.00$ & $.99$ & $1.00$ & $1.00$ \\
         Verb.\@ \% (CoT) & $1.00$ & $.99$ & $.99$ & $.58$ \\
        \bottomrule
    \end{tabular}%
    }
    \caption[Consistency of verbalized uncertainty methods for Vicuna v1.5 and GPT-3.5 on TriviaQA and CoQA.]{Consistency of verbalized uncertainty methods for Vicuna v1.5 and GPT-3.5 on TriviaQA and CoQA.}\label{tab:consistency-verbalized-uncertainty}
\end{table*}

\paragraph{Consistency of Verbalized Uncertainty.}\index{Uncertainty!Verbalized}
While verbalized uncertainties often perform well according to calibration error, these results have to be taken with a grain of salt: 
Especially for the relatively small 7B Vicuna v1.5 model, the generations do not always contain the desired confidence expression, as visible by the low consistency in \cref{tab:consistency-verbalized-uncertainty}.
CoT prompting\index{Prompting!Chain-of-thought} seems to increase the success rate of verbalized uncertainty, and the additional results on GPT-3.5 suggests that this ability might also be dependent on model size.
But even when taking the generated confidence expression, their ability to distinguish potentially correct from incorrect LLM responses remains at or close to random level.
This suggests that due to the skewed distribution of confidence expressions, they can only be well-calibrated on datasets which are easy for the underlying model, which, naturally, is not known a priori.
Next, we conduct some additional analyses based on the clustering-based variant of our method.\\

\subsection{Ablation Study}\label{sec:calibrator-inputs-ablation}

\begin{table*}
    \renewcommand{\arraystretch}{1.5}
    \centering
    \resizebox{.975\textwidth}{!}{
    \begin{tabular}{@{}lcccccccccccc@{}}
        \toprule
         & \multicolumn{4}{c}{Auxiliary Model Input}  & \multicolumn{4}{c}{TriviaQA} & \multicolumn{4}{c}{CoQA} \\
         \cmidrule(lr){2-5} \cmidrule(lr){6-9} \cmidrule(lr){10-13}
        & Quest. & Ans. & CoT & Verb. & Brier$\downarrow$ & ECE$\downarrow$  & smECE$\downarrow$ & AUROC$\uparrow$ & Brier$\downarrow$ & ECE$\downarrow$  & smECE$\downarrow$ & AUROC$\uparrow$  \\
        \toprule
        \multirow{7}{*}{\rotatebox{90}{Vicuna v1.5 (white-box)}} 
        & \gcheck & \rcross & \rcross & \rcross & $.21{\scriptstyle\ \pm.00}$& $\mathbf{.07}{\scriptstyle\ \pm.01}$& $\underline{\mathbf{.06}}{\scriptstyle\ \pm.01}$& $.74{\scriptstyle\ \pm.01}$ & $.22{\scriptstyle\ \pm.00}$& $\underline{\mathbf{.03}}{\scriptstyle\ \pm.01}$& $\mathbf{.03}{\scriptstyle\ \pm.00}$& $.70{\scriptstyle\ \pm.01}$ \\
        & \gcheck & \gcheck & \rcross & \rcross & $\mathbf{.18}{\scriptstyle\ \pm.00}$& $.09{\scriptstyle\ \pm.01}$& $.09{\scriptstyle\ \pm.01}$& $\underline{\mathbf{.83}}{\scriptstyle\ \pm.01}$ & $\underline{\mathbf{.18}}{\scriptstyle\ \pm.00}$& $.04{\scriptstyle\ \pm.01}$& $.04{\scriptstyle\ \pm.01}$& $\underline{\mathbf{.82}}{\scriptstyle\ \pm.01}$ \\
        & \gcheck & \gcheck & \rcross & {\color{green!55!black}\textbf{Qual.}} & $\mathbf{.18}{\scriptstyle\ \pm.00}$& $.08{\scriptstyle\ \pm.01}$& $.08{\scriptstyle\ \pm.01}$& $.82{\scriptstyle\ \pm.01}$ & $.19{\scriptstyle\ \pm.00}$& $.04{\scriptstyle\ \pm.01}$& $.04{\scriptstyle\ \pm.01}$& $.79{\scriptstyle\ \pm.01}$ \\
        & \gcheck & \gcheck & \rcross & {\color{green!55!black}$\mathbf{\%}$} & $\mathbf{.18}{\scriptstyle\ \pm.00}$& $\mathbf{.07}{\scriptstyle\ \pm.01}$& $.07{\scriptstyle\ \pm.01}$& $.82{\scriptstyle\ \pm.01}$ & $\mathbf{.18}{\scriptstyle\ \pm.00}$& $\mathbf{.03}{\scriptstyle\ \pm.01}$& $\mathbf{.03}{\scriptstyle\ \pm.01}$& $.80{\scriptstyle\ \pm.01}$ \\
        & \gcheck & \gcheck & \gcheck & \rcross & $.19{\scriptstyle\ \pm.01}$& $\mathbf{.07}{\scriptstyle\ \pm.01}$& $.07{\scriptstyle\ \pm.01}$& $.80{\scriptstyle\ \pm.01}$  & $.21{\scriptstyle\ \pm.00}$& $.04{\scriptstyle\ \pm.01}$& $\mathbf{.03}{\scriptstyle\ \pm.01}$& $.74{\scriptstyle\ \pm.01}$ \\
        & \gcheck & \gcheck & \gcheck & {\color{green!55!black}\textbf{Qual.}} & $.19{\scriptstyle\ \pm.00}$& $.08{\scriptstyle\ \pm.01}$& $.08{\scriptstyle\ \pm.01}$& $.80{\scriptstyle\ \pm.01}$ & $.22{\scriptstyle\ \pm.00}$& $\mathbf{.03}{\scriptstyle\ \pm.01}$& $\mathbf{.03}{\scriptstyle\ \pm.01}$& $.70{\scriptstyle\ \pm.01}$ \\
        & \gcheck & \gcheck & \gcheck & {\color{green!55!black}$\mathbf{\%}$} & $\mathbf{.18}{\scriptstyle\ \pm.00}$& $\mathbf{.07}{\scriptstyle\ \pm.01}$& $.07{\scriptstyle\ \pm.01}$& $.81{\scriptstyle\ \pm.01}$ & $.20{\scriptstyle\ \pm.00}$& $\mathbf{.03}{\scriptstyle\ \pm.01}$& $\mathbf{.03}{\scriptstyle\ \pm.00}$& $.75{\scriptstyle\ \pm.01}$ \\
        \midrule
        \multirow{7}{*}{\rotatebox{90}{GPT-3.5 (black-box)}} 
         & \gcheck & \rcross & \rcross & \rcross & $\mathbf{.12}{\scriptstyle\ \pm.01}$& $.05{\scriptstyle\ \pm.01}$& $.05{\scriptstyle\ \pm.01}$& $.71{\scriptstyle\ \pm.03}$ & $.21{\scriptstyle\ \pm.00}$& $.03{\scriptstyle\ \pm.01}$& $.03{\scriptstyle\ \pm.01}$& $.72{\scriptstyle\ \pm.01}$ \\
        & \gcheck & \gcheck & \rcross & \rcross & $\mathbf{.12}{\scriptstyle\ \pm.01}$& $.06{\scriptstyle\ \pm.01}$& $.06{\scriptstyle\ \pm.01}$& $.72{\scriptstyle\ \pm.02}$ &  $\underline{\mathbf{.18}}{\scriptstyle\ \pm.01}$& $.04{\scriptstyle\ \pm.02}$& $.04{\scriptstyle\ \pm.02}$& $\underline{\mathbf{.82}}{\scriptstyle\ \pm.02}$ \\
        & \gcheck & \gcheck & \rcross & {\color{green!55!black}\textbf{Qual.}} &  $\mathbf{.12}{\scriptstyle\ \pm.01}$& $\mathbf{.03}{\scriptstyle\ \pm.01}$& $\mathbf{.03}{\scriptstyle\ \pm.01}$& $.72{\scriptstyle\ \pm.03}$ & $\mathbf{.18}{\scriptstyle\ \pm.01}$& $\mathbf{.02}{\scriptstyle\ \pm.01}$& $\mathbf{.02}{\scriptstyle\ \pm.00}$& $.80{\scriptstyle\ \pm.01}$ \\
        & \gcheck & \gcheck & \rcross & {\color{green!55!black}$\mathbf{\%}$} & $\mathbf{.12}{\scriptstyle\ \pm.01}$& $\underline{\mathbf{.03}}{\scriptstyle\ \pm.01}$& $\underline{\mathbf{.03}}{\scriptstyle\ \pm.01}$& $.72{\scriptstyle\ \pm.02}$ & $\mathbf{.18}{\scriptstyle\ \pm.00}$& $.04{\scriptstyle\ \pm.01}$& $.03{\scriptstyle\ \pm.00}$& $.80{\scriptstyle\ \pm.01}$ \\ 
        & \gcheck & \gcheck & \gcheck & \rcross & $\mathbf{.12}{\scriptstyle\ \pm.01}$& $.06{\scriptstyle\ \pm.01}$& $.06{\scriptstyle\ \pm.01}$& $.72{\scriptstyle\ \pm.02}$ & $.21{\scriptstyle\ \pm.00}$& $.03{\scriptstyle\ \pm.01}$& $.03{\scriptstyle\ \pm.01}$& $.72{\scriptstyle\ \pm.01}$ \\
        & \gcheck & \gcheck & \gcheck & {\color{green!55!black}\textbf{Qual.}} & $\mathbf{.12}{\scriptstyle\ \pm.01}$& $.04{\scriptstyle\ \pm.01}$& $.04{\scriptstyle\ \pm.01}$& $\underline{\mathbf{.73}}{\scriptstyle\ \pm.02}$ & $.21{\scriptstyle\ \pm.00}$& $.04{\scriptstyle\ \pm.01}$& $.04{\scriptstyle\ \pm.01}$& $.72{\scriptstyle\ \pm.01}$ \\
        & \gcheck & \gcheck & \gcheck & {\color{green!55!black}$\mathbf{\%}$} & $\mathbf{.12}{\scriptstyle\ \pm.01}$& $.04{\scriptstyle\ \pm.01}$& $.04{\scriptstyle\ \pm.01}$& $.64{\scriptstyle\ \pm.02}$ & $.21{\scriptstyle\ \pm.00}$& $\mathbf{.02}{\scriptstyle\ \pm.01}$& $\mathbf{.02}{\scriptstyle\ \pm.00}$& $.72{\scriptstyle\ \pm.01}$  \\
        \bottomrule
    \end{tabular}%
    }
    \caption[Calibration results for Vicuna v1.5 and GPT-3.5 on TriviaQA and CoQA using the auxiliary (clustering) method.]{Calibration results for Vicuna v1.5 and GPT-3.5 on TriviaQA and CoQA using the auxiliary (clustering) method. We bold the best results per dataset, method and model.
    }\label{tab:calibration-results-features}
\end{table*}


The previous results pose the question of which parts of input the auxiliary model actually learns from.
So, analogous to the different prompting strategies in \cref{fig:prompting-methods}, we explore different input variants:
First, we test a question-only setting, where the target LLM's answer is omitted completely.
We also test the performance of the calibrator when given more information, for instance the model answer with and without chain-of-thought prompting\index{Prompting!Chain-of-thought}, which could potentially expose flaws in the LLM's response.\footnote{Based on the recent study by \citet{turpin2023language}, we assume that CoT does \emph{not} expose the LLM's actual reasoning. Nevertheless, it provides more context about the given answer.}
Finally, we also expose the verbalized uncertainty of the LLM to the calibrator.\index{Uncertainty!Verbalized}

\paragraph{Results.} We show these results in \cref{tab:calibration-results-features} in \cref{app:additional-calibration}. 
Interestingly, we can observe that even based on the question to the LLM alone, APRICOT \peach\@ can already achieve respectable performance across all metrics.
This suggests that the calibrator at least partially learns to infer the difficulty of the LLM answering a question from the type of question alone.
Nevertheless, we also find that adding the LLM's actual answer further improves results, with additional gain when using CoT prompting.\index{Prompting!Chain-of-thought}
In some cases, the calibration error can be improved when using the LLM's verbalized uncertainties; in this sense, we can interpret the role of the calibrator as mapping the model's own assessment to a calibrated confidence score.

\section{Discussion}

Despite the difficulty of predicting the LLM's confidence\index{Large language model}\index{Confidence} from its generated text alone, our experiments have shown that APRICOT \peach\@\index{APRICOT} can be used to produce reasonable scores even under these strict constraints.
We showed in the past sections that the auxiliary model can be finetuned to learn from multiple signals.
On the one hand, the auxiliary calibrator learns a mapping from a latent category of question to the expected difficulty for a target LLM.
On the other hand, including the answer given through CoT prompting\index{Prompting!Chain-of-thought} and including the LLM's own assessment of its uncertainty helped to further improve results.
While sometimes beaten in terms of calibration error, our method consistently outperforms our baselines in error detection AUROC\index{Error detection}\index{AUROC}, meaning that it can provide the best signal to detect wrong LLM answers.
Compared to other approaches, this yields some desirable properties: 
APRICOT \peach\@\index{APRICOT} is available when sequence likelihood\index{Likelihood!Sequence} is not; it is more reliable than verbalized uncertainty\index{Uncertainty!Verbalized}; and it only needs a light finetuning once, adding negligible inference overhead.
Compared to other methods such as \citet{kuhn2023semantic, lin2023generating} in \cref{sec:uncertainty-nlp}, it also does not require more generations for the same input, reducing the more expensive LLM inference costs.

\paragraph{Limitations.} While yielding generally positive results in our case, the clustering methodology from \cref{sec:setting-calibration-targets} requires access to a sufficiently expressive sentence embedding model and a large enough number of data points.
When this is not given, we show that the binary approach---tuning the auxiliary model to predict errors---is a viable alternative.
As any neural model, the auxiliary calibrator is vulnerable to distributional shift\index{Shift!Distributional} and out-of-distribution data\index{Out-of-distribution data}.
Further research could help to understand how this issue can be reduced and which parts of the input the model identifies to predict confidence scores in order to unveil potential shortcut learning \citep{du2023shortcut}. 
Our experiments focused on open-ended question-answering\index{Question-answering} tasks, which provide a fairly easy way to check answer correctness.
In other types of language generation\index{Natural language generation} such as summarization\index{Text summarization}, translation\index{Machine translation} or open text generation, this notion of correctness is not given.

\section{Summary}

In this chapter, we presented APRICOT \peach\index{APRICOT}, a general method to obtain confidence scores from any language model on the input and text output alone. 
We showed that it is possible to compute calibration targets through the clustering of question embeddings.\index{Calibration}
Through the subsequent finetuning of a smaller language model, we then outperform other methods to distinguish incorrect from correct answers with competitive calibration scores, on different models and datasets.
While we only presented a first, more fundamental version this approach in this work, it lends itself naturally to a whole body of research that aims to improve the calibration of pretrained language models \citep{desai2020calibration, jian2021how, chen2023close}. 
Lastly, future studies might also investigate the uncertainty\index{Uncertainty} of the auxiliary model itself and use techniques such as conformal prediction\index{Conformal prediction} in \cref{sec:frequentist-neural-networks} to produce estimates of LLM\index{Large language model} confidence \emph{intervals}.


\chapter{Discussion} 

\label{ch:discussion} 

\begin{tikzpicture}[remember picture,overlay]
    \node[anchor=north,inner sep=0pt] at (current page text area.north) {\includegraphics[width=\linewidth, clip=true, trim = 8cm 75cm 8cm 50cm]{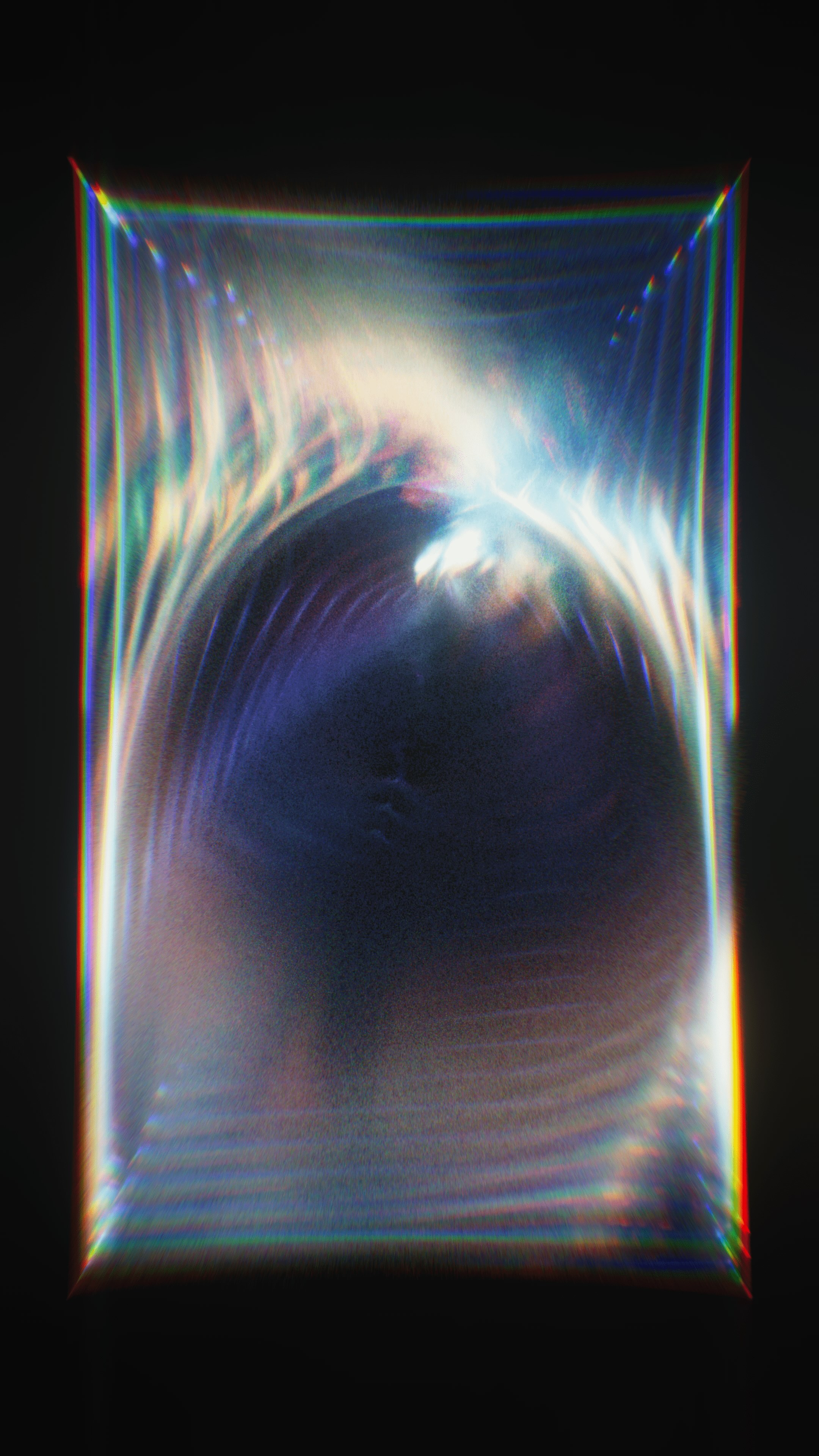}};
\end{tikzpicture}

\epigraph{
    ``\emph{When Sha Monk opened up a scroll of scripture that the other two disciples were clutching, his eyes perceived only snow-white paper without a trace of so much as half a letter on it. 
    Hurriedly he presented it to Tripitaka, saying, `Master, this scroll is wordless!' 
    Pilgrim also opened a scroll and it, too, was wordless. 
    Then Eight Rules opened still another scroll, and it was also wordless.
    `Open all of them!' cried Tripitaka. 
    Every scroll had only blank paper.}''
}
{
    ---\emph{The Journey to the West} (\begin{CJK*}{UTF8}{gbsn}西游记\end{CJK*}), Ch.\@ 94, as translated and edited by Anthony C.\@ Yu (1977). 
}

The last chapters have explored the various different definitions of and perspectives on uncertainty\index{Uncertainty} and how they materialize in the fields of machine learning\index{Machine learning} and natural language processing\index{Natural language processing}.
Despite the usefulness of uncertainty quantification\index{Uncertainty quantification} for a whole spectrum of applications (\cref{sec:applications-uncertainty}) and its importance to avoid negative outcomes and to build trust\index{Trust} in automation (\cref{sec:uncertainty-trust}),
a somewhat fractured research landscape emerges:
Uncertainty still remains a very under-defined and under-researched topic, especially in natural language processing.
Uncertainty within the experimental pipeline often stays unaddressed or outright ignored;
Uncertainty modeling poses a challenge under the current large language model\index{Large language model} paradigm and the successes and failures of uncertainty quantification\index{Uncertainty quantification} are equally poorly understood.
The efforts described in \cref{ch:uncertainty-experimental-design,ch:uncertainty-classification,ch:uncertainty-generation,ch:uncertainty-llms} can only work as a step to mitigate this fact,
and thus dedicate this chapter to revisit the initial research goals defined in this thesis, and discuss a number of fundamental open questions and research directions.

\section{Discussion of Research Questions}\label{sec:research-goals-discussion}

This thesis gave an overview over different notions of uncertainty from the perspectives of statistics\index{Statistics}, linguistics\index{Linguistics}, deep learning\index{Deep learning} and NLP\index{Natural language processing} in \cref{ch:background}, also discussing how uncertainty can be communicated and how it interacts with human-AI trust\index{Trust!Human-AI}.
The influence of uncertainty on the experimental pipeline was analyzed in \cref{ch:uncertainty-experimental-design}, where we could see how more careful experimental design\index{Experimental design} allows to quantify uncertainty in results, reduce it, and even open up new avenues for modeling it.
Some of the limits of uncertainty quantification for text classification\index{Text classification} were demonstrated in \cref{ch:uncertainty-classification} using the theoretical case of ReLU\index{ReLU} networks and a large variety of different models applied to text classification tasks in English, Danish and Finnish.
Lastly, non-exchangeable conformal prediction\index{Conformal prediction} enables us to develop a method to obtain calibrated token sets for generation in \cref{ch:uncertainty-generation} and APRICOT \peach\@, a method to obtain calibrated confidence scores from black-box LLMs in \cref{ch:uncertainty-llms}.
Based on this research, we now return to the research questions posed in \cref{sec:objective} and discuss them in turn.

\begin{researchquestion}\label[researchquestion]{rq1}
    \emojimangnifyingglass\textbf{RQ1}: How can uncertainty in NLP be characterized?
\end{researchquestion}

In \cref{ch:background} we discussed the multi-faceted views on uncertainty from a variety of perspectives, all of which coalesce in modern NLP applications.
This includes the linguistic uncertainties present in the input data, interacting with the statistical uncertainties lingering in the modeling aspect.\\

Linguistically, uncertainty materializes as an inherent property of language in the form of underspecification\index{Underspecification}, ambiguity\index{Ambiguity} and vagueness\index{Vagueness} (\cref{sec:uncertainty-linguistics}), but also as a tool for humans to express their state of knowledge about the world (\cref{sec:expressing-uncertainty}; 
this can also be used by language models to communicate uncertainty, see \cref{sec:communicating-uncertainty}).
Statistically, uncertainty is treated differently in the frequentist and Bayesian school of thought:
Frequentists\index{Statistics!Frequentist} see probabilities as the relative frequency of an event under continued repetitions of an experiment.
Bayesians\index{Statistics!Bayesian} interpret them as a degree of belief, with the parameter of interest turning from an unobserved constant into a random variable.
Both perspectives are echoed in the corresponding neural approaches:
Calibration\index{Calibration} techniques and conformal prediction\index{Conformal prediction} on the one hand allow us to create confidence scores that reflect the correctness of the model, or prediction sets\index{Prediction set} contain the ground truth in expectation.
Approximating the neural weights posterior or parameterizing higher-order distributions on the other hand permit a decoupling of different notions of uncertainty.\\

The latter notions mostly refer to \emph{predictive} uncertainty and are for example quantified in terms such as the total, data\index{Uncertainty!Aleatoric}, model\index{Uncertainty!Epistemic} and distributional uncertainties\index{Uncertainty!Distributional}.
As \citet{baan2023uncertainty} point out, these can be seen as a spectrum, in contrast to a fixed set of discrete categories.
This means that steps like data collection can be a source of model uncertainty when data is scarce, and can be reduced when more data is collected.
However it can also produce data uncertainty which, in some instances, can be reduced through e.g.\@ better annotation guidelines.
In this light, the choice of method can be informed by the kind of uncertainty most useful to the problem at hand, and if necessary and possible, the experimental pipeline can be adapted to reduce uncertainty further or to enable better modeling of it (see next \cref{rq2}).
For active learning for instance, we might care most about epistemic or distributional uncertainty and therefore refer to Bayesian or evidential methods, while for error detection we might be satisfied with easy-to-implement estimators of total uncertainty\index{Uncertainty!Total}.\\

It should be noted though that almost all methods discussed so far quantify uncertainty statistically rather than linguistically.
While verbalized uncertainty\index{Uncertainty!Verbalized} (\cref{sec:communicating-uncertainty}) is a step towards expressing uncertainty in words, it (thus far) ignores the rich shades of meaning that are at a human speaker's disposal (\cref{fig:uncertainty-taxonomy}).
Communicating uncertainty to humans can be challenging (\cref{sec:uncertainty-trust}), so more natural verbalized uncertainty could prove to be a fruitful avenue of research.

\begin{researchquestion}\label[researchquestion]{rq2}
    \emojimangnifyingglass\textbf{RQ2}: How can choices in experimental design help to reduce and quantify uncertainty?
\end{researchquestion}

In \cref{ch:uncertainty-experimental-design}, we discussed the role of uncertainty in experimental design in NLP.\index{Experimental design}
There, we argued that careful data collection can help to reduce uncertainty caused by noise, and enable new modeling options through multiple annotations.
Furthermore, hypothesis testing\index{Hypothesis testing} can help to quantify the uncertainty in results and aid model selection.\\

Uncertainty manifests in different stages of the experimental process and is often overlooked outside of the modeling stage;
however, steps that are undertaken to increase reproducibility can help to rein in uncertainty and open modeling options.
In NLP, this is exemplified by publishing all instance annotations (instead of an aggregate) and embracing human disagreements which arise from the ambiguities\index{Ambiguity} in language (\cref{sec:uncertainty-nlp}; \citealp{plank2022problem, baan2023uncertainty}).
As we discuss in \cref{sec:open-questions}, this could for instance be combined with recent advances in evidential deep learning\index{Deep learning!Evidential} to learn higher-order distributions (\cref{sec:evidential-neural-networks}).\\

Additionally, comparing different models, prompts or other settings can be difficult due to the non-linear nature of neural network and their increasing model sizes.
In \cref{sec:aso}, we showed how to quantify this uncertainty in modeling results using the ASO test.\index{Stochastic order!Almost}
As the test is non-parametric, we do not require any knowledge of the underlying distribution of scores.
In the case study in \cref{sec:case-study}, we furthermore demonstrated that even though modern LLMs\index{Large language model} tend to be pretrained, monolithic models, we can perform statistical hypothesis testing\index{Hypothesis testing} by obtaining observations from different prompts and thereby assessing their robustness \citep{mizrahi2023state, sclar2023quantifying}.
We also formalized the different distributions that are compared---in the LLM setting for instance, we keep the model architecture, pretraining data and hyperparameters constant while varying other factors such as prompt design and generation hyperparameters.\index{Prompting}
Generally speaking, all of these settings vary a certain number of variables on which the output is conditioned on, while keeping others fixed.
Although many variations of this setup are plausible, we believe it is important to make underlying assumptions more explicit and vary as many variables as feasible in order to arrive at a well-rounded estimate of model performance.

\begin{researchquestion}\label[researchquestion]{rq3}
    \emojimangnifyingglass\textbf{RQ3}: How do inductive model biases influence\\ uncertainty quantification?
\end{researchquestion}

Inductive biases\index{Inductive bias} describe the modeling assumptions present in a model's architecture and training procedure.
As we saw in \cref{ch:uncertainty-classification}, this can have unintuitive effects on the efficacy of uncertainty estimates, where models may act confidently when faced with OOD\index{Out-of-distribution data} inputs.\\

Many methods for uncertainty quantification\index{Uncertainty quantification} equip a model with some sort of metric that operates on the model's output and translates it into a usually scalar measure of its uncertainty.
While these have some expected or desired behaviors---such as the predictive entropy\index{Entropy!Predictive} being high on OOD data---this is often not true in practice.
This was illustrated for instance using ReLU\index{ReLU} networks in \cref{sec:uq-classification-pitfalls}:
Due to the inductive bias of the architecture, the network induces linear decision regions in the feature space, leading uncertainty metrics to provably converge to fix points in the limit (instead of being sensitive to the degree of familiarity with an input).\\

One might criticize the argument about ReLU networks for being too simplistic, since modern deep learning architecture are much more complex;
and while it is true that this fact prevents similar proofs, we empirically identified similar problems on a large variety of text classification models in \cref{sec:benchmarking-nlp-uncertainty}.
We explicitly tested a low-resource setting (simulated for English), where training data is scarce and behavior on OOD might be unreliable.
By testing on OOD test sets, we could show that similar failures occur in practice and that uncertainty measures are unable to effective distinguish in-distribution from foreign data.\\

How can we explain this behavior?
One possible hypothesis is to look at this problem through the lens of the information bottleneck principle\index{Information bottleneck principle} \citep{tishby2000information, tishby2015deep, saxe2019information}:
Neural predictors often map input representations into lower-dimensional latent spaces.
This way, they are incentivized during training firstly to recover the correct prediction, and secondly  to compress the input in a way that supports the first goal.
Intuitively, we can assume that this learned compression will favor features that are most useful to the predictive task, not necessarily ones that are useful to indicate uncertainty.
Indeed, some works in anomaly detection\index{Anomaly detection} have noted that neural models might fail to encode novel, unseen features that might indicate that a test point is out-of-distribution\index{Out-of-distribution data} \citep{dietterich2022familiarity, sivaprasad2023going}.
In addition, other works have noted how in- and out-of-distribution features overlap in latent space \citep{van2021feature}.
But these features are exactly what should indicate model uncertainty\index{Uncertainty!Epistemic}, since the model is likely to be misspecified on points different from the training distribution!
This means that this dynamic might make uncertainty quantification unreliable in cases where we cannot obtain good estimates of epistemic uncertainty, or where epistemic uncertainty accounts for a large portion of the total uncertainty.
In the theoretical analysis in \cref{sec:uq-classification-pitfalls}, uncertainty estimates can still be useful in regions of class overlap (hence, aleatoric uncertainty\index{Uncertainty!Aleatoric}), but fail to be informative in regions without model training data due to their convergence to fix points.
In the empirical study in \cref{sec:benchmarking-nlp-uncertainty}, we observe that the quality of uncertainty estimates can decrease as we add more training data, potentially due to the selective compression phenomenon.
From this we can deduce that the inductive biases\index{Inductive bias} of standard architectures are insufficient for reliable uncertainty quantification\index{Uncertainty quantification}, and better inductive biases are needed.\\

One possible solution of this lies in directly modeling the data density.
Language models do this already by assigning probabilities to entire sequences; 
however, \cref{sec:calibration-experiments} and \citet{kumar2019calibration} showed that sequence likelihoods\index{Likelihood!Sequence} are insufficient for error prediction, and other studies such as \citet{ren2022out} have demonstrated their failure on OOD detection.
This can be explained by the fact that language models are trained on only a single sequence in a combinatorically large space of possible continuations.
This automatically implies a sort of data scarcity\index{Data scarcity}, where the model fails to adequately capture the paraphrasticity\index{Paraphrasticity} of language (see \cref{sec:uncertainty-linguistics}).
\citet{lebrun2022evaluating} discovered how language models tend to overestimate the probability of frequent sequences and underestimate the ones coming from the tail end of the sequence distribution, with similar findings by \citet{ilia2024predict, liu2024infini}.\footnote{
    This phenomenon might also be the culprit behind the inadequacy of sampling from the mode in NLG, see for instance \citet{eikema2020map, holtzman2020curious, eikema2024effect}.
}\\

As another approach to better inductive biases for UQ\index{Inductive bias}, one might choose to model the distribution of latent representations instead. 
This is for instance done through normalizing flows\index{Normalizing flow} in the case of posterior networks\index{Posterior network} (\cref{sec:evidential-neural-networks}) or some methods regarding direct uncertainty prediction (\cref{sec:other-approaches}).
But since these components are trained on the latent encodings of an underlying model, they can only learn the distribution of latent features that are learned by the main model, and might thus fall into the same trap of not modeling features indicative of model uncertainty that were ``compressed away''\index{Uncertainty!Epistemic}.
This can explain why the DDU Bert in \cref{sec:predictive-uncertainty-experiments} does not attain its best results on OOD detection through the log probability of its latent density estimator, and why posterior networks\index{Posterior network} have been shown to not always detect OOD reliably \citep{kopetzki2021evaluating}.

\begin{researchquestion}\label[researchquestion]{rq4}
    \emojimangnifyingglass\@
    \textbf{RQ4}: How can we address some of the challenges of uncertainty quantification in NLP?\@
\end{researchquestion}

In this thesis, we addressed multiple of the challenges that we laid out in \cref{sec:context}, including data scarcity and sequentiality.
For clarity, we will discuss them here in turn and the corresponding insights gained from this work.

\paragraph{Challenges of Natural Language.} 
In this thesis, we mainly worked towards solving two of the challenges that come with natural language data, namely its diversity and sequentiality.
On the one hand, \cref{sec:benchmarking-nlp-uncertainty} tested different uncertainty methods for text classification\index{Text classification} on three different languages and OOD test sets that introduce novel domains.
While general trends are visible across all settings, we can also see that the best uncertainty quality in terms of model and corresponding metric differs across datasets.
This suggests that there might be complex underlying interactions between the model and the types of uncertainty that OOD data evokes in it, the uncertainty quantification method, and language-specific characteristics.\footnote{
    The ability to model linguistic idiosyncrasy's can to some degree also be influenced by the quality of tokenization and therefore the models' uncertainty.
    For investigation into the first point, refer e.g.\@ to \citet{graen2018cutter, virtanen2019multilingual, singh2019bert, rust2020good, pfeiffer2020unks, mielke2021between, maronikolakis2021wine}.
}
For the non-exchangeable conformal language generation\index{Non-exchangeable conformal language generation} in \cref{ch:uncertainty-generation}, we also tested on German and Japanese as different source language for the machine translation\index{Machine translation} task.
We measured coverage\index{Coverage}, namely whether conformal prediction sets\index{Prediction set} contain the ground truth continuation, and translation quality, but found only minor differences between languages, with similar trends across tested methods.
Importantly, this method addresses the sequentiality issue in natural language:
Even though it is possible to conformalize language generation\index{Natural language generation} on a sequence-level where the i.i.d.\@ assumption is maintained (see \citealp{quach2023conformal}), we were able to provide a method on a token-level that provides a well-motivated framework. 
This is different compared to cases like \citet{ravfogel2023conformal}, who operate on a token-level but have to make strong assumptions about the underlying data that might not be realistic in practice.

\paragraph{Data Scarcity.} \index{Data scarcity}
In \cref{sec:benchmarking-nlp-uncertainty}, we explicitly tested low-resource settings by using under-resourced languages\index{Low-resource language} such as Finnish and Danish, and by testing the relationship between training set size and uncertainty quality.
Unsurprisingly, we showed that task performance increases with the amount of data.
More surprisingly, we showed that increased amount of training data can have adverse effects on uncertainty quality on OOD\index{Out-of-distribution data} inputs, for possible reasons we discussed in the answer for \cref{rq3}.

\paragraph{Trust \& Safety.} 
Firstly, this thesis introduced non-exchangeable conformal language generation\index{Non-exchangeable conformal language generation} in \cref{ch:uncertainty-generation}, which provides a way to produce sets of token for generation with conformal guarantees.
Similarly to standard prediction sets\index{Prediction set} in \cref{sec:frequentist-neural-networks}, other ways of truncating the predictive distribution\index{Predictive distribution} over tokens do not provide any guarantees of containing the correct continuation.
Nevertheless, these prediction sets can be conformalized through our calibration method that utilizes information from nearest neighbors from a datastore. 
Not only does the generation process now (approximately) fulfill conformal guarantees, this also opens up new possibilities through the extension of (non-exchangeable) conformal risk control\index{Conformal risk control} \citep{angelopoulos2022conformal, farinhas2024nonexchangeable}:
Future approaches could provide bounds on a wider family of functions, more instance measuring toxicity, veracity or alignment with human values, similar to the works of \citet{mohri2024language, gui2024conformal}.
The latter has already been explored as an on-the-fly procedure (albeit, not conformal) instead of an additional finetuning stage \citep{yang2021fudge, qin2022cold, mudgal2023controlled, gao2024linear}.
The fact that conformal methods can provide statistical guarantees for otherwise unwieldy language models has also spurred additional work on the subject, for instance conformalizing generation on a sequence-level \citep{quach2023conformal}, for prompt selection \citep{zollo2023prompt}, conditional computation \citep{schuster2022confident, ren2023robots}, planning for LLM agents \citep{liang2024introspective}, and for black-box models \citep{su2024api}.
Secondly, for the most restrictive setup in which we are dealing with a black-box LLM\index{Large language model} and only have access to its input and generated text, we proposed APRICOT \peach\@ in \cref{ch:uncertainty-llms}.
We demonstrated that even in this context, using a secondary auxiliary model enables us to predict the target LLMs confidence\index{Confidence} reliably.
We also showed that by clustering the latent presentation of inputs, we can use these clusters to obtain more fine-grained information about the expected performance of the LLM on a certain category of inputs.
While we leave further exploration of this question to future work, it is intuitive to assume that this very extreme setup has limits on the reliability of confidence estimates.
In this way, we can order different methods on a spectrum from full access to the model, including latent representations, to access to logits and the predictive distribution to text-only access.
Some works have found that OOD inputs are detectable based on the model's hidden representations \citep{yoo2022detection, ren2022out}, with similar insights for hallucination detection \citep{ferrando2022towards, guerreiro2023hallucinations, chwang2023androids, duan2024llms} and general uncertainty quantification \citep{vazhentsev2023hybrid, liu2024uncertainty}, potentially suggesting a link back to the discussion about encoded and undecoded latent features from the previous \cref{rq3}.

\section{Open Questions \& Future Research Directions}\label{sec:open-questions}



The answers to \cref{rq1,rq2,rq3,rq4} can only provide partial steps towards solving any of these complex questions.
As this thesis has argued, the topic of uncertainty quantification\index{Uncertainty quantification} in NLP lies in the intersection of multiple different fields such as statistics\index{Statistics}, linguistics\index{Linguistics} and deep learning\index{Deep learning}.
It has only recently started to garner more attention, as for instance demonstrated by the first UncertaiNLP workshop \citep{vazquez2024proceedings}, related surveys \citep{baan2023uncertainty, hu2023uncertainty, geng2023survey, campos2024conformal} or other dissertations \citep{he2024uncertainty}.
This creates ample space for future research, which we outline next.

\subsection{Modeling Uncertainty}\label{sec:modeling-uncertainties}

One focus of research about uncertainty in deep learning is---and has been---its modeling.
Despite the manifold of works in this direction however, a number of many open directions of research remain.
This includes everything from the modeling uncertainty on different input scales, obtaining guarantees, and how to properly represent and explain it.

\paragraph{Influence of Experimental Design.}
\cref{ch:uncertainty-experimental-design}\index{Experimental design} has argued how careful experimental design can reduce or help to quantify uncertainty, for instance by providing clearer annotation guidelines or model selection through statistical hypothesis testing\index{Hypothesis testing}.
An often overlooked aspect is how retaining multiple human labels per training instance also opens up new avenues for better modeling of uncertainty and paraphrasticity \index{Paraphrasticity}\citep{plank2022problem, baan2022stop, baan2023uncertainty}.

\paragraph{Uncertainty with Guarantees.}
Pivotally, uncertainty quantification can only increase trust\index{Trust} in ML systems when the estimate of uncertainty is itself reliable.
As for instance \citet{dhuliawala2023diachronic} showed, unreliable estimates can lead to a loss of trust in the model that can be hard to recover from.
Thus, conformal prediction\index{Conformal prediction} currently is a very promising research direction, since it supplies statistical guarantees about predictions that are furthermore agnostic to the underlying predictor.
This flexibility has enables the flurry of conformal works in NLP\index{Natural language processing} (e.g.\@ \citealp{schuster2022confident, ravfogel2023conformal, quach2023conformal, zollo2023prompt, su2024api, ulmer2024non, campos2024conformal}).
Conformal prediction however comes with two caveats:
Coverage is only guaranteed in expectation, and is \emph{marginal}\index{Coverage!Marginal} rather than \emph{conditional}\index{Coverage!Conditional}, i.e.\@ the guarantee is $p( y^\prime \in \mathcal{C}(\bx^\prime)) \ge 1 - \alpha$ rather than $p( y^\prime \in \mathcal{C}(\bx^\prime) \mid \bx^\prime) \ge 1 - \alpha$.
Unfortunately, conditional coverage is generally deemed unachievable under finite samples, with the guarantee approximately being fulfilled in some situations \citep{vovk2012conditional, foygel2021limits, gibbs2023conformal}.
Other ways to circumvent this issue lie in partitioning the dataset (similar to the binning in the ECE\index{Expected calibration error}, see \citealp{feldman2021improving, gibbs2023conformal, jin2024confidence}) or conditioning on the label $y^*$ instead of the input (see mondrian conformal predictors; \citealp{vovk2005algorithmic}).
Therefore, future research could investigate conformalizing other uncertainty methods or extending existing guarantees.

\paragraph{Hierarchical Uncertainty.}
Compared to other input modalities such as images, uncertainty in NLP exists on different scales.
Starting from (subword-)token uncertainty, uncertainty can also exist on a sequence, utterance, or paragraph or even dialogue-level.
So far, most uncertainty quantification techniques operate on a token-level or sequence-level, with pioneering work on higher scales such as the dialogue-level \citep{sicilia2024deal}.
While there are some theoretical frameworks like \citet{malinin2021uncertainty} to model how uncertainty from tokens affects the uncertainty in sequences, this is only given for certain metrics.
Therefore, an open question remains how to estimate uncertainty on these different levels and how uncertainty can be decomposed into smaller units.

\paragraph{Representing Uncertainty.}
In this thesis, we have mostly focused on representing uncertainty in the form of single scalars or prediction sets\index{Prediction set}.
However, uncertainty can also be represented in many other ways, for instance in the form of a posterior distribution\index{Posterior distribution} or the highest density interval\index{Highest density interval} in \cref{sec:bayesian-perspective}, uncertainty in the latent space \citep{kingma2013auto,rezende2014stochastic,daxberger2019bayesian,kong2020sde,miani2022laplacian},
or even linguistically (see discussion in \cref{sec:future-communicating-uncertainty}).
The representation of uncertainty should therefore not be overly restrictive, embrace the richness in options and explore new representations.

\paragraph{Quantifying Human Uncertainty.}
Most of this thesis was focused on modeling and quantifying the uncertainty in models operating on language data, but one might also want to model the human uncertainty underlying the data directly.
First advances in this direction have been made by estimating the uncertainty in human labels \citep{northcutt2021confident, jiang2023understanding, gruber2024more}, analyzing annotator disagreement \citep{baan2022stop, baan2024interpreting} or comparing the variability of humans to that of NLG systems \citep{giulianelli2023comes, lee2023can, ilia2024predict}.
Furthermore, a number works try to model the uncertainty in humans using neural language models \citep{hu2023expectations} or try to detect linguistic uncertainty\index{Uncertainty!Linguistic} in text \citep{szarvas2012cross, vincze2014uncertainty, kolagar2024aligning}.

\paragraph{Explaining Uncertainty.}
The answer to \cref{rq3} suggest a hypothesis with which the general behavior of uncertainty is influenced by neural inductive biases.
Nevertheless, there also lies tremendous value in understanding how uncertainty actually arises for a specific input.
This can for instance highlight erroneous or noisy parts of an input or help to understand model failure cases (see e.g.\@ \citealp{xu2020understanding} for an application to text summarization).
To this extent, some works have began to apply interpretability techniques to understand predictive uncertainty, including Shapley values\index{Shapley value} \citep{chen2022explaining, watson2024explaining} or feature attribution methods \citep{bley2024explaining}.

\subsection{Limits of Uncertainty Quantification}\label{sec:uq-limits}

Another often overlooked aspect of uncertainty is defining or exploring the boundaries in which the model's uncertainty is expected to operate;
this includes in particular cases in which uncertainty estimates themselves might be uncertain, ill-defined, limited, or reductive, and which are open for further exploration.

\paragraph{Limits of the Aleatoric--Epistemic Dichotomy.}
Uncertainty, in a statistical sense, is traditionally delineated along data (aleatoric)\index{Uncertainty!Aleatoric} and model (epistemic) uncertainty\index{Uncertainty!Epistemic} \citep{hora1996aleatory, der2009aleatory, huellermeier2021aleatoric}.
However, recent works such as \citet{baan2023uncertainty, gruber2023sources} have advocated to reject this dichotomy in favor of placing uncertainties and their sources on a spectrum.
This dichotomy becomes blurred further when considering that more far-reaching decompositions are possible (for instance adding distributional uncertainty like in \cref{sec:evidential-neural-networks}), and that estimates of epistemic uncertainty might be in themselves uncertain \citep{wimmer2023quantifying}.

\paragraph{Higher Order Uncertainties.}
Evidential deep learning\index{Deep learning!Evidential} (\cref{sec:evidential-neural-networks}) and credal learning (\cref{sec:other-approaches}) offer methods to model higher-order probability distributions or sets and quantify their uncertainty.
Having said that, evidential deep learning in particular has been criticized for not providing loss functions that can provably achieve well-behaved epistemic uncertainties in the model \citep{bengs2023second}, however alternative methods have been proposed for credal predictors\index{Credal sets} \citep{hullermeier2022quantification, sale2023second, sale2023second2, hofman2024quantifying}.

\paragraph{Features for Uncertainty Quantification.}
The previously mentioned methods quantify uncertainty based on properties of the underlying probability distribution parameterized by a neural network.
However, the considerations in \cref{rq3} might prompt one to consider whether this should be the only source from which we should deduce uncertainty.
In the previous section we discussed for instance modeling uncertainty in the latent space, and \cref{sec:calibrator-inputs-ablation} illustrated how, to some extent, we can infer uncertainty solely from the input to a model and train a secondary predictor to output uncertainty in a supervised learning task.
Thus there remain many avenues to explore to find the best features that can be used to obtain uncertainty estimates, which are already being explored by works such as \citet{fathullah2023needs, liu2024uncertainty}.

\subsection{Evaluating Uncertainty}\label{sec:evaluating-uncertainty}

One common conundrum in the research surrounding uncertainty quantification is the lack of ground truth about a predictors uncertainty.
Therefore---and in this regard \cref{ch:uncertainty-classification,ch:uncertainty-llms} are no different---one has to instead defer to approximations and proxy tasks.
For frequentists methods like confidence scores\index{Confidence} we can measure calibration errors, but have to make do with binning, kernel estimators or other approximations.
Otherwise we fall back other problems like error or OOD detection or measure correlations between predictive error and uncertainty.
These analyses need to be multi-dimensional to be cogent and can be gamed;
for example the SNGP Bert in \cref{sec:predictive-uncertainty-experiments} achieves high correlation between sequence uncertainties and loss by not converging properly, and verbalized uncertainty\index{Uncertainty!Verbalized} by GPT-3.5 in \cref{sec:calibration-experiments} is well-calibrated on TriviaQA since the dataset is too easy, despite only articulating the same (high) confidence scores.\\

Yet when multiple annotations are available, we can actually use this to our advantage to create a ground truth for uncertainty, as done for instance by \citet{baan2022stop, ilia2024predict}.
Here, the paraphrasticity\index{Paraphrasticity} of language can help to create ground truth distributions whose uncertainty can be measure and compared against.

\subsection{Communicating Uncertainty}\label{sec:future-communicating-uncertainty}

Communicating uncertainty\index{Uncertainty!Communication of} is difficult---\cref{sec:communicating-uncertainty} described how communicating uncertainty to different social groups while being both understandable and precise is challenging, and how the process can affect human-machine cooperations in sometimes unintuitive ways.
In this light, verbalized uncertainty\index{Uncertainty!Verbalized} (\cref{sec:uncertainty-nlp}) seems like an attractive tool for humanly intuitive ways of expressing uncertainty.
But the experiments in \cref{sec:calibration-experiments} and studies such as \citep{tian2023just} exemplified that such expressions tend to display lopsided distributions of confidence that are not desirable.
\citet{zhou2023navigating} show how this behavior might be rooted in the unequal distribution of these confidence expression (in their case, percentage values) in the training data.
This is not to say that this approach is moribund:
Works like \citet{mielke2022reducing, stengel2024lacie} train language models to produce more complex verbalized expressions of uncertainty, and \cref{sec:expressing-uncertainty} outlines the richness of human uncertainty expressions that can serve as a guide for future research.

\chapter{Conclusion} 

\label{ch:conclusion} 

\begin{tikzpicture}[remember picture,overlay]
    \node[anchor=north,inner sep=0pt] at (current page text area.north) {\includegraphics[width=\linewidth, clip=true, trim = 8cm 75cm 8cm 50cm]{./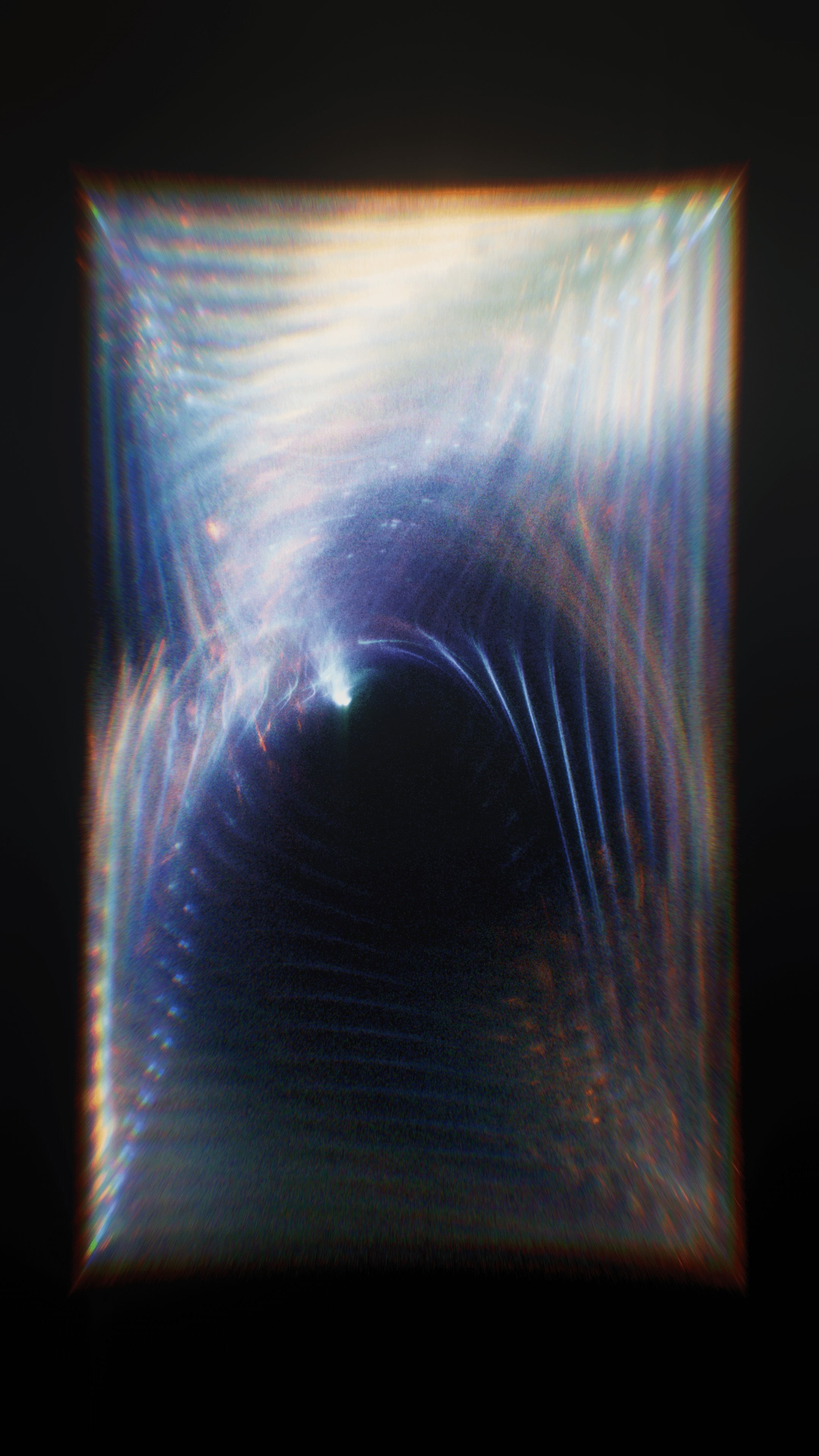}};
\end{tikzpicture}

\epigraph{
    ``\emph{These intelligent agents are the only way to sift through the oceans of data we are producing at an exponential rate [\ldots]. 
    It is important if you find this terrifying or wonderful because public sentiment drives education, investment and regulation. 
    If people find the rapid advance of intelligent machines terrifying instead of wonderful it won't stop it, but it could make the outcome worse for us all.}''
    }{---Garry Kasparov in \emph{Deep Thinking} \citep{kasparov2017deep}.}

On May 6th 2023, a document submitted to the United States District Court of the Southern District of New York \citep{courtcase} reads:\\

\begin{raggedright}
    \footnotesize
    ``\emph{The Court is presented with an unprecedented circumstance.}
    \emph{A submission filed by plaintiff’s counsel in opposition to a motion to dismiss is replete with citations to non-existent cases.} 
    \emph{When the circumstance was called to the Court's attention by opposing counsel, the Court issued Orders requiring plaintiffs counsel to provide an affidavit [\ldots].}
    \emph{Six of the submitted cases appear to be bogus judicial decisions with bogus quotes and bogus internal citations.}''\\[0.5cm]
\end{raggedright}

The document was submitted by the judge in the case of Roberto Mata versus the Columbian airline Avianca.
As it was revealed later, the plaintiff's lawyers used OpenAI's ChatGPT to find other relevant cases for their argument, which turned out to be non-existent.\footnote{
    See for example the corresponding articles by the Verge (\url{https://www.theverge.com/2023/5/27/23739913/chatgpt-ai-lawsuit-avianca-airlines-chatbot-research}) or the New York Times (\url{https://www.nytimes.com/2023/05/27/nyregion/avianca-airline-lawsuit-chatgpt.html}).
    Both were accessed last on 17-05-2024.
}
This curious case represents three different aspects about AI\index{Artificial intelligence} in modern society at once:
Firstly, AI in general and LLMs\index{Large language model} specifically are increasingly permeating society and culture.
This can be shown through their growing adoption \citep{humlum2024adoption}, their impact on art \citep{zulic2019how, du2021research, sivertsen2024machine} and by becoming an progressively political issue \citep{hovy2016social, mohamed2020decolonial, zuboff2023age, devenot2023tescreal}.
Secondly, current language models are prone to producing hallucinations\index{Hallucination}, i.e.\@ seemingly plausible but fabricated generations.
While detection and mitigation of hallucinations are very active areas of research \citep{ji2023survey}, some have argued that it is an unavoidable feature of current models \citep{kalai2023calibrated, xu2024hallucination}.
Thirdly, the way language models work remains too technical and opaque to most people and LLM-based\index{Large language model} chatbots are conceptualized as search engines rather than extremely powerful word predictors.
This becomes even more blatant when examining the details of the above case through one of the lawyers' affidavit:
In order to verify the veracity of the (later to be found fictitious) cited case studies, they asked ChatGPT questions such as 
``\emph{Is varghese a real case}'', to which the language model answered affirmatively.\\

The \emph{bitter lesson}\index{Bitter lesson} \citep{sutton2019bitter} states that ``general methods that leverage computation are ultimately the most effective, and by a large margin''.
In the past, it has proven time and time again that sophistication in AI\index{Artificial intelligence} research is outperformed by sheer scale.
Which, given the content of thesis, prompts the question of whether research on UQ \index{Uncertainty quantification} is necessary or yet another piece of unnecessary ornamentation on the road to more intelligent systems.

\paragraph{Do We actually Need UQ?}
Let us assume the role of a devil's advocate for a moment.
In this position, we can pose several counter-arguments to the necessity of UQ,\index{Uncertainty quantification} starting with

\begin{center}
    ``\emph{Current cutting-edge models work so well that UQ is not necessary.}''
\end{center}

While it is true that the bitter lesson\index{Bitter lesson} keeps materializing in current models, even an ever-increasing coverage of topics and tasks through larger amounts of training data does not shield them from an infinitely-large space of possible inputs, on which their behavior is hard to predict.
This phenomenon is referred to as \emph{model underspecification}.\index{Underspecification!Model}
Furthermore, increasing generalization by obtaining more and more training data is expensive;
estimations by works such as \citet{villalobos2022will} suggest that we are already starting to deplete the stock of high-quality language data to train on.
Counter-strategies to this problem have been to simply allocate resources to human data creation,\footnote{
    See for instance reporting about OpenAI's strategy to employ workers in Kenya to create new training data and improve existing data quality, e.g.\@ \url{https://time.com/6247678/openai-chatgpt-kenya-workers/} (last accessed 19.05.2024).
} to repeatedly use the same training data \citep{xue2024repeat} or to use synthetic training data, where the latter has shown mixed results \citep{guo2023curious, alemohammad2023self, briesch2023large, bohacek2023nepotistically, gulcehre2023reinforced, shumailov2023model, dohmatob2024tale, ulmer2024bootstrapping}.
However, this also ignores the inequality of available data in different languages \citep{singh2024aya}.
 Being able to guarantee robust model behavior on different topics, tasks and language this way thus appears unlikely.

\begin{center}
    ``\emph{Model capabilities have consistently improved with model size and the amount of available training data, and in the same way a model's uncertainty estimates will become more reliable by itself.}''
\end{center}
 
While there is some evidence that e.g.\@ a model's calibration\index{Calibration} increases with the available training data \citep{dan2021effects, chen2023close, tian2023just, zhu2023calibration, ulmer2024calibrating}, one can hypothesize that the increased coverage of training cases simply enables the model to better learn the actual distributions over targets (be it class labels or token distributions) for the most frequent types of input.
For LLMs,\index{Large language model} there is some evidence that verbalized uncertainty\index{Uncertainty!Verbalized} in its current form improves with model and training data size, but the distribution of uncertainty expressions still remains skewed \citep{tian2023just, ulmer2024calibrating}.

\begin{center}
    ``\emph{Smarter models will become better at admitting when they do not know an answer.}''
\end{center}

Compared to the previous question, here we wouldn't rely on additional uncertainty estimates to refuse a potential unreliable prediction, but assume that a smarter model would learn to refuse directly.
We can reason through this argument by realizing that in order to achieve these model refusals, they would either have to be explicit contents of their training data, or be the result of of some subsequent finetuning / alignment process.
The first case is unrealistic or at least conceptually misguided:
We would like models to respond to certain instructions by admitting their ignorance because the answer would otherwise likely be incorrect, not because they learned a mapping from certain instructions to these admissions---in the end, we still want models to learn to solve a given task!
This entails that such a behavior would be acquired during additional finetuning steps (instead of the pre-training phase, such as instruction finetuning, alignment using human feedback\index{Reinforcement learning from human feedback}, etc.), but in order to do so, one requires knowledge about when these statements are necessary.
This could come from signals from the model itself---however we have seen that models \emph{do not always know when they do not know}---or from human or automatic evaluation, which seems infeasible to perform on a comprehensive scale.
Thus, we can likely only adopt these behaviors for more common instructions, even though they would matter most on unseen or rare ones.

\begin{center}
    ``\emph{Current UQ quantification approaches are useless since they are not reliable themselves.}''
\end{center}

This is not an entirely unfair criticism, and we dedicated parts of \cref{ch:discussion} to the limits and failure cases of current UQ methods\index{Uncertainty quantification}.
One could explain the recent soaring in interest in conformal prediction\index{Conformal prediction} methods that they, in contrast to their alternatives, can provide formal guarantees.
Even though these might still be insufficient for many practical applications, we can expect future research to improve them further.
Furthermore, there is a case to made where the overall utility of UQ with even somewhat deficient guarantees exceeds the loss in utility without any UQ whatsoever.
Given this thought, one might wonder why we haven't seen wide-spread adoption of UQ techniques in user-facing applications.

\paragraph{What Hinders UQ in User-Facing Products?}
This point can only be answered speculatively, but what is true is that none of the large commercially available LLMs at the time of this writing offer any degree of uncertainty quantification.\footnote{
    This includes Anthropic, Cohere, OpenAI, Google and Mistral.
    OpenAI's API does allow access to token probabilities (\url{https://x.com/OpenAIDevs/status/1735730662362189872}, last accessed on 16.01.24), however they are not framed as confidence scores directly, confidence estimation is just mentioned as one possible application.
}
One potential reason could be that there is simply no or not enough demand;
this could be because models usually work sufficiently well for users on their specific use cases or that customers are not aware of the problem (or of UQ as a possible solution).
Another reason could be that UQ in its current form does not work reliably enough and would expose a company to too many risks;
an unreliable prediction that is accompanied with a high confidence\index{Confidence} value could potentially create PR and legal liability issues when found to have caused real-world harm.

\paragraph{How does UQ Relate to Current Developments in the Field?}
At the time of writing of the author's master thesis in 2019, the field of NLP\index{Natural language processing} was experiencing an acceleration.
After the invention of the transformer\index{Transformer} two years prior \citep{vaswani2017attention}, models like Bert\index{Bert} \citep{devlin2019bert} and GPT-2 \citep{radford2019language} were heralding a paradigm shift in the field, as increasingly large models were demonstrating hitherto unseen abilities.
In this context, part of the conclusion of \citet{ulmer2019recoding} reads

\begin{center}
    \footnotesize
    ``\emph{On the flip side, these [language] models require huge amounts of data and computational resources.}
    \emph{[\ldots]\@ This has several, worrying implications: First, with these resource requirements, scientific papers become hard to reproduce. These costs only allow training of these models in the context of well-funded institutions, namely top-tier universities and affluent tech giants.} 
    \emph{Secondly, the reliance on large-scale hardware produces a high electricity consumption along with a worrisome carbon footprint, which bears a certain irony:}
    \emph{These models try to (loosely) imitate the human brain, a biological computer that is actually very energy efficient (Schwartz et al., 2019).}
    \emph{Lastly, scaling up data sets and the number of parameters does not necessarily increase the semblance to human cognition.}''
\end{center}

It is interesting to re-examine these thoughts in the light of current trends.
First of all, the size of language models and their training set sizes has risen tremendously.
\citeauthor{devlin2019bert}'s largest Bert\index{Bert} model comprised 340 million parameters, and was trained on around 3.3 billion words.
For comparison, the largest Llama 3 model comprises 90 billion parameters and was trained on 15 trillion tokens \citep{llama3modelcard}, with GPT-4 rumored to be 1.76 trillion parameters large \citep{gpt4report}.
The fact that GPT-4's parameter count is not public and that details about the training data for both GPT-4 and Llama 3 are unknown accentuate the most recent trend in language model development and echo some of the thoughts above:
With a few exceptions such like OLMo \citep{groeneveld2024olmo}, it has become infeasible for non-industry actors to train language models from scratch.
At the same time, companies have started to hide training details that they deem strategically important, hindering replication and research even when the final models become openly available.
This also makes it hard to assess for which kind of inputs we can expect models to behave reliably.
This is exacerbated by the fact that any semblance of human intelligence is still controversial---while recent models have displayed impressive abilities \citep{bubeck2023sparks}, some argue that outputs are ``haphazardly stitch[ed] together sequences of linguistic forms [the language model] has observed in its vast training data, according to probabilistic information about how they combine, but without any reference to meaning'' \citep{bender2021dangers}.
The consequence of this is that language models might fail in ways that are unpredictable and unintuitive to humans.
And as the introductory examples in this chapter and \cref{ch:uncertainty-llms} show, the more convincing generations appear, the harder any failures become to spot.\\

\paragraph{Policy and Societal Implications.}
The increased adoption of AI\index{Artificial intelligence} models has prompted a response from different regulatory bodies.
One instance of this is the EU AI act \citep{madiega2021artificial}.
The act sorts different applications into a four tier system, ranging from minimal risk to unacceptable risk.
While unacceptable risk applications are outright prohibited (e.g.\@ social scoring systems, facial recognition etc.), there also exists a tier of high-risks systems with applications in law enforcement, education or medicine that are allowed under strict regulations.
One prerequisite for high-risk systems is \emph{human oversight}, meaning that the system can be ``effectively overseen by natural persons during the period in which the AI system is in use'' and to ``prevent or minimize the risks to health, safety or
fundamental rights that may emerge'' (Article 14).
It should be clear that techniques like anomaly detection\index{Anomaly detection} and UQ\index{Uncertainty quantification} can help to fulfill these criteria by deferring decisions to human overseers and flagging inputs on which the system could behave abnormally.
Thus, in order to create commercial high-risk AI applications in the EU, the development of UQ methods with stronger guarantees might be one potential avenue.
Similar policies are still pending in the United States, where the Biden administration enacted an executive order on the development and use of AI \citep{biden2023executive}.
In its opening paragraph, it states

\begin{center}
    \footnotesize
    ``\emph{Harnessing AI for good and realizing its myriad benefits requires mitigating its substantial risks.} 
    \emph{This endeavor demands a society-wide effort that includes government, the private sector, academia, and civil society.}''
\end{center}

AI\index{Artificial intelligence} is a powerful technology, unfolding in an unequal world and already reshaping societies.
As researchers, we can help advance directions like UQ\index{Uncertainty quantification} alongside others such as generalization, bias mitigation, fairness\index{Fairness}, interpretability\index{Interpretability} and many more in order to help mitigate the risk of modern AI systems, so that any transformation may be a positive one.


\addcontentsline{toc}{chapter}{Bibliography}
\bibliographystyle{natbib} 
\bibliography{Bibliography_clean}

\newpage
\appendix

\chapter{Theoretical Appendix}\label{app:theoretical-appendix}

\epigraph{

    ``\emph{Physics is searching for a \textbf{theory of everything}. Deep learning is searching for a \textbf{theory of anything}.}''
}
{---Zachary Lipton on \href{https://twitter.com/zacharylipton/status/1625561952574443535}{Twitter}.}

\begin{table}[htb]
    \centering
    \resizebox{0.6\textwidth}{!}{%
    \begin{tabular}{rl}
        \toprule
        Thesis & Appendix \\
        \toprule
        \cref{sec:bayesian-perspective} & \cref{app:relationship-beta-gamma} \\
        \cref{sec:evidential-neural-networks} & \cref{app:expectation-dirichlet,app:entropy-dirichlet,app:kl-dirichlets,app:mutual-information} \\
        \cref{sec:know-your-limits-preliminaries} & \cref{app-softmax-sigmoid-connection} \\
        \cref{sec:convergence-on-ood} & \cref{app:proposition1,app:softmax-limit} \\
        \cref{sec:overconfidence-metrics} & \cref{app:aggregation-theorem,app:asymptotic-softmax-variance,app:asymptotic-predictive-entropy,app:mutual-information} \\
        \bottomrule
    \end{tabular}%
    }
    \caption[Correspondences between sections of the theoretical appendix and thesis chapters.]{Correspondences between sections of the theoretical appendix and thesis chapters.}\label{tab:theoretical-appendix-correspondence}
\end{table}

This appendix contains additional derivations and proofs for some of the main chapters in this thesis.
\cref{tab:theoretical-appendix-correspondence} gives an overview over the correspondences between thesis chapters and sections in this appendix.

\section{Relationship between Beta and Gamma function}\label{app:relationship-beta-gamma}

Here, we further elaborate on the connection between the Beta\index{Beta function} and the Gamma function\index{Gamma function}, used to derive the predictive prior and posterior distribution\index{Prior distribution}\index{Posterior distribution} of a Beta distribution with Bernoulli likelihood\index{Bernoulli distribution}\index{Likelihood} in \cref{eq:predictive-prior-coin-flip,eq:predictive-posterior-coin-flip}.
The Beta function is commonly defined in terms of Gamma functions, namely

\begin{equation}
    B(\alpha, \beta) = \frac{\Gamma(\alpha)\Gamma(\beta)}{\Gamma(\alpha + \beta)},
\end{equation}

\noindent and recall the definition of the Gamma function as 

\begin{equation}
    \Gamma(\alpha) = \int_0^{\infty} x^{\alpha - 1}\exp(-x)\dd\!\hspace{0.1cm}x.
\end{equation}

Alternatively, the Beta function can be stated as 

\begin{equation}
    \text{B}(\alpha, \beta) = \int_0^1 x^{\alpha - 1}(1 - x)^{\beta - 1}\dd\!\hspace{0.1cm}x.
\end{equation}

This connection arises by evaluating the following product:

\begin{align}
    \Gamma(\alpha)\Gamma(\beta) &= \Big(\int_0^{\infty} x^{\alpha - 1}\exp(-x)\dd\!\hspace{0.1cm}x\Big)\Big(\int_0^{\infty} y^{\beta - 1}\exp(-y)\dd\!\hspace{0.1cm}y\Big)\\
    & = \int_0^\infty\int_0^\infty x^{\alpha - 1} y^{\beta - 1}\exp\big(-(x+y)\big)\dd\!\hspace{0.1cm}x\dd\!\hspace{0.1cm}y.
\end{align}

In order to simplify the integration, we apply a change of variables by substituting $x = uv$ and $y = u(1 - v)$.
To account for the change of variables during the integration, we also need to evaluate the determinant of the Jacobian\index{Jacobian} as 

\begin{equation}
    |\mathbf{J}| = \begin{vmatrix} \frac{\partial x}{\partial u} & \frac{\partial x}{\partial v} \\\frac{\partial y}{\partial u} & \frac{\partial y}{\partial v} \end{vmatrix} = \begin{vmatrix} v & u \\ 1-v & -u \end{vmatrix} = -uv - u(1- v) = -u.
\end{equation}

By writing $u$ and $v$ in terms of $x$ and $y$, we obtain that $u = x + y$ and $v = x/(x + y)$, which implies that the limits for the integration remain $0$ to $\infty$ for $u$ and become $0$ to $1$ for $v$.
Using all of these insights, we can now show that 

\begin{align}
    \Gamma(\alpha)\Gamma(\beta)  = & \int_0^\infty\int_0^\infty x^{\alpha - 1} y^{\beta - 1}\exp\big(-(x+y)\big)\dd\!\hspace{0.1cm}x\dd\!\hspace{0.1cm}v \\
    = & \int_0^1\int_0^\infty (uv)^{\alpha - 1} \big(u(1-v)\big)^{\beta - 1} \nonumber \\
    & \exp\big(-(uv + u(1 -v))\big)|-u|\dd\!\hspace{0.1cm}u\dd\!\hspace{0.1cm}v \\
    = & \int_0^1\int_0^\infty u^{\alpha-1}v^{\alpha-1}u^{\beta-1}(1-v)^{\beta - 1}\exp(-u)u \dd\!\hspace{0.1cm}u\dd\!\hspace{0.1cm}v \\
    = &\int_0^1\int_0^\infty u^{\alpha + \beta -1}v^{\alpha-1}u^{\beta-1}(1-v)^{\beta - 1}\exp(-u) \dd\!\hspace{0.1cm}u\dd\!\hspace{0.1cm}v \\
    = & \Big(\int_0^1 v^{\alpha-1}(1-v)^{\beta - 1} \dd\!\hspace{0.1cm}v \Big)\Big(\int_0^\infty u^{\alpha + \beta -1}\exp(-u) \dd\!\hspace{0.1cm}u  \Big) \\
    = & B(\alpha, \beta)\Gamma(\alpha + \beta),
\end{align}

\noindent from which the connection between the two definition follows.

\section{Expectation of the Dirichlet Distribution}\label{app:expectation-dirichlet}
\index{Dirichlet distribution}
Here, we show results for the quantities $\mathbb{E}[\pi_k]$ and $\mathbb{E}[\log \pi_k]$ that appear in \cref{sec:evidential-neural-networks}.
For the first, we follow the derivation by \citet{miller2011dirichlet}. 
Another proof is given by \citet{lin2016dirichlet}.

\begin{align}
    \mathbb{E}[\pi_k] & = \int \cdots \int \pi_k \frac{\Gamma(\alpha_0)}{\prod_{k^\prime=1}^K \Gamma(\alpha_k^\prime)} \prod_{k^\prime=1}^K \pi_{k^\prime}^{\alpha_{k^\prime} - 1} \ddd\pi_1 \ldots \ddd\pi_K. \\
    \intertext{Moving $\pi_{k}^{\alpha_{k} - 1}$ out of the product:}
    & = \int \cdots \int \frac{\Gamma(\alpha_0)}{\prod_{k^\prime=1}^K \Gamma(\alpha_{k^\prime})} \pi_k^{\alpha_k - 1 + 1}\prod_{k^\prime \neq k} \pi_{k^\prime}^{\alpha_{k^\prime} - 1}\ddd\pi_1 \ldots \ddd\pi_{K}. \\
    \intertext{For the next step, we define a new set of Dirichlet parameters with $\beta_k = \alpha_k + 1$ and $\forall k^\prime \neq k: \beta_{k^\prime} = \alpha_{k^\prime}$. For those new parameters, $\beta_0 = \sum_k \beta_k = 1 + \alpha_0$. So by virtue of the Gamma function's property that $\Gamma(\beta_0) = \Gamma(\alpha_0 + 1) = \alpha_0\Gamma(\alpha_0)$, replacing all terms in the normalization factor yields}
    & = \int \cdots \int \frac{\alpha_k}{\alpha_0}\frac{\Gamma(\beta_0)}{\prod_{k^\prime=1}^K \Gamma(\beta_{k^\prime})} \prod_{k^\prime=1}^K \pi_{k^\prime}^{\beta_{k^\prime} - 1} \ddd\pi_1 \ldots \ddd\pi_K = \frac{\alpha_k}{\alpha_0},
\end{align}

\noindent where in the last step we obtain the final result, since the Dirichlet with new parameters $\beta_k$ must nevertheless integrate to $1$, and the integrals do not regard $\alpha_k$ or $\alpha_0$. 
For the expectation $\mathbb{E}[\log \pi_k]$, we first rephrase the Dirichlet distribution in terms of the exponential families \citep{kupperman1964probabilities}. 
The exponential families\index{Exponential families} encompass many commonly-used distributions, such as the normal\index{Normal distribution}, exponential\index{Exponential distribution}, Beta\index{Beta distribution} or Poisson\index{Poisson distribution}, which all follow the form 

\begin{equation}\label{eq:exp-family}
    p(\bx; \bm{\eta}) = h(\bx)\exp\big(\bm{\eta}\T u(\bx) - A(\bm{\eta})\big),
\end{equation}

\noindent with \emph{natural parameters} $\bm{\eta}$\index{Natural parameter}, \emph{sufficient statistic} $u(\bx)$\index{Sufficient statistic}, and \emph{log-partition function}\index{Log-partition function} $A(\bm{\eta})$. 
For the Dirichlet distribution\index{Dirichlet distribution}, \citet{winn2004variational} provides the sufficient statistic as $u(\bm{\pi}) = [\log \bm{\pi}_1, \ldots, \bm{\pi}_K]^T$ and the log-partition function 

\begin{equation}\label{eq:log-partition-dirichlet}
    A(\bm{\alpha}) = \sum_{k=1}^K \log \Gamma(\alpha_k) - \log \Gamma(\alpha_0).
\end{equation}

By \citet{introduction2019mao}, we also find that by the moment-generating function that for the sufficient statistic, its expectation can be derived by 

\begin{equation}\label{eq:expected-value-sufficient}
    \mathbb{E}[u(\bx)_k] = \frac{\partial A(\bm{\eta})}{\partial \eta_k}.
\end{equation}

Therefore, we can evaluate the expected value of $\log \pi_k$ (i.e.\@ the sufficient statistic\index{Sufficient statistic}) by inserting the definition of the log-partition function in \cref{eq:log-partition-dirichlet} into \cref{eq:expected-value-sufficient}:

\begin{equation}\begin{aligned}\label{eq:log-expectation}
    \mathbb{E}[\log \pi_k] = \frac{\partial}{\partial \alpha_k}\sum_{k=1}^K \log \Gamma(\alpha_k) - \log \Gamma(\alpha_0) = \psi(\alpha_k) - \psi(\alpha_0),
\end{aligned}\end{equation}

\noindent which corresponds precisely to the definition of the digamma function\index{Digamma function} as $\psi(x) = \frac{d}{d x}\log \Gamma(x)$.

\section{Entropy of the Dirichlet Distribution}\label{app:entropy-dirichlet}
\index{Entropy}\index{Dirichlet distribution}
The following derivation for the entropy of the Dirichlet which appears in \cref{sec:evidential-neural-networks} is adapted from \citet{lin2016dirichlet}, with the result stated in \citet{charpentier2020posterior} as well.

\begin{align}
    \text{H}[p(\bm{\pi} \mid \bm{\alpha})] & = - \mathbb{E}[\log p(\bm{\pi} \mid \bm{\alpha})] \\
    & = - \mathbb{E}\Big[\log\Big( \frac{1}{\text{B}(\bm{\alpha})}\prod_{k=1}^K\pi_k^{\alpha_k - 1}\Big)\Big] \\
    & = - \mathbb{E}\Big[-\log \text{B}(\bm{\alpha}) + \sum_{k=1}^K (\alpha_k - 1)\log \pi_k\Big] \\
    & = \log \text{B}(\bm{\alpha} ) - \sum_{k=1}^K (\alpha_k - 1)\mathbb{E}[\log \pi_k]. \\
    \intertext{Using \cref{eq:log-expectation}:}
    & = \log \text{B}(\bm{\alpha} ) - \sum_{k=1}^K (\alpha_k - 1)\big(\psi(\alpha_k) - \psi(\alpha_0)\big) \\
    & = \log \text{B}(\bm{\alpha} ) + \sum_{k=1}^K (\alpha_k - 1)\psi(\alpha_0) - \sum_{k=1}^K (\alpha_k - 1)\psi(\alpha_k) \\
    & = \log \text{B}(\bm{\alpha}) + (\alpha_0 - K)\psi(\alpha_0) - \sum_{k=1}^K (\alpha_k - 1)\psi(\alpha_k).
 \end{align}

\section{Expected Entropy of the Dirichlet Distribution}\label{app:expected-entropy}
\index{Entropy!Expected}\index{Dirichlet distribution}
The following derivation for the expected entropy of the Dirichlet which appears in \cref{sec:evidential-neural-networks} is adapted from \citet{malinin2018predictive} appendix section C.4. 
In the following, we assume that $\forall k \in \mathbb{K}: \pi_k > 0$:
    
\begin{align}
    & \mathbb{E}_{p(\bm{\pi} \mid \bx, \hat{\bm{\theta}})}\Big[\text{H}\big[P(y \mid \bm{\pi})\big]\Big] = \int p(\bm{\pi} \mid \bx, \hat{\bm{\theta}}) \Big(-\sum_{k=1}^K \pi_k\log \pi_k\Big) \dd\bm{\pi} \\
    & = - \sum_{k=1}^K \int p(\bm{\pi} \mid \bx, \hat{\bm{\theta}})\big(\pi_k \log \pi_k\big)\dd \bm{\pi}. \\
    \intertext{Inserting the definition of $p(\bm{\pi}|\bx, \hat{\bm{\theta}}) \approx p(\bm{\pi} \mid \bx, \mathbb{D})$:}
    & = - \sum_{k=1}^K \Bigg(\frac{\Gamma(\alpha_0)}{\prod_{k^\prime=1}^K \Gamma(\alpha_{k^\prime})}\int \pi_k \log \pi_k \prod_{k^\prime=1}^K\pi_{k^\prime}^{\alpha_{k^\prime} - 1} \dd\bm{\pi} \Bigg).\\
    \intertext{Singling out the factor $\pi_k$:}
    & = - \sum_{k=1}^K \Bigg(\frac{\Gamma(\alpha_0)}{\Gamma(\alpha_{k})\prod_{k^\prime \neq k} \Gamma(\alpha_{k^\prime})}\pi_k^{\alpha_k-1}\int \pi_k \log \pi_k \prod_{k^\prime \neq k}\pi_{k^\prime}^{\alpha_{k^\prime} - 1} \dd\bm{\pi} \Bigg).\\
    \intertext{Adjusting the normalizing constant (this is the same trick used in \cref{app:expectation-dirichlet}):}
     & = - \sum_{k=1}^K \Bigg(\frac{\alpha_k}{\alpha_0}\int\frac{\Gamma(\alpha_0+1)}{\Gamma(\alpha_{k}+1)\prod_{k^\prime \neq k} \Gamma(\alpha_{k^\prime})}\pi_k^{\alpha_k-1} \log \pi_k \prod_{k^\prime \neq k}\pi_{k^\prime}^{\alpha_{k^\prime} - 1}  \dd\bm{\pi} \Bigg).\\
    \intertext{Using the identity $\mathbb{E}[\log \pi_k] = \psi(\alpha_k) -  \psi(\alpha_0)$ (\cref{eq:log-expectation}). Since the expectation here is w.r.t.\@ to a Dirichlet with concentration parameters $\alpha_k + 1$, we obtain}
    & = - \sum_{k=1}^K\frac{\alpha_k}{\alpha_0}\bigg(\psi(\alpha_k+1) -  \psi(\alpha_0+1)\bigg).
\end{align}

\section{Kullback-Leibler Divergence between two Dirichlets}\label{app:kl-dirichlets}
\index{Kullback-Leibler divergence}\index{Dirichlet distribution}

The following result appearing in \cref{sec:evidential-neural-networks} is presented using an adapted derivation by \citet{lin2016dirichlet} and appears in \citet{chen2018variational} and \citet{joo2020being} as a starting point for their variational objective.\index{Variational inference}
In the following we use $\text{Dir}(\bm{\pi}; \bm{\alpha})$ to denote distribution to be optimized, and $\text{Dir}(\bm{\pi}; \bm{\gamma})$ for the reference or target distribution.

\begin{align}
    & \text{KL}\big[p(\bm{\pi} \mid \bm{\alpha})\ \big|\big|\ p(\bm{\pi} \mid \bm{\gamma})\big] \nonumber \\[0.2cm]
    & = \mathbb{E}\Big[\log\frac{p(\bm{\pi} \mid \bm{\alpha})}{p(\bm{\pi} \mid\bm{\gamma})}\Big] = \mathbb{E}\big[\log p(\bm{\pi} \mid \bm{\alpha})\big] - \mathbb{E}\big[\log p(\bm{\pi} \mid \bm{\gamma})\big] \\
    & = \mathbb{E}\Big[-\log \text{B}(\bm{\alpha}) + \sum_{k=1}^K (\alpha_k -1)\log \pi_k \Big] \nonumber \\
    & - \mathbb{E}\Big[-\log \text{B}(\bm{\gamma}) + \sum_{k=1}^K (\gamma_k -1)\log \pi_k\Big]. \\
    \intertext{Distributing and pulling out $\text{B}(\bm{\alpha})$ and $\text{B}(\bm{\gamma})$ out of the expectation (they don't depend on $\bm{\pi}$):}
    = & - \log\frac{\text{B}(\bm{\gamma})}{\text{B}(\bm{\alpha})} + \mathbb{E}\Big[\sum_{k=1}^K (\alpha_k -1)\log \pi_k - (\gamma_k -1)\log \pi_k\Big] \\
    = & - \log\frac{\text{B}(\bm{\gamma})}{\text{B}(\bm{\alpha})} + \mathbb{E}\Big[\sum_{k=1}^K (\alpha_k -\gamma_k)\log \pi_k\Big].
    \intertext{Moving the expectation inward and using the identity $\mathbb{E}[\pi_k] = \psi(\alpha_k) - \psi(\alpha_0)$ from \cref{app:expectation-dirichlet}:}
    = & - \log\frac{\text{B}(\bm{\gamma})}{\text{B}(\bm{\alpha})}+ \sum_{k=1}^K (\alpha_k - \gamma_k)\big(\psi(\alpha_k) - \psi(\alpha_0)\big).
\end{align}

The KL divergence is also used by some works as regularizer by penalizing the distance to a uniform Dirichlet with $\bm{\gamma} = \mathbf{1}$ \citep{sensoy2019evidential}.
In this case, the result above can be derived to be 

\begin{equation}
    \text{KL}\big[p(\bm{\pi} \mid \bm{\alpha})\ \big|\big|\ p(\bm{\pi}\mid\bm{1})\big] = \log \frac{\Gamma(K)}{\text{B}(\bm{\alpha})} + \sum_{k=1}^K (\alpha_k - 1)\big(\psi(\alpha_k) - \psi(\alpha_0)\big),
\end{equation}

\noindent where the $\log \Gamma(K)$ term can also be omitted for optimization purposes, since it does not depend on $\bm{\alpha}$.

\section{Mutual Information for Dirichlet Networks}\label{app:mutual-information}
\index{Mutual information}\index{Dirichlet distribution}
As stated in \cref{sec:evidential-neural-networks}, mutual information is a measure of distributional uncertainty\index{Uncertainty!Distributional} in Dirichlet networks.
To derive its closed-form expression, we start from \cref{eq:dirichlet-mi}:

\begin{align}
    \text{I}\Big[y, \bm{\pi}\ \Big|\ \bx, \mathbb{D}\Big] & = \text{H}\Big[\Expect_{p(\bm{\pi} \mid \bx, \mathbb{D})}\big[P(y \mid \bm{\pi})\big]\Big] - \mathbb{E}_{p(\bm{\pi} \mid \bx, \mathbb{D})}\Big[\text{H}\big[P(y \mid \bm{\pi})\big]\Big]. \\
    \intertext{Given that $\mathbb{E}[\pi_k] = \frac{\alpha_k}{\alpha_0}$ (\cref{app:expectation-dirichlet}) and assuming that point estimate $p(\bm{\pi} \mid \bx, \mathbb{D}) \approx p(\bm{\pi} \mid \bx, \hat{\bm{\theta}})$ is sufficient \citep{malinin2018predictive}, we can identify the first term as the Shannon entropy $-\sum_{k=1}^K \pi_k \log \pi_k = -\sum_{k=1}^K \frac{\alpha_k}{\alpha_0} \log \frac{\alpha_k}{\alpha_0} $.
    Furthermore, the second part we already derived in \cref{app:expected-entropy}, and thus we obtain:}
    & = -\sum_{k=1}^K \frac{\alpha_k}{\alpha_0}\log \frac{\alpha_k}{\alpha_0} + \sum_{k=1}^K\frac{\alpha_k}{\alpha_0}\Big(\psi(\alpha_k+1) -  \psi(\alpha_0+1)\Big) \\
    & = - \sum_{k=1}^K \frac{\alpha_k}{\alpha_0}\Big(\log \frac{\alpha_k}{\alpha_0} -\psi(\alpha_k+1) + \psi(\alpha_0+1)\Big).
\end{align}

\section{Connection between Softmax and Sigmoid}\label{app-softmax-sigmoid-connection}

In this section we briefly outline the connection between the softmax\index{Softmax function} and the sigmoid function\index{Sigmoid function}, in order to show the applicability of results in \cref{sec:uq-classification-pitfalls} to both binary and multi-class classification problems\index{Classification!Binary}\index{Classification!Multi-class}.
This connection was originally shown in \citet{bridle1990probabilistic}. 
Let the sigmoid function be defined as 
\begin{equation}
    \sigma(x) = \frac{\exp(x)}{1 + \exp(x)},
\end{equation}

\noindent and softmax according to the definition in \cref{eq:softmax}. 
The output of $f_{\btheta}$ in a multi-class classification problem with $K$ classes corresponds to a $K$-dimensional column vector that is based on an affine transformation of the network's last intermediate hidden representation $\bx_L$, such that $f_{\btheta}(\bx) = \bW_L\bx_L$.\footnote{
    The bias term $\bb_L$ was omitted here for clarity.
} 
Correspondingly, the output of $f_{\btheta}$ for a single class $c$ can be written as the dot product between $\bx_L$ and the corresponding row vector of $\bW_L$ denoted as $\bw_L^{(c)}$, such that $f_{\btheta}(\bx)_k \equiv {\bw_L^{(k)T}}\bx_L$. 
For a classification problem with $K=2$ classes, we can now rewrite the softmax probabilities in the following way:\footnote{
    The following argument holds without loss of generality for $P_{\btheta}(y=0 \mid \bx)$.
}

\begin{equation}
    P_{\btheta}(y=1 \mid \bx) = \frac{\exp({\bw_L^{(1)}\T}\bx_L)}{\exp({\bw_L^{(0)}\T}\bx_L) + \exp({\bw_L^{(1)}\T}\bx_L)}. \\
\end{equation} 

Subtracting a constant from the weight term inside the exponential function does not change the output of the softmax function.\index{Softmax function} 
Using this property, we can show the sigmoid function to be a special case of the softmax for binary classification:\index{Classification!Binary}

\begin{align}
     P_{\btheta}(y=1 \mid \bx) & = \frac{\exp((\bw_L^{(1)} - \bw_L^{(0)})\T\bx_L)}{\exp((\bw_L^{(0)} - \bw_L^{(0)})\T\bx_L) + \exp((\bw_L^{(1)} - \bw_L^{(0)})\T\bx_L)} \\
    & = \frac{\exp((\bw_L^{(1)} - \bw_L^{(0)})\T\bx_L)}{1 + \exp((\bw_L^{(1)} - \bw_L^{(0)})\T\bx_L}  = \frac{\exp({\bw_L^*\T}\bx_L)}{1 + \exp({\bw_L^*\T}\bx_L)},
\end{align} 

\noindent where $\bw_L^* = \bw_L^{(1)} - \bw_L^{(0)}$ corresponds to the new parameter vector which is used to parametrize a single output unit for a network in the binary classification setting.

\section{Construction of Polytopal Regions}\label{app:polytopes}
\index{Polytope}
In this section, we reiterate the reasoning by \citet{hein2019relu} behind the construction the polytopal regions mentioned in \cref{sec:convergence-on-ood}.
For this purpose, the authors define an additional diagonal matrix $\bDelta_l(\bx)$ per layer $l$: 

\begin{equation}
    \bDelta_l(\bx) = \begin{bmatrix}
    \text{sign}({f_{\btheta}^l(\bx)_1}) & \cdots & 0  \\
    \vdots &  \ddots & \vdots \\
    0 & \cdots & \text{sign}({f_{\btheta}^l(\bx)_{n_l}}) \\
    \end{bmatrix}.
\end{equation}

Together with the linearization of the network at $\bx$ explained in \cref{eq:relu-linearization}, this is used to define a set of half-spaces for every neuron in the network:

\begin{equation}
    \mathbb{H}_{l, i}(\bx) = \big\{\bz \in \mathbb{R}^d\ \big|\ \bDelta_l(\bx)\big(\bV_l(\bx)_i\bz + \ba_l(\bx)_i\big) \ge 0 \big\}.
\end{equation}

Here, $\bV_l(\bx)_i$ and $\bb_l(\bx)_i$ denote the parts of the affine transformation obtained for the $i$-th neuron of the $l$-th layer, so the $i$-th row vector in $\bV_l(\bx)$ and the $i$-th scalar in $\bb_l(\bx)$, respectively. 
Finally, the polytope\index{Polytope} $Q$ containing $\bx$ is obtained by taking the intersection of all half-spaces induced by every neuron in the network: 

\begin{equation}
    Q(\bx) = \bigcap_{l \in 1, \ldots, L}\bigcap_{i \in 1, \ldots, n_l} \mathbb{H}_{l, i}(\bx).
\end{equation}

\section{Proof of \cref{proposition:overconfidence-softmax}}\label{app:proposition1}

\begin{figure}
    \centering
    \includegraphics[width=0.9\columnwidth]{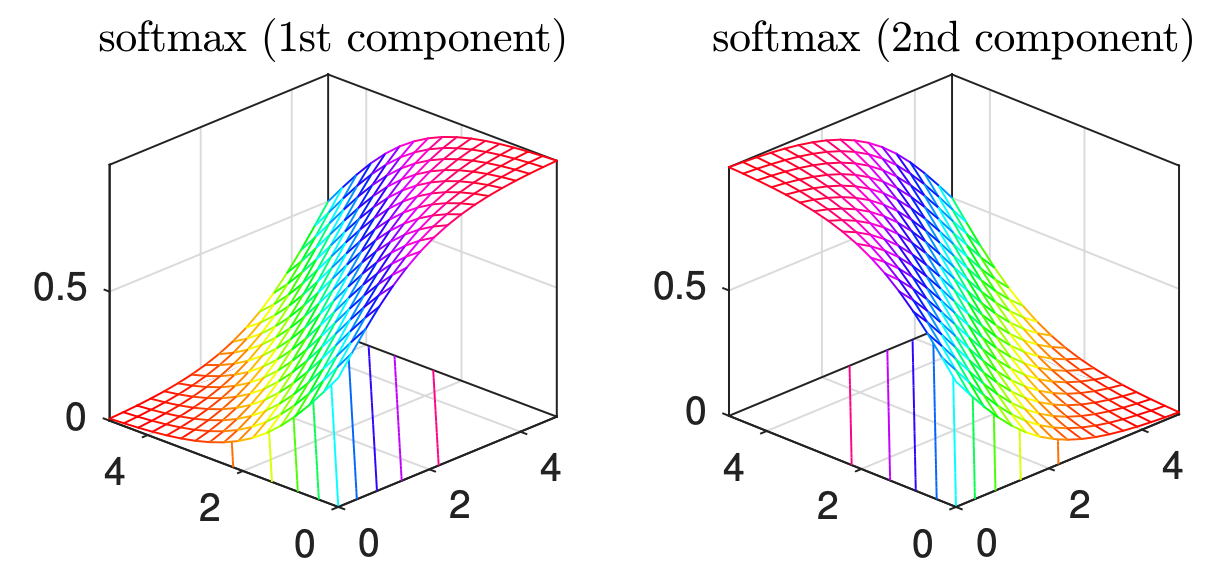}
    \caption[Illustrating the interplay of softmax probabilities between components for $K=2$ in $\mathbb{R}^2$.]{
        Illustration taken from the work of \citet{gao2017properties}, illustrating the interplay of softmax probabilities between components for $K=2$ in $\mathbb{R}^2$.
    }
    \label{fig:softmax-components}
\end{figure}

This section provides the proof of \cref{proposition:overconfidence-softmax} in \cref{sec:convergence-on-ood}.
 We proceed to analyze the behavior of gradients in the limit via two more lemmas; 
 First, we establish the saturating property of the softmax\index{Softmax function} in \cref{lemma:softmax-properties}, i.e.\@ the model doesn't change its decision anymore in the limit.

\begin{lemma}\label{lemma:softmax-properties}
    Let $k, k^\prime \in [K]$ be two arbitrary classes. It then holds for their corresponding output components (logits) that
    \begin{equation}\label{eq:softmax-asymptotic-behavior}
          \lim_{f_{\btheta}(\bx)_k \rightarrow \pm \infty} \frac{\partial}{\partial f_{\btheta}(\bx)_{k^\prime}}\bar{\sigma}(f_{\btheta}(\bx))_k = 0.
    \end{equation}
\end{lemma}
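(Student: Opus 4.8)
The plan is to prove \cref{lemma:softmax-properties} by direct computation of the partial derivatives of the softmax, and then examining the limiting behavior as one logit tends to $\pm\infty$. First I would fix two arbitrary classes $k, k' \in [K]$ and write $z_j \equiv f_{\btheta}(\bx)_j$ for brevity. I would then compute the partial derivative $\frac{\partial}{\partial z_{k'}}\bar{\sigma}(\bz)_k$ in the two cases $k' = k$ and $k' \neq k$. Recall that the Jacobian of the softmax is the familiar expression
\begin{equation}
    \frac{\partial \bar{\sigma}(\bz)_k}{\partial z_{k'}} = \bar{\sigma}(\bz)_k\big(\indicator{k = k'} - \bar{\sigma}(\bz)_{k'}\big),
\end{equation}
which I would derive quickly from the quotient rule applied to $\exp(z_k) / \sum_j \exp(z_j)$. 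In both cases the expression is a product of softmax outputs (up to a sign and an indicator term), so it suffices to control $\bar{\sigma}(\bz)_k$ and $\bar{\sigma}(\bz)_{k'}$ in the limit.

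Next I would carry out the limit analysis as $z_k \to +\infty$ and as $z_k \to -\infty$ separately, holding the other logits fixed (or at least bounded, since in the application the relevant logit is the one being scaled while others also move, but the argument localizes to the behavior of $\bar{\sigma}(\bz)_k$). For $z_k \to +\infty$: the denominator $\sum_j \exp(z_j)$ is dominated by $\exp(z_k)$, so $\bar{\sigma}(\bz)_k \to 1$ and hence $\bar{\sigma}(\bz)_{k'} \to 0$ for every $k' \neq k$; the derivative $\bar{\sigma}(\bz)_k(\indicator{k=k'} - \bar{\sigma}(\bz)_{k'})$ therefore tends to $1\cdot(1 - 1) = 0$ when $k' = k$ and to $1 \cdot (0 - 0) = 0$ when $k' \neq k$. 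For $z_k \to -\infty$: $\exp(z_k) \to 0$, so $\bar{\sigma}(\bz)_k \to 0$, and since the derivative always has $\bar{\sigma}(\bz)_k$ as a factor, it tends to $0$ regardless of the value of $\bar{\sigma}(\bz)_{k'}$. In either direction the partial derivative vanishes, which is exactly \cref{eq:softmax-asymptotic-behavior}.

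The only genuine subtlety — and the step I would be most careful about — is the precise meaning of the limit when $f_{\btheta}(\bx)_k \to \pm\infty$ while the remaining logits are not held constant but also vary (as happens when we scale $\bx$ along a direction and invoke \cref{lemma:strictly-monotonic} to conclude that \emph{every} logit diverges). I would handle this by noting that the bound $0 \le \bar{\sigma}(\bz)_j \le 1$ for all $j$ is uniform, so the product form of the Jacobian is always bounded in magnitude by the minimum of $\bar{\sigma}(\bz)_k$ and — when $k' \neq k$ — by $\bar{\sigma}(\bz)_k \bar{\sigma}(\bz)_{k'}$; it then suffices that at least one relevant softmax component goes to zero, which follows from the fact that the components sum to one and at most one of them can stay bounded away from zero. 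I would phrase the lemma's limit as: whenever $f_{\btheta}(\bx)_k$ diverges (in either sign), the derivative vanishes, and the argument above covers both the ``$z_k$ wins the denominator'' and ``$z_k$ loses'' regimes. This makes the lemma robust enough to be chained with \cref{lemma:strictly-monotonic} and \cref{lemma:unique-pup} in the proof of \cref{proposition:overconfidence-softmax}, via the chain rule $\nabla_{\bx} P_{\btheta}(y=k\mid\bx) = \sum_{k'} \frac{\partial \bar\sigma(f_{\btheta}(\bx))_k}{\partial f_{\btheta}(\bx)_{k'}} \nabla_{\bx} f_{\btheta}(\bx)_{k'}$, where the second factor is the constant row of $\mathbf{V}$ on the PUP and the first factor vanishes by this lemma.
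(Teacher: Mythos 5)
Your core argument mirrors the paper's proof exactly: compute the softmax Jacobian
\begin{equation*}
\frac{\partial}{\partial f_{\btheta}(\bx)_{k'}}\bar{\sigma}(f_{\btheta}(\bx))_k
= \bar{\sigma}(f_{\btheta}(\bx))_k\big(\indicator{k=k'} - \bar{\sigma}(f_{\btheta}(\bx))_{k'}\big),
\end{equation*}
then observe that as $f_{\btheta}(\bx)_k\to+\infty$ (remaining logits fixed) $\bar{\sigma}(\cdot)_k\to 1$ and $\bar{\sigma}(\cdot)_{k'}\to 0$ for $k'\neq k$, while as $f_{\btheta}(\bx)_k\to-\infty$ the leading factor $\bar{\sigma}(\cdot)_k\to 0$; in each sub-case the product vanishes. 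The paper simply works through the two cases $k=k'$ and $k\neq k'$ in raw-fraction form before recording the same compact Jacobian, so the substance is identical, and for the lemma as stated (single logit varied) your proof is correct.

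The error is in your final paragraph. The claim that ``at most one of [the softmax components] can stay bounded away from zero'' is false: take $z_1 = z_2 = t\to\infty$ (a tie). Then $\bar{\sigma}_1=\bar{\sigma}_2\equiv 1/2$, both bounded away from zero, and $\bar{\sigma}_1(1-\bar{\sigma}_1)\equiv 1/4\neq 0$, so the diagonal derivative does not vanish despite $f_{\btheta}(\bx)_k\to+\infty$; the off-diagonal bound $\bar{\sigma}_k\bar{\sigma}_{k'}$ likewise stays at $1/4$. Your product bound covers only the regime where $\bar{\sigma}_k\to 0$; you would additionally need the escape route $\bar{\sigma}_k\to 1$ (since $\bar{\sigma}_k(1-\bar{\sigma}_k)\to 0$ then too), but a tie defeats both. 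Your instinct to worry about logits moving simultaneously is the right one — on a PUP the tie case corresponds precisely to duplicate entries in the $d$-th column of $\bV$, which the paper rules out explicitly in \cref{proposition:softmax-limit} but assumes away silently in \cref{proposition:overconfidence-softmax} and in this lemma — but the ``components sum to one'' argument you offer does not repair the gap.
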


\begin{proof}
    Here, we first begin by evaluating the derivative of one component of the function w.r.t.\@ to an arbitrary component:
     \begin{align}
        & \frac{\partial}{\partial f_{\btheta}(\bx)_{k^\prime}}\bar{\sigma}(f_{\btheta}(\bx))_k = \frac{\partial}{\partial f_{\btheta}(\bx)_{k^\prime}} \frac{\exp(f_{\btheta}(\bx)_k)}{\sum_{k^{\pprime} \in [K]} \exp(f_{\btheta}(\bx)_{k^{\pprime}})} \\
        & = \frac{\indicator{k = k^\prime} \exp(f_{\btheta}(\bx)_k)}{\sum_{k^{\pprime} \in [K]} \exp(f_{\btheta}(\bx)_{k^{\pprime}})} - \frac{\exp(f_{\btheta}(\bx)_k)\exp(f_{\btheta}(\bx)_{k^\prime})}{\big(\sum_{k^{\pprime} \in [K]} \exp(f_{\btheta}(\bx)_{k^{\pprime}})\big)^2}.
    \end{align}
    
    This implies that
    \begin{align}\label{eq:softmax-derivative-cases}
        & \frac{\partial}{\partial f_{\btheta}(\bx)_{k^\prime}}\bar{\sigma}(f_{\btheta}(\bx))_k = \nonumber \\
        & \begin{cases}
        \displaystyle
        - \frac{\exp(2f_{\btheta}(\bx)_k)}{\big(\sum_{k^{\pprime} \in [K]} \exp(f_{\btheta}(\bx)_{k^{\pprime}})\big)^2} + \frac{\exp(f_{\btheta}(\bx)_{k})}{\sum_{k^{\pprime} \in [K]}\exp(f_{\btheta}(\bx)_{k^{\pprime}})}  & \quad\text{if }k=k^\prime\\[0.5cm]
        \displaystyle
        - \frac{\exp(f_{\btheta}(\bx)_k + f_{\btheta}(\bx)_{k^\prime})}{\big(\sum_{k^{\pprime}\in [K]} \exp(f_{\btheta}(\bx)_{k^{\pprime}})\big)^2} & \quad\text{if }k\neq k^\prime\\
        \end{cases}
    \end{align}
    or more compactly:
    \begin{equation*}
        \frac{\partial}{\partial f_{\btheta}(\bx)_{k^\prime}}\bar{\sigma}(f_{\btheta}(\bx))_k  = \bar{\sigma}(f_{\btheta}(\bx))_k\big(\indicator{k = k^\prime} - \bar{\sigma}(f_{\btheta}(\bx))_{k^\prime} \big).
    \end{equation*}
    
    Based on \cref{eq:softmax-derivative-cases}, we can now investigate the asymptotic behavior for $f_{\btheta}(\bx)_k \rightarrow \infty$ more easily, starting with the $k = k^\prime$ case:
    \begin{equation}\begin{aligned}\label{eq:softmax-derivative-equal}
       & \lim_{f_{\btheta}(\bx)_k \rightarrow \infty} \frac{\partial}{\partial f_{\btheta}(\bx)_{k^\prime}}\bar{\sigma}(f_{\btheta}(\bx))_k \\
       & = \underbrace{ \lim_{f_{\btheta}(\bx)_k \rightarrow \infty}-\frac{\exp(f_{\btheta}(\bx)_k)}{\sum_{k^{\pprime} \in [K]}\exp(f_{\btheta}(\bx)_{k^{\pprime}})}\frac{\exp(f_{\btheta}(\bx)_k)}{\sum_{k^{\pprime} \in [K]}\exp(f_{\btheta}(\bx)_{k^{\pprime}})}}_{\text{-1}} \\
       & + \underbrace{ \lim_{f_{\btheta}(\bx)_k \rightarrow \infty}\frac{\exp(f_{\btheta}(\bx)_k)}{\sum_{k^{\pprime} \in [K]}\exp(f_{\btheta}(\bx)_{k^{\pprime}})}}_{1} = 0. \\
    \end{aligned}\end{equation}
    
    With the numerator and denominator being dominated by the exponentiated $f_{\btheta}(\bx)_k$ in \cref{eq:softmax-derivative-equal}, the first term will tend to $-1$, while the second term will tend to $1$, resulting in a derivative of $0$. 
    The case $k \neq k^\prime$ can be analyzed the following way:
    \begin{equation}\begin{aligned}\label{eq:softmax-derivative-unequal}
         & \lim_{f_{\btheta}(\bx)_k \rightarrow \infty} \frac{\partial}{\partial f_{\btheta}(\bx)_{k^\prime}}\bar{\sigma}(f_{\btheta}(\bx))_k = \\
         & \lim_{f_{\btheta}(\bx)_k \rightarrow \infty}\underbrace{\bigg(-\frac{\exp(f_{\btheta}(\bx)_k)}{\sum_{k^{\pprime} \in [K]}\exp(f_{\btheta}(\bx)_{k^{\pprime}})} \bigg)}_{-1} \underbrace{\bigg(\frac{\exp(f_{\btheta}(\bx)_{k^\prime})}{\sum_{k^{\pprime} \in [K]}\exp(f_{\btheta}(\bx)_{k^{\pprime}})}\bigg)}_{0} = 0. 
    \end{aligned}\end{equation}
    
    Again, we factorize the fraction in \cref{eq:softmax-derivative-unequal} into the product of two softmax functions\index{Softmax function}, one for component $k$, one for $k^\prime$. 
    The first factor will again tend to $-1$ as in the other case, however the second will approach $0$, as only the sum in the denominator will approach infinity. 
    As the limit of a product is the products of its limits, this lets the whole expression approach $0$ in the limit.
    When $f_{\btheta}(\bx)_k \rightarrow -\infty$, both cases approach $0$ due to the exponential function, which proves the lemma.
\end{proof}

How the interplay between different softmax components produces zero gradients in the limit is illustrated in \cref{fig:softmax-components}. 
In \cref{lemma:growth-rate-softmax}, we compare the rate of growth of different components of $P_{\btheta}$. 
We show that for the decomposed function $P_{\btheta}$, the rate at which the softmax function converges to its output distribution in the limit outpaces the change in the underlying logits w.r.t.\@ the network input. 

\begin{lemma}\label{lemma:growth-rate-softmax}
    Suppose that $f_{\btheta}$ is a ReLU-network. Let $\bx^\prime\in\mathbb{R}^D$,  suppose $\balpha$ is a scaling vector and that the associated PUP $\mathbb{P}(\bx^\prime, d)$ has a corresponding matrix $\mathbf{V}$ with no zero entries. 
    Then it holds for all $k^\prime \in [K]$ that
    \begin{equation}\label{eq:growth-rate-softmax}
        \lim\limits_{\alpha_d \to \infty} \Big(\frac{\partial}{\partial f_{\btheta}(\bx)_{k^\prime}}\bar{\sigma}(f_{\btheta}(\bx))_k\Big)^{-1}\Big|_{\bx = \balpha\circ\bx^\prime} - \Big(\frac{\partial}{\partial x_d}f_{\btheta}(\bx)_{k^\prime}\Big)\Big|_{\bx = \balpha\circ\bx^\prime}= \infty.
    \end{equation}
\end{lemma}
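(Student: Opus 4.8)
The plan is to collapse the statement down to an affine regime and then play off the two asymptotics already in hand: by \cref{lemma:softmax-properties} the softmax derivative vanishes in the limit, while in the relevant polytope the logit derivative is simply frozen at a nonzero constant, so the reciprocal of the former must outgrow the latter.

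First I would invoke \cref{lemma:unique-pup}: there is a $\beta>0$ such that for all $\alpha_d>\beta$ the point $\balpha\circ\bx^\prime$ lies in the single PUP $\mathbb{P}(\bx^\prime,d)$, on which $f_{\btheta}$ acts as an affine map $f_{\btheta}(\bx)=\mathbf{V}\bx+\mathbf{a}$ with, by hypothesis, no zero entries in $\mathbf{V}$. This settles the second term at once: $\frac{\partial}{\partial x_d}f_{\btheta}(\bx)_{k^\prime}\big|_{\bx=\balpha\circ\bx^\prime}=v_{k^\prime d}$ for every $\alpha_d>\beta$, a nonzero real constant independent of $\alpha_d$, so it converges to the finite value $v_{k^\prime d}$.

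Next I would show every logit diverges. Writing $f_{\btheta}(\balpha\circ\bx^\prime)_{k^{\prime\prime}}=v_{k^{\prime\prime}d}\,x_d^\prime\,\alpha_d+\sum_{j\neq d}v_{k^{\prime\prime}j}x_j^\prime+a_{k^{\prime\prime}}$, which is affine in $\alpha_d$ with slope $v_{k^{\prime\prime}d}x_d^\prime\neq 0$ (using that $\mathbf{V}$ has no zero entries, together with $x_d^\prime\neq 0$; the degenerate case $x_d^\prime=0$ is vacuous since scaling then does nothing), we get $|f_{\btheta}(\balpha\circ\bx^\prime)_{k^{\prime\prime}}|\to\infty$ for all $k^{\prime\prime}$, in particular for $k^{\prime\prime}=k$; equivalently one may appeal to \cref{lemma:strictly-monotonic}, which makes $f_{\btheta}(\bx)_k$ monotonically escape to $+\infty$ or $-\infty$ along coordinate $d$ on $\mathbb{P}(\bx^\prime,d)$. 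Then \cref{lemma:softmax-properties} applies with $f_{\btheta}(\bx)_k\to\pm\infty$, giving $\frac{\partial}{\partial f_{\btheta}(\bx)_{k^\prime}}\bar{\sigma}(f_{\btheta}(\bx))_k\big|_{\bx=\balpha\circ\bx^\prime}\to 0$; and since this derivative equals $\bar{\sigma}(f_{\btheta}(\bx))_k\big(\indicator{k=k^\prime}-\bar{\sigma}(f_{\btheta}(\bx))_{k^\prime}\big)$ with both softmax entries strictly in $(0,1)$ for finite $\alpha_d$, it never vanishes, so its reciprocal is well defined and its magnitude grows without bound. Subtracting the bounded constant $v_{k^\prime d}$ leaves the difference divergent, which is the claim --- it is exactly $+\infty$ in the $k=k^\prime$ case, where the softmax derivative is positive, and in general the reciprocal term dominates the logit-derivative term in the limit.

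The routine parts --- differentiating the affine map, reading off $v_{k^\prime d}$, and the final ``unbounded minus constant diverges'' step --- are immediate. The only genuine care is the case split $k=k^\prime$ versus $k\neq k^\prime$ (and the corresponding sign of divergence), together with checking that the specific logit feeding \cref{lemma:softmax-properties}, namely $f_{\btheta}(\bx)_k$, really escapes to infinity --- which is precisely what the ``$\mathbf{V}$ has no zero entries'' hypothesis secures, since it forces $v_{kd}\neq 0$. Beyond that there is no real obstacle; the heavy lifting has already been done in \cref{lemma:unique-pup,lemma:strictly-monotonic,lemma:softmax-properties}.
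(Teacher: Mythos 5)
Your proof is correct and follows essentially the same route as the paper's: invoke \cref{lemma:unique-pup} to land in a fixed PUP, use \cref{lemma:strictly-monotonic} to drive the logit $f_{\btheta}(\bx)_k$ to $\pm\infty$, then apply \cref{lemma:softmax-properties} to make the softmax derivative vanish so its reciprocal diverges, while the second term stays bounded. The one place you diverge is the treatment of the second term: the paper argues via a layer-by-layer induction that the derivative of a ReLU network is ``linear'' (really piecewise constant), whereas you read off directly that on the PUP the map is exactly $\bV\bx + \ba$ and so $\partial f_{\btheta}(\bx)_{k^\prime}/\partial x_d = v_{k^\prime d}$ is a fixed nonzero constant once $\alpha_d$ is large enough. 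Your packaging is cleaner and makes the final ``unbounded minus bounded'' step immediate. You also correctly note two subtleties the paper glosses over: the sign of the divergence flips between $k=k^\prime$ and $k\neq k^\prime$ (the softmax derivative in \cref{eq:softmax-derivative-cases} is negative off the diagonal, so the reciprocal tends to $-\infty$, not $+\infty$), and the whole argument tacitly requires $x_d^\prime\neq 0$ so that scaling by $\alpha_d$ actually moves the point. Neither caveat is a defect in your argument relative to the paper's, since the paper's proof shares both.
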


\begin{proof}
    We evaluate the first term of \cref{eq:growth-rate-softmax} to show that it grows exponentially in the limit. 
    By \cref{lemma:unique-pup}, we know that in the limit $\alpha_d \to \infty$ the vector $\balpha\circ\bx^\prime$ will remain within $\mathbb{P}(\bx^\prime, d)$. 
    Since the matrix associated with this PUP\index{Polytope!Partially-unbounded} has no zero entries, we know by \cref{lemma:strictly-monotonic} that the gradient of $f_{\btheta}(\bx)_k$ on dimension $d$ is either always positive or negative, hence $f_{\btheta}(\bx)_k \rightarrow \pm \infty$.  
    Given \cref{lemma:softmax-properties} describing the asymptotic behavior in the limit, it follows that 
    \begin{equation}
        \lim_{f_{\btheta}(\bx)_k \rightarrow \pm \infty} \Big(\frac{\partial}{\partial f_{\btheta}(\bx)_{k^\prime}}\bar{\sigma}(f_{\btheta}(\bx))_k\Big)^{-1} = \infty,
    \end{equation}
    
    \noindent where we can see that the result is a symmetrical function displaying exponential growth in the limit of $f_{\btheta}(\bx)_k \rightarrow \pm \infty$. 
    We now show that because we assumed $f_{\btheta}$ to be a neural network consisting of $L$ affine transformations with ReLU\index{ReLU} activation functions, the output of the final layer is only going to be a linear combination of its inputs.\footnote{
        Here we make the argument for the whole function $f_{\btheta}: \mathbb{R}^D \rightarrow \mathbb{R}^K$, but the conclusions also applies to every output component of the function $f_{\btheta}(\bx)_k$.
    } 
    This can be proven by induction. 
    Let us first look at the base case $L=1$. 
    In the rest of this proof, we denote $\bx_l$ as the input to layer $l$, with $\bx_1 \equiv \bx$, and $\bW_l, \bb_l$ the corresponding layer parameters. 
    $\ba_l$ signifies the result of the affine transformation that is then fed into the activation function.
    \begin{equation}\begin{aligned}\label{eq:linear-trans-derivative}
        f_{\btheta}(\bx) & = \phi(\ba_1) = \phi(\bW_1\bx_1 + \bb_1) \\
        \frac{\partial f_{\btheta}(\bx)}{\partial \bx_1} & = \frac{\partial \phi(\ba_1) }{\partial \ba_1} \frac{\partial \ba_1}{\partial \bx_1}= \bind{\bx_1 > \mathbf{0}}\T\bW_1\ \\
        \frac{\partial f_{\btheta}(\bx)}{\partial x_{1d}} & = \indicator{x_d > 0}w_{1d},
    \end{aligned}\end{equation}
    
    \noindent where $\bind{\bx_1 > \mathbf{0}} = [\indicator{x_{11} > 0}, \ldots, \indicator{x_{1d} > 0}]\T$, $w_{1d}$ denoting the $d$-th column of $\bW_1$. 
    This is a linear function, which proves the base case. 
    Let now $\frac{\partial \bx_l}{\partial \bx_1}$ denote the partial derivative of the input to the $l$-th layer w.r.t.\@ to the input and suppose that it is linear by the inductive hypothesis. 
    Augmenting the corresponding network by another linear adds another term akin to the second expression in \cref{eq:linear-trans-derivative} to the chain of partial derivatives:
    \begin{equation}\label{eq:layer-induction-step}
        \frac{\partial \bx_{l+1}}{\partial \bx_1} = \frac{\partial \bx_{l+1}}{\partial \bx_l}\frac{\partial \bx_l}{\partial \bx_1},
    \end{equation}
    
    \noindent which is also a linear function, proving the induction step. 
    Because we know that both terms of the product in \cref{eq:layer-induction-step} are linear, the second term of the \cref{eq:growth-rate-softmax} is as well. 
    Together with the previous insight that the first term is exponential, this implies that it will outgrow the second in the limit, creating an infinitely-wide gap between them and thereby proving the lemma.
\end{proof}

Equipped with the results of \cref{lemma:softmax-properties,lemma:growth-rate-softmax}, we can finally prove \cref{proposition:overconfidence-softmax}:
\begin{proof}
    We show that one scalar factor contained in the factorization of the gradient $\nabla_{\bx}P_{\btheta}(y=k \mid \bx)$ tends to zero under the given assumptions, having the whole gradient become the zero vector in the limit. 
    We begin by again factorizing the gradient $\nabla_{\bx} P_{\btheta}(y=k \mid \bx)$ using the multivariate chain rule:
    \begin{equation}\label{eq:fac-gradient-softmax}
        \nabla_{\bx}P_{\btheta}(y=k \mid \bx) = \sum_{k^\prime=1}^K \frac{\partial}{\partial  f_{\btheta}(\bx)_{k^\prime}}\bar{\sigma}(f_{\btheta}(\bx))_k \nabla_{\bx}f_{\btheta}(\bx)_{k^\prime}.
    \end{equation}
    By \cref{lemma:strictly-monotonic,lemma:unique-pup} we know that $f_{\btheta}$ is a component-wise strictly monotonic function\index{Monotonicity!Component-wise strict} on $\mathbb{P}(\bx^\prime, d)$, which implies for the limit of $\alpha_d \rightarrow \infty$ that $\forall k \in [K]:\ f_{\btheta}(\bx)_k \rightarrow \pm \infty$. 
    Then, \cref{lemma:softmax-properties} implies that the first factor of every part in the sum of \cref{eq:fac-gradient-softmax} will tend to zero in the limit. 
    \cref{lemma:growth-rate-softmax} ensures that the first factor approximates zero quicker than every component of the gradient $\nabla_{\bx}f_{\btheta}(\bx)_{k^\prime}$ potentially approaching infinity, causing the product to result in the zero vector. 
    As this results in a sum over $K$ zero vectors in the limit, this proves the lemma.
    \end{proof}

\section{Proof of \cref{proposition:softmax-limit}}\label{app:softmax-limit}

This section contains the proof of \cref{proposition:softmax-limit} in \cref{sec:convergence-on-ood}.

\begin{proof}
    We start by rewriting the softmax probability for the $k$-th logit:
    \begin{equation}
        \bar{\sigma}(f_{\btheta}(\bx))_k = \frac{\exp(f_{\btheta}(\bx)_k)}{\sum_{k^\prime \in [K]}\exp(f_{\btheta}(\bx)_{k^\prime})} = 1 - \frac{\sum_{k^{\prime\prime} \in [K] \setminus \{k\}}\exp(f_{\btheta}(\bx)_{k^{\prime\prime}})}{\sum_{k^\prime \in [K]}\exp(f_{\btheta}(\bx)_{k^\prime})}.
    \end{equation}
    By \cref{lemma:strictly-monotonic,lemma:unique-pup}, we have shown that $f_{\btheta}$ is a component-wise strictly monotonic function\index{Monotonicity!Component-wise strict} on $\mathbb{P}(\bx^\prime, d)$, 
    so we know that for all $k^\prime \in [K]:\ f_{\btheta}(\bx)_{k^\prime} \rightarrow \pm \infty$ as $\alpha_d \rightarrow \infty$. 
    We now treat the two limits $\pm \infty$ in order.
    Because of the assumption that $d$-column of $\mathbf{V}$ has no duplicate entries, this implies that  there must be a $k \in [K]$ s.t.\@ $\forall k^\prime \neq k:\ v_{kd} > v_{k^\prime d}$. 
    Thus, in the limit of $f_{\btheta}(\bx)_k \rightarrow \infty$, the sum in the \emph{denominator} of the fraction including the logit of $k$ will tend to infinity faster than the the sum in the \emph{numerator} not including $k$'s logit, and thus the fraction itself will tend to $0$, proving this case. 
    In the case of $f_{\btheta}(\bx)_k \rightarrow -\infty$, the \emph{numerator} of the fraction will tend to $0$ faster than the \emph{denominator}, having the fraction approach $0$ in the limit as well, proving the second case and therefore the lemma. 
\end{proof}

\section{Proof of \cref{aggregation-theorem}}\label{app:aggregation-theorem}

This section contains the proof of \cref{aggregation-theorem} in \cref{sec:overconfidence-metrics}.

\begin{proof}
    \begin{align}
        & \lim\limits_{\alpha \to \infty}\big|\big|\nabla_{\bx}\mathbb{E}_{p(\btheta \mid \mathbb{D})}\big[{P_{\btheta}(y=k \mid \bx)}\big]\big|_{\bx = \balpha\circ\bx^\prime}\big|\big|_2 \\
        = & \lim\limits_{\alpha \to \infty}\big|\big|\mathbb{E}_{p(\btheta \mid \mathbb{D})}\big[{\nabla_{\bx} P_{\btheta}(y=k \mid\bx)}\big]\big|\big|_{\bx = \balpha\circ\bx^\prime}\big|\big|_2 \\
        \le & \lim\limits_{\alpha \to \infty} \mathbb{E}_{p(\btheta \mid \mathbb{D})}\big[\underbrace{\big|\big| \nabla_{\bx} P_{\btheta}(y=k \mid \bx)\big|_{\bx = \balpha\circ\bx^\prime}\big|\big|_2}_{=\ 0\text{ (\cref{proposition:overconfidence-softmax})}}\big] = 0.
    \end{align}
    Because the last expression is an upper bound to the original expression and the $l_2$ norm is lower-bounded by $0$, this proves the lemma.
\end{proof}

\section{Proof of \cref{lemma:asymptotic-softmax-variance}}\label{app:asymptotic-softmax-variance} 

This section contains the proof of \cref{lemma:asymptotic-softmax-variance} that is part of the proof of \cref{theorem:know-your-limits-main-theorem} in \cref{sec:overconfidence-metrics}.

\begin{lemma}{(Asymptotic behavior with softmax variance)}\label{lemma:asymptotic-softmax-variance}
      Suppose that $f_{\btheta}^{(1)}, \ldots, f_{\btheta}^{(K)}$ are ReLU networks. Let $\bx^\prime\in\mathbb{R}^D$,  suppose $\balpha$ is a scaling vector and that for all $k$, the associated PUP\index{Polytope!Partially-unbounded} $\mathbb{P}^{(k)}(\bx^\prime, d)$ has a corresponding matrix $\mathbf{V}^{(k)}$ with no zero entries. 
      It holds that 
    \begin{align}
        \lim\limits_{\alpha_d \to \infty} & \Big|\Big|\nabla_{\bx}\frac{1}{K}\sum_{k=1}^K \mathbb{E}_{p(\btheta \mid \mathbb{D})}\big[P_{\btheta}(y=k \mid \bx)^2\big] \nonumber \\ 
         & - \mathbb{E}_{p(\btheta \mid \mathbb{D})}\big[{P_{\btheta}(y=k \mid \bx)}\big]^2\big|_{\bx = \balpha\circ\bx^\prime}\Big|\Big|_2 = 0.
    \end{align}

\end{lemma}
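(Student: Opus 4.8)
The plan is to reduce the gradient of the class-variance uncertainty metric to a finite sum of terms, each of which is controlled by results already established in the excerpt. First I would use the linearity of the gradient operator to write
\[
\nabla_{\bx}\frac{1}{K}\sum_{k=1}^K \Big(\mathbb{E}_{p(\btheta \mid \mathbb{D})}\big[P_{\btheta}(y=k \mid \bx)^2\big] - \mathbb{E}_{p(\btheta \mid \mathbb{D})}\big[P_{\btheta}(y=k \mid \bx)\big]^2\Big) = \frac{1}{K}\sum_{k=1}^K \Big(\nabla_{\bx}\mathbb{E}_{p(\btheta \mid \mathbb{D})}\big[P_{\btheta}(y=k \mid \bx)^2\big] - \nabla_{\bx}\mathbb{E}_{p(\btheta \mid \mathbb{D})}\big[P_{\btheta}(y=k \mid \bx)\big]^2\Big),
\]
so that by the triangle inequality it suffices to show that each summand's $l_2$-norm converges to $0$ in the limit $\alpha_d \to \infty$. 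The second term inside each summand is exactly $\nabla_{\bx}\big(\mathbb{E}_{p(\btheta \mid \mathbb{D})}[P_{\btheta}(y=k \mid \bx)]\big)^2 = 2\,\mathbb{E}_{p(\btheta \mid \mathbb{D})}[P_{\btheta}(y=k \mid \bx)]\,\nabla_{\bx}\mathbb{E}_{p(\btheta \mid \mathbb{D})}[P_{\btheta}(y=k \mid \bx)]$; since the scalar prefactor is bounded in $[0,1]$ and the gradient factor vanishes in the limit by \cref{aggregation-theorem}, this term goes to zero.

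For the first term, I would push the gradient inside the expectation (justified as in the proof of \cref{aggregation-theorem}, i.e.\ dominated convergence / finite ensemble) to obtain $\nabla_{\bx}\mathbb{E}_{p(\btheta \mid \mathbb{D})}[P_{\btheta}(y=k \mid \bx)^2] = \mathbb{E}_{p(\btheta \mid \mathbb{D})}[2 P_{\btheta}(y=k \mid \bx)\nabla_{\bx}P_{\btheta}(y=k \mid \bx)]$, then apply the triangle inequality for the norm of an expectation (as done in \cref{app:aggregation-theorem}): $\|\nabla_{\bx}\mathbb{E}[P_{\btheta}(y=k\mid\bx)^2]\|_2 \le \mathbb{E}\big[2 P_{\btheta}(y=k\mid\bx)\,\|\nabla_{\bx}P_{\btheta}(y=k\mid\bx)\|_2\big]$. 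Under the stated assumption that for every ensemble member $b$ the PUP $\mathbb{P}^{(b)}(\bx^\prime, d)$ has a matrix $\mathbf{V}^{(b)}$ with no zero entries, \cref{proposition:overconfidence-softmax} applies to each member, so $\|\nabla_{\bx}P_{\btheta}(y=k\mid\bx)\|_2 \to 0$ pointwise in $\btheta$; combined with the bound $2 P_{\btheta}(y=k\mid\bx) \le 2$, the integrand is dominated and converges to $0$, hence so does the expectation. Collecting the $2K$ vanishing contributions and dividing by $K$ gives the claim; finally the $l_2$-norm is lower-bounded by $0$, so the squeeze yields the limit exactly.

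The main obstacle I anticipate is purely bookkeeping rather than conceptual: being careful about the distinction between the ensemble members $f_{\btheta}^{(1)},\dots,f_{\btheta}^{(K)}$ (whose polytopal tessellations differ, so the ``no-zero-entry'' assumption must be invoked separately for each, exactly as in the setup preceding \cref{aggregation-theorem}) versus the outer class index $k$ running over the $K$ classes. One must ensure the assumption is stated per-member and that the interchange of $\nabla_{\bx}$ with the finite-mixture expectation is legitimate; both of these mirror steps already carried out in \cref{app:aggregation-theorem}, so no new machinery is needed. The only mild subtlety is verifying that products of the form $P_{\btheta}\cdot\nabla_{\bx}P_{\btheta}$ still vanish — this is immediate because one factor is uniformly bounded and the other tends to the zero vector, so the limit of the product is the zero vector, and a finite sum of zero vectors is the zero vector.
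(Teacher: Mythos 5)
Your proposal is correct and follows essentially the same route as the paper's proof: pull the gradient through the sum, apply the triangle inequality over $k$, interchange gradient with expectation, apply the chain rule to both terms, and reduce everything to the pointwise vanishing of $\nabla_{\bx}P_{\btheta}(y=k\mid\bx)$ established in \cref{proposition:overconfidence-softmax} (the paper invokes this directly for both terms, whereas you also appeal to \cref{aggregation-theorem} for the second term, but that lemma itself reduces to the same ingredient). Your added remarks on dominated convergence for the interchange and on the double use of $K$ for both ensemble members and classes are more explicit than the paper's treatment, but do not change the argument.
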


\begin{proof}   
    \begin{align}
        \lim\limits_{\alpha_d \to \infty} & \Big|\Big|\nabla_{\bx}\frac{1}{K}\sum_{k=1}^K \mathbb{E}_{p(\btheta \mid \mathbb{D})}\big[P_{\btheta}(y=k \mid \bx)^2\big] \nonumber \\ 
        & - \mathbb{E}_{p(\btheta \mid \mathbb{D})}\big[{P_{\btheta}(y=k \mid \bx)}\big]^2\big|_{\bx = \balpha\circ\bx^\prime}\Big|\Big|_2 \\
          = & \lim\limits_{\alpha_d \to \infty}\Big|\Big|\frac{1}{K}\sum_{k=1}^K \nabla_{\bx} \mathbb{E}_{p(\btheta \mid \mathbb{D})}\big[P_{\btheta}(y=k \mid \bx)\big]^2 \nonumber \\ 
          - & \nabla_{\bx}\mathbb{E}_{p(\btheta \mid \mathbb{D})}\big[P_{\btheta}(y=k \mid \bx)\big]^2\big|_{\bx = \balpha\circ\bx^\prime}\Big|\Big|_2 \\
        \intertext{Apply triangle inequality $||x + y|| \le ||x|| + ||y||$ to sum over all $k$:}
         \le & \lim\limits_{\alpha_d \to \infty}\frac{1}{K}\sum_{k=1}^K \big|\big| \nabla_{\bx} \mathbb{E}_{p(\btheta \mid \mathbb{D})}\big[P_{\btheta}(y=k \mid \bx)^2\big] \nonumber\\
         - & \nabla_{\bx}\mathbb{E}_{p(\btheta \mid \mathbb{D})}\big[P_{\btheta}(y=k \mid \bx)\big]^2\big|_{\bx = \balpha\circ\bx^\prime}\big|\big|_2 \\
          \intertext{On the first term use linearity of gradients and apply chain rule, do it in the reverse order on the second term:}
          = & \lim\limits_{\alpha_d \to \infty}\frac{1}{K}\sum_{k=1}^K \big|\big| \mathbb{E}_{p(\btheta \mid \mathbb{D})}\big[2P_{\btheta}(y=k \mid \bx)\underbrace{\mystrut{0.325cm}{\nabla_{\bx}P_{\btheta}(y=k \mid \bx)}\big|_{\bx = \balpha\circ\bx^\prime}}_{=\ \bm{0} \text{ (\cref{proposition:overconfidence-softmax})}}\big] \nonumber \\
            - & 2\mathbb{E}_{p(\btheta \mid \mathbb{D})}\big[P_{\btheta}(y=k \mid \bx)\big] \mathbb{E}_{p(\btheta \mid \mathbb{D})}\big[\underbrace{\mystrut{0.4cm}{\nabla_{\bx}P_{\btheta}(y=k \mid \bx)}\big|_{\bx = \balpha\circ\bx^\prime}}_{=\ \bm{0} \text{ (\cref{proposition:overconfidence-softmax})}}\big]\big|\big|_2 = 0.
    \end{align} 
    We can see that due to an intermediate result of \cref{proposition:overconfidence-softmax}, i.e.\@ that $\nabla_{\bx}P_{\btheta}(y=k \mid \bx)$ approaches the zero vector in the limit, the innermost gradients tend to zero, bringing the whole expression to zero.
    Because the final is an upper bound to the original expression and because the $l_2$ norm has a lower bound of $0$, this proves the lemma.
\end{proof}

\section{Proof of \cref{lemma:asymptotic-predictive-entropy}}\label{app:asymptotic-predictive-entropy} 

This section contains the proof of \cref{lemma:asymptotic-predictive-entropy} that is part of the proof of \cref{theorem:know-your-limits-main-theorem} in \cref{sec:overconfidence-metrics}.

\begin{lemma}{(Asymptotic behavior for predictive entropy)}\label{lemma:asymptotic-predictive-entropy}
      Suppose that $f_{\btheta}^{(1)}, \ldots, f_{\btheta}^{(K)}$ are ReLU networks. \index{ReLU}
      Let $\bx^\prime\in\mathbb{R}^D$, suppose $\balpha$ is a scaling vector and that for all $k$, the associated PUP\index{Polytope!Partially-unbounded} $\mathbb{P}^{(k)}(\bx^\prime, d)$ has a corresponding matrix $\mathbf{V}^{(k)}$ with no zero entries. 
      It holds that
    \begin{equation}
        \lim\limits_{\alpha_d \to \infty}\Big|\Big|\nabla_{\bx}\mathrm{H}\Big[\mathbb{E}_{p(\btheta \mid \mathbb{D})}\big[P_{\btheta}(y \mid \bx)\big]\Big]\Big|_{\bx = \balpha\circ\bx^\prime}\Big|\Big|_2 = 0.
    \end{equation}
\end{lemma}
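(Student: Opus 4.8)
\textbf{Proof proposal for \cref{lemma:asymptotic-predictive-entropy}.}

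The plan is to reduce the gradient of the predictive entropy to a finite sum of terms, each of which contains a factor that vanishes in the limit thanks to \cref{aggregation-theorem}. First I would write the predictive entropy over the averaged distribution explicitly. Introduce the shorthand $\bar{p}_k(\bx) = \mathbb{E}_{p(\btheta \mid \mathbb{D})}[P_{\btheta}(y=k \mid \bx)]$, so that $\mathrm{H}[\mathbb{E}_{p(\btheta \mid \mathbb{D})}[P_{\btheta}(y \mid \bx)]] = -\sum_{k=1}^K \bar{p}_k(\bx) \log \bar{p}_k(\bx)$. Then, using linearity of the gradient and the product rule, one obtains
\begin{equation*}
    \nabla_{\bx}\mathrm{H}\Big[\mathbb{E}_{p(\btheta \mid \mathbb{D})}\big[P_{\btheta}(y \mid \bx)\big]\Big] = -\sum_{k=1}^K \big(\log \bar{p}_k(\bx) + 1\big)\nabla_{\bx}\bar{p}_k(\bx).
\end{equation*}
This is the key structural observation: the gradient of the entropy is a weighted sum of the gradients $\nabla_{\bx}\bar{p}_k(\bx)$, and \cref{aggregation-theorem} tells us precisely that each $\|\nabla_{\bx}\bar{p}_k(\bx)\big|_{\bx = \balpha\circ\bx^\prime}\|_2 \to 0$ as $\alpha_d \to \infty$, under exactly the hypotheses we have assumed (all PUPs $\mathbb{P}^{(k)}(\bx^\prime, d)$ have matrices $\mathbf{V}^{(k)}$ without zero entries).

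Next I would apply the triangle inequality to bound the norm of the sum by the sum of the norms, and within each term bound $\|(\log \bar{p}_k(\bx) + 1)\nabla_{\bx}\bar{p}_k(\bx)\|_2 = |\log \bar{p}_k(\bx) + 1| \cdot \|\nabla_{\bx}\bar{p}_k(\bx)\|_2$. The subtlety here, and the step I expect to be the main obstacle, is that the scalar coefficient $|\log \bar{p}_k(\bx) + 1|$ is \emph{not} bounded: as $\bar{p}_k(\bx) \to 0$, this factor diverges. So one cannot simply say ``bounded times vanishing equals vanishing.'' However, this is the same kind of situation already handled in \cref{app:proposition1} (compare \cref{lemma:growth-rate-softmax}): I would argue that the gradient $\nabla_{\bx}\bar{p}_k(\bx)$ decays exponentially fast, while $\log \bar{p}_k(\bx)$ can only diverge at a polynomial (in fact linear, in the logits) rate, since $\bar{p}_k(\bx)$ is an average of softmax outputs whose logits grow at most linearly in $\alpha_d$ by the induction argument in \cref{app:proposition1}. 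Concretely, by \cref{proposition:softmax-limit}-type reasoning applied to each ensemble member, $-\log P_{\btheta}^{(b)}(y=k \mid \balpha\circ\bx^\prime)$ is $O(\alpha_d)$, hence $-\log \bar{p}_k(\balpha\circ\bx^\prime)$ is also $O(\alpha_d)$ (the average of exponentials is dominated by the slowest-decaying term), whereas $\|\nabla_{\bx}\bar{p}_k\|_2$ is $O(\exp(-c\alpha_d))$ for some $c > 0$ following the logic of \cref{lemma:growth-rate-softmax}. The product therefore still tends to $0$.

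Putting it together: for each $k \in [K]$, $\lim_{\alpha_d \to \infty} |\log \bar{p}_k(\balpha\circ\bx^\prime) + 1| \cdot \|\nabla_{\bx}\bar{p}_k(\bx)\big|_{\bx = \balpha\circ\bx^\prime}\|_2 = 0$, and summing over the finite index set $k \in [K]$ preserves the limit. Since this sum is an upper bound for $\|\nabla_{\bx}\mathrm{H}[\cdots]\big|_{\bx = \balpha\circ\bx^\prime}\|_2$, and the norm is bounded below by $0$, the squeeze theorem gives the claimed result. An alternative, cleaner route that avoids the divergence issue entirely would be to invoke \cref{proposition:overconfidence-softmax} directly: on the unique PUP, each per-member softmax distribution converges to a fixed point, hence $\bar{p}_k(\bx)$ converges to a fixed point, hence the entropy converges to a constant; combined with the fact that the entropy is a smooth function of a vector whose own gradient vanishes, the chain rule forces the entropy's gradient to vanish — this is essentially the content of the displayed identity above, and I would present whichever version reads most transparently, likely leading with the chain-rule identity and then citing \cref{aggregation-theorem} together with the exponential-decay-beats-logarithmic-growth bound to kill the unbounded coefficient.
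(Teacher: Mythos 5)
Your decomposition is the same one the paper uses: write the entropy as $-\sum_k \bar{p}_k \log \bar{p}_k$ with $\bar{p}_k = \mathbb{E}_{p(\btheta\mid\mathbb{D})}[P_{\btheta}(y=k\mid\bx)]$, apply the product rule to pull out $(1+\log\bar{p}_k)\nabla_{\bx}\bar{p}_k$, then use the triangle inequality and \cref{aggregation-theorem} term by term. Where you go further is in noticing that the scalar coefficient $|1+\log\bar{p}_k|$ need not stay bounded as $\alpha_d\to\infty$ (some $\bar{p}_k$ can tend to $0$, see \cref{proposition:softmax-limit}), so ``vanishing gradient times bounded scalar'' does not apply directly --- the paper's own proof passes from the vanishing of the gradient factor to the vanishing of the product without addressing this. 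Your remedy is the right one, and it is the same mechanism already deployed in \cref{lemma:growth-rate-softmax} for the single-model case: on the limiting PUP the logits grow at most linearly in $\alpha_d$, so $|\log\bar{p}_k| = O(\alpha_d)$ (the average of softmax outputs is controlled by its slowest-decaying member), whereas $\|\nabla_{\bx}\bar{p}_k\|_2 = O(\exp(-c\alpha_d))$ for some $c>0$; each product still tends to zero, and so does the finite sum over $k\in[K]$.

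One caveat about your closing remark: the ``cleaner'' chain-rule route you sketch does not actually sidestep the divergence. The entropy $\mathrm{H}(\bar{\mathbf{p}})$ has unbounded partial derivatives as any coordinate approaches $0$, so differentiating the composition reproduces exactly the $(1+\log\bar{p}_k)$ coefficient and the same unboundedness. You do end by invoking the exponential-versus-linear decay bound there as well, which is correct --- just do not present the chain-rule phrasing as though it avoided the issue, because it does not.
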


\begin{proof}
    \begin{align}
        & \lim\limits_{\alpha_d \to \infty}\Big|\Big|\nabla_{\bx}\mathrm{H}\Big[\mathbb{E}_{p(\btheta \mid \mathbb{D})}\big[P_{\btheta}(y \mid \bx)\big]\Big]\Big|_{\bx = \balpha\circ\bx^\prime}\Big|\Big|_2 \\
         = & \lim\limits_{\alpha_d \to \infty}\Big|\Big|\nabla_{\bx}\Big(\sum_{k=1}^K\mathbb{E}_{p(\btheta \mid \mathbb{D})}\big[P_{\btheta}(y=k \mid \bx)\big] \nonumber \\ 
         & \cdot \log\mathbb{E}_{p(\btheta \mid \mathbb{D})}\big[P_{\btheta}(y=k \mid \bx)\big]\Big)\Big|_{\bx = \balpha\circ\bx^\prime}\Big|\Big|_2 \\
         = & \lim\limits_{\alpha_d \to \infty}\Big|\Big|\sum_{k=1}^K\nabla_{\bx} \Big(\mathbb{E}_{p(\btheta \mid \mathbb{D})}\big[P_{\btheta}(y=k \mid \bx)\big] \nonumber \\ 
         & \cdot \log\mathbb{E}_{p(\btheta \mid \mathbb{D})}\big[P_{\btheta}(y=k \mid \bx)\big]\Big)\Big|_{\bx = \balpha\circ\bx^\prime}\Big|\Big|_2 \\
         = & \lim\limits_{\alpha_d \to \infty}\Big|\Big|\sum_{k=1}^K\nabla_{\bx}\mathbb{E}_{p(\btheta \mid \mathbb{D})}\big[p_{\btheta}(y=c|\bx)\big] \nonumber \\ 
        & + \nabla_{\bx}\Big(\mathbb{E}_{p(\btheta \mid \mathbb{D})}\big[P_{\btheta}(y=k \mid \bx)\big]\Big) \log\mathbb{E}_{p(\btheta \mid \mathbb{D})}\big[P_{\btheta}(y=k \mid \bx)\big]\Big|_{\bx = \balpha\circ\bx^\prime}\Big|\Big|_2 \\
        = & \lim\limits_{\alpha_d \to \infty}\Big|\Big|\sum_{k=1}^K \nabla_{\bx}\mathbb{E}_{p(\btheta \mid \mathbb{D})}\big[p_{\btheta}(y=k \mid \bx)\big] \nonumber\\
        & \cdot \Big(1 + \log \mathbb{E}_{p(\btheta \mid \mathbb{D})}\big[P_{\btheta}(y=k\mid\bx)\big]\Big) \Big|_{\bx = \balpha\circ\bx^\prime}\Big|\Big|_2 \\
        \intertext{Apply triangle inequality to sum over all $k$:}
        \le & \lim\limits_{\alpha_d \to \infty}\sum_{k=1}^K\big|\big|\nabla_{\bx}\mathbb{E}_{p(\btheta \mid \mathbb{D})}\big[p_{\btheta}(y=k \mid \bx)\big] \nonumber\\
        & \cdot \big(1 + \log \mathbb{E}_{p(\btheta \mid \mathbb{D})}\big[P_{\btheta}(y=k\mid\bx)\big]\big) \big|_{\bx = \balpha\circ\bx^\prime}\big|\big|_2 \\
        = & \lim\limits_{\alpha_d \to \infty}\sum_{k=1}^K\Big(1 + \log\mathbb{E}_{p(\btheta \mid \mathbb{D})}\big[p_{\btheta}(y=k \mid \bx)\big]\Big) \nonumber \\
        & \cdot\underbrace{\big|\big|\nabla_{\bx}\mathbb{E}_{p(\btheta \mid \mathbb{D})}\big[p_{\btheta}(y=k \mid \bx)\big]\big|_{\bx = \balpha\circ\bx^\prime}\big|\big|_2}_{\mystrut{0.4cm}{=\ 0\text{ (\cref{aggregation-theorem})}}} = 0.
    \end{align}
    As the final result is an upper bound to the original expression and is lower-bounded by $0$ due to the $l_2$ norm, this proves the lemma.
\end{proof}

\section{Proof of \cref{lemma:asymptotic-mutual-information}}\label{app:asymptotic-mutual-information} 

This section contains the proof of \cref{lemma:asymptotic-mutual-information} that is part of the proof of \cref{theorem:know-your-limits-main-theorem} in \cref{sec:overconfidence-metrics}.

\begin{lemma}{(Asymptotic behavior for approximate mutual information)}\label{lemma:asymptotic-mutual-information}
      Suppose that $f_{\btheta}^{(1)}, \ldots, f_{\btheta}^{(K)}$ are ReLU networks\index{ReLU}. Let $\bx^\prime\in\mathbb{R}^D$, suppose $\balpha$ is a scaling vector and that for all $k$, the associated PUP\index{Polytope!Partially-unbounded} $\mathbb{P}^{(k)}(\bx^\prime, d)$ has a corresponding matrix $\mathbf{V}^{(k)}$ with no zero entries. 
      It holds that
    \begin{align}
        \lim\limits_{\alpha_d \to \infty} & \Big|\Big|\nabla_{\bx}\Big(\mathrm{H}\Big[\mathbb{E}_{p(\btheta \mid \mathbb{D})}\big[P_{\btheta}(y \mid \bx)\big]\Big] \nonumber \\
         & -  \mathbb{E}_{p(\btheta \mid \mathbb{D})}\Big[\mathrm{H}\big[P_{\btheta}(y \mid \bx)\big]\Big]\Big)\Big|_{\bx = \balpha\circ\bx^\prime}\Big|\Big|_2 = 0.
    \end{align}
    
\end{lemma}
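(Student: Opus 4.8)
The plan is to exploit the additive structure of the mutual-information expression together with linearity of the gradient. Writing
\begin{equation*}
    \nabla_{\bx}\Big(\mathrm{H}\big[\mathbb{E}_{p(\btheta \mid \mathbb{D})}[P_{\btheta}(y \mid \bx)]\big] - \mathbb{E}_{p(\btheta \mid \mathbb{D})}\big[\mathrm{H}[P_{\btheta}(y \mid \bx)]\big]\Big) = \nabla_{\bx}\mathrm{H}\big[\mathbb{E}_{p(\btheta \mid \mathbb{D})}[P_{\btheta}(y \mid \bx)]\big] - \nabla_{\bx}\mathbb{E}_{p(\btheta \mid \mathbb{D})}\big[\mathrm{H}[P_{\btheta}(y \mid \bx)]\big],
\end{equation*}
the triangle inequality bounds the $l_2$ norm of the left-hand side by the sum of the norms of the two terms on the right. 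The first term is exactly the quantity treated in \cref{lemma:asymptotic-predictive-entropy}, so it vanishes in the limit $\alpha_d \to \infty$ under the stated assumptions (all associated PUPs having matrices with no zero entries). It therefore remains to show that the second term, the gradient of the \emph{expected} entropy, also tends to the zero vector.

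For that term I would first move the gradient inside the expectation — this is legitimate because the expectation is taken over the parameters $\btheta$, which are independent of the input $\bx$ — and then apply Jensen's inequality to get $\|\mathbb{E}_{p(\btheta \mid \mathbb{D})}[\nabla_{\bx}\mathrm{H}[P_{\btheta}(y \mid \bx)]]\|_2 \le \mathbb{E}_{p(\btheta \mid \mathbb{D})}[\|\nabla_{\bx}\mathrm{H}[P_{\btheta}(y \mid \bx)]\|_2]$. This reduces the claim to a statement about a \emph{single} ReLU network: it suffices to show that $\|\nabla_{\bx}\mathrm{H}[P_{\btheta}(y \mid \bx)]\|_2 \to 0$ as $\alpha_d \to \infty$ for each instance (the assumption that every instance's PUP has a matrix with no zero entries lets us apply \cref{proposition:overconfidence-softmax} to each of them), and then conclude by dominated convergence / sandwiching, since the final upper bound is non-negative.

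The single-network estimate is where the real work sits. Expanding the Shannon entropy and differentiating gives
\begin{equation*}
    \nabla_{\bx}\mathrm{H}\big[P_{\btheta}(y \mid \bx)\big] = -\sum_{k=1}^K \big(1 + \log P_{\btheta}(y=k \mid \bx)\big)\,\nabla_{\bx}P_{\btheta}(y=k \mid \bx),
\end{equation*}
mirroring the manipulation already done in \cref{app:asymptotic-predictive-entropy}. By \cref{lemma:strictly-monotonic} and \cref{lemma:unique-pup} the scaled point stays in one PUP where all logits diverge to $\pm\infty$, and by \cref{proposition:overconfidence-softmax} each $\nabla_{\bx}P_{\btheta}(y=k \mid \bx)$ tends to the zero vector. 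The main obstacle is the prefactor $1 + \log P_{\btheta}(y=k \mid \bx)$, which is unbounded in $\alpha_d$ whenever $P_{\btheta}(y=k \mid \bx) \to 0$, so one cannot simply multiply a bounded factor by a vanishing vector. I would handle this exactly as in \cref{lemma:growth-rate-softmax}: on the PUP the logits are affine in $\bx$, so $\log P_{\btheta}(y=k \mid \bx) = f_{\btheta}(\bx)_k - \log\sum_{k'}\exp(f_{\btheta}(\bx)_{k'})$ grows at most linearly in $\alpha_d$, whereas the argument in the proof of \cref{proposition:overconfidence-softmax} shows that each component of $\nabla_{\bx}P_{\btheta}(y=k \mid \bx)$ decays exponentially; the exponential decay dominates the at-most-linear blow-up, so each summand — and hence the whole gradient — converges to the zero vector. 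Assembling the three pieces (previous lemma for the first term, Jensen plus the single-network estimate for the second term, triangle inequality to combine) completes the argument, and this lemma together with \cref{lemma:max-prob,lemma:asymptotic-softmax-variance,lemma:asymptotic-predictive-entropy} yields \cref{theorem:know-your-limits-main-theorem}.
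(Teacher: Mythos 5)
Your proof takes the same overall route as the paper's: split the mutual information into entropy-of-expectation minus expectation-of-entropy, handle the first term with \cref{lemma:asymptotic-predictive-entropy}, push the gradient into the expectation for the second term, expand the entropy gradient with the chain rule, and invoke \cref{proposition:overconfidence-softmax}. Your additional use of the triangle inequality and Jensen's inequality is harmless though not strictly necessary; the paper simply observes that the second term vanishes and drops it.

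The one genuine difference — and it is in your favor — is that you make explicit a step the paper glosses over. After expanding $\nabla_{\bx}\mathrm{H}[P_{\btheta}(y\mid\bx)] = -\sum_k (1 + \log P_{\btheta}(y=k\mid\bx))\nabla_{\bx}P_{\btheta}(y=k\mid\bx)$, the paper's proof underbraces $\nabla_{\bx}P_{\btheta}(y=k\mid\bx)$ as $\mathbf{0}$ in the limit and declares the whole expectation zero. But by \cref{proposition:softmax-limit} some class probabilities vanish, so $1 + \log P_{\btheta}(y=k\mid\bx)$ diverges to $-\infty$ for those $k$, and the product is an indeterminate $0\cdot\infty$. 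You correctly resolve this by invoking the rate comparison implicit in \cref{lemma:growth-rate-softmax}: on the PUP the logits are affine in $\alpha_d$, so $\log P_{\btheta}(y=k\mid\bx) = f_{\btheta}(\bx)_k - \log\sum_{k'}\exp(f_{\btheta}(\bx)_{k'})$ grows at most linearly, while the softmax derivative factors that make up $\nabla_{\bx}P_{\btheta}(y=k\mid\bx)$ decay exponentially, so each summand vanishes. The same $0\cdot\infty$ issue appears untreated in the paper's proof of \cref{lemma:asymptotic-predictive-entropy}, so if you wanted your own write-up to be fully self-contained and rigorous, you would want to insert the rate argument there as well rather than only for the second term.
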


\begin{proof}
    \begin{align}
        & \lim\limits_{\alpha_d \to \infty}\Big|\Big|\nabla_{\bx}\Big(\mathrm{H}\Big[\mathbb{E}_{p(\btheta \mid \mathbb{D})}\big[P_{\btheta}(y \mid \bx)\big]\Big] \nonumber \\
        & -  \mathbb{E}_{p(\btheta \mid \mathbb{D})}\Big[\mathrm{H}\big[P_{\btheta}(y \mid \bx)\big]\Big]\Big)\Big|_{\bx = \balpha\circ\bx^\prime}\Big|\Big|_2 \\
        = & \lim\limits_{\alpha_d \to \infty}\Big|\Big|\Big(\nabla_{\bx}\mathrm{H}\Big[\mathbb{E}_{p(\btheta \mid \mathbb{D})}\big[P_{\btheta}(y \mid \bx)\big]\Big] \nonumber \\
        &  - \mathbb{E}_{p(\btheta \mid \mathbb{D})}\Big[\nabla_{\bx}\mathrm{H}\big[P_{\btheta}(y \mid \bx)\big]\Big]\Big)\Big|_{\bx = \balpha\circ\bx^\prime}\Big|\Big|_2 \\
        \intertext{Applying chain rule and intermediate result of \cref{proposition:overconfidence-softmax}:}
        = & \lim\limits_{\alpha_d \to \infty}\Big|\Big|\nabla_{\bx}\mathrm{H}\Big[\mathbb{E}_{p(\btheta \mid \mathbb{D})}\big[P_{\btheta}(y \mid \bx)\big]\Big] \nonumber \\
        & - \mathbb{E}_{p(\btheta \mid \mathbb{D})}\Big[\sum_{k=1}^K\big( 1 + \log P_{\btheta}(y=k \mid \bx)\big)\underbrace{\mystrut{0.275cm}{\nabla_{\bx}P_{\btheta}(y=k \mid \bx)}}_{=\ \bm{0} \text{ (\cref{proposition:overconfidence-softmax})}} \Big]\Big|_{\bx = \balpha\circ\bx^\prime}\Big|\Big|_2 \\
        = & \underbrace{\lim\limits_{\alpha_d \to \infty}\Big|\Big|\nabla_{\bx}\mathrm{H}\Big[\mathbb{E}_{p(\btheta \mid \mathbb{D})}\big[P_{\btheta}(y \mid \bx)\big]\Big]\Big|_{\bx = \balpha\circ\bx^\prime} \Big|\Big|_2}_{\text{(\cref{lemma:asymptotic-predictive-entropy})}} = 0.
    \end{align}
   As the final result is an upper bound to the original expression and the $l_2$ norm provides a lower bound of $0$, this proves the lemma.
\end{proof}

\chapter{Experimental Appendix}\label{app:empirical-appendix}
\epigraph{
    ``\emph{Machine learning has become alchemy.}''
}{---Ali Rahimi in his \href{https://www.youtube.com/watch?v=x7psGHgatGM}{NIPS 2017 Test of Time Award Talk}.}


\begin{table}[htb]
    \centering
    \resizebox{0.6\textwidth}{!}{%
    \begin{tabular}{rl}
        \toprule
        Thesis & Appendix \\
        \toprule
        \cref{sec:experimental-comparison-aso} & \cref{app:aso-error-rate} \\
        \cref{sec:synthetic-experiments} & \cref{app:synthetic-data-experiments} \\
        \cref{sec:exploring-data-creation} & \cref{app:exploring-predictive-uncertainty-training-set,app:predictive-uncertainty-ood-test-set} \\
        \cref{sec:dependence-training-data} & \cref{app:additional-scatters} \\
        \cref{sec:qualitative-analysis} & \cref{app:qualitative-analysis} \\
        \cref{sec:retrieval-quality}  & \cref{app:coverage-experiments,app:ablations} \\
        \cref{sec:apricot-experiments} & \cref{app:additional-clustering} \\
        \cref{sec:calibration-experiments} & \cref{app:additional-calibration} \\
        \bottomrule
    \end{tabular}%
    }
    \caption[Correspondences between sections of the empirical appendix and thesis chapters.]{Correspondences between sections of the empirical appendix and thesis chapters.}\label{tab:empirical-appendix-correspondence}
\end{table}

This appendix involves a collection of additional empirical results stemming from the experiments in the different chapters.
An overview over the contents and their correspondence to thesis chapters is given in \cref{tab:empirical-appendix-correspondence}.
For more details regarding the reproducibility\index{Reproducibility} of experiments (hyperparameters, experimental settings etc.) refer to \cref{app:reproducibility-appendix}.

\section{Additional Error Rate Experiments}\label{app:aso-error-rate}

\begin{figure}[htb]
    \centering
    \begin{subfigure}[t]{0.485\textwidth}
        \includegraphics[width=\textwidth]{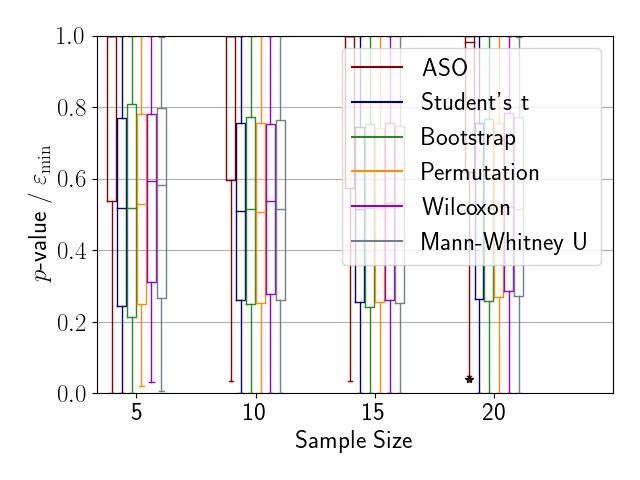}
        \caption{Dists.\@ for normal samples.}
        \label{subfig:type1_size_normal_dists}
    \end{subfigure}%
    \hfill
    \begin{subfigure}[t]{0.485\textwidth}
        \includegraphics[width=\textwidth]{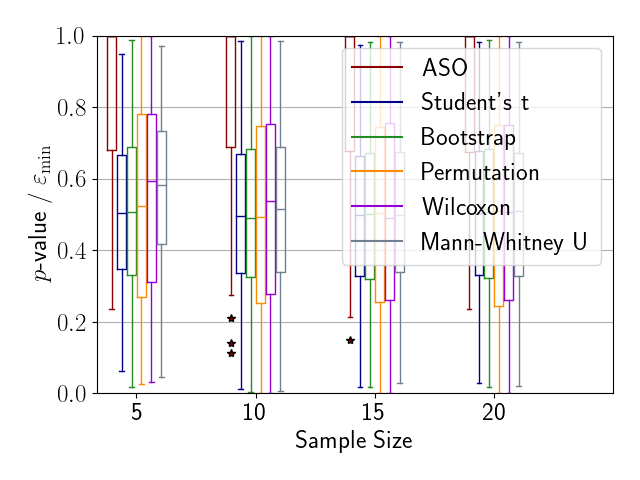}
        \caption{Dists.\@ for normal mixture samples.}
        \label{subfig:type1_size_normal_mixture_dists}
    \end{subfigure}%
    
    \begin{subfigure}[t]{0.485\textwidth}
        \includegraphics[width=\textwidth]{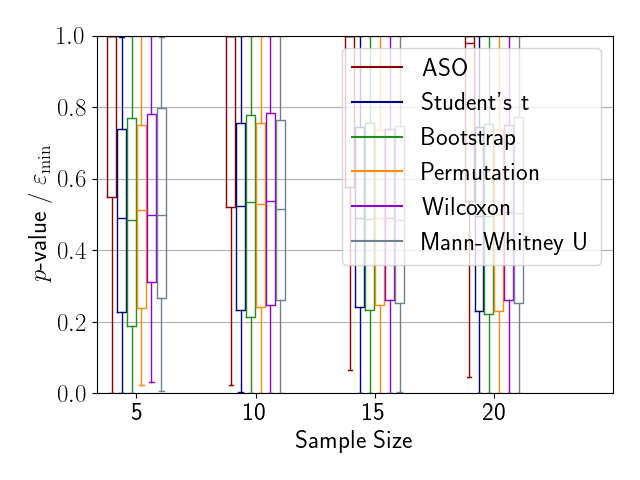}
        \caption{Dists.\@ for Laplace samples.}
        \label{subfig:type1_size_laplace_dists}
    \end{subfigure}%
    \hfill
    \begin{subfigure}[t]{0.485\textwidth}
        \includegraphics[width=\textwidth]{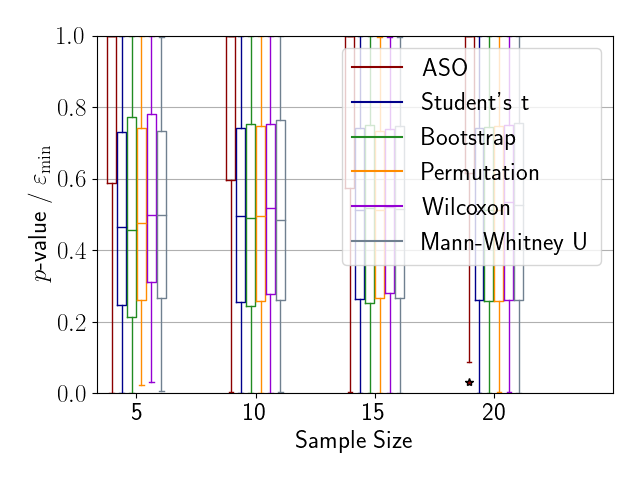}
        \caption{Dists.\@ for Rayleigh samples.}
        \label{subfig:type1_size_rayleigh_dists}
    \end{subfigure}
    \caption[Comparing test score distributions for different tests and distributions as a function of sample size.]{
        Comparing test score distributions for different tests and distributions as a function of sample size.
    }\label{fig:aso-tests-dists}
\end{figure}

We use this section to further shed light on the results in \cref{fig:aso-tests}. 

\paragraph{Test Score Distributions.} Instead of showing the Type I error\index{Type I error} rates based on thresholded test results, we instead plot the distributions over test scores in \cref{fig:aso-tests-dists}. 
We can observe that the lower ends of the interquartile range of $\epsmin$ distributions are either the same or higher than the ones for $p$-values (they do not need to be centered around $0.5$ since $\epsmin$ is an upper bound to $\varepsilon_{W_2}$), explaining the lower Type I error rate.

\begin{figure}[htb]
    \centering
    \begin{subfigure}[t]{0.475\textwidth}
        \includegraphics[width=0.95\textwidth]{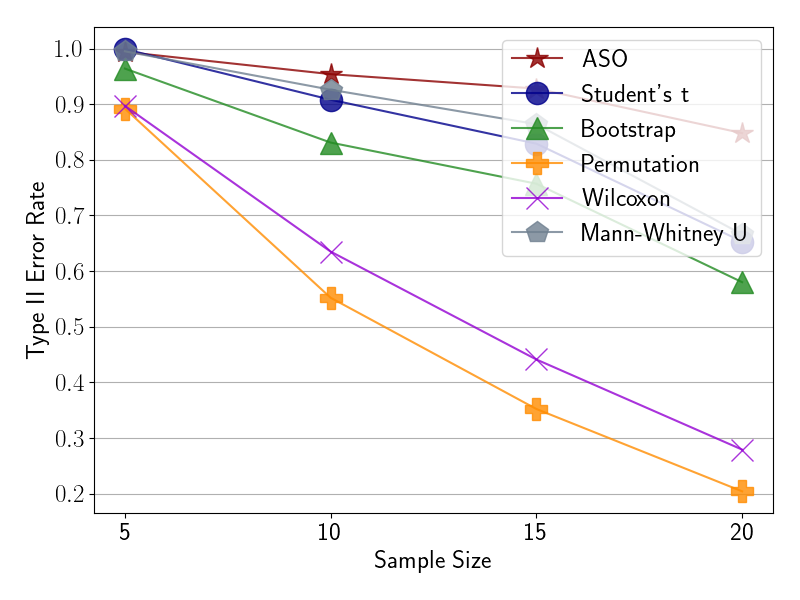}
        \caption{Type II error as a function of sample size.}
        \label{subfig:type2_normal_rates}
    \end{subfigure}%
    \hfill
    \begin{subfigure}[t]{0.475\textwidth}
        \includegraphics[width=0.95\textwidth]{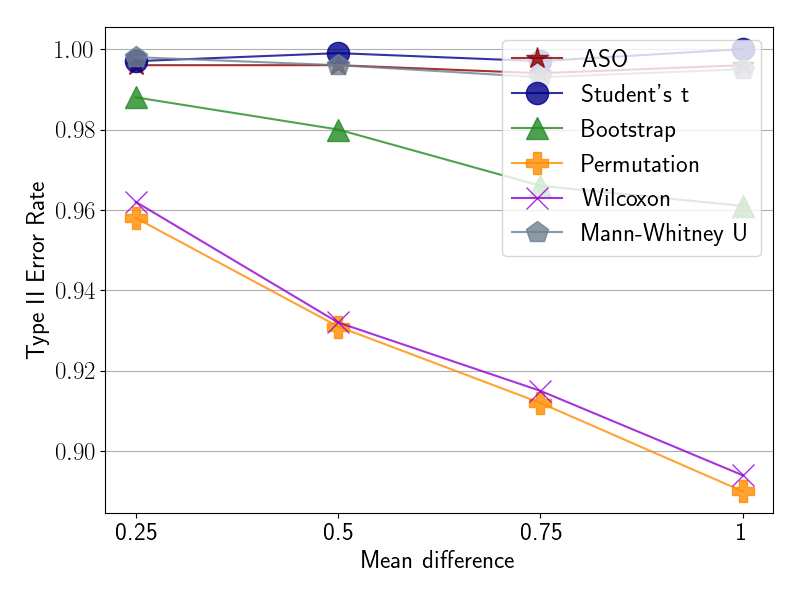}
        \caption{Type II error rate as a function of mean difference.}
        \label{subfig:type2_normal_mean_rate}
    \end{subfigure}%
    
    \begin{subfigure}[t]{0.475\textwidth}
        \includegraphics[width=0.95\textwidth]{img/deep-significance/normal_mixture/type2_rates.png}
        \caption{Type II error as a function of sample size.}
        \label{subfig:type2_normal_mixture_rates}
    \end{subfigure}%
    \hfill
    \begin{subfigure}[t]{0.475\textwidth}
        \includegraphics[width=0.95\textwidth]{img/deep-significance/normal_mixture/type2_mean_rates.png}
        \caption{Type II error rate as a function of mean difference.}
        \label{subfig:type2_normal_mixture_mean_rate}
    \end{subfigure}%
    \caption[Measuring the Type II error rate of the considered tests on normal and normal mixture distributions as a function of sample size.]{
        Measuring the Type II error rate of the considered tests on normal and normal mixture distributions as a function of sample size \cref{subfig:type2_normal_rates,subfig:type2_normal_mixture_rates} and mean differences \cref{subfig:type2_normal_mean_rate,subfig:type2_normal_mixture_mean_rate}.
    }\label{fig:type2-error-rates}
\end{figure}

\paragraph{Type II Error Rate Experiments.} 
We furthermore test the Type II error \index{Type II error} rates on samples from different distributions in \cref{fig:type2-error-rates}, sampling the score samples $500$ times for ASO and $1000$ times for the other tests from $\mathcal{N}(0.5, 1.5^2)$ and $\mathcal{N}(0, 1.5^2)$,\footnote{For the normal mixture, only the second mixture component is varied.} respectively, for a $p$-value threshold of $0.05$ and $\epsmin$ threshold of $0.2$. 
We see that the Type II error rate decreases with increasing sample size (\cref{subfig:type2_normal_rates,subfig:type2_normal_mixture_rates}), but is less sensitive for increasing mean difference than other tests (\cref{subfig:type2_normal_mean_rate,subfig:type2_normal_mixture_mean_rate}). 
Generally, we can observe the behavior to be very similar to Student's-$t$ \index{Student's-$t$ test} and Mann-Whitney U test\index{Mann-Whitney U test}.

\paragraph{Error Rates by Rejection Threshold.} Lastly, we report the Type I and II error rates \index{Type I error}\index{Type II error} on the tested distributions using different Type I / II error rates. 
In \cref{table:type1-normal-tresholds,table:type1-normal-mixture-tresholds,table:type1-laplace-tresholds,table:type1-rayleigh-tresholds}, we see that ASO achieves lower error rates than other tests in almost all scenarios when faced with the fame threshold. 
Naturally, these thresholds cannot be interpreted the same for ASO\index{Stochastic order!Almost} and the other significance tests. 
Nevertheless, we can see that a threshold of $\tau = 0.2$ seems to roughly correspond to a $p$-value threshold of $0.05$ in terms of Type I error rate\index{Type I error}. 
Type II error\index{Type II error} rates are given in \cref{table:type2-normal-tresholds,table:type2-normal-mean-tresholds,table:type2-normal-mixture-tresholds,table:type2-normal-mixture-mean-tresholds}. Here the difference between ASO and the other tests is not quite as pronounced, however, it always incurs higher error rates.

\begin{table}[htb]
    \resizebox{0.975\textwidth}{!}{%
    \begin{tabular}{llrrrrrr}
    \toprule
      Sample Size & $\tau$       &    ASO & Student's t & Bootstrap & Permutation & Wilcoxon & Mann-Whitney U \\
    \midrule
    5  & .05 &   \textbf{.020} &       .048 &     .085 &       .029 &    .029 &          .056 \\
       & .10 &  \textbf{.034} &       .093 &     .149 &       .079 &    .088 &          .085 \\
       & .20 &   \textbf{.006} &       .212 &     .241 &       .197 &     .160 &          .159 \\
       & .30 &  \textbf{.094} &       .299 &     .322 &       .286 &    .236 &          .284 \\
       & .40 &  \textbf{.146} &       .396 &     .403 &        .370 &    .315 &          .348 \\
       & .50 &  \textbf{.216} &       .483 &     .483 &       .468 &     .490 &          .498 \\
       \midrule
    10 & .05 &  \textbf{.004} &       .055 &     .077 &       .058 &    .051 &          .048 \\
       & .10 &  .014 &       .103 &      .130 &        \textbf{.110} &    .113 &            \textbf{.100} \\
       & .20 &  \textbf{.038} &       .196 &     .215 &       .201 &    .192 &          .194 \\
       & .30 &  \textbf{.084} &       .282 &       .300 &       .285 &    .261 &          .272 \\
       & .40 &  \textbf{.138} &       .394 &     .398 &       .395 &    .387 &          .378 \\
       & .50 &  \textbf{.204} &        .409 &     .486 &       .491 &    .499 &          .479 \\
       \midrule
    15 & .05 &  \textbf{.002} &       .059 &     .072 &       .057 &    .051 &          .052 \\
       & .10 &  \textbf{.014} &       .106 &     .123 &       .104 &    .095 &          .113 \\
       & .20 &  \textbf{.042} &       .198 &     .215 &       .199 &    .186 &          .196 \\
       & .30 &   \textbf{.080} &       .303 &     .309 &       .303 &    .295 &          .304 \\
       & .40 &  \textbf{.136} &       .395 &       .400 &       .392 &    .371 &          .368 \\
       & .50 &   \textbf{.190} &       .482 &     .485 &       .479 &     .470 &          .468 \\
       \midrule
    20 & .05 &  \textbf{.004} &       .046 &     .058 &       .047 &    .043 &          .047 \\
       & .10 &  \textbf{.006} &       .095 &     .105 &       .093 &    .085 &          .092 \\
       & .20 &  \textbf{.028} &       .181 &     .196 &       .177 &    .171 &          .183 \\
       & .30 &  \textbf{.074} &        .280 &      .290 &       .289 &    .284 &          .273 \\
       & .40 &   \textbf{.120} &       .384 &     .389 &       .381 &    .372 &          .394 \\
       & .50 &   \textbf{.170} &       .479 &     .478 &       .473 &    .477 &          .481 \\
    \bottomrule
    \end{tabular}%
    }
    \caption[Type I error rates for samples drawn from a normal distribution as a function of sample size and different rejection thresholds.]{Type I error rates for samples drawn from a normal distribution as a function of sample size and different rejection thresholds.}\label{table:type1-normal-tresholds}
\end{table}

\begin{table}[htb]
    \resizebox{0.975\textwidth}{!}{%
    \begin{tabular}{lllrrrrrr}
    \toprule
      Sample Size & $\tau$     &    ASO & Student's t & Bootstrap & Permutation & Wilcoxon & Mann-Whitney U \\
    \midrule
    5  & .05 &  .942 &       .883 &     \textbf{.796} &       .918 &    .925 &          .875 \\
   & .10 &  .916 &       .786 &     \textbf{.714} &       .802 &    .792 &          .819 \\
   & .20 &   .870 &       .623 &     \textbf{.585} &       .649 &    .691 &          .694 \\
   & .30 &  .792 &       .512 &      \textbf{.480} &       .521 &    .597 &          .539 \\
   & .40 &  .714 &       .399 &      \textbf{.309} &       .421 &    .498 &           .470 \\
   & .50 &   .650 &       \textbf{.302} &     .315 &       .318 &    .387 &          .391 \\
   \midrule
10 & .05 &  .978 &       .836 &     \textbf{.791} &       .853 &    .864 &           .840 \\
   & .10 &   .950 &        .703 &     \textbf{.695} &       .737 &    .743 &          .741 \\
   & .20 &  .868 &        .580 &     \textbf{.551} &        .58 &    .595 &          .576 \\
   & .30 &  .802 &       .428 &      \textbf{.41} &       .429 &    .462 &          .453 \\
   & .40 &  .708 &        .330 &     \textbf{.328} &       .327 &    .347 &          .329 \\
   & .50 &  .604 &       \textbf{.223} &     .223 &       .229 &    .272 &          .251 \\
   \midrule
15 & .05 &  .984 &       .769 &     .734 &       .781 &    .788 &          .787 \\
   & .10 &   .905 &       .643 &     \textbf{.615} &       .646 &    .672 &          .639 \\
   & .20 &   .840 &        \textbf{.470} &     .455 &        .480 &    .493 &          .481 \\
   & .30 &  .716 &       .348 &      \textbf{.340} &        .350 &    .355 &          .365 \\
   & .40 &   .610 &       \textbf{.244} &     .245 &       .246 &    .276 &          .261 \\
   & .50 &  .486 &       .177 &     .176 &       \textbf{.175} &    .185 &          .192 \\
   \midrule
20 & .05 &  .976 &       .732 &     \textbf{.709} &       .736 &     .750 &          .747 \\
   & .10 &  .946 &       .601 &     \textbf{.586} &       .601 &    .614 &           .610 \\
   & .20 &  .848 &       .406 &     \textbf{.396} &        .410 &    .421 &           .410 \\
   & .30 &  .704 &       .277 &     \textbf{.268} &       .272 &    .299 &          .289 \\
   & .40 &   .508 &         \textbf{.200} &     \textbf{.201} &       .198 &    .221 &          .206 \\
   & .50 &  .444 &       \textbf{.144} &     \textbf{.144} &       .147 &    .156 &          .152 \\
    \bottomrule
    \end{tabular}%
    }
    \caption{Type II error rates for normal samples as a function of sample size and different rejection thresholds.}\label{table:type2-normal-tresholds}
\end{table}

\begin{table}[htb]
    \resizebox{0.975\textwidth}{!}{%
    \begin{tabular}{llrrrrrr}
    \toprule
       Difference  & $\tau$       &    ASO & Student's t & Bootstrap & Permutation & Wilcoxon & Mann-Whitney U \\
    \midrule
    .25 & .05 &  .984 &       .925 &     \textbf{.857} &       .941 &    .945 &           .930 \\
     & .10 &  .954 &       .846 &     \textbf{.781} &       .859 &    .844 &          .881 \\
     & .20 &  .914 &       .705 &     \textbf{.659} &       .721 &    .761 &          .768 \\
     & .30 &  .872 &       \textbf{.585} &     .554 &       .606 &     .680 &          .622 \\
     & .40 &    .800 &       .482 &     \textbf{.462} &       .489 &    .594 &          .548 \\
     & .50 &  .714 &       \textbf{.381} &     .387 &       .394 &     .480 &          .465 \\
     \midrule
.50 & .05 &  .966 &       .888 &     \textbf{.805} &       .918 &     .920 &          .883 \\
     & .10 &  .932 &       .784 &       \textbf{.700} &       .811 &    .794 &           .830 \\
     & .20 &   .870 &       .616 &      \textbf{.570} &       .652 &    .696 &          .698 \\
     & .30 &  .812 &         .500 &     \textbf{.477} &       .523 &    .602 &          .535 \\
     & .40 &  .722 &       .406 &     \textbf{.397} &       .426 &    .505 &          .466 \\
     & .50 &  .606 &       \textbf{.313} &     .315 &       .326 &    .411 &          .401 \\
     \midrule
.75 & .05 &  .934 &       .822 &     \textbf{.707} &       .883 &    .885 &          .822 \\
     & .10 &  .896 &       .699 &      \textbf{.610} &       .725 &     .710 &          .764 \\
     & .20 &  .798 &       .514 &     \textbf{.469} &       .561 &    .599 &          .607 \\
     & .30 &  .702 &       .407 &      \textbf{.370} &       .421 &    .515 &          .455 \\
     & .40 &   .590 &       .308 &       \textbf{.300} &       .325 &    .406 &          .375 \\
     & .50 &  .482 &       \textbf{.223} &     \textbf{.222} &       .237 &    .303 &          .295 \\
     \midrule
1.00 & .05 &   .870 &       .739 &     \textbf{.609} &        .850 &     .850 &          .743 \\
     & .10 &  .796 &       .585 &     \textbf{.488} &       .678 &    .655 &          .659 \\
     & .20 &  .712 &       .386 &     \textbf{.327} &       .449 &    .497 &          .487 \\
     & .30 &   .580 &       .257 &     \textbf{.232} &       .289 &    .388 &          .307 \\
     & .40 &  .504 &       .178 &      \textbf{.170} &       .194 &    .278 &          .229 \\
     & .50 &  .384 &       \textbf{.115} &     \textbf{.115} &       .128 &    .189 &          .176 \\
    \bottomrule
    \end{tabular}%
    }
    \caption{Type II error rates for normal samples as a function of mean difference and different rejection thresholds.}\label{table:type2-normal-mean-tresholds}
\end{table}

\begin{table}[htb]
    \centering
    \resizebox{0.975\textwidth}{!}{%
    \begin{tabular}{llrrrrrr}
        \toprule
          Sample Size & $\tau$   & ASO & Student's t & Bootstrap & Permutation & Wilcoxon & Mann-Whitney U \\
        \midrule
        5  & .05 & \textbf{.000} & \textbf{.000} & .012 & .028 & .026 & .003 \\
           & .10 & \textbf{.000} & .013 & .035 & .079 & .085 & .004 \\
           & .20 & \textbf{.000} & .069 & .104 & .179 & .153 & .049 \\
           & .30 & \textbf{.008} & .169 & .213 & .281 & .208 & .160 \\
           & .40 & \textbf{.024} & .338 & .358 & .363 & .305 & .244 \\
           & .50 & \textbf{.058} & .494 & .493 & .483 & .484 & .478 \\
        \midrule
        10 & .05 & \textbf{.000} & .007 & .018 & .059 & .049 & .011 \\
           & .10 & \textbf{.000} & .031 & .050 & .110 & .109 & .030 \\
           & .20 & \textbf{.004} & .102 & .121 & .205 & .188 & .109 \\
           & .30 & \textbf{.008} & .221 & .229 & .302 & .273 & .211 \\
           & .40 & \textbf{.034} & .347 & .349 & .398 & .379 & .351 \\
           & .50 & \textbf{.070} & .511 & .515 & .506 & .491 & .495 \\
        \midrule
        15 & .05 & \textbf{.000} & .006 & .007 & .055 & .048 & .004 \\
           & .10 & \textbf{.000} & .022 & .033 & .106 & .097 & .017 \\
           & .20 & \textbf{.002} & .103 & .118 & .194 & .202 & .095 \\
           & .30 & \textbf{.006} & .215 & .220 & .301 & .308 & .208 \\
           & .40 & \textbf{.028} & .356 & .366 & .415 & .404 & .328 \\
           & .50 & \textbf{.082} & .501 & .499 & .496 & .502 & .501 \\
        \midrule
        20 & .05 & \textbf{.000} & .006 & .007 & .048 & .045 & .005 \\
           & .10 & \textbf{.000} & .019 & .027 & .088 & .085 & .021 \\
           & .20 & \textbf{.000} & .104 & .109 & .200 & .187 & .097 \\
           & .30 & \textbf{.006} & .214 & .218 & .307 & .289 & .221 \\
           & .40 & \textbf{.032} & .363 & .369 & .412 & .390 & .349 \\
           & .50 & \textbf{.082} & .494 & .495 & .492 & .496 & .485 \\
        \bottomrule
    \end{tabular}
    }
    \caption{Type I error rates for normal mixture samples as a function of sample size and different rejection thresholds.}\label{table:type1-normal-mixture-tresholds}
\end{table}

\begin{table}[htb]
    \resizebox{0.975\textwidth}{!}{%
    \begin{tabular}{llrrrrrr}
        \toprule
          Sample Size & $\tau$   & ASO & Student's t & Bootstrap & Permutation & Wilcoxon & Mann-Whitney U \\
        \midrule
        5  & .05 & 1.000 & .999 & .964 & \textbf{.892} & .897 & .995 \\
           & .10 & 1.000 & .962 & .874 & .728 & \textbf{.697} & .985 \\
           & .20 & .994 & .747 & .640 & \textbf{.474} & .525 & .870 \\
           & .30 & .976 & .476 & .422 & \textbf{.299} & .426 & .579 \\
           & .40 & .896 & .252 & .234 & \textbf{.206} & .326 & .414 \\
           & .50 & .748 & \textbf{.117} & \textbf{.118} & .122 & .222 & .280 \\
        \midrule
        10 & .05 & 1.000 & .908 & .831 & \textbf{.552} & .635 & .926 \\
           & .10 & .996 & .721 & .641 & \textbf{.354} & .419 & .730 \\
           & .20 & .954 & .390 & .354 & \textbf{.186} & .247 & .407 \\
           & .30 & .828 & .191 & .180 & \textbf{.108} & .156 & .219 \\
           & .40 & .642 & .089 & .087 & \textbf{.068} & .089 & .107 \\
           & .50 & .452 & .034 & \textbf{.031} & .037 & .056 & .052 \\
        \midrule
        15 & .05 & .996 & .829 & .757 & \textbf{.352} & .441 & .864 \\
           & .10 & .990 & .568 & .517 & \textbf{.213} & .272 & .628 \\
           & .20 & .928 & .251 & .234 & \textbf{.087} & .129 & .298 \\
           & .30 & .774 & .099 & .091 & \textbf{.033} & .058 & .116 \\
           & .40 & .498 & .027 & .026 & \textbf{.019} & .034 & .044 \\
           & .50 & .276 & \textbf{.009} & \textbf{.010} & \textbf{.010} & .013 & .014 \\
        \midrule
        20 & .05 & 1.000 & .653 & .580 & \textbf{.204} & .279 & .666 \\
           & .10 & .980 & .359 & .333 & \textbf{.105} & .162 & .392 \\
           & .20 & .848 & .107 & .101 & \textbf{.035} & .064 & .147 \\
           & .30 & .586 & .038 & .035 & \textbf{.013} & .022 & .047 \\
           & .40 & .344 & .010 & .010 & \textbf{.008} & .013 & .017 \\
           & .50 & .130 & \textbf{.003} & \textbf{.003} & \textbf{.004} & .006 & .006 \\
        \bottomrule
    \end{tabular}%
    }
    \caption{Type II error rates for normal mixture samples as a function of sample size and different rejection thresholds.}\label{table:type2-normal-mixture-tresholds}
\end{table}

\begin{table}[htb]
    \resizebox{0.975\textwidth}{!}{%
    \begin{tabular}{llrrrrrr}
        \toprule
         Diff.    & $\tau$     & ASO & Student's t & Bootstrap & Permutation & Wilcoxon & Mann-Whitney U \\
        \midrule
        .25 & .05 & 1.000 & .997 & .988 & .958 & .962 & .998 \\
             & .10 & .998 & .988 & .960 & .894 & \textbf{.882} & .994 \\
             & .20 & .996 & .903 & .856 & \textbf{.754} & .792 & .945 \\
             & .30 & .978 & .762 & .727 & \textbf{.643} & .724 & .814 \\
             & .40 & .940 & .594 & .576 & \textbf{.530} & .621 & .704 \\
             & .50 & .886 & \textbf{.424} & \textbf{.424} & .444 & .532 & .563 \\
        \midrule
        .50 & .05 & .998 & .999 & .980 & \textbf{.931} & .932 & .996 \\
             & .10 & .998 & .978 & .931 & .820 & \textbf{.802} & .990 \\
             & .20 & .996 & .849 & .775 & \textbf{.647} & .695 & .905 \\
             & .30 & .976 & .659 & .603 & \textbf{.511} & .611 & .724 \\
             & .40 & .928 & .458 & .438 & \textbf{.407} & .504 & .577 \\
             & .50 & .840 & \textbf{.284} & .287 & .310 & .395 & .449 \\
        \midrule
        .75 & .05 & 1.000 & .997 & .966 & \textbf{.912} & .915 & .993 \\
             & .10 & .998 & .966 & .901 & .769 & \textbf{.746} & .985 \\
             & .20 & .994 & .802 & .707 & \textbf{.553} & .623 & .886 \\
             & .30 & .974 & .547 & .497 & \textbf{.397} & .516 & .651 \\
             & .40 & .922 & .355 & .337 & \textbf{.286} & .407 & .485 \\
             & .50 & .824 & \textbf{.191} & \textbf{.191} & .198 & .305 & .363 \\
        \midrule
        1.00 & .05 & 1.000 & 1.000 & .961 & \textbf{.890} & .894 & .995 \\
             & .10 & 1.000 & .958 & .868 & .714 & \textbf{.682} & .989 \\
             & .20 & .996 & .715 & .617 & \textbf{.445} & .505 & .872 \\
             & .30 & .962 & .432 & .380 & \textbf{.291} & .419 & .545 \\
             & .40 & .870 & .253 & .235 & \textbf{.204} & .308 & .408 \\
             & .50 & .702 & \textbf{.120} & \textbf{.120} & .132 & .208 & .263 \\
        \bottomrule
    \end{tabular}%
    }
    \caption{Type II error rates for normal mixture samples as a function of mean difference between two of the mixture components and different rejection thresholds.}\label{table:type2-normal-mixture-mean-tresholds}
\end{table}

\begin{table}[htb]
    \resizebox{0.975\textwidth}{!}{%
    \begin{tabular}{llrrrrrr}
        \toprule
          Sample Size & $\tau$       & ASO & Student's t & Bootstrap & Permutation & Wilcoxon & Mann-Whitney U \\
        \midrule
        5  & .05 & \textbf{.022} & .053 & .110 & .048 & .046 & .066 \\
           & .10 & \textbf{.038} & .117 & .164 & .106 & .116 & .097 \\
           & .20 & \textbf{.088} & .223 & .261 & .208 & .187 & .169 \\
           & .30 & \textbf{.124} & .319 & .343 & .295 & .234 & .286 \\
           & .40 & \textbf{.154} & .427 & .445 & .398 & .322 & .379 \\
           & .50 & \textbf{.218} & .509 & .510 & .491 & .506 & .508 \\
        \midrule
        10 & .05 & \textbf{.004} & .059 & .077 & .060 & .046 & .051 \\
           & .10 & \textbf{.012} & .114 & .142 & .111 & .106 & .098 \\
           & .20 & \textbf{.056} & .218 & .236 & .216 & .202 & .199 \\
           & .30 & \textbf{.104} & .314 & .330 & .318 & .290 & .291 \\
           & .40 & \textbf{.164} & .404 & .407 & .398 & .378 & .400 \\
           & .50 & \textbf{.238} & .475 & .475 & .473 & .481 & .486 \\
        \midrule
        15 & .05 & \textbf{.000} & .052 & .066 & .048 & .048 & .048 \\
           & .10 & \textbf{.012} & .100 & .117 & .103 & .100 & .101 \\
           & .20 & \textbf{.028} & .204 & .220 & .199 & .199 & .187 \\
           & .30 & \textbf{.070} & .311 & .319 & .303 & .296 & .294 \\
           & .40 & \textbf{.120} & .404 & .409 & .402 & .378 & .394 \\
           & .50 & \textbf{.194} & .510 & .514 & .511 & .504 & .519 \\
        \midrule
        20 & .05 & \textbf{.004} & .044 & .047 & .048 & .057 & .052 \\
           & .10 & \textbf{.010} & \textbf{.099} & .113 & .104 & .103 & .101 \\
           & .20 & \textbf{.030} & .214 & .232 & .215 & .199 & .202 \\
           & .30 & \textbf{.064} & .312 & .325 & .308 & .297 & .307 \\
           & .40 & \textbf{.138} & .414 & .413 & .415 & .381 & .405 \\
           & .50 & \textbf{.220} & .507 & .505 & .501 & .485 & .496 \\
        \bottomrule
    \end{tabular}
    }
    \caption{Type I error rates for samples drawn from a Laplace distribution as a function of sample size and different rejection thresholds.}\label{table:type1-laplace-tresholds}
\end{table}

\begin{table}[htb]
    \resizebox{0.975\textwidth}{!}{%
    \begin{tabular}{llrrrrrr}
        \toprule
          Sample Size & $\tau$   & ASO & Student's t & Bootstrap & Permutation & Wilcoxon & Mann-Whitney U \\
        \midrule
        5  & .05 & \textbf{.012} & .054 & .107 & .028 & .028 & .054 \\
           & .10 & \textbf{.034} & .108 & .147 & .089 & .096 & .088 \\
           & .20 & \textbf{.076} & .203 & .235 & .187 & .162 & .165 \\
           & .30 & \textbf{.110} & .319 & .342 & .302 & .229 & .291 \\
           & .40 & \textbf{.146} & .423 & .435 & .415 & .331 & .360 \\
           & .50 & \textbf{.198} & .532 & .539 & .523 & .530 & .524 \\
        \midrule
        10 & .05 & \textbf{.012} & .046 & .062 & .043 & .039 & .041 \\
           & .10 & \textbf{.018} & .087 & .107 & .093 & .094 & .084 \\
           & .20 & \textbf{.044} & .187 & .206 & .180 & .172 & .187 \\
           & .30 & \textbf{.064} & .295 & .314 & .297 & .265 & .284 \\
           & .40 & \textbf{.114} & .401 & .405 & .399 & .373 & .412 \\
           & .50 & \textbf{.180} & .507 & .514 & .505 & .500 & .508 \\
        \midrule
        15 & .05 & \textbf{.004} & .050 & .064 & .049 & .050 & .054 \\
           & .10 & \textbf{.010} & .100 & .115 & .100 & .103 & .104 \\
           & .20 & \textbf{.036} & .194 & .201 & .182 & .187 & .187 \\
           & .30 & \textbf{.070} & .295 & .302 & .287 & .294 & .291 \\
           & .40 & \textbf{.114} & .386 & .394 & .379 & .371 & .373 \\
           & .50 & \textbf{.198} & .481 & .484 & .487 & .472 & .497 \\
        \midrule
        20 & .05 & \textbf{.002} & .054 & .064 & .059 & .055 & .052 \\
           & .10 & \textbf{.004} & .115 & .121 & .113 & .103 & .113 \\
           & .20 & \textbf{.030} & .195 & .205 & .202 & .187 & .204 \\
           & .30 & \textbf{.058} & .281 & .287 & .277 & .283 & .291 \\
           & .40 & \textbf{.130} & .377 & .386 & .375 & .368 & .384 \\
           & .50 & \textbf{.190} & .489 & .493 & .493 & .468 & .469 \\
        \bottomrule
    \end{tabular}%
    }
    \caption{Type I error rates for samples drawn from a Rayleigh distribution as a function of sample size and different rejection thresholds.}\label{table:type1-rayleigh-tresholds}
\end{table}

\section{Synthetic Data Experiments}\label{app:synthetic-data-experiments}

This sections provides more details on the results in \cref{sec:synthetic-experiments}.
All of the plots produced can be found in \cref{fig:app-single-pred-nn,fig:app-multiple-pred-nn}, where uncertainty values where plotted for different ranges depending on the metric (variance: $0$-$0.25$; (negative) entropy: $0$-$1$; mutual information: $4-5$; ($1 -$) max.\@ prob: $0 - 0.5$), with deep purple signifying high uncertainty and white signifying low uncertainty / high certainty.\index{Uncertainty}
We can see in \cref{fig:app-single-pred-nn} that maximum softmax probability and predictive entropy\index{Entropy!Predictive} behave quite similarly, forming a tube-like region of high uncertainty\index{Uncertainty} along what appear to be the decision boundary. 
In both cases, the region appears to be sharper in the case of maximum softmax probability (right column) and also more defined after additional temperature scaling (bottom row). 
For all models and metrics, we see that the gradient magnitude decreases and approaches zero away from the training data (yellow / green plots), except for the cases discussed in \cref{sec:synthetic-experiments}.\\

\begin{figure}[htb]
    \centering
    \resizebox{0.8\textwidth}{!}{%
    \begin{tabular}{rll}
         & \multicolumn{1}{c}{\small Predictive Entropy} & \multicolumn{1}{c}{\small Maximum probability} \\
        \rotatebox{90}{\hspace{-1.5cm}Neural Network}             & \includegraphics[width=0.28\textwidth]{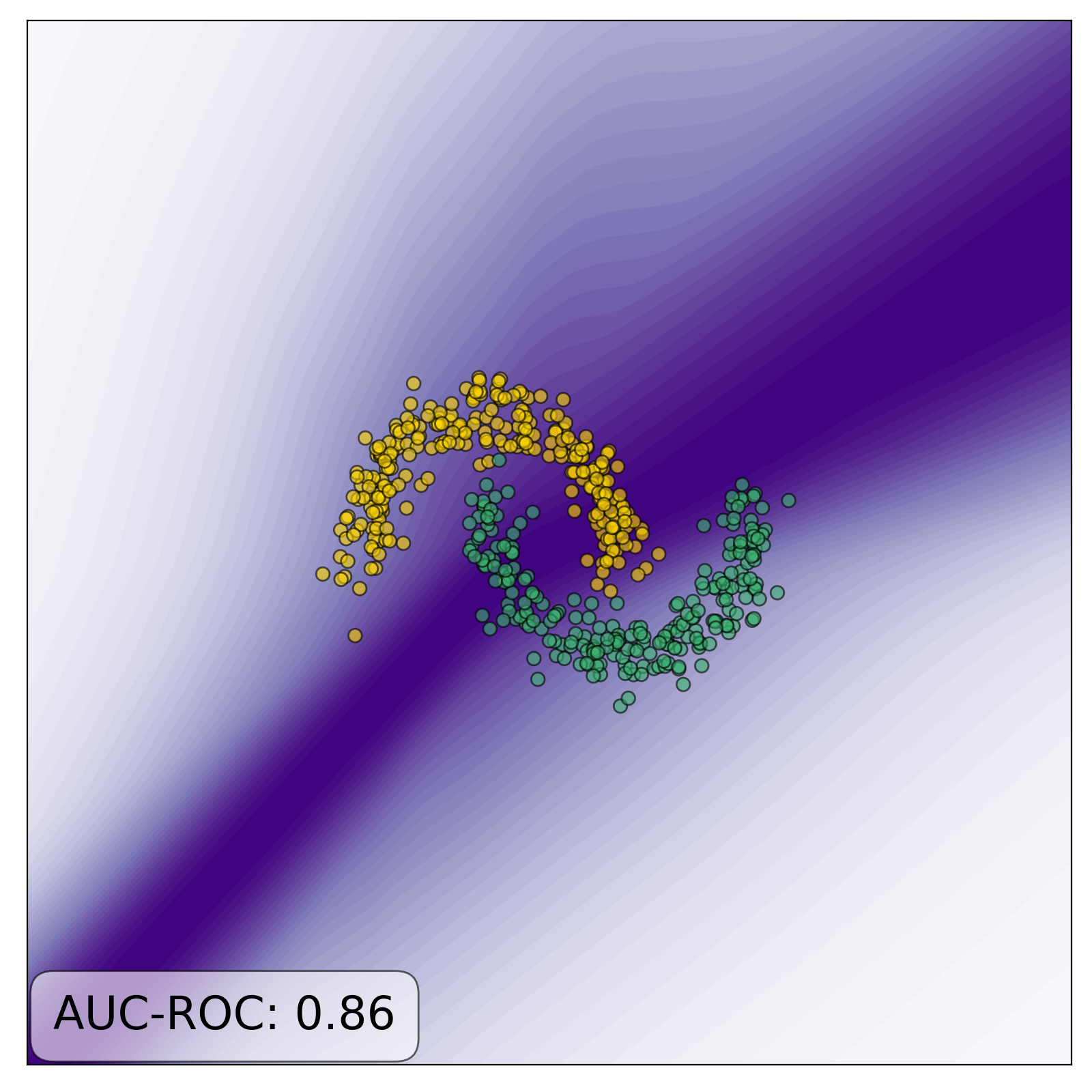} & \includegraphics[width=0.28\textwidth]{img/know_your_limits/nn_max_prob.png} \\
        & \includegraphics[width=0.3\textwidth]{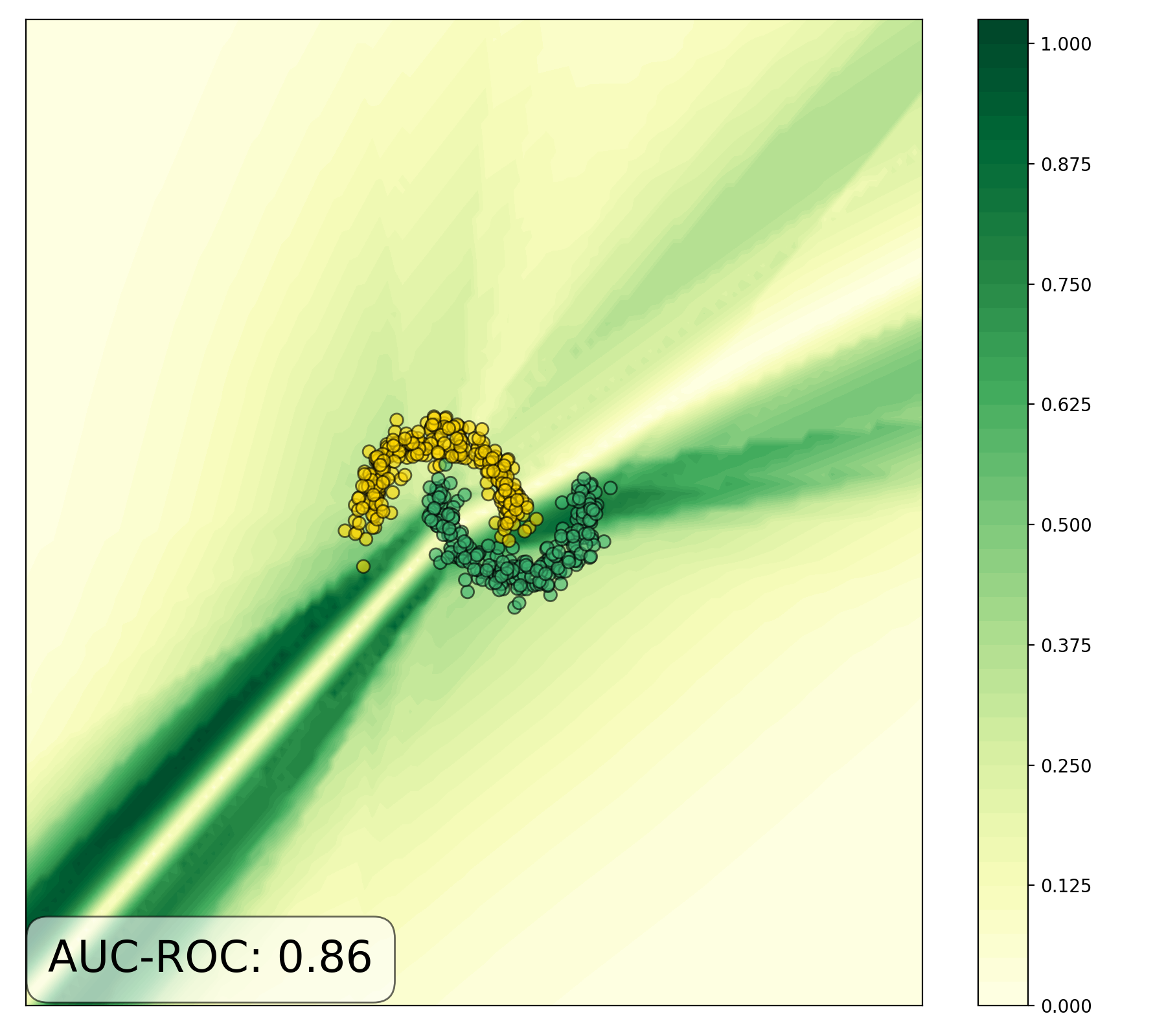} & \includegraphics[width=0.3\textwidth]{img/know_your_limits/nn_max_prob_grads.png} \\
        \midrule
        \rotatebox{90}{\hspace{-1.25cm}Platt scaling} & \includegraphics[width=0.28\textwidth]{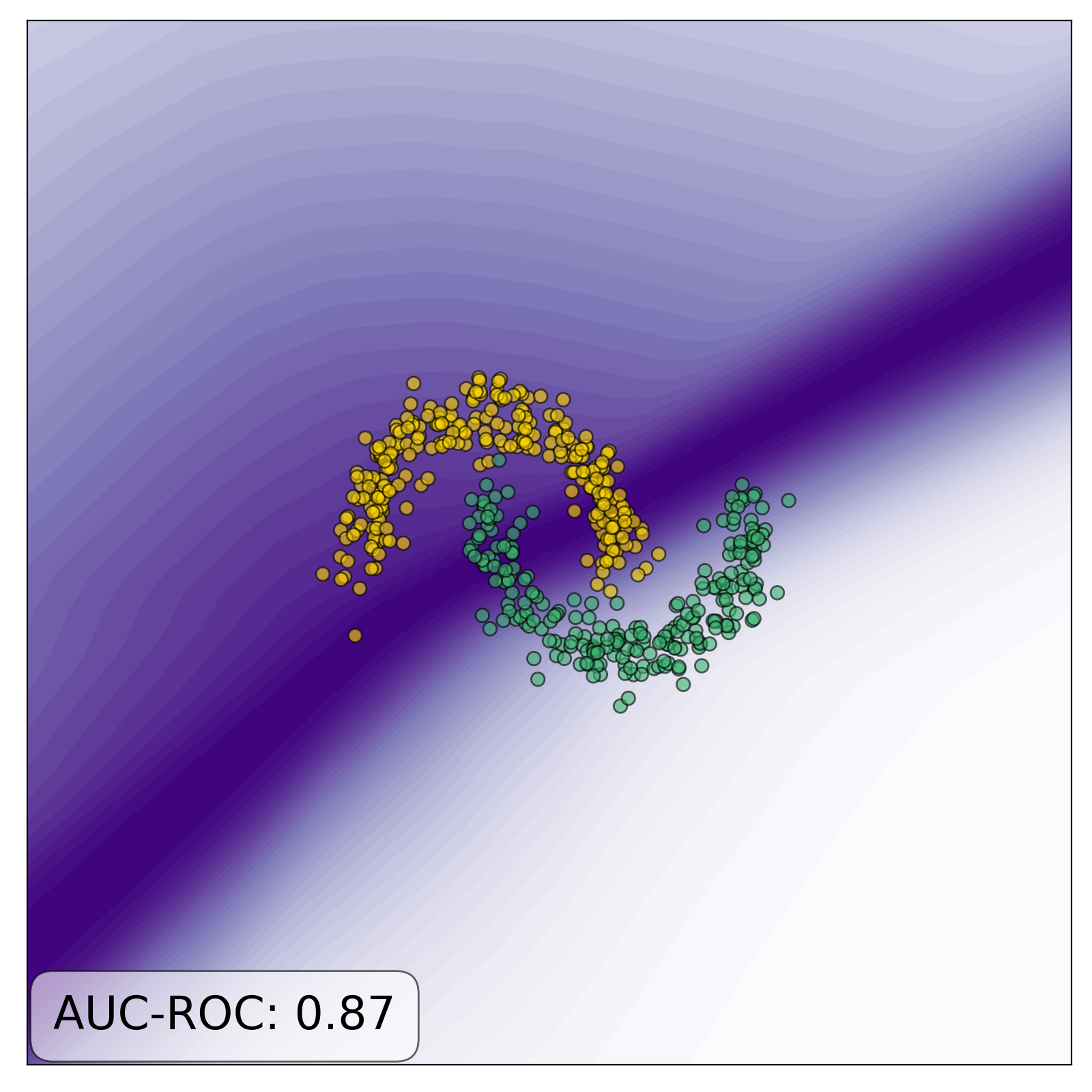} & \includegraphics[width=0.28\textwidth]{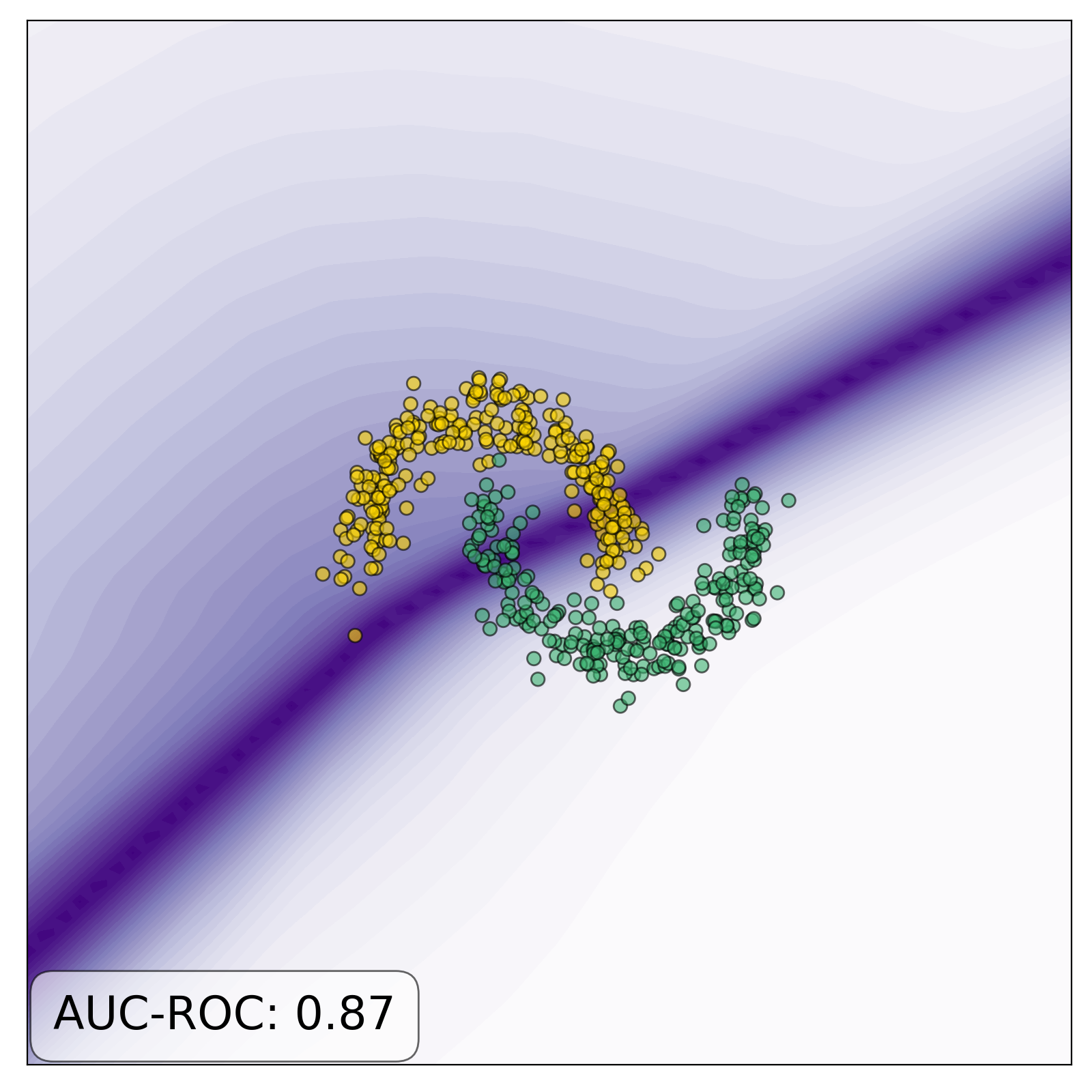}  \\
        & \includegraphics[width=0.3\textwidth]{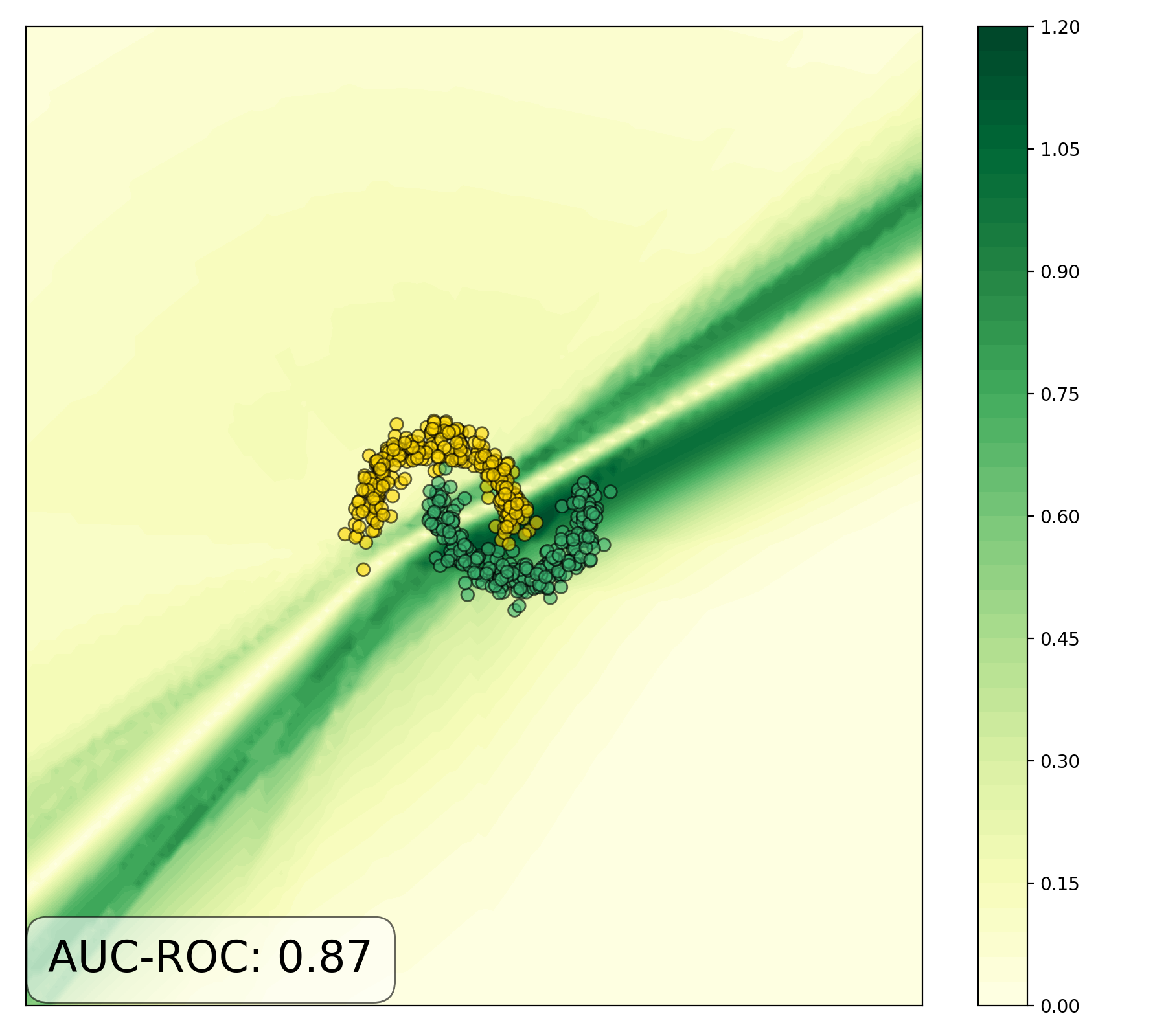} & \includegraphics[width=0.3\textwidth]{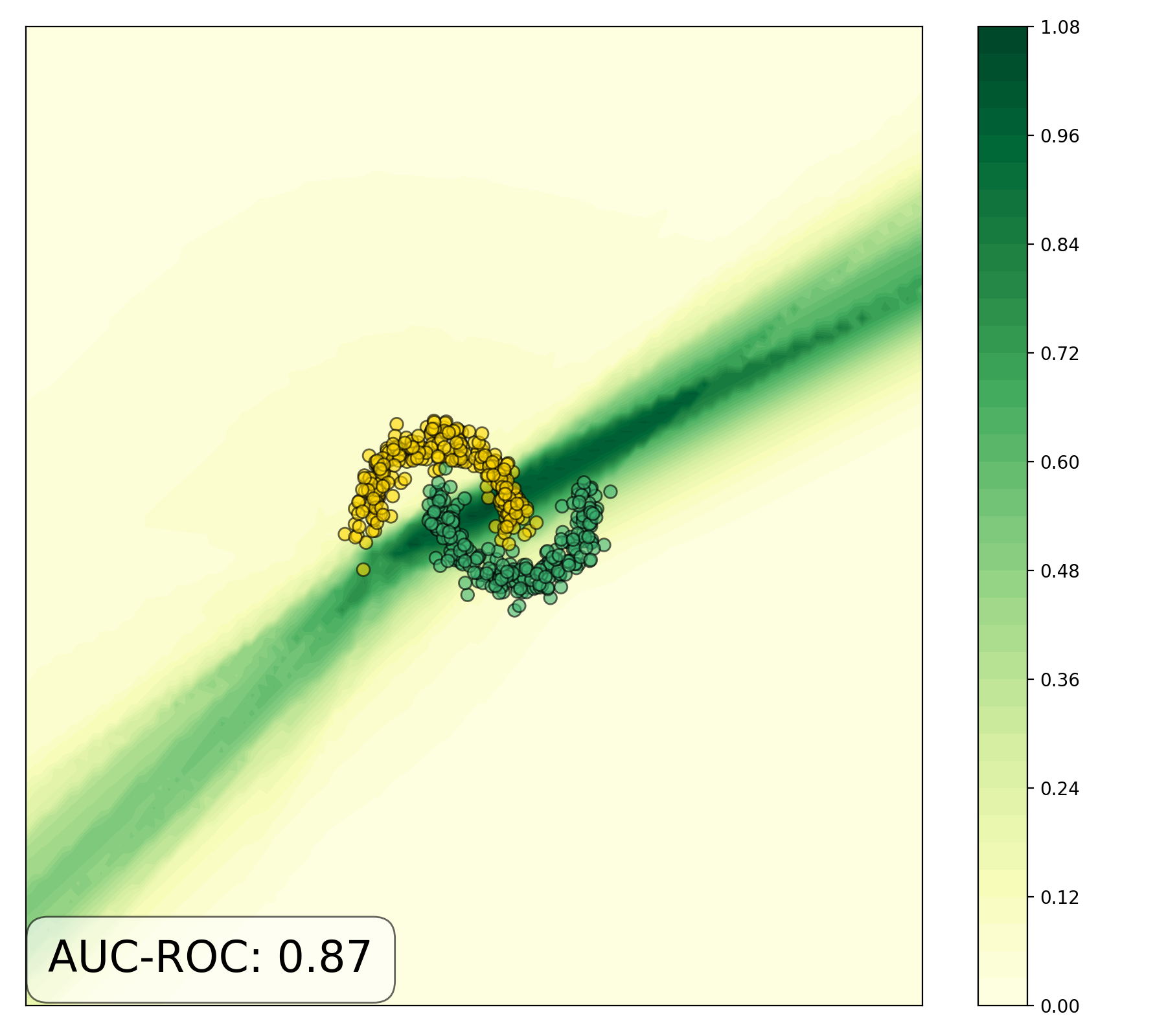}  \\
    \end{tabular}%
    }
    \caption[Uncertainty measured by different metrics for single-instance models and their gradient magnitude.]{
        Uncertainty measured by different metrics for single-instance models (purple plots) and their gradient magnitude (yellow / green plots).}
    \label{fig:app-single-pred-nn}
\end{figure}

In the next figure, \cref{fig:app-multiple-pred-nn}, we observe the uncertainty surfaces for models using multiple network instances.\index{Uncertainty} 
For the remaining models it is interesting to see that class variance\index{Class variance} (left column) didn't seem to produce significantly different values across the feature space except for the anchored ensemble. 
For predictive entropy\index{Entropy!Predictive} (central column), we can see a similar behavior compared to the single-instances models. 
Interestingly, the ``fuzziness'' of the high-uncertainty region increases with the ensemble and becomes increasing large with its anchored variant. 
Nevertheless, regions with static levels of certainty still exist in this case. 
For the mutual information\index{Mutual information} plots (right column), epistemic uncertainty\index{Uncertainty!Epistemic} is lowest around the training data, where the model is best specified, which creates another tube-like region of high confidence\index{Confidence} even where there is no training data, an effect that is reduced with the neural ensemble and almost completely solved by the anchored ensemble.\index{Ensembling} 
For all metrics, we see a magnitude close to zero for the uncertainty gradient away from the training data, except for the decision boundaries, as discussed in \cref{sec:synthetic-experiments}.

\begin{figure*}[htb]
    \centering
    \resizebox{0.98\textwidth}{!}{%
        \begin{tabular}{rlll}
             & \multicolumn{1}{c}{\small Class variance} & \multicolumn{1}{c}{\small Predictive Entropy} & \multicolumn{1}{c}{\small Mutual Information} \\
             \multirow{2}{*}{\rotatebox{90}{MC Dropout\hspace{-0.75cm}}}        & \includegraphics[width=0.24\textwidth]{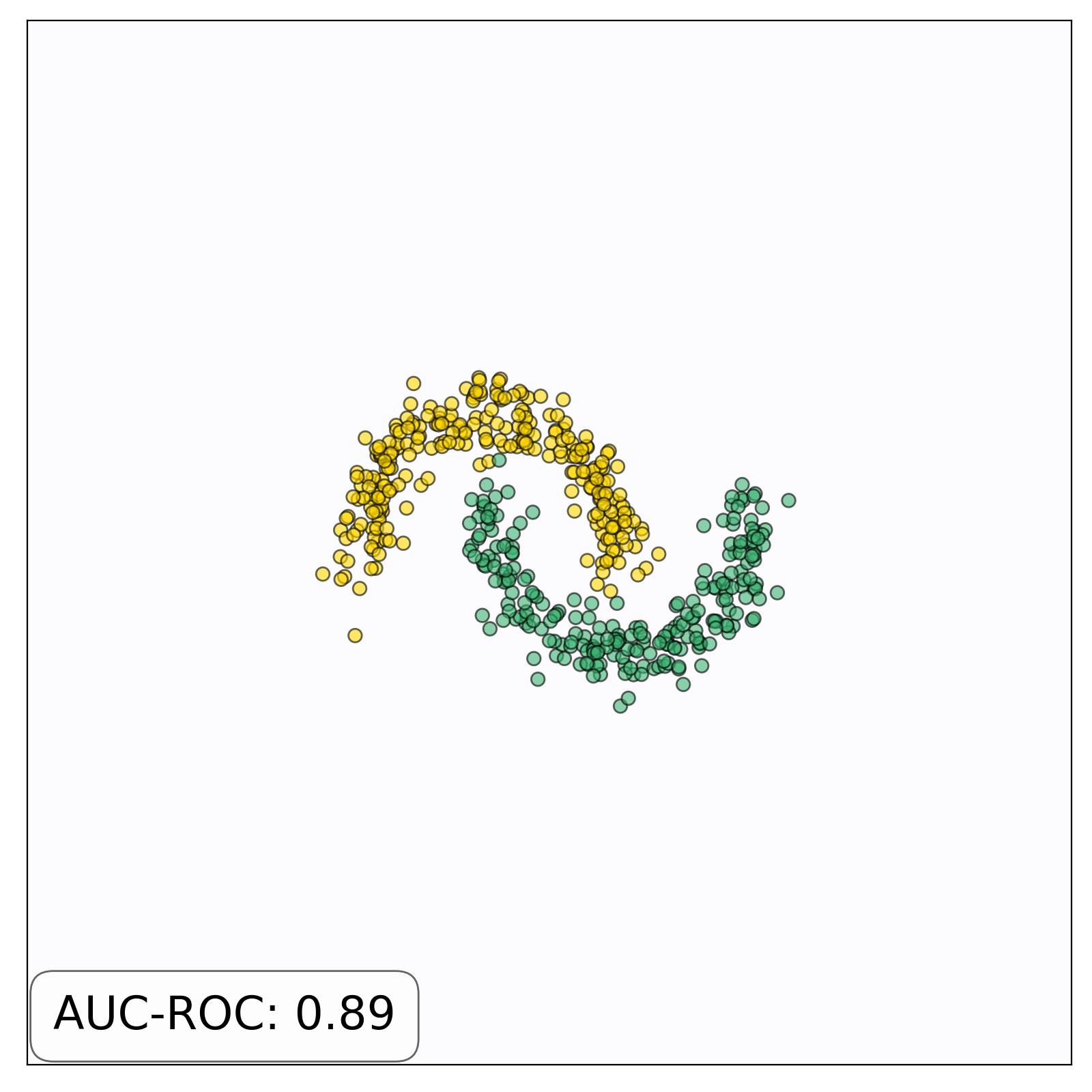} & \includegraphics[width=0.24\textwidth]{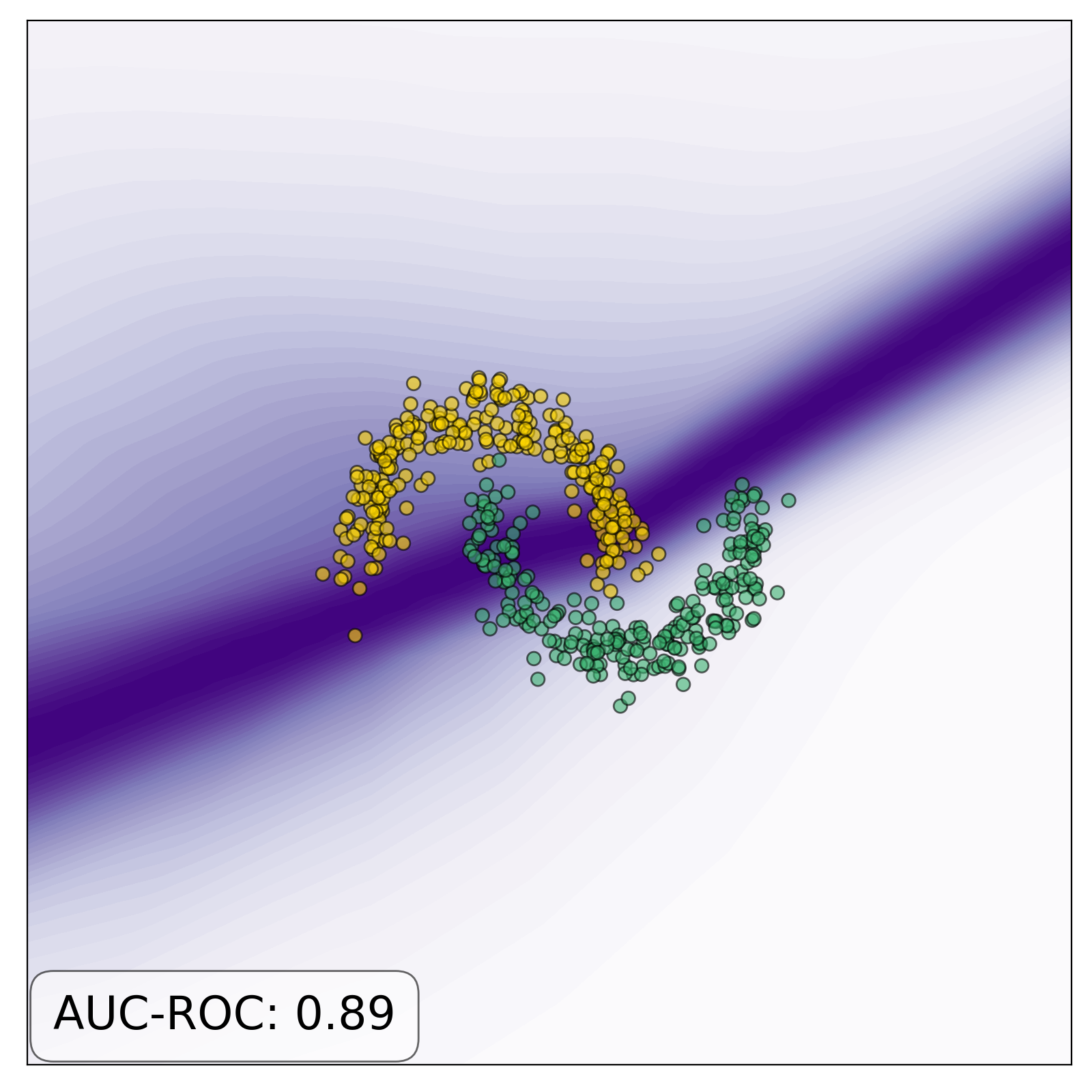}  & \includegraphics[width=0.24\textwidth]{img/know_your_limits/mcdropout_mutual_information.png} \\
             & \includegraphics[width=0.25\textwidth]{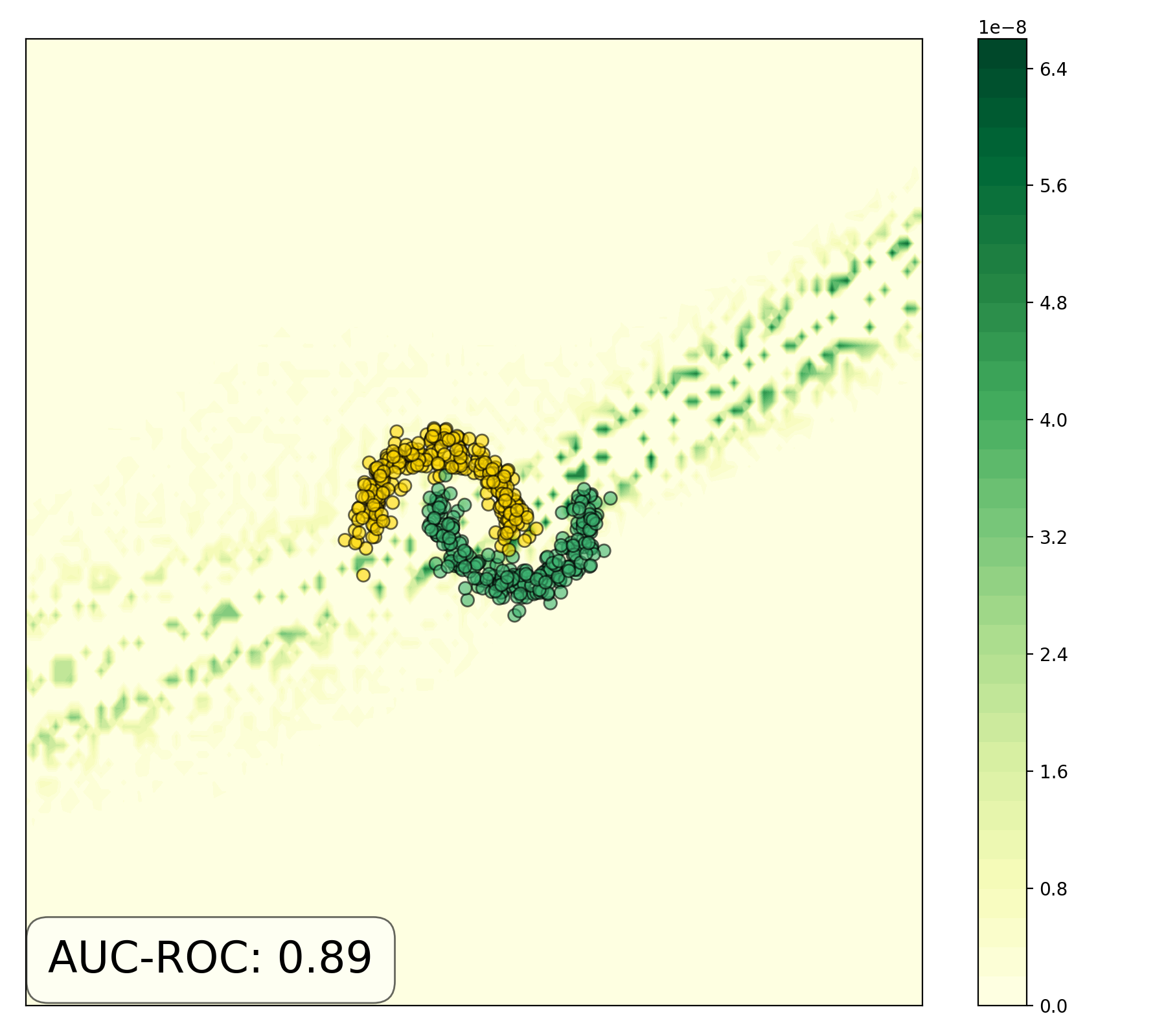} & \includegraphics[width=0.25\textwidth]{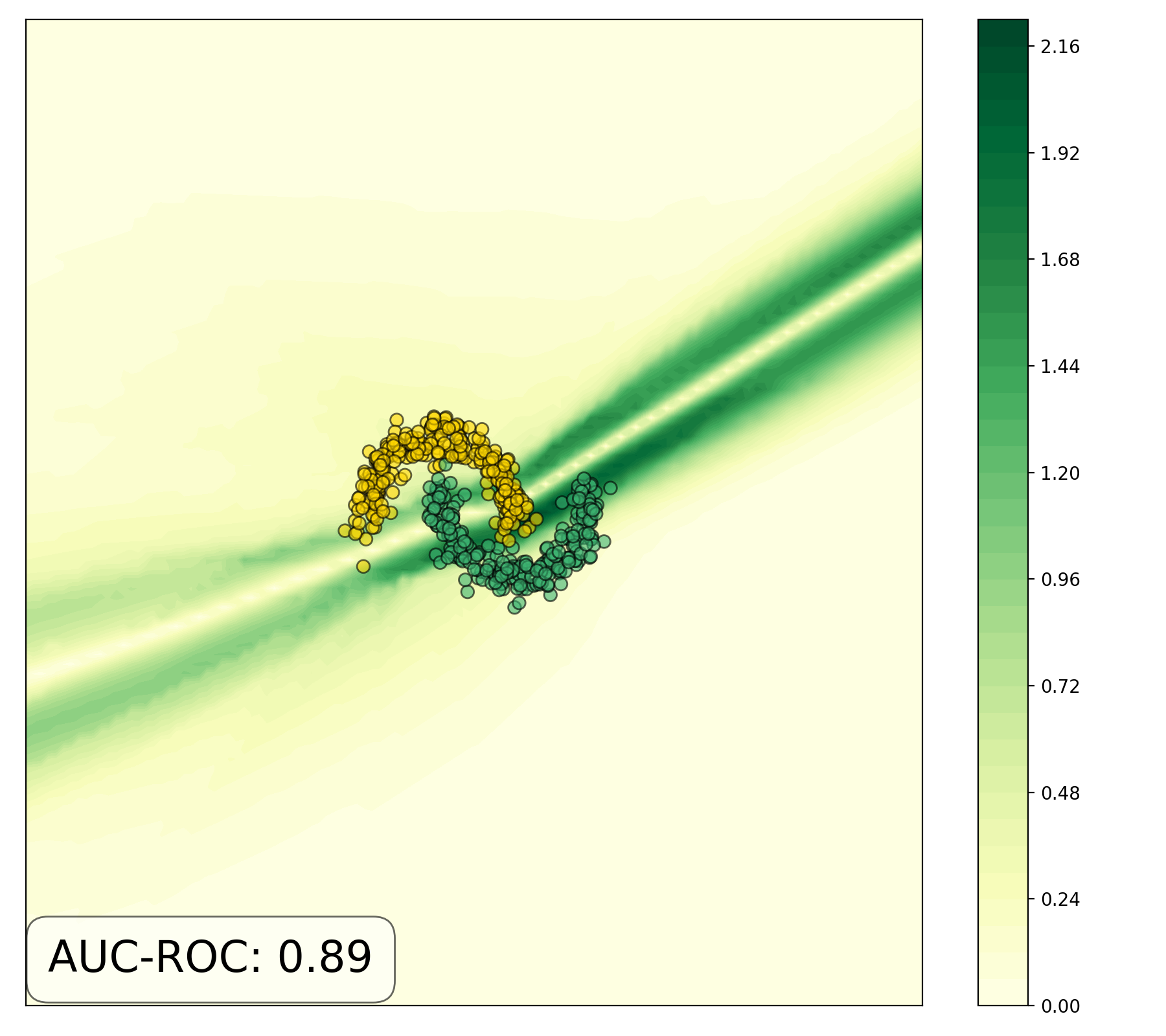}  & \includegraphics[width=0.25\textwidth]{img/know_your_limits/mcdropout_mutual_information_grads.png} \\
             \midrule
             \multirow{2}{*}{\rotatebox{90}{Neural Ensemble\hspace{-1.25cm}}}         & \includegraphics[width=0.24\textwidth]{img/know_your_limits/nnensemble_var.png} & \includegraphics[width=0.24\textwidth]{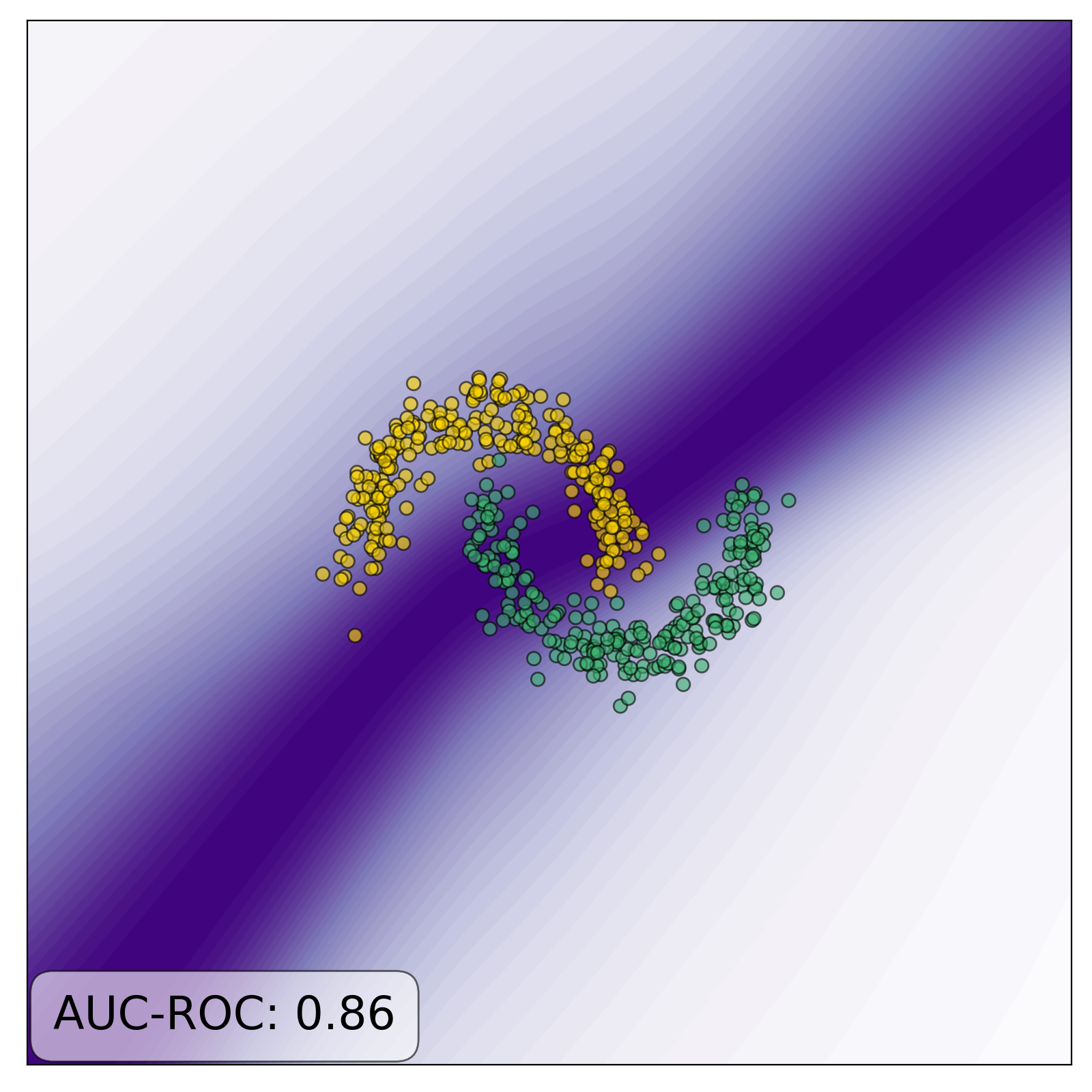}  & \includegraphics[width=0.24\textwidth]{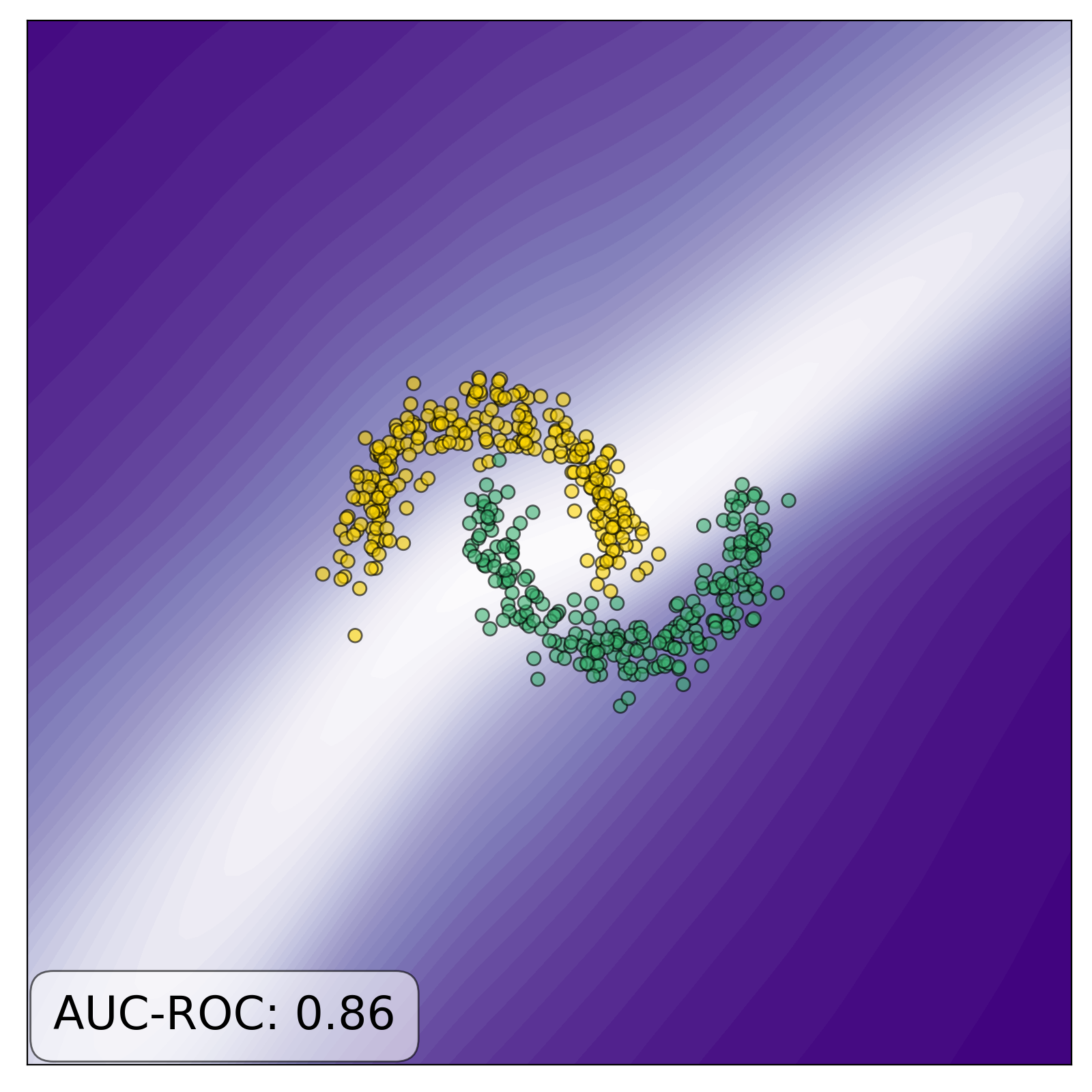} \\
              & \includegraphics[width=0.25\textwidth]{img/know_your_limits/nnensemble_var_grads.png} & \includegraphics[width=0.25\textwidth]{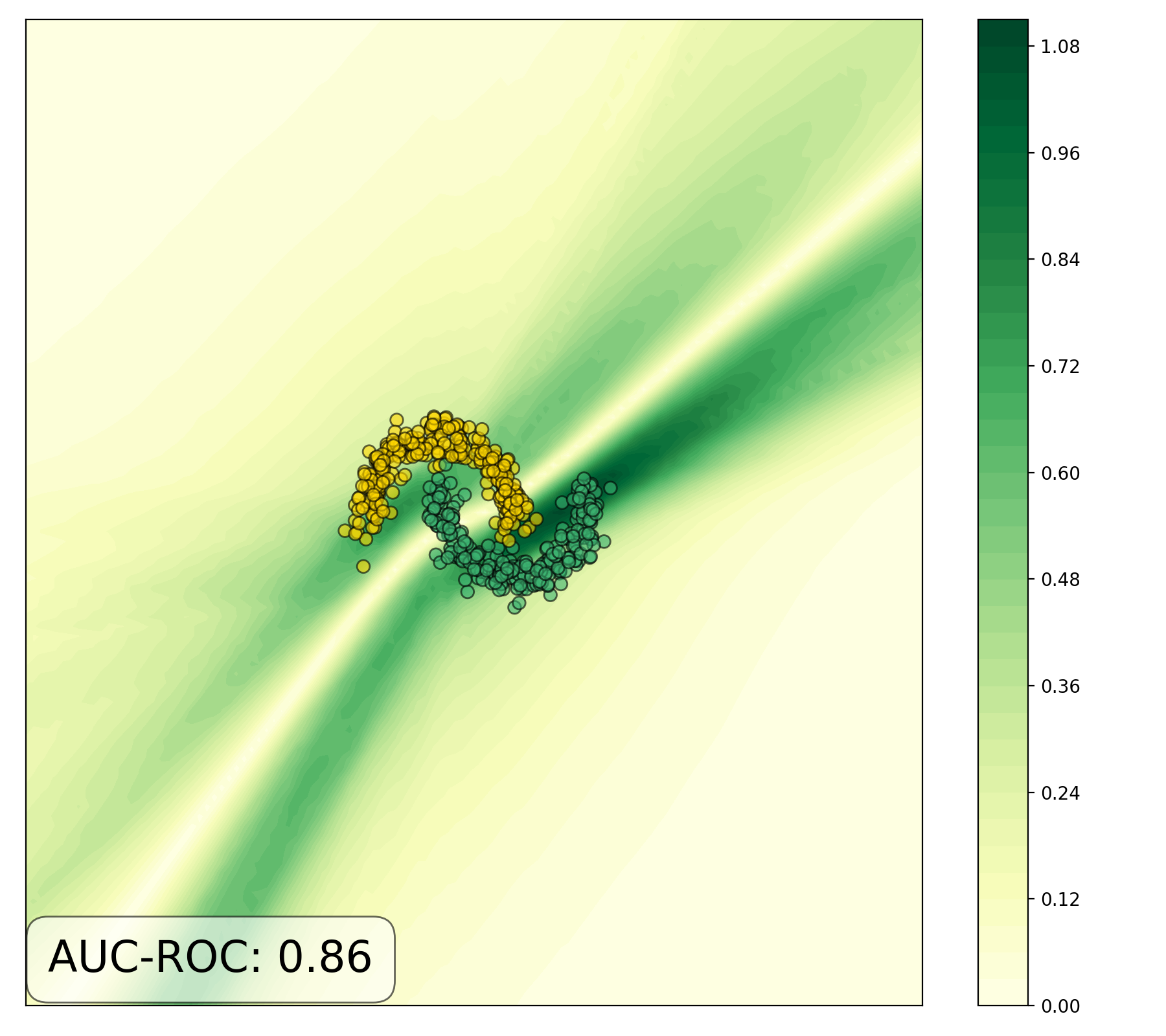}  & \includegraphics[width=0.25\textwidth]{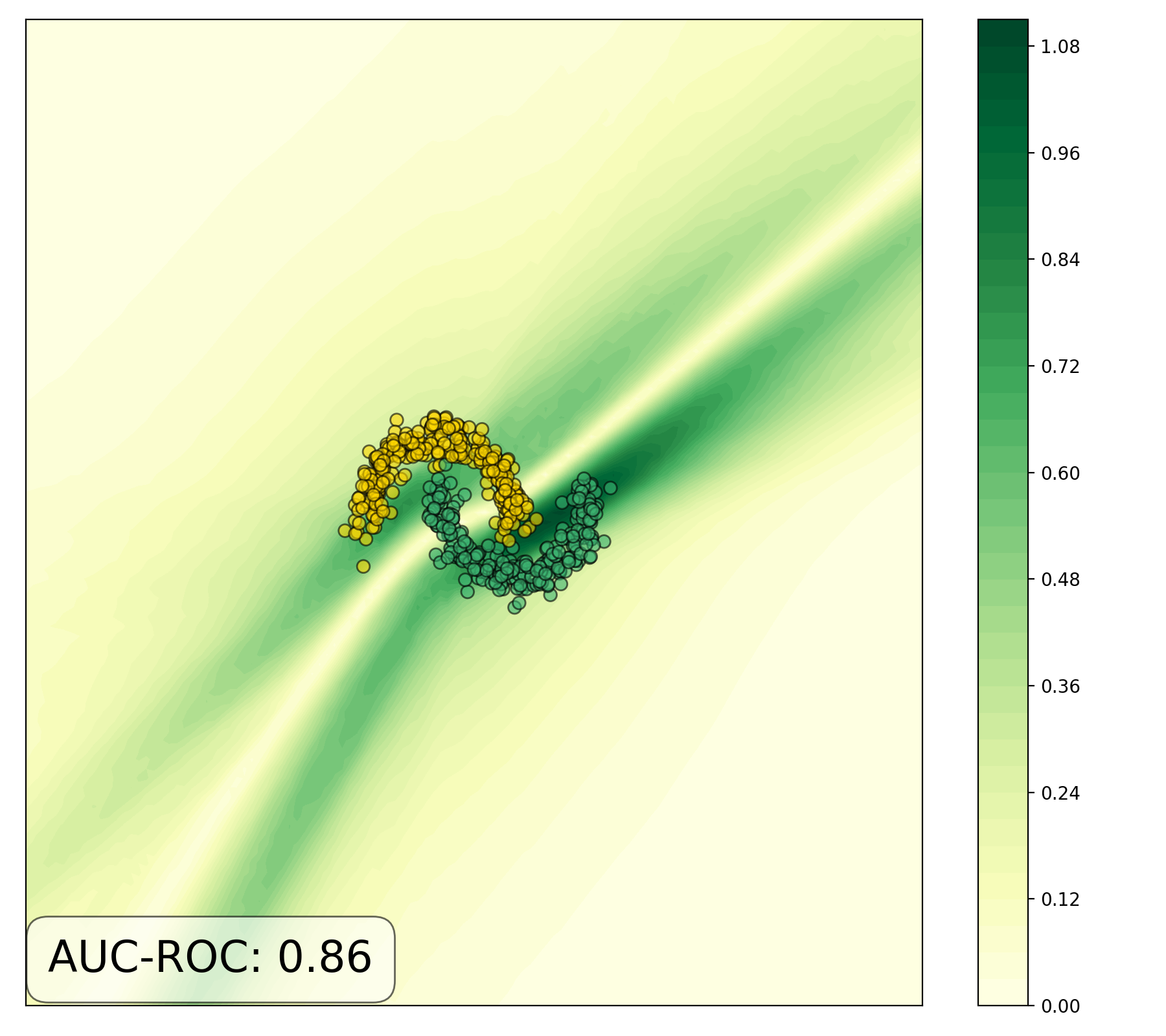} \\
              \midrule
            \rotatebox{90}{\hspace{-1.7cm}Anchored Ensemble}  & \includegraphics[width=0.24\textwidth]{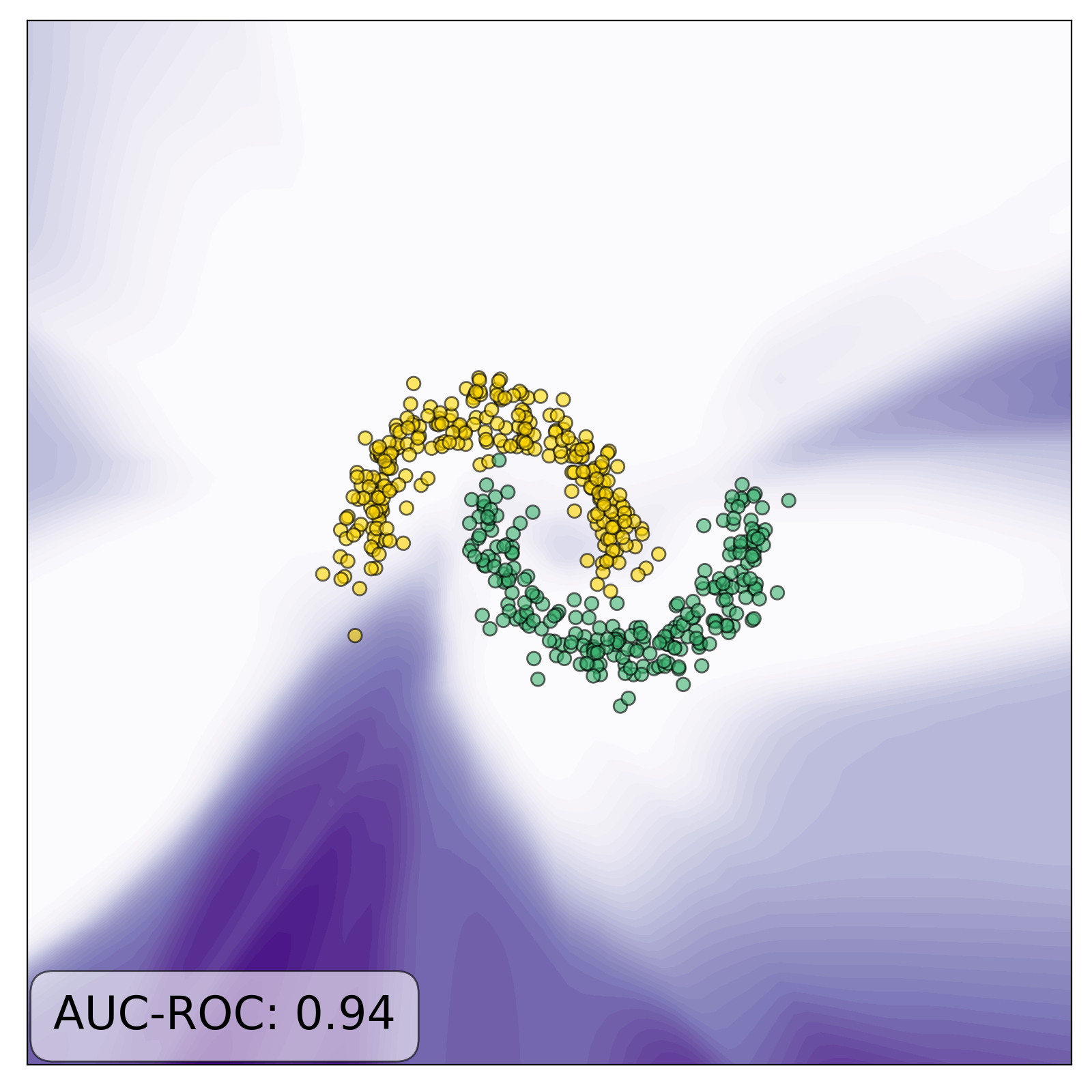} & \includegraphics[width=0.24\textwidth]{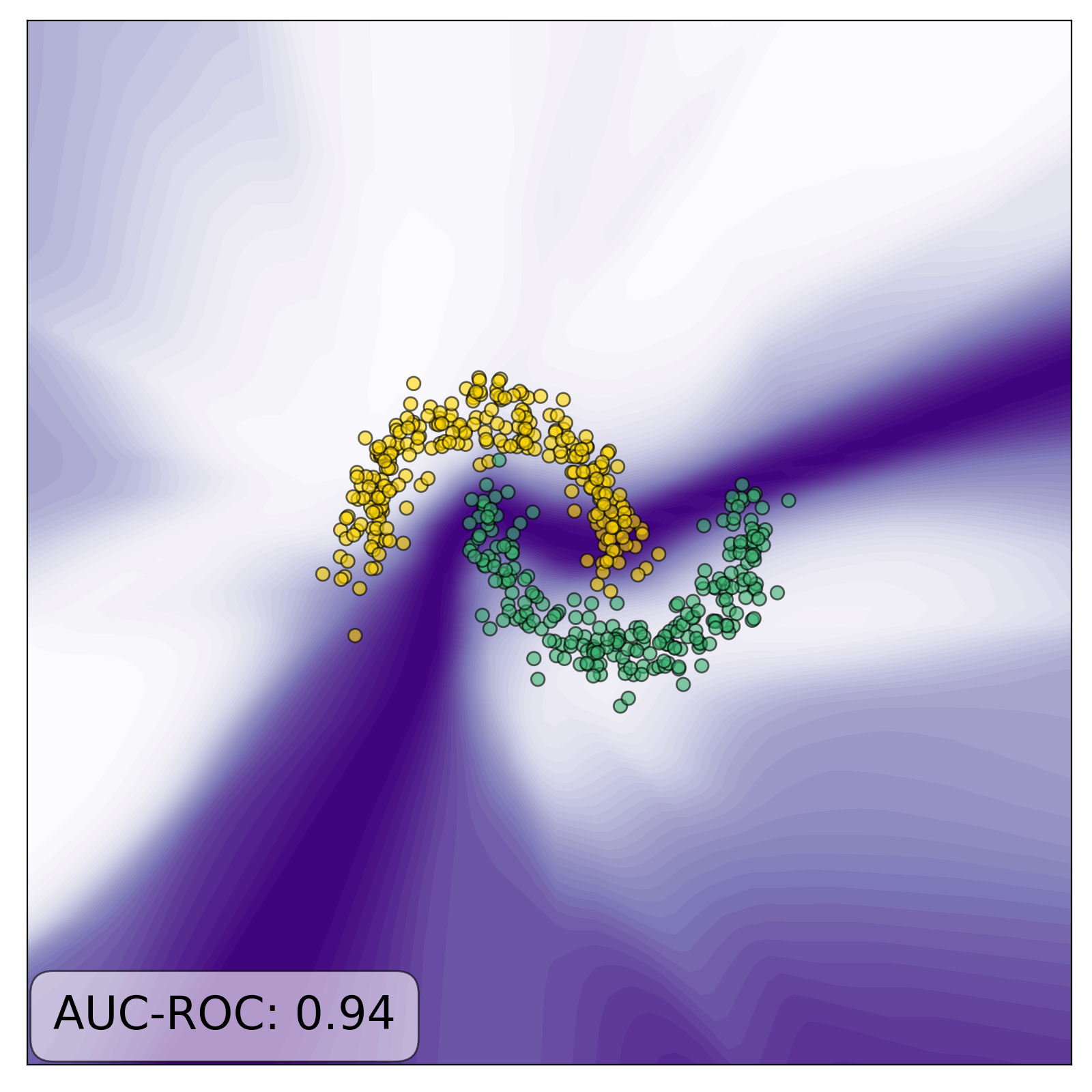}  & \includegraphics[width=0.24\textwidth]{img/know_your_limits/anchorednnensemble_mutual_information.png} \\
            & \includegraphics[width=0.25\textwidth]{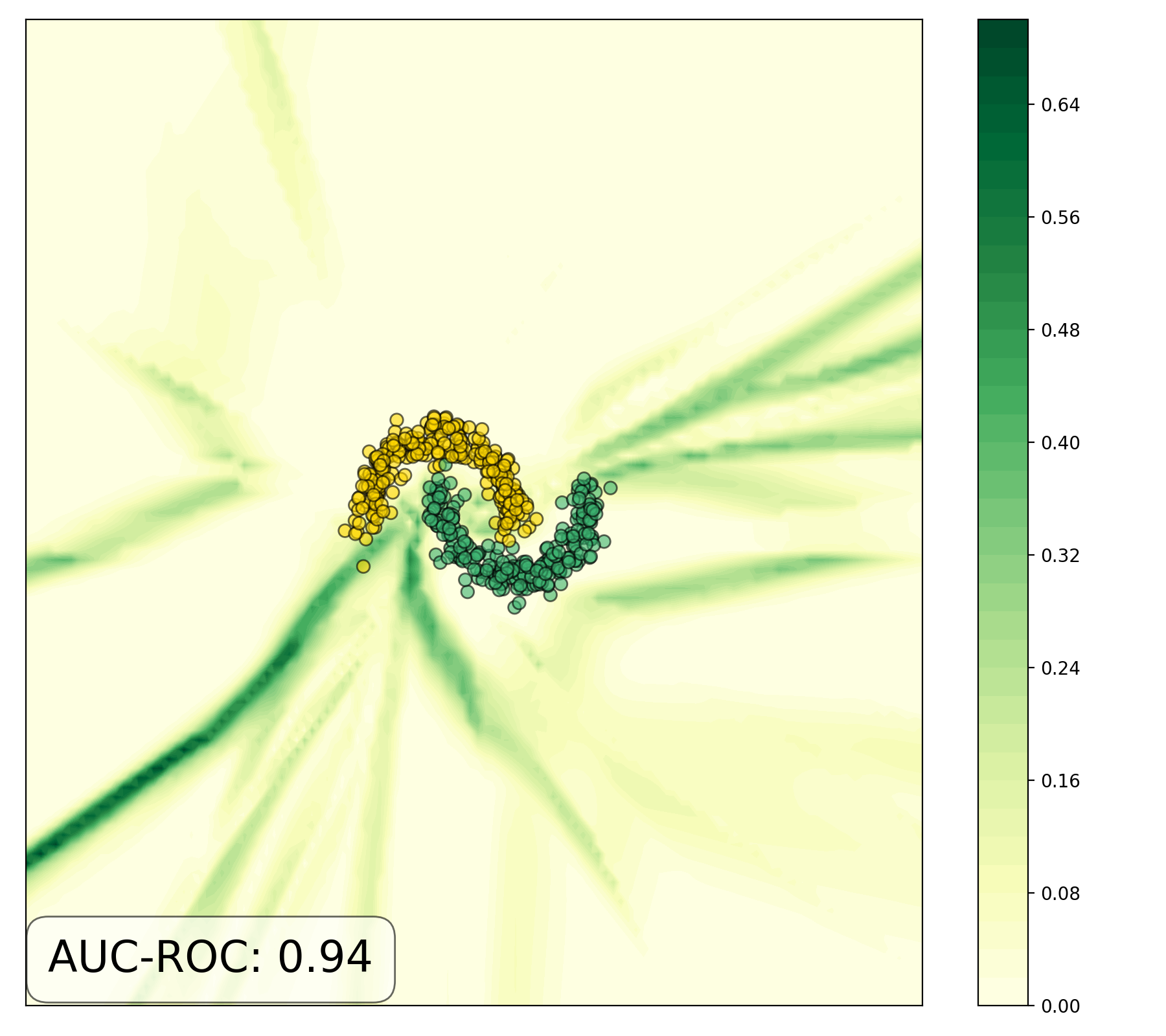} & \includegraphics[width=0.25\textwidth]{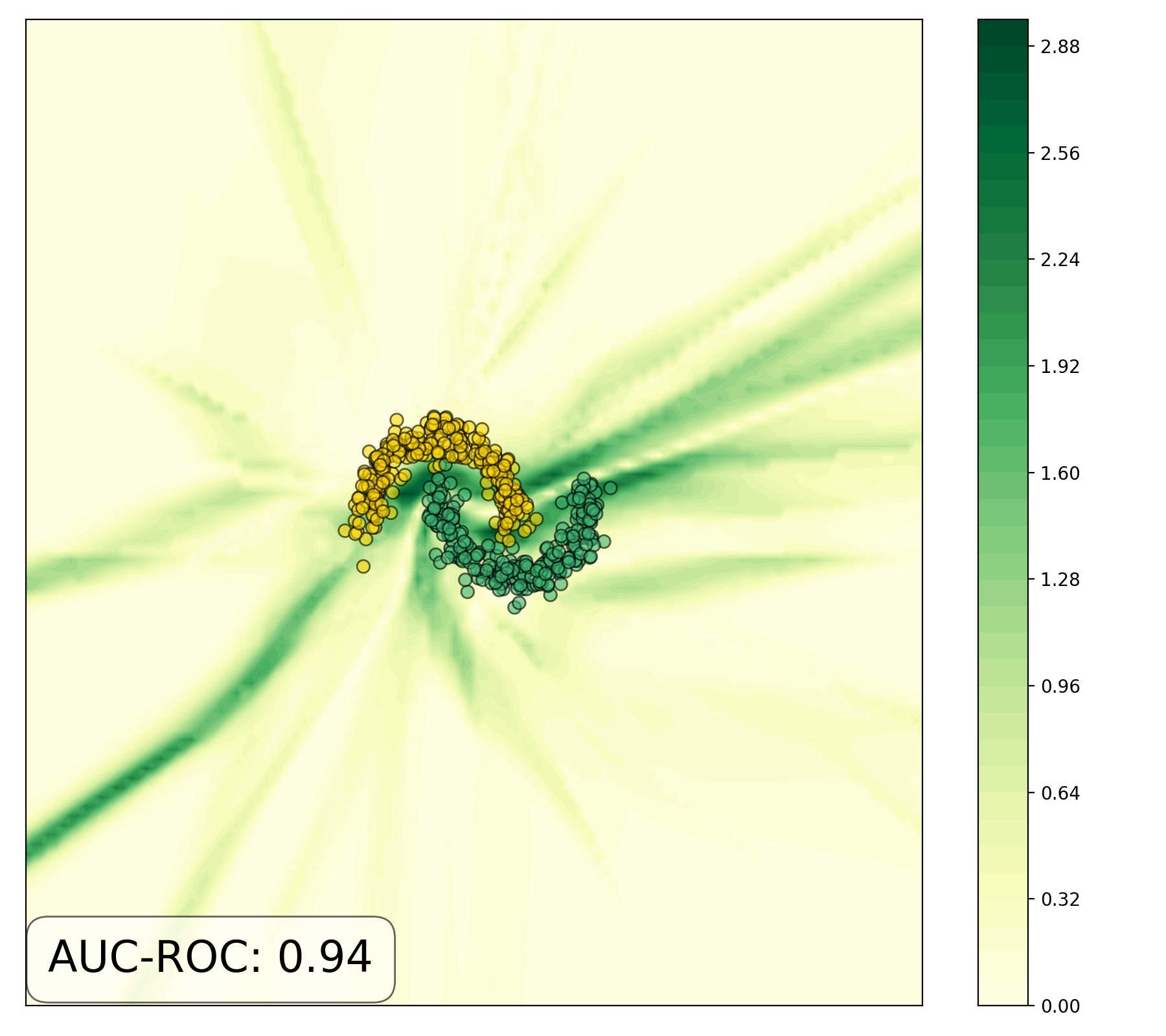}  & \includegraphics[width=0.25\textwidth]{img/know_your_limits/anchorednnensemble_mutual_information_grads.png} \\
        \end{tabular}
    }
    \caption[Uncertainty measured by different metrics for multi-instance models and the gradient of the uncertainty score w.r.t.\@ to the input.]{
        Uncertainty measured by different metrics for multi-instance models (purple plots) and the gradient of the uncertainty score w.r.t.\@ to the input (yellow / green plot).
    }
    \label{fig:app-multiple-pred-nn}
\end{figure*}

\section{Sub-Sampling of Training Sets}\label{app:exploring-predictive-uncertainty-training-set}

Since we sub-sample some of the data splits in \cref{table:datasets}, this bears the dangers of producing unnatural samples of text. 
For that reason, we use this appendix to describe the sampling strategies used for the methodology in \cref{sec:exploring-data-creation} in more detail.

\paragraph{Sub-Sampling Procedure.} 
The procedure for sub-sampling text is that sequences are first placed into buckets of the same label, then into sub-buckets of the same length. 
Then, the sampling procedure consists of first drawing a label based on the observed label frequencies, after which the draw of sequence length, proportional to the frequency of this length inside the bucket, determines the final bucket from which a sequence is again drawn uniformly.
Lastly, the process for token classification involves the grouping into sequences by length at the highest level.
Inside a bucket, a sequence is not drawn uniformly, but with a probability according to the \emph{alignment} of the sequence's labels with the overall corpus label distribution. 
This alignment is calculated for each sequence by evaluating the expected log-probability of the sequence's label distribution w.r.t.\@ to the label distribution of the corpus (i.e.,\@ the cross-entropy\index{Entropy!Cross-}). 
The scores for all same-length sequences in a bucket are then normalized into a $[0, 1]$ interval in order to enable sampling, which is similar to the two-stage procedure used in the sequence classification\index{Classification!Sequence} case.

\begin{figure*}
    \centering
    \includegraphics[width=0.95\textwidth]{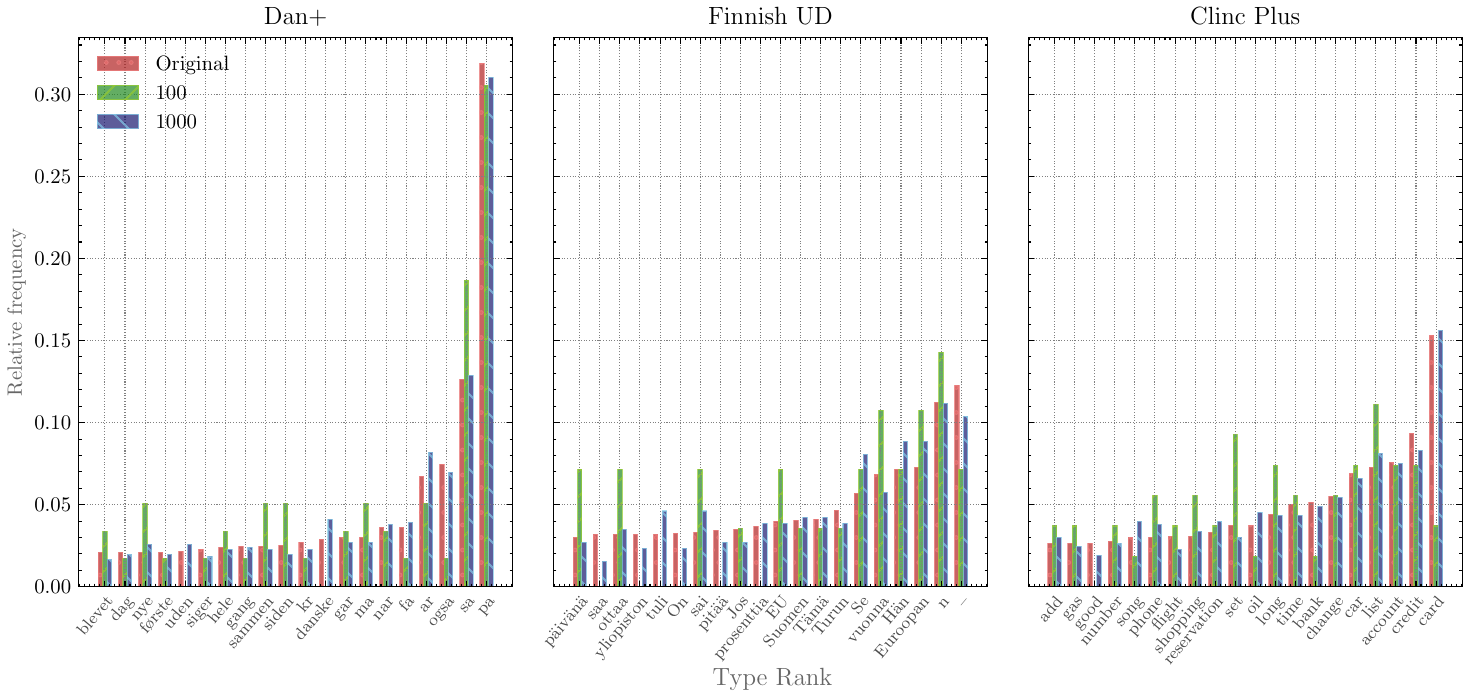}
    \caption[Comparing the relative type frequency in the original and sub-sampled training sets.]{
    Comparing the relative frequency of types in the original and sub-sampled training sets.
    Shown are the top $20$ types in the original training set, compared to sub-sampled training sets of $100$ and $1000$ sequences for Dan+, Finnish UD and Clinc Plus.
    It is shown that while the type frequencies differ noticeably for the small dataset, already $1000$ sequences suffice to approximate the original frequencies. 
    Numbers, stopwords and the most common punctuation were removed.}
    \label{fig:top50}
\end{figure*}

\begin{figure*}
    \centering
    \includegraphics[width=0.95\textwidth]{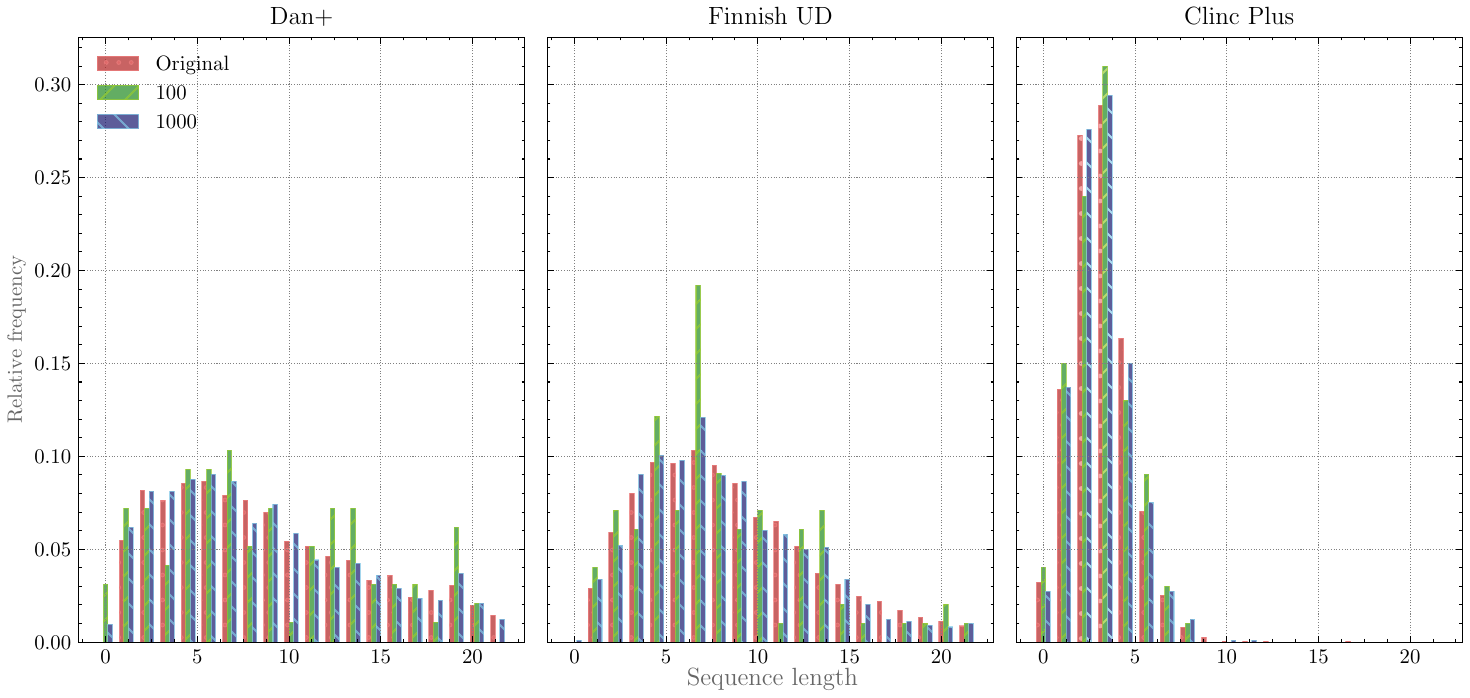}
    \caption[Comparing the relative sequence length frequency in the original and sub-sampled training sets.]{
    Comparing the relative frequency of sequence lengths in the original and sub-sampled training sets. 
    Shown are sequence lengths between $0$ and $25$ in the original test, compared to OOD test sets for Dan+, Finnish UD, Clinc Plus. 
    Not the whole distribution is shown in all cases, with many of the OOD sentences for Dan+ being very long. 
    For Dan+ and Finnish UD, the sentence length distributions are noticeably different. For Clinc Plus, they are very similar.}
    \label{fig:sentence-lengths}
\end{figure*}

\begin{figure*}
    \centering
    \includegraphics[width=0.985\textwidth]{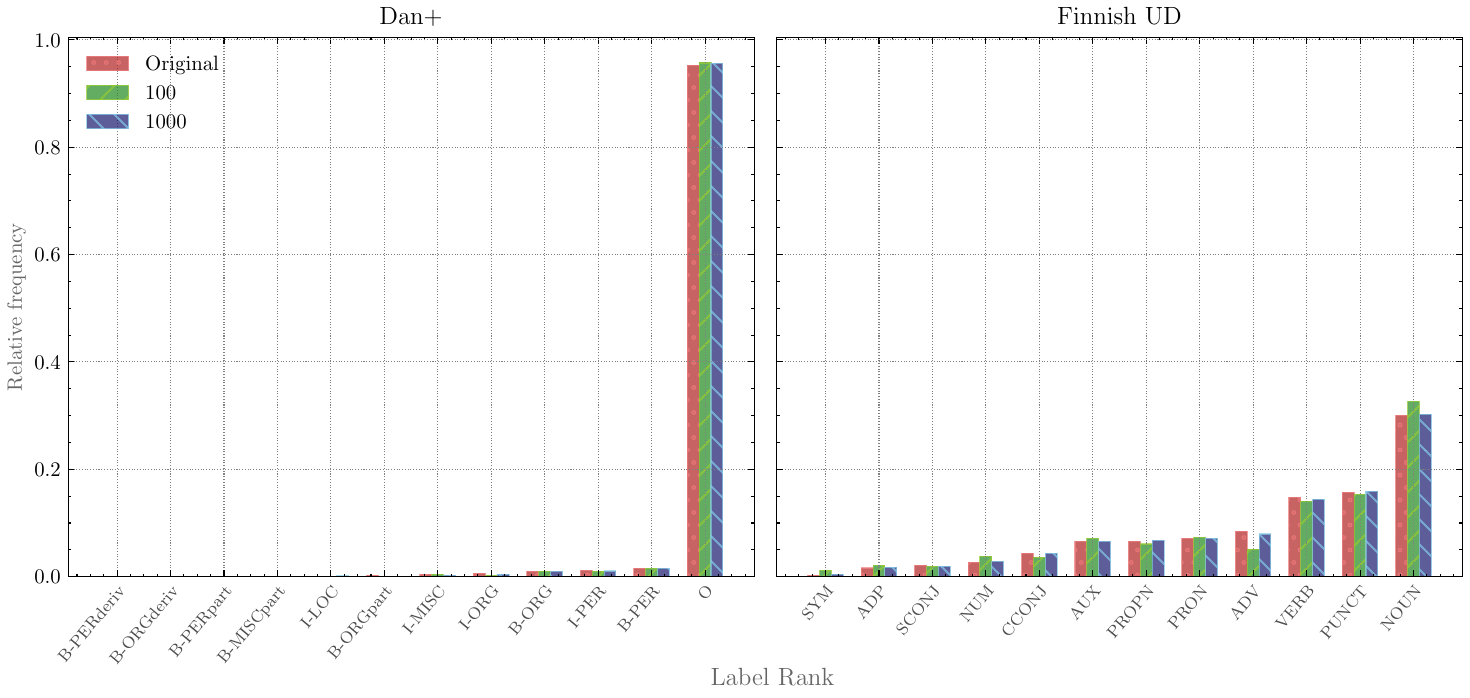} \\
    \caption[Comparing the relative label frequency in the original training set, compared to sub-sampled training sets.]{
    Comparing the relative frequency of labels in the original training set, compared to sub-sampled training sets.
    Shown are frequencies for $100$ and $1000$ sequences. 
    For Danish, the most frequent label by far is the neutral label indicating that no named entity is present.}
    \label{fig:class-labels}
\end{figure*}
\begin{figure*}
    \centering
    \includegraphics[width=0.985\textwidth]{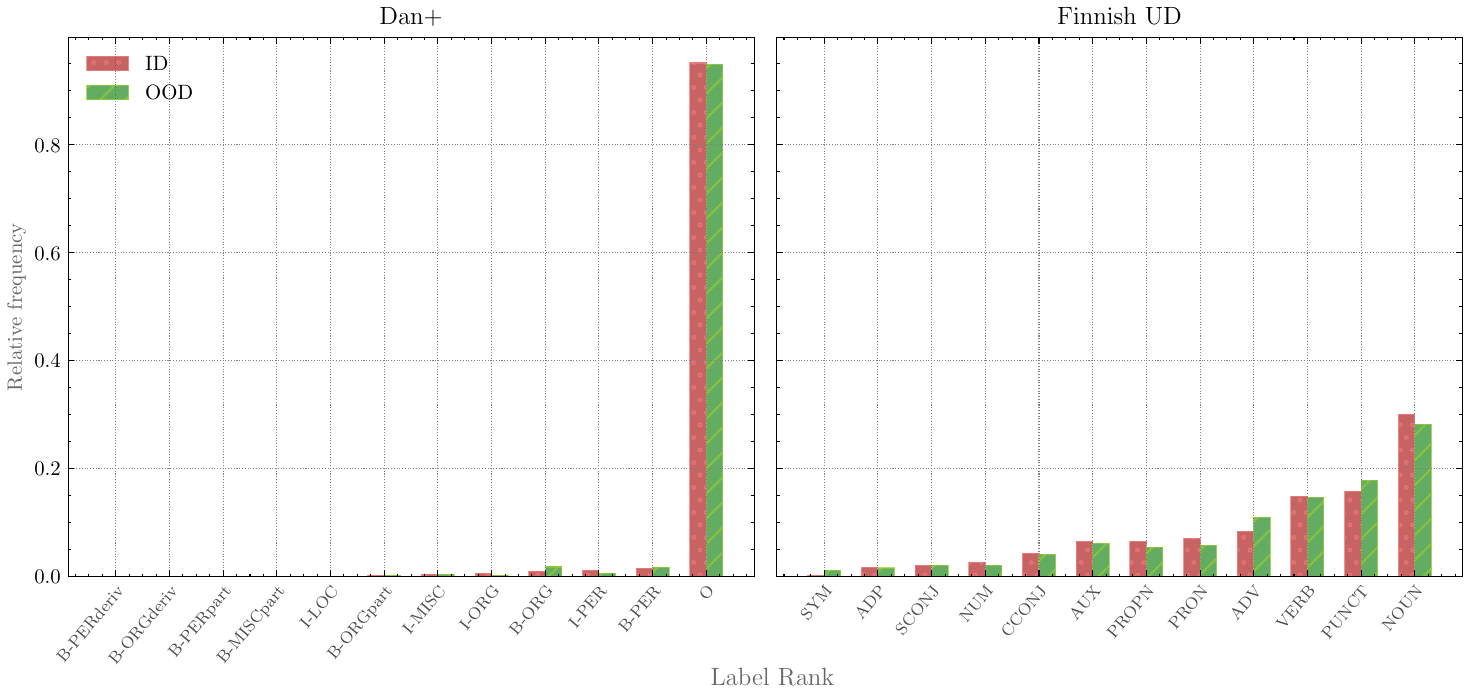}
    \caption[Comparison of the relative class frequencies between original training set compared to the OOD test set.]{
    Comparison of the relative class frequencies between original training set compared to the OOD test set.
    The proportions stay largely the same for Danish, while different more for Finnish.}
    \label{fig:ood-class-freqs}
\end{figure*}

\paragraph{Validation of Sub-Sampled Training Sets.} 
We take multiple steps to validate the representativeness of our sub-sampled data splits. 
First, we plot the distributions of the $50$ most frequent types in the original corpus in \cref{fig:top50}, where we see that distributions converge with increasing sample size. 
Secondly, we plot sentence length distributions in \cref{fig:sentence-lengths}, where we also see increasing alignment with sample size. 
We plot the class distributions in \cref{fig:class-labels}. 
Lastly, we train an interpolated trigram Kneser-Ney language model \citep{jelinek1980interpolated, ney1994structuring} with uniform interpolation weights trained on the original training set using the SRILM tool \citep{stolcke2002srilm} and sub-word tokens produced by the corresponding Bert\index{Bert} tokenizer, sub-sample multiple splits and compare their perplexity scores to those of the original corpus in \cref{table:data-validation}. 
While $n$-gram\index{$n$-gram} perplexities of sub-sampled training sets do lie over the ones of the original data, they are still upper-bounded by the in-distribution test set perplexities. 
Furthermore, this verification was not aimed to give the most precise results, as also the scoring using an $n$-gram model can be rather crude. 
Thus, with all these results, we conclude that our sub-sampling procedure produces sufficiently representative samples of the original data for the different tasks discussed.

\begin{table*}[htb]
    \centering 
    \resizebox{0.85\textwidth}{!}{
        \renewcommand{\arraystretch}{1.5}
        \begin{tabular}{@{}lrrrrrrr@{}}
            \toprule
            &  & \multicolumn{3}{c}{Sub-sampled Train ppl.$\downarrow$} & & \\
            \cmidrule(lr){3-5}
            Language & Train ppl.$\downarrow$  & $n=100$ & $n=500$ & $n=1000$ &  Test ppl.$\downarrow$ & OOD Test ppl.$\downarrow$ \\
            \midrule
            English & $31.54$ & $43.97\pm2.46$ & $44.50\pm0.68$ & $44.9\pm0.4$ & $53.11$ & $ 120.32$ \\
            Danish & $112.73$ & $252.52\pm 13.25$ & $247.09\pm 3.3$ & $249.27\pm 3.15$ & $418.71$ & $524.32$ \\
            Finnish & $116.49$ & $257.67\pm 10.96$ & $257.66 \pm 4.7$ & $260.36 \pm 5.36$ & $1374.76$ & $1284.82$ \\
            \bottomrule
        \end{tabular}%
    }
    \caption[Results of using an interpolated Kneser-Ney $n$-gram language model on selected datasets, including sub-sampled training splits and the OOD test set.]{
    Results of using an interpolated Kneser-Ney $n$-gram language model on selected datasets, including sub-sampled training splits and the OOD test set.
    Scores of sub-sampled training sets were obtained over five different attempts.} \label{table:data-validation}
\end{table*}

\section{Selection of OOD Test Sets}\label{app:predictive-uncertainty-ood-test-set}

\begin{figure*}[htb]
    \centering
    \includegraphics[width=0.985\textwidth]{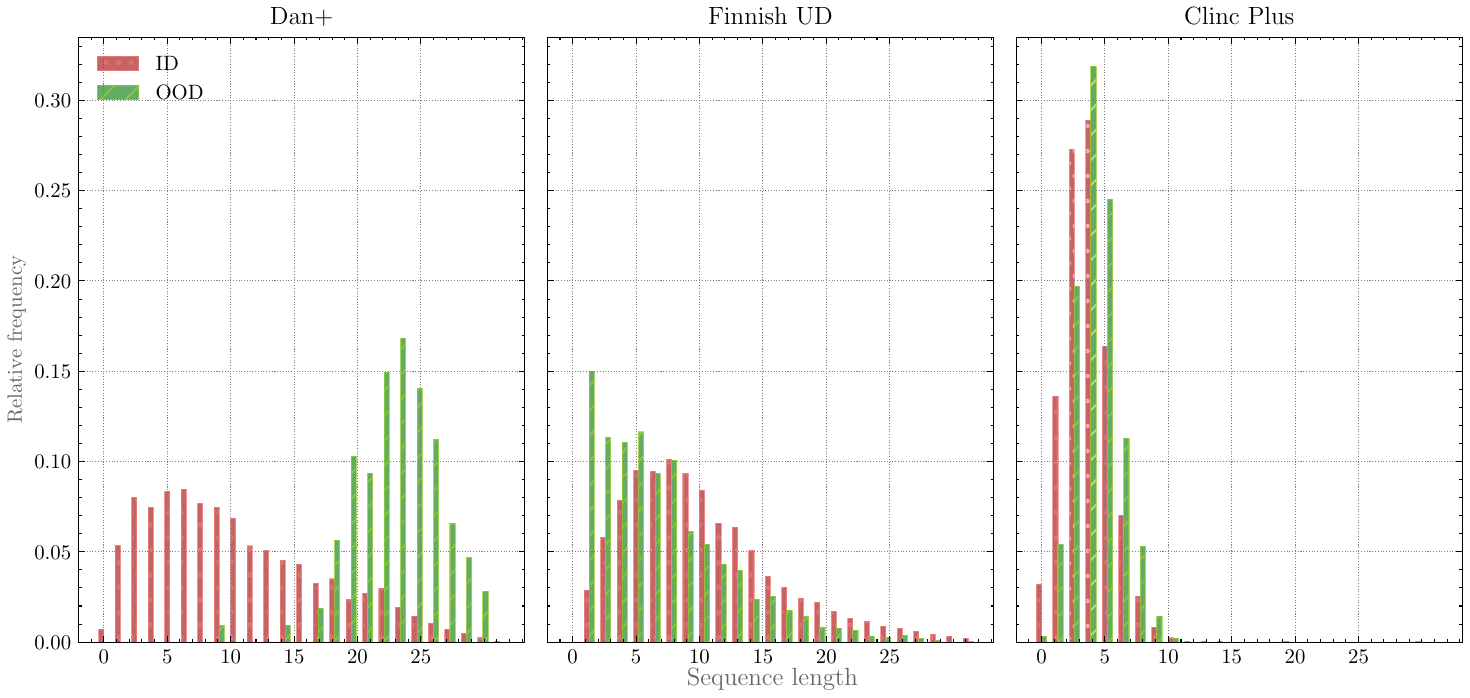}
    \caption[Comparison of sequence length distribution between the original training set and the OOD test set.]{Comparison of sequence length distribution between the original training set and the OOD test set. 
    For English, the distribution of lengths of voice assistant commands is quite similar, while the differences for Dan+ and Finnish UD are more pronounced.}
    \label{fig:ood-sequence-lengths}
\end{figure*}

\begin{figure*}[htb]
    \centering
    \includegraphics[width=0.985\textwidth]{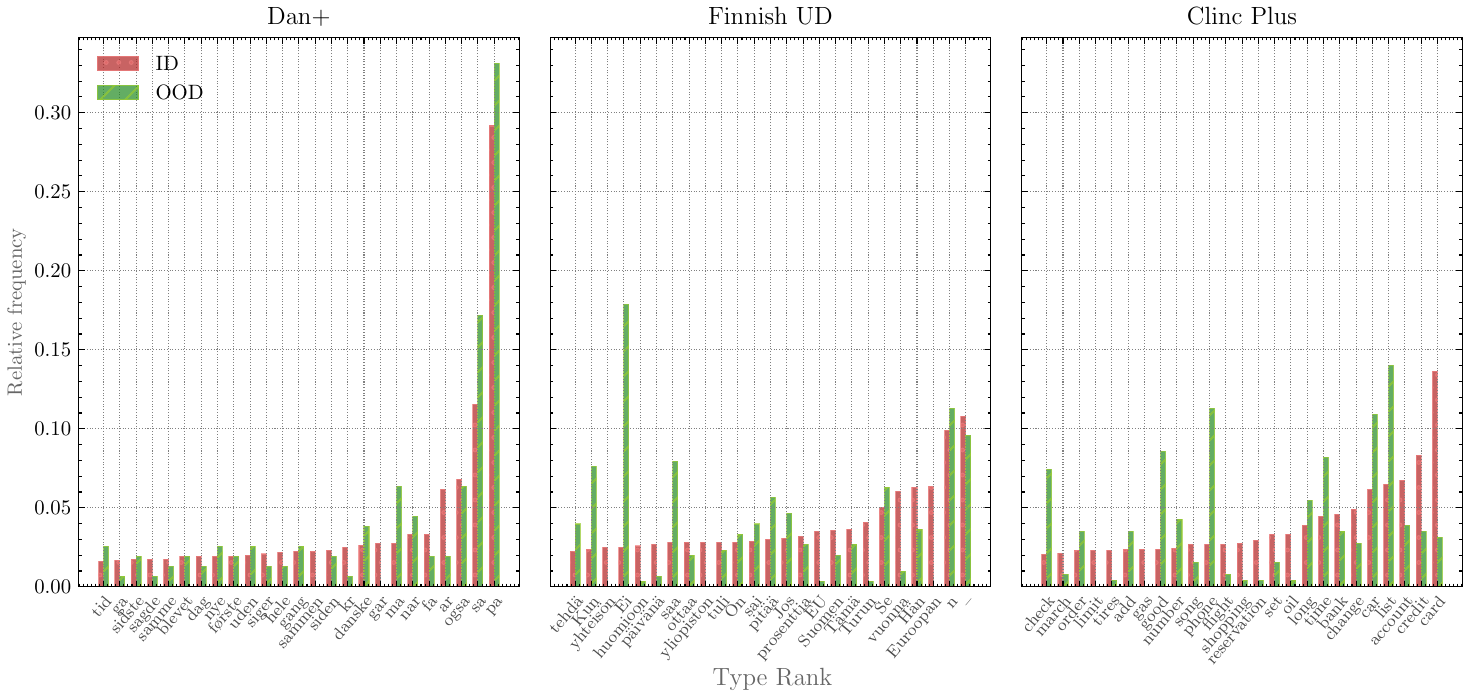}
    \caption[Comparison of the relative frequencies of the top 25 types in the original training set compared to the OOD test set.]{Comparison of the relative frequencies of the top 25 types in the original training set compared to the OOD test set. 
    Even among the most frequent and therefore usually common tokens, the plots show differences between the in-distribution train and out-of-distribution test set. 
    Numbers, stopwords and the most common punctuation were removed.}
    \label{fig:ood-type-freqs}
\end{figure*}

In this appendix section, we present additional evidence that the OOD\index{Out-of-distribution data} test splits shown in \cref{table:datasets} are sufficiently different from the training data---meaning, out-of-distribution---to enable our chosen methodology. 
To that end, we re-use similar ideas as described in \cref{app:exploring-predictive-uncertainty-training-set}, but with the opposite goal. 
In \cref{fig:ood-sequence-lengths}, we plot the distribution of sequence lengths of the training set compared with the OOD test set, with the same done for the most frequent $25$ types in \cref{fig:ood-type-freqs} and class labels in \cref{fig:ood-class-freqs}. 
Lastly, we again use a interpolated Kneser-Ney trigram language model to compute the perplexity of the training compared to the OOD test set in \cref{table:data-validation}. 
In all cases, OOD $n$-gram\index{$n$-gram} perplexities lie much over the training or sub-sampled data perplexities. 
Except for Finnish, they are also widely different from the test set perplexities. 
In that exceptional cases, an explanation could be given by the highly agglutinative nature of Finnish, increasing the sparsity of the language despite the subword tokenization.

\section{Additional Scatter Plots}\label{app:additional-scatters}

This section provides some additional scatter plots for the experiments in \cref{sec:dependence-training-data}. 
For all plots presented here as well as \cref{fig:scatter-plot-danplus-kendalls-tau-token}, some slight jitter sampled from $\mathcal{N}(0, 0.01)$ was added to x and y-coordinates to increase readability of overlapping points.

\paragraph{Clinc Plus.} 
In \cref{subfig:clinc-plus-scatter-auroc,subfig:clinc-plus-scatter-aupr}, we can see that the variational Bert model actually \emph{degrades} in performance as the more training data is added, both on a task and uncertainty dimensions, while other models stay relatively constant. The same trend can be detected using the sequence-level Kendall's $\tau$ for Clinc Plus. We suspect that the smallest training size of $10k$ examples does already provide enough data for models to converge to similar solutions even after adding more data, and that the variational Bert alone might be prone to overfitting in this case. 

\paragraph{Dan+.} 
Results for the Danish dataset are shown in \cref{subfig:danplus-scatter-auroc,subfig:danplus-scatter-aupr}. It is apparent that LSTM-based models stay mostly constant in their predictive performance, with the largest gains observed by the LSTM ensemble. 
We can also observe the DDU and variational Bert to increase both in task performance and uncertainty quality with increasing training data. 
Interestingly, we can see for the SNGP Bert that uncertainty estimates become more indicative of OOD with more training samples, but mostly only using predictive entropy and the maximum probability score. This might indicate that in these cases, the model actually achieves the desired distance-awareness posed by \citet{liu2022simple}. In \cref{subfig:dan+-scatter-kendalls-tau-seq}, we can see a similar behavior of the SNGP-Bert and its metrics w.r.t.\@ to the sequence-level correlation. Also, we see that the other Bert models and LSTM-Ensemble actually loose in uncertainty quality as more data is added.

\paragraph{Finnish UD.}
 In \cref{subfig:finnish-ud-scatter-auroc,subfig:finnish-ud-scatter-aupr}, we observe that the AUROC and AUPR scores of different models and metrics stay largely constant across dataset sizes, which could be explained with the larger amount of training data supplied compared to Dan+. 
  On the token-level correlation between uncertainty and loss in \cref{fig:finnish-ud-scatter-plot-kendalls-tau-token}, we see the DDU Bert profiting most from more data. 
  On a sequence-level, as depicted in \cref{subfig:finnish-ud-scatter-kendalls-tau-seq}, the correlation appears mostly static across training set sizes, with only small gaps between in-distribution and out-of-distribution data.\\

Overall, it seems that the range of dataset sizes for Dan+ show the most critical differences between models, while for the dataset sizes used for Finnish UD and Clinc Plus, enough data seems to be supplied for changes to be more miniscule. 
This result is particularly relevant for low-resource setting\index{Low-resource language}, although the dependency on the task can not be disentangled from these results. 

\begin{figure*}[htb]
    \centering
    \centering
    \begin{subfigure}{\textwidth}
        \centering
        \includegraphics[width=\columnwidth]{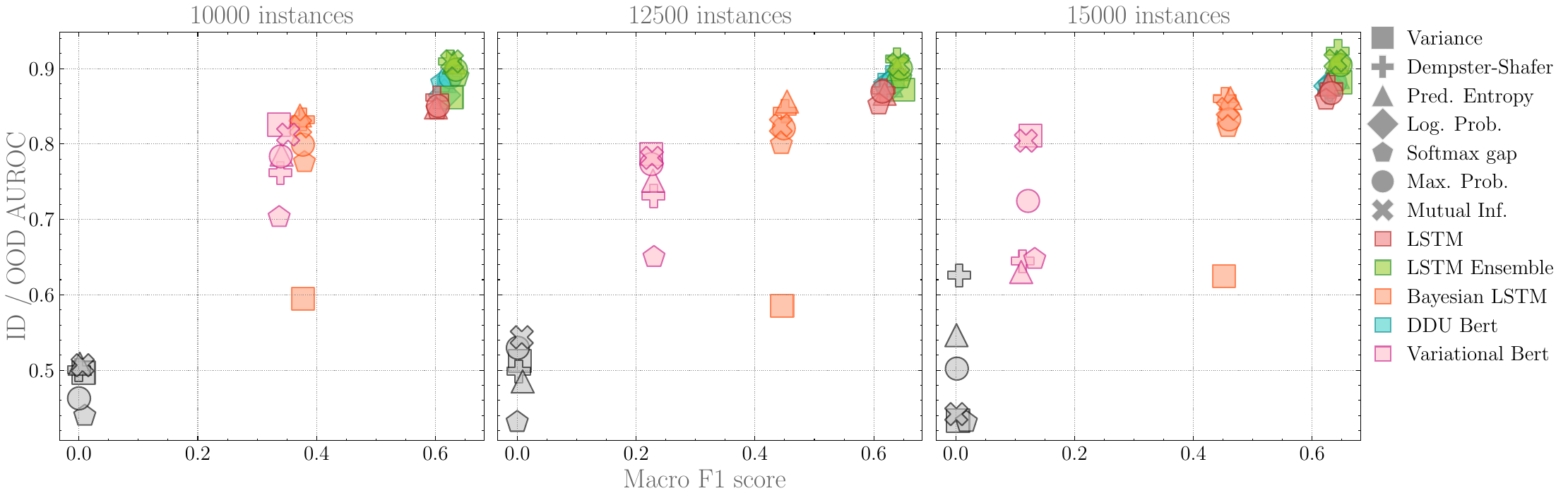}
        \subcaption{Scatter plot for the Clinc Plus dataset.}
        \label{subfig:clinc-plus-scatter-auroc}
    \end{subfigure}
    \begin{subfigure}{\textwidth}
        \centering
        \includegraphics[width=\columnwidth]{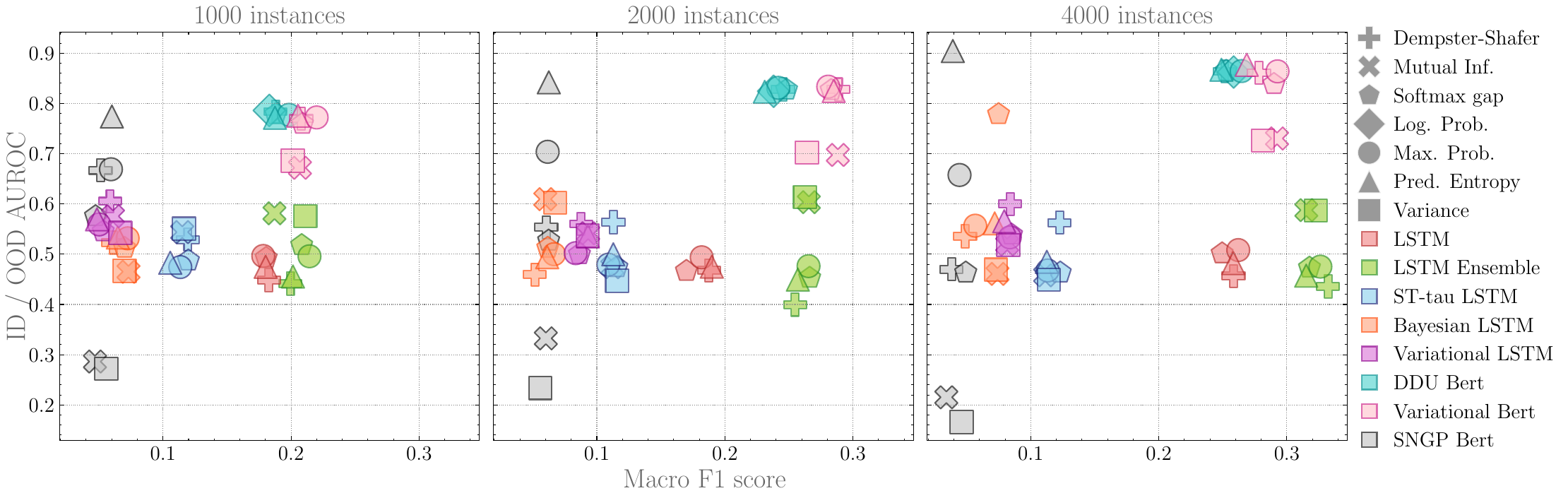}
        \subcaption{Scatter plot for the Dan+ dataset.}
        \label{subfig:danplus-scatter-auroc}
    \end{subfigure}
    \begin{subfigure}{\textwidth}
        \centering
        \includegraphics[width=\columnwidth]{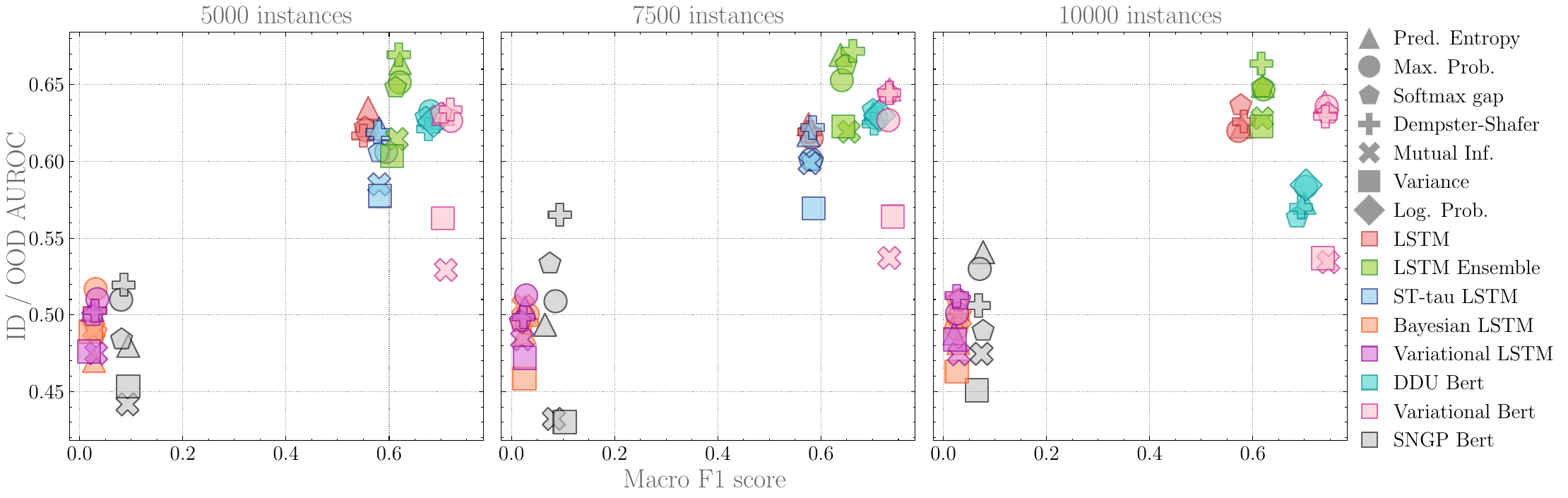}
        \subcaption{Scatter plot for the Finnish UD dataset.}
        \label{subfig:finnish-ud-scatter-auroc}
    \end{subfigure}
    \caption[Scatter plots showing the difference between model performance and the quality of uncertainty estimates.]{
        Scatter plots showing the difference between model performance (measured by macro $F_1$) and the quality of uncertainty estimates using AUROC. Shown are different models and uncertainty metrics and several training set sizes on the used datasets.}\label{fig:scatter-plot-auroc}
\end{figure*}

\begin{figure*}[htb]
    \centering
    \begin{subfigure}{\textwidth}
        \centering
        \includegraphics[width=\columnwidth]{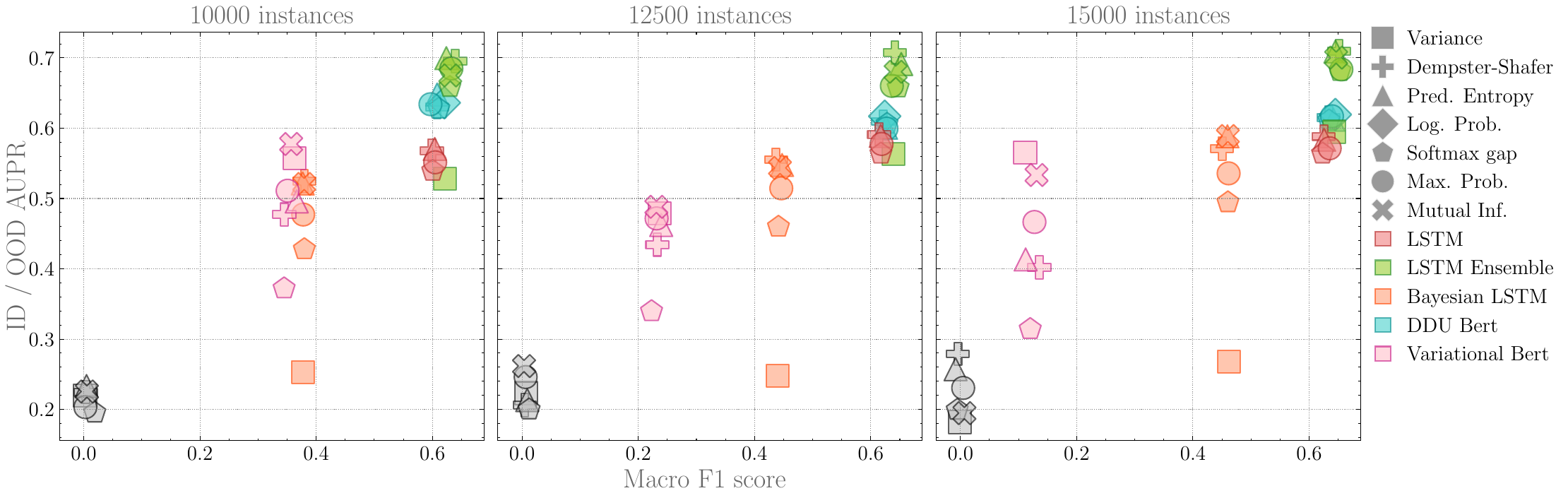}
        \subcaption{Scatter plot for the Clinc Plus dataset.}
        \label{subfig:clinc-plus-scatter-aupr}
    \end{subfigure}
    \begin{subfigure}{\textwidth}
        \centering
        \includegraphics[width=\columnwidth]{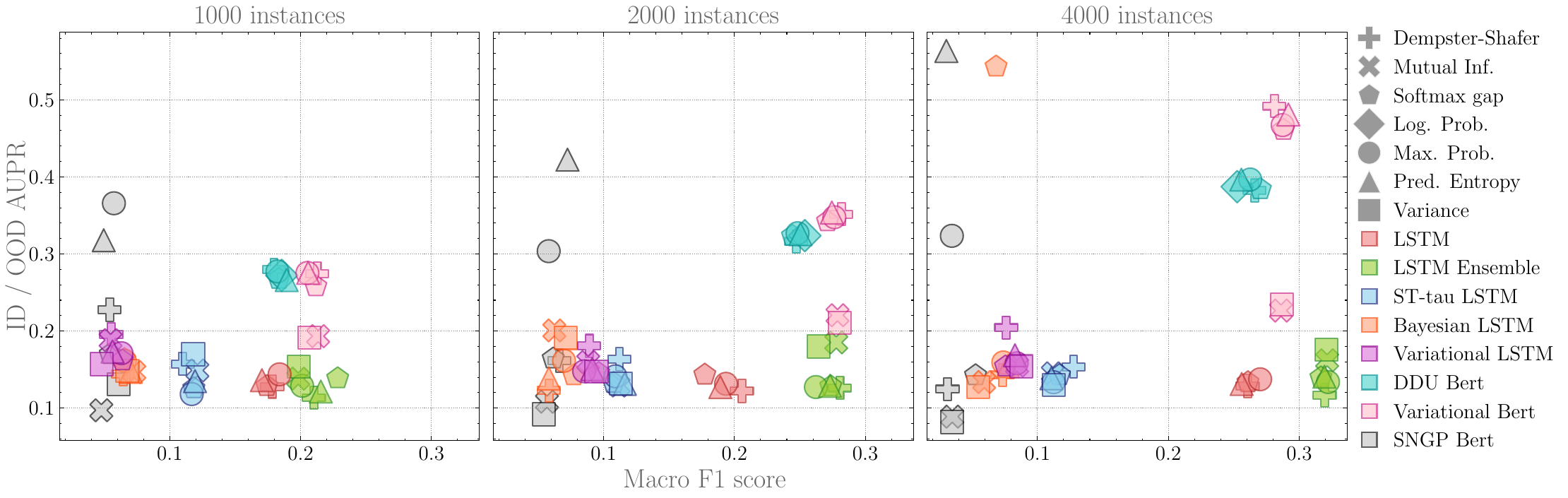}
        \subcaption{Scatter plot for the Dan+ dataset.}
        \label{subfig:danplus-scatter-aupr}
    \end{subfigure}
    \begin{subfigure}{\textwidth}
        \centering
        \includegraphics[width=\columnwidth]{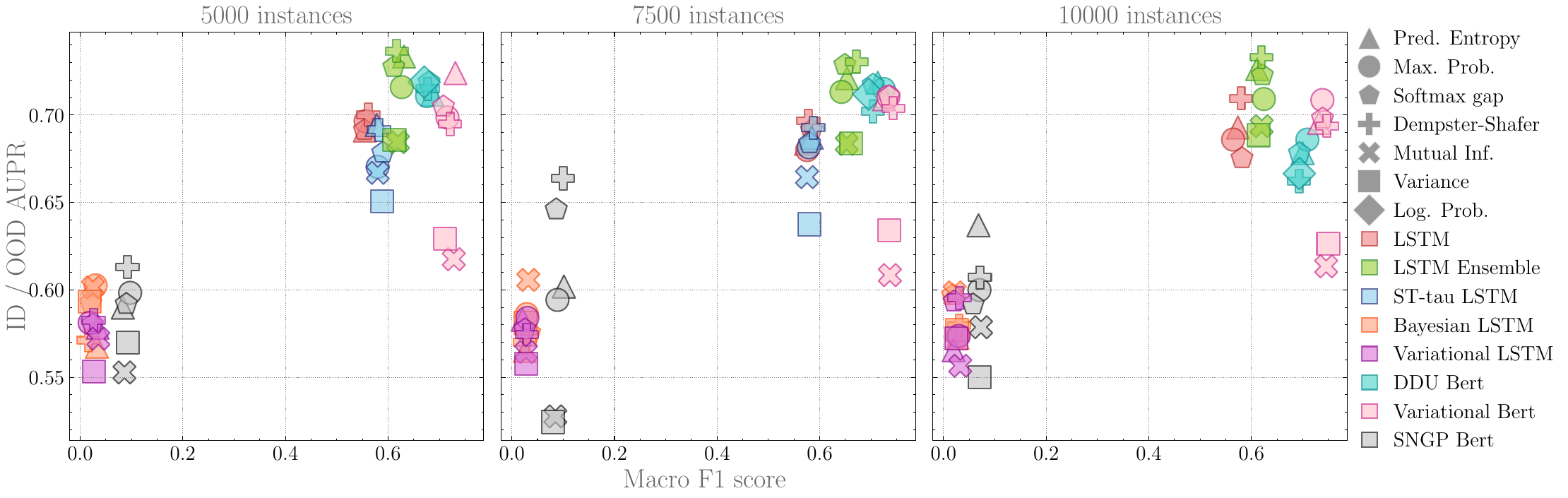}
        \subcaption{Scatter plot for the Finnish UD dataset.}
        \label{subfig:finnish-ud-scatter-aupr}
    \end{subfigure}
    \caption[Scatters plot showing the difference between model performance and the quality of uncertainty estimates.]{
        Scatters plot showing the difference between model performance (measured by macro $F_1$) and the quality of uncertainty estimates using AUPR. 
        Shown are different models and uncertainty metrics and several training set sizes on the used datasets.}\label{fig:scatter-plot-aupr}
\end{figure*}

\begin{figure*}[htb]
    \centering
    \centering
    \includegraphics[width=\columnwidth]{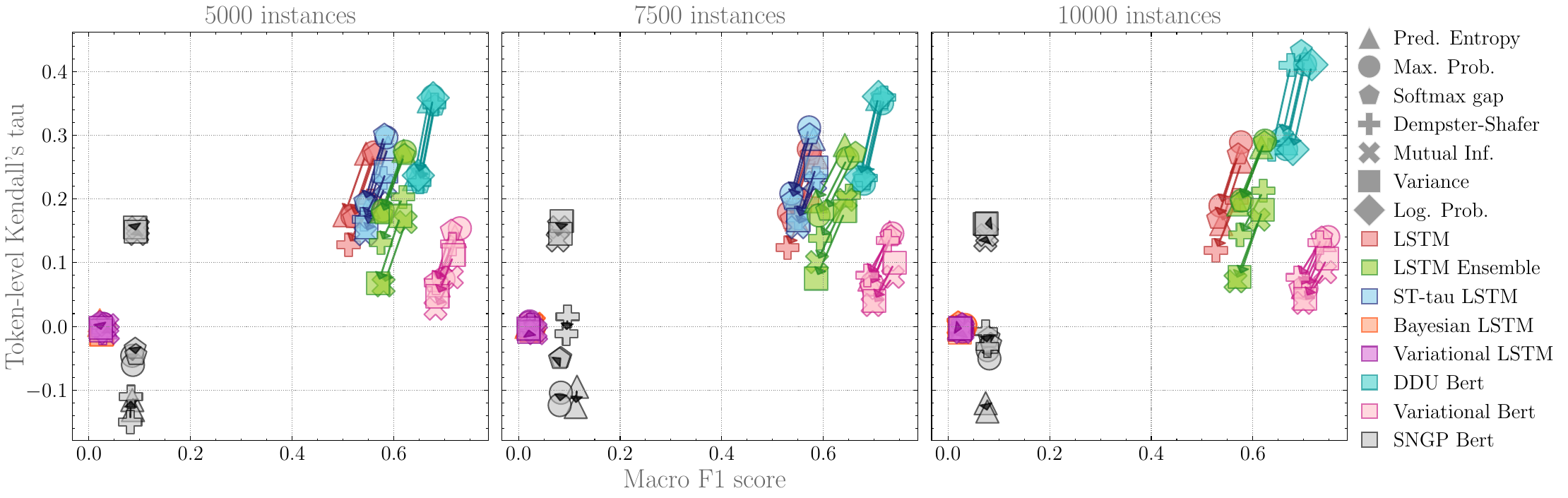}
    \caption[Scatter plot showing the difference between model performance and the quality of uncertainty estimates on a token-level. ]{
    Scatter plot showing the difference between model performance (measured by macro $F_1$) and the quality of uncertainty estimates on a token-level (measured by Kendall's $\tau$). 
    Results are shown for different models and uncertainty metrics and several training set sizes on the Finnish UD dataset. Arrows indicate changes between the in-distribution and out-of-distribution test set.}\label{fig:finnish-ud-scatter-plot-kendalls-tau-token}
\end{figure*}

\begin{figure*}[htb]
    \centering
    \begin{subfigure}{\textwidth}
        \centering
        \includegraphics[width=\columnwidth]{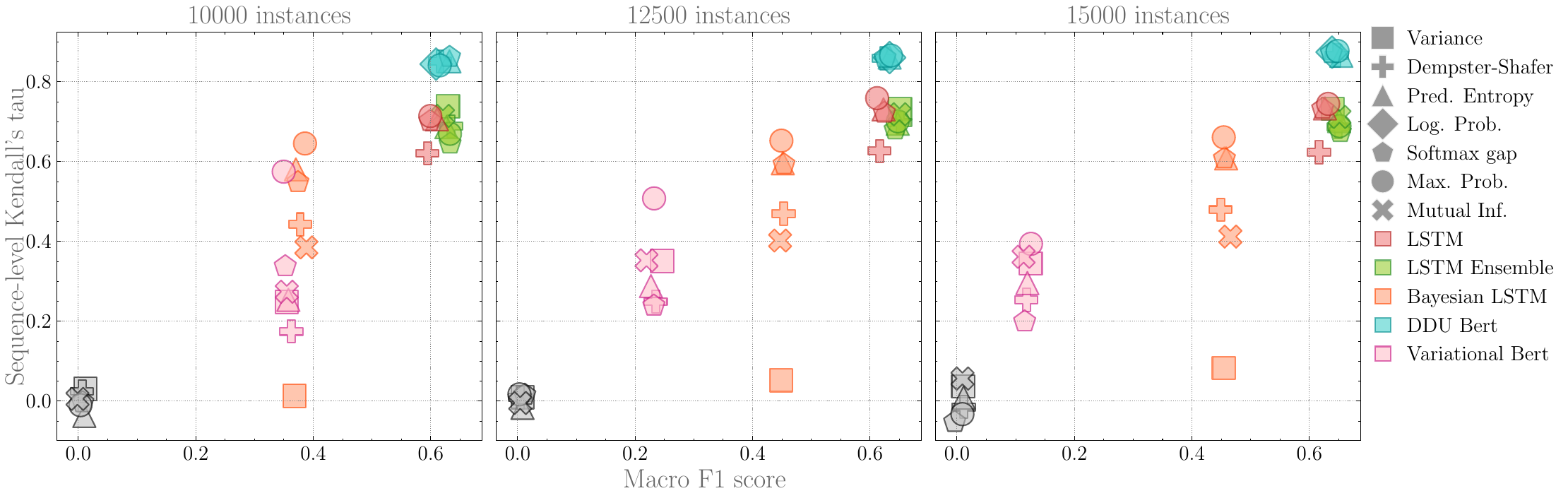}
        \subcaption{Scatter plot for the Clinc Plus dataset.}
        \label{subfig:clinc-plus-scatter-kendalls-tau-seq}
    \end{subfigure}\\
    \begin{subfigure}{\textwidth}
        \centering
        \includegraphics[width=\columnwidth]{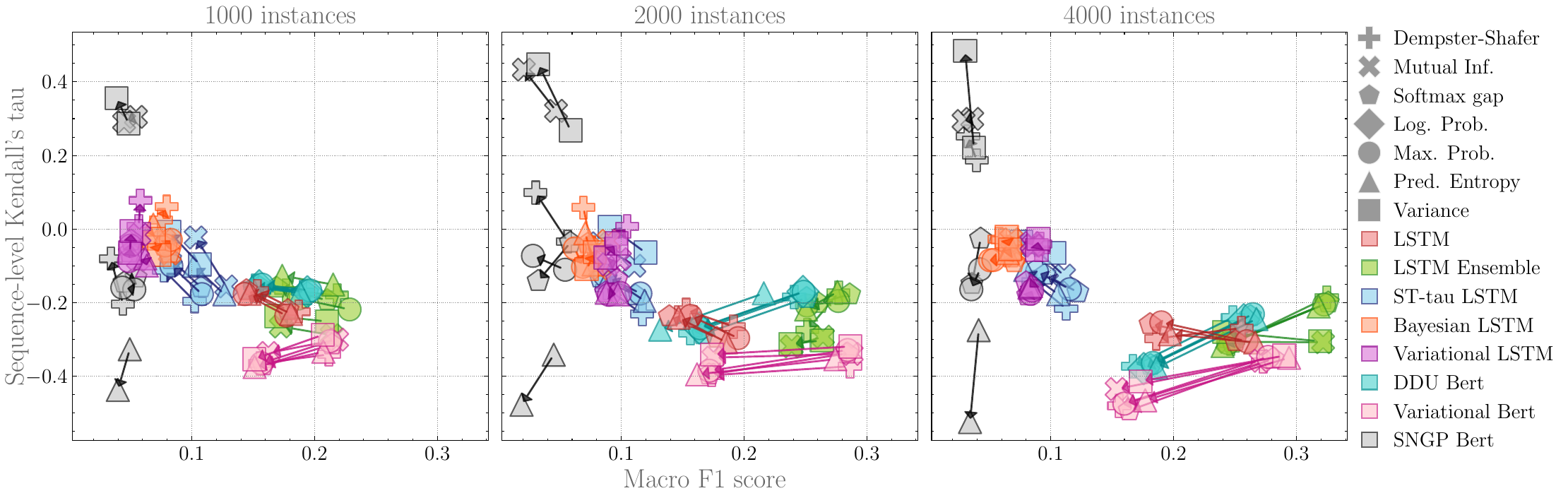}
        \subcaption{Scatter plot for the Dan+ dataset.}
        \label{subfig:dan+-scatter-kendalls-tau-seq}
    \end{subfigure}\\
    \begin{subfigure}{\textwidth}
        \centering
        \includegraphics[width=\columnwidth]{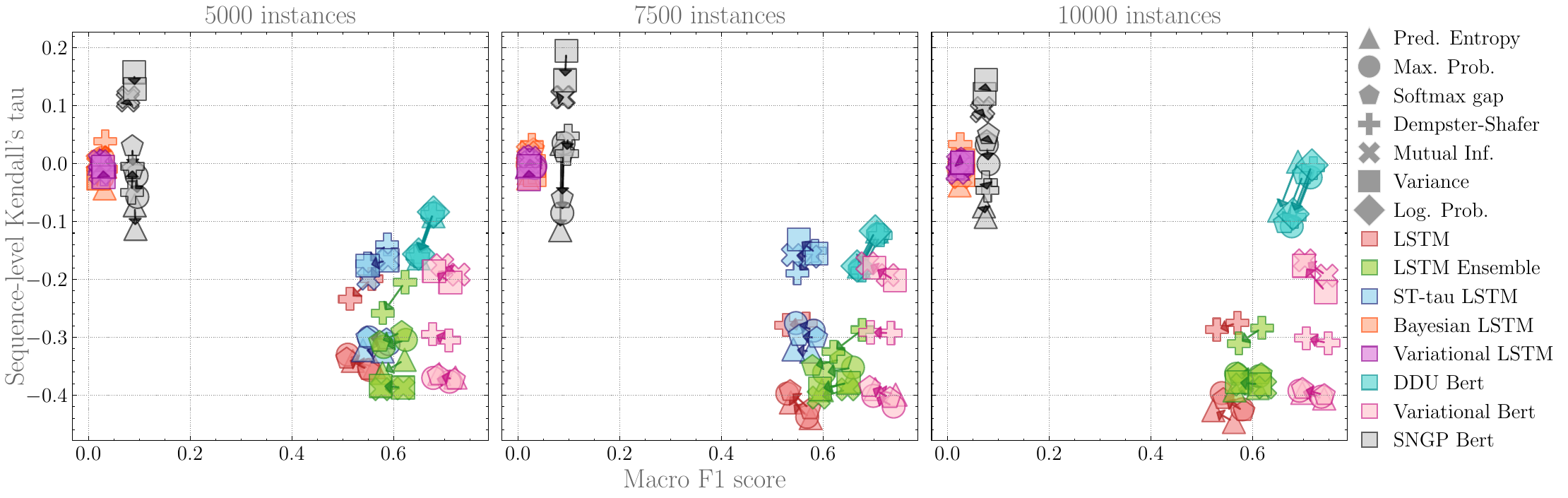}
        \subcaption{Scatter plot for the Finnish UD dataset.}
        \label{subfig:finnish-ud-scatter-kendalls-tau-seq}
    \end{subfigure}
    \caption[Scatter plot showing the difference between model performance and the quality of uncertainty estimates on a sequence-level.]{
    Scatter plot showing the difference between model performance (measured by macro $F_1$) and the quality of uncertainty estimates on a sequence-level (measured by Kendall's $\tau$). 
    Results are shown for different models and uncertainty metrics and several training set sizes on the Finnish UD and Clinc Plus dataset. Arrows indicate changes between the in-distribution and out-of-distribution test set.}\label{fig:scatter-plot-kendalls-tau-seq}
\end{figure*}

\section{Qualitative Analysis}\label{app:qualitative-analysis}

This section provides more examples for the qualitative analysis in \cref{sec:qualitative-analysis}.

\paragraph{Dan+.} 
We show more examples of the predictive entropies on samples from the Dan+ dataset in \cref{fig:qualitative-analysis-extra-danish}, where uncertainty values where jointly normalized by subtracting the mean and dividing by the standard deviation over all models and time steps. 
We can make the following observations: 
Firstly, uncertainty seems to decrease on punctuation marks such as commas and full-stops. 
Secondly, uncertainty appears higher on sub-word tokens and some named entities. 
Thirdly, DDU Bert\index{Transformer!DDU}\index{Bert} and the LSTM\index{Long-short term memory network} ensemble\index{Ensembling} produce the highest uncertainty values, which are also two of the best performing models on the task. 

\paragraph{Finnish UD.} 
Here, we give more examples of the analysis on the Finnish UD dataset in \cref{fig:qualitative-analysis-extra-finnish}.
 First of all, we see that the variational LSTM\index{Long-short term memory network!Variational} and SNGP Bert\index{Transformer!SNGP}\index{Bert} seem to produce almost constant uncertainty scores, which can be explained by their suboptimal performance in task, as shown by their results in \cref{table:predictive-uncertainty-results}. 
But even for the models that perform better, such as the variational Bert\index{Transformer!Variational}\index{Bert} and the LSTM\index{Long-short term memory network} ensemble\index{Ensembling}, the decomposition of predictive entropy\index{Entropy!Predictive} into aleatoric\index{Uncertainty!Aleatoric} and epistemic uncertainty\index{Uncertainty!Epistemic} reveals that model uncertainty generally remains low, and is overshadowed to a larger extent by the aleatoric uncertainty. 
We can observe that similar to Danish, uncertainty seems to be low on punctuation marks and high on subword tokens. 
Furthermore, aleatoric uncertainty seems to be higher on nouns and pronouns. 
This could be due to the sheer number of possible nouns and pronouns that could fill such a gap in a sentence.

\begin{figure}[htb]
    \centering
    \begin{subfigure}[t]{0.99\columnwidth}
        \centering
        \includegraphics[width=0.8\columnwidth]{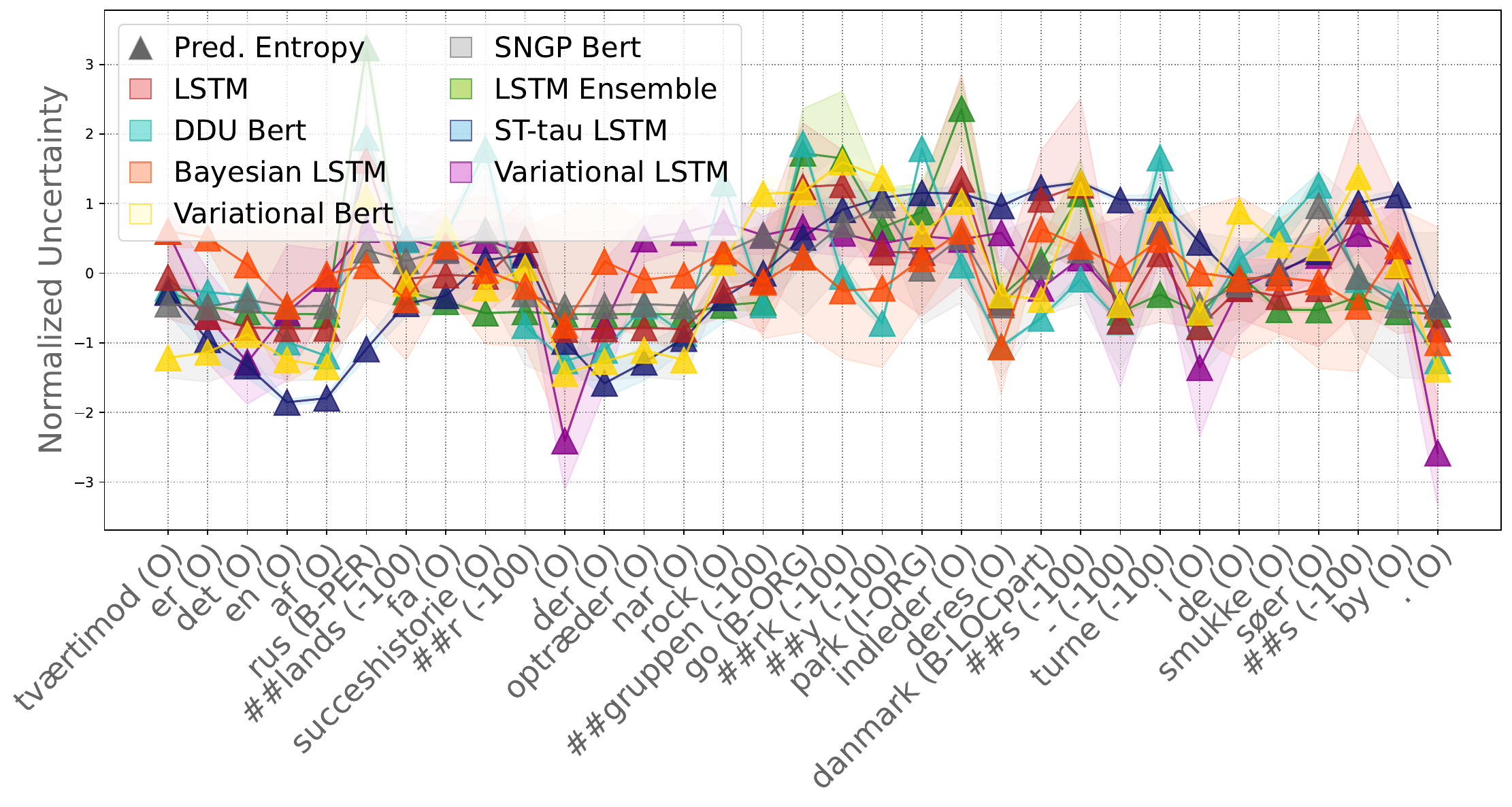}
        \subcaption{Predictive entropy over the sentence \emph{``On the contrary, it is one of Russia's few success stories that performs when the rock group Gorky Park begins their Danish tour in the city of the beautiful lakes''}.}
    \end{subfigure}
    \par\bigskip 
    \centering
    \begin{subfigure}[t]{0.99\columnwidth}
        \centering
        \includegraphics[width=0.8\columnwidth]{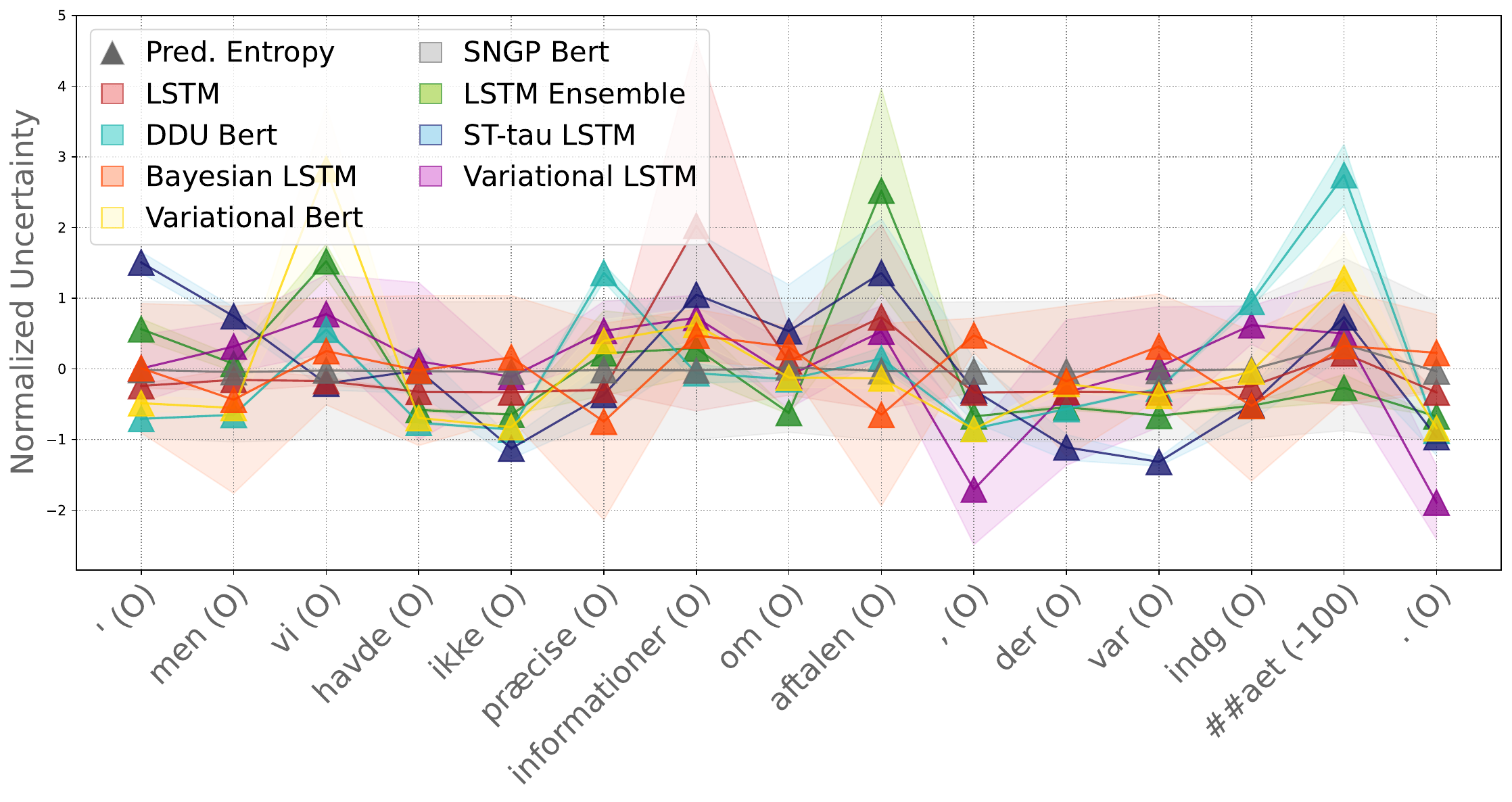}
        \subcaption{Predictive entropy over the sentence \emph{``However, we did not have precise information about what was agreed upon''}.}
    \end{subfigure}
    \par\bigskip 
    \begin{subfigure}[t]{0.99\columnwidth}
        \centering
        \includegraphics[width=0.8\columnwidth]{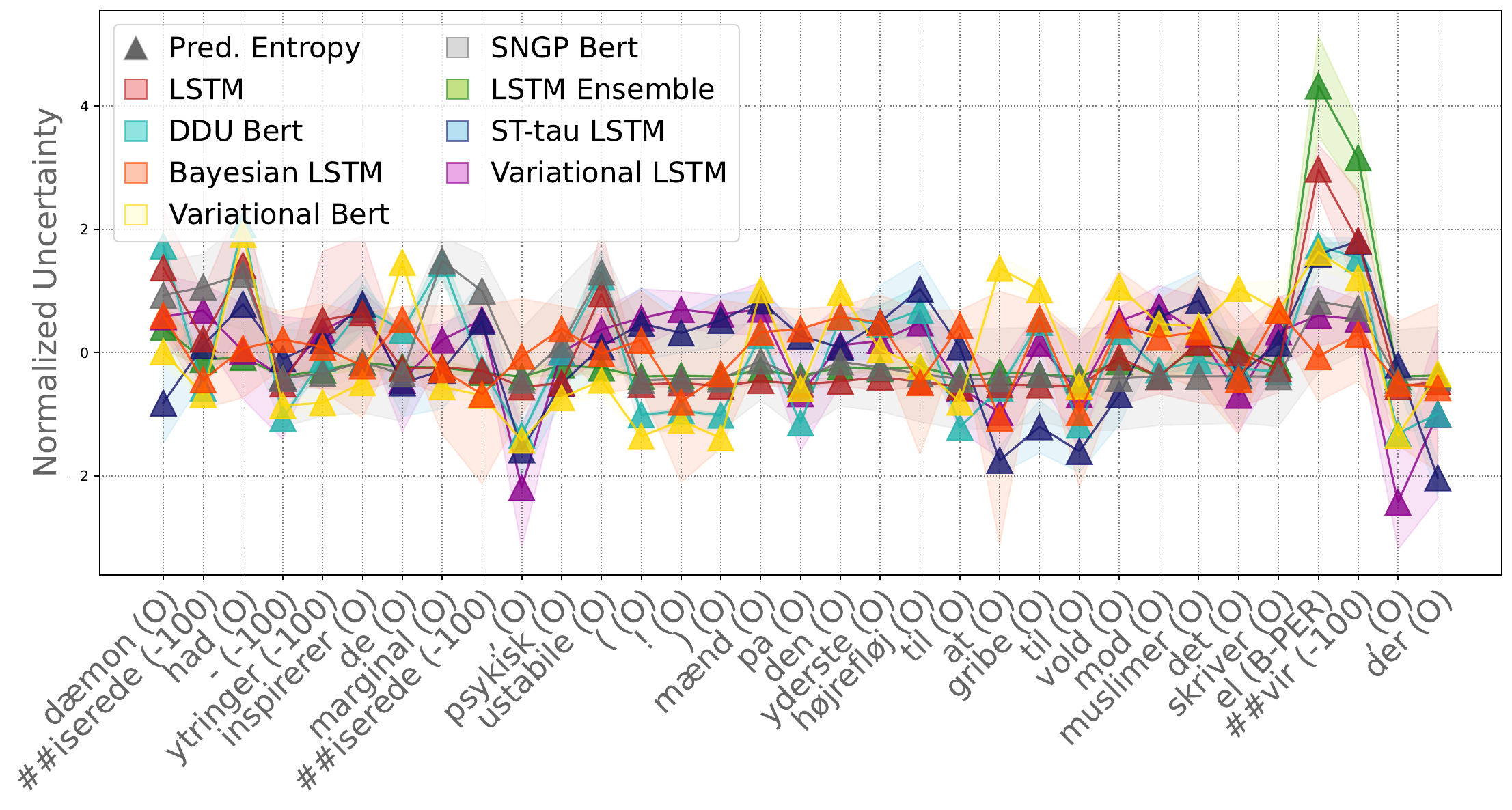}
        \subcaption{Predictive entropy over the sentence \emph{``Demonizing hate speech inspires the marginalized, PSYCHOLOGY UNSTABLE (!) Men on the far right to resort to violence against Muslims. This writes Elvir, who...''}.}
    \end{subfigure}
    \caption[Additional examples for uncertainty estimates on single sequences on the Dan+ dataset.]{
    Further examples for uncertainty estimates on single sequences. Taken from the Dan+ dataset.}\label{fig:qualitative-analysis-extra-danish}
\end{figure}

\begin{figure}[htb]
    \centering
    \begin{subfigure}[t]{0.99\columnwidth}
        \centering
        \includegraphics[width=0.8\columnwidth]{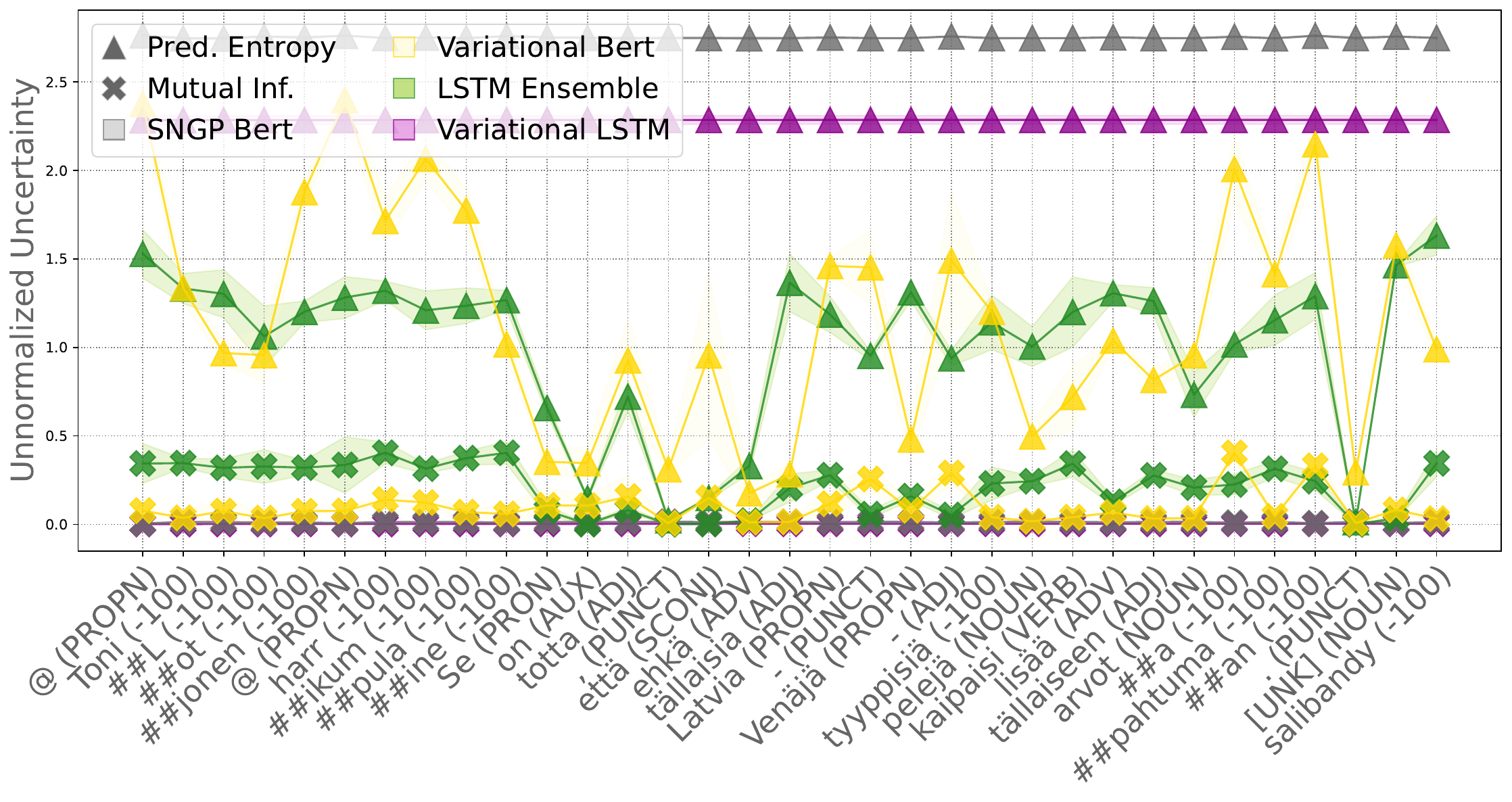}
        \subcaption{Predictive entropy over the sentence \emph{``@ToniLotjonen @harrikumpulaine It is true that I’d maybe like to see more of such Latvia–Russia type games in these kinds of major sports events. \#floorball''}.}
    \end{subfigure}
    \par\bigskip 
    \centering
    \begin{subfigure}[t]{0.99\columnwidth}
        \centering
        \includegraphics[width=0.8\columnwidth]{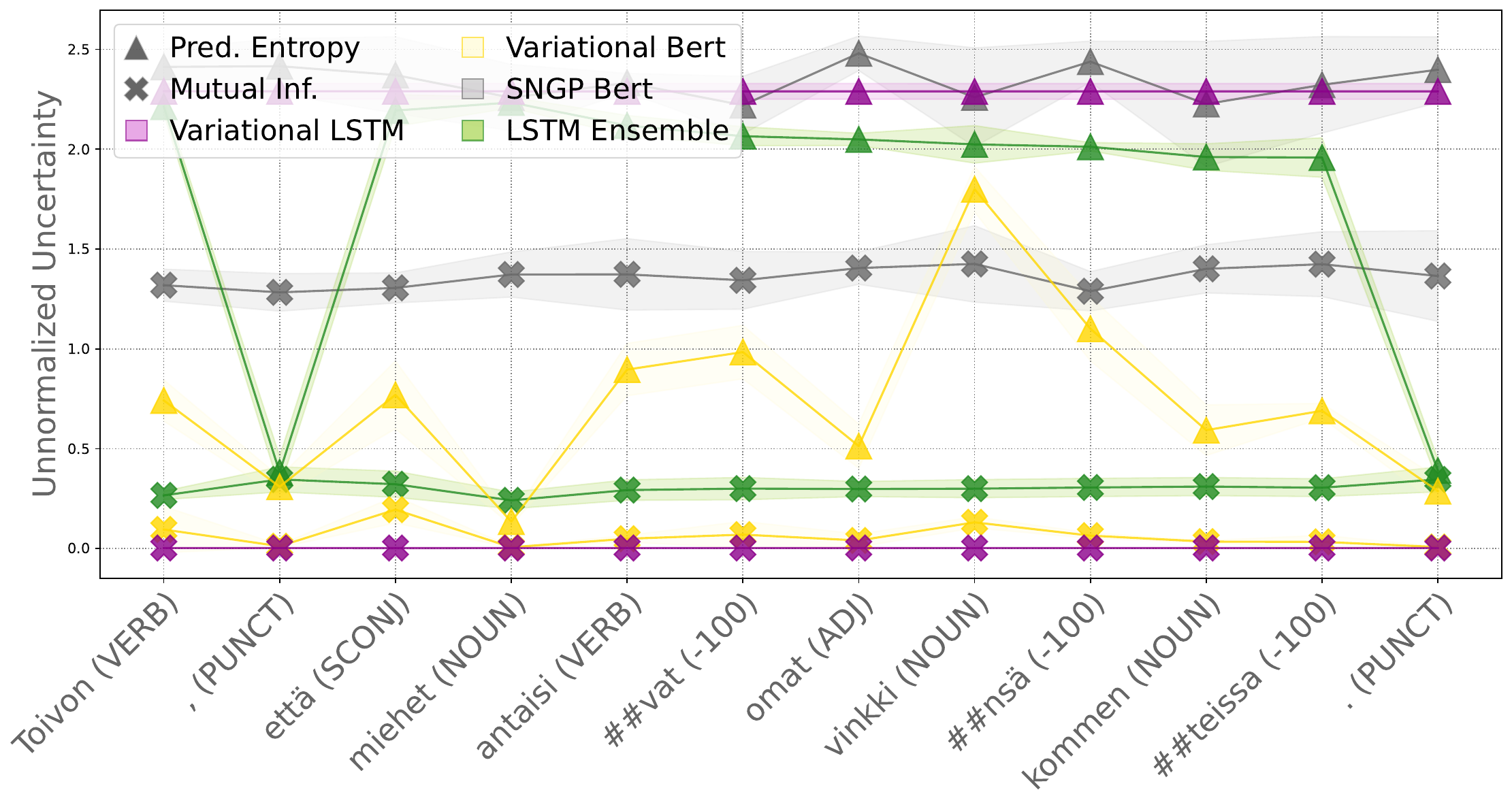}
        \subcaption{Predictive entropy over the sentence \emph{``I hope that the procedures done on the  person in question stop and he gives his body (and mind) time to recover from that poisoning!''}.}
    \end{subfigure}
    \par\bigskip 
    \begin{subfigure}[t]{0.99\columnwidth}
        \centering
        \includegraphics[width=0.8\columnwidth]{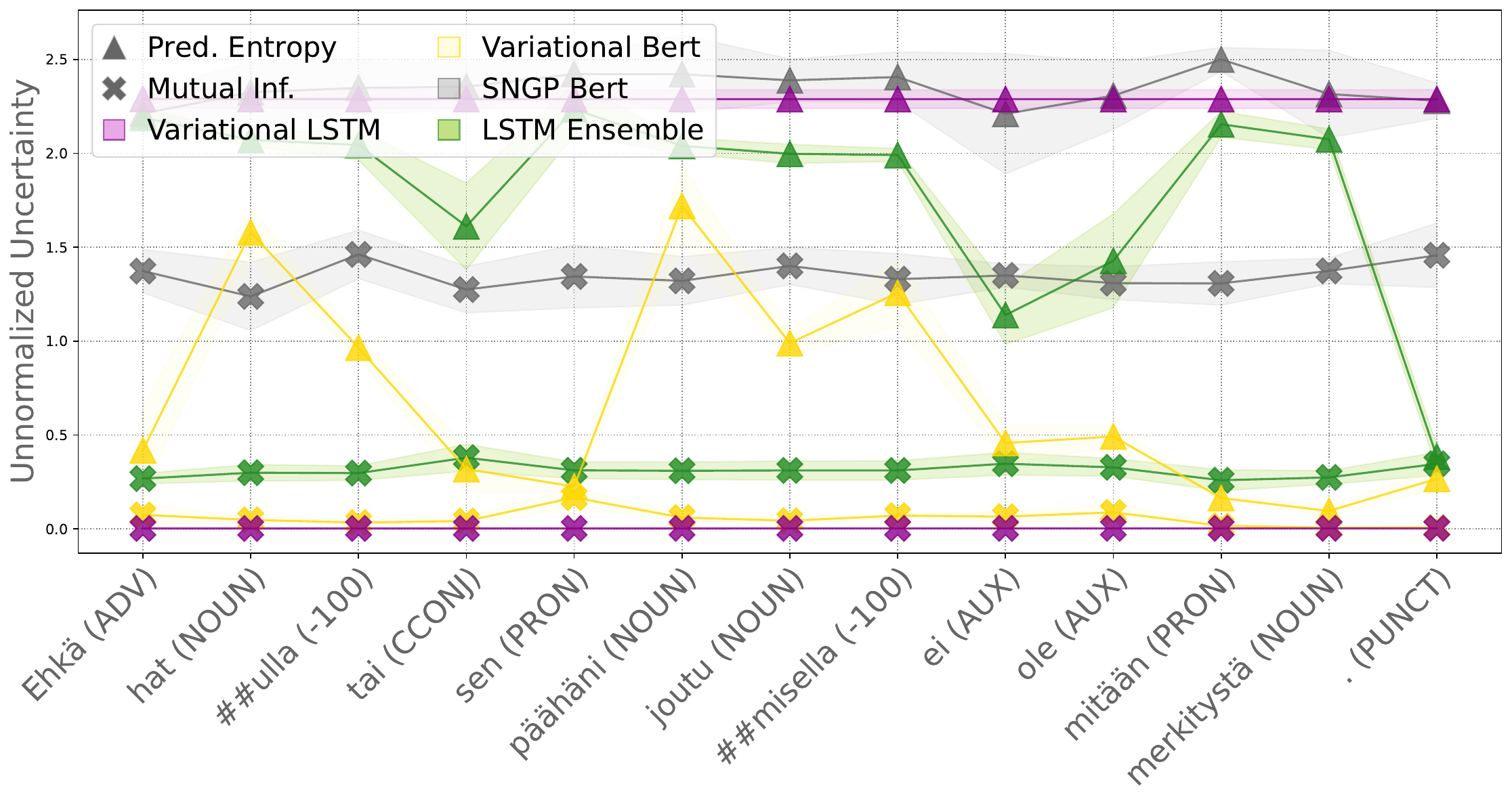}
        \subcaption{Predictive entropy over the sentence \emph{``Maybe the hat or how it got on my head doesn’t matter''}.}
    \end{subfigure}
    \caption[Additional examples for uncertainty estimates on single sequences on the Finnish UD dataset.]{
        Further examples for uncertainty estimates on single sequences. Taken from the Finnish UD dataset.}\label{fig:qualitative-analysis-extra-finnish}
\end{figure}

\section{Additional Coverage Results}\label{app:coverage-experiments}

\begin{figure}[htb!]
    \centering
    \begin{subfigure}[t]{0.48\textwidth}
        \centering
        \includegraphics[width=0.9975\textwidth]{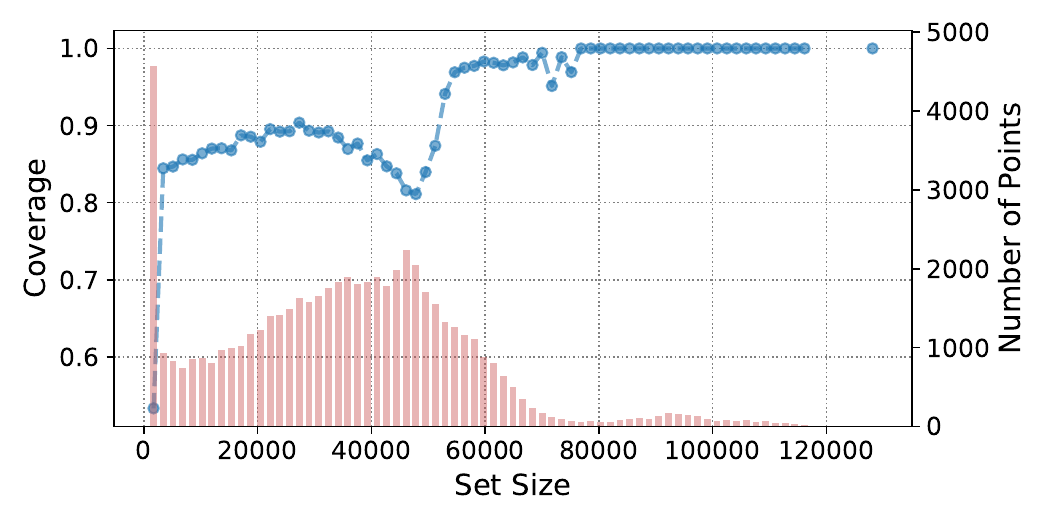}
        \caption{\raggedleft Conditional coverage of M2M100$_\text{(1.2B)}$ for de $\rightarrow$ en.}
        \label{subfig:stratified-coverage-deen-m2m00large}
    \end{subfigure}
    \hfill
    \begin{subfigure}[t]{0.48\textwidth}
        \centering
        \includegraphics[width=0.9975\textwidth]{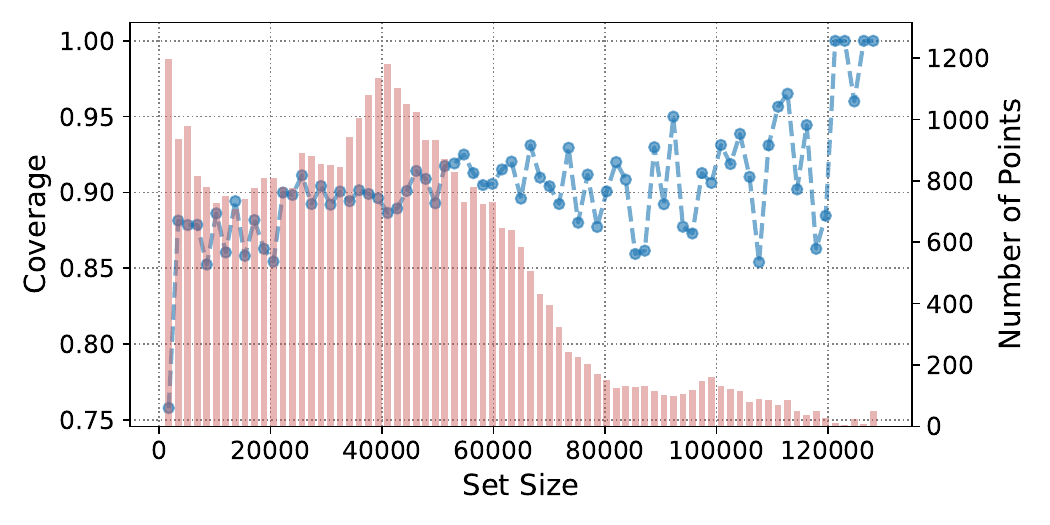}
        \caption{\raggedleft Conditional coverage of M2M100$_\text{(1.2B)}$ for ja $\rightarrow$ en.}
        \label{subfig:stratified-coverage-jaen-m2m00large}
    \end{subfigure}
    \begin{subfigure}[t]{0.48\textwidth}
        \centering
        \includegraphics[width=0.9975\textwidth]{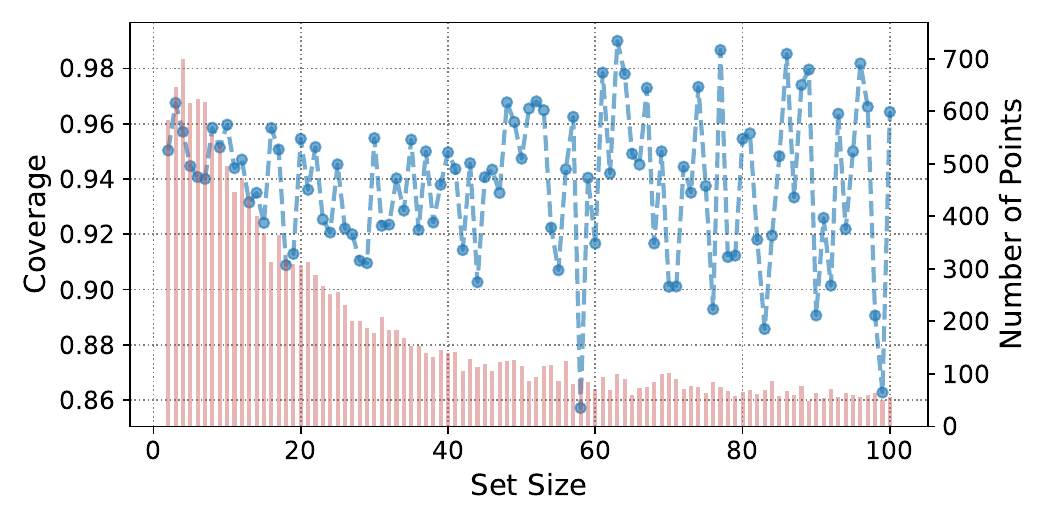}
        \caption{\raggedleft Conditional coverage for OPT$_\text{(350M)}$ on Language Modelling.}
        \label{subfig:tratified-coverage-opt}
    \end{subfigure}
    \hfill
    \begin{subfigure}[t]{0.48\textwidth}
        \centering
        \includegraphics[width=0.9975\textwidth]{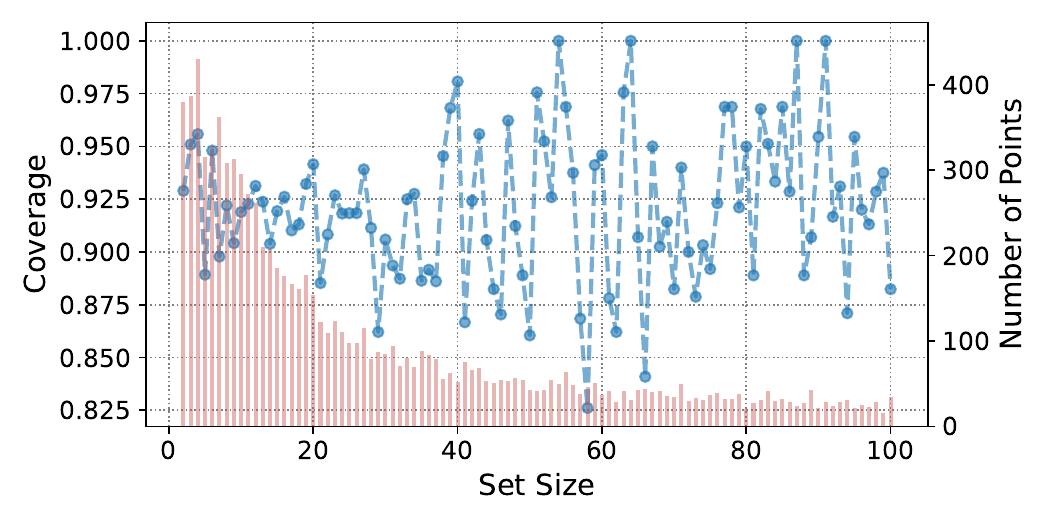}
        \caption{\raggedleft Conditional coverage for OPT$_\text{(1.3B)}$ on Language Modelling.}
        \label{subfig:stratified-coverage-opt-large}
    \end{subfigure}
    \caption[Additional conditional coverage plots for the MT and LM dataset using our non-exchangeable conformal natural generation.]{
        Additional conditional coverage plots for the MT and LM dataset using our non-exchangeable conformal prediction method, aggregating predictions by prediction set size. The blue curve shows the conditional coverage per bin, whereas red bars show the number of predictions per bin. For \cref{subfig:tratified-coverage-opt,subfig:stratified-coverage-opt-large}, we zoom in on the prediction set sizes from $1$ and $100$.}\label{fig:additional-conditional-coverage}
\end{figure}

We show additional plots for the experiments in \cref{sec:retrieval-quality}, illustrating the coverage\index{Coverage} per set size-bins in \cref{fig:additional-conditional-coverage}. 
We can see the counterparts for \cref{fig:conditional-coverage} using the larger M2M100$_\text{(1.2B)}$ model in \cref{subfig:stratified-coverage-deen-m2m00large,subfig:stratified-coverage-jaen-m2m00large}:
Instead of leveling off like for the smaller model, most prediction set sizes are either in a very small range or in a size of a few ten thousand.
In \cref{subfig:tratified-coverage-opt,subfig:stratified-coverage-opt-large}, we show similar plots for the two different OPT model sizes. 
Since in both cases, most prediction set\index{Prediction set} sizes are rather small, we zoom in on the the sizes from $1$ to $100$. Here, we can observe a similar behavior to the smaller M2M100$_\text{(400m)}$, gradually leveling off. 
We do not show similar plots for other distance metrics as they show similar trends.

\section{Ablating Neighborhood Size and Desired Coverage}\label{app:ablations}

In this section, we present experiments surrounding the two most pivotal parameters of our method in \cref{sec:nonex-nlg-method}: 
The desired confidence level $\alpha$, as well as the number of neighbors.

\begin{minipage}[b]{0.45\textwidth}
    \hspace{-0.5cm}
    \centering
    \begin{table}[H]
    \centering
    \resizebox{0.98\textwidth}{!}{
    \renewcommand{\arraystretch}{1.4}
        \begin{tabular}{@{}llrrrr@{}}
        \toprule[0.05cm]
         & $\alpha$ & \% \textsc{Cov.} & $\varnothing$ \textsc{Width} $\downarrow$ &  \textsc{Scc} $\uparrow$ & \textsc{Ecg} $\downarrow$ \\
         \midrule
        \multirow{9}{*}{\rotatebox[origin=c]{90}{M2M100$_\text{(400M)}$ / de $ \rightarrow $ en}} & $.1$ & $.9442$ & $.31$ & $.8702$ & $.0011$ \\
        & $.2$ & $.8767$ & $.18$ & $.7906$ & $8.63 \times 10^{-5}$ \\
        & $.3$ & $.7963$ & $.12$ & $.0000$ & $.0016$ \\
        & $.4$ & $.7058$ & $.09$ & $.1393$ & $.0082$ \\
        & $.5$ & $.6081$ & $.07$ & $ .2836$ & $.0055$ \\
        & $.6$ & $.5017$ & $.06$ & $.1393$ & $.0082$ \\
        & $.7$ & $.3896$ & $.05$ & $.0000$ & $.0091$ \\
        & $.8$ & $.2800$ & $.05$ & $.0000$ & $.0090$ \\
        & $.9$ & $.1762$ & $.04$ & $.0000$ & $.0071$ \\
        \midrule
        \multirow{9}{*}{\rotatebox[origin=c]{90}{M2M100$_\text{(400M)}$ / ja $ \rightarrow $ en}} & $.1$ & $.7453$ & $.15$ & $.3080$ & $.1511$ \\
        & $.2$ & $.5579$ & $.07$ & $.2728$ & $.2446$ \\
        & $.3$ & $.4277$ & $.04$ & $.2770$ & $.2779$ \\
        & $.4$ & $.3438$ & $.03$ & $.1212$ & $.2438$\\
        & $.5$ & $.2749$ & $.03$ & $.0455$ & $.1883$ \\
        & $.6$ & $.2175$ & $.02$ & $.0000$ & $.1207$\\
        & $.7$ & $.1685$ & $.02$ & $.0000$ & $.0560$ \\
        & $.8$ & $.1309$ & $.01$ & $.0000$ & $.0117$ \\
        & $.9$ & $.0989$ & $.02$ & $.000$0 & $.0099$\\
        \midrule
        \multirow{9}{*}{\rotatebox[origin=c]{90}{OPT$_\text{(350M)}$ / \textsc{OpenWebText}}} & $.1$ & $.9460$ & $.26$ & $.8$ & $1.85 \times 10^{-5}$ \\
        & $.2$ & $.8937$ & $.16$ & $.8000$ & $.000$ \\
        & $.3$ & $.8392$ & $.10$ & $.5000$ & $8.74 \times 10^{-6}$ \\
        & $.4$ & $.7782$ & $.08$ & $.6667$ & $.000$ \\
        & $.5$ & $.7171$ & $.06$ & $.0000$ & $1.19 \times 10^{-5}$ \\
        & $.6$ & $.6559$ & $.06$ & $.6033$ & $.000$ \\
        & $.7$ & $.5945$ & $.05$ & $.000$ & $8.21 \times 10^{-6}$ \\
        & $.8$ & $.5349$ & $.05$ & $.4462$ & $.000$ \\
        & $.9$ & $.4757$ & $.05$ & $.3580$ & $.000$ \\
        \bottomrule[0.05cm]
    \end{tabular}%
    }
    \caption{Results for varying values of $\alpha$ using different models and datasets.}
    \label{tab:alpha-ablation}
    \end{table}
\end{minipage}
\hspace{0.3cm}
\begin{minipage}[b]{0.45\textwidth}
    \centering
    \begin{table}[H]
    \resizebox{0.98\textwidth}{!}{
    \renewcommand{\arraystretch}{1.4}
        \begin{tabular}{@{}llrrrr@{}}
        \toprule[0.05cm]
         & $K$ & \% \textsc{Cov.} & $\varnothing$ \textsc{Width} $\downarrow$ &  \textsc{Scc} $\uparrow$ & \textsc{Ecg} $\downarrow$ \\
         \midrule
        \multirow{9}{*}{\rotatebox[origin=c]{90}{\hspace{0.75cm} M2M100$_\text{(400M)}$ / de $ \rightarrow $ en}} & $10$ & $.9923$ & $.39$ & $.9728$ & $.0000$ \\
        & $25$ & $.9563$ & $.37$ & $.8877$ & $.0011$ \\
        & $50$ & $.9504$ & $.32$ & $.8870$ &  $.0006$\\
        & $75$ & $.9444$ & $.32$ & $.8641$ & $.0014$ \\
        & $100$ & $.9442$ & $.31$ & $.8702$ & $.0011$ \\
        & $200$ & $.9422$ & $.31$ & $.8125$ & $.0016$ \\
        & $300$ & $.9404$ & $.31$ & $.8483$ & $.0019$ \\
        & $500$ & $.9389$ & $.31$ & $.8214$ & $.0023$ \\
        \midrule
        \multirow{9}{*}{\rotatebox[origin=c]{90}{\hspace{0.75cm} M2M100$_\text{(400M)}$ / ja $ \rightarrow $ en}} & $10$ & $.8013$ & $.17$ & $.2995$ & $.1606$ \\
        & $25$ & $.7353$ & $.17$ & $.2994$ & $.1438$ \\
        & $50$ & $.7540$ & $.17$ & $.3023$ & $.1603$ \\
        & $75$ & $.7368$ & $.16$ & $.3019$ & $.1603$ \\
        & $100$ & $.7453$ & $.15$ & $.3072$ & $.1529$ \\
        & $200$ & $.7295$ & $.14$ & $.2938$ & $.1787$ \\
        & $300$ & $.7192$ & $.13$ & $.2948$ & $.1788$ \\
        & $500$ & $.7110$ & $.13$ & $.2756$ & $.1867$ \\
        \midrule
        \multirow{9}{*}{\rotatebox[origin=c]{90}{\hspace{0.75cm} OPT$_\text{(350M)}$ / \textsc{OpenWebText}}} & $10$ & $.9438$ & $.35$ & $.8824$ & $.0019$ \\
        & $25$ & $.9522$ & $.33$ & $.8333$ & $2.06 \times 10^{-5}$ \\
        & $50$ & $.9442$ & $.27$ & $.0000$ & $1.86 \times 10^{-5}$\\
        & $75$ & $.9477$ & $.27$ & $.8000$ & $1.03 \times 10^{-5}$ \\
        & $100$ & $.9460$ & $.26$ & $.8000$ & $1.86 \times 10^{-5}$ \\
        & $200$ & $.9487$ & $.28$ & $.8571$ & $6.20 \times 10^{-5}$ \\
        & $300$ & $.9500$ & $.28$ & $.8181$ & $1.86 \times 10^{-5}$ \\
        & $500$ & $.9508$ & $.29$ & $.8181$ & $1.86 \times 10^{-5}$ \\
        \bottomrule[0.07cm]
    \end{tabular}%
    }
    \caption{Results for varying neighborhood sizes $K$ using different models and datasets.}
    \label{tab:neighbor-ablation}
    \end{table}
    \hspace{0.5cm}
\end{minipage}%

\paragraph{Coverage Threshold.} In \cref{tab:alpha-ablation}, we investigate the impact of different values on $\alpha$ on our evaluation metrics. 
We show that the increase in $\alpha$ does indeed produce the expected decrease in coverage\index{Coverage}, however with a certain degree of overcoverage for the de $ \rightarrow $ en MT\index{Machine translation} and the LM\index{Language modeling} task.
The loss in coverage always goes hand in hand with a decrease in the average prediction set\index{Prediction set} width as well, as the model can allow itself to produce tighter prediction sets at the cost of higher miscoverage. 
As this also produces bin in which all contained instances are uncovered, this produces zero values for the SCC\index{Size-stratified coverage}, while we cannot discern clear trends for the ECG\index{Expected coverage gap}.

\paragraph{Neighborhood Size.} In \cref{tab:neighbor-ablation}, we vary the effect of the chosen neighborhood size (with $100$ being the value we use in our main experiments). 
We make the following, interesting observations: Coverage\index{Coverage} on the MT\index{Machine translation} task seems to decrease with an increase in the neighborhood size as prediction set\index{Prediction set} widths get smaller on average, with a neighborhood size around $100$ striking a balance between coverage, width, computational cost and SCC\index{Size-stratified coverage} / ECG\index{Expected coverage gap}.
For LM\index{Language modeling}, coverage seems to be mostly constant, with prediction set\index{Prediction set} width hitting an inflection point for $100$ neighbors.
We speculate that initially there might be a benefit to considering more neighbors to calibrate $\hat{q}$, but that considering too large neighborhoods might introduce extra noise. 
While we found $100$ to be a solid choice for the purpose of our experiments, we leave more principled ways to determine the neighborhood size to future work.

\section{Additional Clustering Results}\label{app:additional-clustering}

\begin{figure}[htb]
    \centering
    \begin{subfigure}[t]{0.48\textwidth}
        \centering
        \includegraphics[width=0.98\textwidth]{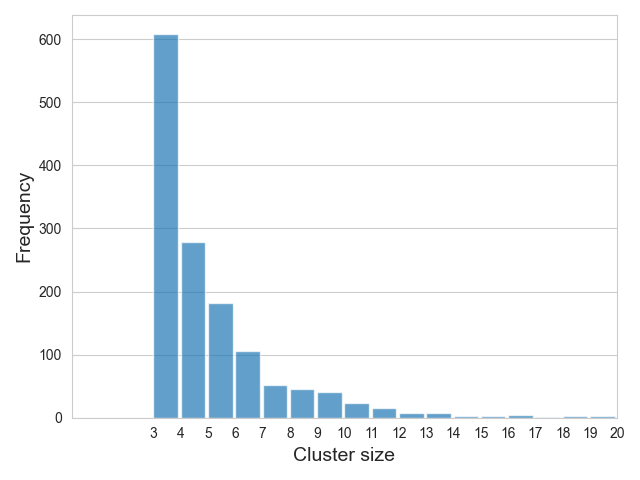}
    \caption{Cluster sizes on TriviaQA.}
    \end{subfigure}
    \hfill
    \begin{subfigure}[t]{0.48\textwidth}
        \centering
        \includegraphics[width=0.98\textwidth]{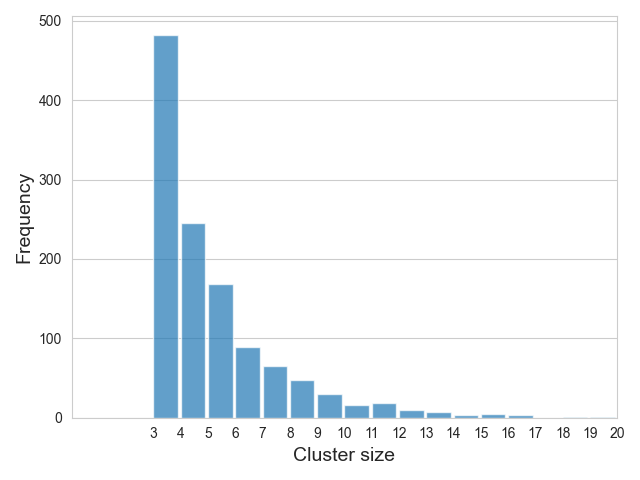}
    \caption{Cluster sizes on CoQA.}
    \end{subfigure}
    \caption[Bar plot of cluster sizes found.]{Bar plot of cluster sizes found. The plot is truncated at size 20.}\label{fig:cluster-sizes}
\end{figure}

\begin{minipage}[t]{0.47\textwidth}
    \centering
    \begin{table}[H]
    \renewcommand{\arraystretch}{0.68}
     \resizebox{0.95\textwidth}{!}{
    \begin{tabular}{@{}p{0.95\linewidth}@{}}
    \toprule
    \scriptsize Cluster 1120 \\
    \midrule
    \scriptsize How many fluid ounces are in one quarter of an imperial pint? \\
    \scriptsize How many fluid ounces in one Imperial pint? \\
    \scriptsize How many fluid ounces in half an Imperial pint? \\
    \toprule
    \scriptsize Cluster 920 \\
    \midrule
    \scriptsize Which famous US outlaw shot the cashier of a savings bank in Gallatin Missouri in 1869? \\
    \scriptsize What famous outlaw committed the Wild West's first train robbery on July 21, 1873 in Adair, Iowa? \\
    \scriptsize On July 21, 1873, Jesse James and the James-Younger gang pulled off the first successful what of the American West, in Adair Iowa? \\
    \toprule
    \scriptsize Cluster 984 \\
    \midrule
    \scriptsize In what country was the game of golf invented?\\
    \scriptsize Which ball game was invented by Dr James Naismith in Massachusetts USA  in 1891? \\
    \scriptsize It is generally accepted that the game of golf originated in what country? \\
    \scriptsize  What's a country where most people love playing rugby? \\
    \scriptsize What's a country where most people love playing golf? \\
    \toprule
    \scriptsize Cluster 811 \\
    \midrule
    \scriptsize How many colors are there in the spectrum when white light is separated? \\
    \scriptsize Which part of the eye contains about 137 million light-sensitive cells in one square inch? \\
    \scriptsize Which of the retina's cells can distinguish between different wavelengths of light? \\
    \scriptsize In four colour process printing, which is also known as CMYK, which are the only four colours that are used? \\
    \scriptsize How many colours are in the rainbow? \\
    \scriptsize In art, what are the three primary colours? \\
    \scriptsize What color consists of the longest wavelengths of lights visible by the human eye? \\
    \scriptsize What are the three primary colours of light? \\
    \bottomrule
    \end{tabular}%
    }
    \caption{Contents of some randomly sampled clusters that result from the clustering procedure for TriviaQA.}\label{fig:cluster-contents}
    \end{table}
\end{minipage}
\hspace{0.2cm}
\begin{minipage}[t]{0.47\textwidth}
    \centering
    \begin{table}[H]
    \renewcommand{\arraystretch}{0.8}
    \resizebox{0.95\textwidth}{!}{
    \begin{tabular}{@{}p{0.95\linewidth}@{}}
    \toprule
    \scriptsize Cluster 823 \\
    \midrule
    \scriptsize Where in Europe is it located?\\
    \scriptsize Is it in the European Plain?\\
    \scriptsize Which region of Europe is it in?\\
    \toprule
    \scriptsize Cluster 1176 \\
    \midrule
    \scriptsize Did she have children?\\
    \scriptsize Does she have any children?\\
    \scriptsize Did she have any children?\\
    \scriptsize Did she have any other children?\\
    \toprule
    \scriptsize Cluster 2244 \\
    \midrule
    \scriptsize Who won the Kentucky Derby?\\
    \scriptsize as he won the Derby before?\\
    \scriptsize Has he raced in the Derby before?\\
    \scriptsize What were the winning horse's odds?\\
    \scriptsize How many Derbys have their been?\\
    \toprule
    \scriptsize Cluster 11 \\
    \midrule
    \scriptsize Are they densities of everything the same?\\
    \scriptsize What is the densest elements at regular conditions?\\
    \scriptsize What is density of a substance?\\
    \scriptsize What is another symbol for density?\\
    \scriptsize Who gives weight per unit volume as the definition?\\
    \scriptsize Where is density the same value as it's mass concentration?\\
    \scriptsize To make comparisons easier what stands in for density?\\
    \scriptsize What is the relative density of something that floats?\\
    \toprule
    \scriptsize  Cluster 1081 \\
    \midrule
    \scriptsize Who was murdered?\\
    \scriptsize who was killed?\\
    \scriptsize Who committed this murder?\\
    \scriptsize who was killed?\\
    \scriptsize Who was killed?\\
    \scriptsize who was killed?\\
    \bottomrule
    \end{tabular}%
    }
    \caption{Contents of some randomly sampled clusters that result from the clustering procedure for CoQA.}\label{fig:cluster-contents-coqa}
    \end{table}
\end{minipage}%

In this section we take a closer look at the results of the clustering procedure described in \cref{sec:setting-calibration-targets}.
In our experiments, we run HDBSCAN\index{HDBSCAN} using a minimum cluster size of three, since preliminary experiments showed this number to produce the best trade-off between the coherence of cluster contents (as evaluated in \cref{fig:clustering-results}) and a diversity in cluster targets.
This setting yields a distribution of cluster sizes shown in \cref{fig:cluster-sizes}.
We can see that the majority of cluster sizes are rather small, including questions on specific topics, some of which we display in \cref{fig:cluster-contents,fig:cluster-contents-coqa}.
Not shown are cluster sizes over $20$ since the distribution quickly levels off, as well the set of all points that could not be sorted into any cluster.\\

\begin{figure*}[htb]
    \centering
    \begin{subfigure}[t]{0.48\textwidth}
        \centering
        \includegraphics[width=0.98\textwidth]{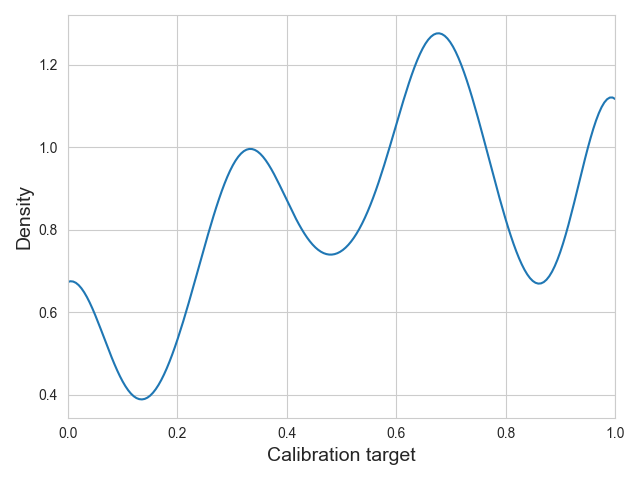}
        \caption{Vicuna v1.5 on TriviaQA.}\label{subfig:calibration-targets-vicuna}
    \end{subfigure}
    \hfill
    \begin{subfigure}[t]{0.48\textwidth}
        \centering
        \includegraphics[width=0.98\textwidth]{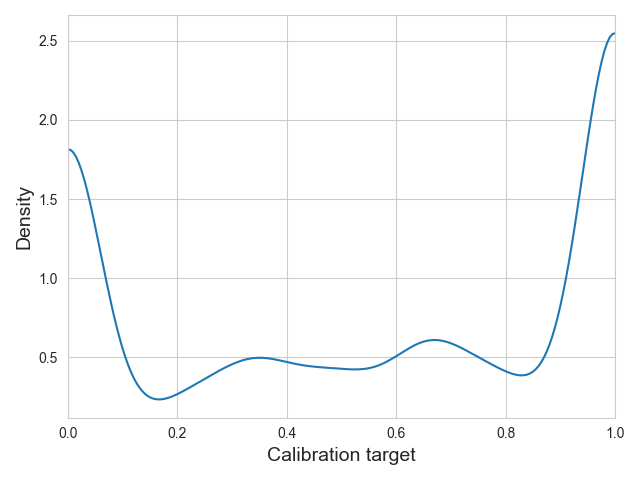}
        \caption{GPT-3.5 on TriviaQA.}\label{subfig:calibration-targets-gpt35}
    \end{subfigure}
    \vfill
    \begin{subfigure}[t]{0.48\textwidth}
        \centering
        \includegraphics[width=0.98\textwidth]{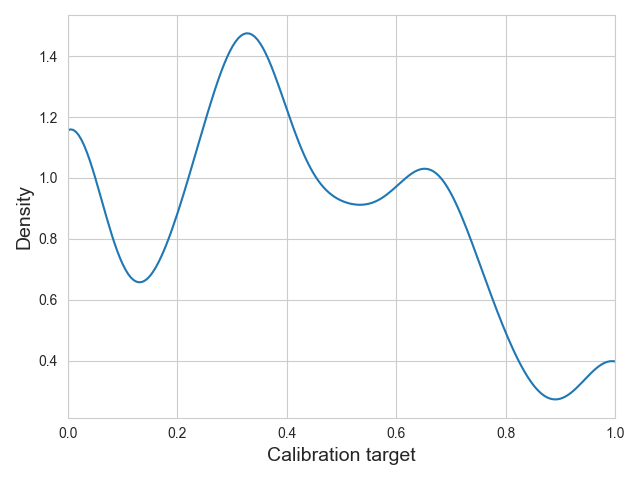}
        \caption{Vicuna v1.5 on CoQA.}\label{subfig:calibration-targets-coqa-vicuna}
    \end{subfigure}
    \hfill
    \begin{subfigure}[t]{0.48\textwidth}
        \centering
        \includegraphics[width=0.98\textwidth]{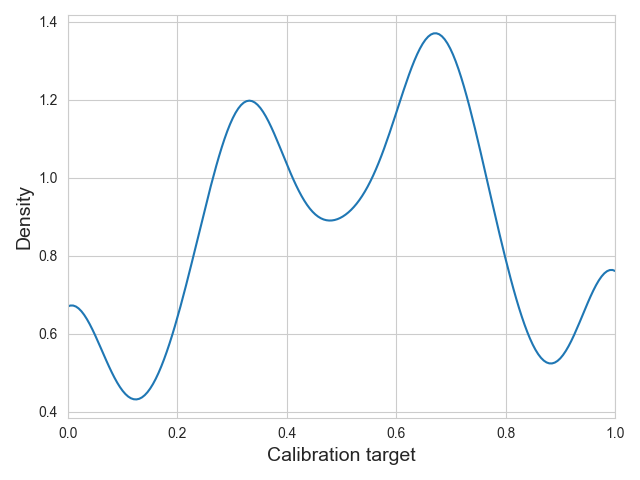}
        \caption{GPT-3.5 on CoQA.}\label{subfig:calibration-targets-coqa-gpt-3.5}
    \end{subfigure}
    \caption{Density plot of calibration targets generated through the clustering procedure for the two LLMs and TriviaQA / CoQA.}\label{fig:calibration-targets}
\end{figure*}

\begin{figure*}[htb]
    \centering
    \begin{subfigure}[t]{0.48\textwidth}
        \centering
        \includegraphics[width=0.98\textwidth]{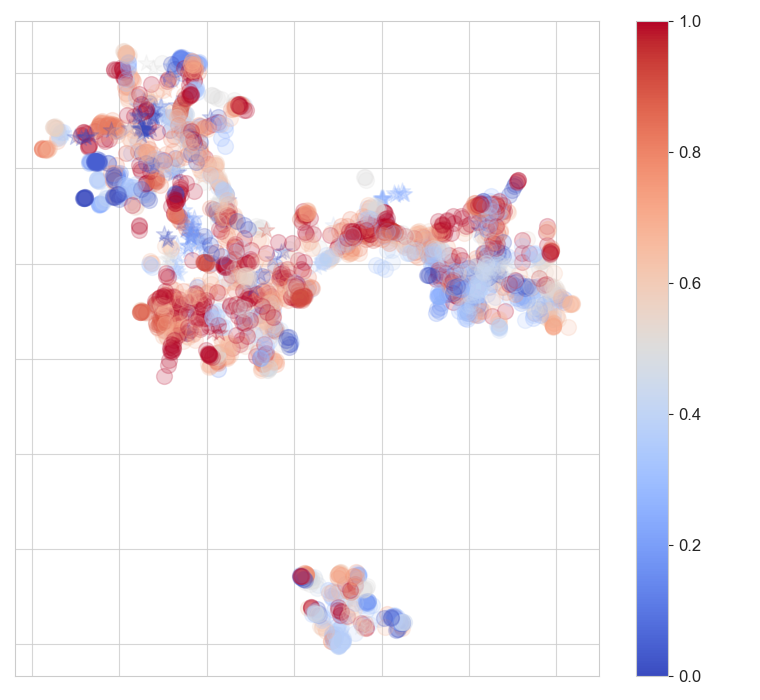}
        \caption{Vicuna v1.5 on TriviaQA.}\label{subfig:clustering-vicuna}
    \end{subfigure}
    \hfill
    \begin{subfigure}[t]{0.48\textwidth}
        \centering
        \includegraphics[width=0.98\textwidth]{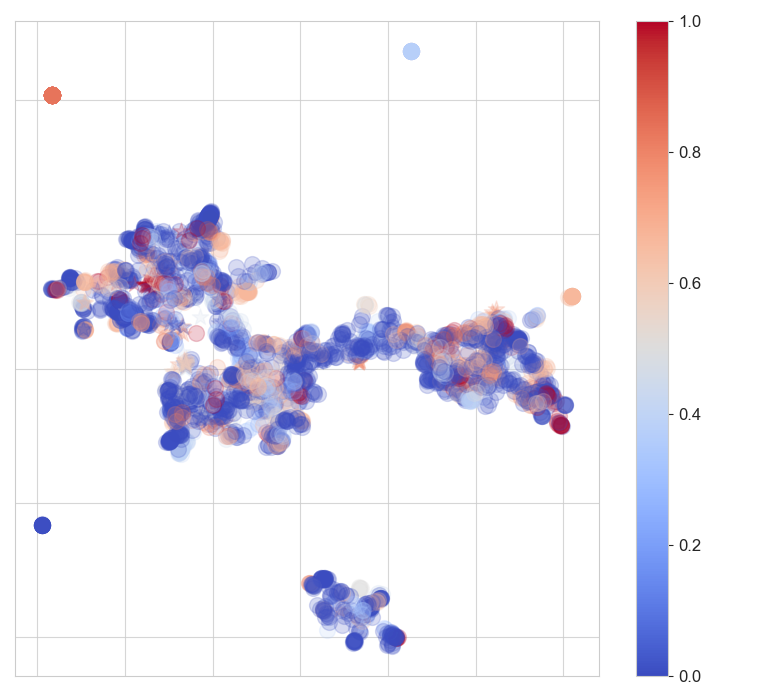}
        \caption{GPT-3.5 on TriviaQA.}\label{subfig:clustering-gpt35}
    \end{subfigure}
    \vfill
    \begin{subfigure}[t]{0.48\textwidth}
        \centering
        \includegraphics[width=0.98\textwidth]{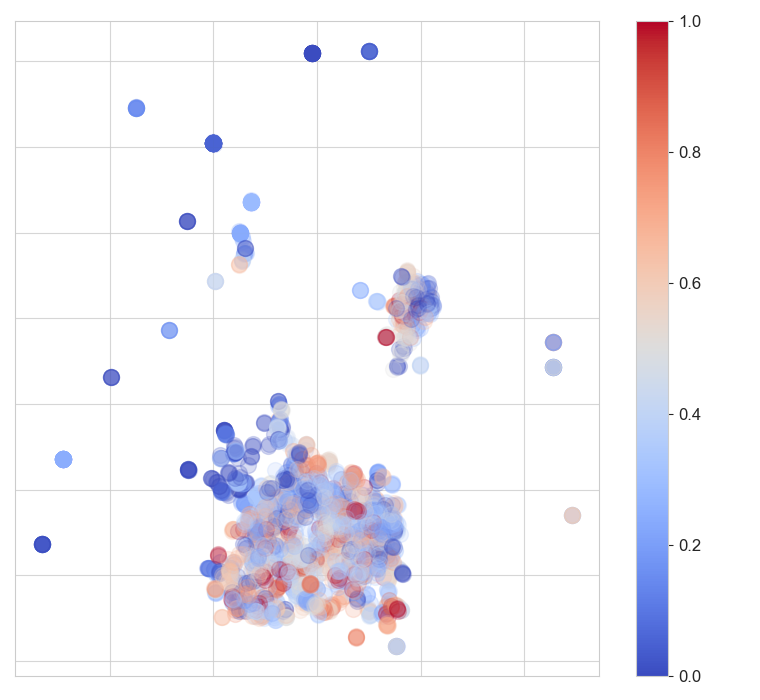}
        \caption{Vicuna v1.5 on CoQA.}\label{subfig:clustering-vicuna-coqa}
    \end{subfigure}
    \hfill
    \begin{subfigure}[t]{0.48\textwidth}
        \centering
        \includegraphics[width=0.98\textwidth]{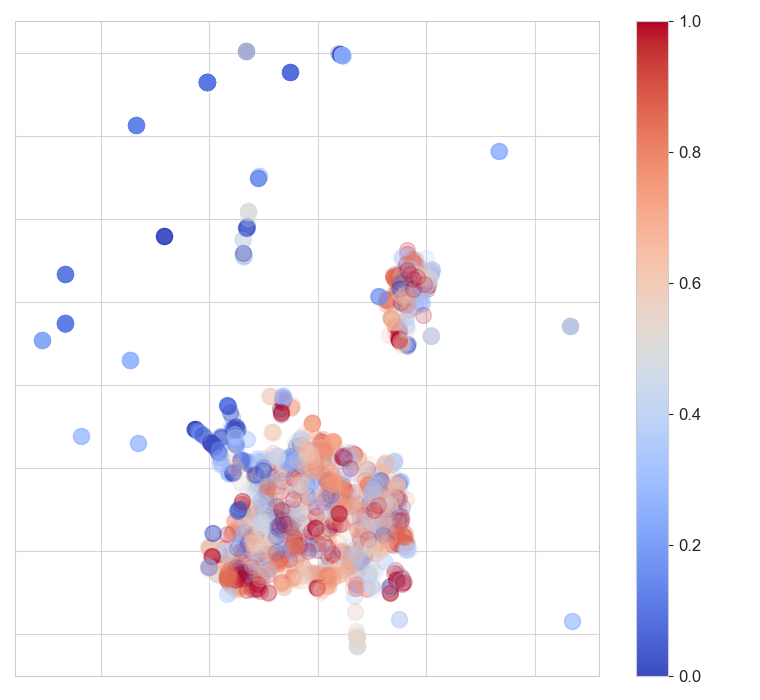}
        \caption{GPT-3.5 on CoQA.}\label{subfig:clustering-gpt35-coqa}
    \end{subfigure}
    \caption[Illustrating questions from TriviaQA along with their assigned confidence targets for the two LLMs.]{
        Illustrating questions from TriviaQA along with their assigned confidence targets for the two LLMs, signified through their color from dark blue (0) to dark red (1). 
        To avoid clutter, we subsampled $40 \%$ of the combined datasets to be shown here and used PacMAP \citep{wang2021understanding} to transform their sentence embeddings into 2D space.
        }\label{fig:clustering}
\end{figure*}

After clustering and computing the average accuracy per cluster, we obtain a distribution over calibration targets, which we show with density plots in \cref{fig:calibration-targets}.
Since most clusters are of size three, we can see clear modes around $0,\ 0.33,\ 0.66$ and $1$ for Vicuna v1.5 in \cref{subfig:calibration-targets-vicuna}.
For GPT-3.5 in \cref{subfig:calibration-targets-gpt35} these are however less pronounced: 
We see that targets are often concentrated on $0$ or $1$, respectively.
Similar spikes like in \cref{subfig:calibration-targets-vicuna} are observable for both models on CoQA in \cref{subfig:calibration-targets-coqa-vicuna,subfig:calibration-targets-coqa-gpt-3.5}.
This trend is also visible when plotting the assigned calibration targets per data point in \cref{fig:clustering}:
While we can spot more transitionary colors between the blue and red extremes in the manifold for \cref{subfig:clustering-vicuna}, the colors tend more to either of the options \cref{subfig:clustering-gpt35}.
These mode trends continue for CoQA in \cref{subfig:clustering-vicuna-coqa} and \cref{subfig:clustering-gpt35-coqa}.

\section{Additional Calibration Results}\label{app:additional-calibration}

\begin{figure*}[htb]
    \centering
    \begin{subfigure}[t]{0.32\textwidth}
        \centering
        \includegraphics[width=\textwidth]{img/apricot/reliability/vicuna-v1.5/test_seq_likelihood.png}
        \caption{Seq.\@ likelihood.}
        \label{subfig:seq-likelihood}
    \end{subfigure}
    \hfill
    \begin{subfigure}[t]{0.32\textwidth}
        \centering
        \includegraphics[width=\textwidth]{img/apricot/reliability/vicuna-v1.5/test_cot_seq_likelihood.png}
        \caption{Seq.\@ like. (CoT).}
        \label{subfig:temperature-scaling}
    \end{subfigure}
    \hfill
    \begin{subfigure}[t]{0.32\textwidth}
        \centering
        \includegraphics[width=\textwidth]{img/apricot/reliability/vicuna-v1.5/test_ts_seq_likelihood.png}
        \caption{Platt scaling.}
        \label{subfig:verbalized-percentage}
    \end{subfigure}
    \hfill
    \begin{subfigure}[t]{0.32\textwidth}
        \centering
        \includegraphics[width=\textwidth]{img/apricot/reliability/vicuna-v1.5/test_ts_cot_seq_likelihood.png}
        \caption{Platt scaling (CoT).}
        \label{subfig:verbalized-percentage}
    \end{subfigure}
    \hfill
    \begin{subfigure}[t]{0.32\textwidth}
        \centering
        \includegraphics[width=\textwidth]{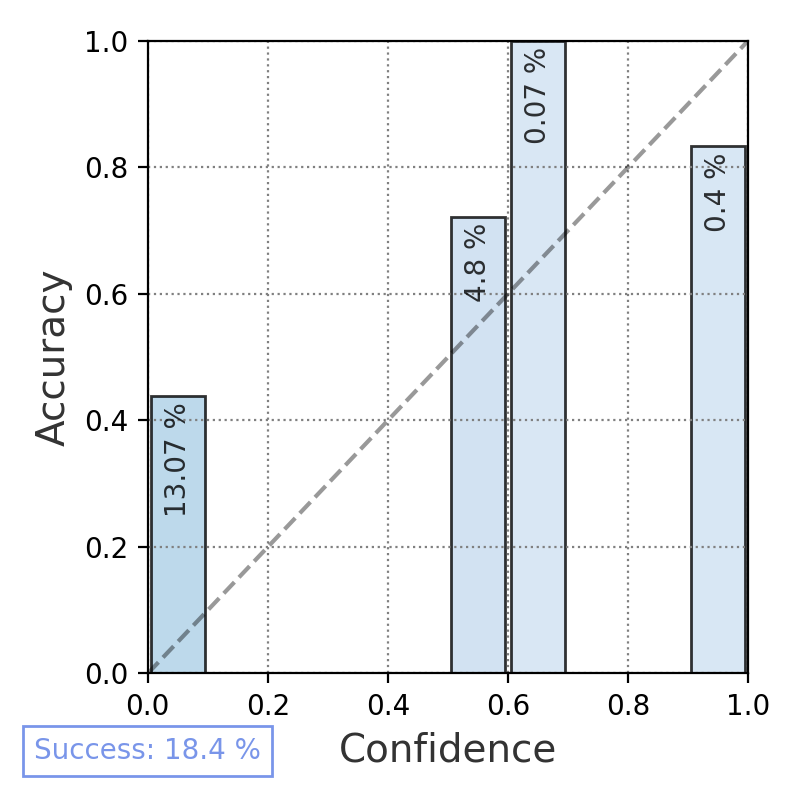}
        \caption{Verbalized Qual.\@}
        \label{subfig:verbalized-qualitative}
    \end{subfigure}
    \begin{subfigure}[t]{0.32\textwidth}
        \centering
        \includegraphics[width=\textwidth]{img/apricot/reliability/vicuna-v1.5/test_verbalized_cot_qual.png}
        \caption{Verb.\@ Qual.\@ (CoT).}
        \label{subfig:seq-likelihood}
    \end{subfigure}
    \hfill
    \begin{subfigure}[t]{0.32\textwidth}
        \centering
        \includegraphics[width=\textwidth]{img/apricot/reliability/vicuna-v1.5/test_verbalized_quant.png}
        \caption{Verbalized $\%$}
        \label{subfig:temperature-scaling}
    \end{subfigure}
    \hfill
    \begin{subfigure}[t]{0.32\textwidth}
        \centering
        \includegraphics[width=\textwidth]{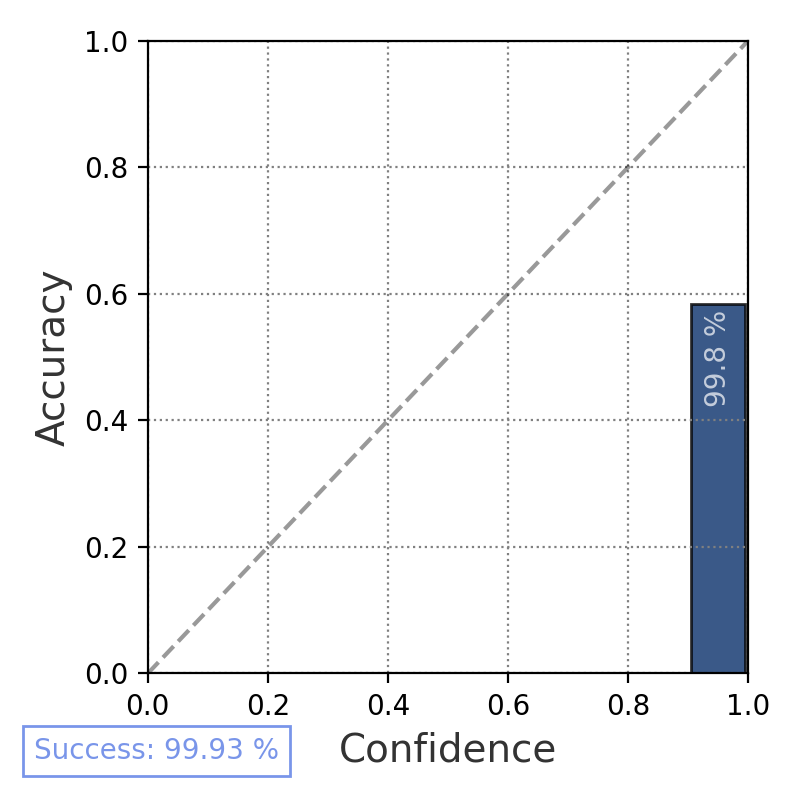}
        \caption{Verb.\@ $\%$\@ (CoT).}
        \label{subfig:verbalized-percentage}
    \end{subfigure}
    \hfill
    \begin{subfigure}[t]{0.32\textwidth}
        \centering
        \includegraphics[width=\textwidth]{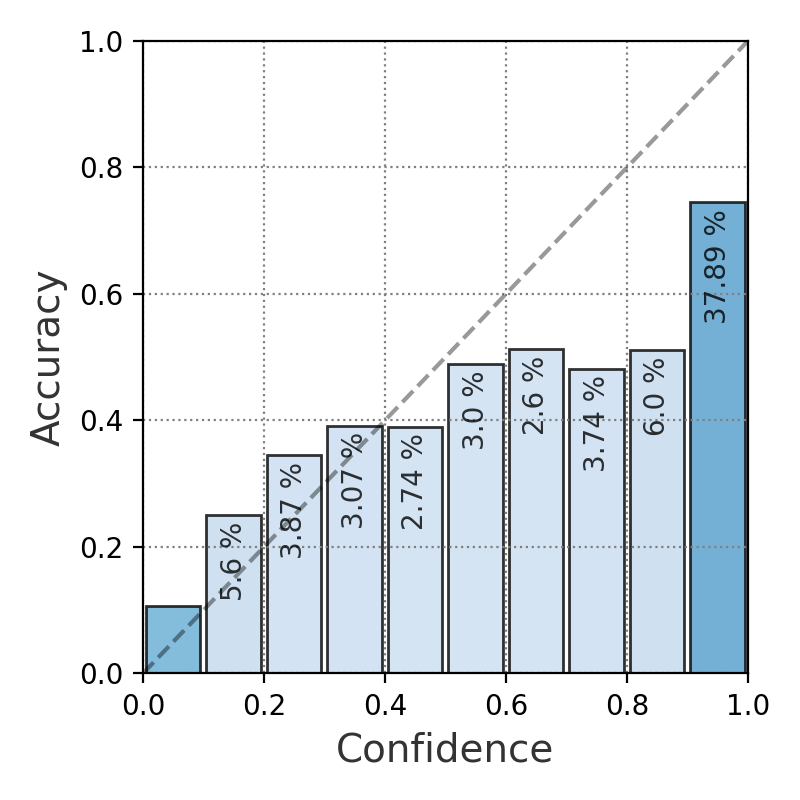}
        \caption{Auxiliary (binary).}
        \label{subfig:verbalized-qualitative}
    \end{subfigure}
    \begin{subfigure}[t]{0.32\textwidth}
        \centering
        \includegraphics[width=\textwidth]{img/apricot/reliability/vicuna-v1.5/16-02-2024_test_answer_question.png}
        \caption{Aux.\@ (clustering).}
        \label{subfig:verbalized-qualitative}
    \end{subfigure}
    \hfill
    \caption[Reliability diagrams for all methods for Vicuna v1.5 7B on TriviaQA.]{Reliability diagrams for all methods using $10$ bins each for Vicuna v1.5 7B on TriviaQA. The color as well as the percentage number within each bar indicate the proportion of total points contained in each bin.}\label{fig:reliabiliy-diagrams-vicuna-trivia-qa-full}
\end{figure*}

\begin{figure*}[htb]
    \centering
    \begin{subfigure}[t]{0.32\textwidth}
        \centering
        \includegraphics[width=\textwidth]{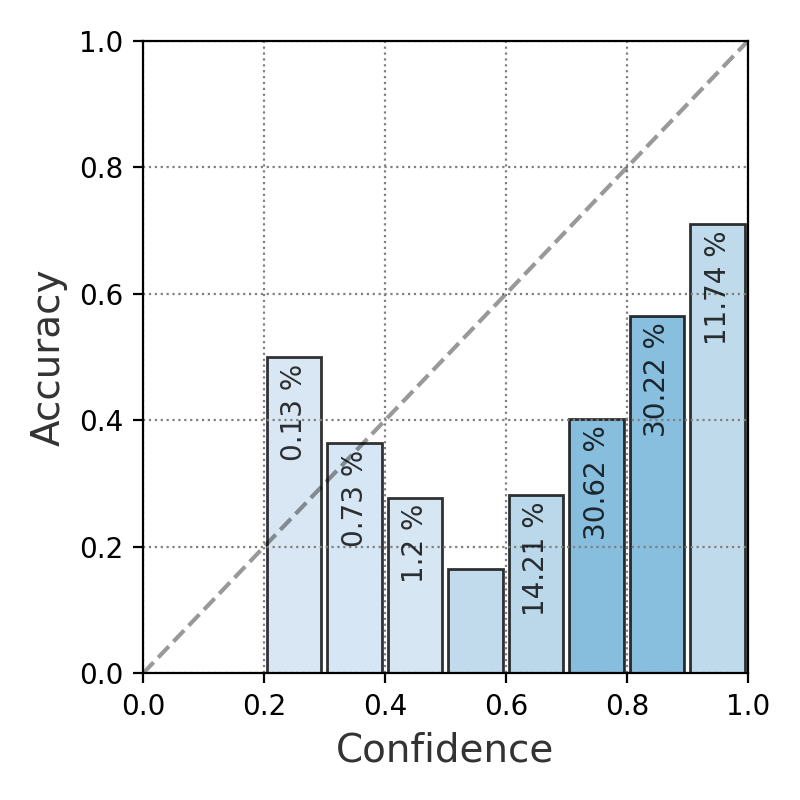}
        \caption{Seq.\@ likelihood.}
        \label{subfig:seq-likelihood}
    \end{subfigure}
    \hfill
    \begin{subfigure}[t]{0.32\textwidth}
        \centering
        \includegraphics[width=\textwidth]{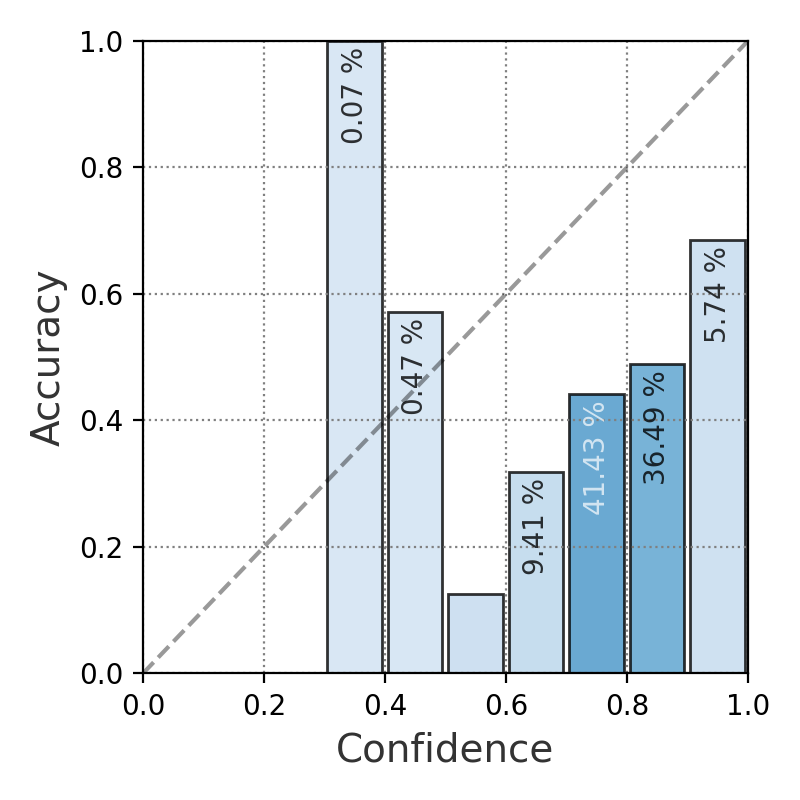}
        \caption{Seq.\@ like. (CoT).}
        \label{subfig:temperature-scaling}
    \end{subfigure}
    \hfill
    \begin{subfigure}[t]{0.32\textwidth}
        \centering
        \includegraphics[width=\textwidth]{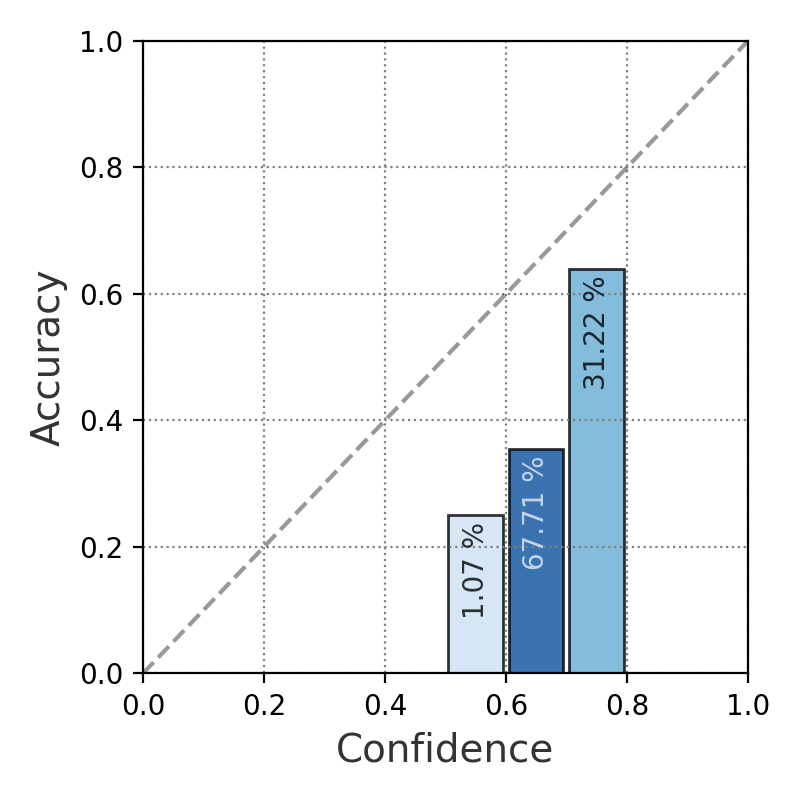}
        \caption{Platt scaling.}
        \label{subfig:verbalized-percentage}
    \end{subfigure}
    \hfill
    \begin{subfigure}[t]{0.32\textwidth}
        \centering
        \includegraphics[width=\textwidth]{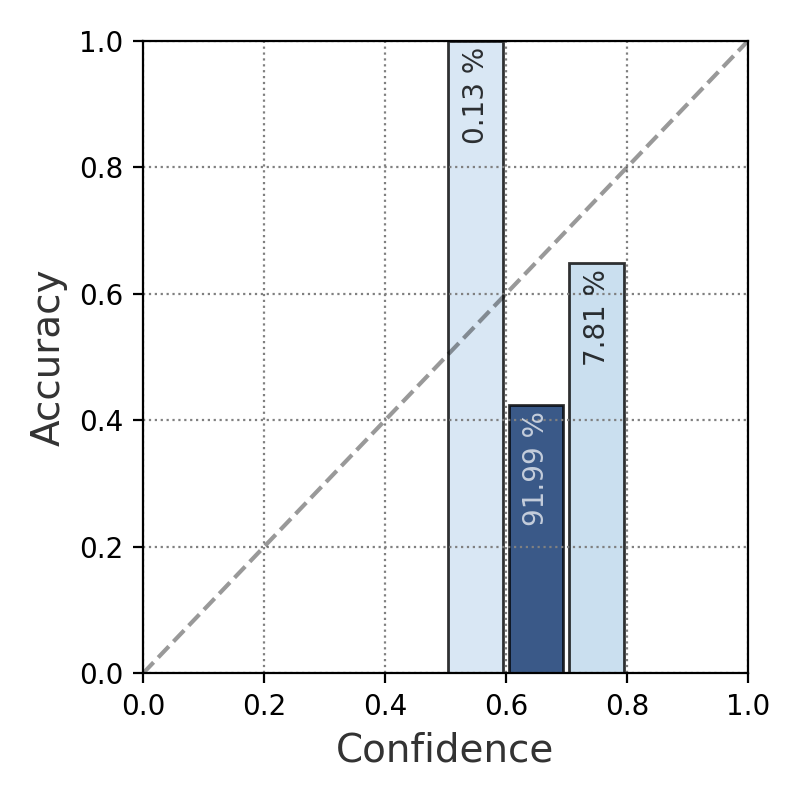}
        \caption{Platt scaling (CoT).}
        \label{subfig:verbalized-percentage}
    \end{subfigure}
    \hfill
    \begin{subfigure}[t]{0.32\textwidth}
        \centering
        \includegraphics[width=\textwidth]{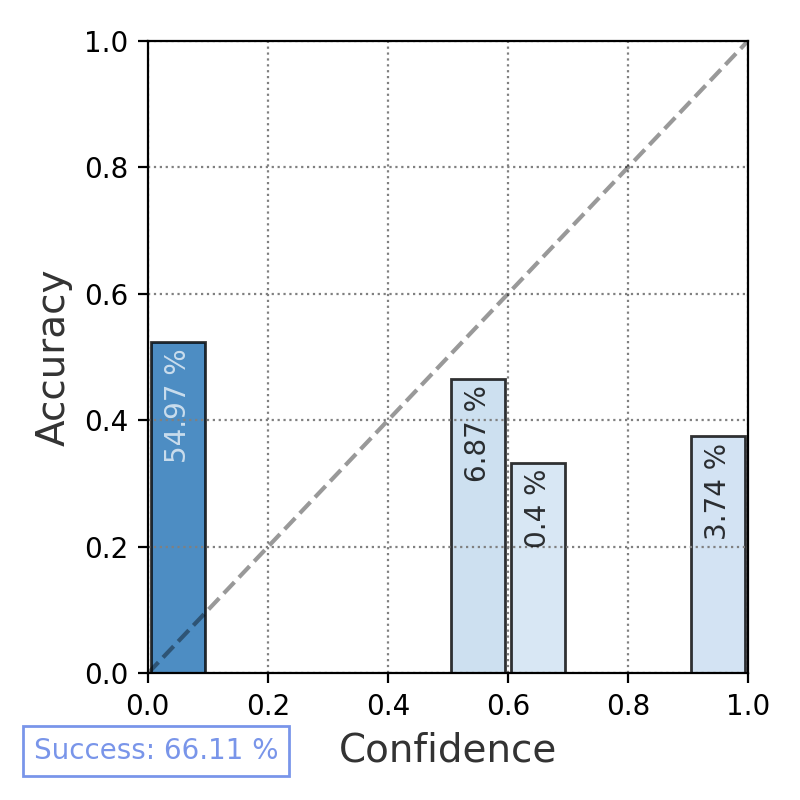}
        \caption{Verbalized Qual.}
        \label{subfig:verbalized-qualitative}
    \end{subfigure}
    \begin{subfigure}[t]{0.32\textwidth}
        \centering
        \includegraphics[width=\textwidth]{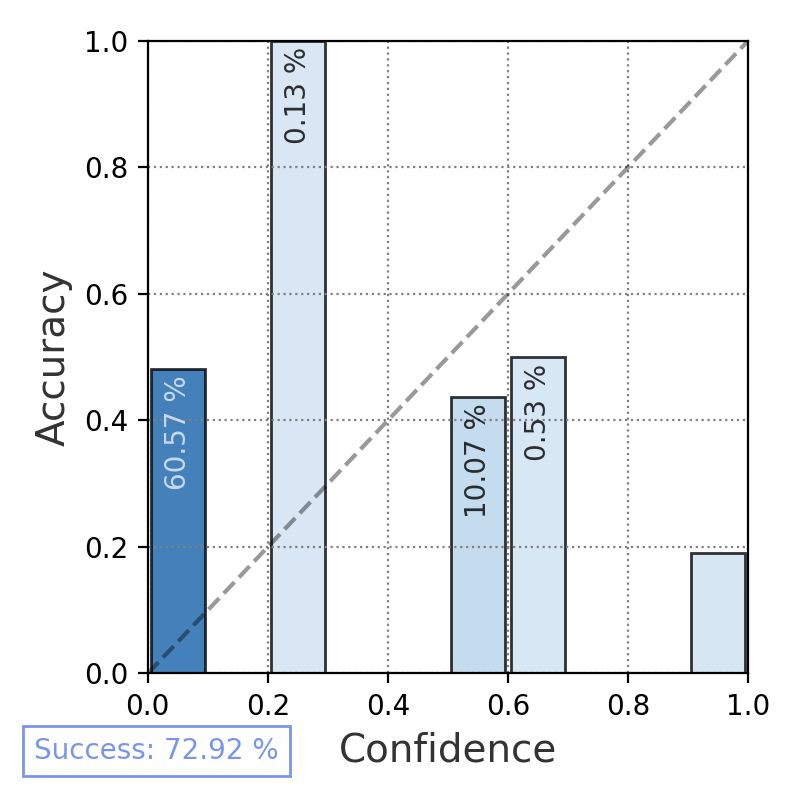}
        \caption{Verb.\@ Qual.\@ (CoT).}
        \label{subfig:seq-likelihood}
    \end{subfigure}
    \hfill
    \begin{subfigure}[t]{0.32\textwidth}
        \centering
        \includegraphics[width=\textwidth]{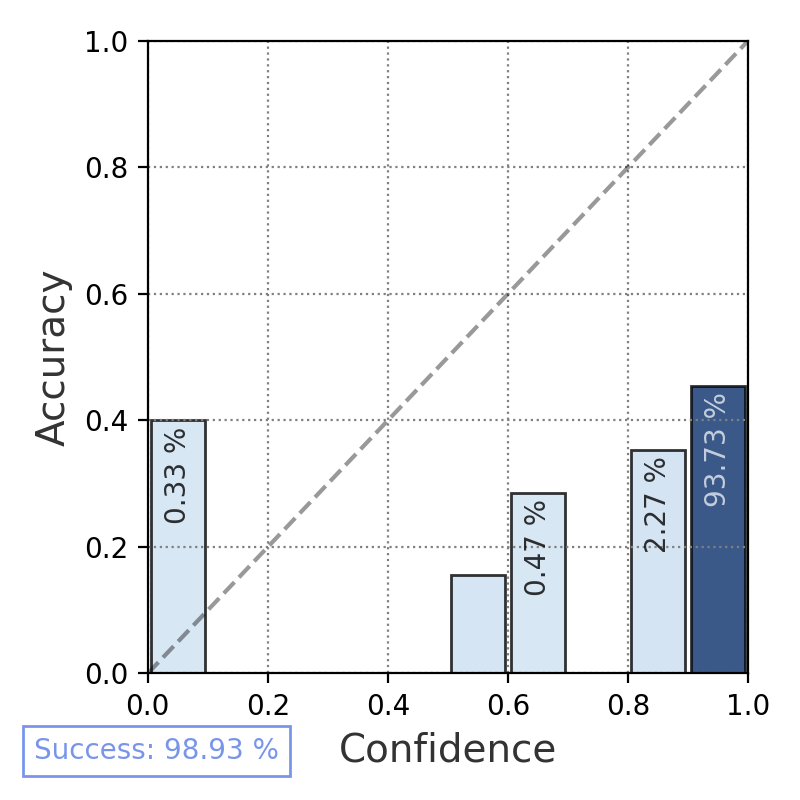}
        \caption{Verbalized $\%$.}
        \label{subfig:temperature-scaling}
    \end{subfigure}
    \hfill
    \begin{subfigure}[t]{0.32\textwidth}
        \centering
        \includegraphics[width=\textwidth]{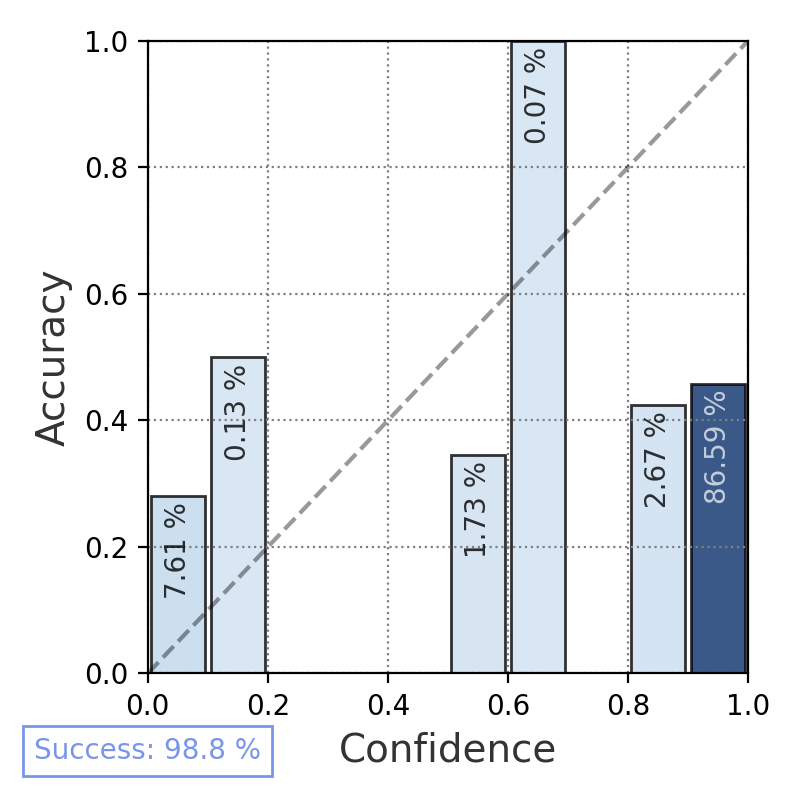}
        \caption{Verb.\@ $\%$ (CoT).}
        \label{subfig:verbalized-percentage}
    \end{subfigure}
    \hfill
    \begin{subfigure}[t]{0.32\textwidth}
        \centering
        \includegraphics[width=\textwidth]{img/apricot/reliability/vicuna-v1.5-coqa/24-02-2024_test_binary_answer_question.png}
        \caption{Auxiliary (binary).}
        \label{subfig:verbalized-qualitative}
    \end{subfigure}
    \begin{subfigure}[t]{0.32\textwidth}
        \centering
        \includegraphics[width=\textwidth]{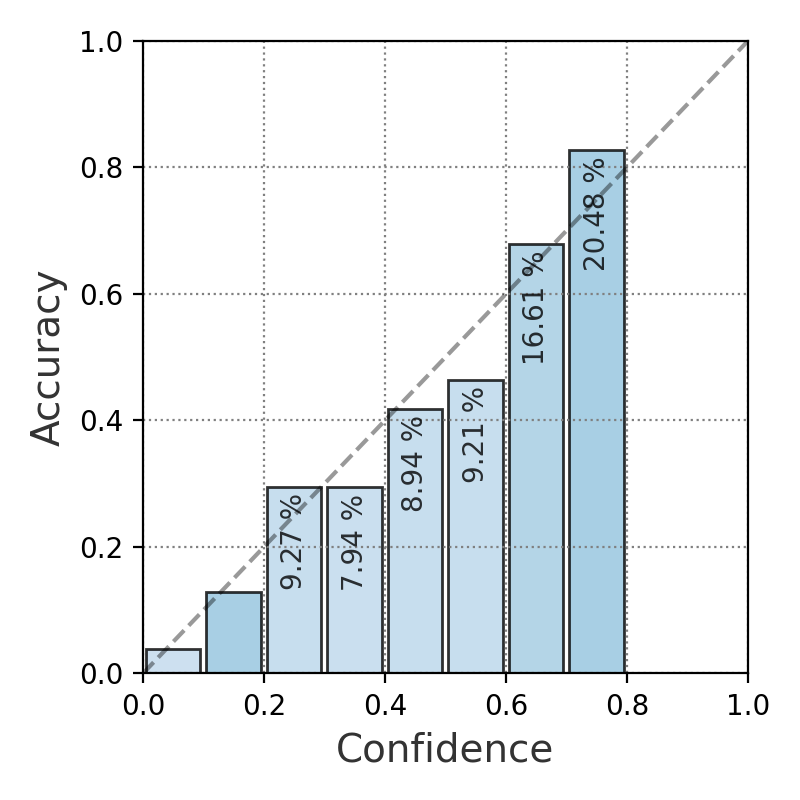}
        \caption{Aux.\@ (clustering).}
        \label{subfig:verbalized-qualitative}
    \end{subfigure}
    \hfill
    \caption[Reliability diagrams for all methods for Vicuna v1.5 7B on CoQA.]{Reliability diagrams for all methods using $10$ bins each for Vicuna v1.5 7B on CoQA. The color as well as the percentage number within each bar indicate the proportion of total points contained in each bin.}\label{fig:reliabiliy-diagrams-vicuna-coqa-full}
\end{figure*}

\begin{figure*}[htb]
    \centering
    \begin{subfigure}[t]{0.32\textwidth}
        \centering
        \includegraphics[width=\textwidth]{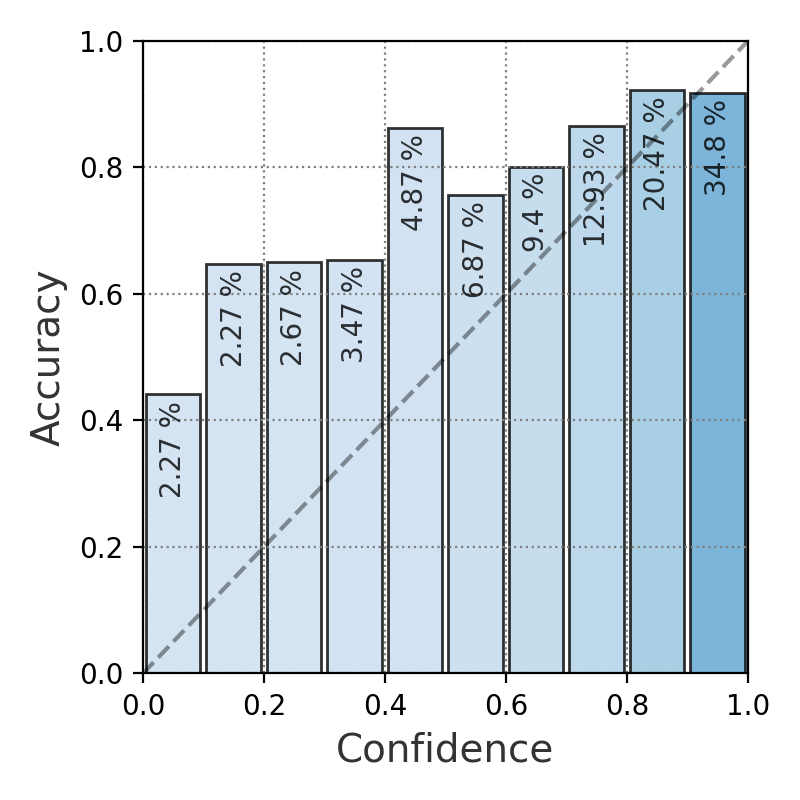}
        \caption{Seq.\@ likelihood.}
        \label{subfig:seq-likelihood}
    \end{subfigure}
    \hfill
    \begin{subfigure}[t]{0.32\textwidth}
        \centering
        \includegraphics[width=\textwidth]{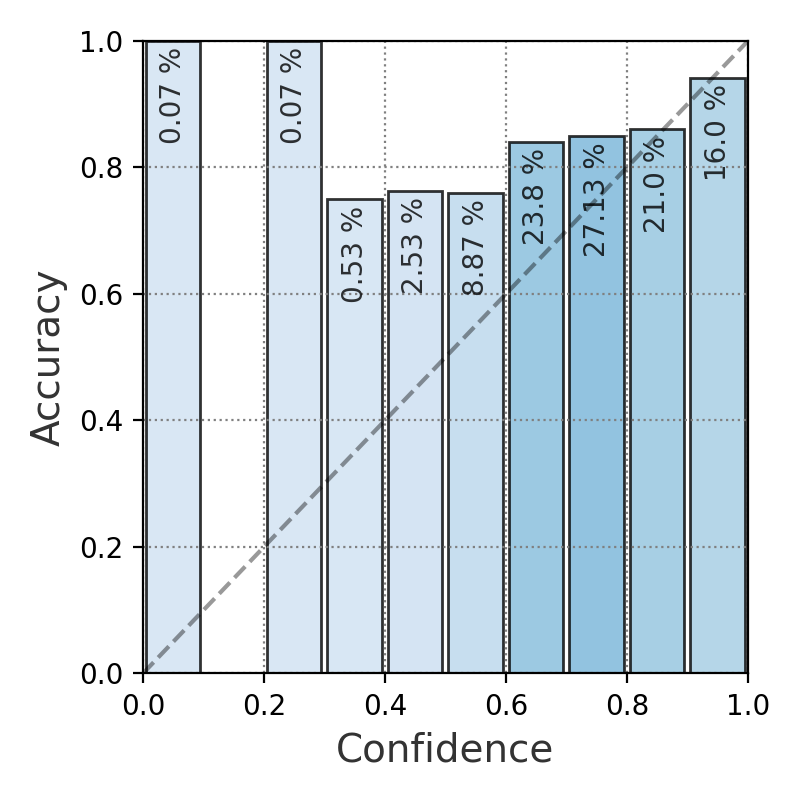}
        \caption{Seq.\@ like. (CoT).}
        \label{subfig:temperature-scaling}
    \end{subfigure}
    \hfill
    \begin{subfigure}[t]{0.32\textwidth}
        \centering
        \includegraphics[width=\textwidth]{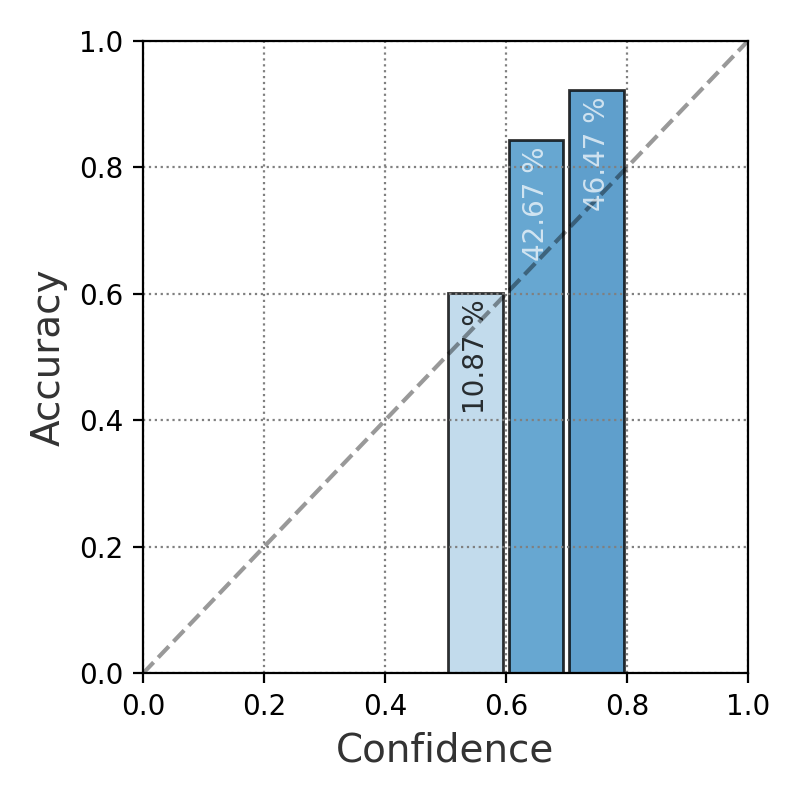}
        \caption{Platt scaling.}
        \label{subfig:verbalized-percentage}
    \end{subfigure}
    \hfill
    \begin{subfigure}[t]{0.32\textwidth}
        \centering
        \includegraphics[width=\textwidth]{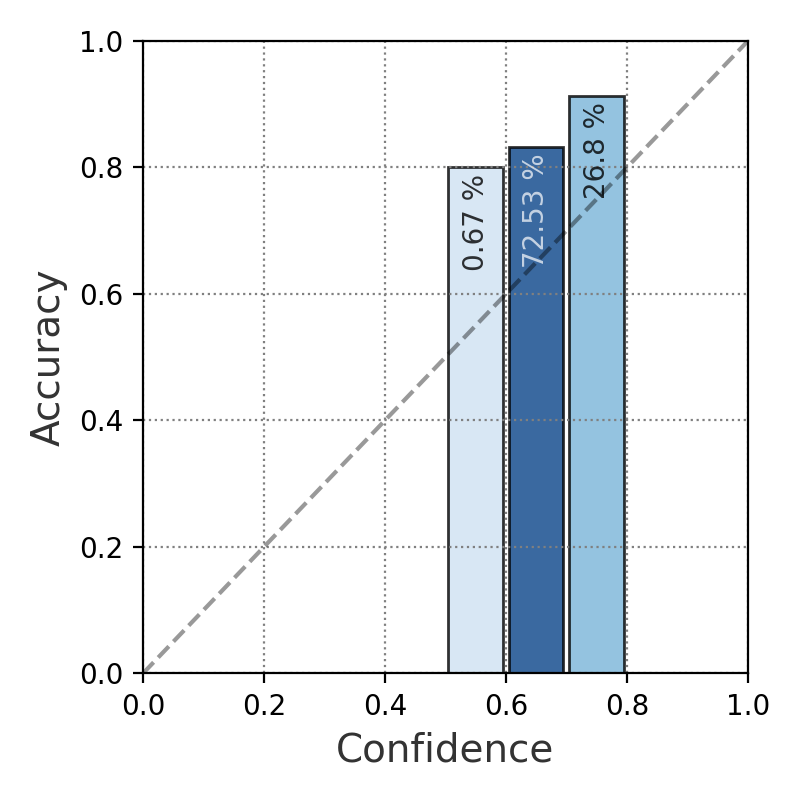}
        \caption{Platt scaling (CoT).}
        \label{subfig:verbalized-percentage}
    \end{subfigure}
    \hfill
    \begin{subfigure}[t]{0.32\textwidth}
        \centering
        \includegraphics[width=\textwidth]{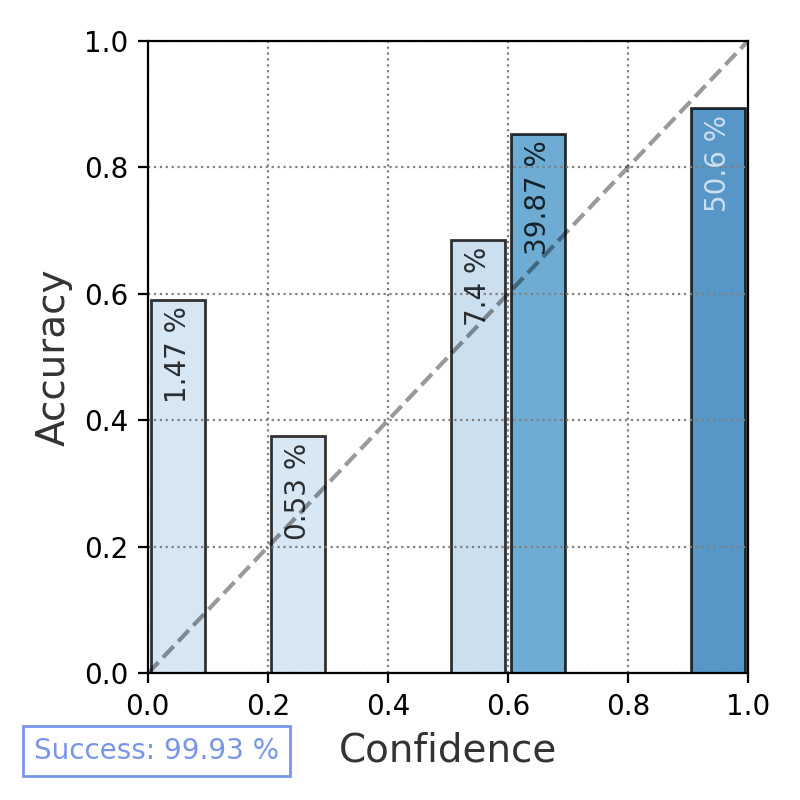}
        \caption{Verbalized Qual.}
        \label{subfig:verbalized-qualitative}
    \end{subfigure}
    \begin{subfigure}[t]{0.32\textwidth}
        \centering
        \includegraphics[width=\textwidth]{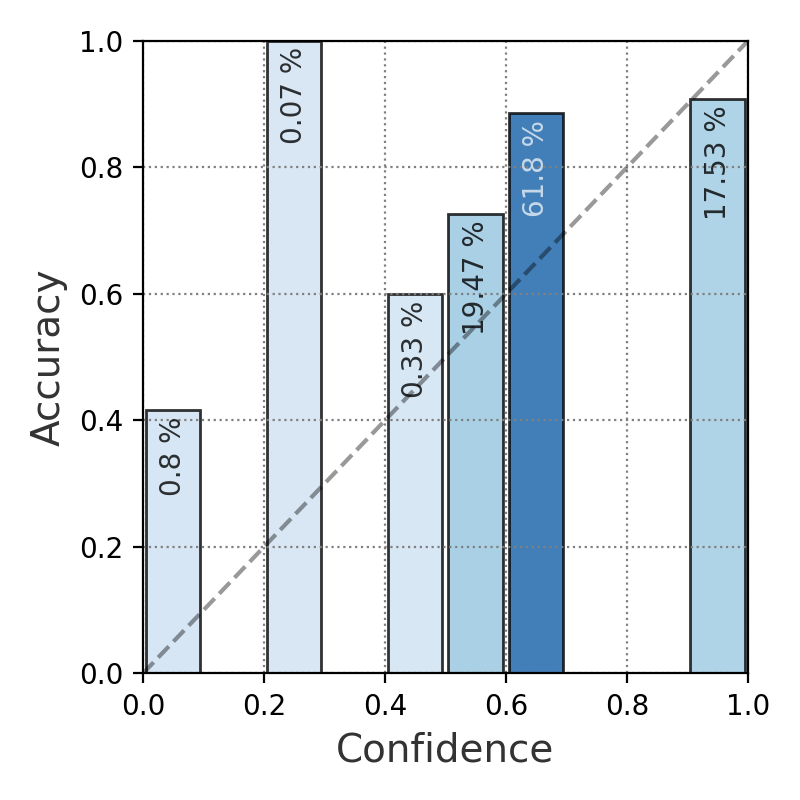}
        \caption{Verb.\@ Qual.\@ (CoT).}
        \label{subfig:seq-likelihood}
    \end{subfigure}
    \hfill
    \begin{subfigure}[t]{0.32\textwidth}
        \centering
        \includegraphics[width=\textwidth]{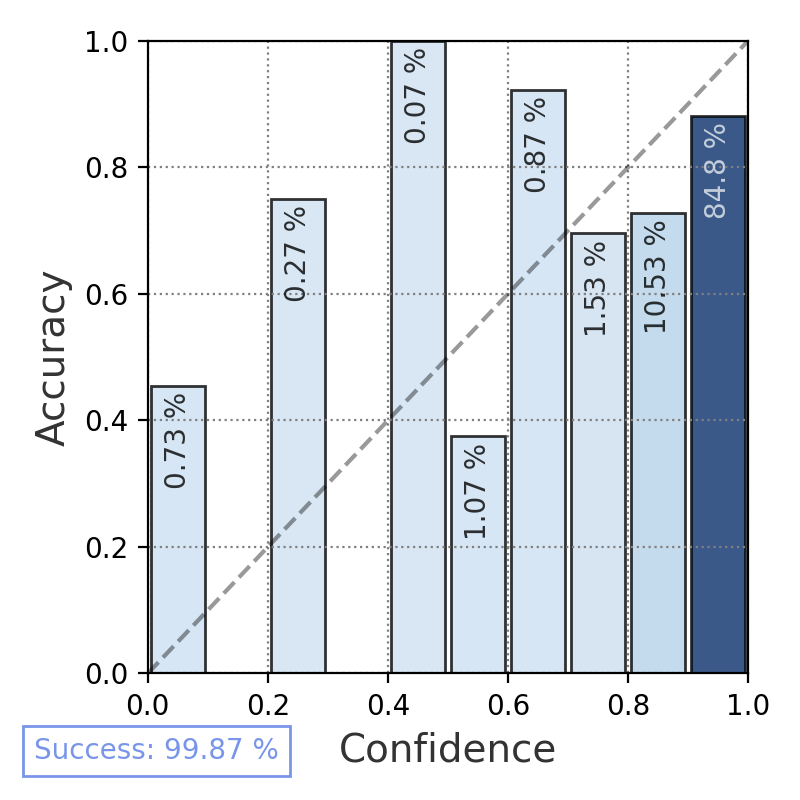}
        \caption{Verbalized $\%$.}
        \label{subfig:temperature-scaling}
    \end{subfigure}
    \hfill
    \begin{subfigure}[t]{0.32\textwidth}
        \centering
        \includegraphics[width=\textwidth]{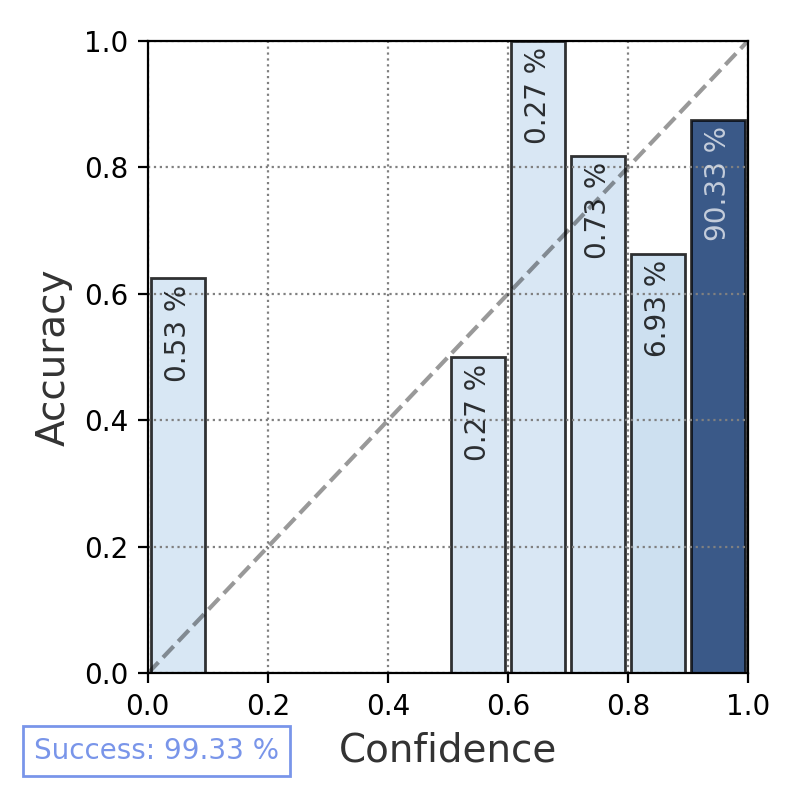}
        \caption{Verb.\@ $\%$ (CoT).}
        \label{subfig:verbalized-percentage}
    \end{subfigure}
    \hfill
    \begin{subfigure}[t]{0.32\textwidth}
        \centering
        \includegraphics[width=\textwidth]{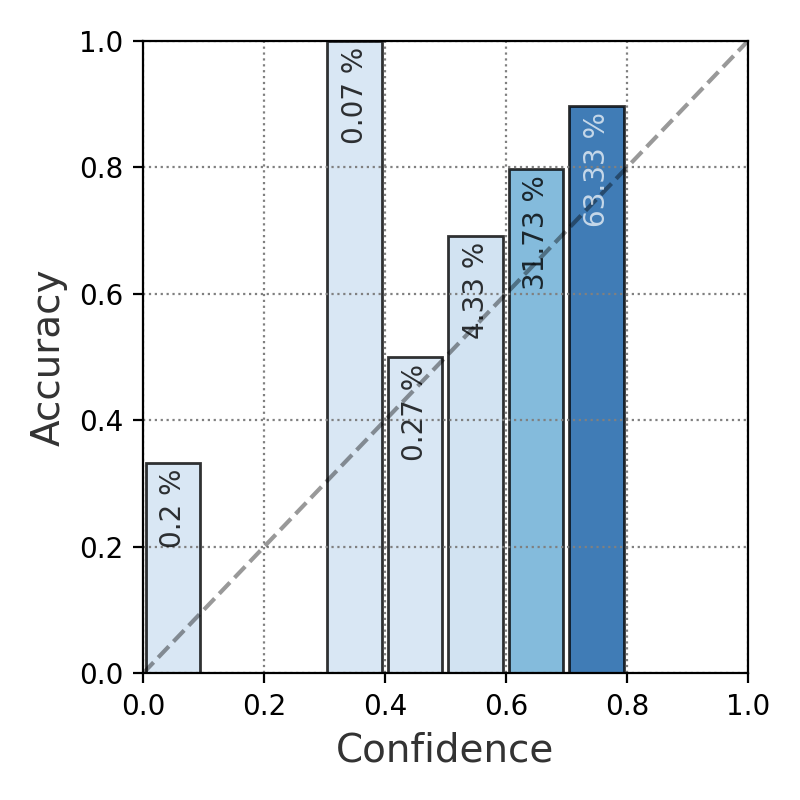}
        \caption{Auxiliary (binary).}
        \label{subfig:verbalized-qualitative}
    \end{subfigure}
    \begin{subfigure}[t]{0.32\textwidth}
        \centering
        \includegraphics[width=\textwidth]{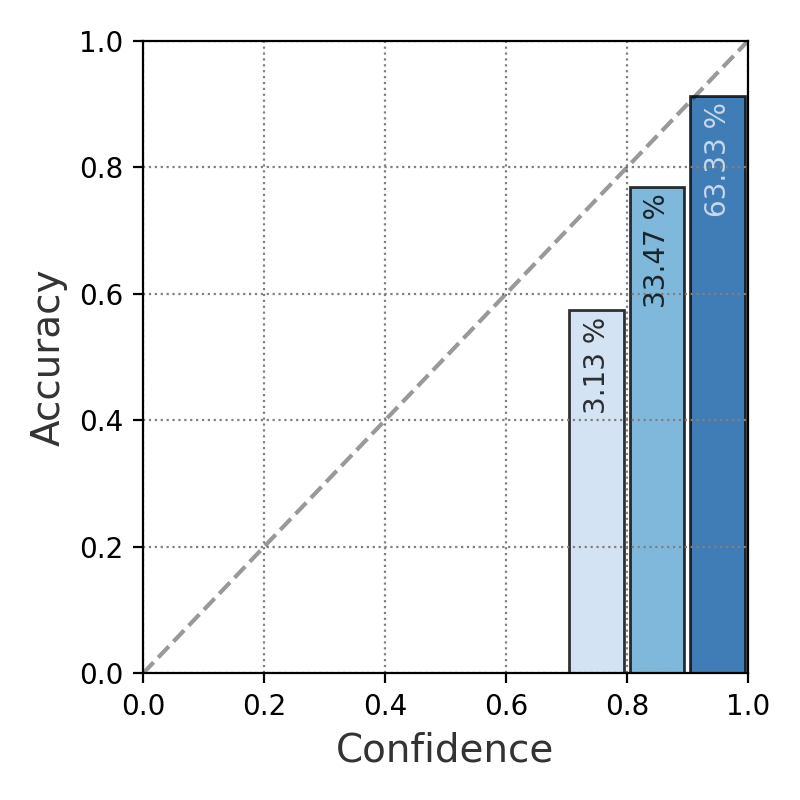}
        \caption{Aux.\@ (clustering).}
        \label{subfig:verbalized-qualitative}
    \end{subfigure}
    \hfill
    \caption[Reliability diagrams for all methods for GPT-3.5 on TriviaQA.]{Reliability diagrams for all methods using $10$ bins each for GPT-3.5 on TriviaQA. The color as well as the percentage number within each bar indicate the proportion of total points contained in each bin.}\label{fig:reliabiliy-diagrams-gpt35-trivia-qa-full}
\end{figure*}

\begin{figure*}[htb]
    \centering
    \begin{subfigure}[t]{0.32\textwidth}
        \centering
        \includegraphics[width=\textwidth]{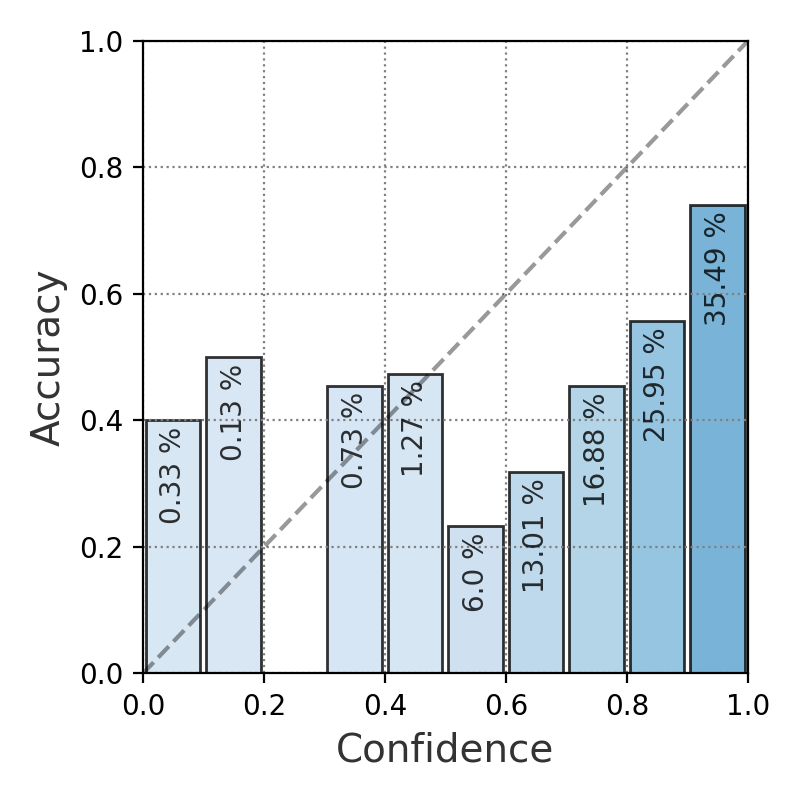}
        \caption{Seq.\@ likelihood.}
        \label{subfig:seq-likelihood}
    \end{subfigure}
    \hfill
    \begin{subfigure}[t]{0.32\textwidth}
        \centering
        \includegraphics[width=\textwidth]{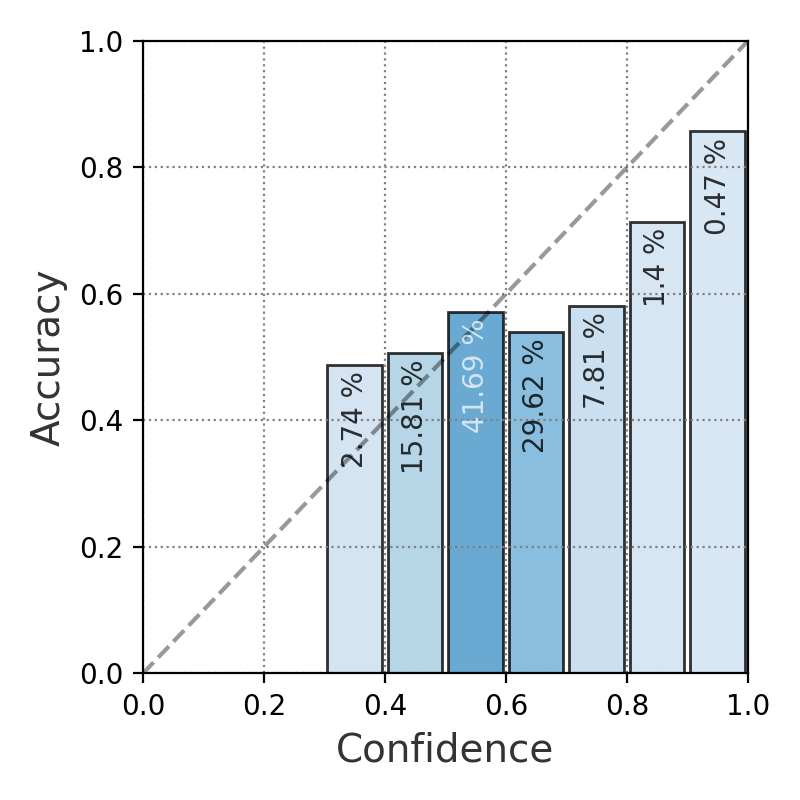}
        \caption{Seq.\@ like. (CoT).}
        \label{subfig:temperature-scaling}
    \end{subfigure}
    \hfill
    \begin{subfigure}[t]{0.32\textwidth}
        \centering
        \includegraphics[width=\textwidth]{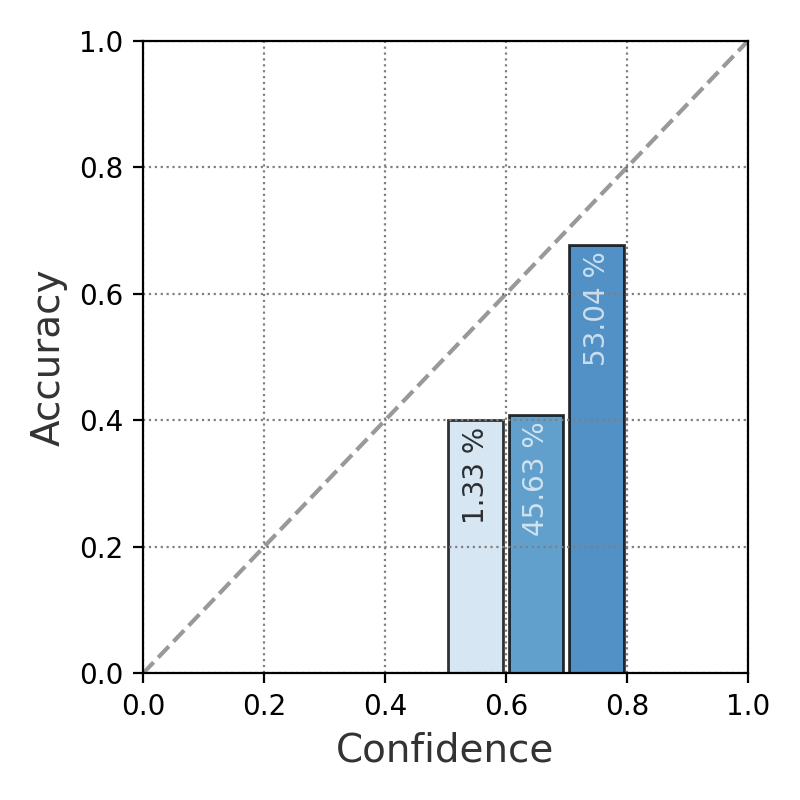}
        \caption{Platt scaling.}
        \label{subfig:verbalized-percentage}
    \end{subfigure}
    \hfill
    \begin{subfigure}[t]{0.32\textwidth}
        \centering
        \includegraphics[width=\textwidth]{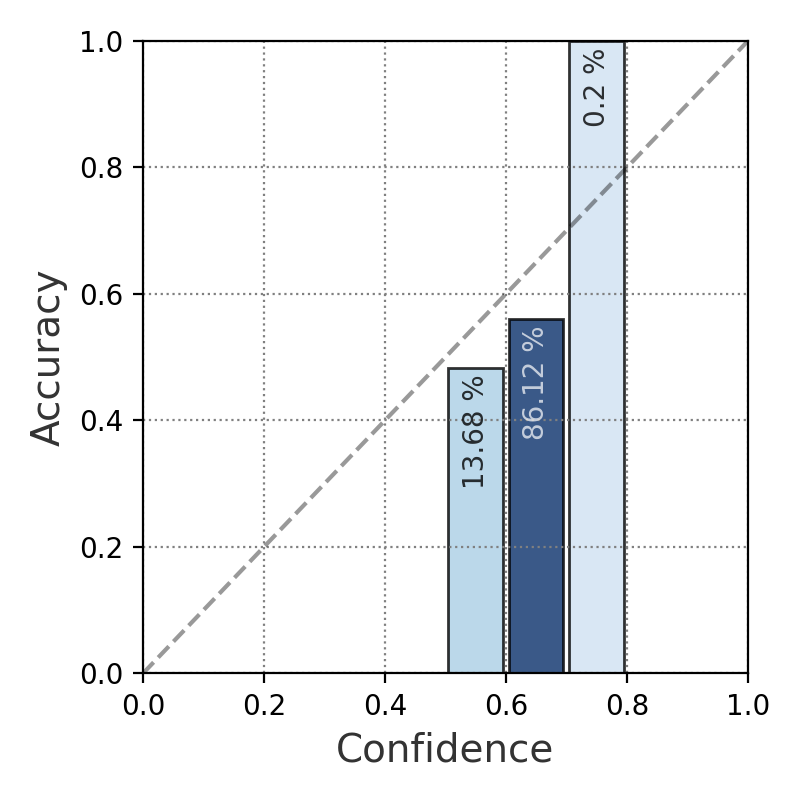}
        \caption{Platt scaling (CoT).}
        \label{subfig:verbalized-percentage}
    \end{subfigure}
    \hfill
    \begin{subfigure}[t]{0.32\textwidth}
        \centering
        \includegraphics[width=\textwidth]{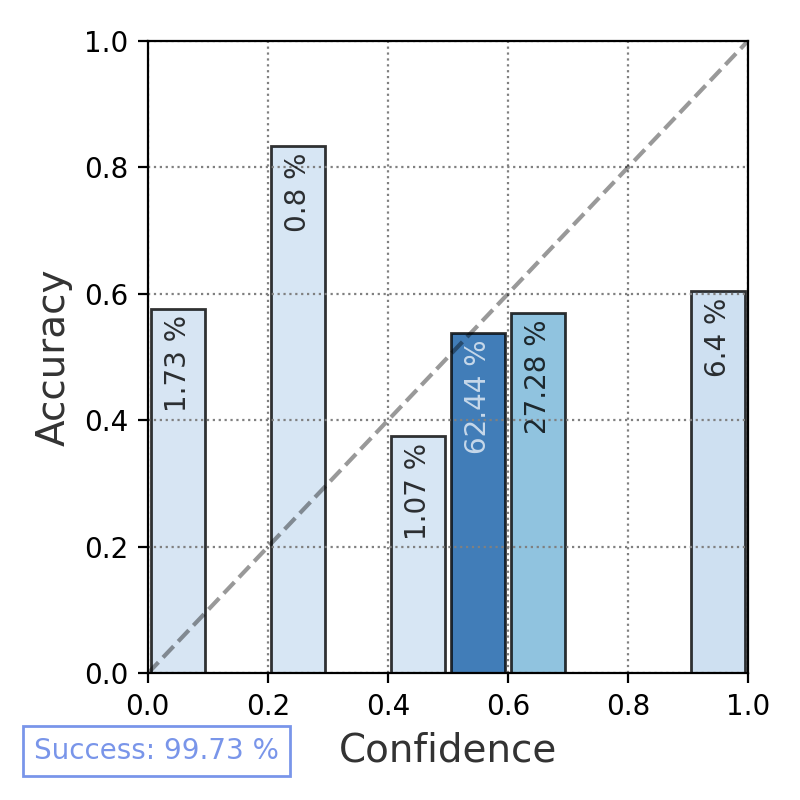}
        \caption{Verbalized Qual.}
        \label{subfig:verbalized-qualitative}
    \end{subfigure}
    \begin{subfigure}[t]{0.32\textwidth}
        \centering
        \includegraphics[width=\textwidth]{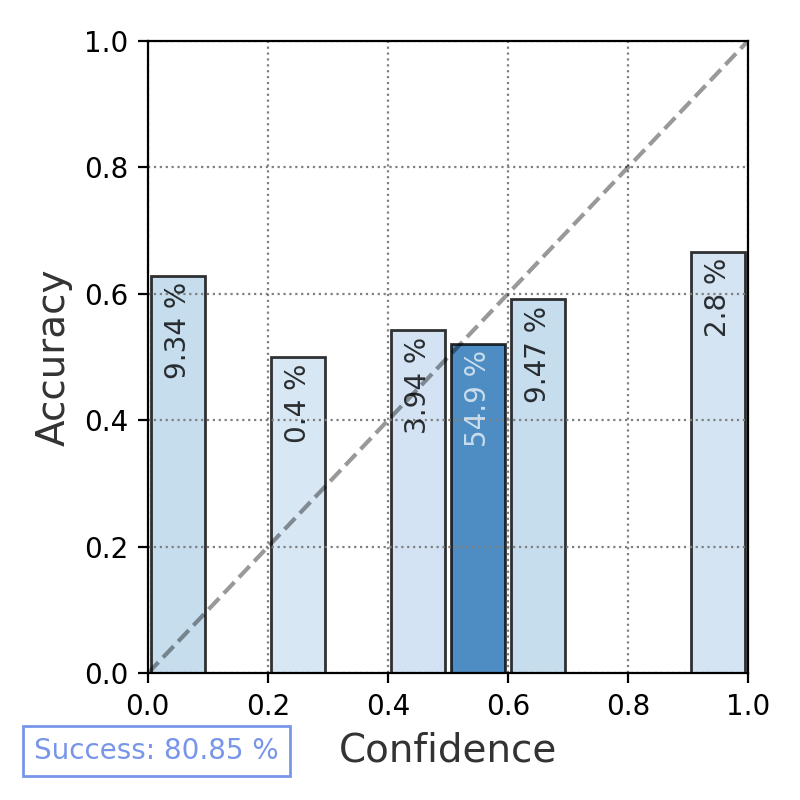}
        \caption{Verb. Qual. (CoT).}
        \label{subfig:seq-likelihood}
    \end{subfigure}
    \hfill
    \begin{subfigure}[t]{0.32\textwidth}
        \centering
        \includegraphics[width=\textwidth]{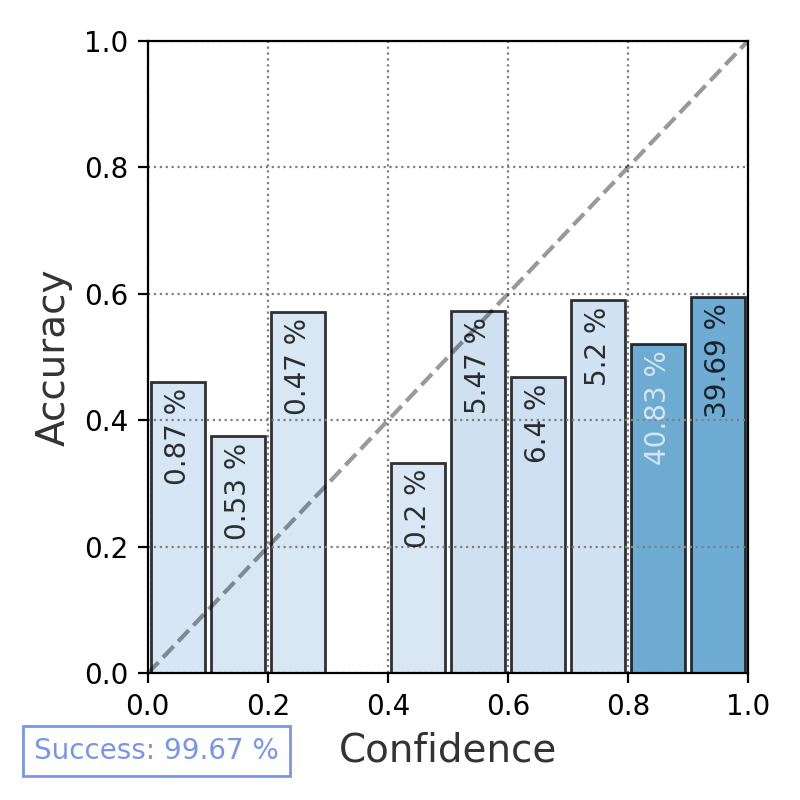}
        \caption{Verbalized $\%$.}
        \label{subfig:temperature-scaling}
    \end{subfigure}
    \hfill
    \begin{subfigure}[t]{0.32\textwidth}
        \centering
        \includegraphics[width=\textwidth]{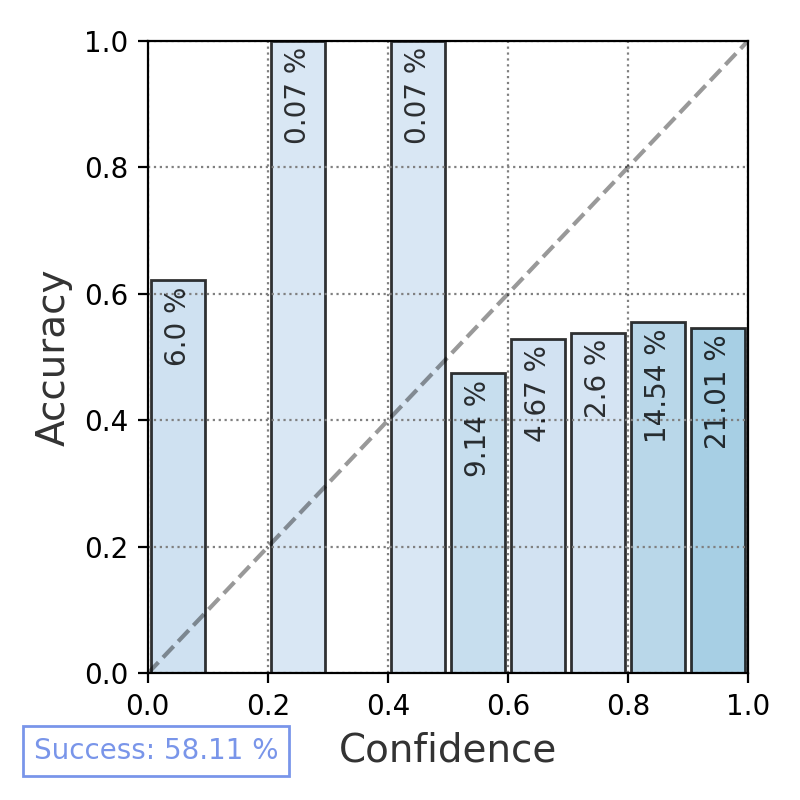}
        \caption{Verb.\@ $\%$ (CoT).}
        \label{subfig:verbalized-percentage}
    \end{subfigure}
    \hfill
    \begin{subfigure}[t]{0.32\textwidth}
        \centering
        \includegraphics[width=\textwidth]{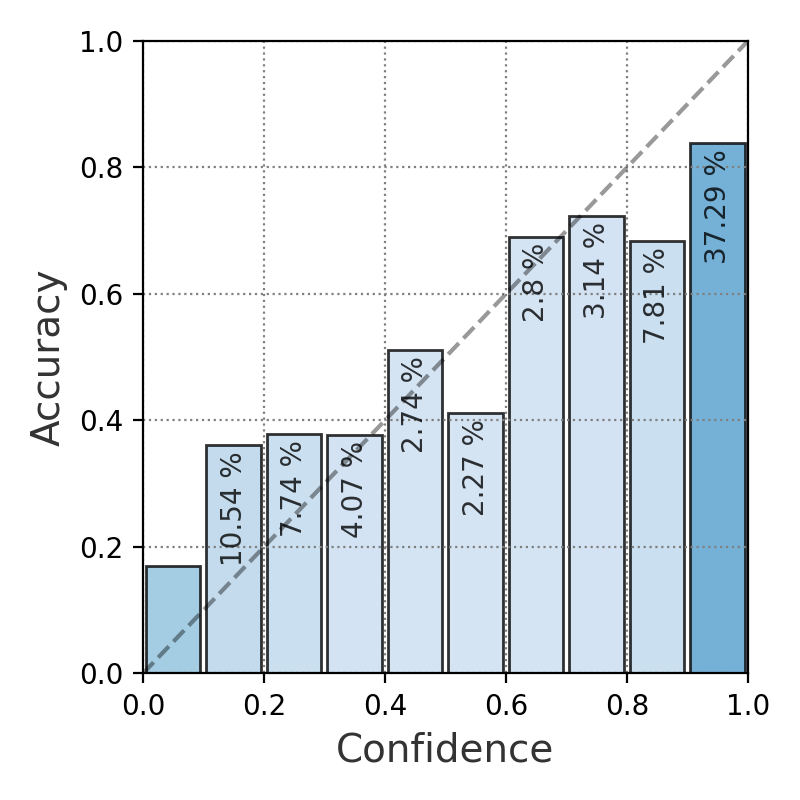}
        \caption{Auxiliary (binary).}
        \label{subfig:verbalized-qualitative}
    \end{subfigure}
    \begin{subfigure}[t]{0.32\textwidth}
        \centering
        \includegraphics[width=\textwidth]{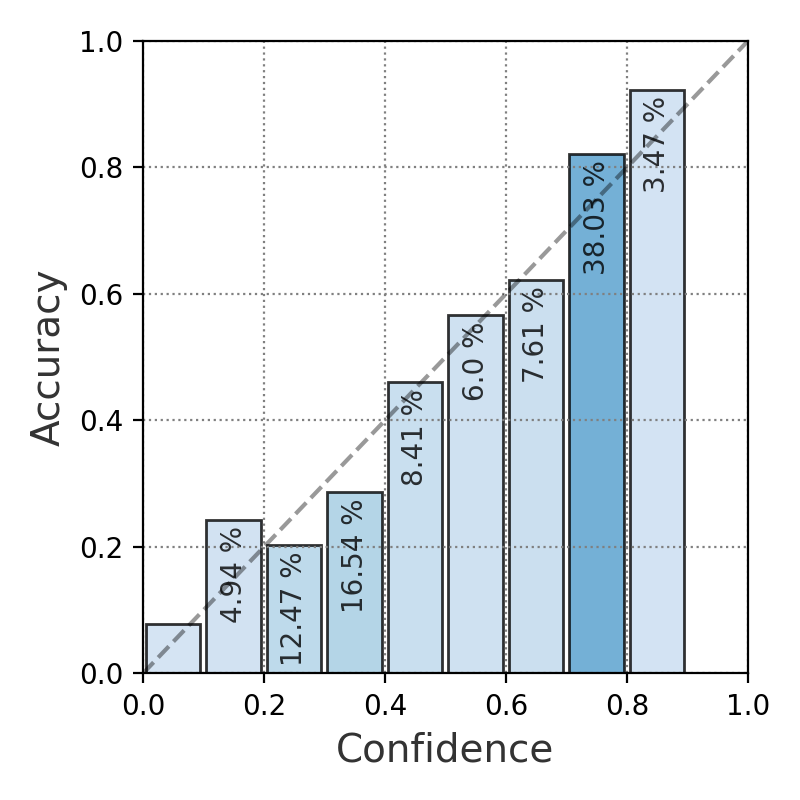}
        \caption{Aux.\@ (clustering).}
        \label{subfig:verbalized-qualitative}
    \end{subfigure}
    \hfill
    \caption[Reliability diagrams for all methods for GPT-3.5 on CoQA.]{Reliability diagrams for all methods using $10$ bins each for GPT-3.5 on CoQA. The color as well as the percentage number within each bar indicate the proportion of total points contained in each bin.}\label{fig:reliabiliy-diagrams-gpt35-coqa-full}
\end{figure*}

\paragraph{Additional reliability plots.}
We show the all available reliability diagrams for the experiments in \cref{sec:calibration-experiments} for Vicuna-v1.5 for TriviaQA in \cref{fig:reliabiliy-diagrams-vicuna-trivia-qa} and CoQA in \cref{fig:reliabiliy-diagrams-vicuna-coqa-full}, as well as the corresponding plots for GPT-3.5 in \cref{fig:reliabiliy-diagrams-gpt35-trivia-qa-full,fig:reliabiliy-diagrams-gpt35-coqa-full}.
Sequence likelihood\index{Likelihood!Sequence} \emph{can} be well-calibrated already, but this fact depends strongly on the dataset in question.
And while our version of Platt scaling\index{Platt scaling} can improve results, it also narrows the range of confidence values to a narrow window.
Verbalized uncertainty\index{Uncertainty!Verbalized} in both of variants also is not able to produce a wide variety of responses, even though this effect is slightly less pronounced for GPT-3.5.
The auxiliary model is able to predict a wider array of confidence values in all settings, with the clustering variant achieving better calibration overall.

\chapter{Reproducibility Appendix}\label{app:reproducibility-appendix}

\epigraph{

    ``\emph{grad student descent: (machine learning, humorous) The process of choosing hyperparameters manually and in an ad-hoc manner, typical of work assigned to a graduate student.}''
}
{---\href{https://en.wiktionary.org/wiki/graduate_student_descent}{Wiktionary} definition.}

\begin{table}[htb]
    \centering
    \resizebox{0.4\textwidth}{!}{%
    \begin{tabular}{rl}
        \toprule
        Thesis & Appendix \\
        \toprule
        \cref{sec:aso} & \cref{app:implementation-details} \\
        \cref{sec:evidential-neural-networks} & \cref{app:iris-code-details} \\
        \cref{sec:exploring-data-creation} & \cref{app:predictive-uncertainty-pre-processing} \\
        \cref{sec:exploring-methodology} & \cref{app:exploring-predictive-uncertainty-implementation-details} \\
        \cref{sec:predictive-uncertainty-experiments} & \cref{app:convergence-clinc-plus} \\
        \cref{sec:retrieval-quality} & \cref{app:temperature-search} \\
        \bottomrule
    \end{tabular}%
    }
    \caption[Correspondences between reproducibility appendix sections and thesis chapters.]{Correspondences between sections of the reproducibility appendix and thesis chapters.}\label{tab:reproducibility-appendix-correspondence}
\end{table}

This appendix contains additional information for reproducibility\index{Reproducibility} purposes, according to the guidelines by \citet{ulmer-etal-2022-experimental}.
In \cref{app:open-source}, a number of open-source software projects that were used in the making of this thesis are listed.
\cref{app:environmental-impact} discusses the compute hardware and environmental impact\index{Environmental impact} of the conducted experiments and other aspects.
In \cref{tab:reproducibility-appendix-correspondence}, we give an overview over the correspondence between thesis chapters and sections in this appendix.

\section{Open Source Software}\label{app:open-source}

\begin{table}[htb]
    \centering
    \resizebox{0.985\textwidth}{!}{%
    \begin{tabular}{lr}
        \toprule
        Repository & Chapters \\
        \toprule
        \href{https://github.com/Kaleidophon/phd-thesis}{\texttt{github.com/Kaleidophon/phd-thesis}} & \cref{sec:context,sec:bayesian-perspective,sec:other-approaches,sec:aso} \\[0.25cm]
        \href{https://github.com/Kaleidophon/evidential-deep-learning-survey}{\texttt{github.com/Kaleidophon/evidential-deep-learning-survey}} & \cref{sec:evidential-neural-networks} \\[0.25cm]
        \href{https://github.com/Kaleidophon/awesome-experimental-standards-deep-learning}{\texttt{github.com/Kaleidophon/awesome-experimental-standards-deep-learning}} & \cref{ch:uncertainty-experimental-design} \\[0.25cm]
        \href{https://github.com/Kaleidophon/deep-significance}{\texttt{github.com/Kaleidophon/evidential-deep-learning-survey}} & \cref{sec:aso} \\[0.25cm]
        \href{https://github.com/Kaleidophon/know-your-limits}{\texttt{github.com/Kaleidophon/know-your-limits}} & \cref{sec:uq-classification-pitfalls} \\[0.25cm]
        \multirow{2}{*}{\makecell[tl]{\href{https://github.com/Kaleidophon/nlp-low-resource-uncertainty}{\texttt{github.com/Kaleidophon/nlp-low-resource-uncertainty}} \\ \href{https://github.com/Kaleidophon/nlp-uncertainty-zoo}{\texttt{github.com/Kaleidophon/nlp-uncertainty-zoo}} }} & \cref{sec:benchmarking-nlp-uncertainty} \\
         & \\[0.25cm]
        \href{https://github.com/Kaleidophon/non-exchangeable-conformal-language-generation}{\texttt{github.com/Kaleidophon/non-exchangeable-conformal-language-generation}} & \cref{ch:uncertainty-generation} \\[0.25cm]
         \href{https://github.com/parameterlab/apricot}{\texttt{github.com/parameterlab/apricot}} & \cref{ch:uncertainty-llms} \\
        \bottomrule
    \end{tabular}%
    }
    \caption[List of open-source repositories for the contents of this thesis.]{List of open-source repositories for the contents of this thesis.}\label{tab:open-source-thesis}
\end{table}

This thesis would not have been possible without the usage of open-source tools and software.
Deep learning\index{Deep learning} models where implemented with \texttt{NumPy} \citep{harris2020array}, \texttt{SciPy} \citep{2020SciPy-NMeth}, \texttt{scikit-learn} \citep{pedregosa2011scikit}, \texttt{einops} \citep{rogozhnikov2021einops}, \texttt{PyTorch} \citep{paszke2019pytorch} and the \texttt{transformers} library \citep{wolf2020transformers}.
Experimental tracking and hyperparameter search was facilitated via Weights \& Biases \citep{wandb}, and tracking carbon emissions with \texttt{codecarbon} \citep{codecarbon,lacoste2019quantifying, lottick2019nergy}.
The code for the plots and experiments in this thesis is itself available open-source, and corresponding online repositories are listed in \cref{tab:open-source-thesis}.

\section{Environmental Impact}\label{app:environmental-impact}
\index{Environmental impact}
Here, we discuss the environmental impact of the experiments in the different chapters.
In all cases, carbon emissions have been estimated using \texttt{codecarbon} \citep{codecarbon, lacoste2019quantifying, lottick2019nergy}, although it should be noted that since the time of writing, more advanced tools like \texttt{LLMCarbon} \citep{faiz2023llmcarbon} have been developed to more accurately estimate the carbon footprint of language model training, specifically.\\

For \cref{ch:uncertainty-classification}, the carbon efficiency was estimated to be $0.61$ kgCO$_2$eq / kWh. 
$735$ hours of computation were performed on a Tesla V100 GPU. 
This includes hyperparameter search, failed runs, debugging, and discarded runs. 
As a rough upper bound, we estimate the compute time for a single replication of all experiments in \cref{ch:uncertainty-classification} to take around 73 hours.\footnote{
    Note that this number could be reduced further by using better hardware acceleration, larger batch sizes, and slightly reducing the training duration for some models. 
    Most importantly, this number also includes compute used for hyperparameter search.
} 
Total emissions were estimated to be $52.45$ kgCO$_2$eq.\\

For \cref{ch:uncertainty-generation}, the carbon efficiency was estimated to be $0.12$ kgCO$_2$eq / kWh. 
159.5 hours of computation were performed on a NVIDIA RTX A6000.
Total emissions are estimated to be $6.99$ kgCo2eq.
All of these values are upper bound including debugging as well as failed or redundant runs, and thus any replication of results will likely be shorter and incur fewer carbon emissions.\\

For \cref{ch:uncertainty-llms}, all experiments were run on a single V100 NVIDIA GPU. 
We estimate finetuning the auxiliary calibrator to amount to $0.05$ kgCO$_2$eq of emissions, with an estimated carbon efficiency of $0.46$ kgCO$_2$eq / kWH.
Therefore, we estimate total emissions of around $1$ kgCO$_2$eq to replicate all the experiments in this chapter.\\

\paragraph{Carbon Offsetting.} 
Carbon offsetting is a controversial topic \citep{watt2021fantasy, campbell2021offsetting, baras2023carbon}, and avoiding emission should always be the preferred option compared to post-hoc offsetting.
Nevertheless, the author believes in mitigating the impact of their emissions as best as possible.
The tracked carbon emissions from all the chapter in this thesis are $60.44$ kgCO$_2$eq.
An additional $20 \%$ is added to this number to account for variation in tracking, untracked debug runs or failed experiments, amounting to $72.53$ kgCO$_2$eq.
Furthermore, over the course of almost four years, the author attended a number of conferences during their PhD program, and conducted industrial internships as well as a research stay.
The travels related to these activities produced an estimated total of $12088$ kgCO$_2$eq in emissions.
Direct air capture by climeworks \citep{climeworks} was used to offset the emissions from the experiments, and carbon credits stemming from wind energy projects in Thailand and India were purchased through the Gold Standard Marketplace \citep{goldstandard} for travel-related emissions.

\section{ASO Test Implementation Details}\label{app:implementation-details}
\index{Stochastic order!Almost}

\begin{algorithm}
    \caption{Almost Stochastic Order (ASO) Significance Test}\label{alg:aso}
    \begin{algorithmic}
        \Require{Sets of observations $\mathbb{S}_\mathbb{A}$ and $\mathbb{S}_\mathbb{B}$, integration interval $\Delta_t$, number of bootstrap iterations $B$, desired confidence level $1 - \alpha$.}

        \State{$\varepsilon_{\mathcal{W}_2}(F_n, G_m)$ = \texttt{compute\_violation\_ratio}($\mathbb{S}_\mathbb{A}$, $\mathbb{S}_\mathbb{A}$, $\Delta_t$)}\\
        \LineComment{Bootstrapping}
        \For{$i \in\ 0, \ldots,B$}
            \State{$\mathbb{S}_\mathbb{A}^* =$ \texttt{bootstrap\_sample}($\mathbb{S}_\mathbb{A}$)}\\
            \State{$\mathbb{S}_\mathbb{B}^* =$ \texttt{bootstrap\_sample}($\mathbb{S}_\mathbb{B}$)}\\
            \LineComment{Store value below in list}
            \State{$\varepsilon^*_{\mathcal{W}_2}(F_n, G_m)$ = \texttt{compute\_violation\_ratio}($\mathbb{S}_\mathbb{A}^*$, $\mathbb{S}_\mathbb{A}^*$, $\Delta_t$)}\\
        \EndFor

        \LineComment{Compute value below based on variance of all the values in list}
        \State{$\hat{\sigma}_{n, m}^2 = \text{Var}\bigg[\sqrt{\frac{mn}{n + m}}\big(\varepsilon_{W_2}(F_n^*, G_m^*) - \varepsilon_{W_2}(F_n, G_m)\big) \bigg]$}\\
        \State{$\epsmin(F_n, G_m, \alpha) = \varepsilon_{W_2}(F_n, G_m) - \sqrt{\frac{n+m}{nm}}\hat{\sigma}_{n, m}\Phi^{-1}(\alpha)$}\\

        \Return $\epsmin(F_n, G_m, \alpha)$
    \end{algorithmic}

\end{algorithm}

This section details the Python implementation of the ASO test in \cref{sec:aso}.
The full algorithm to compute the $\epsmin$ score is given in \cref{alg:aso}, and will now be explained in full detail. 
We show how the violation ratio in \cref{eq:epsilon-dist} can be compute in Python:

\begin{minted}[breaklines, breakafter=d, fontfamily=tt, fontsize=\footnotesize, style=sas, bgcolor=bg, autogobble=true]{python}
def compute_violation_ratio(scores_a: np.array, scores_b: np.array, dt: float) -> float:
    quantile_func_a = get_quantile_function(scores_a)
    quantile_func_b = get_quantile_function(scores_b)

    t = np.arange(dt, 1, dt)  # Points we integrate over
    f = quantile_func_a(t)  # F-1(t)
    g = quantile_func_b(t)  # G-1(t)
    diff = g - f
    squared_wasserstein_dist = np.sum(diff ** 2 * dt)

    # Now only consider points where stochastic order is being violated and set the rest to 0
    diff[f >= g] = 0
    int_violation_set = np.sum(diff[1:] ** 2 * dt)  # Ignore t = 0 since t in (0, 1)

    violation_ratio = int_violation_set / squared_wasserstein_dist

    return violation_ratio
\end{minted}

We can see that the integration over the violation set $\mathbb{V}_X$ in \cref{eq:epsilon-dist} is being performed by masking out values for which the stochastic order\index{Stochastic order} is honored (i.e.\@ where $F_n^{-1}(t) \ge G_n^{-1}(t)$). 
Computing the violation ratio involves building the empirical inverse cumulative distribution function\index{Cumulative distribution function!Inverse}\index{Cumulative distribution function!Empirical} or empirical quantile function, the same method as in \citet{dror2019deep} is used, with the corresponding Python code given below:

\begin{minted}[breaklines, breakafter=d, fontfamily=tt, fontsize=\footnotesize, style=sas, bgcolor=bg, autogobble=true]{python}
def get_quantile_function(scores: np.array) -> Callable:
    def _quantile_function(p: float) -> float:
        cdf = np.sort(scores)
        num = len(scores)
        index = int(np.ceil(num * p))

        return cdf[np.clip(index - 1, 0, num - 1))]

    return np.vectorize(_quantile_function)
\end{minted}

This function is also used inside the bootstrap\index{Bootstrap} sampling procedure, the last missing part of the implementation. 
We again follow the implementation by \citet{dror2019deep} and employ the inverse transform sampling procedure, in which we draw $p \sim \mathcal{U}[0, 1]$ and run it through a quantile function to create a sample.

\section{Hyperparameters Search}

Here we list the hyperparameter search procedures, ranges and found values for the different experiments in this thesis, in the order of appearance.

\subsection{Iris Example}\label{app:iris-code-details}

This section describes the details for the Iris dataset example in \cref{fig:iris-example} in \cref{sec:evidential-neural-networks}.
All models use three layers with $100$ hidden units and ReLU activations each. 
We furthermore optimized all of the models with a learning rate of $0.001$ using the Adam optimizer \citep{kingma2015adam} with its default parameter settings. 
We also regularize the ensemble\index{Ensembling} and MC Dropout\index{Dropout!Monte Carlo} model with a dropout\index{Dropout} probability of $0.1$ each.

\paragraph{Prior Network Specifics.}\index{Prior network} 
 We choose the expected $l_2$ loss by \citet{sensoy2019evidential} and regularize the network using the KL divergence\index{Kullback-Leibler divergence} w.r.t.\@ to a uniform Dirichlet\index{Dirichlet distribution} as in \citet{sensoy2019evidential}. 
 In the regularization term, we do not use the original concentration parameters $\balpha$, but a version in which the concentration of the parameter $\alpha_k$ corresponding to the correct class is removed using a one-hot label encoding $\by$ by $\tilde{\balpha} = (1 - \balpha) \circ \balpha + \by \circ \balpha$.
 The regularization term is added to the loss using a weighting factor of $0.05$.

\subsection{Synthetic Data Experiments}\label{app:synthetic-data-experiments}

This sections gives more details on the synthetic data experiments in \cref{sec:synthetic-experiments}.
We perform our experiments on the half-moons dataset, using the corresponding function to generate the dataset in \texttt{scikit-learn} \citep{pedregosa2011scikit}, producing $500$ samples for training and $250$ samples for validation using a noise level of $.125$.
We do hyperparameter search using the ranges listed in \cref{tab:hyperparameters-search-space}, settling on the values given in \cref{tab:best_hyperparameters} after $200$ evaluation runs per model (for neural networks and MC dropout; the hyperparameters found for neural networks were then used for Platt scaling\index{Platt scaling}, anchored ensembles and neural ensembles\index{Ensembling} as well). 
We also performed a similar hyperparameter search for the Bayes-by-backprop\index{Bayes-by-backprop} \citep{blundell2015weight} model, which seemed to not have yielded a suitable configuration even after extensive search, which is why results were omitted here. 
All models were trained with a batch size of $64$ and for $20$ epochs at most using early stopping with a patience of $5$ epochs and the Adam optimizer. 

\begin{table}[htb]
    \centering
    \resizebox{0.6\textwidth}{!}{%
        \begin{tabular}{rrl}
            \toprule
            Model & Hyperparameter & Value  \\
            \midrule
            Neural Network & Hidden size & $[25, 25, 25]$  \\
            Neural Network & Dropout prob. & $.014552$ \\
            Neural Network & Learning rate & $.000538$  \\
            MC Dropout & Hidden sizes & $[25, 25, 25, 25]$  \\
            MC Dropout & Dropout prob. & $.205046$  \\
            MC Dropout & Learning rate & $.000526$\\
            \bottomrule
        \end{tabular}%
    }
    \caption{Best hyperparameters found on the half-moon dataset.}
    \label{tab:best_hyperparameters}
\end{table}

\begin{table}[htb]
    \centering
    \resizebox{0.625\textwidth}{!}{%
        \begin{tabular}{rrl}
            \toprule
            Hyperparameter & Chosen from \\
            \midrule
            Hidden layers & 1--5 layers of $15$, $20$, $25$ \\
            Learning rate & $\mathcal{U}(\log 10^{-4}, \log 0.1)$ \\
            Dropout rate & $\mathcal{U}(0.1, 0.5)$ \\
            \bottomrule
        \end{tabular}%
    }
    \caption{Distributions or options that hyperparameters were sampled from during the random hyperparameter search.} 
     \label{tab:hyperparameters-search-space}
\end{table}

\subsection{Text Classification Experiments}\label{app:predictive-uncertainty-training-details}

Here, we detail the hyperparameter search conditions for the experiments in \cref{sec:predictive-uncertainty-experiments}.
We perform hyperparameter search using random sampling \citep{bergstra2012random} using hyperband scheduling \citep{li2017hyperband}\footnote{
    Trials might be terminated using hyperband after $10k$ steps.
} on the entire training set, even if models are trained on sub-sampled training sets later.
This has the advantage of ensuring comparability between runs and eliminating suboptimal hyperparameter choices as a source of worse uncertainty estimation. 
We do $80$ trials for LSTM-based\index{Long-short term memory network} models, and $30$ for Bert-based\index{Bert} models. 
Furthermore, the hyperparameters for the LSTM are identical for the LSTM ensemble\index{Ensembling}  (10 instances are used per ensemble). 
Hyperparameters were picked by best final validation loss over search trials.

\paragraph{Chosen Hyperparameters.} 
We summarize some common hyperparameters here and show the rest in \cref{table:hyperparameter-values}. 
We commonly use a batch size of $32$, and sequence lengths of $35$ for LSTM-based\index{Long-short term memory network}  and $128$ for Bert-based\index{Bert}  models. 
All LSTM-based models are trained using $2$ layers, with the exception of the vanilla LSTM and the LSTM-ensemble on Clinc Plus with $3$ layers. 
Their hidden size and embedding sizes are set to $650$. 
For all models, gradient clipping is set to $10$. 
For models using multiple predictions to compute uncertainty estimates, $10$ predictions are used at a time. 

\begin{table}[htb]
    \centering 
    \resizebox{0.785\textwidth}{!}{
        \renewcommand{\arraystretch}{1.5}
        \begin{tabular}{@{}lrr@{}}
            \toprule
            Name & Tuned for & Search space \\
            \midrule
            Learning rate & \makecell[tr]{LSTM, LSTM Ensemble,\\ Bayesian LSTM, ST-$\tau$ LSTM\\ Variational LSTM} & $\mathcal{U}(0.1, 0.5)$ \\ 
            Learning rate & \makecell[tr]{DDU BERT, SNGP BERT,\\ Variational BERT} & $\log \mathcal{U}(10^{-5},10^{-3})$ \\ 
            Spectral norm upper bound & DDU BERT, SNGP BERT & $\mathcal{U}(0.95, 0.99)$ \\
            Kernel amplitude & SNGP BERT & $\log \mathcal{U}(0.01, 0.5)$ \\
            $\beta$ weight decay & SNGP BERT & $\log \mathcal{U}(10^{-3}, 0.5)$\\
            Weight decay & \makecell[tr]{LSTM, LSTM Ensemble,\\ ST-$\tau$ LSTM, Variational BERT}  & $\mathcal{U}(0.1, 0.5)$ \\ 
            Layers & LSTM, LSTM Ensemble & $\{2, 3\}$  \\ 
            Dropout & \makecell[tr]{LSTM, LSTM Ensemble,\\ ST-$\tau$ LSTM, Variational BERT} & $\mathcal{U}(0.1, 0.4)$   \\ 
            Layer Dropout & Variational LSTM & $\mathcal{U}(0.1, 0.4)$   \\ 
            Time Dropout & Variational LSTM & $\mathcal{U}(0.1, 0.4)$   \\ 
            Embedding Dropout & Variational LSTM & $\mathcal{U}(0.1, 0.4)$   \\ 
            Hidden size & LSTM, LSTM Ensemble & $\{350, 500, 650\}$ \\ 
            Prior $\sigma_1$ & Bayesian LSTM & $\log \mathcal{U}(-0.8, 0.1)$ \\ 
            Prior $\sigma_2$ & Bayesian LSTM & $\log \mathcal{U}(-0.8, 0.1)$ \\ 
            Prior $\pi$ & Bayesian LSTM & $\log \mathcal{U}(0.1, 0.9)$ \\ 
            Posterior $\mu$ init & Bayesian LSTM & $\mathcal{U}(-0.6, 0.6)$ \\ 
            Posterior $\rho$ init & Bayesian LSTM & $\mathcal{U}(-8, -2)$ \\
            Init weight & LSTM & $\mathcal{U}(0.1, 0.4)$ \\
            Number of centroids & ST-$\tau$ LSTM & $\{5, 10, 20, 30, 40\}$  \\
            \bottomrule
        \end{tabular}%
    }\caption[List of searched hyperparameters for the text classification experiments.]{List of searched hyperparameters. LSTM Ensemble hyperparameters are not searched, but simply copied from the found LSTM hyperparameters.} \label{table:hyperparameter-ranges}
\end{table}

\begin{table}[htb]
    \centering 
    \resizebox{0.675\textwidth}{!}{
        \renewcommand{\arraystretch}{1.5}
        \begin{tabular}{@{}ll|rrr@{}}
            \toprule
            Model & Hyperparameter & English & Danish & Finnish \\
            \midrule
            LSTM & Weight decay & $ .001337$ & $.001357$ & $.001204$ \\
            & Learning rate & $.4712$ & $.4931$ & $.2205$ \\
            & Init.\@ weight & $ .2830$ & $.5848$ & $.5848$ \\
            & Dropout & $.3379$ & $.2230$ & $.1392$ \\
            Variational LSTM & Weight decay & -- & $10^{-7}$ & $.01953$ \\
            & Learning rate & -- & $.3031$ & $.7817$ \\
            & Init.\@ weight & -- & $.1097$ & $.5848$ \\
            & Embedding Dropout & -- & $.1207$ & $.3566$ \\
            & Layer Dropout & -- & $.1594$ & $.3923$ \\
            & Time Dropout & -- & $.1281$ & $.1646$ \\
            Bayesian LSTM & Weight decay & $.001341$ & $.003016$ & $.03229$ \\
            & Learning rate & $.1704$ & $01114$ & $.1549$ \\
            & Dropout & $.3410$ & $.3868$ & $.331$ \\
            & Prior $\sigma_1$ & $.9851$ & $.7664$ & $.3246$ \\ 
            & Prior $\sigma_2$ & $.5302$ & $.851$ & $.5601$ \\ 
            & Prior $\pi$ & $1$ & $1$ & $.1189$ \\ 
            & Posterior $\mu$ init & $-.005537$ & $-.0425$ & $.4834$\\ 
            & Posterior $\rho$ init & $-7$ & $ -6$ & $.1124$ \\
            ST-$\tau$ LSTM & Weight decay & -- & $.001189$ & $.0007857$ \\
            & Learning rate & -- & $.01979$ & $.3601$ \\
            & Dropout & -- & $.1867$ & $.1737$ \\
            & Num.\@ centroids & -- & $5$ & $30$ \\
            DDU Bert & Learning Rate & $.003077$ & $.00006168$ & $.001825$ \\
             & Spectral norm upper bound & $.9753$ & $.9211$ & $.9410$ \\
             & Weight decay & $.0039=0$ & $.1868$ & $.09439$ \\
            Variational BERT & Learning Rate & $.0002981$ & $.00009742$ & $.00003483$ \\
             & Weight decay & $.01591$ & $.02731$ & $.09927$ \\
             & Dropout & $.2382$ & $.4362$ & $.4364$ \\
            SNGP Bert & Learning Rate & -- & $.0002332$ & $.0002919$ \\
             & Spectral norm upper bound & -- & $.99$ & $.96$ \\
             & Beta Weight decay & -- & $.001619$ & $.002438$ \\
             & Beta length scale & -- & $2.467$ & $2.254$ \\
             & Kernel amplitude & -- & $.3708$ & $.2466$ \\
            \bottomrule
        \end{tabular}%
    }
    \caption[List of model hyperparameters by dataset for the text classification experiments.]{List of used model hyperparameters by dataset.} \label{table:hyperparameter-values}
\end{table}

\index{Classification!Text} 
We further include some notes about the optimization of models for the experiments in \cref{sec:predictive-uncertainty-experiments}.
To make sure that all models are trained for the same number of steps regardless of the the size of (sub-sampled) training set, we set the training duration to the number of steps corresponding to a number of epochs using the original training set size, and name it \emph{epoch-equivalents} in the following. 
Due to the imbalance of classes in Finnish UD and Dan+, all models were trained using loss-weights that are inverse to the frequency of a label in the dataset.

\paragraph{Optimization of LSTMs.} \index{Long-short term memory network} 
We adopt different optimization schemes for transformer-\index{Transformer}  and LSTM-based models. 
For LSTMs, we choose stochastic gradient descent with a decaying learning rate schedule, decaying by $.8695$ after the equivalent of $14$ epochs for every following epoch-equivalent for $55$ epoch-equivalents in total. 
This corresponds to the setup in \citet{gal2016theoretically}, modified from the setup in \citet{zaremba2014recurrent}.

\paragraph{Optimization of Berts.} \index{Bert} 
We fine-tune Bert models using the shorter duration of $20$ epoch-equivalents, corresponding to the NLP\index{Natural language processing}  experiments in \citet{liu2022simple}. 
Adam \citep{kingma2015adam} is used for optimization with default parameters $\beta_1 = .9$ and $\beta_2 = .999$ alongside a triangular learning rate, using the first $10 \%$ of the training duration as warm-up.

\subsection{Auxiliary Calibrator Experiments}\label{app:hyperparameters}

This sections explains the hyperparameter tuning for the experiments in \cref{sec:calibration-experiments}.
We conduct suites of hyperparameter searches per target LLM,\index{Large language model} dataset and type of calibration\index{Calibration}  targets (binary and clustering) corresponding to the results in \cref{tab:calibration-results}, resulting in eight different suites. 
We then use these found hyperparameters for the results in \cref{tab:calibration-results-features}.

\begin{table*}[htb]
    \centering
    \renewcommand{\arraystretch}{1.5}
    \resizebox{0.675\textwidth}{!}{
    \begin{tabular}{rrllll}
        \toprule
        & & \multicolumn{2}{c}{TriviaQA} & \multicolumn{2}{c}{CoQA} \\
        \cmidrule(lr){3-4} \cmidrule(lr){5-6}
        & & Binary & Clustering & Binary & Clustering \\
        \midrule
        \multirow{2}{*}{Vicuna v1.5} & learning rate & $1.4 \times 10^{-5}$ & $3.37 \times 10^{-5}$ & $9.58 \times 10^{-5}$ & $8.84 \times 10^{-5}$ \\
        & weight decay & $.03184$ & $.008936$ & $.005793$ & $7.42 \times 10^{-4}$  \\
        \midrule
        \multirow{2}{*}{GPT-3.5}  & learning rate & $2.96 \times 10^{-5}$ & $1.62 \times 10^{-5}$ & $5.12 \times 10^{-5}$ & $5.59 \times 10^{-5}$ \\
        & weight decay & $.01932$ & $.01362$ & $.03327$ & $.03495$ \\
        \bottomrule
    \end{tabular}%
    }
    \caption{Chosen hyperparameters for our model on different datasets and for different calibration targets.}\label{tab:chosen-hyperparameters}
\end{table*}

\paragraph{Search Method and Ranges.} For the search, we opt for Bayesian hyperparameter search \citep{snoek2012practical} as implemented by Weights \& Biases \citep{wandb}.
We optimize only two hyperparameters: Learning rate and weight decay.
The learning rate is samples from a log-uniform distribution\index{Log-uniform distribution}  $\log\ \mathcal{U}[10^{-5}, 0.01]$ and weight decay from a uniform distribution\index{Uniform distribution}  $\mathcal{U}[10^{-4}, 0.05]$ for a total of $50$ runs and $250$ training steps each.
The final hyperparameters selected are given in \cref{tab:chosen-hyperparameters}.

\paragraph{Other Hyperparameters.} When obtaining the responses from Vicuna v1.5 7B, we use a batch size of $4$ and generate for a maximum of $50$ tokens and stop generation when the model tries to generate parts of the prompt, such as ``Question:'' / ``Q:'' or ``Answer:'' / ``A:''.
We also use $10$ in-context samples\index{In-context learning} for TriviaQA, but no in-context samples for CoQA. 
For the auxiliary calibrator, we use a context size of $512$ tokens, batch size of $32$, and gradient clipping with a maximum norm of $10$.

 \section{Pre-processing for Text Classification Benchmark}\label{app:predictive-uncertainty-pre-processing}

 This sections explains the data preprocessing for the datasets used in \cref{sec:predictive-uncertainty-experiments}.

 \paragraph{Tokenization.} 
 We use the corresponding Bert\index{Bert} tokenizer for each language, including for LSTM-based\index{Long-short term memory network} models to ensure compatibility. 
 For English, this corresponds to the original SentencePiece tokenizer used by \citet{devlin2019bert}, while we use the tokenizer of the Danish Bert \citep{hvingelby2020dane} and Finnish Bert \citep{virtanen2019multilingual} for those languages, respectively.
 
 \paragraph{Tags for Sub-Word Tokens.} 
 For named entity recognition and part-of-speech tagging\index{Part-of-speech tagging}, we follow \citet{jurafsky2022speech}, chapter 11.3.3 to deal with sub-word tokens: 
 For every token that is split into sub-word tokens, we assign the tag only to the first sub-word token, and $-100$ for the rest, which ignores them for evaluation purposes.

\section{Implementation Details of Text Classification Benchmark}\label{app:exploring-predictive-uncertainty-implementation-details}

This section gives additional implementation details for the models used in the text classification\index{Classification!Text}  benchmark in \cref{sec:predictive-uncertainty-experiments}.

\paragraph{Resources.} 
In addition to the resources in \cref{app:open-source}, the Bayesian LSTM\index{Long-short term memory network!Bayesian}  was developed using the \texttt{Blitz} package \citep{esposito2020blitzbdl} for PyTorch and the SNGP transformer\index{Transformer!SNGP}  using \texttt{gpytorch} \citep{gardner2018gpytorch}.

\paragraph{Models.} 
For the DUE transformer\index{Transformer!DUE}, we used principal component analysis\index{Principal component analysis}  on the latent representations for Clinc Plus to reduce the memory usage of the Gaussian discriminant analysis\index{Gaussian discriminant analysis} by reducing dimensionality to $64$. 
We initially also experimented with the usage of the DUE transformer\index{Transformer!DUE}  by \citet{van2021feature}, however found that it was not trivial to create the inducing points for the Gaussian process output layer in a sequential setting. 
For the variational transformer \index{Transformer!Variational} \citep{xiao2020wat}, the authors do not specify exactly how MC dropout\index{Dropout!Monte Carlo} is used. 
We use the existing dropout\index{Dropout}  layers in the corresponding model, and use a number of forward passes with different dropout masks to make predictions. 
Since the number of classes is prohibitive for the original formulation of the SNGP transformer\index{Transformer!SNGP}, we use the extension proposed by \citet{liu2022simple} in Appendix A.1 and only store one $\hat{\Sigma}^{-1}$ matrix for all classes. 
Furthermore, we update the matrix continuously during training and not just during the last epoch, in order to allow tracking of the predictive performance over the training time. 
Lastly, we also evaluate predictions using Monte Carlo approximations instead of the mean-field approach, since this allows us to compute a wider variety of uncertainty metrics\index{Uncertainty metric}.

\paragraph{Evaluation.} 
When computing uncertainty estimates and losses for evaluation purposes, the measurements for a number of tokens were discarded. 
These include the ignore token with ID $-100$, as well as the IDs corresponding to the \texttt{[EOS]}, \texttt{[SEP]}, \texttt{[CLS]} and \texttt{[PAD]} token, which might differ between tokenizers of different languages. 
For computing the ECE\index{Expected calibration error}, we use $10$ bins. 

\paragraph{Model Comparison.} 
We facilitate the comparison of models using the almost stochastic order\index{Stochastic order!Almost}  test from \cref{sec:aso}.
We use the test with a confidence level $\alpha = 0.05$ and a decision threshold of $\tau = 0.3$.

 \section{Convergence on Clinc Plus}\label{app:convergence-clinc-plus}

 Here, we briefly address the models missing from the English Clinc Plus experiments in \cref{sec:predictive-uncertainty-experiments}. 
 For the ST-$\tau$\index{Long-short term memory network!ST-$\tau$}  and variational LSTM\index{Long-short term memory network!Variational}, we could not identify clear reasons on why models did not converge. 
 Even after extensive hyperparameter searches and manual fine-tuning of hyperparameters (including different learning rate schedules and optimizers), we did not find a combination of options that resulted in convergence. 
 We also observed strange behavior for the Bayesian LSTM\index{Long-short term memory network!Bayesian}, which, after reaching a validation accuracy of $0.5$, would suddenly return to its initial training performance. 
 This could potentially be explained by the model accidentally escaping a low-loss basin due to a learning rate that is still too high, and thus we changed the model to only be trained for $18$ epoch-equivalents and initiate the learning rate decay after seven epoch-equivalents. 
 The puzzling fact is that SNGP Bert\index{Bert}\index{Transformer!SNGP} did not converge on Clinc Plus, since the authors successfully used the dataset in their own work \citep{liu2022simple}. 
 We put forth the following explanations: 
 First of all, we observed the model to generally possess a high variance, as demonstrated by the standard deviation on the Danish and Finnish data. 
 Secondly, we make at least two changes to their implementation: 
 Instead of using the mean-field approximation to the predictive distribution, we use the Monte Carlo approximation in order to compute metrics such as mutual information. 
 Also, we update the covariance matrix $\hat{\Sigma}$ over the whole training time in order to track the predictive performance for our experiments, and not just during the last epoch.
 
 \section{Temperature Search}\label{app:temperature-search}

This sections explains the temperature search procedure for the parameter $\tau$ in \cref{eq:weight-equation} in \cref{sec:nonex-nlg-method} further.
To determine the temperaturein \cref{eq:weight-equation} for the different distance metrics in \cref{tab:coverage-results-mt}, we adopt a variation of a simple hill-climbing procedure. 
Given user-defined bounds for the temperature search $\tau_\text{min}$ and $\tau_\text{max}$, we sample an initial candidate $\tau_0 \sim \mathcal{U}[\tau_\text{min}, \tau_\text{max}]$, and then evaluate the coverage of the method given the candidate on the first $100$ batches of the calibration dataset. 
The next candidate then is obtained via 

\begin{align}
    \tau_{t+1} & = \tau_{t} + \eta \cdot \varepsilon \cdot \text{sgn}\big(1 - \alpha - \text{Coverage}(\tau_t)\big); \nonumber \\
    \varepsilon & \sim \mathcal{N}(0, \tau_\text{max} - \tau_\text{min}), 
\end{align}

\noindent where $\eta$ is a predefined step size (in our case $0.1$) and $\text{Coverage}(\tau_t)$ the achieved coverage given a candidate $\tau_t$.
The final temperature is picked after a fixed number of steps ($t=20$ in our work) based on the smallest difference between achieved and desired coverage.\\

Overall, we found useful search ranges to differ greatly between experimental settings, as illustrated by the reported values in \cref{tab:coverage-results-mt} and \cref{tab:coverage-results-lm}.
In general, the stochastic hill-climbing could also be replaced by a grid search, even though we sometimes found the best temperature to be ``hidden'' in a very specific value range.
It also has to be noted that temperature for the $l_2$ distance is the highest by far since FAISS returns \emph{squared} $l_2$ distances by default.


\chapter*{Abbreviations}
\addcontentsline{toc}{chapter}{Abbreviations}
\renewcommand*{\glsclearpage}{}
\printglossary[title={}]

\printindex

\end{document}